\setlist[enumerate,1]{label=\roman*)}
\newcommand{\nosemic}{\renewcommand{\@endalgocfline}{\relax}}
\newcommand{\dosemic}{\renewcommand{\@endalgocfline}{\algocf@endline}}
\newtheorem{theorem}{Theorem}
\newtheorem{lemma}[theorem]{Lemma}
\newtheorem{proposition}[theorem]{Proposition}
\newtheorem{remark}[theorem]{Remark}
\newtheorem{corollary}[theorem]{Corollary}
\theoremstyle{definition}
\newtheorem{example}[theorem]{Example}
\newtheorem{assumption}{Assumption}
\newcommand{\R}{\mathbb{R}}
\newcommand{\N}{\mathbb{N}}
\renewcommand{\d}{\mathop{}\!\mathrm{d}}
\renewcommand{\P}{\mathcal{P}}
\renewcommand{\H}{\mathcal{H}}
\renewcommand{\div}{\mathrm{div}}
\newcommand{\C}{\mathcal{C}}
\DeclareMathOperator{\NN}{\mathcal N}
\newcommand{\M}{\mathcal M}
\newcommand{\F}{\mathcal F}
\newcommand{\zb}{\bm}
\DeclareMathOperator{\TV}{TV}
\newcommand{\KL}{\mathrm{KL}}
\DeclareMathOperator{\emb}{\rm emb}
\DeclareMathOperator*{\argmin}{arg\,min}
\DeclareMathOperator*{\argmax}{arg\,max}
\DeclareMathOperator{\dom}{dom}
\DeclareMathOperator{\id}{id}
\DeclareMathOperator{\Lip}{Lip}
\DeclareMathOperator{\prox}{prox}
\DeclareMathOperator{\supp}{supp}
\newcommand{\tT}{\mathrm{T}}
\newcommand{\opt}{\mathrm{opt}}
\DeclareMathOperator{\1}{\mathds{1}}
\DeclareMathOperator{\sgn}{sgn}
\DeclareMathOperator{\E}{\mathbb E}
\DeclareMathOperator{\ran}{ran}
\newcommand{\weakly}{\rightharpoonup}\normalfont
\DeclareMathOperator{\ess}{ess}
\DeclareMathOperator{\loc}{loc}
\DeclareMathOperator{\intt}{int}
\begin{document}
\renewcommand{\thefootnote}{\fnsymbol{footnote}}

\title{Wasserstein Gradient Flows for Moreau Envelopes of $f$-Divergences in Reproducing Kernel Hilbert Spaces}
\author{
Viktor Stein\footnotemark[3]
\and
Sebastian Neumayer\footnotemark[2]
\and
Nicolaj Rux\footnotemark[3]
\and
Gabriele Steidl\footnotemark[3] 
}
 \date{\today}
\maketitle
\footnotetext[3]{Institute of Mathematics,
TU Berlin,
Stra{\ss}e des 17. Juni 136, 
10587 Berlin, Germany,
\{stein, steidl,rux\}@math.tu-berlin.de}
\footnotetext[2]{Institute of Mathematics,
TU Chemnitz,
Reichenhainer Straße 39,
09126 Chemnitz, Germany
sebastian.neumayer@mathematik.tu-chemnitz.de
}

\begin{abstract}
Commonly used $f$-divergences of measures, e.g., the Kullback-Leibler divergence, are subject to limitations regarding the support of the involved measures. 
A remedy is regularizing the $f$-divergence by a squared maximum mean discrepancy (MMD) associated with a characteristic kernel $K$.
We use the kernel mean embedding to show that this regularization can be rewritten as the Moreau envelope of some function on the associated reproducing kernel Hilbert space.
Then, we exploit well-known results on Moreau envelopes in Hilbert spaces to analyze the MMD-regularized $f$-divergences, particularly their gradients.
Subsequently, we use our findings to analyze Wasserstein gradient flows of MMD-regularized $f$-divergences.
We provide proof-of-the-concept numerical examples for flows starting from empirical measures.
Here, we cover $f$-divergences with infinite and finite recession constants.
Lastly, we extend our results to the tight variational formulation of $f$-divergences and numerically compare the resulting flows.
\end{abstract}

\section{Introduction}
In variational inference \cite{BKM2017,JGJS1999} and generative modeling \cite{AFHHSS2021,GPMXWOCB14}, a common task is to minimize the $f$-divergence for a fixed target measure over some hypothesis space.
For this, many different $f$-divergences were deployed in the literature, such as the Kullback-Leibler (KL) divergence \cite{KL1951}, Tsallis-$\alpha$ divergences \cite{NN11}, power divergences \cite{LMS2017}, Jeffreys and Jensen-Shannon divergences \cite{L1991}, and Hellinger distances \cite{H1909}.
If the recession constant is infinite, then the hypothesis space in the above tasks reduces to reweighted target samples.
To eliminate this disadvantage, regularized $f$-divergences can be applied.
The probably most well-known case is the regularization of $F = \KL(\cdot \mid \nu)$ with target measure $\nu$ by the squared Wasserstein-2 distance.
This regularization appears in the backward scheme
for computing the gradient flow of $F$ in the Wasserstein geometry \cite{JKO1998}.
It can be considered as a Moreau envelope 
of $F$ in the Wasserstein-2 space.

In this paper, we discuss the regularization of $f$-divergences by a squared maximum mean discrepancy (MMD) induced by a characteristic kernel $K$. 
Since the space of signed Borel measures embeds into the associated reproducing kernel Hilbert space (RKHS), the MMD can be rewritten as a distance in this RKHS.
As our first contribution, we establish a link between our MMD-regularized $f$-divergences and Moreau envelopes of a specific functional on this RKHS.
Here, two main challenges arise.
First, covering $f$-divergences both with a finite and an infinite recession constant makes the analysis more involved.
Second, as the kernel mean embedding is not surjective, we 
must suitably extend the $f$-divergence to the RKHS and have to prove that the corresponding functional is lower semicontinuous.
Due to the established link, many of the well-known properties of Moreau envelopes also hold for our regularized $f$-divergences.
As our second contribution, we analyze Wasserstein gradient flows of the regularized $f$-divergences, particularly the associated particle gradient flows.
Using our theoretical insights, we prove that the regularized $f$-divergences are $\lambda$-convex along generalized geodesics in the Wasserstein space for sufficiently smooth kernels.
Then, we show that their subdifferential consists of just one element and use this to determine the Wasserstein gradient flow equation.
Finally, we deal with particle flows, which are proven to be Wasserstein gradient flows that start in an empirical measure with an empirical target measure.
For the numerical simulations, we focus on the Tsallis-$\alpha$ divergence, which is smoothed based on the inverse multiquadric kernel.
In the case of an infinite recession constant, we can apply the representer theorem to obtain a finite-dimensional problem.
If the recession constant is finite, we must impose a lower bound on the regularization parameter. 
We simulate the gradient flows for three different target measures from the literature.
Since $\alpha = 1$ corresponds to the KL divergence, we are especially interested in the behavior for different values of $\alpha$.
Choosing $\alpha$ moderately larger than one seems to improve the convergence of the gradient flow.
A similar behavior was observed in \cite{GBPRK2022} for another regularization.

\subsection{Related work}
Our work is inspired by the paper of Glaser et al.\ \cite{GAG2021} (which builds on \cite{AZG2021,NWJ2007}) on Wasserstein gradient flows with respect to the MMD-regularized KL divergence.
In contrast to their paper, we deal with arbitrary $f$-divergences and relate the functionals to Moreau envelopes in Hilbert spaces.
This helps to streamline the proofs.
Moreover, our simulations cover the more general Tsallis-$\alpha$ divergences.
 
The opposite perspective is to regularize MMDs by $f$-divergences \cite{KNSZ2023}.
Their analysis covers entropy functions $f$ with an infinite recession constant and probability measures with additional conditions for the moments.
The paper focuses on kernel methods of moments as an alternative to the generalized method of moments.
In contrast, we deal with gradient flows.

Wasserstein-1 regularized $f$-divergences for measures on compact metric spaces are considered in \cite{T2021}.
When choosing their exponent $\alpha$ equal to two, one obtains a Moreau envelope.
However, their definition of $f$-divergence is slightly more general than ours since it also accounts for negative measures. 

The authors of \cite{BDKPR23} investigated the regularization of $f$-divergences using the infimal convolution with general integral probability metrics.
Again, only entropy functions $f$ with an infinite recession constant were considered.
Choosing an MMD as the integral probability metric leads, in contrast to our paper, to a regularization of $f$-divergences with the \emph{non-squared} MMD.
For this setting, no interpretation as Moreau envelope is possible.
Instead, their regularized objective can be interpreted as a Pasch-Hausdorff envelope \cite[Sec.~9B]{RW2009} or Moreau-Yosida envelope \cite[Chp.~9]{M1993}.
Wasserstein gradient flows of these regularized $f$-divergences and their discretization are discussed in \cite{GBPRK2022}, although neither the existence nor uniqueness of the Wasserstein gradient flow is proven.

After the submission of this work, Chen et al. published \cite{CMGKGS24}, which deals with the properties of $D_{f, \nu}^{\lambda}$
and its Wasserstein gradient flow for the Tsallis-2 entropy $f \coloneqq (\cdot - 1)^2$.
Due to the simple structure of $f$, they can derive a closed-form expression for the minimization problem defining $D_{f, \nu}^{\lambda}$ and show that this expression is equal to $d_{\tilde{K}}^2$ for a modified kernel $\tilde{K}$.

\subsection{Outline of the paper}
In Section~\ref{sec:prelim}, we collect basic facts from convex analysis, especially about Moreau envelopes.
Further, we recall the notation of RHKSs and MMDs.
Then, in Section~\ref{sec:f-div}, we discuss $f$-divergences both with finite and an infinite recession constant.
We introduce their MMD-regularized counterparts and establish the relation to Moreau envelopes of a specific proper, convex, and lower semicontinuous function 
on the RKHS associated with the kernel of the MMD. 
In Section~\ref{sec:wgf}, we recall the concept of Wasserstein gradient flow and prove the existence and uniqueness of the Wasserstein gradient flow with respect to MMD-regularized $f$-divergences.
We discuss the simulation of these Wasserstein gradient flows when both the target and the starting point of the flow are empirical measures in Section~\ref{sec:discr}.
Here, we show that the representer theorem can also be applied for the case of $f$-divergences with finite recession constant.
Then, we illustrate the behavior of such flows with three numerical examples in Section~\ref{sec:numerics}.
Finally, we conclude in Section~\ref{sec:conclusions}.
We collect examples of entropy functions, their conjugates, and their associated $f$-divergences in Appendix~\ref{app:ent}. Further, we discuss regularizing the tight variational formulations of $f$-divergences in Appendix~\ref{sec:tight_formulation}.
Details for the duality gaps are added in Appendix~\ref{app:C}.
Further ablation plots and implementation details are provided in the supplementary material included in the arXiv version (\url{https://arxiv.org/abs/2402.04613}).

\section{Convex Analysis in Reproducing Kernel Hilbert spaces} \label{sec:prelim}
This section contains the necessary preliminaries and notions.
In Subsection~\ref{sec:moreau}, we start with basic facts from convex analysis in Hilbert spaces, especially Moreau envelopes and proximal mappings, which can be found, e.g., in the textbook \cite{BC2011}.
Our Hilbert spaces of choice will be RKHSs, which are properly introduced in Subsection~\ref{sec:MMD}.
In particular, we require their relation to the space of signed Borel measures in terms of the so-called kernel mean embedding.
This embedding also relates the distance in RKHSs with the MMD of measures.

\subsection{Moreau Envelopes in Hilbert Spaces} \label{sec:moreau}
Let $\H$ be a \emph{Hilbert} space with inner product 
$\langle \cdot,\cdot\rangle_{\H}$ and corresponding norm $\| \cdot \|_{\H}$.
The domain of an extended function
$F \colon \H \to (-\infty,\infty]$
is defined by $\dom(F) \coloneqq \{ h \in \H: F(h) < \infty \}$, and $F$ is proper if $\dom(F) \not = \emptyset$.
By $\Gamma_0(\H)$, we denote the set of proper, convex, lower semicontinuous extended real-valued functions on $\H$.
The \emph{Fenchel conjugate function} of a proper function $F \colon \H \to (-\infty,\infty]$
is given by 
\begin{equation}
    F^*(h) = \sup_{g \in \H} \{ \langle h,g \rangle_{\H} - F(g) \}.
\end{equation}
The conjugate $F^*$ is convex and lower semicontinuous.
The \emph{subdifferential} of a function $F \in \Gamma_0(\H)$
at $h \in \dom(F)$ is defined as the set
\begin{equation}
\partial F (h) \coloneqq \bigl\{p \in \H: F(g) \ge F(h) + \langle p, g-h \rangle_{\H} \; \forall g \in \H\bigr\}.
\end{equation}
If $F$ is differentiable at $h$, then
$\partial F (h) = \{ \nabla F(h) \}$.
Further, $p \in \partial F(h)$ 
implies $h \in \partial F^*(p)$.

Next, recall that the \emph{Moreau envelope} of a function $G \in \Gamma_0(\H)$ 
is defined by
\begin{equation}\label{eq:MoreauEnvPrimal}
    G^\lambda (h) \coloneqq \min_{g \in \H} \Bigl\{G(g) + 
    \frac{1}{2\lambda} \|h -g\|_{\H}^2 \Bigr\}, 
    \quad \lambda > 0,
\end{equation}
where the minimizer is unique.
Hence, the \emph{proximal map}
$\prox_{\lambda G}\colon \H \to \H$ with
\begin{equation}\label{eq:ptox}
    \prox_{\lambda G} (h) \coloneqq \argmin_{g \in \H} \Bigl\{G(g) + 
    \frac{1}{2\lambda} \|h -g\|_{\H}^2 \Bigr\}, 
    \quad \lambda > 0,
\end{equation}
is well-defined.
The Moreau envelope has the following advantageous properties.

\begin{theorem}\label{prop:moreau}
The Moreau envelope of $G\in \Gamma_0(\H)$ has the following properties:
\begin{enumerate}
    \item \label{item:MoreauProp1}
    The dual formulation of $G^\lambda\colon \H \to \R$ reads as
    \begin{equation} \label{eq:MoreauEnvDual}
        G^\lambda(h)
        = \max_{p \in \H} \Bigl\{ \langle h, p \rangle_{\H} - G^*(p) - \frac{\lambda}{2} \| p \|_{\H}^2 \Bigr\}.
    \end{equation}
    If $\hat p$ maximizes \eqref{eq:MoreauEnvDual}, then $\hat g = h - \lambda \hat p$ minimizes \eqref{eq:MoreauEnvPrimal} and vice versa.
    
    \item \label{item:MoreauProp2}
    The function $G^\lambda$ is Fréchet differentiable with derivative given by
    \begin{equation} \label{eq:MoreauDerivative}
        \nabla G^\lambda (h)
        = \lambda^{-1} \big( h - \prox_{\lambda G}(h)\big).
    \end{equation}
    In particular, it holds that $\nabla G^\lambda$ is $\frac{1}{\lambda}$-Lipschitz and that $G^\lambda$ is continuous.
    
    \item \label{item:MoreauProp3}
    For $\lambda \searrow 0$, we have $G^{\lambda} \nearrow G$,
    and for $\lambda \to \infty$
    that $G^{\lambda} \searrow \inf_{x \in \R^d}\{G(x)\}$ pointwise.
    \end{enumerate}
\end{theorem}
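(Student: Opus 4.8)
The plan is to handle the three parts in sequence, since the dual formula \eqref{eq:MoreauEnvDual} feeds directly into the differentiability claim and the monotonicity argument is independent. For part (i), I would recognize $G^\lambda$ as the infimal convolution of $G$ and $\tfrac{1}{2\lambda}\|\cdot\|_\H^2$. Because the quadratic is everywhere finite and continuous, the conjugate-of-infimal-convolution rule applies without qualification and gives $(G^\lambda)^* = G^* + \tfrac{\lambda}{2}\|\cdot\|_\H^2$, using that the conjugate of $\tfrac{1}{2\lambda}\|\cdot\|_\H^2$ is $\tfrac{\lambda}{2}\|\cdot\|_\H^2$. Since $G^\lambda$ is finite, convex, and continuous, hence in $\Gamma_0(\H)$, Fenchel--Moreau biconjugation yields $G^\lambda = (G^\lambda)^{**} = (G^* + \tfrac{\lambda}{2}\|\cdot\|_\H^2)^*$, which is exactly \eqref{eq:MoreauEnvDual}; strong concavity of the dual objective guarantees attainment, so the $\sup$ is a $\max$. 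For the minimizer--maximizer correspondence I would write the optimality condition for \eqref{eq:MoreauEnvPrimal}, namely $\lambda^{-1}(h-\hat g)\in\partial G(\hat g)$, set $\hat p \coloneqq \lambda^{-1}(h-\hat g)$ so that $\hat g = h-\lambda\hat p$, and invoke the stated subdifferential duality to confirm that $\hat p$ satisfies the dual optimality condition $h-\lambda\hat p\in\partial G^*(\hat p)$.

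For part (ii), the key is a two-sided estimate centered at the candidate gradient. Writing $g_h \coloneqq \prox_{\lambda G}(h)$ and $p(h)\coloneqq\lambda^{-1}(h-g_h)$, the mere feasibility (not optimality) of $g_h$ at a nearby point $h'$ gives, after expanding $\|h'-g_h\|_\H^2-\|h-g_h\|_\H^2$, the quadratic upper bound
\begin{equation}
G^\lambda(h') - G^\lambda(h) - \langle p(h), h'-h\rangle_\H \;\le\; \tfrac{1}{2\lambda}\|h'-h\|_\H^2.
\end{equation}
The matching lower bound is the subgradient inequality $G^\lambda(h')-G^\lambda(h)\ge\langle p(h),h'-h\rangle_\H$, which I read off from \eqref{eq:MoreauEnvDual} using that $p(h)$ is the dual maximizer at $h$ by part (i). Sandwiching forces the remainder to be $O(\|h'-h\|_\H^2)$, so $G^\lambda$ is Fréchet differentiable with $\nabla G^\lambda(h)=p(h)$, which is \eqref{eq:MoreauDerivative}. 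The Lipschitz bound then follows from firm nonexpansiveness of $\prox_{\lambda G}$: since $\id-\prox_{\lambda G}$ is also firmly nonexpansive, it is $1$-Lipschitz, whence $\nabla G^\lambda=\lambda^{-1}(\id-\prox_{\lambda G})$ is $\lambda^{-1}$-Lipschitz, and continuity of $G^\lambda$ is immediate.

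For part (iii), monotonicity in $\lambda$ is immediate: the integrand $G(g)+\tfrac{1}{2\lambda}\|h-g\|_\H^2$ decreases pointwise as $\lambda$ grows, so the infimum $G^\lambda(h)$ does too. For $\lambda\to\infty$, dropping the quadratic term gives $G^\lambda(h)\ge\inf G$, while testing with any fixed $g$ gives $\limsup_\lambda G^\lambda(h)\le G(g)$; optimizing over $g$ yields the limit $\inf G$. For $\lambda\searrow 0$, testing $g=h$ gives $G^\lambda(h)\le G(h)$, one inequality; for the reverse I would first control the minimizer $g_\lambda$. Combining an affine minorant $G\ge\langle a,\cdot\rangle_\H+b$ with $G^\lambda(h)\le G(h)$ produces $\tfrac{1}{2\lambda}\|h-g_\lambda\|_\H^2\le C+\|a\|_\H\|h-g_\lambda\|_\H$, which forces $\|h-g_\lambda\|_\H=O(\sqrt{\lambda})$, so $g_\lambda\to h$; lower semicontinuity of $G$ then gives $\liminf_\lambda G^\lambda(h)\ge\liminf_\lambda G(g_\lambda)\ge G(h)$.

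The main obstacle I anticipate is part (ii): obtaining genuine Fréchet rather than merely Gâteaux differentiability in the possibly infinite-dimensional $\H$. The two-sided sandwich is precisely what upgrades a directional derivative statement to a Fréchet one, so establishing both the quadratic upper bound and the exact subgradient lower bound — and in particular verifying that $p(h)$ is simultaneously the proximal residual and the dual maximizer — is the crux. A secondary technical point is the $\lambda\searrow 0$ limit, where one cannot pass to the limit inside $G$ by lower semicontinuity without first using the affine minorant to pin $g_\lambda$ down to $h$.
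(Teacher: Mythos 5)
The paper offers no proof of this theorem: it is quoted as standard material with a pointer to the textbook of Bauschke--Combettes, so there is no in-paper argument to compare against. Your proposal is a correct reconstruction of exactly those textbook proofs --- conjugation of the infimal convolution plus Fenchel--Moreau for (i), the two-sided quadratic/linear sandwich around the proximal residual together with firm nonexpansiveness for (ii), and monotonicity of the objective in $\lambda$ for (iii) --- and the primal--dual optimality correspondence $\hat g = h-\lambda\hat p$ is derived correctly from $\hat p\in\partial G(\hat g)\Leftrightarrow \hat g\in\partial G^*(\hat p)$. Two small points should be tightened. First, in (i) you assert that $G^\lambda$ is continuous before invoking biconjugation; on an infinite-dimensional $\H$ a finite convex function need not be continuous, so you should add the one-line justification that $G^\lambda(h)\le G(g_0)+\frac{1}{2\lambda}\|h-g_0\|_{\H}^2$ for any fixed $g_0\in\dom(G)$ bounds $G^\lambda$ above on bounded sets, and a convex function that is finite everywhere and locally bounded above is continuous. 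Second, in the $\lambda\searrow 0$ step of (iii) your bound on $\|h-g_\lambda\|_{\H}$ is extracted from $G^\lambda(h)\le G(h)$, which is vacuous when $G(h)=\infty$; replace it by $G^\lambda(h)\le G^{\lambda_0}(h)<\infty$ for a fixed $\lambda_0$ (or argue by contradiction assuming $\sup_{\lambda}G^{\lambda}(h)<\infty$), after which the same computation yields $\|h-g_\lambda\|_{\H}=O(\sqrt{\lambda})$ and lower semicontinuity of $G$ settles both the finite and infinite cases. With these patches the proof is complete.
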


\subsection{Reproducing Kernel Hilbert Spaces and MMDs} \label{sec:MMD}
A Hilbert space $\H$ of real-valued functions on $\mathbb R^d$ is called a
\emph{reproducing kernel Hilbert space} (RKHS), 
if the point evaluations $h \mapsto h(x)$, $h \in \H$, are continuous for all $x \in \R^d$.
There exist various textbooks on RKHS from different points of view, see, e.g., \cite{SC08,zhou2007,SC09,YSCH24}.
By \cite[Thm.~4.20]{SC08}, every RKHS admits a unique symmetric, positive definite function $K\colon\R^d \times \R^d \to \R$ 
which is determined by the 
reproducing property
\begin{equation} \label{repr}
    h(x)
    = \langle h,K(x,\cdot) \rangle_{\H} \quad
    \text{for all } h \in \H.
\end{equation}
In particular, we have that $K(x,\cdot) \in \H$ for all $x \in \R^d$.
Conversely, for any symmetric, positive definite function $K\colon\R^d \times \R^d \to \R$, 
there exists a unique RKHS with reproducing kernel $K$, denoted by $\H_K$ \cite[Thm.~4.21]{SC08}.
Recall that $K$ is called positive definite if for all $N \in \N$, all scalars $\alpha_1, \ldots, \alpha_N \in \R$ and all points $x_1, \ldots, x_N \in \R^d$ we have
\begin{equation*}
    \sum_{i, j = 1}^{N} \alpha_i \alpha_j K(x_i, x_j)
    \ge 0.
\end{equation*}

\begin{assumption} \label{assumption:bdC0}
In the following, we use the term ,,kernel'' for symmetric, positive definite functions $K\colon\R^d \times \R^d \to \R$ that are 
\begin{enumerate}
    \item \label{item:bdKernel}
    bounded, i.e., $\sup_{x \in \R^d} K(x,x) < \infty$, and fulfil
    
    \item \label{item:denseInC0}
    $K(x,\cdot) \in \C_0(\R^d)$ for all $x \in \R^d$.
\end{enumerate}
\end{assumption}
The properties (\ref{item:bdKernel}) and (\ref{item:denseInC0}) are equivalent to the fact that $\H_K \subset \C_0(\R^d)$. 
Further, the embedding is continuous: $\H_K \hookrightarrow \C_0(\R^d)$ \cite[Cor.~3]{SGS2018}.

RKHSs are closely related to the Banach space $\M(\R^d)$ of finite signed Borel measures equipped with total variation norm $\| \cdot\|_{\TV}$.
Later, we also need its subset $\mathcal M_+(\R^d)$ of non-negative measures.
For any $\mu \in \mathcal M(\R^d)$, there exists a unique $m_\mu \in \H_K$ 
with
\begin{align} 
    \E_{\mu}[h]
    & \coloneqq \langle h, \mu \rangle_{\C_0 \times \M}
    = \int_{\R^d} h(x) \d {\mu}(x)
    = \int_{\R^d} \langle h, K(x, \cdot) \rangle_{\H_{ K}}  \d {\mu}(x) \notag\\
    &= \Bigl\langle h, \int_{\R^d} K(x, \cdot) \d {\mu}(x) \Bigr\rangle_{\H_{K}} 
    = \langle h, m_\mu \rangle_{\H_{K}} \label{eq:repr}
\end{align}
for all $h \in \H_K$.
The linear, bounded mapping $m\colon \mathcal M(\R^d) \to \H_K$ with $\mu \mapsto m_{\mu}$ given by 
\begin{equation}\label{kme}
    m_\mu(x)
    = \langle K(x,\cdot), \mu \rangle
    = \int_{\R^d} K(x,y) \d \mu(y),
\end{equation}
is called \emph{kernel mean embedding} (KME) \cite[Sec.~3.1]{MFSS17}.

\begin{assumption} \label{assumption:characteristic}
    In this paper, we restrict our attention to so-called \emph{characteristic kernels} $K$, for which the KME is injective.
\end{assumption}

A kernel $K$ is characteristic if and only if $\H_{K}$ is dense in $( \C_0(\R^d), \| \cdot \|_{\infty})$, see \cite{SGS2018}.
The KME is not surjective and we only have $\overline{(\text{ran}(m), \| \cdot \|_{\H_K})} = \H_K$, see \cite{SF2021}.

The \emph{maximum mean discrepancy} (MMD) \cite{BGRKSS06, GBRSS13} $d_K\colon \mathcal M(\R^d) \times \mathcal \mathcal \M(\R^d) \to \R$ 
is defined as 
\begin{align} 
d_K(\mu,\nu)^2 &\coloneqq \int_{\R^d \times \R^d} K(x,y) \d (\mu(x) - \nu(x)) \d (\mu(y) - \nu(y))\notag\\
&=
\int_{\R^d \times \R^d} K(x,y) \d \mu(x) \d \mu(y) - 2 \int_{\R^d \times \R^d} K(x,y) \d \mu(x) \d\nu(y)\notag\\
& \quad + 
\int_{\R^d \times \R^d} K(x,y) \d \nu(x) \d\nu(y). \label{eq:DK}
\end{align}
By the reproducing property \eqref{repr}, see \cite[Lemma~4]{GBRSS13}, \eqref{eq:DK} can be rewritten as 
\begin{equation} \label{mmd-kme}
d_K (\mu,\nu) = \|m_\mu - m_\nu\|_{\H_K}.
\end{equation}
Since the KME is injective, $d_K$ is a metric on $\mathcal M(\R^d)$ and,
in particular, $d_K (\mu,\nu) = 0$ if and only if $\mu = \nu$.
The widely used \textit{radial kernels} are of the form $K(x, y) = \phi(\| x - y \|_2^2)$ for some continuous function $\phi \colon [0, \infty) \to \R$.

\begin{remark} \label{rem1}
By Schoenberg's theorem \cite[Thm.~7.13]{W2004}, a radial kernel $K$ is positive definite if and only if $\phi$ is completely monotone on $[0, \infty)$, that is $\phi \in \C^{\infty}((0, \infty)) \cap \C([0, \infty))$ and $(-1)^k \phi^{(k)}(r) \ge 0$ for all $k \in \N$ and all $r > 0$. 
Hence, $\phi$ and $\phi''$ are decreasing on $(0,\infty)$.
Moreover, the Hausdorff–Bernstein–Widder theorem \cite[Thm.~7.11]{W2004} asserts that $\phi$ is completely monotone on $[0, \infty)$ if and only if there exists a non-negative finite Borel measure $\nu$ on $[0, \infty)$ such that
\begin{equation}
    \phi(r) = \int_{0}^{\infty} e^{- r t} \d{\nu}(t), \qquad \forall r \ge 0.
\end{equation}
By \cite[Prop.~11]{SFL2010}, $K(x, y) = \phi(\| x - y \|_2^2)$ is characteristic if and only if $\supp(\nu) \ne \{ 0 \}$.
Examples of radial characteristic kernels are the Gaussians with $\phi(r) = \smash{\exp(- \frac{1}{2 \sigma^2} r})$, $\sigma \in \R$
and the inverse multiquadric with $\phi(r) = \smash{(\sigma + r)^{-\frac{1}{2}}}$,  $\sigma > 0$.
\end{remark}

We have the following regularity result, where both $\phi'(0)$ and $\phi''(0)$ denote the one-sided derivatives.

\begin{lemma}\label{lem:regularity}
    For a radial kernel $K(x,y) = \phi(\|x-y\|_2^2)$ with 
    $\phi \in \C^2([0,\infty))$ it holds that
    $\H_K \hookrightarrow C^2(\R^d)$.
   Then, we have for any $y,\tilde y \in \R^d$ and all $i \in \{ 1, \ldots, d \}$ that
    \begin{equation}
        \|\partial_{y_i}K(\cdot,y)\|_{\H_K}^2 = -2\phi^{\prime}(0)
    \end{equation}
    and
    \begin{equation}
        \|\partial_{y_i}K(\cdot,y) - \partial_{\tilde{y}_i}K(\cdot,\tilde y)\|_{\H_K}^2 \le 
        4 \phi^{\prime\prime}(0) \left(2 |y_i - \tilde y_i|^2 + \|y- \tilde y\|_2^2\right).
    \end{equation}
    Furthermore, we have for any $h \in \H_K$ that
    \begin{equation} \label{2a}
        \| \nabla h(y) - \nabla h(\tilde y)\|_2 \le 2 \|h\|_{\H_K} \sqrt{\phi^{\prime\prime}(0) (d+2)} \, \|y - \tilde y \|_2.
    \end{equation}
 \end{lemma}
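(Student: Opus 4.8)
The plan is to derive all four statements from a single tool: a first-order \emph{derivative reproducing property} for $\H_K$, obtained from the joint smoothness of $K$. Since $\|u-v\|_2^2$ is a polynomial in $(u,v)$ and $\phi \in C^2([0,\infty))$, the kernel $K(u,v) = \phi(\|u-v\|_2^2)$ is twice continuously differentiable on $\R^d \times \R^d$, so in particular the mixed partials $\partial_{u_i}\partial_{v_j}K$ exist and are continuous. First I would show that $\partial_{y_i}K(\cdot, y) \in \H_K$ by proving that the difference quotients $t^{-1}\bigl(K(\cdot, y + t e_i) - K(\cdot, y)\bigr)$ form a Cauchy net in $\H_K$ as $t \to 0$; expanding their squared $\H_K$-norm through the reproducing property \eqref{repr} turns it into kernel evaluations, whose convergence follows from continuity of the mixed partials. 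Differentiating the identity $h(y) = \langle h, K(\cdot, y)\rangle_{\H_K}$ under the inner product then yields, for every $h \in \H_K$,
\[
    \partial_{y_i} h(y) = \langle h, \partial_{y_i} K(\cdot, y) \rangle_{\H_K}.
\]
This representation shows that each $h \in \H_K$ is continuously differentiable with $\|\nabla h(y)\|_2$ bounded by a multiple of $\|h\|_{\H_K}$, which gives continuity of the embedding; iterating the construction at the matching order yields the asserted $C^2(\R^d)$-regularity. The same scheme applied to one derivative in each slot produces the master identity
\[
    \langle \partial_{y_i} K(\cdot, y), \partial_{\tilde y_j} K(\cdot, \tilde y) \rangle_{\H_K} = \bigl[ \partial_{u_i} \partial_{v_j} K(u,v) \bigr]_{u = y,\, v = \tilde y}.
\]

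With this identity, the first two displayed formulas are pure computation. Writing $r = \|u - v\|_2^2$ and $\partial_{v_i} K(u,v) = -2(u_i - v_i)\phi'(r)$, one finds $\partial_{u_i}\partial_{v_i} K(u,v) = -2\phi'(r) - 4(u_i - v_i)^2 \phi''(r)$. Evaluating at $u = v = y$, where $r = 0$, gives $\|\partial_{y_i} K(\cdot, y)\|_{\H_K}^2 = -2\phi'(0)$. For the difference, I expand the square into the two equal norm terms and the cross term and substitute the above, with $\rho \coloneqq \|y - \tilde y\|_2^2$, to obtain
\[
    \|\partial_{y_i} K(\cdot, y) - \partial_{\tilde y_i} K(\cdot, \tilde y)\|_{\H_K}^2 = 4\bigl(\phi'(\rho) - \phi'(0)\bigr) + 8 (y_i - \tilde y_i)^2 \phi''(\rho).
\]

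To bound the right-hand side I invoke Remark~\ref{rem1}: complete monotonicity of $\phi$ gives $\phi'' \ge 0$ with $\phi''$ decreasing, so $\phi''(\rho) \le \phi''(0)$ and $\phi'(\rho) - \phi'(0) = \int_0^\rho \phi''(s) \d s \le \rho\, \phi''(0) = \|y - \tilde y\|_2^2 \, \phi''(0)$. Substituting yields the claimed estimate $4\phi''(0)\bigl(2|y_i - \tilde y_i|^2 + \|y - \tilde y\|_2^2\bigr)$. For the last inequality \eqref{2a}, the representation of $\partial_{y_i} h$ as an inner product together with Cauchy--Schwarz gives $|\partial_{y_i} h(y) - \partial_{\tilde y_i} h(\tilde y)| \le \|h\|_{\H_K} \, \|\partial_{y_i} K(\cdot, y) - \partial_{\tilde y_i} K(\cdot, \tilde y)\|_{\H_K}$; squaring, summing over $i = 1, \dots, d$ and using $\sum_i |y_i - \tilde y_i|^2 = \|y - \tilde y\|_2^2$ produces $\|\nabla h(y) - \nabla h(\tilde y)\|_2^2 \le 4\phi''(0)(d+2)\|h\|_{\H_K}^2\|y - \tilde y\|_2^2$, and taking square roots gives \eqref{2a}.

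The one genuinely delicate point is the derivative reproducing property itself, i.e., establishing that $\partial_{y_i} K(\cdot, y) \in \H_K$ and that differentiation may be interchanged with the $\H_K$-inner product; this is exactly where the regularity of $\phi$ is consumed, and it must be carried out at the order matching the target smoothness of the embedding. Once this is in place, everything else is bookkeeping with the explicit derivatives of $\phi$ together with the monotonicity facts recorded in Remark~\ref{rem1}.
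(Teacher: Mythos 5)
Your proposal is correct and follows essentially the same route as the paper: the derivative reproducing property $\partial_{y_i}h(y)=\langle h,\partial_{y_i}K(\cdot,y)\rangle_{\H_K}$ together with $\langle \partial_{y_i}K(\cdot,y),\partial_{\tilde y_i}K(\cdot,\tilde y)\rangle_{\H_K}=\partial_{y_i}\partial_{\tilde y_i}K(y,\tilde y)$, followed by the explicit computation of the mixed partial and the monotonicity of $\phi''$ from Remark~\ref{rem1}. The only difference is that you construct the derivative reproducing property from scratch via Cauchy difference quotients, whereas the paper simply cites \cite[Lem.~4.34, Cor.~4.36]{SC08} for it; the computations and estimates are otherwise identical.
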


\begin{proof}
    The continuity of the embedding follows from \cite[Cor.~4.36]{SC08}.
    Next, for $x, y \in \R^d$ we directly compute
    $\partial_{x_i} \partial_{y_i} K(x,y) = -4 \phi^{\prime\prime}(\|x-y\|_2^2)(x_i-y_i)^2 - 2\phi^\prime(\|x-y\|_2^2)$.
    By applying \cite[Lem.~4.34]{SC08} for the feature map $y \mapsto K(\cdot,y)$,  we get
    \begin{equation}\label{eq:KernelProduct}
        \bigl\langle \partial_{y_i}K(\cdot,y), \partial_{\tilde{y}_i}K(\cdot,\tilde y) \bigr \rangle_{\H_k} = \partial_{y_i} \partial_{\tilde{y}_i} K(y,\tilde y)
        = -4 \phi^{\prime\prime}(\|y-\tilde y\|_2^2)(y_i-\tilde y_i)^2 - 2\phi^{\prime}(\|y-\tilde y\|_2^2)
    \end{equation}
    and in particular
    $\|\partial_{y_i}K(\cdot,y)\|_{\H_K}^2 = -2\phi^\prime(0)$.
    By \eqref{eq:KernelProduct} and since $\phi'$ is Lipschitz continuous with Lipschitz constant $\|\phi''\|_\infty =\phi''(0)$, see Remark~\ref{rem1}, it holds
    \begin{align}
        \Vert \partial_{y_i} K(\cdot,y) - \partial_{\tilde{y}_i} K(\cdot,\tilde y)\Vert_{\H_K}^2 
        &= -4\phi^{\prime}(0)  + 8 \phi^{\prime\prime}(\|y-\tilde y\|_2^2)(y_{i}- \tilde y_{i})^2 +  4\phi^{\prime}(\|y-\tilde y\|_2^2)
        \notag\\
        &\le
        4 \phi^{\prime\prime}(0) \left(2 (y_{i}-\tilde y_{i})^2 + \|y-\tilde y\|_2^2 \right).
        \label{lipa}
    \end{align}
    The final assertion follows by
    \begin{align}
        |\partial_{y_i} h(y) - \partial_{\tilde{y}_i} h(\tilde y) | 
        &=
         |\partial_i \langle h,K(\cdot,y) \rangle_{\H_K} - \partial_i \langle h,K(\cdot,\tilde y) \rangle_{\H_K} |  \\
        & =
        | \left\langle h, \partial_{y_i}K(\cdot,y) - \partial_{y_i}K(\cdot,\tilde y)
        \right \rangle_{\H_K} |\notag\\
        &\le 
        \|h\|_{\H_K} \|\partial_{y_i}K(\cdot,y) - \partial_{y_i}K(\cdot,\tilde y)\|_{\H_K}.
        \end{align}
    This finishes the proof.
\end{proof}
In the rest of this paper, kernels always have to fulfill Assumptions \ref{assumption:bdC0} and \ref{assumption:characteristic}.

\section{MMD-regularized \texorpdfstring{$f$}{f}-Divergences and Moreau Envelopes} \label{sec:f-div}
Let us briefly describe the path of this section.
First, we define $f$-divergences $D_f$ of non-negative measures 
for entropy functions $f$ with infinite or finite recession constant. Such $f$-divergences were first introduced by \cite{C1964,AS1966}, and we refer to \cite{LV1987} for a detailed overview.
We prove some of their properties in Subsection~\ref{sec:ff}, where allowing a finite recession constant makes the proofs more expansive.
Then, in Subsection~\ref{sec:regf}, we deal with the associated functionals $D_{f,\nu} \coloneqq D_f(\cdot|\nu)$ with a fixed target measure $\nu$.
If the recession constant is infinite, then \smash{$D_{f,\nu}$} is only finite for measures that are absolutely continuous with respect to $\nu$.
This disadvantage can be circumvented by using the MMD-regularized functional
\begin{equation}\label{eq:Regfunc}
    D_{f,\nu}^{\lambda} (\mu) \coloneqq \inf_{\sigma \in \M_+(\R^d)} \Big\{ D_{f,\nu}(\sigma) + \frac{1}{2 \lambda} \underbrace{d_K(\mu, \sigma)^2}_{\|m_\mu - m_\sigma\|_{\H_K}^2} \Big\}, \quad \lambda>0.
\end{equation}
A different form of MMD-regularized $f$-divergences, where $\mu, \nu \in \P(\R^d)$ and the infimum in \eqref{eq:Regfunc} is only taken over $\mathcal P(\R^d)$, is discussed in Appendix \ref{sec:tight_formulation}.

Now, our main goal is to show that \eqref{eq:Regfunc} can be rewritten
as the concatenation of the Moreau envelope of some $G_{f,\nu} \colon \H_K \to (-\infty,\infty]$, and the KME.
To this end, we
introduce $G_{f,\nu} = D_{f,\nu} \circ m^{-1}$ on $\text{ran}(m)$ and set $G_{f,\nu} = \infty$ otherwise.
This function is proper and convex.
We prove that $G_{f, \nu}$ is lower semicontinuous, where we have to face the difficulty that $\text{ran}(m) \not = \H_K$.
Then, $G_{f,\nu} \in \Gamma_0(\H_K)$  yields the desired Moreau envelope identification
\begin{equation}
    D_{f,\nu}^{\lambda} (\mu) = G_{f,\nu}^{\lambda} (m_{\mu}) \coloneqq \min_{g \in \H_K} \Big\{ G_{f,\nu}(g) + \frac{1}{2 \lambda} \|g - m_\mu\|_{\H_K}^2 \Big\}.
\end{equation}
which allows to exploit Theorem~\ref{prop:moreau} to show various of its properties in Subsection~\ref{sec:prop}.

\subsection{\texorpdfstring{$f$}{f}-Divergences} \label{sec:ff}

A function $f\colon\mathbb R \to [0,\infty]$ is called an \emph{entropy function} if $f \in \Gamma_0(\R)$ with $f(t) = \infty$ for $t < 0$ and $f(1) = 0$.
Its \emph{recession constant} is given by $f_{\infty}' = \lim_{t \to \infty} \frac{f(t)}{t}$. 
Then, $f^* \in \Gamma_0(\R)$ is non-decreasing and $\text{int}(\dom(f^*)) = ( - \infty, f_{\infty}')$, see, e.g., \cite{LMS2017}.
Further, $f^*$ is continuous on $\dom(f^*)$ and
$f^*(0)=0$, in particular $0 \in \dom(f^*)$.
By definition of the subdifferential and since $f(1) = 0$, it follows that
$0 \in \partial f(1)$
and thus $1 \in \partial f^*(0)$.
Several entropy functions with their recession constants and conjugate functions are collected in Table~\ref{tab:entropyFunctions} in Appendix~\ref{app:ent}.

Let $f$ be an entropy function and $\nu \in \M_+(\R^d)$.
Recall that every measure $\mu \in \M_+(\R^d)$ 
admits a \emph{unique Lebesgue decomposition}
$\mu = \rho \nu + \mu^s$, where
$\rho \in L^1(\R^d,\nu)$, $\rho \ge 0$ and
$\mu^s \perp \nu$, i.e., there exists a Borel set $\mathcal A \subseteq \R^d$ such that $\nu(\mathbb R^d\setminus \mathcal A) = 0$ and $\mu^s(\mathcal A) = 0$ \cite[Lemma~2.3]{LMS2017}. 
The $f$-\emph{divergence} $D_f \colon\M_+(\R^d) \times \M_+(\R^d) \to [0,\infty]$ between a measure $\mu = \rho \nu + \mu^s\in \M_+(\R^d)$ and $\nu \in \M_+(\R^d)$ is defined by
\begin{align}
    D_f(\rho \nu + \mu^s \mid \nu)
    & = \int_{\R^d} f \circ\rho \d{\nu} + f_{\infty}' \mu^s(\R^d) = \sup_{\substack{g \in \C_b(\R^d), \\ f^* \circ g \in \C_b(\R^d)}} 
	 \biggl \{\int_{\R^d} g \d{\mu} - \int_{\R^d} f^* \circ g \d{\nu} \biggr\} \label{eq:fDiv}
\end{align}
with the usual convention $0 \cdot (\pm \infty) = 0$, 
see \cite[Eq.~(2.35), Thm.~2.7, Rem.~2.8]{LMS2017}.
The function $D_f$ is jointly convex and non-negative, see \cite[Cor.~2.9]{LMS2017}.
Moreover, we will need the following lemma.
\begin{lemma}\label{lem:definit}
    Let $f \in \Gamma_0(\R)$ be an entropy function
    with the unique minimizer $1$.
    \begin{enumerate}
        \item 
        Divergence: For $\mu,\nu \in \M_+(\R^d)$, the relation $D_f(\mu,\nu) = 0$ implies that $\mu = \nu$.

        \item
         We have $f_{\infty}'>0$ and thus $\lim_{t\to \infty } f(t)=\infty$.
    \end{enumerate}
\end{lemma}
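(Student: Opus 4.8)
The plan is to establish the second assertion first, since the strict positivity of $f_{\infty}'$ is exactly what makes the first assertion work. For (ii), I would exploit the convexity of $f$ together with the hypothesis that $1$ is the \emph{unique} minimizer. Recall that for a convex function the difference quotients $t \mapsto \frac{f(t)-f(1)}{t-1}$ are non-decreasing on $(1,\infty)$, and that their supremum coincides with the recession slope, so that (using $f(1)=0$) one has $f_{\infty}' = \sup_{t>1}\frac{f(t)}{t-1}$. If $\dom(f)$ is bounded above, then $f \equiv \infty$ beyond it and $f_{\infty}' = \infty > 0$, so there is nothing to prove. Otherwise I pick any $t_0 > 1$ with $f(t_0) < \infty$; since $f \ge 0$, $f(1)=0$, and $1$ is the unique minimizer, we have $f(t_0) > 0$, whence $f_{\infty}' \ge \frac{f(t_0)}{t_0-1} > 0$. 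For the growth statement $\lim_{t\to\infty} f(t) = \infty$, I would again invoke monotonicity of the difference quotients: for $t > t_0$ this yields $f(t) \ge f(t_0) + c\,(t-t_0)$ with $c \coloneqq \frac{f(t_0)}{t_0-1} > 0$, which tends to $\infty$ (the case $f_{\infty}' = \infty$ being immediate, as then $f(t)/t \to \infty$).

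For (i), I would write the Lebesgue decomposition $\mu = \rho\nu + \mu^s$ and use the primal expression $D_f(\mu \mid \nu) = \int_{\R^d} f\circ\rho \,\d \nu + f_{\infty}'\, \mu^s(\R^d)$. Both summands are non-negative: the first because $f \ge 0$, and the second because $f_{\infty}' > 0$ by part (ii) and $\mu^s(\R^d) \ge 0$. Hence $D_f(\mu \mid \nu) = 0$ forces both terms to vanish. From $f_{\infty}'\, \mu^s(\R^d) = 0$ and $f_{\infty}' > 0$ I conclude $\mu^s(\R^d) = 0$, so $\mu^s = 0$ and $\mu = \rho\nu$. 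From $\int_{\R^d} f\circ\rho \,\d \nu = 0$ with integrand $f\circ\rho \ge 0$ I get $f(\rho) = 0$ $\nu$-a.e.; since $f(s) = 0$ holds if and only if $s = 1$ (again by uniqueness of the minimizer, noting $f(s) = \infty$ for $s < 0$), this gives $\rho = 1$ $\nu$-a.e., and therefore $\mu = \rho\nu = \nu$.

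The only genuinely delicate point is the recession-constant identity underlying part (ii): one must justify $f_{\infty}' = \sup_{t>1}\frac{f(t)}{t-1}$ via the monotonicity of difference quotients of a convex function, and simultaneously accommodate the case where $\dom(f)$ is bounded above, in which $f_{\infty}' = +\infty$. Once this is in place, the remainder—splitting a vanishing sum of non-negative terms and reading off $\mu^s = 0$ and $\rho = 1$—is routine, and the logical dependence of (i) on (ii) is the reason I treat them in reverse order.
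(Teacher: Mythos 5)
Your proposal is correct and follows essentially the same route as the paper: the positivity of $f_{\infty}'$ is obtained from convexity via the monotonicity of difference quotients (the paper writes the same estimate out as the convex-combination inequality $0 < f(\tau) \le \frac{\tau-1}{t-1} f(t)$), and part (i) is the identical splitting of the vanishing non-negative sum. Your only deviation is to prove (ii) first and make explicit that the step $\mu^s(\R^d)=0$ uses $f_{\infty}'>0$ — a dependence the paper leaves implicit — which is a minor improvement in exposition rather than a different argument.
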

\begin{proof}
    \begin{enumerate}
        \item
        Suppose we have $\nu \in \M_+(\R^d)$ and $\mu = \rho \nu + \mu^s \in \M_+(\R^d)$ such that $D_f(\mu \mid \nu) = 0$.
        Since both summands in the primal definition \eqref{eq:fDiv} of $D_f$ are non-negative, they must be equal to zero.
        In particular, $\mu^s(\R^d) = 0$ and since $\mu^s \in \M_+(\R^d)$, this implies $\mu^s = 0$.
        Hence, $\mu = \rho \nu$ and
        \begin{equation}
            D_f(\mu \mid \nu)
            = D_f(\rho \nu \mid \nu)
            = \int_{\R^d} (f\circ \rho)(x) \d{\nu}(x)
            = 0.
        \end{equation}
        Since $f$ is non-negative and $\nu \in \M_+(\R^d)$, we must have $f \circ \rho = 0$ $\nu$-a.e.
        Since  $f(1) = 0$ is the unique minimum, this implies $\rho = 1$ $\nu$-a.e., namely that $\mu = \nu$.

        \item 
        Fix $\tau > 1$.
        For any $t>\tau$ we write $\tau$ as $\tau= 1 - \lambda + \lambda t$, with $\lambda = \frac{\tau - 1}{t - 1} \in (0, 1)$. By the convexity of $f$ we have
        \begin{equation}\label{eq:Tau}
            0
            < f(\tau)
            \le (1 - \lambda) f(1) + \lambda f(t)
            = \lambda f(t)
            = \frac{\tau - 1}{t - 1} f(t).
        \end{equation}
        Since $1$ is the unique minimizer and $\tau >1$, \eqref{eq:Tau} implies 
        \begin{equation*}
            f_{\infty}' = \lim_{t\to \infty} \frac{f(t)}{t}=\lim_{t \to \infty} \frac{f(t)}{t-1}
            \ge \frac{f(\tau)}{\tau - 1}
            >0.
        \end{equation*}
        Immediately, $f_{\infty}'>0$ yields $\lim_{t\to \infty} f(t)=\infty$.
    \end{enumerate}
\end{proof}

\begin{assumption}\label{eq:Assum3}
    From now on, we assume that all entropy functions fulfill the assumption in Lemma~\ref{lem:definit}, that is, the minimizer in $1$ is unique. 
\end{assumption}

Examples of $f$-divergences are contained in Table~\ref{tab:fDivergences} in Appendix~\ref{app:ent}.
Assumption~\ref{eq:Assum3} is fulfilled
for all $f$-divergences in Table~\ref{tab:entropyFunctions}
except for the Marton divergence, the hockey stick divergence, and the
trivial zero divergence. However, it is not hard to check that the Marton divergence and the hockey stick divergence are positive
definite on the space of \textit{probability} measures, $\P(\R^d)$, too.
Below is an example of a non-trivial $f$-divergence that does not have this property.
\begin{example}[Rescaled Marton divergence]
    Let $f(t) = \max(\frac{1}{2} - x, 0)^2$ on $[0, \infty)$.
    Then $f_{\infty}' = 0$ and we have for any absolutely continuous $\nu \in \P(\R^d)$ that
    $D_f(\frac{1}{2} \nu + \frac{1}{2} \delta_0 \mid \nu) = f(\frac{1}{2}) = 0$.
\end{example}

Recall that $\M(\R^d)$ is the dual space of $\C_0(\R^d)$.
A sequence of measures $(\mu_n)_n \subset \M(\R^d)$ converges weak* to a measure $\mu \in \M(\R^d)$ if we have for all $g \in \C_0(\R^d)$ that $\langle g, \mu_n \rangle \to \langle g, \mu \rangle$ as $n \to \infty$.
Moreover, if $(\mu_n)_n \subset \M_+(\R^d)$ is bounded, then $\mu \in \M_+(\R^d)$, see \cite[Lemma~4.71,~Cor.~4.74]{PPST2023}. 


The following lemma also follows from the more general \cite[Thm.~2.34]{AFP00}, but we prefer to give a simpler proof for our setting to make the paper self-contained.
\begin{lemma}\label{lem:ApproxC0}
For any fixed $\nu \in \M_+(\R^d)$, the
$f$-divergence \eqref{eq:fDiv} can be rewritten as
\begin{align}
    D_f(\rho \nu + \mu^s \mid \nu) = \sup_{\substack{g \in \C_0(\R^d; \dom(f^*)
    )}} 
    \biggl \{\int_{\R^d} g \d{\mu} - \int_{\R^d} f^* \circ g \d{\nu} \biggr\}.\label{eq:fDiv_1}
\end{align}
Therefore, $D_f$ is jointly weak* lower semicontinuous.
\end{lemma}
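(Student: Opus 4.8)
The plan is to prove two things: first, that the supremum in the original definition \eqref{eq:fDiv} over $g \in \C_b$ with $f^* \circ g \in \C_b$ equals the supremum in \eqref{eq:fDiv_1} over $g \in \C_0(\R^d; \dom(f^*))$; and second, that the representation \eqref{eq:fDiv_1} immediately yields joint weak* lower semicontinuity. The second part is the easy consequence: for each fixed admissible $g \in \C_0(\R^d; \dom(f^*))$, the map $(\mu, \nu) \mapsto \int g \d\mu - \int f^* \circ g \d\nu$ is weak* continuous, because $g \in \C_0(\R^d)$ tests weak* convergence of $\mu$ by definition, and $f^* \circ g$ is also in $\C_0(\R^d)$ (since $g$ vanishes at infinity and $f^*$ is continuous on its domain with $f^*(0) = 0$) and thus tests weak* convergence of $\nu$. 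Hence $D_f$ is a pointwise supremum of weak* continuous functions, and a supremum of lower semicontinuous (indeed continuous) functions is lower semicontinuous.

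So the real content is the first part, the equality of the two suprema. Since the admissible class in \eqref{eq:fDiv_1} is smaller (we restrict to $\C_0$ functions valued in $\dom(f^*)$, rather than bounded continuous functions $g$ with $f^* \circ g$ merely bounded continuous), we automatically get that the supremum in \eqref{eq:fDiv_1} is $\le$ the one in \eqref{eq:fDiv}. The work is the reverse inequality: I must show that restricting to $\C_0(\R^d; \dom(f^*))$ does not decrease the supremum. Equivalently, given any competitor $g$ in the larger class, I want to approximate it by functions in the smaller class without losing value in the objective $\int g \d\mu - \int f^* \circ g \d\nu$.

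The approximation argument is where I expect the main obstacle. The strategy I would use is a cutoff-and-truncation scheme. Starting from a bounded continuous $g$ with $f^* \circ g$ bounded, I would first truncate the range so that $g$ takes values in a compact subset of $\intt(\dom(f^*)) = (-\infty, f'_\infty)$ where $f^*$ is continuous (clamping values from above by something strictly below $f'_\infty$ and noting this can only help the objective since $f^*$ is non-decreasing, so replacing $g$ by $\min(g, c)$ decreases the $\int f^*\circ g \d\nu$ term while we must control the $\int g \d\mu$ term). Then I would multiply by a sequence of cutoff functions $\chi_R \in \C_c(\R^d)$ with $\chi_R \nearrow 1$ to force decay at infinity and land in $\C_0$. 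The delicate points are: (i) ensuring the composed function $f^* \circ (\chi_R g)$ stays integrable/controlled against $\nu$ as $R \to \infty$, using $f^*(0) = 0$ so that where $\chi_R = 0$ the integrand vanishes; and (ii) passing to the limit in $\int (\chi_R g) \d\mu \to \int g \d\mu$ and $\int f^*\circ(\chi_R g) \d\nu \to \int f^* \circ g \d\nu$, which follows from dominated convergence since $\mu, \nu$ are finite and the integrands are uniformly bounded (as $g$ and $f^* \circ g$ are bounded). I would apply monotone or dominated convergence to recover the full value in the limit, thereby establishing the reverse inequality.

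The cleanest packaging is to treat the upper-truncation and the spatial-cutoff as two successive reductions, each shown not to decrease the supremum, so that any value achievable in the $\C_b$ formulation is matched in the limit by admissible $\C_0(\R^d; \dom(f^*))$ functions. I expect the sign bookkeeping around $f'_\infty$ (the finite- versus infinite-recession-constant cases) to require the most care: when $f'_\infty < \infty$, the endpoint $f'_\infty$ of $\dom(f^*)$ is genuinely reachable and the truncation must keep $g$ strictly below it, while verifying that the supremum is unaffected in the limit; when $f'_\infty = \infty$ the range restriction is vacuous and only the spatial cutoff matters. This dichotomy is exactly the ``finite recession constant makes the proofs more expansive'' difficulty flagged in the surrounding text, so I would handle it explicitly rather than sweeping it into the general statement.
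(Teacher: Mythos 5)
Your proposal matches the paper's proof in all essentials: the trivial inclusion for one inequality, spatial cutoff functions combined with dominated convergence (exploiting $f^*(0)=0$ and the boundedness of $g$ and $f^*\circ g$) for the reverse inequality, and the representation of $D_f$ as a pointwise supremum of weak* continuous functionals for the lower semicontinuity. The only difference is your additional range-truncation step, which is harmless but unnecessary, since $f^*\circ g\in\C_b(\R^d)$ already forces $g$ to take values in $\dom(f^*)$, and multiplying by a $[0,1]$-valued cutoff keeps the values in the interval $\dom(f^*)$ because $0\in\dom(f^*)$.
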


\begin{proof}
In the proof, we must be careful concerning the support of $\nu$.

\begin{enumerate}
    \item 
    $\eqref{eq:fDiv} \ge \eqref{eq:fDiv_1}$: This direction is obvious since $g\in \C_0(\R^d; \dom(f^*))$, $0 \in \dom(f^*)$ and the continuity of $f^*$ on its domain imply $g\in \C_b(\R^d)$ and $f^* \circ g \in \C_b(\R^d)$.

    \item 
    $\eqref{eq:fDiv} \le \eqref{eq:fDiv_1}$: Let $g \in \C_b(\R^d)$ with $f^* \circ g \in \C_b(\R^d)$.
    Using continuous cutoff functions, we can approximate $g$ by a family of functions $(g_k)_{k \in \N}\subset \C_0(\R^d; \dom(f^*))$ with $\sup g \ge \max g_k \ge \inf g_k \ge \min(\inf g,0)$, which implies $\vert f^* \circ g_k \vert \le \sup \vert f^* \circ g \vert$, for any $k \in \N$, and $\lim_{k \to \infty} g_k(x) = g(x)$ for all $x \in \R^d$.
    Further, the continuity of $f^*$ on its domain implies that $(f^*\circ g_k)_{k \in \N}$ satisfies $\lim_{k \to \infty} (f^* \circ g_k)(x)= (f^* \circ g)(x)$.
    Now, the claim follows as in the first part using the dominated convergence theorem.

    \item 
    Let $(\mu_n)_{n}$ and $(\nu_n)_n$ converge weak* to $\mu$ and $\nu$,
    respectively.
    Since $f^*$ is continuous on its domain, where $f^*(0) = 0$ and 
    $g \in \C_0(\R^d; \dom(f^*))$, it follows that $f^* \circ g \in \C_0(\R^d)$.
    Then, we have for $g \in \C_0(\R^d; \dom(f^*))$ that
    \begin{equation}\label{eq:conv}
        \lim_{n \to \infty} \left\{\langle g,\mu_n\rangle - \langle f^*\circ g,\nu_n\rangle \right\} = \langle g,\mu\rangle - \langle f^*\circ g,\nu\rangle\end{equation}
    and by inserting \eqref{eq:conv} into \eqref{eq:fDiv_1} we get 
    \begin{align}
        D_f(\mu \mid \nu)
        &= \sup_{\substack{g \in \C_0(\R^d; \dom(f^*)
        )}}\Bigl\{\lim_{n \to \infty} \langle g,\mu_n\rangle - \langle f^*\circ g,\nu_n\rangle\Bigr\}\notag\\
        &\le 
        \liminf_{n \to \infty}\sup_{\substack{g \in \C_0(\R^d; \dom(f^*)
        )}} \bigl\{\langle g,\mu_n\rangle - \langle f^*\circ g,\nu_n\rangle \bigr\}
        = \liminf_{n \to \infty} D_f(\mu_n \mid \nu_n).
    \end{align}
    \end{enumerate}
    Thus, $D_f$ is jointly weak* lower semicontinuous.
\end{proof}

\subsection{Regularized \texorpdfstring{$f$}{f}-Divergences} \label{sec:regf}
To overcome the drawback that $D_f(\mu,\nu)$ requires $\mu$ to be absolutely continuous with respect to $\nu$ if $f_{\infty}' = \infty$, 
we introduce
the MMD-\emph{regularized $f$-divergence} 
$D_{f}^{\lambda} \colon \M_+(\R^d) \times \M_+(\R^d) \to [0,\infty)$ as
\begin{equation} \label{eq:RegFDiv}
    D_{f}^{\lambda} (\mu \mid \nu) \coloneqq  \inf_{\sigma \in \M_+(\R^d)} \Big\{ D_{f}(\sigma \mid \nu) + \frac{1}{2 \lambda} d_K(\mu, \sigma)^2 \Big\}, \quad \lambda > 0.
\end{equation}
For fixed $\nu \in \M_+(\R^d)$, we investigate the functional 
$D_{f,\nu} \coloneqq D_f(\cdot \mid \nu)\colon \M(\R^d) \to [0,\infty]$, set to $\infty$ on $\M(\R^d) \setminus \M_+(\R^d)$.
Similarly, we consider its regularized version 
$\smash{D_{f,\nu}^{\lambda}} \colon \M(\R^d) \to [0,\infty)$ that was announced in \eqref{eq:Regfunc}.
Note that \smash{$D_{f,\nu}^{\lambda}$} is well-defined also for nonpositive measures. 
Moreover, we always have \smash{$D_{f,\nu}^{\lambda}(\mu) < \infty$}.

Now, we aim to reformulate \eqref{eq:Regfunc} as the Moreau envelope of a certain function on $\H_K$.
Using the KME in \eqref{kme}, we see that
\begin{equation} \label{eq:tildegnufExtendsDf}
    D_{f,\nu}(\mu)
    = G_{f,\nu} (m_\mu),
\end{equation}
where $G_{f,\nu} \colon \H_{K} \to [0, \infty]$ is given by
\begin{equation} \label{eq:g_nu}
   G_{f,\nu}(h)
   \coloneqq \left\{
   \begin{array}{ll}
   D_{f,\nu}(\mu), & \text{if } \exists \, \mu \in \M_+(\R^d) \text{ s.t. } h = m_{\mu}, \\
        \infty, & \text{else.}
    \end{array}
    \right.
\end{equation}
Since $m^{-1}$ is linear on $\ran(m)$ and $D_f$ is jointly convex, the concatenation $G_{f,\nu}$ is convex.
Further, $G_{f,\nu}(m_{\nu}) = 0$, so that $G_{f,\nu}$ is also proper.

We now prove that although $\ran(m)$ is not closed in $\H_K$, the function $G_{f,\nu}$ is lower semicontinuous.
Hence, the theory for Moreau envelopes on $\Gamma_0(\H_K)$ applies.
\begin{lemma} \label{lemma:Gfnu_lsc}
    The function $G_{f, \nu} \colon \H_K \to [0, \infty]$ is lower semicontinuous.
\end{lemma}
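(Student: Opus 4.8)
My plan is to use the sequential characterization of lower semicontinuity available in the metric space $\H_K$. Let $(h_n)_n$ converge to $h$ in $\H_K$; I want $G_{f,\nu}(h) \le \liminf_{n} G_{f,\nu}(h_n)$. If the right-hand side is $+\infty$ there is nothing to prove, so I pass to a subsequence (not relabeled) along which $G_{f,\nu}(h_n) \to L < \infty$ and $G_{f,\nu}(h_n) \le c$ for some constant $c$ and all $n$. By definition of $G_{f,\nu}$ in \eqref{eq:g_nu}, finiteness forces each $h_n = m_{\mu_n}$ for some $\mu_n \in \M_+(\R^d)$ with $D_{f,\nu}(\mu_n) = D_f(\mu_n \mid \nu) \le c$. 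The strategy is then to extract a weak$^*$-convergent subsequence of $(\mu_n)_n$ with limit $\bar\mu \in \M_+(\R^d)$, to show that its image under the KME is exactly $h$, and finally to invoke the joint weak$^*$ lower semicontinuity of $D_f$ from Lemma~\ref{lem:ApproxC0} to bound $D_{f,\nu}(\bar\mu)$ from above by $L$.

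The main obstacle, and the step I would treat most carefully, is producing a uniform total-variation bound on $(\mu_n)_n$ so that weak$^*$ compactness (Banach–Alaoglu, using $\M(\R^d) = \C_0(\R^d)^*$) applies; this is delicate precisely when the recession constant $f_\infty'$ is finite, because then $D_f$ does not obviously control the mass of the absolutely continuous part. Writing the Lebesgue decomposition $\mu_n = \rho_n \nu + \mu_n^s$, the primal formula \eqref{eq:fDiv} gives $f_\infty'\,\mu_n^s(\R^d) \le c$, so the singular mass is bounded by $c/f_\infty'$ (which is $0$ when $f_\infty' = \infty$). For the absolutely continuous part I would use that $f_\infty' = \lim_{t\to\infty} f(t)/t > 0$ by Lemma~\ref{lem:definit}, which yields a threshold $T \ge 1$ and a constant $\kappa > 0$ with $f(t) \ge \kappa t$ for all $t \ge T$. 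Splitting $\int \rho_n \d\nu$ over $\{\rho_n \le T\}$ and $\{\rho_n > T\}$ then bounds it by $T\,\nu(\R^d) + \kappa^{-1}\int f\circ\rho_n \d\nu \le T\,\nu(\R^d) + \kappa^{-1} c$. Hence $\mu_n(\R^d) = \|\mu_n\|_{\TV}$ is uniformly bounded, and a subsequence satisfies $\mu_n \rightharpoonup \bar\mu$ weak$^*$ with $\bar\mu \in \M_+(\R^d)$ by the boundedness statement cited after Lemma~\ref{lem:definit}.

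It then remains to identify the limit, i.e.\ to show $h = m_{\bar\mu}$, which is where the failure of $\ran(m)$ to be closed is reconciled with the concrete structure of the KME. For any test function $g \in \H_K \subset \C_0(\R^d)$, the reproducing identity \eqref{eq:repr} gives $\langle m_{\mu_n}, g\rangle_{\H_K} = \int_{\R^d} g \d\mu_n$; the weak$^*$ convergence $\mu_n \rightharpoonup \bar\mu$ sends the right-hand side to $\int_{\R^d} g \d\bar\mu = \langle m_{\bar\mu}, g\rangle_{\H_K}$, while the strong convergence $h_n = m_{\mu_n} \to h$ in $\H_K$ sends the left-hand side to $\langle h, g\rangle_{\H_K}$. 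Since this holds for every $g \in \H_K$, I conclude $h = m_{\bar\mu}$, so in particular $G_{f,\nu}(h) = D_{f,\nu}(\bar\mu)$. Finally, applying Lemma~\ref{lem:ApproxC0} with the constant sequence $\nu_n \equiv \nu$ gives $D_{f,\nu}(\bar\mu) \le \liminf_n D_{f,\nu}(\mu_n) = L$, whence $G_{f,\nu}(h) \le L = \liminf_n G_{f,\nu}(h_n)$, establishing the claim.
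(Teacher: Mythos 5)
Your proposal is correct and follows essentially the same route as the paper: extract representing measures $\mu_n$ from the finiteness of $G_{f,\nu}(h_n)$, establish a uniform total-variation bound, apply Banach--Alaoglu to get a weak$^*$ limit $\bar\mu \in \M_+(\R^d)$, identify $h = m_{\bar\mu}$ by testing against elements of $\H_K \subset \C_0(\R^d)$, and conclude via the weak$^*$ lower semicontinuity of $D_{f,\nu}$ from Lemma~\ref{lem:ApproxC0}. The only (minor) deviation is in the mass bound: you use a direct linear lower bound $f(t) \ge \kappa t$ for $t \ge T$ derived from $f_\infty' > 0$, whereas the paper argues by contradiction via Jensen's inequality and $\lim_{t\to\infty} f(t) = \infty$; your variant has the small advantage of treating $\nu = 0$ without a separate case.
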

\begin{proof}
    Fix $h\in \H_{K}$ and let $(h_n)_{n} \subset \H_{K}$ with $h_n \to h$. 
    We need to show that ${G_{f,\nu}}(h)\le \liminf_{n\to \infty} {G_{f,\nu}}(h_n)$.
    If $\liminf_{n\to \infty} {G_{f,\nu}}(h_n) = +\infty$, then we are done.
    For $\liminf_{n\to \infty} {G_{f,\nu}}(h_n) < +\infty$, we pass to a subsequence which realizes the limes inferior.
    We choose this subsequence such that ${G_{f,\nu}}(h_n)<\infty$ for all $n\in \mathbb{N}$.
    Then, there exist $\mu_n\in \M_+(\R^d)$ with $m_{\mu_n}=h_n$.
    In principle, two cases can occur.

    \textbf{Case 1:} \textit{The sequence $(\|\mu_n\|_{\TV})_{n}$ diverges to $\infty$.} \\
    Since $D_{f, 0} = f_{\infty}' \cdot \| \cdot \|_{\TV}$, this is impossible for $\nu = 0$.
    If $\nu \in \M_+(\R^d) \setminus \{ 0 \}$, then Jensen's inequality implies
    \begin{align}
        {G_{f,\nu}}(h_n)
        & = {D_{f,\nu}}(\mu_n)
        = \int_{\R^d} f\circ \rho_n \d\nu + f_{\infty}' \mu_n^s(\R^d) \notag\\
        & \ge \| \nu \|_{\TV} f\left(\int_{\R^d} \frac{\rho_n}{\| \nu \|_{\TV}} \d\nu \right) + f_{\infty}' \mu_n^s(\R^d)\notag\\
        & = \| \nu \|_{\TV} f\left(\frac{1}{\| \nu \|_{\TV}} \left(\mu_n(\R^d)-\mu_n^s(\R^d)\right)\right) + f_{\infty}' \mu_n^s(\R^d).\label{eq:LowerBoundFunc}
    \end{align}
    Since $\lim_{n\to \infty} {G_{f,\nu}} (h_n)<\infty$ and $f_{\infty}' >0$, \eqref{eq:LowerBoundFunc} implies that the sequence $(\mu_{n}^s(\R^d))_n$ must be bounded.
    By Lemma~\ref{lem:definit} the same holds for the sequence $(\mu_{n}(\R^d)-\mu_{n}^s(\R^d))_n$.
    Hence, also $(\mu_{n}(\R^d))_n=(\|\mu_{n}\|_{\TV})_n$ is bounded, which contradicts our initial assumption and thus this case cannot occur.

    \textbf{Case 2:} \textit{The sequence $(\mu_n)_n$ satisfies $\liminf_{n\to \infty} \|\mu_n\|_{\TV}<\infty $.}\\
    Since $\C_0(\R^d)$ is separable and $\C_0(\R^d)^* \cong \M(\R^d)$, the Banach-Alaoglu theorem \cite[Cor.~5.4.2]{K2023} implies that there exists a subsequence $(\mu_{n_k})_k$ which converges weak* to some $\mu\in \M(\R^d)$.
    Since $K(x, \cdot) \in \C_0(\R^d)$ for any $x\in \R^d$, we thus have
    \begin{equation}\label{eq:weak_conv}
        \lim_{k\to \infty} m_{\mu_{n_k}}(x)
        = \lim_{k\to \infty }\int_{\R^d} K(x, y) \d\mu_{n_k}(y)
        = \int_{\R^d} K(x, y) d\mu(y)
        = m_{\mu}(x).
    \end{equation}
    Since $(m_{\mu_{n_k}})_k$ converges to $h$ in $\H_{K}$, \eqref{eq:weak_conv} entails
    \begin{equation}
        h(x)
        = \lim_{k\to \infty} m_{\mu_{n_k}}(x)
        = m_{\mu}(x) \quad \forall x\in \R^d.
    \end{equation}
    Thus, $h$ is represented by the measure $\mu$.
    Furthermore, $\mu \in \M_+(\R^d)$ as $(\mu_n)_{n}$ is bounded.
    The weak* lower semicontinuity of ${D_{f,\nu}}$ from Lemma \ref{lem:ApproxC0} yields
    \begin{align}
        {G_{f,\nu}}(h)
        ={D_{f,\nu}}(\mu)\le \liminf_{k\to \infty } {D_{f,\nu}}(\mu_{n_k})
        = \liminf_{k\to \infty } {G_{f,\nu}}(h_{n_k})
        =\lim_{n\to \infty} {G_{f,\nu}}(h_n).
     \end{align}
\end{proof}
The following lemma and its proof build upon the theory of convex integral functionals initiated by \cite{R1968,R1971} and developed in, e.g., \cite{BL1993}.
\begin{lemma}\label{prop:LowerHull}
The conjugate function of $G_{ f,\nu}$ in \eqref{eq:g_nu} is given by
    \begin{equation} \label{eq:gnuf*}
        G_{f,\nu}^*(h ) 
        = \mathbb E_{\nu}[f^* \circ h] 
        + \iota_{\C_0(\R^d, \overline{\dom(f^*)})}(h)
    \end{equation}
\end{lemma}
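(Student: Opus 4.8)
The plan is to reduce the claim to computing the Fenchel conjugate of the $f$-divergence functional on $\M(\R^d)$ and then to evaluate this conjugate by a pointwise Young/Fenchel argument. First, since $G_{f,\nu}$ equals $+\infty$ off $\ran(m)$ and the kernel is characteristic (Assumption~\ref{assumption:characteristic}), the embedding $m$ is injective, so the supremum defining $G_{f,\nu}^*$ effectively runs over $g = m_\mu$ with $\mu \in \M_+(\R^d)$. Using the KME identity \eqref{eq:repr}, namely $\langle h, m_\mu \rangle_{\H_K} = \int_{\R^d} h \,\d\mu$ for $h \in \H_K \subset \C_0(\R^d)$, this gives
\[
G_{f,\nu}^*(h) = \sup_{\mu \in \M_+(\R^d)} \Bigl\{ \int_{\R^d} h \,\d\mu - D_{f,\nu}(\mu) \Bigr\},
\]
so $G_{f,\nu}^*(h)$ is exactly the conjugate of $D_{f,\nu}$ with respect to the pairing $\langle \C_0, \M \rangle$, evaluated at $h$.

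Second, I would prove the inequality $G_{f,\nu}^*(h) \le \E_\nu[f^* \circ h]$ together with the domain constraint. Writing $\mu = \rho\nu + \mu^s$, Young's inequality gives $h(x)\rho(x) - f(\rho(x)) \le f^*(h(x))$ pointwise, so the absolutely continuous part is bounded by $\E_\nu[f^* \circ h]$. Recalling $\intt(\dom(f^*)) = (-\infty, f_{\infty}')$, if $h$ takes values in $\overline{\dom(f^*)}$, i.e.\ $h \le f_{\infty}'$, then the singular contribution $\int_{\R^d}(h - f_{\infty}')\,\d\mu^s \le 0$ since $\mu^s \ge 0$, which yields the upper bound. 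Conversely, if $h(x_0) > f_{\infty}'$ for some $x_0$, continuity gives an open neighborhood on which $h > f_{\infty}'$; choosing therein a point $x_1$ with $\nu(\{x_1\}) = 0$ (only countably many atoms exist) and testing with $\mu = \nu + c\,\delta_{x_1}$ gives $\int h \,\d\mu - D_{f,\nu}(\mu) = \int h \,\d\nu + c\,(h(x_1) - f_{\infty}') \to +\infty$ as $c \to \infty$, since $f(1) = 0$ annihilates the absolutely continuous part. Hence $G_{f,\nu}^*(h) = +\infty$ off $\C_0(\R^d, \overline{\dom(f^*)})$, matching the indicator term.

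Third, the reverse inequality $G_{f,\nu}^*(h) \ge \E_\nu[f^* \circ h]$ for $h \in \C_0(\R^d, \overline{\dom(f^*)})$ is the crux. For $\epsilon > 0$ set $h_\epsilon \coloneqq h - \epsilon$, whose range lies in a compact subset of $\intt(\dom(f^*))$, so $f^* \circ h_\epsilon$ is finite. I would select a bounded measurable $\rho_\epsilon(x) \in \partial f^*(h_\epsilon(x))$ (for instance the right derivative $(f^*)'_+ \circ h_\epsilon$, which is monotone, hence Borel, and bounded on the compact range), and test with $\mu_\epsilon = \rho_\epsilon \nu \in \M_+(\R^d)$. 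Fenchel equality gives $h_\epsilon \rho_\epsilon - f \circ \rho_\epsilon = f^* \circ h_\epsilon$, so that $\int h \,\d\mu_\epsilon - D_{f,\nu}(\mu_\epsilon) = \int (h\rho_\epsilon - f \circ \rho_\epsilon)\,\d\nu \ge \E_\nu[f^* \circ h_\epsilon]$. Letting $\epsilon \searrow 0$ and using that $f^*$ is non-decreasing and lower semicontinuous, so $f^*(h - \epsilon) \nearrow f^*(h)$ pointwise, monotone convergence gives $G_{f,\nu}^*(h) \ge \E_\nu[f^* \circ h]$. Combining the two bounds yields \eqref{eq:gnuf*}.

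The main obstacle is this lower bound: the measurable selection of a subgradient of $f^*$ and the passage to the boundary value $h = f_{\infty}'$, where the maximizing density escapes to $+\infty$. This is precisely the point addressed by the convex integral functional theory of Rockafellar and Bouchitté--Valadier; the $\epsilon$-shift isolates the interior of $\dom(f^*)$ so that only a monotone limit remains. A structurally cleaner alternative is to invoke Fenchel--Moreau: Lemma~\ref{lem:ApproxC0} identifies $D_{f,\nu}$ as the conjugate of $g \mapsto \E_\nu[f^* \circ g]$ on $\C_0(\R^d, \dom(f^*))$, so $G_{f,\nu}^*$ is the biconjugate of that integral functional, and showing the latter is proper, convex, and lower semicontinuous on $\C_0(\R^d)$ returns \eqref{eq:gnuf*} directly, at the cost of identifying its lower semicontinuous hull on the closure of the domain.
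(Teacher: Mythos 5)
Your proposal is correct, and after the (shared) first step it takes a genuinely different route from the paper. Both proofs begin identically: since $G_{f,\nu}=\infty$ off $\ran(m)$ and $\langle h, m_\mu\rangle_{\H_K}=\langle h,\mu\rangle_{\C_0\times\M}$ by \eqref{eq:repr}, the conjugate $G_{f,\nu}^*(h)$ is the conjugate of $D_{f,\nu}$ over $\M_+(\R^d)$ in the $\C_0\times\M$ pairing. At that point the paper simply invokes \cite[Prop.~24]{AH2021} on conjugates of convex integral functionals over decomposable spaces, whereas you prove the identity by hand: Young's inequality plus the sign of $\int (h-f_\infty')\,\d\mu^s$ for the upper bound, test measures $\nu+c\,\delta_{x_1}$ (with $x_1$ chosen off the at most countably many atoms of $\nu$) to force the value $+\infty$ when $\sup h>f_\infty'$, and for the lower bound the $\epsilon$-shift $h_\epsilon=h-\epsilon$ combined with the measurable, bounded, non-negative selection $\rho_\epsilon=(f^*)'_+\circ h_\epsilon$ and monotone convergence (using that $f^*$ is non-decreasing and lsc, so $f^*(h-\epsilon)\nearrow f^*(h)$ even at the boundary value $h=f_\infty'$, where the limit may be $+\infty$). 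I checked the details and see no gap: the selection $(f^*)'_+$ is indeed Borel and bounded on the compact range of $h_\epsilon\subset\intt(\dom(f^*))$, the Fenchel equality $h_\epsilon\rho_\epsilon-f\circ\rho_\epsilon=f^*\circ h_\epsilon$ holds pointwise, and the discarded term $\epsilon\int\rho_\epsilon\,\d\nu$ has the right sign. The trade-off is the usual one: the paper's citation is shorter and situates the result in the general Rockafellar-type theory, while your argument is self-contained and makes transparent exactly where the range constraint $\ran(h)\subseteq\overline{\dom(f^*)}$ and the boundary behaviour at $f_\infty'$ enter; either would be acceptable here.
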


\begin{proof}
    Using \eqref{eq:repr} and \eqref{eq:tildegnufExtendsDf}, we obtain for $h \in \H_K$ that
    \begin{align*}
        G_{f,\nu}^*(h)
        & = \sup_{g \in \H_K} \bigl\{\langle g, h \rangle_{\H_K} - G_{f,\nu}(g)\bigr\}
        = \sup_{\mu \in \M_+(\R^d)} \bigl\{\langle m_{\mu}, h \rangle_{\H_K} - D_{f,\nu}(\mu)\bigr\} \\
        & = \sup_{\mu \in \M_+(\R^d)} \bigl\{\langle h, \mu \rangle_{\C_0 \times \M} - D_{f,\nu}(\mu)
        \bigr\}.
    \end{align*}
    Applying \cite[Prop.~24]{AH2021} (a similar argument can be found in the earlier paper \cite{P18}) with (in their notation) $(\Omega, \F)$ being $\R^d$ equipped with its Borel $\sigma$-algebra, $X = \M(\R^d)$ and $Y$ equal to the space of bounded measurable functions on $\R^d$, we get that $(X, Y)$ is $\nu$-decomposable with $\Xi = \{ \emptyset \}$, so that $\ess \ \text{im}_{\Xi}(h) = \ran(h)$, and thus
    \begin{equation*}
        \sup_{\mu \in \M_+(\R^d)} \bigl\{\langle h, \mu \rangle_{\C_0 \times \M} - D_{f,\nu}(\mu)
        \bigr\}
        = \begin{cases}
            \E_{\nu}[f^* \circ h], & \text{if } \ran(h) \subset \overline{\dom(f^*)}, \\
            \infty, & \text{else.}
        \end{cases}
    \end{equation*}
    for any $h \in Y$, so in particular for any $h \in \H_K$.
\end{proof}

Next, we establish the link between \smash{$D_{f,\nu}^{\lambda}$} and the Moreau envelope of $G_{f,\nu}$.

\begin{corollary} \label{cor:coincidence}
For any fixed $\nu \in \M_+(\R^d)$ let $G_{f,\nu}$ be defined by 
\eqref{eq:g_nu}.
Then, it holds for $\mu \in \M(\R^d)$ 
that
\begin{equation} \label{eq:MoreauPrimalgnuf}
    D_{f,\nu}^{\lambda}(\mu)
    = G_{f,\nu}^{\lambda} (m_\mu).
    \end{equation} 
\end{corollary}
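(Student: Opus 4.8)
The plan is to unravel both sides to their defining infima and check that they coincide term by term. The essential inputs have already been secured: by the discussion following \eqref{eq:g_nu} the function $G_{f,\nu}$ is proper and convex, and by Lemma~\ref{lemma:Gfnu_lsc} it is lower semicontinuous, so $G_{f,\nu}\in\Gamma_0(\H_K)$. Consequently the Moreau envelope $G_{f,\nu}^{\lambda}$ from \eqref{eq:MoreauEnvPrimal} is well-defined, and its minimization may be carried out over all of $\H_K$.

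First I would write the envelope explicitly as
\begin{equation*}
    G_{f,\nu}^{\lambda}(m_\mu) = \inf_{g \in \H_K}\Bigl\{ G_{f,\nu}(g) + \frac{1}{2\lambda}\|g - m_\mu\|_{\H_K}^2\Bigr\}.
\end{equation*}
By the very definition \eqref{eq:g_nu}, the effective domain of $G_{f,\nu}$ is contained in $\ran(m)$; more precisely $\dom(G_{f,\nu}) = m(\M_+(\R^d))$, and every $g$ outside this set contributes $+\infty$ to the objective. Hence the infimum is unchanged if we restrict it to $g = m_\sigma$ with $\sigma \in \M_+(\R^d)$. Performing this change of variables and invoking the injectivity of $m$ (Assumption~\ref{assumption:characteristic}), which turns it into a genuine reparametrization of the feasible set, together with $G_{f,\nu}(m_\sigma) = D_{f,\nu}(\sigma)$ from \eqref{eq:tildegnufExtendsDf} and the identity $\|m_\sigma - m_\mu\|_{\H_K}^2 = d_K(\mu,\sigma)^2$ from \eqref{mmd-kme}, the restricted infimum becomes
\begin{equation*}
    \inf_{\sigma \in \M_+(\R^d)}\Bigl\{ D_{f,\nu}(\sigma) + \frac{1}{2\lambda} d_K(\mu,\sigma)^2\Bigr\},
\end{equation*}
which is exactly $D_{f,\nu}^{\lambda}(\mu)$ by \eqref{eq:Regfunc}. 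This yields the claimed equality.

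There is essentially no hard step here; the corollary is a bookkeeping consequence of the structural facts already established, and the real work lay in proving $G_{f,\nu}\in\Gamma_0(\H_K)$. The only two points deserving a moment of care are (i) the reduction of the Moreau minimization to $\ran(m)$, which rests entirely on $\dom(G_{f,\nu}) \subseteq \ran(m)$, and (ii) the passage between the attained $\min$ in the Moreau envelope and the $\inf$ defining $D_{f,\nu}^{\lambda}$. Regarding (ii), I would additionally note the useful byproduct that the unique Moreau minimizer $\hat g$ satisfies $G_{f,\nu}(\hat g) < \infty$, hence $\hat g \in m(\M_+(\R^d))$; thus the infimum defining $D_{f,\nu}^{\lambda}(\mu)$ is in fact attained at $\hat\sigma = m^{-1}(\hat g) \in \M_+(\R^d)$, which will be convenient in the subsequent analysis.
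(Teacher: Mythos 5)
Your proof is correct and follows essentially the same route as the paper: both use $G_{f,\nu}\in\Gamma_0(\H_K)$ to guarantee the Moreau envelope is well-defined, observe that the minimizer must lie in the image of $\M_+(\R^d)$ under the KME, and substitute $g = m_\sigma$ to recover the infimum defining $D_{f,\nu}^{\lambda}(\mu)$. (Minor quibble: $\dom(G_{f,\nu})$ is contained in, but in general not equal to, $m(\M_+(\R^d))$, since $D_{f,\nu}$ can take the value $+\infty$ on $\M_+(\R^d)$; only the inclusion is needed, so the argument stands.)
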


\begin{proof}
    Since $G_{f, \nu} \in \Gamma_0(\H_K)$, we have that 
    \begin{equation} \label{eq:MoreauPrimalgnuf2}
        G_{f,\nu}^{\lambda} (m_\mu)
        = \min_{g \in \H_K} \Bigl\{ G_{f,\nu}(g) + \frac{1}{2\lambda} \|g - m_{\mu} \|_{\H_K}^2 \Bigr\}
    \end{equation}
    is well-defined and clearly the unique minimizer fulfills $g \in \text{ran}(m)$. Hence we can substitute $g = m_{\sigma}$ for $\sigma \in \M_+(\R^d)$ to obtain
    \begin{equation*}
        G_{f, \nu}^{\lambda} (m_\mu)
        = \min_{\sigma \in \M_+(\R^d)} \big\{D_{f, \nu}(\sigma) + \frac{1}{2 \lambda} \| m_{\mu} - m_{\sigma} \|_{\H_K}^2 \big\},
    \end{equation*}
    which yields the assertion.
\end{proof}

\subsection{Properties of MMD-regularized \texorpdfstring{$f$}{f}-Divergences} \label{sec:prop}
Now, we combine Corollary~\ref{cor:coincidence} with the properties of Moreau envelopes in Theorem~\ref{prop:moreau} to prove various properties of the MMD-regularized functional $D_{f,\nu}^\lambda$ in a sequence of corollaries.
For the KL divergence and the $\chi^2$-divergence, these properties have been shown differently in \cite{GAG2021,CMGKGS24} without using Moreau envelopes.

\begin{corollary}[Dual formulation]\label{lemma:RegFDivDual}
    For $\nu \in \M_+(\R^d)$ and $\mu \in \M(\R^d)$, we have that
    \begin{equation} \label{eq:RegFDivDual}
        D_{f,\nu}^\lambda(\mu)
        = \max_{\substack{p \in \H_{K} \\ p(\R^d) \subseteq \overline{\dom(f^*)}}} \Bigl\{ \E_{\mu}[p] - \E_{\nu}[f^* \circ p] - \frac{\lambda}{2} \| p \|_{\H_{K}}^2 \Bigr\}.
    \end{equation}
    If $\hat{p} \in \H_K$ maximizes \eqref{eq:RegFDivDual}, then $\hat{g} = m_{\mu} - \lambda \hat{p}$ minimizes \eqref{eq:MoreauPrimalgnuf2} and vice versa.
    Further, it holds 
    \begin{equation} \label{some_est}
        \frac{\lambda}{2} \Vert \hat p\Vert_{\H_{K}}^2 
        \le 
        D_{f,\nu}^{\lambda}(\mu) 
        \le 
        \Vert \hat p\Vert_{\H_{K}} d_K(\mu, \nu),
    \end{equation}
    and, in particular,
    \begin{equation*}
        D_{f,\nu}^{\lambda}(\mu) 
        \le 
        \Vert \hat p\Vert_{\H_{K}} (\Vert m_{\mu} \Vert _{\H_{K}}+ \Vert m_{\nu} \Vert_{\H_{K}})
        \quad \text{and} \quad \Vert \hat p\Vert_{\H_{K}} \le \tfrac{2}{\lambda} d_K(\mu,\nu).
    \end{equation*}
\end{corollary}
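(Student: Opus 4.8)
The plan is to push everything into the RKHS $\H_K$ through Corollary~\ref{cor:coincidence} and then read off the claim from the Moreau envelope machinery of Theorem~\ref{prop:moreau}. Concretely, I would begin with $D_{f,\nu}^\lambda(\mu) = G_{f,\nu}^\lambda(m_\mu)$ and apply the dual formulation \eqref{eq:MoreauEnvDual} of Theorem~\ref{prop:moreau} to $G = G_{f,\nu}$ at $h = m_\mu$. Inserting the conjugate $G_{f,\nu}^*$ from Lemma~\ref{prop:LowerHull} replaces $-G_{f,\nu}^*(p)$ by $-\E_\nu[f^* \circ p] - \iota_{\C_0(\R^d, \overline{\dom(f^*)})}(p)$, and the indicator term simply restricts the maximization to those $p \in \H_K$ with $p(\R^d) \subseteq \overline{\dom(f^*)}$. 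The reproducing identity \eqref{eq:repr} turns $\langle m_\mu, p\rangle_{\H_K}$ into $\E_\mu[p]$, which gives \eqref{eq:RegFDivDual}; the attainment of the maximum and the correspondence $\hat g = m_\mu - \lambda\hat p$ are a direct transcription of Theorem~\ref{prop:moreau}.

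For the estimates \eqref{some_est} the decisive observation is that, as noted in the proof of Corollary~\ref{cor:coincidence}, the primal minimizer $\hat g = m_\mu - \lambda\hat p$ lies in $\ran(m)$, say $\hat g = m_\sigma$ with $\sigma \in \M_+(\R^d)$. Then $m_\mu - m_\sigma = \lambda\hat p$, so evaluating the primal envelope \eqref{eq:MoreauPrimalgnuf2} at its minimizer yields $D_{f,\nu}^\lambda(\mu) = D_{f,\nu}(\sigma) + \frac{\lambda}{2}\|\hat p\|_{\H_K}^2$. Since $D_{f,\nu}(\sigma) \ge 0$, this gives the lower bound $\frac{\lambda}{2}\|\hat p\|_{\H_K}^2 \le D_{f,\nu}^\lambda(\mu)$ at once.

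The upper bound I would obtain on the dual side. As $G_{f,\nu}(m_\nu) = D_{f,\nu}(\nu) = 0$, the definition of the conjugate gives $G_{f,\nu}^*(\hat p) \ge \langle m_\nu, \hat p\rangle_{\H_K}$. Substituting this into the dual value of $D_{f,\nu}^\lambda(\mu)$ and discarding the nonpositive term $-\frac{\lambda}{2}\|\hat p\|_{\H_K}^2$ leads, via Cauchy--Schwarz and \eqref{mmd-kme}, to
\[
    D_{f,\nu}^\lambda(\mu) \le \langle m_\mu - m_\nu, \hat p\rangle_{\H_K} \le \|m_\mu - m_\nu\|_{\H_K}\,\|\hat p\|_{\H_K} = d_K(\mu,\nu)\,\|\hat p\|_{\H_K}.
\]
Combining the two bounds gives $\frac{\lambda}{2}\|\hat p\|_{\H_K}^2 \le d_K(\mu,\nu)\|\hat p\|_{\H_K}$, hence $\|\hat p\|_{\H_K} \le \frac{2}{\lambda}d_K(\mu,\nu)$, while $D_{f,\nu}^\lambda(\mu) \le \|\hat p\|_{\H_K}(\|m_\mu\|_{\H_K} + \|m_\nu\|_{\H_K})$ is the triangle inequality applied to $d_K(\mu,\nu) = \|m_\mu - m_\nu\|_{\H_K}$. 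I do not expect a genuine obstacle: the argument is essentially bookkeeping of substitutions, and the only points requiring care are checking that the indicator in $G_{f,\nu}^*$ translates exactly into the stated range constraint and that $p = 0$ is feasible (true since $f^*(0) = 0$ forces $0 \in \overline{\dom(f^*)}$), so that the dual supremum is really attained.
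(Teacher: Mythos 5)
Your proposal is correct and follows essentially the same route as the paper: Corollary~\ref{cor:coincidence} plus Theorem~\ref{prop:moreau}(\ref{item:MoreauProp1}) and Lemma~\ref{prop:LowerHull} for the dual formula, the primal minimizer for the lower bound, and Cauchy--Schwarz for the upper bound. The only cosmetic difference is that you obtain $\E_{\nu}[f^*\circ\hat p]\ge\langle m_\nu,\hat p\rangle_{\H_K}$ from the Fenchel inequality for $G_{f,\nu}^*$ at $m_\nu$ (using $G_{f,\nu}(m_\nu)=0$), whereas the paper uses the equivalent pointwise bound $f^*(t)\ge t$ coming from $f(1)=0$.
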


\begin{proof}
\begin{enumerate}
    \item 
    From Corollary~\ref{cor:coincidence} and Theorem~\ref{prop:moreau}(\ref{item:MoreauProp1}), we obtain
    \begin{equation}
        D_{f,\nu}^{\lambda}(\mu)= G_{f,\nu}^{\lambda}(m_\mu)
        = \max_{p \in \H_K} \Bigl\{ \langle p, m_{\mu} \rangle_{\H_K} - G_{f,\nu}^*(p) - \frac{\lambda}{2} \| p \|_{\H_K}^2 \Bigr\}.
    \end{equation}
    By \eqref{eq:repr}, we have $\langle p, m_{\mu} \rangle_{\H_K} = \E_{\mu}[p]$ and plugging in \eqref{eq:gnuf*} yields the first assertion.
    By Theorem~\ref{prop:moreau} we also get the primal-dual relation.

    \item 
    Using \eqref{eq:MoreauPrimalgnuf}, there exists a $\hat{g} \in \dom(G_{f, \nu})$ such that
    \begin{equation} \label{inter}
        D_{f,\nu}^{\lambda}(\mu) = G_{f,\nu}(\hat g) + \frac{1}{2\lambda} \|\hat g - m_{\mu}\|_{\H_K}^2 \ge \frac{1}{2\lambda} \|\hat g - m_{\mu}\|_{\H_K}^2 = \frac{\lambda}{2} \|\hat p\|_{\H_K}^2,
    \end{equation}
    which is the first lower estimate.
    For the upper one, we use that $f(1)=0$, so that $f^*(p) \ge p$ for all $p \in \R$.
    Then, together with \eqref{eq:repr}, the upper estimate follows by
    \begin{equation}
        D_{f,\nu}^\lambda(\mu)
        = \E_{\mu}[\hat p] - \E_{\nu}[f^* \circ \hat p] - \frac{\lambda}{2} \| \hat p \|_{\H_{K}}^2
        \le \E_{\mu - \nu}[\hat p] 
        \le \Vert \hat p\Vert_{\H_{K}} \Vert m_{\mu} - m_{\nu} \Vert_{\H_{K}},
    \end{equation}
    where we employed the Cauchy-Schwarz inequality in $\H_K$ and \eqref{inter}.
\end{enumerate}
\end{proof}

\begin{remark}
In \cite[Eq.~(2)]{GAG2021}, Glaser et al. introduced a so-called KALE
functional. This is exactly the
dual formulation \eqref{eq:RegFDivDual} multiplied by $1+\lambda$
for the Kullback-Leibler entropy function $f_{\KL}$ from Table~\ref{tab:entropyFunctions}.
As expected, the dual function of the KALE functional \cite[Eq.~(6)]{GAG2021} coincides with our primal formulation \eqref{eq:RegFDiv}.
Indeed, we have $D_f(\sigma \mid \nu) < \infty$ if and only if there exists a density $\rho \in L^1(\R^d, \nu ; \R_{\ge 0})$ such that $\sigma = \rho \nu$. Thus,
\begin{align}
    \min_{\sigma \in \M_+(\R^d)} D_f(\sigma \mid \nu) + \frac{1}{2 \lambda} d_K(\mu, \sigma)^2
    = \min_{\substack{\rho \in L^1(\R^d) \\ \rho \ge 0}} \int_{\R^d} f_{\KL}\circ \rho \d{\nu} + \frac{1}{2 \lambda} \| m_{\rho \nu} - m_{\mu} \|_{\H_K}^2.
\end{align} 
\end{remark}

We now discuss topological properties of $D_{f, \nu}^{\lambda}$, generalizing \cite[Thm.~1]{GAG2021}, where only metrization of $\P_1(\R^d) \subset B_{\varphi(0)}(0)$ is shown.
\begin{corollary}[Topological properties]\label{lemma:topology}\hfill
    \begin{enumerate}
        \item
        For every $\nu \in \M_+(\R^d)$, the function $D_{f,\nu}^{\lambda}\colon \M(\R^d) \to \R$ is weakly continuous.
        
        \item
        If $D_f$ is a divergence, then $D_f^{\lambda}$ in \eqref{eq:RegFDiv} is a divergence as well.
        
        \item \label{item:Top3}
        If $D_f$ is a divergence, then $D_f^{\lambda}$ \enquote{metrizes} the topology on the closed balls $B_r(\mu_0) \coloneqq \{\mu \in \mathcal M_+(\R^d): d_K(\mu,\mu_0) \le r\} \subset (\mathcal M_+(\R^d), d_K)$ for any $r > 0$ and any $\mu_0 \in \M_+(\R^d)$ in the sense that for $\mu_n, \mu \in B_r(\mu_0)$ it is equivalent that $D_{f}^{\lambda}(\mu_n \mid \mu) \to 0$ and that $d_K(\mu_n, \mu) \to 0$.
    \end{enumerate}
\end{corollary}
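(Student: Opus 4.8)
The whole corollary can be funneled through the single identity $D_{f,\nu}^{\lambda}(\mu)=G_{f,\nu}^{\lambda}(m_\mu)$ of Corollary~\ref{cor:coincidence}: the regularity of the Moreau envelope (Theorem~\ref{prop:moreau}) and the a priori estimates of Corollary~\ref{lemma:RegFDivDual} provide everything on the $\H_K$ side, and the kernel mean embedding $m$ transports it back to measures. The decisive structural remark is that, since $m$ is linear with $\|m_\mu-m_\sigma\|_{\H_K}=d_K(\mu,\sigma)$, it is an isometry from $(\M(\R^d),d_K)$ into $\H_K$, while $G_{f,\nu}^{\lambda}$ is continuous by Theorem~\ref{prop:moreau}(\ref{item:MoreauProp2}). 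Thus $D_{f,\nu}^{\lambda}$ is automatically continuous for the topology induced by $d_K$, and the three items reduce to (a) comparing $d_K$ with narrow convergence, (b) reading off definiteness from the two nonnegative summands in \eqref{eq:RegFDiv}, and (c) an equivalence between $D_f$-smallness and $d_K$-smallness.

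For \textbf{(i)}, let $\mu_n\to\mu$ weakly (narrowly). Expanding the three integrals defining $d_K$ in \eqref{eq:DK} and using that $K$ is bounded and continuous, the product measures $\mu_n\otimes\mu_n$, $\mu_n\otimes\mu$, $\mu\otimes\mu$ converge narrowly to $\mu\otimes\mu$, whence $d_K(\mu_n,\mu)\to 0$, i.e. $m_{\mu_n}\to m_\mu$ strongly in $\H_K$. Composing with the continuous map $G_{f,\nu}^{\lambda}$ yields $D_{f,\nu}^{\lambda}(\mu_n)\to D_{f,\nu}^{\lambda}(\mu)$. For \textbf{(ii)}, nonnegativity is clear and $D_f^{\lambda}(\nu\mid\nu)=0$ follows by inserting $\sigma=\nu$ in \eqref{eq:RegFDiv} and using $f(1)=0$. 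For definiteness, the infimum in \eqref{eq:RegFDiv} is attained at the proximal point $\sigma^\ast$, so $D_f^{\lambda}(\mu\mid\nu)=0$ forces $D_f(\sigma^\ast\mid\nu)=0$ and $d_K(\mu,\sigma^\ast)=0$ separately; Lemma~\ref{lem:definit} gives $\sigma^\ast=\nu$, and injectivity of the KME gives $\mu=\sigma^\ast$, hence $\mu=\nu$.

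For \textbf{(iii)} the easy direction is immediate from \eqref{some_est}, which yields $D_f^{\lambda}(\mu_n\mid\mu)\le \tfrac{2}{\lambda}\,d_K(\mu_n,\mu)^2\to 0$ once $d_K(\mu_n,\mu)\to 0$. For the converse, let $\sigma_n\in\M_+(\R^d)$ realize the minimum in $D_f^{\lambda}(\mu_n\mid\mu)$; since both summands are nonnegative and their sum tends to $0$, we get simultaneously $D_f(\sigma_n\mid\mu)\to 0$ and $d_K(\mu_n,\sigma_n)\to 0$. Writing the Lebesgue decomposition $\sigma_n=\rho_n\mu+\sigma_n^s$ and using $f_\infty'>0$ from Lemma~\ref{lem:definit}, the identity $D_f(\sigma_n\mid\mu)=\int_{\R^d} f\circ\rho_n\,\d\mu+f_\infty'\,\sigma_n^s(\R^d)$ forces $\sigma_n^s(\R^d)\to 0$ and $\int f\circ\rho_n\,\d\mu\to 0$. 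As $1$ is the unique minimizer of $f$, the latter gives $\rho_n\to 1$ in $\mu$-measure, and the superlinear-type bound $f(t)\ge \tfrac{f_\infty'}{2}t$ for large $t$ yields uniform integrability of $(\rho_n)$; Vitali's theorem then gives $\rho_n\to 1$ in $L^1(\R^d,\mu)$. Finally $d_K(\rho_n\mu,\mu)\le \sqrt{\sup_x K(x,x)}\,\|\rho_n-1\|_{L^1(\mu)}\to 0$ and $\|m_{\sigma_n^s}\|_{\H_K}\le \sqrt{\sup_x K(x,x)}\,\sigma_n^s(\R^d)\to 0$, so the triangle inequality gives $d_K(\sigma_n,\mu)\to 0$ and hence $d_K(\mu_n,\mu)\le d_K(\mu_n,\sigma_n)+d_K(\sigma_n,\mu)\to 0$.

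I expect the genuine obstacle to be the second direction of \textbf{(iii)}, namely upgrading $D_f(\sigma_n\mid\mu)\to 0$ to $d_K(\sigma_n,\mu)\to 0$. The naive route through weak$^\ast$ compactness fails, because a weak$^\ast$ limit does not control $d_K$ (mass may escape to infinity), and indeed $d_K$ and $D_f^\lambda$ are \emph{not} weak$^\ast$ continuous. The point that rescues the argument is precisely $f_\infty'>0$, which penalizes both the singular part and the heavy tails of $\rho_n$, so that escaping or concentrating mass is excluded and the $L^1$-density convergence via Vitali's theorem becomes available; this is the only step that is not a direct consequence of the Moreau-envelope machinery.
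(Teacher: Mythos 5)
Your proposal is correct, and for the decisive part (iii) it takes a genuinely different route from the paper. Items (i) and (ii) essentially coincide with the paper's treatment (the paper cites \cite[Lemma~10]{SGS2018} for $m_{\mu_n}\to m_\mu$ rather than re-deriving it from narrow convergence of product measures, and outsources definiteness to a reference, whereas you argue it directly via the attained proximal point and injectivity of the KME — both fine). For the reverse implication in (iii), the paper works entirely on the \emph{dual} side: it inserts the competitor $\epsilon g_n$ with $g_n=m_{\mu_n-\mu}$ into \eqref{eq:RegFDivDual}, uses $(f^*)'(0)=1$ (a consequence of the unique minimizer at $1$) and a Peano-remainder expansion $f^*(t)=t+h(t)$ to obtain the quantitative lower bound $D_f^\lambda(\mu_n\mid\mu)\ge\frac{\epsilon}{2}d_K(\mu_n,\mu)^2$ with $\epsilon$ independent of $n$; the ball constraint $\mu_n,\mu\in B_r(\mu_0)$ is used essentially there, to bound $\|g_n\|_\infty$ uniformly so that $\epsilon g_n$ stays in $\dom(f^*)$ and $\epsilon$ can be chosen once for all $n$. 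You instead work on the \emph{primal} side: extract the optimal $\sigma_n$, split off $D_f(\sigma_n\mid\mu)\to 0$ and $d_K(\mu_n,\sigma_n)\to 0$, and upgrade the former to $d_K(\sigma_n,\mu)\to 0$ via the Lebesgue decomposition, convergence in measure from $\inf_{|t-1|\ge\delta}f>0$, uniform integrability from the at-least-linear growth $f(t)\ge c(t-1)$ for $t\ge\tau>1$ (your phrase ``$f(t)\ge\frac{f_\infty'}{2}t$'' should be read this way, since $f_\infty'$ may be $+\infty$), Vitali, and the TV-to-$d_K$ bound $\|m_\eta\|_{\H_K}\le\sqrt{\sup_xK(x,x)}\,\|\eta\|_{\TV}$. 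This is sound, and it buys something the paper's proof does not: it never uses the restriction to $B_r(\mu_0)$, so it proves the implication $D_f^\lambda(\mu_n\mid\mu)\to0\Rightarrow d_K(\mu_n,\mu)\to0$ on all of $\M_+(\R^d)$. What it gives up is the quantitative estimate $D_f^\lambda\gtrsim_\epsilon d_K^2$ on balls, which the paper reuses almost verbatim in the proof of Corollary~\ref{lemma:lambdaToZero}(\ref{item:lambdatoZero3}); your argument is purely qualitative. Your closing diagnosis of the obstacle is accurate, except that weak$^*$ compactness is not the route the paper avoids — the paper avoids the primal side altogether.
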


\begin{proof}
    \begin{enumerate}
        \item 
        Let $(\mu_n)_{n \in \N}$ converge weakly to $\mu$, i.e.,
        $\lim_{n \to \infty} \langle f,\mu_n\rangle = \langle f,\mu\rangle$
        for all $f \in \C_b(\R^d)$.
        Then, we have by \cite[Lemma~10]{SGS2018} that $m_{\mu_n} \to m_{\mu}$ in $\H_K$.
        Since $G^\lambda_{f,\nu}$ is continuous by Theorem~\ref{prop:moreau}(\ref{item:MoreauProp3}),
        this implies together with Corollary~\ref{cor:coincidence} that
        \begin{equation}
            D^\lambda_{f,\nu} (\mu) = G^\lambda_{f,\nu} (m_\mu) = \lim_{n \to \infty} G^\lambda_{f,\nu} (m_{\mu_n}) = 
            \lim_{n \to \infty}D^\lambda_{f,\nu} (\mu_n).
        \end{equation}

        \item 
        The definiteness follows from \eqref{eq:RegFDiv} since both summands are non-negative.
        We refer to \cite[Thm.~75.4]{BDKPR23} for a more detailed proof in a slightly different setting.

        \item 
        Since $f$ has its unique minimizer at $x = 1$, we have $0 \in \partial f(x)$ only for $x = 1$.
        Hence, $\partial f^*(0) = \{ 1 \}$, and thus $f^*$ is differentiable at zero with $(f^*)'(0) = 1$.   
        Let $\mu_n,\mu \in B_r(\mu_0)$ for all $n \in \N$.
        As $G^\lambda_{f,\nu}$ is continuous, the relation $d_K(\mu_n, \mu) = \Vert m_{\mu_n} - m_{\mu} \Vert_{\H_K}\to 0$ implies $D^\lambda_{f}(\mu_n \mid \mu) = G^\lambda_{f,\mu} (m_{\mu_n}) \to D^\lambda_{f}(\mu \mid \mu) = 0$.
    
        For the reverse direction, assume that $\smash{D^\lambda_{f}}(\mu_n \mid \mu)\to 0$.
        For any $n\in\N$, we define $g_n \coloneqq  m_{\mu_n- \mu} = m_{\mu_n} - m_\mu$, for which it holds that 
        \begin{equation}\label{eq:BoundPotential}
            \| g_n \|_{\C_0}
            \le C \| g_n \|_{\H_K}
            = C d_K(\mu_n, \mu)
            \le C \big( d_K(\mu_n, \mu_0) + d_K(\mu_0, \mu) \big)
            \le 2 C r,
        \end{equation}
        where $C > 0$ is the embedding constant from $\H_K \hookrightarrow \C_0(\R^d)$.
        Hence, it holds for any \smash{$\frac{f_{\infty}'}{2 C r} > \epsilon > 0$} that $\epsilon g_n(\R^d) \subset \dom(f^*)$ since $\epsilon | g_n(x) | < \smash{\frac{f_{\infty}'}{2 C r}} | g_n(x) | \le f_{\infty}'$ for all $x \in \R^d$.
        By \eqref{eq:RegFDivDual}, we further get that
        \begin{align}
            D^\lambda_{f}(\mu_n \mid \mu)
            &\ge \E_{\mu_n}\big[\epsilon g_n\big]
            - \E_\mu \left[f^*\circ(\epsilon g_n) \right]
            - \frac{\lambda \epsilon^2}{2} \| g_n \|_{\H_{K}}^2
            \notag\\
            &= 
            \epsilon \| g_n \|^2_{\H_{K}} - \E_{\mu}[f^* \circ (\epsilon g_n) - \epsilon g_n] - \frac{\lambda \epsilon^2}{2} \| g_n \|_{\H_{K}}^2 \notag\\
            & = \Bigl(\epsilon - \frac{\lambda \epsilon^2}{2}\Bigr) \| g_n \|^2_{\H_{K}} - \E_{\mu}[f^* \circ (\epsilon g_n) - \epsilon g_n].\label{eq:LowerBoundTop}
         \end{align}
        Using Taylor's theorem with the Peano form of the remainder, there exists $h \colon (-\infty, f_{\infty}') \to \R$ with $\lim_{t \to 0} \frac{h(t)}{t} = 0$ such that $f^*(t) = t + h(t)$ for all $t \in (- \infty, f_{\infty}')$.
        Thus, we get
        \begin{equation}\label{eq:TaylorExp}
            \E_{\mu}[f^* \circ (\epsilon g_n) - \epsilon g_n]
            = \int_{\R^d}h \circ (\epsilon g_n) \d{\mu}
            \le \Bigl\|\frac{ h \circ (\epsilon g_n)}{\epsilon g_n} \Bigr\|_{\infty} \Vert {\epsilon g_n} \|_{\infty} \mu(\R^d)
        \end{equation}
        with the convention $\frac{h(\epsilon g_n(x))}{\epsilon g_n(x)} = 0$ if $g_n(x) = 0$.
        Plugging \eqref{eq:TaylorExp} into \eqref{eq:LowerBoundTop} yields
        \begin{equation}
            D^\lambda_{f}(\mu_n \mid \mu)
            \ge \epsilon \| g_n \|_{\H_K} \left( \Bigl(1 - \frac{\lambda}{2} \epsilon\Bigr) \| g_n \|_{\H_K} - C \Bigl\|\frac{ h \circ (\epsilon g_n)}{\epsilon g_n} \Bigr\|_{\infty} \mu(\R^d) \right).\label{eq:LowerBoundTop2}
        \end{equation}
        
        Since $\lim_{t \to 0} \frac{h(t)}{t} = 0$ and $\epsilon g_n$ is uniformly bounded, equation \eqref{eq:BoundPotential} implies that the second factor in \eqref{eq:LowerBoundTop2} converges to $\| g_n \|_{\H_K}$ for $\epsilon \searrow 0$.
        Thus, we can pick $\epsilon>0$ independent of $n$ such that $D^\lambda_{f}(\mu_n \mid \mu) \ge \frac{\epsilon}{2} \| g_n \|^2_{\H_{K}} = \frac{\epsilon}{2} d_K(\mu_n,\mu)^2$.
        From this, we infer $d_K(\mu_n,\mu) \to 0$.
    \end{enumerate}
\end{proof}

Now, we investigate the two asymptotic regimes of $D_f^\lambda$.

\begin{corollary}[Limits for $\lambda \to 0$ and $\lambda \to \infty$] \label{lemma:lambdaToZero}
\hfill
\begin{enumerate}
    \item \label{item:lambdatoZero1}
    It holds $\lim_{\lambda \to 0} D_{f, \nu}^\lambda(\mu) = D_{f, \nu}(\mu)$ for $\mu, \nu \in \M_+(\R^d)$.
    
    \item
    We have that \smash{$D_{f,\nu}^\lambda$} converges to $D_{f,\nu}$ in the sense of Mosco: 
    For $\mu \in \M_+(\R^d)$, a monotonically decreasing $(\lambda_n)_{n \in \N} \subset (0, \infty)$ with $\lambda_n \to 0$, and $(\mu_n)_{n \in \N} \subset \M_+(\R^d)$ with $\mu_n \weakly \mu$ it holds that
    \begin{equation}\label{eq:liminf}
        D_{f,\nu}(\mu) \le \liminf_{n \to \infty} D_{f,\nu}^{\lambda_n}(\mu_n).
    \end{equation}
    Further, there exists a sequence $(\tilde{\mu}_n)_{n \in \N} \subset \M_+(\R^d)$ with $\tilde{\mu}_n \to \mu$ such that
    \begin{equation}\label{eq:limsup}
        D_{f,\nu}(\mu) = \lim_{n \to \infty} D_{f,\nu}^{\lambda_n}(\tilde{\mu}_n).
    \end{equation}
    
    \item \label{item:lambdatoZero3}
    For any $r > 0$ and any $\mu_0 \in \M_+(\R^d)$, it holds
    \begin{equation}
        \lim_{\lambda \to \infty} \sup_{\mu \in B_r(\mu_0)}\Bigl \vert (1+\lambda) D_{f}^{\lambda}(\mu \mid \nu) - \frac{1}{2} d_K(\mu, \nu)^2 \Bigr \vert = 0.
    \end{equation}
    \end{enumerate} 
\end{corollary}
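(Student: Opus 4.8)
The plan is to reduce everything to the abstract Moreau envelope theory via the identification $D_{f,\nu}^\lambda(\mu)=G_{f,\nu}^\lambda(m_\mu)$ of Corollary~\ref{cor:coincidence}, treating the two regimes $\lambda\to0$ and $\lambda\to\infty$ separately. Part~(\ref{item:lambdatoZero1}) is then immediate: since $\mu\in\M_+(\R^d)$ we have $G_{f,\nu}(m_\mu)=D_{f,\nu}(\mu)$ by \eqref{eq:tildegnufExtendsDf}, and as $G_{f,\nu}\in\Gamma_0(\H_K)$ Theorem~\ref{prop:moreau}(\ref{item:MoreauProp3}) gives the monotone pointwise convergence $G_{f,\nu}^\lambda(m_\mu)\nearrow G_{f,\nu}(m_\mu)$ as $\lambda\searrow0$, which is exactly the claim.

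For the Mosco statement in part~(ii) I would first transport the convergence into the RKHS. If $\mu_n\weakly\mu$, then $m_{\mu_n}\to m_\mu$ strongly in $\H_K$ by \cite[Lemma~10]{SGS2018}, as in the proof of Corollary~\ref{lemma:topology}(i). Exploiting the monotonicity from Theorem~\ref{prop:moreau}(\ref{item:MoreauProp3}) that $G_{f,\nu}^{\lambda_n}\ge G_{f,\nu}^{\lambda_k}$ whenever $n\ge k$ (as $\lambda_n\le\lambda_k$), together with the continuity of each $G_{f,\nu}^{\lambda_k}$ from Theorem~\ref{prop:moreau}(\ref{item:MoreauProp2}), I obtain
\begin{equation*}
\liminf_{n\to\infty}D_{f,\nu}^{\lambda_n}(\mu_n)=\liminf_{n\to\infty}G_{f,\nu}^{\lambda_n}(m_{\mu_n})\ge\lim_{n\to\infty}G_{f,\nu}^{\lambda_k}(m_{\mu_n})=G_{f,\nu}^{\lambda_k}(m_\mu),
\end{equation*}
and letting $k\to\infty$ and using part~(\ref{item:lambdatoZero1}) yields \eqref{eq:liminf}. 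For \eqref{eq:limsup} the recovery sequence can simply be taken constant, $\tilde\mu_n=\mu$, since then $D_{f,\nu}^{\lambda_n}(\mu)\to D_{f,\nu}(\mu)$ is precisely part~(\ref{item:lambdatoZero1}) and $\tilde\mu_n\to\mu$ holds trivially.

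The regime $\lambda\to\infty$ in part~(\ref{item:lambdatoZero3}) is the substantial one, and I would argue through the dual formulation \eqref{eq:RegFDivDual}. Writing $f^*(t)=t+h(t)$ with $h\ge0$ (as $f(1)=0$ forces $f^*(t)\ge t$) and $h(t)/t\to0$ as $t\to0$, as already used in the proof of Corollary~\ref{lemma:topology}, and setting $R\coloneqq r+d_K(\mu_0,\nu)$ so that $d_K(\mu,\nu)\le R$ on $B_r(\mu_0)$, I proceed by two-sided estimation. By \eqref{eq:repr} the dual objective for admissible $p$ equals $\langle m_\mu-m_\nu,p\rangle_{\H_K}-\E_\nu[h\circ p]-\tfrac\lambda2\|p\|_{\H_K}^2$. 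Dropping the nonnegative term $\E_\nu[h\circ p]$ and maximizing the remaining quadratic over all $p\in\H_K$ gives the sharp upper bound $D_{f,\nu}^\lambda(\mu)\le\tfrac1{2\lambda}d_K(\mu,\nu)^2$. For the matching lower bound I test \eqref{eq:RegFDivDual} with $p^*\coloneqq\tfrac1\lambda(m_\mu-m_\nu)$, which is admissible once $\lambda$ exceeds a threshold depending only on $R$ (since $\|p^*\|_\infty\le CR/\lambda<f_{\infty}'$, with $C$ the embedding constant of $\H_K\hookrightarrow\C_0(\R^d)$), producing
\begin{equation*}
D_{f,\nu}^\lambda(\mu)\ge\tfrac1{2\lambda}d_K(\mu,\nu)^2-\E_\nu[h\circ p^*].
\end{equation*}

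The main obstacle is to control $\lambda\,\E_\nu[h\circ p^*]$ uniformly over $B_r(\mu_0)$. Here I would use that $\|p^*\|_\infty\le CR/\lambda$ holds uniformly on the ball: given $\eta>0$, choosing $\delta>0$ with $h(t)\le\eta|t|$ for $|t|\le\delta$ and then $\lambda$ large enough (depending only on $R$) that $\|p^*\|_\infty\le\delta$, I get $\lambda\,\E_\nu[h\circ p^*]\le\lambda\,\eta\,\|p^*\|_\infty\|\nu\|_{\TV}\le\eta\,CR\|\nu\|_{\TV}$, which can be made arbitrarily small. Combining the two bounds shows $\sup_{\mu\in B_r(\mu_0)}|\lambda D_{f,\nu}^\lambda(\mu)-\tfrac12 d_K(\mu,\nu)^2|\to0$; since in addition $0\le D_{f,\nu}^\lambda(\mu)\le R^2/(2\lambda)\to0$ uniformly, writing $(1+\lambda)D_{f,\nu}^\lambda=\lambda D_{f,\nu}^\lambda+D_{f,\nu}^\lambda$ gives the assertion. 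The delicate point throughout part~(\ref{item:lambdatoZero3}) is keeping every estimate uniform in $\mu$, which the a priori bounds $d_K(\mu,\nu)\le R$ and $\|p^*\|_\infty\le CR/\lambda$ precisely secure.
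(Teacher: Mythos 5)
Your proposal is correct and follows essentially the same route as the paper: parts (i) and (ii) reduce to the monotone pointwise convergence of Moreau envelopes via Corollary~\ref{cor:coincidence} (you merely prove the $\Gamma$-liminf inequality directly instead of citing it, with the same constant recovery sequence), and part (iii) uses the same test function $\frac{1}{\lambda}m_{\mu-\nu}$ in the dual together with the Taylor expansion $f^*=\id+h$ and the uniform bound $\|p^*\|_\infty\le CR/\lambda$ on the ball. The only cosmetic deviation is that you obtain the upper bound $D_{f,\nu}^\lambda(\mu)\le\frac{1}{2\lambda}d_K(\mu,\nu)^2$ from the dual rather than by plugging $\sigma=\nu$ into the primal, which yields the identical estimate.
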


\begin{proof}
\hfill
\begin{enumerate}
    \item
    Since $f_{\infty}' > 0$, we have $0 \in \intt(\dom(f^*))$.
    By Theorem~\ref{prop:moreau}(\ref{item:MoreauProp3}), we get pointwise convergence to $G_{f, \nu}$.
    Then, the statement follows by the final part of Lemma~\ref{prop:LowerHull} and Corollary~\ref{cor:coincidence}.

    \item 
    By Theorem~\ref{prop:moreau}(\ref{item:MoreauProp3}), the sequences \smash{$(G_{f, \nu}^{\lambda_n}(h))_{n \in \N}$} are monotonically increasing with $\sup_{n \in \N} G_{f, \nu}^{\lambda_n}(h) = G_{f, \nu}(h)$ for any $h \in \H_K$.
    As $G_{f, \nu}$ is lower semicontinuous, \cite[Rem.~2.12]{Braides2006} implies that \smash{$(G_{f, \nu}^{\lambda_n})_{n \in \N}$} $\Gamma$-converges to $G_{f, \nu}$.
    More precisely, it holds that $G_{f, \nu}(h) \le \liminf_{n \to \infty} G_{f, \nu}^{\lambda_n}(h_n)$ for every $h \in \H_K$ and any sequence $(h_n)_{n \in \N} \subset \H_K$ with $h_n \to h$.
    Further, for every $h \in \H_K$ it holds that $G_{f, \nu}(h) = \lim_{n \to \infty} \smash{G_{f, \nu}^{\lambda_n}}(h)$.
    The statement now follows from the fact that \smash{$D_{f, \nu}^{\lambda} = G_{f, \nu}^{\lambda} \circ m$} by Corollary~\ref{cor:coincidence} and since $\mu_n \weakly \mu$ in $\M_+(\R^d)$ implies $m_{\mu_n} \to m_{\mu}$ by \cite[Lemma~10]{SGS2018}.

    \item
    From \eqref{eq:RegFDiv}, we infer \smash{$(1+\lambda) D_{f}^{\lambda}(\mu \mid \nu) \le \frac{1 + \lambda}{2 \lambda} d_K(\mu, \nu)^2$}.
    To get a lower bound, we proceed as for Corollary~\ref{lemma:topology}(\ref{item:Top3}).
    For $g_{\mu,\lambda} \coloneqq \smash{\frac{1}{\lambda}} m_{\mu - \nu}$ with $\lambda > \smash{\frac{C}{f_{\infty}'}} | \mu - \nu |(\R^d) > 0$, we have that $g_{\mu,\lambda}(\R^d) \subset \dom(f^*)$.
    Analogously to \eqref{eq:LowerBoundTop} and \eqref{eq:LowerBoundTop2} (using the same $h$), we get
    \begin{align}
       (1+\lambda) D^{\lambda}_{f}(\mu \mid \nu)
        & \ge \frac{(1+\lambda)}{\lambda} d_K(\mu, \nu) \left( \frac{1}{2} d_K(\mu, \nu) - C \nu(\R^d) \Bigl\| \frac{h \circ g_{\mu,\lambda}}{g_{\mu,\lambda}} \Bigr\|_{\infty} \right)\notag\\
        & \ge \frac{1}{2} d_K(\mu, \nu)^2 - C \nu(\R^d) d_K(\mu, \nu) \frac{1+\lambda}{\lambda} \Bigl\| \frac{h \circ g_{\mu,\lambda}}{g_{\mu,\lambda}} \Bigr\|_{\infty}.\label{eq:LowerEst}
    \end{align}
    Combining \eqref{eq:LowerEst} with $(1+\lambda) D_{f}^{\lambda}(\mu \mid \nu) \le \frac{1 + \lambda}{2 \lambda} d_K(\mu, \nu)^2$, we get for $\mu \in B_r(\mu_0)$ that
    \begin{align}
        &\Bigl| (1 + \lambda) D_{f, \nu}^{\lambda}(\mu) - \frac{1}{2} d_K(\mu, \nu)^2 \Bigr|\notag\\
        \le &d_K(\mu, \nu) \max\bigg(\frac{d_K(\mu, \nu)}{2 \lambda}, C \nu(\R^d) \frac{1+\lambda}{\lambda} \Bigl\| \frac{h \circ g_{\mu,\lambda}}{g_{\mu,\lambda}} \Bigr\|_{\infty} \bigg)\notag\\
        \le &\bigl(d_K(\mu_0, \nu)+ r\bigr) \max\bigg(\frac{d_K(\mu_0, \nu) + r}{2 \lambda}, C \nu(\R^d) \frac{1+\lambda}{\lambda} \Bigl\| \frac{h \circ g_{\mu,\lambda}}{g_{\mu,\lambda}} \Bigr\|_{\infty} \bigg).
    \end{align}
    Here, the first term in the maximum converges to zero as $\lambda \to \infty$.
    As for Corollary~\ref{lemma:topology}(\ref{item:Top3}), $\Vert g_{\mu,\lambda}\Vert_\infty \le  \frac{C}{\lambda}(d_K(\mu_0, \nu) + r)$ together with $\lim_{t \to 0} \frac{1}{t} h(t) = 0$ yields that also the second term converges to zero.
    Thus, the claim follows.
\end{enumerate}
\end{proof}

The  Moreau envelope interpretation of $D_{f,\nu}^{\lambda}$ allows the calculation of its gradient without the implicit function theorem, which is used to justify the calculations for the KALE functional in \cite[Lem.~2]{GAG2021}.
Furthermore, we strengthen their result by proving Fréchet differentiability instead of G$\hat{\text{a}}$teaux differentiability.

\begin{corollary}[Gradient] \label{lem:Grad} 
    The function $D_{f,\nu}^{\lambda}\colon \M(\R^d) \to [0,\infty)$
    is Fr\'echet differentiable and
        $\nabla D_{f,\nu}^{\lambda}(\mu) = \hat p$, 
    where $\hat p \in \H_{K}$ 
    is the maximizer in \eqref{eq:RegFDivDual}.
    Further, the mapping $\nabla D_{f,\nu}^{\lambda} \colon \mathcal M (\R^d) \to \H_K$ is $\frac{1}{\lambda}$-Lipschitz with respect to $d_K$.
\end{corollary}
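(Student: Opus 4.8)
The plan is to exploit the factorization $D_{f,\nu}^{\lambda} = G_{f,\nu}^{\lambda} \circ m$ established in Corollary~\ref{cor:coincidence}, combine it with the differentiability of the Moreau envelope from Theorem~\ref{prop:moreau}(\ref{item:MoreauProp2}), and then transport the resulting $\H_K$-gradient back to $\M(\R^d)$ using the reproducing property. First I would recall that $G_{f,\nu} \in \Gamma_0(\H_K)$ (by convexity and properness noted after \eqref{eq:g_nu}, together with the lower semicontinuity from Lemma~\ref{lemma:Gfnu_lsc}), so that Theorem~\ref{prop:moreau}(\ref{item:MoreauProp2}) applies: the envelope $G_{f,\nu}^{\lambda}$ is Fréchet differentiable on $\H_K$ with gradient $\nabla G_{f,\nu}^{\lambda}(h) = \lambda^{-1}\bigl(h - \prox_{\lambda G_{f,\nu}}(h)\bigr)$, and this gradient is $\tfrac{1}{\lambda}$-Lipschitz.

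Next I would invoke the chain rule. Since the KME $m\colon (\M(\R^d), \|\cdot\|_{\TV}) \to \H_K$ is bounded and linear, it is its own Fréchet derivative, so the composition $G_{f,\nu}^{\lambda} \circ m$ is Fréchet differentiable and its derivative at $\mu$ is the bounded linear functional $\eta \mapsto \bigl\langle \nabla G_{f,\nu}^{\lambda}(m_\mu), m_\eta \bigr\rangle_{\H_K}$ on $\M(\R^d)$. By the reproducing identity \eqref{eq:repr}, this functional equals $\eta \mapsto \E_\eta\bigl[\nabla G_{f,\nu}^{\lambda}(m_\mu)\bigr] = \int_{\R^d} \nabla G_{f,\nu}^{\lambda}(m_\mu) \d\eta$, which is precisely the action of the function $\nabla G_{f,\nu}^{\lambda}(m_\mu) \in \H_K \subset \C_0(\R^d)$ viewed as an element of $\M(\R^d)^*$. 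This furnishes the identification $\nabla D_{f,\nu}^{\lambda}(\mu) = \nabla G_{f,\nu}^{\lambda}(m_\mu)$.

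To see that this gradient is exactly the dual maximizer $\hat p$, I would feed the primal–dual relation of Corollary~\ref{lemma:RegFDivDual} into the gradient formula \eqref{eq:MoreauDerivative}. Writing $\hat g = \prox_{\lambda G_{f,\nu}}(m_\mu)$ for the minimizer in \eqref{eq:MoreauPrimalgnuf2}, Corollary~\ref{lemma:RegFDivDual} gives $\hat g = m_\mu - \lambda \hat p$, hence $\nabla G_{f,\nu}^{\lambda}(m_\mu) = \lambda^{-1}(m_\mu - \hat g) = \hat p$, as claimed. The Lipschitz estimate is then immediate: for $\mu, \mu' \in \M(\R^d)$,
\begin{equation*}
\bigl\|\nabla D_{f,\nu}^{\lambda}(\mu) - \nabla D_{f,\nu}^{\lambda}(\mu')\bigr\|_{\H_K} = \bigl\|\nabla G_{f,\nu}^{\lambda}(m_\mu) - \nabla G_{f,\nu}^{\lambda}(m_{\mu'})\bigr\|_{\H_K} \le \tfrac{1}{\lambda}\,\|m_\mu - m_{\mu'}\|_{\H_K} = \tfrac{1}{\lambda}\, d_K(\mu, \mu'),
\end{equation*}
where the inequality is the $\tfrac{1}{\lambda}$-Lipschitzness from Theorem~\ref{prop:moreau}(\ref{item:MoreauProp2}) and the last equality is \eqref{mmd-kme}.

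The main obstacle I anticipate is conceptual rather than computational: the Fréchet derivative of $D_{f,\nu}^{\lambda}$ on $\M(\R^d)$ a priori lives in the large and unwieldy dual $\M(\R^d)^* = \C_0(\R^d)^{**}$, and the content of the statement is that it in fact factors through the KME and is therefore represented by the comparatively regular object $\hat p \in \H_K \subset \C_0(\R^d)$. The reproducing property is precisely what makes this reduction clean. One must also be slightly careful about the topology underlying differentiability: the chain rule produces an $o(\|m_\eta\|_{\H_K})$ remainder, which is controlled by $o(\|\eta\|_{\TV})$ through boundedness of $m$, so that Fréchet differentiability holds with respect to $\|\cdot\|_{\TV}$, while the Lipschitz bound is naturally stated in the weaker metric $d_K$ induced by $\H_K$.
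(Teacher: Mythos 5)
Your proposal is correct and follows essentially the same route as the paper's proof: factor $D_{f,\nu}^{\lambda} = G_{f,\nu}^{\lambda}\circ m$, apply Theorem~\ref{prop:moreau}(\ref{item:MoreauProp2}) and the chain rule with the linear KME, identify the gradient with $\hat p$ via the primal--dual relation of Corollary~\ref{lemma:RegFDivDual}, and transfer the $\tfrac{1}{\lambda}$-Lipschitz bound through the isometry \eqref{mmd-kme}. Your added remark on the $o(\|\eta\|_{\TV})$ control of the remainder is a welcome clarification but does not change the argument.
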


\begin{proof}
    By Theorem~\ref{prop:moreau}(\ref{item:MoreauProp2}) and Corollary~\ref{lemma:RegFDivDual}, we obtain 
    \begin{equation}
        \nabla G^{\lambda}_{f,\nu}(m_\mu) = \lambda^{-1}\bigl(m_\mu - \prox_{\lambda G_{f,\nu}}(m_\mu)\bigr) = \hat p.
    \end{equation}
    As a concatenation of a Fr\'echet differentiable and a continuous linear mapping, \smash{$D_{f,\nu}^{\lambda} = G^{\lambda}_{f,\nu} \circ m$} is Fr\'echet differentiable.
    The Fr\'echet derivative of the KME $m\colon \M(\R^d) \to \H_{K}$ at $\mu \in \M(\R^d)$ is $\d m_{\mu} = m \in L(\M(\R^d); \H_{K})$.
    Using these computations together with the chain rule, we get for any  $\sigma \in \mathcal M(\R^d)$ that
    \begin{equation}\label{deriv}
        \d D_{f,\nu}^{\lambda}(\mu) [\sigma]
        =
        \d (G^{\lambda}_{f,\nu} \circ m)(\mu)[\sigma]
        = \d G^{\lambda}_{f,\nu} (m_{\mu}) [m_\sigma]
        = \langle \hat p, \sigma \rangle_{\C_0 \times \M}.
    \end{equation}
    Hence, we identify 
    $\nabla D_{f,\nu}^{\lambda}(\mu) = \hat p \in \C_0(\R^d)$.
    Finally, we have by Theorem~\ref{prop:moreau}(\ref{item:MoreauProp2}) 
    that
    \begin{equation} \label{fini}
    \| \nabla G^{\lambda}_{f,\nu}(m_\mu) - \nabla G^{\lambda}_{f,\nu}(m_\sigma) \|_{\H_K}
    \le \lambda^{-1} \|m_\mu - m_\sigma \|_{\H_K}
    = 
    \lambda^{-1} d_K(\mu,\sigma),
    \end{equation}
    see \cite[Prop.~12.28]{BC2011}, which completes the proof.
\end{proof}

\section{Wasserstein Gradient Flows of Regularized \texorpdfstring{$f$}{f}-Divergences} \label{sec:wgf} 
Now, we discuss Wasserstein gradient flows of \smash{$D^\lambda_{f,\nu}$}.
This requires some preliminaries from \cite[Secs.~8.3, 9.2, 11.2]{AGS2008}, which we adapt to our setting in Section \ref{subsec:wgfPrelims}.

\subsection{Wasserstein Gradient Flows} \label{subsec:wgfPrelims}
We consider the \emph{Wasserstein space} $\P_2(\R^d)$ of Borel probability measures with finite second moment, equipped
with the \emph{Wasserstein distance} 
\begin{equation}\label{eq:W2}
W_2(\mu,\nu)^2 \coloneqq\min_{\pi\in\Gamma(\mu,\nu)}\int_{\R^d\times\R^d}\|x-y\|_2^2\d \pi(x,y),
\end{equation}
where $\Gamma(\mu,\nu)=\{\pi\in\P_2(\R^d\times\R^d):(P_1)_\#\pi=\mu,\,(P_2)_\#\pi=\nu\}$.
Here, $T_{\#}\mu \coloneqq \mu \circ T^{-1}$ denotes the \emph{push-forward} of $\mu$ via 
the measurable map $T$, and
$P_i(x) \coloneqq x_i$, $i = 1,2$, for $x = (x_1,x_2) \in \R^{d} \times \R^{d}$.

A curve $\gamma \colon [0,1] \to \P_2(\R^d)$, $t \mapsto \gamma_t$ is a \emph{geodesic} if 
$W_2(\gamma_{s}, \gamma_{t}) = W_2(\gamma_0,\gamma_1)\vert t - s\vert$ for all $s, t \in [0,1]$.
The Wasserstein space is geodesic, i.e., any measures $\mu_0, \mu_1 \in \P_2(\R^d)$ can be connected by a geodesic.
These are all of the form 
\begin{equation}
    \gamma_t
    = \big( (1-t)P_1 +tP_2 \big)_{\#} \hat \pi, \qquad t \in [0, 1],
\end{equation} 
where $\hat \pi \in \Gamma (\mu_0,\mu_1)$ realizes $W_2(\mu_0, \mu_1)$ in \eqref{eq:W2}.

For $\F\colon \P_2(\R^d) \to (-\infty,\infty]$, we set $\dom(\F) \coloneqq \{ \mu \in \P_2(\R^d): \F(\mu) < \infty \}$.
The function $\F$ is \emph{$M$-convex along geodesics} with $M \in \R$ if, for every 
$\mu_0, \mu_1 \in \dom(\F)$,
there exists a geodesic $\gamma \colon [0, 1] \to \P_2(\R^d)$ 
between $\mu_0$ and $\mu_1$ such that
\begin{equation*}
    \F(\gamma_t) 
    \le
    (1-t) \, \F(\mu_0) + t \, \F(\mu_1) 
    - \frac{M}{2} \, t (1-t) \, W_2(\mu_0, \mu_1)^2, 
    \qquad t \in [0,1].
\end{equation*}
Frequently, we need a more general notion of convexity.
Based on the set of three-plans with base $\sigma \in \P_2(\R^d)$ given by
\begin{equation}
    \Gamma_\sigma(\mu,\nu)
    \coloneqq
    \bigl\{ 
    \zb{\alpha} \in \P_2(\R^d \times \R^d \times \R^d)
    :
    (P_1)_\# \zb \alpha = \sigma,
    (P_2)_\# \zb \alpha = \mu,
    (P_3)_\# \zb \alpha = \nu
    \bigr\},
\end{equation}
a \emph{generalized geodesic} $\gamma_{\boldsymbol{\alpha}} \colon [0, 1] \to \P_2(\R^d)$
joining $\mu$ and $\nu$ with base $\sigma$ is defined as
\begin{equation} \label{ggd}
  \gamma_{\boldsymbol{\alpha},t} 
  \coloneqq 
  \bigl(
  (1-t ) P_2 
  + t P_3
  \bigr)_\# \zb \alpha,
  \qquad
  t \in [0, 1],
\end{equation}
where $\zb \alpha \in \Gamma_\sigma (\mu,\nu)$ with $(P_{1,2})_\# \zb{\alpha} \in \Gamma^{\opt}(\sigma,\mu)$ and $(P_{1,3})_\# \zb{\alpha} \in \Gamma^{\opt}(\sigma,\nu)$.
Here, $\Gamma^{\opt}(\mu, \nu)$ denotes the \emph{set of optimal transport plans} that minimize \eqref{eq:W2}.
The plan $\zb{\alpha}$ can be interpreted as transport from $\mu$ to $\nu$ via $\sigma$.
A function $\F\colon \P_2(\R^d) \to (-\infty,\infty]$ is
\emph{$M$-convex along generalized geodesics}
if, for every $\sigma,\mu,\nu \in \dom(\F)$,
there exists a generalized geodesic $\gamma_{\boldsymbol{\alpha}} \colon [0,1] \to \P_2(\R^d)$ such that
\begin{equation} \label{eq:gg}
  \F(\gamma_{\boldsymbol{\alpha},t})
  \le 
  (1-t) \, \F(\mu) 
  + t \, \F(\nu) 
  - \frac{M}{2} \, t(1-t) \, W_{\zb \alpha}^2(\mu,\nu), 
  \qquad t \in [0,1],
\end{equation}
where 
\begin{equation}
W_{\zb{\alpha}}^2 (\mu, \nu)
\coloneqq \int_{\R^d \times \R^d \times \R^d} \|y - z\|_2^2 \d \zb{\alpha}(x,y,z).
\end{equation}
Each function that is $M$-convex along generalized geodesics 
is also $M$-convex along geodesics
since generalized geodesics with base $\sigma = \mu$ are actual geodesics.

The \emph{strong (reduced) Fr\'echet subdifferential} $\partial\F(\mu)$
of a proper, lower semicontinuous, geodesically convex function 
$\F\colon \P_2(\R^d) \to (-\infty,\infty]$ at $\mu$ consists of $v \in L_2(\R^d, \mu; \R^d)$ such that for every $\eta \in \P_2(\R^d)$ and every $\pi \in \Gamma(\mu,\eta)$, it holds 
\begin{equation}\label{eq:subdiff_ineq}
    \F (\eta) - \F(\mu) \ge \int_{\R^d \times \R^d} \langle v(x_1), x_2 - x_1\rangle \d \pi(x_1,x_2) + o(C_2(\pi)),
\end{equation}
where $C_2(\pi)^{2} \coloneqq \int_{\R^d \times \R^d} \| x_1 - x_2\|_2^2 \d \pi(x_1,x_2)$ and the asymptotic $o(\cdot)$ has to be understood with respect to the $W_2$ metric, see \cite[Eq.~(10.3.7)]{AGS2008}.

A curve $\gamma\colon (0,\infty)\to\P_2(\R^d)$ 
is called locally 2-\emph{absolutely continuous} 
if there exists a Borel velocity field $v\colon\R^d \times (0, \infty) \to\R^d$, $(x, t) \mapsto v_t(x)$ with 
$(t \mapsto \| v_t \|_{L^2(\R^d; \gamma_t)}) \in L_{\loc}^2(0, \infty)$
such that the continuity equation 
\begin{equation}
    \partial_t\gamma_t +\nabla\cdot(v_t\gamma_t)
    = 0    
\end{equation}
is fulfilled on $(0,\infty) \times\R^d$ in a weak sense, i.e., 
\begin{equation}
    \int_0^\infty \int_{\R^d} \partial_t \varphi(t,x) + \langle \nabla_x\varphi(t,x), v_t(x) \rangle \d \gamma_t(x) \d t
    = 0 \quad\quad \text{for all } \varphi \in \C_{c}^{\infty}\bigl((0,\infty)\times \R^d\bigr).
\end{equation}
A locally 2-absolutely continuous curve
$\gamma \colon (0,\infty) \to \P_2(\R^d)$
with velocity field $v_t$ in the regular tangent space
  \begin{equation*}
      \textrm{T}_{\gamma_t} \P_2(\R^d) \coloneqq \overline{\{ \nabla \phi: \phi \in \C_{\textrm{c}}^{\infty}(\R^d) \}}^{L_2(\R^d, \gamma_t; \R^d)}
  \end{equation*}
of
  $\P_2(\R^d)$ at $\gamma_t$
  is called a \emph{Wasserstein gradient flow of}
  $\F\colon \P_2(\R^d) \to (-\infty, \infty]$
  if 
  \begin{equation}\label{wgf}
      v_t  \in - \partial \F(\gamma_t), \quad \text{for a.e. } t > 0.
  \end{equation}
In principle, \eqref{wgf} involves the so-called reduced Fr\'echet subdifferential $\partial \F$. 
We will show the $D^\lambda_{f,\nu}$ are Fr\'echet differentiable such that we can use the strong Fr\'echet subdifferential in \eqref{eq:subdiff_ineq} instead.
Then, we have the following theorem \cite[Thm.~11.2.1]{AGS2008}.

\begin{theorem}\label{ambrosio}
Assume that $\F\colon\P_2(\R^d)\to(-\infty,\infty]$ is proper, lower semicontinuous, coercive 
and $M$-convex along generalized geodesics.
Given $\mu_0 \in \overline{\dom (\F)}$, there exists a unique Wasserstein gradient flow of $\F$ with $\gamma(0+) = \mu_0$.
\end{theorem}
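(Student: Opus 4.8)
The statement is the cornerstone existence-and-uniqueness theorem of Ambrosio--Gigli--Savar\'e, so the natural route is to reconstruct the \emph{minimizing movement} (JKO) argument. First I would fix a step size $\tau > 0$, set $\mu^0_\tau \coloneqq \mu_0$, and define the iterates
\begin{equation*}
  \mu^{n}_\tau \in \argmin_{\mu \in \P_2(\R^d)} \Bigl\{ \F(\mu) + \frac{1}{2\tau} W_2(\mu, \mu^{n-1}_\tau)^2 \Bigr\}, \qquad n \ge 1.
\end{equation*}
Existence of each minimizer follows from the direct method: coercivity forces the sublevel sets of $\mu \mapsto \F(\mu) + \frac{1}{2\tau} W_2(\mu, \mu^{n-1}_\tau)^2$ to have uniformly bounded second moments, hence to be relatively compact for the narrow topology by Prokhorov's theorem, while lower semicontinuity of $\F$ and of $W_2$ lets me extract a minimizer. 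For $\tau$ small enough (namely $1 + \tau M > 0$) the objective is strictly convex along generalized geodesics based at $\mu^{n-1}_\tau$, giving uniqueness of the iterate; this is the first place where convexity along \emph{generalized} geodesics, rather than ordinary geodesics, is essential.

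Next I would record the a priori estimates. Testing minimality of $\mu^n_\tau$ against $\mu^{n-1}_\tau$ yields the discrete energy dissipation inequality
\begin{equation*}
  \F(\mu^N_\tau) + \frac{1}{2\tau}\sum_{n=1}^{N} W_2(\mu^n_\tau, \mu^{n-1}_\tau)^2 \le \F(\mu_0),
\end{equation*}
which bounds the total squared displacement uniformly in $\tau$ and makes the piecewise-constant interpolant approximately $1/2$-H\"older in $W_2$. Using De Giorgi's variational interpolation to build a genuinely continuous curve $\gamma^\tau$, a refined Arzel\`a--Ascoli argument then extracts a subsequence converging locally uniformly to a limit curve $\gamma$ as $\tau \to 0$, which is locally $2$-absolutely continuous.

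The decisive step is to identify $\gamma$ as the gradient flow. Here generalized geodesic convexity upgrades one-step minimality into a \emph{discrete evolution variational inequality}: comparing $\mu^n_\tau$ with an arbitrary competitor $\eta \in \dom(\F)$ along the generalized geodesic based at $\mu^{n-1}_\tau$ produces, after rearrangement,
\begin{equation*}
  \frac{1}{2\tau}\bigl(W_2(\mu^n_\tau,\eta)^2 - W_2(\mu^{n-1}_\tau,\eta)^2\bigr) + \frac{M}{2} W_2(\mu^n_\tau,\eta)^2 \le \F(\eta) - \F(\mu^n_\tau).
\end{equation*}
Passing to the limit $\tau \to 0$ and using lower semicontinuity of $\F$ turns this into the continuous EVI
\begin{equation*}
  \frac{1}{2}\frac{\d}{\d t} W_2(\gamma_t,\eta)^2 + \frac{M}{2} W_2(\gamma_t,\eta)^2 \le \F(\eta) - \F(\gamma_t) \quad \text{for a.e. } t>0.
\end{equation*}
Standard EVI theory then shows $\gamma$ has velocity field $v_t \in -\partial\F(\gamma_t)$ in the regular tangent space, i.e.\ it is a Wasserstein gradient flow in the sense of \eqref{wgf}, and the dissipation bound forces $\gamma(0+) = \mu_0$ even when $\mu_0$ only lies in $\overline{\dom(\F)}$.

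Uniqueness is then almost free: if $\gamma, \tilde\gamma$ are two EVI solutions, inserting $\eta = \tilde\gamma_t$ in the EVI for $\gamma$ and $\eta = \gamma_t$ in the EVI for $\tilde\gamma$ and adding gives $\frac{\d}{\d t} W_2(\gamma_t,\tilde\gamma_t)^2 \le -2M\, W_2(\gamma_t,\tilde\gamma_t)^2$, so Gr\"onwall yields $W_2(\gamma_t,\tilde\gamma_t) \le \e^{-Mt} W_2(\gamma_0,\tilde\gamma_0)$, hence uniqueness for a common datum. I expect the main obstacle to be the limit passage of the third paragraph: deriving the discrete EVI requires the correct coupling between the competitor $\eta$ and the base point $\mu^{n-1}_\tau$, and it is precisely convexity along generalized geodesics (not merely geodesics) that validates this comparison; controlling the error terms from the variational interpolation while sending $\tau \to 0$ is the technical heart of the argument.
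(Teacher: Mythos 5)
The paper does not prove this statement at all: Theorem~\ref{ambrosio} is imported verbatim from \cite[Thm.~11.2.1]{AGS2008}, and the only thing the authors add is the remark that boundedness from below plus $M$-convexity gives the required coercivity. Your sketch is a faithful reconstruction of the minimizing-movement proof behind that citation --- existence and uniqueness of the JKO iterates for $1+\tau M>0$ via $1$-convexity of $W_2^2(\cdot,\sigma)$ along generalized geodesics with base $\sigma$, the discrete energy-dissipation estimate, De Giorgi interpolation, the discrete-to-continuous EVI, and Gr\"onwall for uniqueness --- and it is essentially correct. The one point stated too quickly is the attainment of the initial datum: for $\mu_0\in\overline{\dom(\F)}\setminus\dom(\F)$ the dissipation bound is vacuous (since $\F(\mu_0)=\infty$), so one instead builds the flow for initial data in $\dom(\F)$ and extends to the closure using the EVI contraction estimate $W_2(\gamma_t,\tilde\gamma_t)\le \e^{-Mt}W_2(\gamma_0,\tilde\gamma_0)$; this is how \cite{AGS2008} handles it.
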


\begin{remark}
    By \cite[Eq.~(11.2.1b)]{AGS2008} any proper $M$-convex functional that is bounded from below by some constant is coercive in the sense of \cite[Eq.~(2.1.2b)]{AGS2008}.
\end{remark}


\subsection{Wasserstein Gradient Flow of \texorpdfstring{$D_{f,\nu}^{\lambda}$}{the regularized f-Divergence functional}} \label{sec:4.2}
Now, we show that \smash{$D_{f,\nu}^{\lambda}$} is locally Lipschitz, $M$-convex along generalized geodesics, and that $\partial D_{f,\nu}^{\lambda}(\mu)$ is a singleton for every $\mu \in \P_2(\R^d)$.
To this end, we rely on \cite[Prop.~2, Cor.~3]{VG23}, which we adapt to our setting in the following lemma.
\begin{lemma}\label{lem:dK_embed}
Let the kernel $K$ fulfill
$K(x, x) + K(y, y) - 2K(x, y)\le  C_{\emb}^2 \| x - y \|_2^2$
with some constant $C_{\emb} >0$.
Then, it holds
\begin{equation}\label{eq:grib}
d_K(\mu, \nu) \le C_{\emb} W_2(\mu, \nu).
\end{equation}
If $K(x,y) = \Phi(x-y)$ is translation invariant, then we get $C_{\emb} = \sqrt{\lambda_{\max} (-\nabla^2 \Phi (0))}$.
For radial kernels $K(x,y) = \phi(\|x-y\|_2^2)$, we have
\begin{equation}
\nabla^2 \Phi(x) = 4 \phi^{\prime\prime}(\|x\|_2^2) x x^\tT + 2 \phi'(\|x\|_2^2) \id,
\end{equation}
so that
$
 C_{\emb} = \sqrt{-2 \phi'(0)}
$.
\end{lemma}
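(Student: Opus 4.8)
The plan is to prove the three assertions in turn, establishing the integral bound \eqref{eq:grib} first and then verifying the explicit value of $C_{\emb}$ in the translation-invariant and radial cases.

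For \eqref{eq:grib}, I would fix an arbitrary coupling $\pi \in \Gamma(\mu,\nu)$ and exploit the feature-map representation $m_\mu = \int_{\R^d} K(x,\cdot)\,\d\mu(x)$ from \eqref{kme}. Since $\mu = (P_1)_\#\pi$ and $\nu = (P_2)_\#\pi$, linearity of the KME gives $m_\mu - m_\nu = \int_{\R^d\times\R^d}\bigl(K(x,\cdot)-K(y,\cdot)\bigr)\,\d\pi(x,y)$. Applying the triangle inequality for the $\H_K$-valued Bochner integral together with \eqref{mmd-kme}, and then the reproducing property in the form $\|K(x,\cdot)-K(y,\cdot)\|_{\H_K}^2 = K(x,x)-2K(x,y)+K(y,y)$, the hypothesis bounds the integrand by $C_{\emb}\|x-y\|_2$. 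A single application of the Cauchy--Schwarz (Jensen) inequality then yields $d_K(\mu,\nu)\le C_{\emb}\int\|x-y\|_2\,\d\pi \le C_{\emb}\bigl(\int\|x-y\|_2^2\,\d\pi\bigr)^{1/2}$, and taking $\pi$ to be an optimal plan for $W_2(\mu,\nu)$ gives \eqref{eq:grib}.

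For the explicit constants, the point is to verify the hypothesis $K(x,x)+K(y,y)-2K(x,y)\le C_{\emb}^2\|x-y\|_2^2$ \emph{globally}, not merely infinitesimally. In the translation-invariant case this quantity equals $2\bigl(\Phi(0)-\Phi(x-y)\bigr)$, so writing $z = x-y$ I would invoke Bochner's theorem to represent $\Phi(z)=\int_{\R^d}\cos\langle\omega,z\rangle\,\d\Lambda(\omega)$ with a finite non-negative spectral measure $\Lambda$. The elementary bound $1-\cos t\le \tfrac12 t^2$ gives $\Phi(0)-\Phi(z)\le \tfrac12\,z^\tT\bigl(\int\omega\omega^\tT\,\d\Lambda\bigr)z$, and differentiating the Bochner integral twice identifies $-\nabla^2\Phi(0)=\int\omega\omega^\tT\,\d\Lambda$; hence $2(\Phi(0)-\Phi(z))\le \lambda_{\max}\bigl(-\nabla^2\Phi(0)\bigr)\|z\|_2^2$, which is the hypothesis with $C_{\emb}=\sqrt{\lambda_{\max}(-\nabla^2\Phi(0))}$. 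For a radial kernel $\Phi(x)=\phi(\|x\|_2^2)$, a direct computation of $\partial_{x_k}\partial_{x_j}\Phi$ produces the stated Hessian $\nabla^2\Phi(x)=4\phi''(\|x\|_2^2)xx^\tT+2\phi'(\|x\|_2^2)\id$; at $x=0$ the rank-one term drops out, leaving $\nabla^2\Phi(0)=2\phi'(0)\id$, so $\lambda_{\max}(-\nabla^2\Phi(0))=-2\phi'(0)\ge 0$ by Remark~\ref{rem1}, yielding $C_{\emb}=\sqrt{-2\phi'(0)}$.

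I expect the global bound in the second step to be the main obstacle: a naive Taylor expansion of $\Phi$ at the origin only controls $\Phi(0)-\Phi(z)$ for small $z$, whereas \eqref{eq:grib} requires the quadratic estimate for all $x,y$. Bochner's theorem circumvents this by reducing the estimate to the pointwise inequality $1-\cos t\le \tfrac12 t^2$ under the spectral integral. In the radial case one can alternatively avoid spectral theory entirely: since $\phi$ is completely monotone, it is convex ($\phi''\ge 0$ by Remark~\ref{rem1}), hence lies above its tangent at the origin, $\phi(s)\ge \phi(0)+\phi'(0)s$, and this is exactly $2(\phi(0)-\phi(s))\le -2\phi'(0)s$ with $s=\|x-y\|_2^2$. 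This convexity argument delivers the global bound at once and confirms that $-2\phi'(0)$ is the correct constant.
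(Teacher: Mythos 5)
Your proposal is correct, and it is worth noting that the paper itself does not actually prove this lemma: it imports it from \cite{VG23} (Prop.~2, Cor.~3) and merely adapts the constant, even remarking that the constant $\sqrt{\lambda_{\max}(-\nabla^2\Phi(0))}\,\Phi(0)$ claimed there could not be verified. Your argument therefore supplies a self-contained proof that the paper omits, and it independently confirms that the correct constant is $\sqrt{\lambda_{\max}(-\nabla^2\Phi(0))}$ without the spurious factor $\Phi(0)$. The structure is sound: the coupling step (Bochner integral of $K(x,\cdot)-K(y,\cdot)$ against $\pi$, triangle inequality, the identity $\|K(x,\cdot)-K(y,\cdot)\|_{\H_K}^2=K(x,x)-2K(x,y)+K(y,y)$, then Jensen) is the standard route and is valid since $\sup_x K(x,x)<\infty$ makes the Bochner integral well defined. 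You correctly identify that the crux is the \emph{global} quadratic bound, and both of your mechanisms for it work: Bochner's theorem with $1-\cos t\le\tfrac12 t^2$ in the translation-invariant case (the interchange of $\nabla^2$ with the spectral integral is licensed by $\Phi\in\C^2$, which forces finite second moments of the spectral measure), and, more elementarily, convexity of $\phi$ from Remark~\ref{rem1} giving $\phi(s)\ge\phi(0)+\phi'(0)s$ in the radial case. The Hessian computation and the evaluation $\nabla^2\Phi(0)=2\phi'(0)\id$ are also correct. The only cosmetic caveat is that $W_2$ requires $\mu,\nu\in\P_2(\R^d)$, so the inequality \eqref{eq:grib} should be read on probability measures with finite second moment, which is how the paper uses it.
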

Note that the authors in \cite{VG23} found $C_{\emb} = \sqrt{ \lambda_{\max} (-\nabla^2 \Phi (0)) } \Phi (0)$ instead, which we could not verify.
Now, we prove the local Lipschitz continuity of $D_{f,\nu}^{\lambda}$ with Theorem~\ref{prop:moreau}(\ref{item:MoreauProp2}) and Lemma~\ref{lem:dK_embed}.
\begin{lemma}
    The function $D_{f,\nu}^{\lambda}\colon (\P_2(\R^d),W_2) \to [0,\infty)$
    is locally Lipschitz continuous.
\end{lemma}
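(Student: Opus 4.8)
The plan is to reduce the local Lipschitz continuity on the Wasserstein space to the already-established Lipschitz continuity on the RKHS side, using the comparison estimate from Lemma~\ref{lem:dK_embed}. By Corollary~\ref{lem:Grad}, the functional $D_{f,\nu}^{\lambda}$ is Fréchet differentiable with gradient $\nabla D_{f,\nu}^{\lambda}(\mu) = \hat{p} \in \H_K$, and the map $\nabla D_{f,\nu}^{\lambda}$ is $\tfrac{1}{\lambda}$-Lipschitz with respect to $d_K$. Since $D_{f,\nu}^{\lambda}$ is a Moreau envelope, Theorem~\ref{prop:moreau}(\ref{item:MoreauProp2}) also tells us it is continuous and, being a $C^1$-type object, behaves well along curves. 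The first step is therefore to estimate the difference $|D_{f,\nu}^{\lambda}(\mu) - D_{f,\nu}^{\lambda}(\sigma)|$ for two measures $\mu, \sigma \in \P_2(\R^d)$ in terms of $d_K(\mu, \sigma)$, and then use \eqref{eq:grib} to pass to $W_2(\mu, \sigma)$.

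First I would fix a bounded $W_2$-ball $B \subset \P_2(\R^d)$; on such a ball the kernel mean embeddings $m_\mu$ stay in a bounded subset of $\H_K$, because $\|m_\mu\|_{\H_K} \le \sqrt{\sup_x K(x,x)}$ for any probability measure $\mu$ (the KME is bounded by Assumption~\ref{assumption:bdC0}). In fact the embedding bound is uniform, so no ball restriction is even needed for this part — but local Lipschitzness is all that is claimed, so working on a ball is harmless. The key inequality is the gradient bound from Corollary~\ref{lemma:RegFDivDual}, namely $\|\hat{p}\|_{\H_K} \le \tfrac{2}{\lambda} d_K(\mu, \nu) \le \tfrac{2}{\lambda}(\|m_\mu\|_{\H_K} + \|m_\nu\|_{\H_K})$, which shows $\|\nabla D_{f,\nu}^{\lambda}(\mu)\|_{\H_K}$ is uniformly bounded over all probability measures $\mu$ by a constant $L$ depending only on $\lambda, \nu$, and $\sup_x K(x,x)$.

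Next I would bound the increment along a geodesic. Let $\gamma_t$ be the $W_2$-geodesic from $\mu$ to $\sigma$ given by $\gamma_t = ((1-t)P_1 + t P_2)_\#\hat{\pi}$ with $\hat\pi \in \Gamma^{\opt}(\mu,\sigma)$. Using the chain rule established in \eqref{deriv} together with the identification $\nabla D_{f,\nu}^{\lambda}(\gamma_t) = \hat{p}_t \in \C_0(\R^d)$, the fundamental theorem of calculus gives
\begin{equation}
    D_{f,\nu}^{\lambda}(\sigma) - D_{f,\nu}^{\lambda}(\mu)
    = \int_0^1 \int_{\R^d \times \R^d} \langle \hat{p}_t(x_1), x_2 - x_1 \rangle \d\hat{\pi}(x_1,x_2) \d t.
\end{equation}
Bounding the inner integral by Cauchy--Schwarz and using $\|\nabla \hat p_t\|_\infty \le \|\hat p_t\|_{\C^1}$ controlled via \eqref{2a} in Lemma~\ref{lem:regularity} (which yields $\|\nabla \hat p_t(y)\|_2 \le 2\|\hat p_t\|_{\H_K}\sqrt{\phi''(0)(d+2)}\,\|y\|_2$-type Lipschitz control, hence a uniform Lipschitz constant for $\hat p_t$ on bounded sets) together with the uniform bound $\|\hat p_t\|_{\H_K} \le L$, one obtains $|D_{f,\nu}^{\lambda}(\sigma) - D_{f,\nu}^{\lambda}(\mu)| \le \tilde{L}\, W_2(\mu,\sigma)$ with $\tilde{L}$ depending on the ball. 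The main obstacle I anticipate is making the velocity-field/subdifferential argument rigorous: one must justify that $\hat p_t$ is a valid element of the strong Fréchet subdifferential at each $\gamma_t$ (so that \eqref{eq:subdiff_ineq} applies) and that the resulting integral representation is legitimate, which requires knowing $\hat p_t \in \C^1$ with controlled Lipschitz constant — precisely what Lemma~\ref{lem:regularity} supplies for sufficiently smooth radial kernels. An alternative, cleaner route that avoids the integral representation entirely is to combine the $\tfrac{1}{\lambda}$-Lipschitzness of the gradient in $d_K$ with the mean-value bound $|D_{f,\nu}^{\lambda}(\mu)-D_{f,\nu}^{\lambda}(\sigma)| \le \sup_t \|\hat p_t\|_{\H_K}\, d_K(\mu,\sigma)$ and then apply \eqref{eq:grib} to get $\le L\, C_{\emb}\, W_2(\mu,\sigma)$; I would favour this shorter path, reserving the geodesic integration only if the uniform gradient bound proves insufficient on unbounded balls.
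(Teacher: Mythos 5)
Your preferred shorter route is essentially the paper's proof: the paper likewise reduces everything to the RKHS side, noting that $G_{f,\nu}^{\lambda}$ is continuously Fréchet differentiable by Theorem~\ref{prop:moreau}(\ref{item:MoreauProp2}) and hence locally Lipschitz, that $m\colon(\P_2(\R^d),d_K)\to\H_K$ is an isometry with $D_{f,\nu}^{\lambda}=G_{f,\nu}^{\lambda}\circ m$, and then invokes \eqref{eq:grib} to pass from $d_K$ to $W_2$. Your version is in fact slightly sharper, since the uniform bound $\|\hat p\|_{\H_K}\le\tfrac{2}{\lambda}d_K(\mu,\nu)\le\tfrac{4}{\lambda}\sqrt{\sup_x K(x,x)}$ yields a global Lipschitz constant on $\P_2(\R^d)$; the geodesic-integration route you sketch first is unnecessary for this statement (and note the minor slip that $\hat p_t(x_1)$ is a scalar, so the integrand there should involve $\nabla\hat p_t(x_1)$).
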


\begin{proof}
   By Theorem~\ref{prop:moreau}(\ref{item:MoreauProp2}), 
   we know that $\smash{G_{f,\nu}^{\lambda}}\colon \H_K \to [0,\infty)$ is continuously Fr\'echet differentiable.
   Hence, it is locally Lipschitz continuous.
   Since $m \colon (\P_2(\R^d),d_K) \to \H_K$ is an isometry and $G_{f,\nu}^{\lambda} \circ m = D_{f, \nu}^{\lambda}$,
   the claim follows using \eqref{eq:grib}.
\end{proof}

Next, we show the $M$-convexity of \smash{$D^\lambda_{f,\nu}$} along generalized geodesics.
Our Moreau envelope interpretation simplifies the proof in \cite[Lem.~4]{GAG2021} since we do not need to invoke the implicit function theorem.

\begin{theorem} \label{thm:M}
Let $K(x,y) = \phi( \|x-y\|_2^2)$ with $\phi \in \C^2([0, \infty))$.
Then, $D^\lambda_{f,\nu}\colon \P_2(\R^d) \to [0,\infty)$ is $(-M)$-convex along generalized geodesics with $M \coloneqq \frac{8}{\lambda}\sqrt{(d+2) \phi^{\prime\prime}(0) \phi(0)}$.
\end{theorem}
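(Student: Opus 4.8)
The plan is to read off from Corollary~\ref{lemma:RegFDivDual} that $D^\lambda_{f,\nu}$ is a pointwise maximum, $D^\lambda_{f,\nu}(\mu)=\max_{p} J(\mu,p)$ with $J(\mu,p)\coloneqq \E_\mu[p]-\E_\nu[f^*\circ p]-\tfrac{\lambda}{2}\|p\|_{\H_K}^2$, and to exploit that the only $\mu$-dependence sits in the \emph{linear} term $\E_\mu[p]=\int p\,\d\mu$. Fix $\sigma,\mu_0,\mu_1\in\P_2(\R^d)$ (I rename the endpoints of the generalized geodesic to avoid clashing with the target $\nu$) and let $\boldsymbol{\alpha}\in\Gamma_\sigma(\mu_0,\mu_1)$ induce $\gamma_{\boldsymbol{\alpha},t}=\bigl((1-t)P_2+tP_3\bigr)_\#\boldsymbol{\alpha}$ as in \eqref{ggd}. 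For each fixed $t$, let $\hat p_t\in\H_K$ be the maximizer in \eqref{eq:RegFDivDual} for $\gamma_{\boldsymbol{\alpha},t}$, so $D^\lambda_{f,\nu}(\gamma_{\boldsymbol{\alpha},t})=J(\gamma_{\boldsymbol{\alpha},t},\hat p_t)$.

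Since $\phi\in\C^2([0,\infty))$, Lemma~\ref{lem:regularity} gives $\H_K\hookrightarrow\C^2(\R^d)$ and, by \eqref{2a}, $\nabla\hat p_t$ is Lipschitz with constant $L_t=2\|\hat p_t\|_{\H_K}\sqrt{(d+2)\phi''(0)}$. A function on $\R^d$ with $L_t$-Lipschitz gradient is $(-L_t)$-semiconvex, i.e.\ $q\coloneqq\hat p_t+\tfrac{L_t}{2}\|\cdot\|_2^2$ is convex, so along every segment
\[
\hat p_t\bigl((1-t)y+tz\bigr)\le (1-t)\hat p_t(y)+t\,\hat p_t(z)+\tfrac{L_t}{2}t(1-t)\|y-z\|_2^2 .
\]
Integrating against $\boldsymbol{\alpha}$ and using $(P_2)_\#\boldsymbol{\alpha}=\mu_0$, $(P_3)_\#\boldsymbol{\alpha}=\mu_1$ turns the linear term into $\E_{\gamma_{\boldsymbol{\alpha},t}}[\hat p_t]\le (1-t)\E_{\mu_0}[\hat p_t]+t\,\E_{\mu_1}[\hat p_t]+\tfrac{L_t}{2}t(1-t)W_{\boldsymbol{\alpha}}^2(\mu_0,\mu_1)$. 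Adding the $t$-independent terms $-\E_\nu[f^*\circ\hat p_t]-\tfrac{\lambda}{2}\|\hat p_t\|_{\H_K}^2$ and regrouping via $(1-t)+t=1$, the two endpoint brackets become $J(\mu_0,\hat p_t)\le D^\lambda_{f,\nu}(\mu_0)$ and $J(\mu_1,\hat p_t)\le D^\lambda_{f,\nu}(\mu_1)$, because each is merely an \emph{evaluation} of the max at the feasible potential $\hat p_t$. This yields
\[
D^\lambda_{f,\nu}(\gamma_{\boldsymbol{\alpha},t})\le (1-t)\,D^\lambda_{f,\nu}(\mu_0)+t\,D^\lambda_{f,\nu}(\mu_1)+\tfrac{L_t}{2}t(1-t)W_{\boldsymbol{\alpha}}^2(\mu_0,\mu_1),
\]
which is exactly $(-M)$-convexity along generalized geodesics once $L_t\le M$ uniformly in $t$.

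It remains to bound $\|\hat p_t\|_{\H_K}$ uniformly. Here I would use that $K(x,x)=\phi(0)$, hence $\|K(x,\cdot)\|_{\H_K}=\sqrt{\phi(0)}$ and, by the triangle inequality under the integral in \eqref{kme}, $\|m_\eta\|_{\H_K}\le\sqrt{\phi(0)}$ for every probability measure $\eta$. With $\nu\in\P(\R^d)$ this gives $d_K(\gamma_{\boldsymbol{\alpha},t},\nu)=\|m_{\gamma_{\boldsymbol{\alpha},t}}-m_\nu\|_{\H_K}\le 2\sqrt{\phi(0)}$, and the estimate $\|\hat p_t\|_{\H_K}\le\tfrac{2}{\lambda}d_K(\gamma_{\boldsymbol{\alpha},t},\nu)$ from Corollary~\ref{lemma:RegFDivDual} yields $\|\hat p_t\|_{\H_K}\le\tfrac{4}{\lambda}\sqrt{\phi(0)}$. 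Substituting into $L_t$ produces precisely $L_t\le\tfrac{8}{\lambda}\sqrt{(d+2)\phi''(0)\phi(0)}=M$.

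The main obstacle is obtaining the \emph{uniform} control of $\|\hat p_t\|_{\H_K}$: the generic dual bound only controls it by $d_K(\cdot,\nu)$, which a priori depends on $t$ and could be large, giving mere local semiconvexity. The resolution is the kernel-specific bound $\|m_\eta\|_{\H_K}\le\sqrt{\phi(0)}$ on $\P(\R^d)$, which converts the $t$-dependent Lipschitz constant into the single constant $M$. A secondary technical point is that the argument needs $\hat p_t\in\C^2(\R^d)$, guaranteed by $\phi\in\C^2$ through Lemma~\ref{lem:regularity}; note also that the estimate holds for \emph{every} admissible $\boldsymbol{\alpha}$, so in particular for the one satisfying the optimality constraints in \eqref{ggd}, which is all the definition of convexity along generalized geodesics requires.
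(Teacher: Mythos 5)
Your proof is correct and follows essentially the same route as the paper: both arguments reduce the defect between the generalized geodesic and the convex combination of endpoint values to the integral of $\hat p_t\bigl((1-t)y+tz\bigr)-\bigl((1-t)\hat p_t(y)+t\hat p_t(z)\bigr)$ against $\boldsymbol{\alpha}$, control it via the $(-L_t)$-semiconvexity of $\hat p_t$ coming from the Lipschitz-gradient bound \eqref{2a}, and obtain the uniform constant from $\|\hat p_t\|_{\H_K}\le\frac{2}{\lambda}d_K(\gamma_{\boldsymbol{\alpha},t},\nu)\le\frac{4}{\lambda}\sqrt{\phi(0)}$. The only (cosmetic) difference is that you exploit the max-of-$\mu$-affine-functionals structure of the dual directly, whereas the paper phrases the same estimate through the linear convexity of $D^\lambda_{f,\nu}$ together with the gradient inequality for the Moreau envelope $G^\lambda_{f,\nu}$.
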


\begin{proof}
    Let $\mu_1, \mu_2, \mu_3 \in \P_2(\R^d)$ and $\gamma \colon [0, 1] \to \P_2(\R^d)$, $t \mapsto ( (1 - t) P_2 + t P_3)_{\#} \bm \alpha$ be a generalized geodesic associated to a three-plan $\bm \alpha \in \Gamma_{\mu_1}(\mu_2, \mu_3)$.
    Furthermore, $\tilde{\gamma} \colon [0, 1] \to \P_2(\R^d)$, $t \mapsto (1 - t) \mu_2 + t \mu_3$ denotes the linear interpolation between $\mu_2$ and $\mu_3$.
    Since \smash{$G_{f,\nu}^{\lambda}$} and \smash{$D_{f,\nu}^{\lambda}$} are linearly convex, it holds for $t \in [0, 1]$ that
    \begin{align}
        D^\lambda_{f,\nu}\left(\gamma_t \right) 
        &\le (1-t) D^\lambda_{f,\nu}(\mu_2) + t D^\lambda_{f,\nu}(\mu_3) + D^\lambda_{f,\nu}\left(\gamma_t \right) - D^\lambda_{f,\nu}\left(\tilde \gamma_t\right)\notag\\
        & \le (1-t) D^\lambda_{f,\nu}(\mu_2) + t D^\lambda_{f,\nu}(\mu_3) + \bigl\langle \nabla G_{f,\nu}^\lambda (m_{\gamma_t}), m_{\gamma_t}- m_{\tilde \gamma_t} \bigr\rangle_{\H_K} \label{eq:est_m_conv}.
    \end{align}
    We consider the third summand.
    Let $\hat p_t$ maximize the dual formulation \eqref{eq:MoreauEnvDual} of \smash{$G_{f,\nu}^\lambda (m_{\gamma_t})$}.
    Then, we know by Theorem~\ref{prop:moreau}(\ref{item:MoreauProp2}) that
    $\nabla G_{f,\nu}^\lambda(m_{\gamma_t}) = \hat{p}_t$, so that
    \begin{align}
        &\big\langle \nabla G_{f,\nu}^\lambda(m_{\gamma_t}), m_{\gamma_t}- m_{\tilde \gamma_t}\big\rangle_{\H_K}\notag\\
        = &\int_{\R^d \times \R^d \times \R^d} \hat{p}_t\big( (1 - t) y_2 +t y_3\big) -\bigl( (1 - t) \hat{p}_t(y_2) + t \hat{p}_t(y_3) \bigr)\d \bm \alpha(y_1, y_2, y_3).
    \end{align}
    Due to \eqref{2a}, $\nabla \hat{p}_t$ is Lipschitz continuous with $\Lip(\nabla \hat{p}_t) \le 2 \Vert \hat{p}_t \Vert_{\H_K} \sqrt{\phi^{\prime\prime}(0) (d+2)}$.
    Hence, the descent lemma \cite[Lemma~1.2.3]{N2003} implies that $\hat{p}_t$ is $(-\Lip(\nabla \hat{p}_t))$-convex (in the sense of \cite[Def.~2.4.1]{AGS2008} as a function on the metric space $(\R^d, \| \cdot \|_2)$) and
    \begin{align}
        \bigl\langle \nabla G_{f,\nu}^\lambda(m_{\gamma_t}), m_{\gamma_t-\tilde \gamma_t}\bigr\rangle_{\H_K}
        &\le \frac{t (1 - t)}{2} \Lip(\nabla \hat{p}_t) \int_{\R^d \times \R^d \times \R^d} \| y_2 - y_3 \|_2^2 \d{\bm \alpha}(y_1, y_2, y_3) \notag\\
        &= \frac{1}{2} \Lip(\nabla \hat{p}_t) t (1 - t) W_{\bm \alpha}^2(\mu_2,\mu_3).
    \end{align}
    By \eqref{some_est}, we have $\|\hat p_{t}\|_{\H_K} \le \frac{2}{\lambda}(\|m_{\gamma_t}\|_{\H_K} + \|m_{\nu}\|_{\H_K})$.
    For any $\mu \in \P_2(\R^d)$, it holds
    \begin{equation} 
        \|m_\mu\|_{\H_K}^2
        = \int_{\R^d \times \R^d} K(x,y) \d{\mu}(y) \d{\mu}(x) 
        = \int_{\R^d \times \R^d} \phi(\|x-y\|_2^2) \d{\mu}(y) \d{\mu}(x) 
        \le \phi(0),
    \end{equation} 
    which implies
    \begin{equation}\label{brutal}
        \|\hat p_{t}\|_{\H_K} 
        \le \frac{4}{\lambda} \sqrt{\phi(0)}.
     \end{equation}
     Thus, for all $t \in [0, 1]$, we have 
     $
        \Lip(\nabla \hat{p}_t)
        \le M \coloneqq \frac{8}{\lambda} \sqrt{\phi(0) \phi''(0) (d+2)}$,
    and $D^\lambda_{f,\nu}$ is $-M$-convex along generalized geodesics.
\end{proof}

The next proposition determines the strong subdifferential of $D_{f,\nu}^{\lambda}$.
The result generalizes \cite[Lemma~3]{GAG2021} to arbitrary $f$-divergences satisfying Assumption \ref{eq:Assum3}.
\begin{lemma} \label{lemma:RegFDivDerivative}
    Let $K(x,y) = \phi( \|x-y\|_2^2)$ and $\phi \in \C^2([0, \infty))$.
    Then, it holds for any $\mu \in \P_2(\R^d)$ that $\partial D_{f,\nu}^{\lambda}(\mu) = \{\nabla \hat p\}$, where $\hat p \in \H_{K}$ maximizes \eqref{eq:RegFDivDual}.
\end{lemma}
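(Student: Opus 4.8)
The plan is to prove the two inclusions $\nabla\hat p\in\partial D_{f,\nu}^\lambda(\mu)$ and $\partial D_{f,\nu}^\lambda(\mu)\subseteq\{\nabla\hat p\}$ separately, exploiting throughout that $D_{f,\nu}^\lambda=G_{f,\nu}^\lambda\circ m$ with $G_{f,\nu}^\lambda$ convex and Fréchet differentiable and $\nabla G_{f,\nu}^\lambda(m_\mu)=\hat p$, by Corollary~\ref{lemma:RegFDivDual} and Theorem~\ref{prop:moreau}(\ref{item:MoreauProp2}). For the membership I would start from the \emph{exact, global} lower bound coming from convexity of $G_{f,\nu}^\lambda$:
\[
 D_{f,\nu}^\lambda(\eta)-D_{f,\nu}^\lambda(\mu)
 = G_{f,\nu}^\lambda(m_\eta)-G_{f,\nu}^\lambda(m_\mu)
 \ge \langle\hat p,m_\eta-m_\mu\rangle_{\H_K}
 = \E_{\eta-\mu}[\hat p],
\]
valid for every $\eta\in\P_2(\R^d)$, where the last equality is \eqref{eq:repr}. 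Given any $\pi\in\Gamma(\mu,\eta)$, the marginal conditions let me rewrite $\E_{\eta-\mu}[\hat p]=\int_{\R^d\times\R^d}\bigl(\hat p(x_2)-\hat p(x_1)\bigr)\d\pi(x_1,x_2)$.

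The second step is a first-order Taylor expansion of $\hat p$ with quadratic remainder. Since $\phi\in\C^2([0,\infty))$, Lemma~\ref{lem:regularity} gives $\hat p\in\C^2(\R^d)$ with $\Lip(\nabla\hat p)\le L\coloneqq 2\|\hat p\|_{\H_K}\sqrt{(d+2)\phi''(0)}$ by \eqref{2a}; moreover $|\partial_{y_i}\hat p(y)|=|\langle\hat p,\partial_{y_i}K(\cdot,y)\rangle_{\H_K}|\le\|\hat p\|_{\H_K}\sqrt{-2\phi'(0)}$, so $\nabla\hat p$ is bounded and in particular lies in $L_2(\R^d,\mu;\R^d)$. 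The Lipschitz bound yields $\bigl|\hat p(x_2)-\hat p(x_1)-\langle\nabla\hat p(x_1),x_2-x_1\rangle\bigr|\le\tfrac{L}{2}\|x_2-x_1\|_2^2$, so
\[
 D_{f,\nu}^\lambda(\eta)-D_{f,\nu}^\lambda(\mu)
 \ge \int_{\R^d\times\R^d}\langle\nabla\hat p(x_1),x_2-x_1\rangle\d\pi(x_1,x_2)-\tfrac{L}{2}\,C_2(\pi)^2.
\]
Since $L$ is fixed once $\mu$ (hence $\hat p$) is fixed, $\tfrac{L}{2}C_2(\pi)^2=o(C_2(\pi))$, and this is exactly the defining inequality \eqref{eq:subdiff_ineq}; thus $\nabla\hat p\in\partial D_{f,\nu}^\lambda(\mu)$.

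For uniqueness, let $v\in\partial D_{f,\nu}^\lambda(\mu)$ and fix $\xi\in\C_c^\infty(\R^d;\R^d)$. For small $\epsilon$ the map $\id+\epsilon\xi$ is a diffeomorphism, so $\eta_\epsilon\coloneqq(\id+\epsilon\xi)_\#\mu\in\P_2(\R^d)$; I test \eqref{eq:subdiff_ineq} with $\pi_\epsilon\coloneqq(\id,\id+\epsilon\xi)_\#\mu$, for which $\int\langle v(x_1),x_2-x_1\rangle\d\pi_\epsilon=\epsilon\int_{\R^d}\langle v,\xi\rangle\d\mu$ and $C_2(\pi_\epsilon)=\epsilon\|\xi\|_{L_2(\mu)}$. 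Dividing by $\epsilon>0$, letting $\epsilon\searrow0$, and repeating with $-\xi$ (the two-sided $\epsilon$-derivative existing by the direct computation below) gives $\tfrac{\d}{\d\epsilon}\big|_{\epsilon=0}D_{f,\nu}^\lambda(\eta_\epsilon)=\int_{\R^d}\langle v,\xi\rangle\d\mu$. On the other hand, $m_{\eta_\epsilon}(x)=\int_{\R^d}K(x,y+\epsilon\xi(y))\d\mu(y)$ is differentiable in $\H_K$ with derivative $\dot m=\int_{\R^d}\sum_i\partial_{y_i}K(\cdot,y)\,\xi_i(y)\d\mu(y)$, so the chain rule together with $\langle\hat p,\partial_{y_i}K(\cdot,y)\rangle_{\H_K}=\partial_{y_i}\hat p(y)$ yields $\tfrac{\d}{\d\epsilon}\big|_{\epsilon=0} G_{f,\nu}^\lambda(m_{\eta_\epsilon})=\langle\hat p,\dot m\rangle_{\H_K}=\int_{\R^d}\langle\nabla\hat p,\xi\rangle\d\mu$. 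Equating the two expressions for every $\xi\in\C_c^\infty(\R^d;\R^d)$ forces $v=\nabla\hat p$ in $L_2(\R^d,\mu;\R^d)$, giving $\partial D_{f,\nu}^\lambda(\mu)=\{\nabla\hat p\}$.

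I expect the main obstacle to be the rigorous $\H_K$-differentiability of $\epsilon\mapsto m_{\eta_\epsilon}$ and the interchange of differentiation and integration defining $\dot m$: this needs a dominated-convergence argument in $\H_K$, using $\phi\in\C^2$ and the compact support of $\xi$ to bound difference quotients of $y\mapsto K(\cdot,y)$ in $\H_K$-norm, together with the derivative reproducing identity from the proof of Lemma~\ref{lem:regularity}. The remaining points — that $\eta_\epsilon$ has finite second moment, that the one-sided derivatives coincide, and that $\C_c^\infty$ vector fields are dense in $L_2(\R^d,\mu;\R^d)$ — are routine.
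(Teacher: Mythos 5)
Your first half (showing $\nabla \hat p \in \partial D_{f,\nu}^{\lambda}(\mu)$) is essentially identical to the paper's: convexity of $G_{f,\nu}^{\lambda}$ gives the exact lower bound $\E_{\eta-\mu}[\hat p]$, which is rewritten through the coupling and then linearized via the Lipschitz bound on $\nabla \hat p$ from Lemma~\ref{lem:regularity}; the only cosmetic difference is that you keep the constant $L$ in terms of $\|\hat p\|_{\H_K}$ rather than inserting the bound \eqref{brutal}. Your uniqueness argument, however, takes a genuinely different route. The paper perturbs $\mu$ in the single direction $\nabla\hat p - v$ via $A_t = \id - t(\nabla\hat p - v)$ and shows, using only the \emph{one-sided} first-order Taylor estimate for $G_{f,\nu}^{\lambda}$ together with $d_K(\mu_t,\mu)\le C_{\emb}\,C_2(\pi_t)$ from Lemma~\ref{lem:dK_embed}, that the subgradient inequality for $v$ forces $C_2(\pi_1)=\|\nabla\hat p - v\|_{L_2(\mu)}=0$. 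You instead test against all smooth compactly supported vector fields $\xi$, extract a two-sided derivative of $\epsilon\mapsto D_{f,\nu}^{\lambda}\bigl((\id+\epsilon\xi)_{\#}\mu\bigr)$, identify it once with $\int\langle v,\xi\rangle\d\mu$ (from the subdifferential inequality applied to $\pm\xi$) and once with $\int\langle\nabla\hat p,\xi\rangle\d\mu$ (from the chain rule), and conclude by density. Both arguments are correct. What the paper's approach buys is that it never has to differentiate $\epsilon\mapsto m_{\eta_\epsilon}$ as an $\H_K$-valued curve — it only needs the Fréchet remainder $T(\mu_t,\mu)=o(\|m_{\mu_t-\mu}\|_{\H_K})$ — and it works directly with the possibly non-smooth direction $\nabla\hat p - v\in L_2(\mu)$. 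Your approach pays for the extra Bochner-integral differentiation and the density step, but the payoff is a cleaner "weak identification" of the subgradient that does not require the embedding constant $C_{\emb}$; the dominated-convergence argument you flag does go through, since \eqref{lipa} makes $y\mapsto\partial_{y_i}K(\cdot,y)$ Lipschitz in $\H_K$ and $\|\partial_{y_i}K(\cdot,y)\|_{\H_K}^2=-2\phi'(0)$ bounds the difference quotients uniformly on the support of $\xi$.
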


\begin{proof}
    First, we show that $\nabla \hat p \in \partial \smash{D_{f,\nu}^{\lambda}}(\mu)$.
    By convexity of \smash{$G_{f,\nu}^{\lambda}$} and Lemma~\ref{lem:Grad}, we obtain for any $\eta \in \P_2(\R^d)$ and any $\pi \in \Gamma(\mu,\eta)$ that
    \begin{align} \label{eq:ineq_nablag}
    \begin{aligned}
        D_{f,\nu}^{\lambda}(\eta) - D_{f,\nu}^{\lambda}(\mu) 
        &= G_{f,\nu}^{\lambda}(m_\eta) - G_{f,\nu}^{\lambda}(m_\mu)
        \ge \langle \nabla G_{f,\nu}^{\lambda}(m_\mu), m_\eta - m_\mu\rangle_{\H_K}  \\
        &= \langle \hat p, m_\eta - m_\mu\rangle_{\H_K} 
        = \int_{\R^d} \hat p(x_2) \d \eta(x_2) - \int_{\R^d} \hat p(x_1) \d \mu(x_1) \\
        &= \int_{\R^d \times \R^d} \hat p(x_2) - \hat p(x_1) \d \pi(x_1,x_2).
    \end{aligned}
    \end{align}
    Combining \eqref{2a} and \eqref{brutal}, we get that $\hat p \in \H_K$ has a Lipschitz continuous gradient with Lipschitz constant $L \coloneqq \frac{8}{\lambda} \sqrt{\phi(0) \phi^{\prime\prime} (0) (d+2)}$.
    Hence, we obtain by the descent lemma \cite[Lemma~1.2.3]{N2003} that
    \begin{equation} \label{eq:descentlemma}
        \begin{aligned}
        D_{f,\nu}^{\lambda}(\eta) - D_{f,\nu}^{\lambda}(\mu) 
        &\ge 
        \int_{\R^d \times \R^d} \langle \nabla \hat p (x_1), x_2 - x_1\rangle - \frac{L}{2} \Vert x_2 - x_1\Vert_2^2 \d \pi(x_1,x_2) \\
        & = 
        \int_{\R^d \times \R^d} \langle \nabla \hat p (x_1), x_2 - x_1\rangle \d \pi(x_1,x_2) - \frac{L}{2} C_2^2(\pi).
        \end{aligned}
    \end{equation}
    Next, we show that $\nabla \hat p$ is the only subgradient.
 Suppose that $v\colon \R^d\to {\R^d}$ with ${\|\cdot\|_2\circ v} \in L^2({\R^d},\mu)$ is another subgradient. 
We define the perturbation map $A_t(x)\coloneqq x-t(\nabla \hat p(x)-v(x))$ with $t>0$, the perturbed measures $ \mu_t\coloneqq (A_t)_\# \mu$, and the induced plan $\pi_t\coloneqq (\id,A_t)_\# \mu\in \Gamma(\mu,\mu_t)$.
For $\pi_t$ it holds
\begin{equation} \label{eq:CpitvsCpi1}
    C_2(\pi_t)^2
    = \!\int_{\R^d} \!\|x-A_t(x)\|_2^2 \d\mu(x) 
    = t^2 \!\!\int_{\R^d} \!\|\nabla \hat p(x)-v(x)\|_2^2\d\mu(x)
    = t^2C_2(\pi_1)^2.
\end{equation}
Hence, $t\searrow 0$ implies $C_2(\pi_t)\to 0$. 
Since $G_{f,\nu}^\lambda$ is Fr\'echet differentiable, $T(\eta,\mu) \coloneqq D_{f,\nu}^{\lambda}(\eta)-D_{f,\nu}^{\lambda}(\mu) - \langle \nabla G_{f,\nu}^{\lambda}(m_{\mu}), m_{\eta}-m_{\mu}\rangle_{\H_K}$ satisfies
\begin{align*}
    \lim_{\|m_{\eta-\mu}\|_{\H_K}\to 0} \frac{T(\eta,\mu)}{\|m_{\eta-\mu}\|_{\H_K}}=0.
\end{align*}
Using a similar reasoning as for \eqref{eq:descentlemma}, we get
\begin{align}
    &\quad D_{f,\nu}^\lambda(\mu_t)-D_{f,\nu}^\lambda (\mu)-\int_{\R^d \times \R^d}\langle v(x_1), x_2-x_1 \rangle \d\pi_t(x_1,x_2)\notag\\
    & = D_{f,\nu}^\lambda(\mu_t)-D_{f,\nu}^\lambda(\mu)-\int_{\R^d \times \R^d}\bigl\langle \nabla \hat p(x_1)-(\nabla \hat p(x_1)-v(x_1)), x_2-x_1 \bigr\rangle \d\pi_t(x_1,x_2)\notag\\
    & \le T(\mu_t,\mu) + \frac{1}{2} LC_2(\pi_t)^2+\int_{\R^d \times \R^d}\langle \nabla \hat p(x_1)-v(x_1), x_2-x_1 \rangle \d\pi_t(x_1,x_2)\notag\\
    &= T(\mu_t,\mu) + \frac{1}{2} LC_2(\pi_t)^2 - t \int_{\R^d}\| \nabla \hat p(x)-v(x)\|_2^2\d\mu(x) \notag\\
    &=T(\mu_t,\mu) + \frac{1}{2} L C_2(\pi_t)^2- t C_2(\pi_1)^2.\label{eq:EstSub}
\end{align}
By \eqref{eq:grib} we have
\begin{equation}\label{eq:UpBoundDK}
    d_K(\mu_t,\mu)
    \le C_{\emb} W_2(\mu_t,\mu)
    \le C_{\emb}C_2(\pi_t).
\end{equation}
Using \eqref{eq:CpitvsCpi1} and \eqref{eq:UpBoundDK}, we further estimate \eqref{eq:EstSub} as
\begin{align*}
    &\quad \frac{1}{C_2(\pi_t)}\left( D_{f,\nu}^\lambda(\mu_t)-D_{f,\nu}^\lambda (\mu)-\int_{\R^d \times \R^d}\langle v(x_1), x_2-x_1 \rangle \d\pi_t(x_1,x_2)\right)\\&
    \le \frac{T(\mu_t,\mu) + \frac{1}{2} L C_2(\pi_t)^2 - tC_2(\pi_1)^2}{C_2(\pi_t)}\\
    & \le \frac{T(\mu_t,\mu)}{d_K(\mu_t,\mu)}\cdot C_{\emb} + \frac{1}{2} LC_2(\pi_t)-C_2(\pi_1)
    \xrightarrow{t\searrow 0}
    -C_2(\pi_1).
\end{align*}
Since $v\in \partial D_{f,\nu}^\lambda(\mu)$, we must have $C_2(\pi_1)=0$.
Hence, $\pi_1 \in \Gamma(\mu, \mu_1)$ implies that $\mu_1=\mu$.
Due to $\mu_1=(A_1)_\#\mu$, we must have $\nabla \hat p=v$ $\mu$-a.e.
\end{proof}

Based on Theorem~\ref{ambrosio} and Lemma \ref{lemma:RegFDivDerivative}, we have the following result.

\begin{corollary}
    Let $K(x,y) = \phi( \|x-y\|_2^2)$ 
    be a radial kernel
    and 
    $\phi \in \C^2([0, \infty))$.
    For any $\gamma_t \in \P_2(\R^d)$, let $\hat p_{\gamma_t}$ 
    denote the maximizer in the dual formulation \eqref{eq:RegFDivDual} of 
    $D^\lambda_{f,\nu}(\gamma_t)$.
    Then, for any $\mu_0 \in \P_2(\R^d)$, the equation 
    \begin{equation}\label{eq:WassersteinParticle}
        \partial_t \gamma_t - \div(\gamma_t \nabla \hat p_{\gamma_t}) = 0.
    \end{equation}
    has a unique weak solution $\gamma_t$ fulfilling $\gamma(0+) = \mu_0$, where $\gamma(0+) \coloneqq \lim_{t \searrow 0} \gamma_t$.
\end{corollary}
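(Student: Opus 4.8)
The plan is to apply Theorem~\ref{ambrosio} to $\F \coloneqq D^\lambda_{f,\nu}$ in order to obtain a unique Wasserstein gradient flow of $\F$ starting in $\mu_0$, and then to identify this flow with the weak solution of \eqref{eq:WassersteinParticle} by reading off its velocity field from the subdifferential.

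First I would verify the hypotheses of Theorem~\ref{ambrosio}. Properness is clear since $D^\lambda_{f,\nu}(\mu) < \infty$ for all $\mu \in \P_2(\R^d)$ with $D^\lambda_{f,\nu}(\nu) = 0$, and lower semicontinuity follows from the local Lipschitz continuity of $D^\lambda_{f,\nu}$ on $(\P_2(\R^d), W_2)$ established above. As $D^\lambda_{f,\nu} \ge 0$ is bounded from below, the remark following Theorem~\ref{ambrosio} yields coercivity, and Theorem~\ref{thm:M} supplies $(-M)$-convexity along generalized geodesics with $M = \tfrac{8}{\lambda}\sqrt{(d+2)\phi''(0)\phi(0)}$. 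Thus Theorem~\ref{ambrosio} produces a unique Wasserstein gradient flow $\gamma$ of $\F$ with $\gamma(0+) = \mu_0$.

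Next I would determine its velocity field. By definition, $\gamma$ is locally $2$-absolutely continuous with velocity $v_t$ in the regular tangent space $\textrm{T}_{\gamma_t}\P_2(\R^d)$ satisfying $v_t \in -\partial\F(\gamma_t)$ for a.e.\ $t>0$. Lemma~\ref{lemma:RegFDivDerivative} gives $\partial D^\lambda_{f,\nu}(\gamma_t) = \{\nabla \hat p_{\gamma_t}\}$. Since this subdifferential is a singleton, its unique element is simultaneously the element of minimal $L_2(\gamma_t)$-norm, which always lies in the regular tangent space \cite{AGS2008}; hence $\nabla \hat p_{\gamma_t} \in \textrm{T}_{\gamma_t}\P_2(\R^d)$ and $v_t = -\nabla \hat p_{\gamma_t}$. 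Inserting this into the weak continuity equation $\partial_t \gamma_t + \div(v_t \gamma_t) = 0$ gives exactly \eqref{eq:WassersteinParticle}.

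For uniqueness I would show, conversely, that every weak solution of \eqref{eq:WassersteinParticle} is a Wasserstein gradient flow of $\F$: its velocity is $-\nabla \hat p_{\gamma_t} \in -\partial\F(\gamma_t)$ by Lemma~\ref{lemma:RegFDivDerivative}, while the Lipschitz bound on $\nabla \hat p$ from \eqref{2a} together with \eqref{brutal} ensures the local $2$-absolute continuity required by the definition. Uniqueness of the weak solution then follows from the uniqueness assertion of Theorem~\ref{ambrosio}. The step I expect to be most delicate is the reconciliation of the strong Fr\'echet subdifferential computed in Lemma~\ref{lemma:RegFDivDerivative} with the reduced (tangent) subdifferential entering the definition \eqref{wgf}: one must guarantee that the single subdifferential element $\nabla \hat p_{\gamma_t}$ genuinely belongs to the regular tangent space, so that it is an admissible velocity. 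This is precisely where the singleton property is indispensable, as the minimal-norm subdifferential element is automatically tangential.
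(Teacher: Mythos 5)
Your proposal is correct and follows exactly the route the paper takes: the paper justifies this corollary in one line as a combination of Theorem~\ref{ambrosio} (whose hypotheses are supplied by Theorem~\ref{thm:M} and the remark on coercivity) with the singleton subdifferential from Lemma~\ref{lemma:RegFDivDerivative}. Your additional care about reconciling the strong Fr\'echet subdifferential with the reduced one entering \eqref{wgf} matches the paper's own remark in Subsection~\ref{subsec:wgfPrelims} and is a welcome elaboration rather than a deviation.
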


\paragraph{Wasserstein Gradient Flows for Empirical Measures.}
Finally, we investigate the Wasserstein gradient flow \eqref{eq:WassersteinParticle} starting in an empirical measure 
\begin{equation} \label{startemp}
\mu_0 \coloneqq \frac{1}{N}\sum_{i=1}^N  \delta_{x_{i}^{(0)}}.
\end{equation}
To solve \eqref{eq:WassersteinParticle} numerically, we consider a \textit{particle functional} $D_N \colon \R^{d N} \to [0, \infty)$ of $D_{f,\nu}^{\lambda}$ defined for 
$x \coloneqq (x_1, \ldots,x_N)$ as 
\begin{equation}
  D_{N}(x) \coloneqq D_{f, \nu}^{\lambda}\biggl(\frac{1}{N} \sum_{k = 1}^{N} \delta_{x_k}\biggr),
\end{equation}
and consider the particle flow $x\colon [0,\infty) \to \R^{dN}$ with
\begin{equation}\label{eq:ParticleFlow}
    \dot x(t) = - N \nabla D_{N}\big(x(t)\big), \quad x(0) \coloneqq \left(x_1(0),\ldots,x_N(0)\right).
\end{equation}
In general, solutions of \eqref{eq:ParticleFlow} differ from those of \eqref{eq:WassersteinParticle}.
However, for our setting, we can show that the flow \eqref{eq:ParticleFlow} induces a Wasserstein gradient flow.

\begin{proposition} \label{flow}
    Assume that $K(x,y) = \phi( \|x-y\|_2^2)$ and $\phi \in \C^2([0, \infty))$.
    Let $x(t) = (x_{i}(t))_{i = 1}^{N} \subset \R^{d N}$, $t\in [0,\infty)$, be a solution of \eqref{eq:ParticleFlow}.
    Then, the corresponding curve of empirical measures $\gamma_N \colon [0, \infty) \to \P_2(\R^d)$ given by
    \begin{equation}
        \gamma_N(t) = \frac{1}{N} \sum_{i = 1}^{N} \delta_{x_{i}(t)}
    \end{equation}
    is a Wasserstein gradient flow of $D_{f, \nu}^\lambda$ starting in $\mu_0$.
\end{proposition}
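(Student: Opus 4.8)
The plan is to verify directly that $\gamma_N$ meets the definition of a Wasserstein gradient flow of $D_{f,\nu}^{\lambda}$ given in \eqref{wgf}: namely, that $\gamma_N$ is locally $2$-absolutely continuous with a velocity field $v_t$ lying in the regular tangent space and satisfying $v_t \in -\partial D_{f,\nu}^{\lambda}(\gamma_N(t))$ for a.e.\ $t$. The first step is to make the particle ODE \eqref{eq:ParticleFlow} explicit. Writing $\gamma_N(t) = \frac1N \sum_k \delta_{x_k(t)}$, we have $m_{\gamma_N(t)} = \frac1N\sum_k K(x_k(t),\cdot)$, so that $D_N = G_{f,\nu}^{\lambda} \circ M$ with the smooth map $M\colon \R^{dN} \to \H_K$, $x \mapsto \frac1N\sum_k K(x_k,\cdot)$. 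Applying the chain rule together with the identity $\nabla G_{f,\nu}^{\lambda}(m_\mu) = \hat p$ from Corollary~\ref{lem:Grad} and differentiating the reproducing property exactly as in the proof of Lemma~\ref{lem:regularity}, I obtain
\begin{equation}
    N\,\partial_{(x_i)_j} D_N(x) = \bigl\langle \hat p,\, \partial_{(x_i)_j} K(x_i,\cdot)\bigr\rangle_{\H_K} = \partial_{(x_i)_j}\hat p(x_i),
\end{equation}
where $\hat p = \hat p_{\gamma_N(t)}$ is the maximizer in \eqref{eq:RegFDivDual}. Thus $N\,\nabla_{x_i} D_N(x) = \nabla\hat p(x_i)$, and \eqref{eq:ParticleFlow} reduces to $\dot x_i(t) = -\nabla\hat p_{\gamma_N(t)}(x_i(t))$ for each $i$.

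This suggests the velocity field $v_t \coloneqq -\nabla\hat p_{\gamma_N(t)}$, which is the natural candidate because $\partial D_{f,\nu}^{\lambda}(\gamma_N(t)) = \{\nabla\hat p_{\gamma_N(t)}\}$ by Lemma~\ref{lemma:RegFDivDerivative}, so the subgradient inclusion $v_t \in -\partial D_{f,\nu}^{\lambda}(\gamma_N(t))$ holds automatically. To see that $\gamma_N$ is locally $2$-absolutely continuous with this field, I would first check the weak continuity equation: for $\varphi \in \C_c^\infty((0,\infty)\times\R^d)$, inserting $\gamma_N(t)$ collapses the spatial integral to the finite sum $\frac1N\sum_i\bigl[\partial_t\varphi(t,x_i(t)) + \langle\nabla_x\varphi(t,x_i(t)), v_t(x_i(t))\rangle\bigr]$, which by the chain rule equals $\frac1N\sum_i \frac{\d}{\d t}\varphi(t,x_i(t))$; integrating in $t$ gives zero since $\varphi$ is compactly supported in time. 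The integrability requirement $(t\mapsto \|v_t\|_{L^2(\R^d,\gamma_N(t))}) \in L^2_{\loc}(0,\infty)$ follows from the uniform bound $\|\hat p_{\gamma_N(t)}\|_{\H_K} \le \frac4\lambda\sqrt{\phi(0)}$ in \eqref{brutal} combined with $\|\partial_{y_i}K(\cdot,y)\|_{\H_K}^2 = -2\phi'(0)$ from Lemma~\ref{lem:regularity}, which together bound $\|\nabla\hat p_{\gamma_N(t)}\|_\infty$ uniformly in $t$; continuity of $t\mapsto \hat p_{\gamma_N(t)}$ in $\H_K$ (via the $\tfrac1\lambda$-Lipschitz gradient of Corollary~\ref{lem:Grad}) also yields the required Borel measurability of $(x,t)\mapsto v_t(x)$.

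It remains to confirm that $v_t$ lies in the regular tangent space $\tT_{\gamma_N(t)}\P_2(\R^d)$. Since $\hat p \in \H_K \hookrightarrow \C^2(\R^d)$ by Lemma~\ref{lem:regularity} and $\gamma_N(t)$ is supported on the finitely many atoms $x_1(t),\ldots,x_N(t)$, I can choose $\phi \in \C_c^\infty(\R^d)$ whose gradient agrees with that of $\hat p$ at these atoms, so that $v_t = -\nabla\phi$ as elements of $L^2(\R^d,\gamma_N(t);\R^d)$ and hence $v_t \in \tT_{\gamma_N(t)}\P_2(\R^d)$. Together with the initial condition $\gamma_N(0) = \frac1N\sum_i \delta_{x_i(0)} = \mu_0$ from \eqref{startemp}, all requirements of \eqref{wgf} are satisfied, so $\gamma_N$ is a Wasserstein gradient flow of $D_{f,\nu}^{\lambda}$; by Theorem~\ref{ambrosio} it is moreover the unique one. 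I expect the main obstacle to be the gradient identity of the first step — rigorously interchanging the finite-dimensional differentiation in $x_i$ with the Fréchet derivative of the Moreau envelope $G_{f,\nu}^{\lambda}$ — together with the tangent-space verification, which, although routine for atomic measures, must be phrased so that the singleton subdifferential of Lemma~\ref{lemma:RegFDivDerivative} genuinely acts as the flow's velocity.
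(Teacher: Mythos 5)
Your proposal is correct and follows essentially the same route as the paper: the key gradient identity $N\nabla_{x_i} D_N(x)=\nabla\hat p_{\gamma_N}(x_i)$ via the chain rule, Corollary~\ref{lem:Grad} and the derivative reproducing property, followed by the verification of the continuity equation by collapsing the test-function integral to a finite sum. Your additional checks (the $L^2_{\loc}$ integrability of the velocity field via \eqref{brutal} and Lemma~\ref{lem:regularity}, and the membership of $v_t$ in the regular tangent space by matching $\nabla\hat p$ at the finitely many atoms) are correct and in fact supply details the paper's proof leaves implicit.
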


\begin{proof}
    For the vector-valued measure $M_N(t) \coloneqq \frac{1}{N} \sum_{i = 1}^{N} \dot x_{i}(t) \delta_{x_{i}(t)}
    = -\sum_{i = 1}^{N} \nabla_{x_i} D_N(x)\delta_{x_{i}(t)}
    $ and $\varphi \in C_c^\infty((0,\infty)\times \R^d)$ it holds that
    \begin{align}
    &\int_0^\infty \biggl(\int_{\R^d} \partial_t \varphi(t,x) \d \gamma_N(t) + \int_{\R^d}  \nabla_x\varphi(t,x) \d M_N(t) \biggr)\d t\notag\\
    = & \frac{1}{N}\sum_{i=1}^N \int_0^\infty \bigl(\partial_t \varphi(t,x_i(t))  + \nabla_x\varphi(t,x_i(t)) \cdot \dot x_{i}(t) \bigr)\d t\notag\\
    =& \frac{1}{N} \sum_{i=1}^N \int_0^\infty \frac{\d}{\d t} \varphi(t,x_i(t)) \d t = 0.
    \end{align}
    Hence, $\partial_t \gamma_N + \div(M_N) = 0$ holds in weak sense.
    To show $M_N(t) = -\nabla \hat p_{\gamma_N(t)}\gamma_N(t)$, we first note that 
    $D_N(x) = \smash{G_{f,\nu}^{\lambda}}(m_{\gamma_N})$ and $m_{\gamma_N} = \frac{1}{N}\sum_{i=1}^N K(\cdot,x_i)$.
    For the second term, we obtain by Lemma~\ref{lem:regularity} that $\nabla_{x_i} m_{\gamma_N} = \frac{1}{N} \nabla_{x_i} K(\cdot,x_i)$.
    Next, we apply the chain rule, Corollary~\ref{lem:Grad} and the derivative reproducing property of $\partial_j K(\cdot,x_i)$ to get
    \begin{equation}
        N\nabla_{x_i} D_{N}(x) = \Bigl(\bigl \langle \hat p_{\gamma_N}, \partial_j K(\cdot,x_{i}) \bigr \rangle_{\H_K}\Bigr)_{j=1}^d
        = 
        \nabla \hat p_{\gamma_N}(x_i).
    \end{equation}
Consequently, we obtain $\partial_t \gamma_N - \div(\gamma_N \nabla \hat p_{\gamma_N}) = 0$ as required.
\end{proof}

\begin{remark}[Consistent discretization]
    An advantage of the regularized $f$-divergences \smash{$D_{f, \nu}^{\lambda}$}, $\nu\in \P_2(\R^d)$, is that $D_{f, \nu}^{\lambda}(\mu) < \infty$ for any $\mu \in \P_2(\R^d)$ (even if $D_{f, \nu}(\mu) = \infty$).
    This is important when approximating the Wasserstein gradient flow of the functional $D_{f, \nu}$ starting in  $\mu_0 \in \dom(D_{f, \nu})$ numerically with gradient flows starting in empirical measures $\mu_{0,N} = \frac{1}{N}\sum_{i=1}^N \delta_{x_{i}(0)}$.
    Since $\mu_{0,N}$ might not be in $\dom(D_{f, \nu})$, we choose $(\lambda_N)_{N \in \N}$ with $\lim_{N \to \infty} \lambda_N=0$ such that $\sup_{N \in \N} \smash{D_{f, \nu}^{\lambda_N}}(\mu_{0,N}) < \infty$.
    Then, as shown in Corollary~\ref{lemma:lambdaToZero}(\ref{item:lambdatoZero1}), the functionals \smash{$D_{f, \nu}^{\lambda_N}$} Mosco converge to $D_{f, \nu}$.
    If we have a uniform lower bound on their convexity-modulus along generalized geodesics, then \cite[Thm.~11.2.1]{AGS2008} implies that the Wasserstein gradient flows of $D_{f, \nu}^{\lambda_N}$ starting in $\mu_{0,N}$ converge locally uniformly in $[0,\infty)$ to the flow of $D_{f, \nu}$ starting in $\mu_{0}$.
    Whether such a lower bound exists is so far an open question.
    In Theorem~\ref{thm:M}, we only established a lower bound on the weak convexity modulus of \smash{$D_{f, \nu}^{\lambda_N}$} which scales as $1/\lambda_N$.
    A similar convergence result was recently established in \cite{LMSS2020} for regularization with the Wasserstein distance $W_2$ instead of the MMD $d_K$.
    However,  their proof cannot be directly extended to our setting.
\end{remark}

\section{Computation of Flows for Empirical Measures} \label{sec:discr}
In this section, we are interested in the Wasserstein gradient flows $\gamma$ of \smash{$D_{f,\nu}^\lambda$} with initial measure \eqref{startemp} and target measure $\nu = \frac{1}{M} \sum_{j = 1}^{M} \delta_{y_j}$.
For the KL divergence, such flows were considered in 
\cite{GAG2021} under the name KALE flows.
The Euler forward discretization of the Wasserstein gradient flow \eqref{eq:ParticleFlow}
 with step size $\tau > 0$ 
 is given by the sequence $(\gamma_n)_{n \in \N} \subset \P_2(\R^d)$ defined by 
\begin{equation}
    \gamma_{n + 1}
    \coloneqq (\id - \tau \nabla \hat p_n)_{\#} \gamma_n, 
\end{equation}
where $\hat p_n$ maximizes the dual formulation \eqref{eq:RegFDivDual} of $D_{f, \nu}^{\lambda}(\gamma_n)$.
Since the push-forward of an empirical measure by a measurable map is again an empirical measure with the same number of particles,  we have that
\begin{equation}
    \gamma_{n}
    = \frac{1}{N} \sum_{i = 1}^{N} \delta_{x_i^{(n)}}, 
\end{equation}
where
\begin{equation} \label{compute}
    x_i^{(n + 1)}
    = x_i^{(n)} - \tau \nabla \hat p_n(x_i^{(n)}), \qquad  i = 1, \ldots, N.
\end{equation}
Since both $\gamma_n$ and $\nu$ are empirical measures, the dual problem \eqref{eq:RegFDivDual} for $\hat p_n$ becomes
\begin{equation}\label{dual_d}
   \hat p_n
    = \argmax_{\substack{p \in \H_K \\ p(\R^d) \subseteq \overline{\dom(f^*)}}} 
    \biggl\{
    \frac1N \sum_{i=1}^N p( x_i^{(n)} )
    - 
    \frac{1}{M} \sum_{j=1}^{M} 
    f^*\left( p(y_j) \right)
    - 
    \frac{\lambda}{2} \| p \|_{\H_K}^2
    \biggr \}.
\end{equation}

\subsection{Discretization Based on the Representer Theorem}
If $f_{\infty}' = \infty$ or $\text{supp}(\nu) = \R^d$, then the constraint $p(\R^d) \subset \dom(f^*)$ in \eqref{dual_d} becomes superfluous. By the representer theorem \cite{SHS2001}, the solution of \eqref{dual_d} is of the form
\begin{equation}\label{eq:RepSol}
    \hat p_n = \sum_{k = 1}^{M+N} b_k^{(n)} K(\cdot, z_k^{(n)}),
    \qquad b^{(n)} \coloneqq (b_k^{(n)})_{k = 1}^{M + N} \in \R^{M + N},
\end{equation} 
where
\begin{equation}
    (z_1^{(n)}, \ldots, z_{N+M}^{(n)})
    = \big(y_1, \ldots, y_{M}, x_1^{(n)}, \ldots, x_{N}^{(n)}\big) \in \R^{d \times (M+N)}.  
\end{equation}
We denote the associated kernel matrix by $K^{(n)} \coloneqq ( K(z_i^{(n)}, z_j^{(n)}))_{i, j = 1}^{M + N}$.

If $f'_{\infty} < \infty$, then we cannot drop the constraint $p(\R^d) \subset \overline{\dom(f^*)}$ in \eqref{eq:RegFDivDual} to apply the representer theorem.
The situation is similar for the primal problem \eqref{eq:RegFDiv}, where the condition $f'_{\infty} < \infty$ entails that we have to consider measures $\mu$ that are not absolutely continuous with respect to the (empirical) target $\nu$. 
In the following, we provide a condition under which computing $\hat{p}_n$ for \eqref{dual_d} reduces to a tractable finite-dimensional problem.
To simplify the notation, we omit the index $n$ and use the shorthand $L(p) = \E_{\mu}[p] - \E_{\nu}[f^* \circ p] - \frac{\lambda}{2} \| p \|_{\H_K}^2$ for the objective in \eqref{eq:RegFDivDual}.
\begin{lemma} \label{lemma:reprThmForFiniteRecConst}
    Let $f$ be an entropy function with $f'_\infty \in (0, \infty)$.
    If $\mu,\nu\in \M_+(\R^d)$ and $\lambda > 2d_K(\mu,\nu)\sqrt{\phi(0)}/{f'_\infty}$, then 
    \begin{equation}
        \argmax_{\substack{p\in \H_k \\ p(\R^d)\subseteq \overline{\supp(f^*)}}} L(p) = \argmax_{ \|p\|_{\H_K}<\frac{f'_\infty}{\sqrt{\phi(0)}} } L(p). 
    \end{equation}
    If $\mu,\nu$ are empirical measures with $Z=\supp(\mu)\cup \supp(\nu)$, then the solution $\hat p$ of the discrete problem \eqref{dual_d} has a finite representation of the form \eqref{eq:RepSol}.
\end{lemma}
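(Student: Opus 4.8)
The plan is to exploit the norm bound on the dual maximizer from Corollary~\ref{lemma:RegFDivDual}: under the assumption on $\lambda$, the maximizer of \eqref{eq:RegFDivDual} already lies in the open ball $B \coloneqq \{p \in \H_K : \|p\|_{\H_K} < f_\infty'/\sqrt{\phi(0)}\}$, and on this ball the pointwise range constraint $p(\R^d) \subseteq \overline{\dom(f^*)}$ holds automatically. Trading the range constraint for this plain norm constraint is precisely what makes the representer theorem applicable.

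First I would recall from Corollary~\ref{lemma:RegFDivDual} that the maximizer $\hat p$ of \eqref{eq:RegFDivDual} satisfies $\|\hat p\|_{\H_K} \le \frac{2}{\lambda} d_K(\mu, \nu)$. The hypothesis $\lambda > 2 d_K(\mu,\nu)\sqrt{\phi(0)}/f_\infty'$ then gives the strict bound $\|\hat p\|_{\H_K} < f_\infty'/\sqrt{\phi(0)}$, so $\hat p \in B$. Next, for any $p \in B$ the reproducing property together with $K(x,x) = \phi(0)$ yields
\[
|p(x)| = \bigl|\langle p, K(x,\cdot)\rangle_{\H_K}\bigr| \le \|p\|_{\H_K}\sqrt{K(x,x)} = \|p\|_{\H_K}\sqrt{\phi(0)} < f_\infty'
\]
for every $x \in \R^d$. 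Since $\intt(\dom(f^*)) = (-\infty, f_\infty')$, this shows $p(\R^d) \subseteq (-f_\infty', f_\infty') \subseteq \overline{\dom(f^*)}$, i.e.\ $B$ is contained in the feasible set of \eqref{eq:RegFDivDual}. As $L$ is strictly concave (the penalty $-\frac{\lambda}{2}\|\cdot\|_{\H_K}^2$ is strictly concave and the remaining terms are concave), its maximizer over the convex feasible set is unique and equals $\hat p$; since $\hat p \in B$ and $B$ lies inside the feasible set, $\hat p$ is a fortiori the unique maximizer of $L$ over $B$. This establishes the asserted equality of the two $\argmax$ problems.

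For the second claim I would pass to the equivalent ball-constrained problem for empirical $\mu, \nu$. With $Z = \supp(\mu)\cup\supp(\nu) = \{z_1,\dots,z_{M+N}\}$, the objective $L$ depends on $p$ only through the evaluations $p(z_1),\dots,p(z_{M+N})$ and through $\|p\|_{\H_K}^2$. Splitting $\H_K = V \oplus V^\perp$ with $V = \operatorname{span}\{K(\cdot,z_k)\}_{k}$ and writing $\hat p = \hat p_V + \hat p_\perp$, the reproducing property gives $\hat p(z_k) = \hat p_V(z_k)$ for all $k$, while $\|\hat p_V\|_{\H_K} \le \|\hat p\|_{\H_K} < f_\infty'/\sqrt{\phi(0)}$ keeps $\hat p_V$ feasible with $L(\hat p_V) \ge L(\hat p)$. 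Uniqueness of the maximizer then forces $\hat p_\perp = 0$, so $\hat p \in V$, which is exactly the finite representation \eqref{eq:RepSol}.

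The main obstacle is conceptual rather than computational. The range constraint $p(\R^d) \subseteq \overline{\dom(f^*)}$ is incompatible with the orthogonal decomposition underlying the representer theorem, since projecting onto $V$ alters $p$ off the support $Z$ and may destroy the pointwise bound. The whole force of the first part is that for $\lambda$ above the stated threshold this constraint is inactive and can be replaced by the $\H_K$-ball constraint, which \emph{is} preserved under orthogonal projection. Verifying the norm bound and the inclusion $B \subseteq \{p : p(\R^d)\subseteq\overline{\dom(f^*)}\}$ is the crux; the representer argument is then routine.
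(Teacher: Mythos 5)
Your proof is correct, and the first half is essentially identical to the paper's: both establish $X \coloneqq \{p : \|p\|_{\H_K} < f_\infty'/\sqrt{\phi(0)}\} \subseteq C \coloneqq \{p : p(\R^d) \subseteq \overline{\dom(f^*)}\}$ via $\|p\|_\infty \le \sqrt{\phi(0)}\,\|p\|_{\H_K}$, and then use the bound $\|\hat p\|_{\H_K} \le \frac{2}{\lambda} d_K(\mu,\nu)$ from \eqref{some_est} together with the hypothesis on $\lambda$ to place $\hat p$ strictly inside $X$. For the second half you take a genuinely different and more direct route. The paper first reparameterizes the ball-constrained problem as an unconstrained problem over all of $\H_K$ via the bijection $\Psi(p) = \frac{p}{\|p\|}\psi(\|p\|)$ with $\psi(t) = \frac{2 f_\infty'}{\pi\sqrt{\phi(0)}}\arctan(t)$, then orthogonally decomposes the reparameterized maximizer $\hat q = \hat s + \hat r$ and uses the strict monotonicity of $\psi^2$ to conclude $\hat r = 0$. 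You instead apply the orthogonal decomposition $\hat p = \hat p_V + \hat p_\perp$ directly to the ball-constrained problem, observing that the projection onto $V = \operatorname{span}\{K(\cdot,z) : z \in Z\}$ leaves the data terms unchanged, does not increase the norm (so feasibility in the ball is preserved — the precise point where the original range constraint $C$ would fail), and strictly decreases the penalty when $\hat p_\perp \ne 0$. Your version is shorter and avoids the $\arctan$ machinery entirely; the paper's reparameterization buys an unconstrained formulation of the representer argument but is not needed here, since an open ball centered at the origin is already stable under orthogonal projection. Both arguments are sound, and your explicit remark that the pointwise constraint $C$ is \emph{not} projection-stable — which is the whole reason the first part of the lemma is needed — captures the conceptual content of the result accurately.
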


\begin{proof}
    First, we introduce shorthand notations for the feasible sets 
    \begin{equation}
        C \coloneqq \bigl\{ p \in \H_K: p(\R^d) \subset \overline{\dom(f^*)} \bigr\}
        \quad \text{and} \quad 
        X \coloneqq \biggl\{ p \in \H_K: \|p\|_{\H_K}<\frac{f'_\infty}{\sqrt{\phi(0)}}\biggr\}.
    \end{equation}
    It holds for any $p\in X$ that $\|p\|_\infty \le \sqrt{\phi(0)}\|p\|_{\H_K}< f'_\infty$, which implies $X\subseteq C$.
    By \eqref{some_est}, the maximizer $\hat p$ of \eqref{eq:RegFDivDual} satisfies $\|\hat p\|_{\H_K}\le\frac{2}{\lambda}d_K(\mu,\nu)$.
   For $\lambda > 2d_K(\mu,\nu)\sqrt{\phi(0)}/{f'_\infty}$, we get $\hat p \in X$ due to $\|\hat p\|_{\H_K} \le \frac{2}{\lambda } d_K(\mu,\nu)< f'_\infty/\sqrt{\phi(0)}$.
   Since $X\subseteq C$, we get that $\hat p$ maximizes $L$ over $X$.
   This proves the first claim.

    For the second part, consider the parameterization $\Psi\colon \H_K\to X$ with $\Psi(0)\coloneqq 0$ and $\Psi(p)\coloneqq \frac{p}{\|p\|}\psi(\|p\|)$ otherwise, where $\psi\colon [0,\infty)\to [0,\frac{f'_\infty}{\phi(0)}]$ is given by
    \begin{equation}
        \psi(t)=\frac{2f'_\infty}{\pi \sqrt{\phi(0)}}\arctan(t).
    \end{equation}
    The bijectivity of $\arctan$ directly implies the bijectivity of $\Psi$. 
    Hence, the reparameterized problem $\argmax_{q\in \H_K} L(\Psi(q))$ has the unique maximizer $\hat q = \Psi^{-1}(\hat p)$.
    Since $\hat q\in \H_k$, we can decompose $\hat q$ as $\hat q = \hat s+ \hat r$, where $\hat s\in \text{span}\{ K(\cdot, z) : z\in Z\}$ and $\hat r \in \text{span}\{ K(\cdot, z) : z\in Z\}^\perp$.
    Then, for all $z\in Z$ we get that
    \begin{equation}
        \hat q(z)=\langle \hat q,K(\cdot, z)\rangle =\langle \hat s+\hat r,K(\cdot, z)\rangle =\langle \hat s,K(\cdot, z)\rangle =\hat s(z).
    \end{equation}
    Hence, it holds that
\begin{align}
    &L(\Psi(\hat q))
    = \E _\mu[\Psi(\hat q)]-\E_\nu [ f^*\circ \Psi(\hat q)]-\frac{\lambda}{2}\|\Psi(\hat q)\|_{\H_K}^2\notag\\
    =&\frac{1}{|\supp (\mu)|}\sum_{z\in \supp(\mu)} \Psi(\hat s(z))
    - \frac{1}{|\supp(\nu)|}\sum_{z\in \supp(\nu)} f^*\circ \Psi(\hat s(z)) 
    - \frac{\lambda}{2}\|\Psi(\hat q)\|_{\H_K}^2\notag\\
    =&\E_\mu[\Psi(\hat s)]-\E_\nu[f^*\circ \Psi(\hat s)]
    -\frac{\lambda}{2}\|\Psi(\hat q)\|_{\H_K}^2
\end{align}
Now, $\|\Psi(\hat q)\|_{\H_K}=\psi(\|\hat q\|_{\H_K})$, $\|\hat q \|_{\H_K}^2=\|\hat s\|_{\H_K}^2+\|\hat r\|_{\H_K}^2$ and the strict monotonicity of $\Psi^2$ imply
\begin{align}
    &L(\Psi(\hat q))
    =\E_\mu[\Psi(s)]-\E_\nu[f^*\circ \Psi(\hat s)] -\frac{2 \lambda (f'_\infty)^2}{(\pi \sqrt{\phi(0)})^2}\arctan\Bigl(\sqrt{\|\hat s\|_{\H_K}^2+\|\hat r\|_{\H_K}^2}\Bigr)^2\notag\\
    \le& \E_\mu[\Psi(\hat s)]-\E_\nu[f^*\circ \Psi(\hat s)] -\frac{2 \lambda (f'_\infty)^2}{(\pi \sqrt{\phi(0)})^2}\arctan(\|\hat s\|_{\H_K})^2
    =L(\Psi(\hat s)).
\end{align}
Hence, we must have $\hat q = \hat s$.
This directly implies $\hat p =\Psi(\hat q) \in \text{span}\{ K(\cdot, z) : z\in Z\}$, which means that we can write $\hat p = \sum_{z\in Z} b_zK(\cdot, z)$ and $b\in \R^Z$.
\end{proof}

\subsection{Numerical Implementation Details}
\label{Numerical Implementation Details}
Recall that both for infinite and finite recession constant (see Lemma \ref{lemma:reprThmForFiniteRecConst}), the solution $\hat p_n$ to the dual problem has the representation
\begin{equation*}
    \hat p_n
    = \sum_{k=1}^{N+M} b_k^{(n)} K\bigl(\cdot, z_k^{(n)}\bigr).
\end{equation*}
To determine the coefficients $b^{(n)} = (b_k^{(n)})_{k=1}^{M+N}$, we look instead at the primal problem \eqref{eq:MoreauPrimalgnuf2}, which by Corollary \ref{lemma:RegFDivDual} has a solution of the form
\begin{equation} \label{eq:gn}
    \hat g_n = m_{\gamma_n} - \lambda \hat p_n
    = \sum_{k = 1}^{M+N} \beta_k^{(n)} K\bigl(\cdot,z_k^{(n)}\bigr) = m_{\hat \sigma_n}, \qquad \hat \sigma_n \coloneqq \sum_{k = 1}^{M+N } \beta_k^{(n)}\delta_{z_k^{(n)}},
\end{equation}
with the non-negative coefficients
\begin{equation} \label{eq:BetavsB}
    \beta_k^{(n)} 
    \coloneqq 
    \begin{cases}
        - \lambda b_k^{(n)}, & \text{if } k \in \{1, \ldots,M\}
        \\
         \frac{1}{N} - \lambda b_k^{(n)}, & \text{if } k 
         \in \{ M+1, \ldots, M+N \}.
    \end{cases}
\end{equation}
Hence, to compute $b^{(n)}$, we can minimize the primal objective only with respect to the coefficients $\beta^{(n)}$, which results in the objective
\begin{equation} \label{eq:primalObjective}
    J\bigl(\beta^{(n)}\bigr)= {D_{f,\nu}}(\hat \sigma_n)+\frac{1}{2\lambda} \| m_{\hat \sigma_n} -m_{\gamma_n} \|_{\H_K}^2.
\end{equation}
With the change of variables $q^{(n)} \coloneqq -\lambda M b^{(n)} \in \R^{N + M}$, we get $\smash{\tfrac{\d\hat \sigma_n}{\d\nu}}(z_k^{(n)}) =  q_k^{(n)}$ for $k \in \{1,\ldots, M \}$ and $\hat \sigma_n^s (\R^d) = 1 + \smash{\frac{1}{M}\sum_{k=M+1}^{M+N} q_k^{(n)}}$. 
Plugging this into \eqref{eq:primalObjective} yields
\begin{align}
    J\bigl(q^{(n)}\bigr) & =\int_{\R^d} f\circ \frac{\d\hat \sigma_n}{\d \nu} d\nu +f'_\infty \hat \sigma_n^s(\R^d) +\frac{1}{2\lambda M^2}(q^{(n)})^{\tT} K^{(n)} q^{(n)} \label{eq:primalObjective2}\\
    & = \frac{1}{M} \sum_{k=1}^M f\bigl(q_k^{(n)}\bigr) + \frac{f'_\infty}{M}\left(M + \sum_{k=M+1}^{M+N} q_k^{(n)}\right) + \frac{1}{2\lambda M^2} (q^{(n)})^{\tT} K^{(n)} q^{(n)}. \notag
\end{align}
For many kernels like the inverse multiquadric or the Gaussian, $K^{(n)}$ is positive definite, so that $J$ is strongly convex.

The constraint $\beta^{(n)}\ge 0$ translates into the constraints
\begin{equation}\label{eq:boxconst}
    q_k^{(n)}
    = -\lambda b_k^{(n)} M
    \begin{cases}
        \ge 0, & k \in \{ 1,\ldots, M \}, \\
        = \beta_k^{(n)} - \frac{M}{N}\ge -\frac{M}{N}, & k \in \{ M+1,\ldots, M+N \}.
    \end{cases}
\end{equation}
If $f_{\infty}' = \infty$, then $q_{k + M}^{(n)} = - \frac{M}{N}$ for $k \in \{ 1, \ldots, N \}$.
Hence, we only need to optimize over \smash{$(q_k^{(n)})_{k = 1}^{M}$} and consider a reduced objective $J$ instead.
Given the optimal coefficients $\hat q^{(n)}$, the update step \eqref{compute} becomes 
\begin{equation} \label{eq:compute2}
    x_j^{(n + 1)}
    = x_j^{(n)} + \frac{\tau}{\lambda M} \sum_{k=1}^{M+N} \hat q_k^{(n)} \nabla K\bigl(\cdot, z_k^{(n)}\bigr)(x_j^{(n)}),
    \qquad j \in \{ 1,\ldots, N \}.
\end{equation}

Now, we detail the computation of $\hat q^{(n)}$ based on a forward-backward splitting algorithm.
This requires the proximal mapping $\prox_f$ of $f$.\footnote{We provide $\prox_f$ in closed form for several entropies $f$ in Table~\ref{table:proxies}.
For the others, we employ a naive derivative-free method to solve the one-dimensional strongly convex problem defining $\prox_f$.}
For $f_{\infty}' < \infty$, we decompose $J$ from \eqref{eq:primalObjective2} into the proper, convex and lower semicontinuous function
\begin{equation*}
    h_1 \colon \R^{M + N} \to [0, \infty], \qquad
    q \mapsto \frac{1}{M} \sum_{k = 1}^{M} f(q_k) 
    + \sum_{k = M + 1}^{M + N} \iota_{[-\frac{M}{N}, \infty)}(q_k)
\end{equation*}
and the smooth function
\begin{equation*}
    h_2 \colon \R^{M + N} \to \R, \qquad
    q \mapsto f_{\infty}'\left( 1 + \frac{1}{M} \sum_{k = 1}^{N} q_{M + k}\right) 
    + \frac{1}{2 \lambda M^2} q^{\tT} K q
\end{equation*}
with Lipschitz continuous gradient
\begin{equation*}
    \nabla h_2(q)
    = \frac{f_{\infty}'}{M} \begin{bmatrix} 0_M \\ \1_N \end{bmatrix} + \frac{1}{\lambda M^2} K q.
\end{equation*}
If $f_{\infty}' = \infty$, we decompose $J$ into
\begin{gather*}
    \tilde{h}_1 \colon \R^M \to \R, \qquad q \mapsto \frac{1}{M} \sum_{k = 1}^{M} f(q_k), \\
    \tilde{h}_2 \colon \R^{M} \to \R, \qquad q \mapsto \frac{1}{2 \lambda M^2} q^{\tT} K_{yy} q - \frac{1}{\lambda M N} \1_N^{\tT} K_{x y}^{(n)} q + \frac{1}{2 \lambda N^2} \1_{N}^{\tT} K_{x x}^{(n)} \1_N.
\end{gather*}
The gradient of $\tilde{h}_2$ reads
\begin{equation*}
    \nabla \tilde{h}_2(q) = \frac{1}{\lambda M^2} K_{y y} q - \frac{1}{\lambda M N} (K_{x y}^{(n)})^{\tT} \1_N.
\end{equation*}
Since $h_1$ and $\tilde{h}_1$ are separable, the same holds for their proximal operators.
The FISTA \cite{BT2009} iterations in Algorithm \ref{alg:Fista1} and \ref{alg:Fista2}, respectively, converge to the unique minimizer $q^*$ of $J$.
FISTA has the optimal $O(k^{-2})$ convergence rate, meaning that $J(q^{(k)}) - J(q^*) \le \frac{2 L}{(k + 1)^2} \| x^{(0)} - x^* \|_2^2$.
Since the proximal operator in Algorithm \ref{alg:Fista1} is independent of the particles $x_j$, we can use the same approximation throughout.

\begin{algorithm}[t]
\caption{FISTA for solving \eqref{eq:RegFDivDual} related to $D_{f, \nu}^{\lambda}(\mu)$ if $f_{\infty}' = \infty$}\label{alg:Fista1}
\KwData{$q_0 \in \R^M$, $t_0 = 1$}
\KwResult{Approximate minimizer $q^*$ of $J$}
\For{$k=0,\ldots$ until convergence}{
$t_{k + 1} \gets \frac{1}{2}\bigl(1 + \sqrt{1 + 4 t_k^2}\bigr)$\;
$z^{(k)} \gets q^{(k)} + \frac{1}{t_{k + 1}} (t_k - 1) \left(q^{(k)} - q^{(k - 1)}\right)$\;
$q_j^{(k + 1)} \gets \prox_{\frac{\lambda M}{\| K_{yy} \|_2} f}\Bigl( \bigl(\id - \frac{1}{\| K_{yy} \|_2} K_{y y}\bigr)z_j^{(k)} + \frac{1}{\lambda M N} (K_{x y}^{(n)})^{\tT}\1_N \Bigr)$\; 
} 
\end{algorithm}

\begin{algorithm}[t]
\caption{FISTA for solving \eqref{eq:RegFDivDual} related to $D_{f, \nu}^{\lambda}(\mu)$ if $f_{\infty}' < \infty$}\label{alg:Fista2}
\KwData{$q_0 \in \R^{M + N}$, $t_0 = 1$}
\KwResult{Approximate minimizer $q^*$ of $J$}
\For{$k=0,\ldots,$ until convergence}{
$t_{k + 1} \gets \frac{1}{2}\bigl(1 + \sqrt{1 + 4 t_k^2}\bigr)$\;
$z^{(k)} \gets q^{(k)} + \frac{1}{t_{k + 1}} (t_k - 1) \bigl(q^{(k)} - q^{(k - 1)}\bigr)$\;
$q_j^{(k + 1)} \gets \prox_{\frac{\lambda M}{\| K \|_2} f}\Bigl(\bigl( \bigl(\id - \frac{1}{\| K \|_2} K\bigr)z^{(k)}\bigr)_j\Bigr), \quad j = 1, \ldots, M$\;
$q_{M + j}^{(k + 1)} \gets \max\left(-\frac{M}{N}, \bigl(\bigl(\id - \frac{1}{\| K \|_2} K\bigr)z^{(k)}\bigr)_{M + j} - f_{\infty}' \frac{\lambda M}{\| K \|_2}\right), \,\,\, j=1,\ldots,N$\;
}
\end{algorithm}

\section{Numerical Results} \label{sec:numerics}
In this section, we use three target measures from the literature to compare how fast the discrete Wasserstein gradient flow (WGF) \eqref{eq:compute2} for different $D_{f, \nu}^{\lambda}$ converge\footnote{The code is available at \url{https://github.com/ViktorAJStein/Regularized_f_Divergence_Particle_Flows}.
We use pytorch \cite{pytorch} to benefit from GPU parallelization.}.
We always choose $N = M = 900$ particles.
Further, we use the inverse multiquadric kernel from Remark~\ref{rem1}.
We focus on the Tsallis-$\alpha$ divergence for $\alpha \ge 1$ because the corresponding entropy function is differentiable on the interior of its domain and $f'_\infty = \infty$.
Other choices are covered in the code repository.
For $\alpha \ge 1$, the Tsallis-$\alpha$ divergence $D_{f_{\alpha}}(\mu \mid \nu)$ between $\mu, \nu \in \mathcal P(\R^d)$ with $\mu \ll \nu$ reads
\begin{equation}
    D_{f_{\alpha}}(\mu \mid \nu)
    = \frac{1}{\alpha - 1} \left(\int_{\R^d} \Bigl(\frac{\d \mu}{\d \nu}(x)\Bigr)^{\alpha} \d{\nu}(x) - 1\right).
\end{equation}
In our experiments, the commonly used KL divergence (corresponding to the limit for $\alpha \searrow 1$) is outperformed in terms of convergence speed if $\alpha$ is moderately larger than one.
We always observed sublinear convergence to the target $\nu$ in terms of functional values. 
This matches the rate proven in \cite[Prop.~3]{GAG2021} (the same proof is possible for any $f$-divergence) under an a priori boundedness assumption, which, however, is never fulfilled if $\mu$ and $\nu$ have disjoint supports.
We also observe fast empirical convergence in terms of the MMD and the Wasserstein distance.
\begin{figure}[tp]
    \centering
    \rotatebox{90}{
        \begin{minipage}{.07\textwidth}
        \centering 
        \small $1$
    \end{minipage}} \hfill
    \begin{subfigure}[t]{.16\textwidth}
        \includegraphics[width=\linewidth]{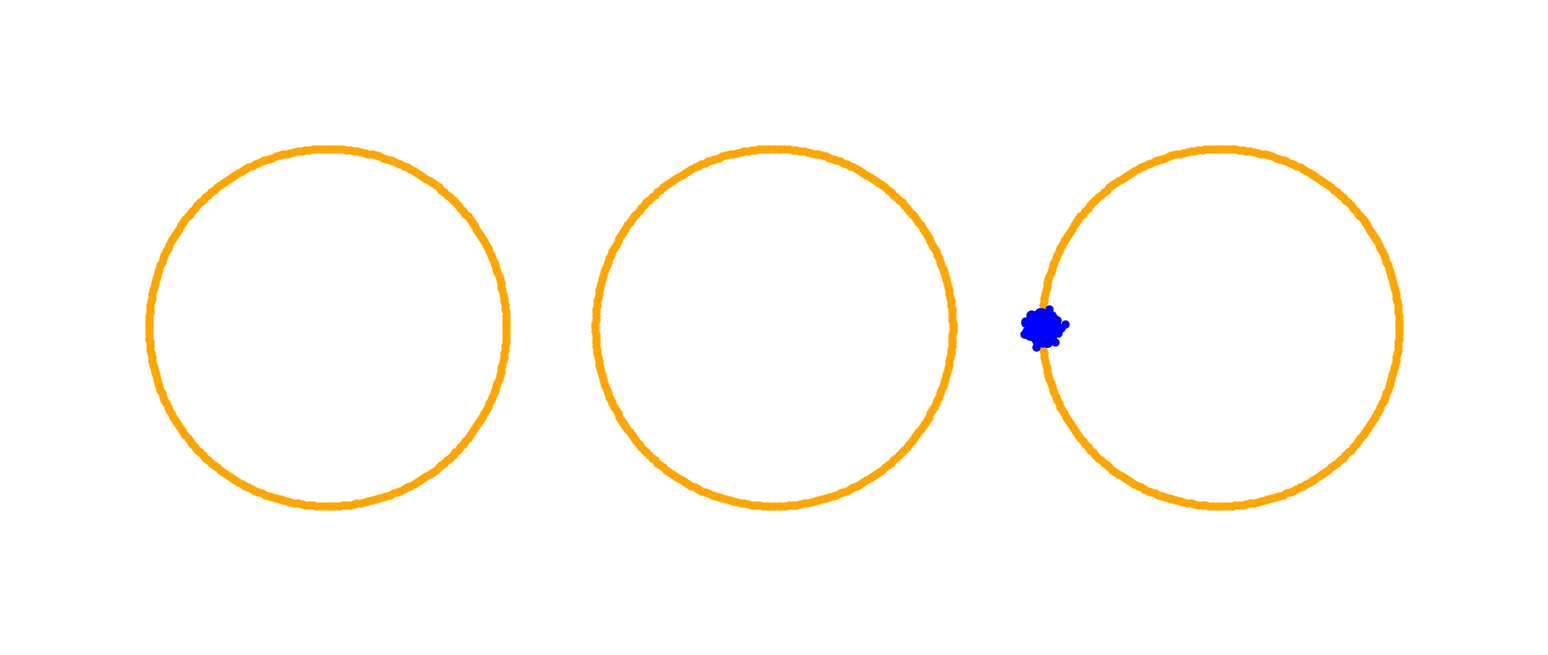}
    \end{subfigure}%
    \begin{subfigure}[t]{.16\textwidth}
        \includegraphics[width=\linewidth]{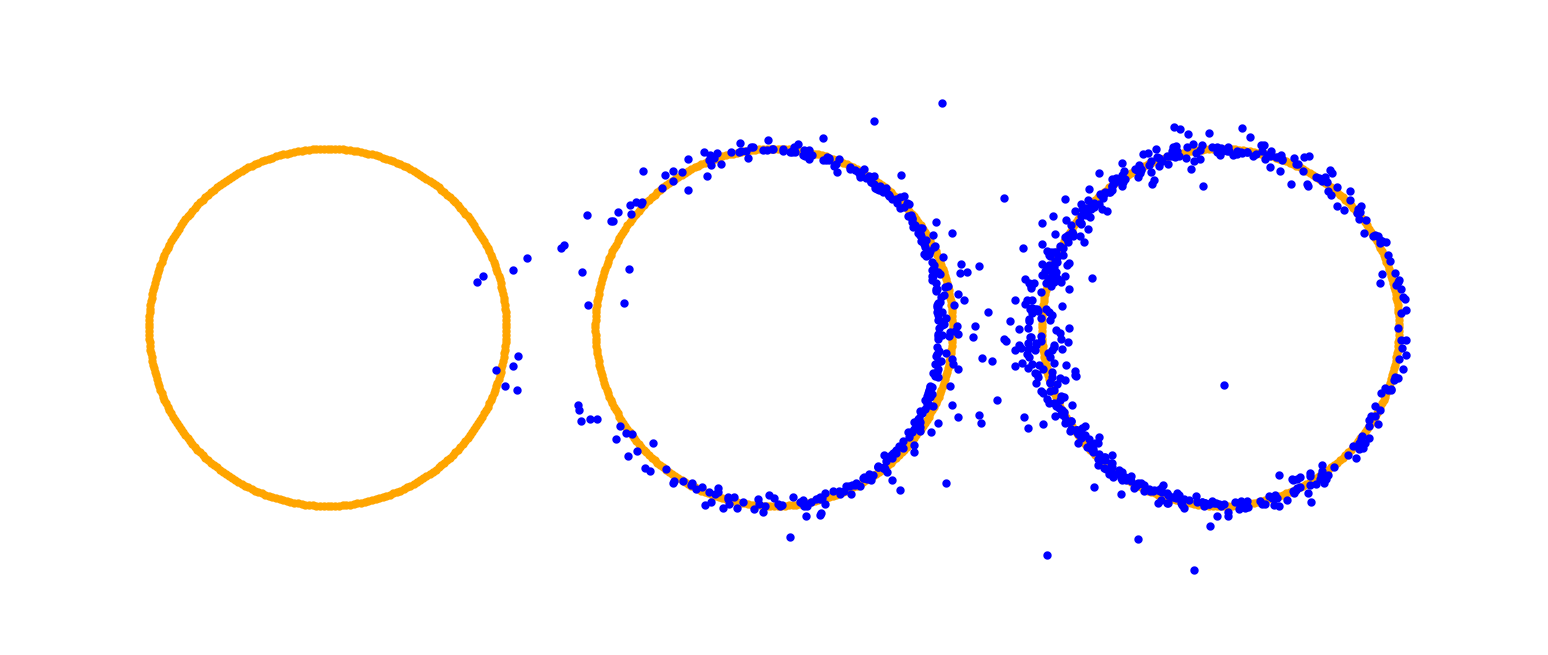}
    \end{subfigure}%
    \begin{subfigure}[t]{.16\textwidth}
        \includegraphics[width=\linewidth]{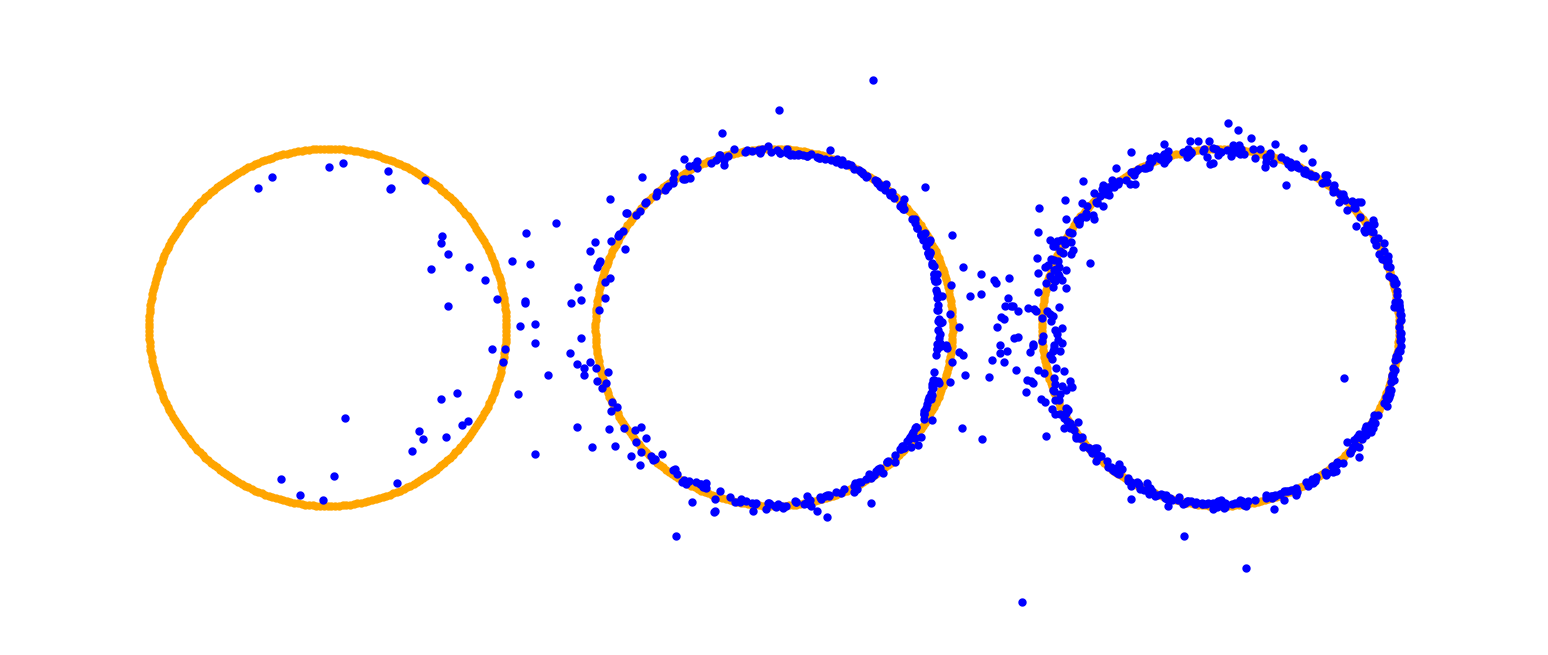}
    \end{subfigure}%
    \begin{subfigure}[t]{.16\textwidth}
        \includegraphics[width=\linewidth]{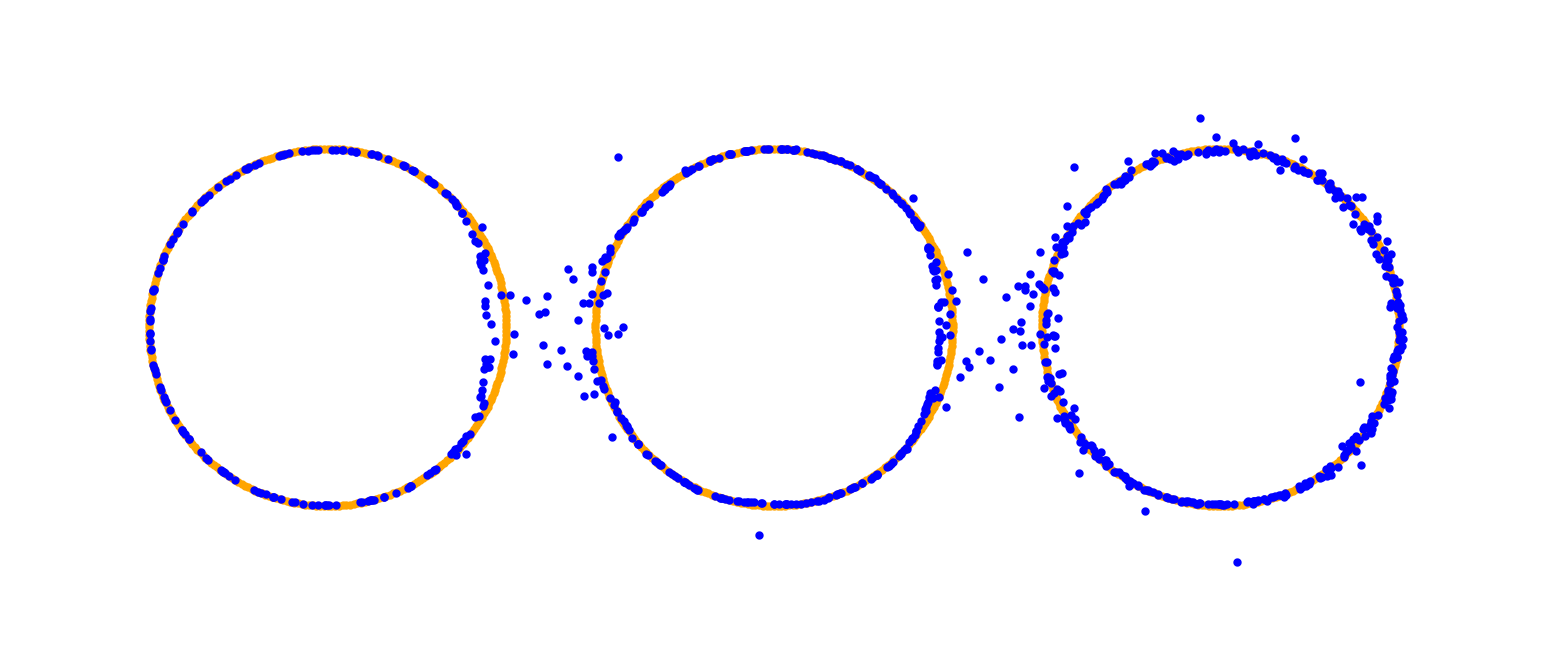}
    \end{subfigure}%
    \begin{subfigure}[t]{.16\textwidth}
        \includegraphics[width=\linewidth]{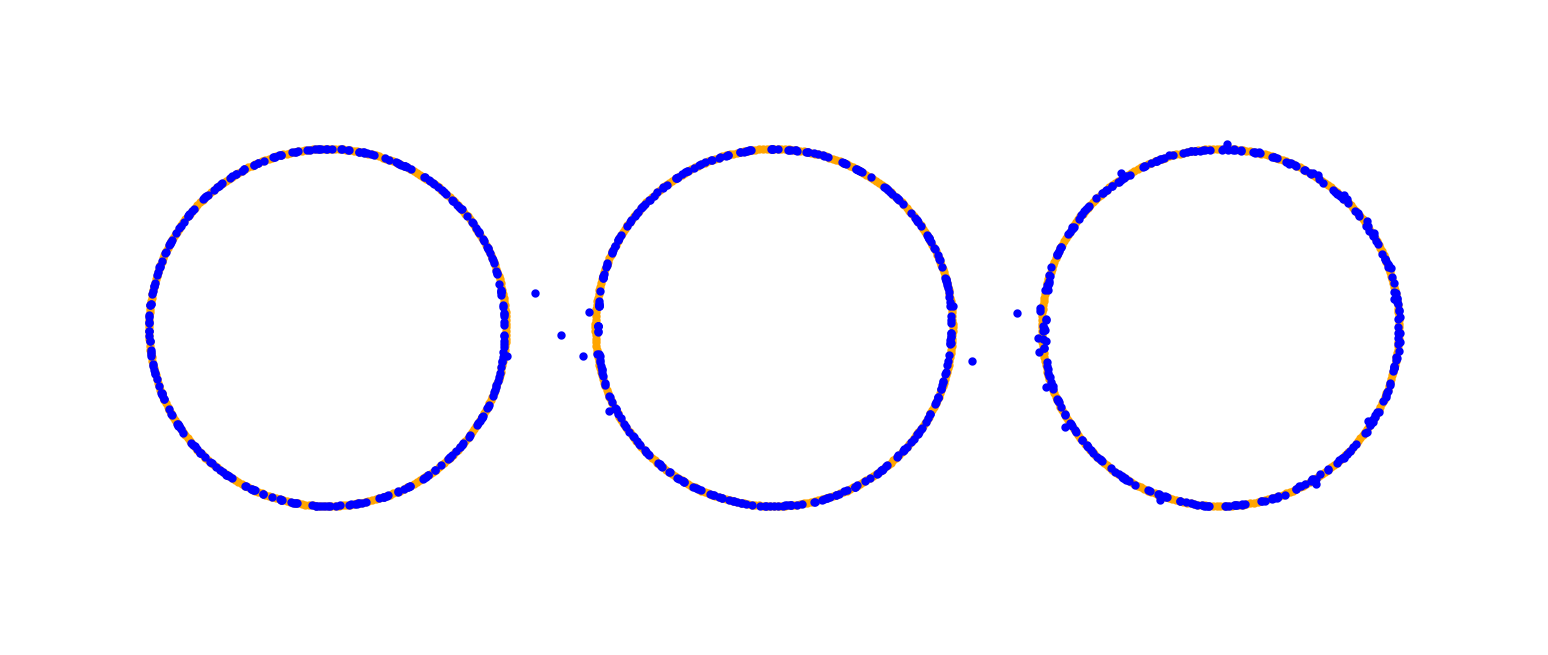}
    \end{subfigure}%
    \begin{subfigure}[t]{.16\textwidth}
        \includegraphics[width=\linewidth]{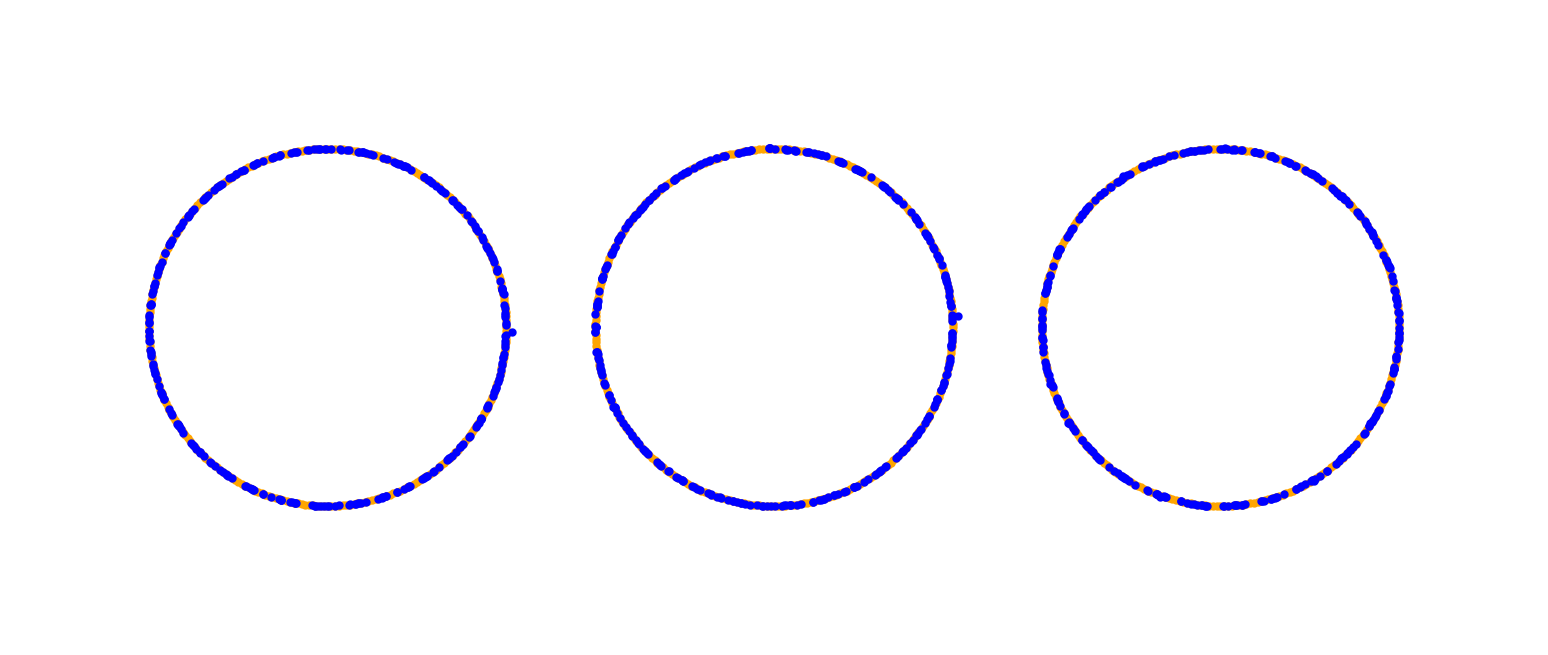}
    \end{subfigure}\\ %
    \rotatebox{90}{
        \begin{minipage}{.07\textwidth}
        \centering 
        \small $3$
    \end{minipage}} \hfill
    \begin{subfigure}[t]{.16\textwidth}
        \includegraphics[width=\linewidth]{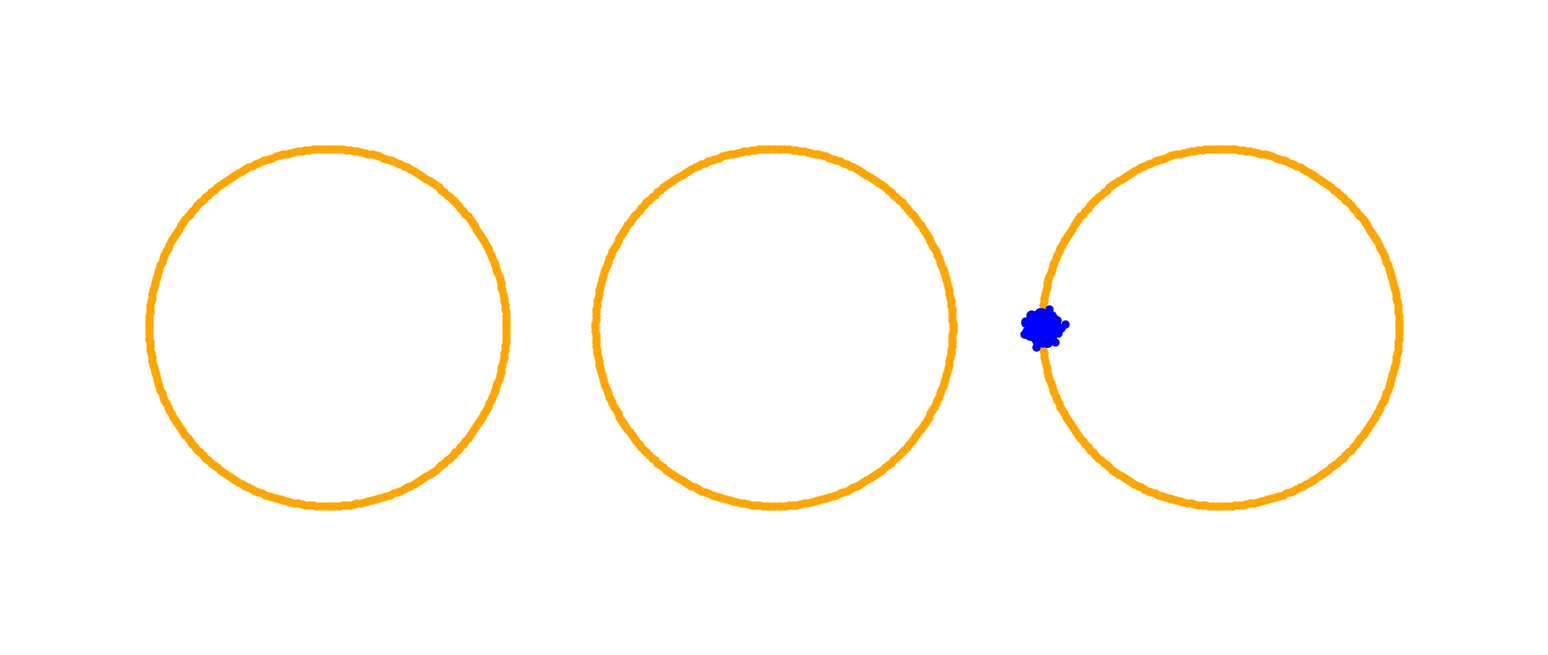}
    \end{subfigure}%
    \begin{subfigure}[t]{.16\textwidth}
        \includegraphics[width=\linewidth]{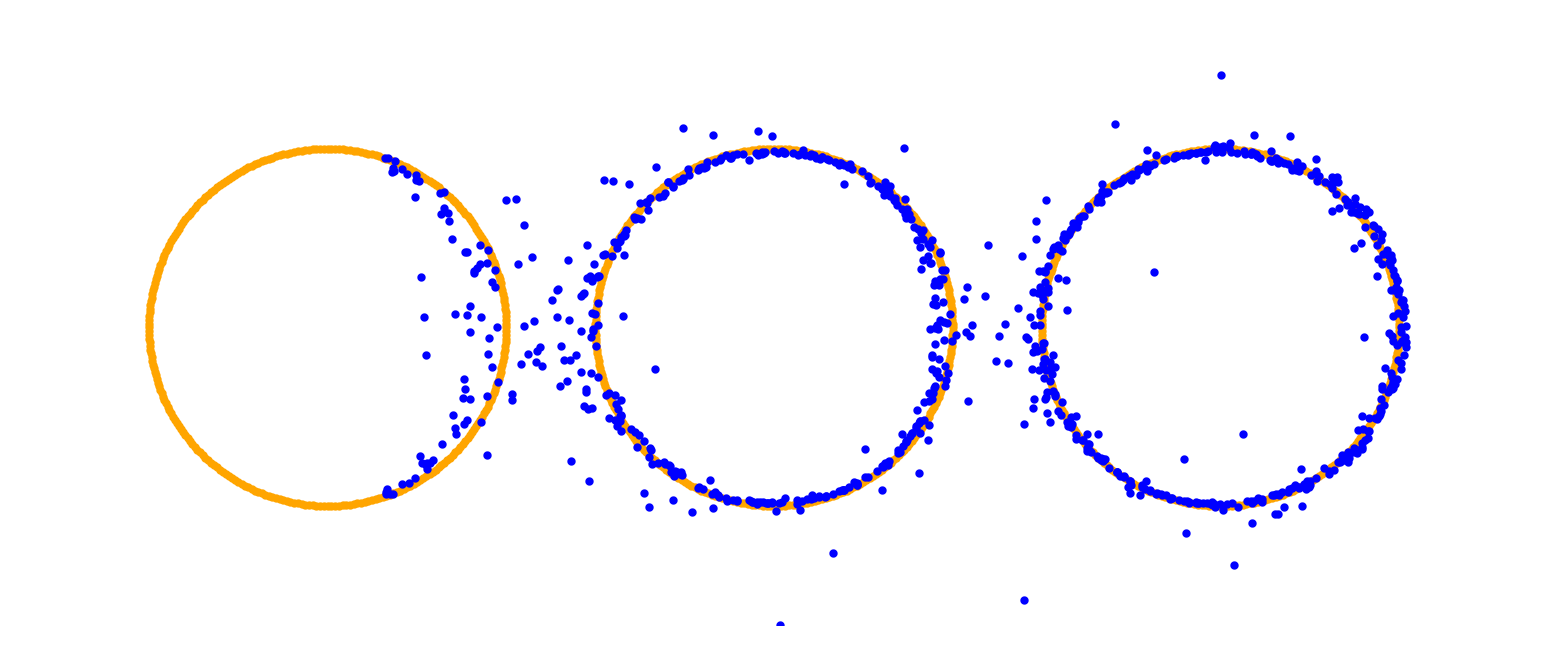}
    \end{subfigure}%
    \begin{subfigure}[t]{.16\textwidth}
        \includegraphics[width=\linewidth]{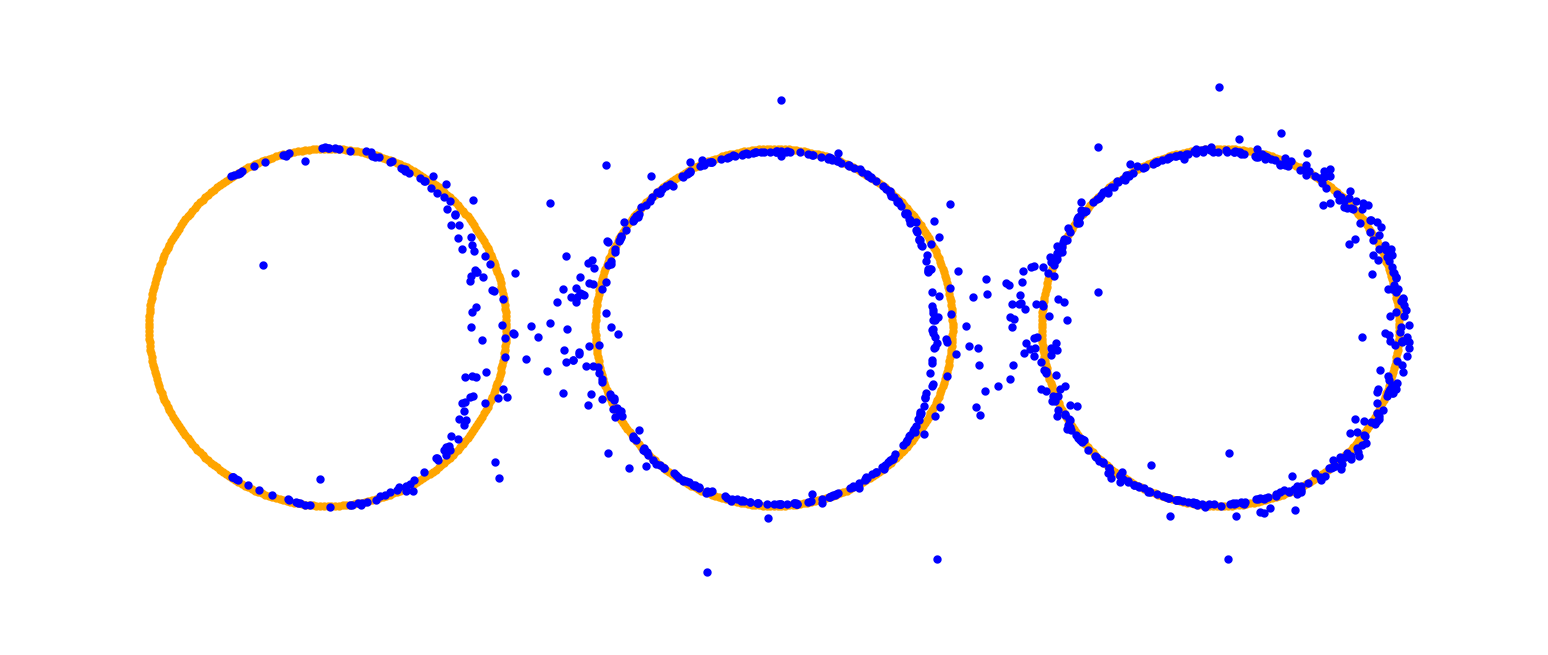}
    \end{subfigure}%
    \begin{subfigure}[t]{.16\textwidth}
        \includegraphics[width=\linewidth]{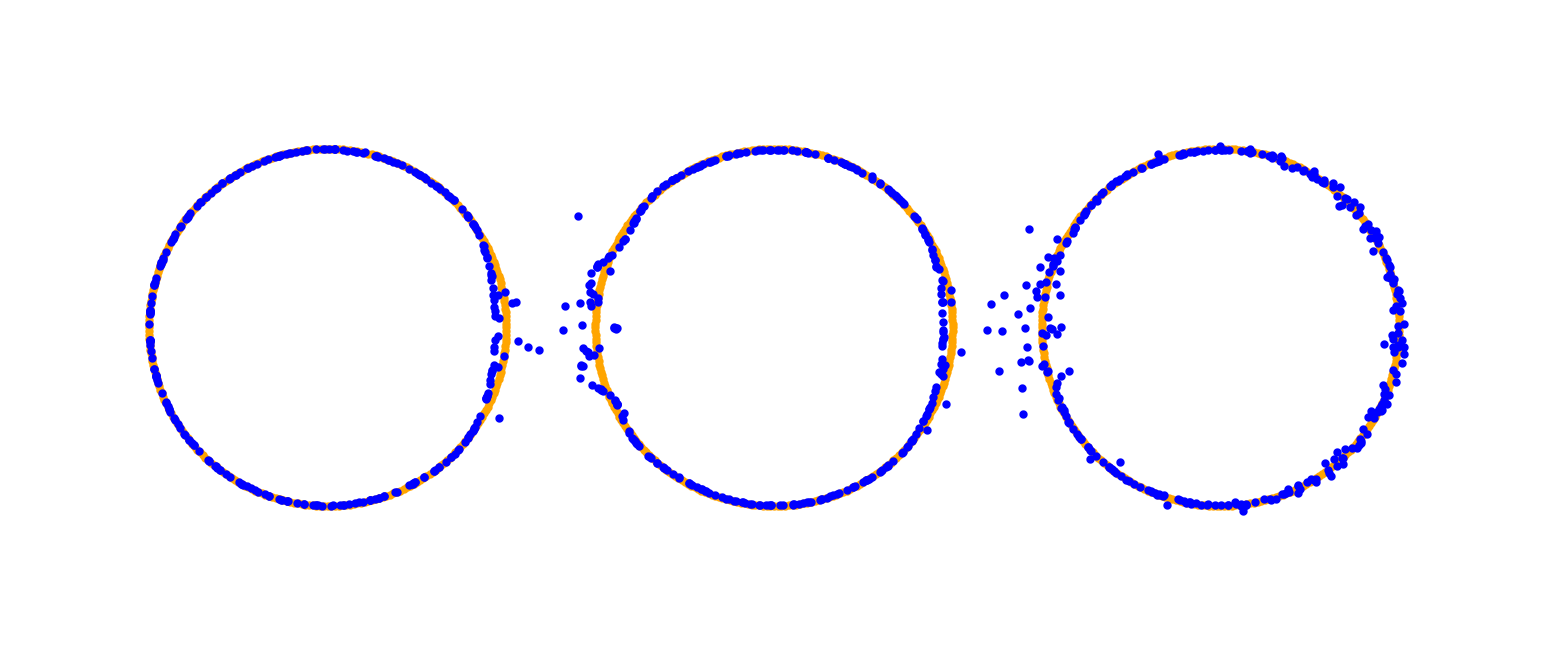}
    \end{subfigure}%
    \begin{subfigure}[t]{.16\textwidth}
        \includegraphics[width=\linewidth]{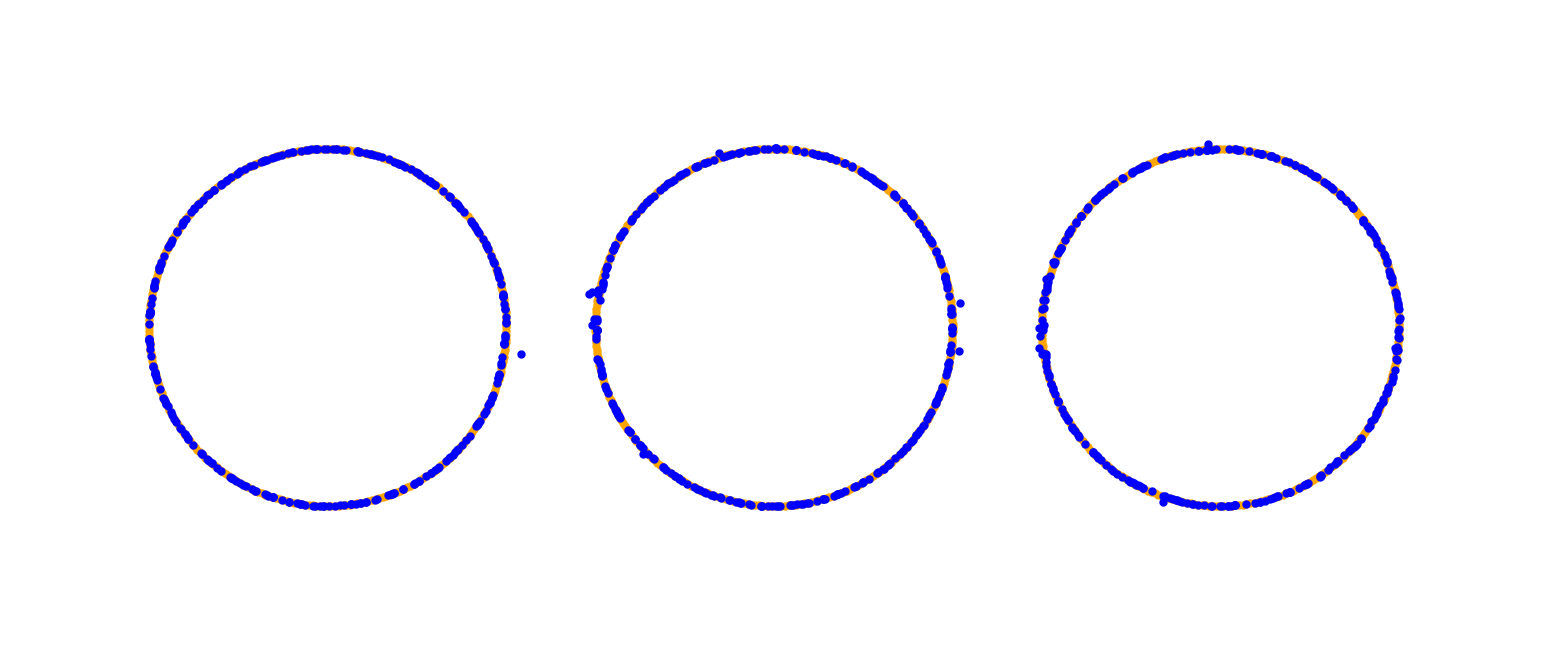}
    \end{subfigure}%
    \begin{subfigure}[t]{.16\textwidth}
        \includegraphics[width=\linewidth]{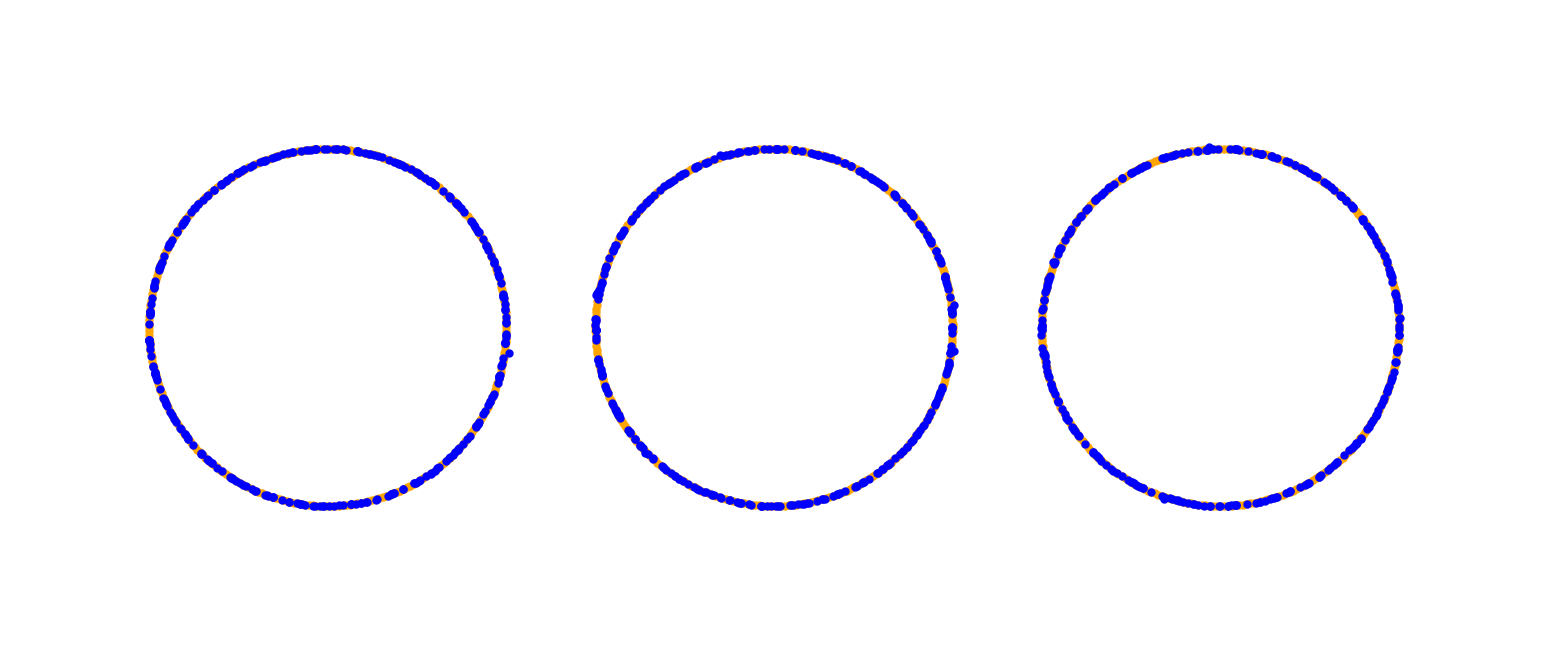}
    \end{subfigure}\\ %
    \rotatebox{90}{
        \begin{minipage}{.08\textwidth}
        \centering 
        \small $7.5$
    \end{minipage}} \hfill
    \begin{subfigure}[t]{.16\textwidth}
        \includegraphics[width=\linewidth]{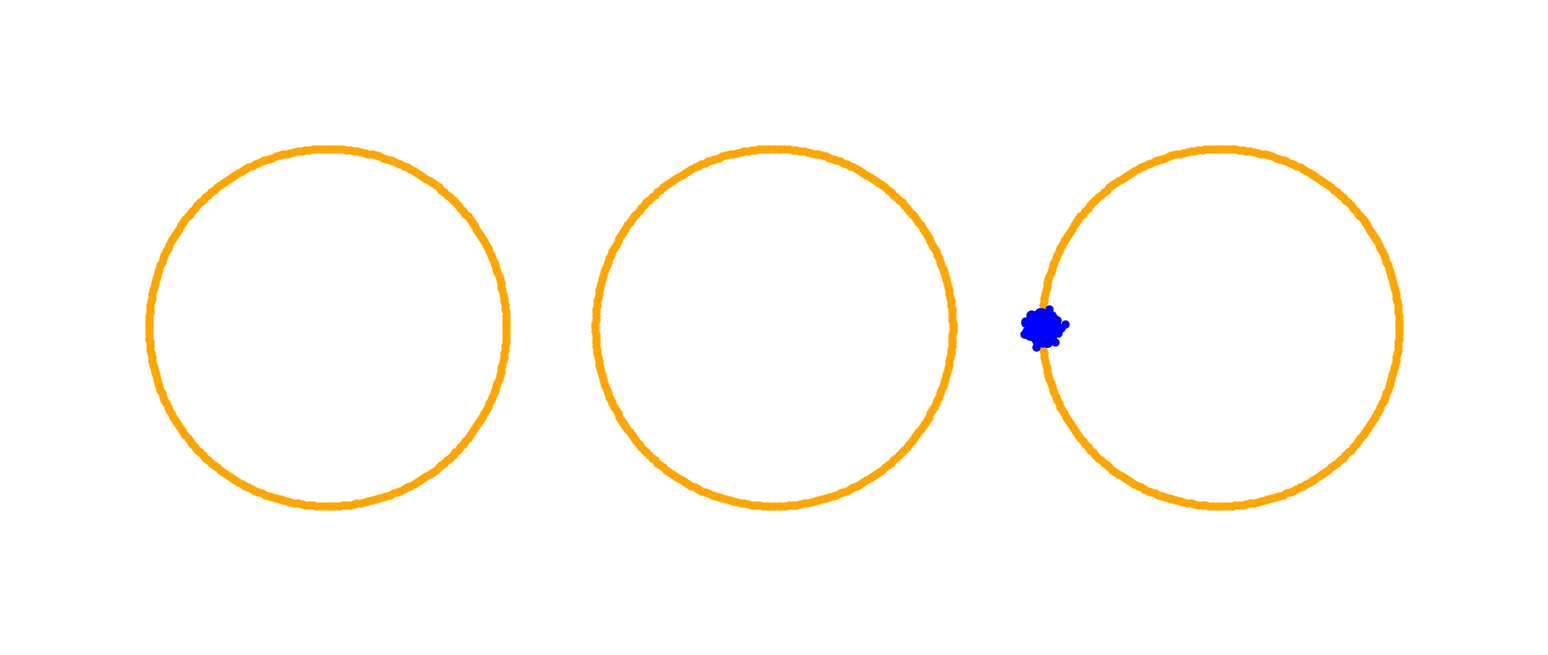}
    \caption*{t=0}
    \end{subfigure}%
    \begin{subfigure}[t]{.16\textwidth}
        \includegraphics[width=\linewidth]{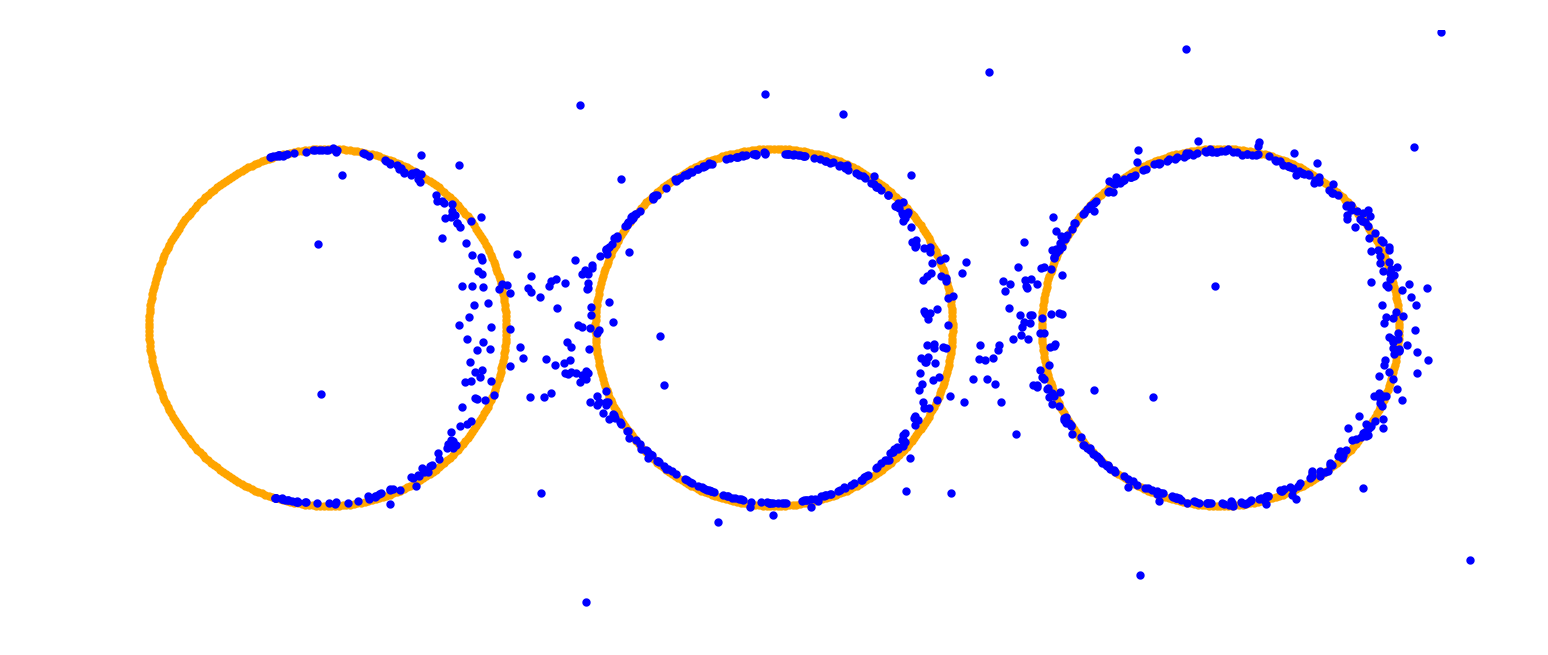}
    \caption*{t=0.1}
    \end{subfigure}%
    \begin{subfigure}[t]{.16\textwidth}
        \includegraphics[width=\linewidth]{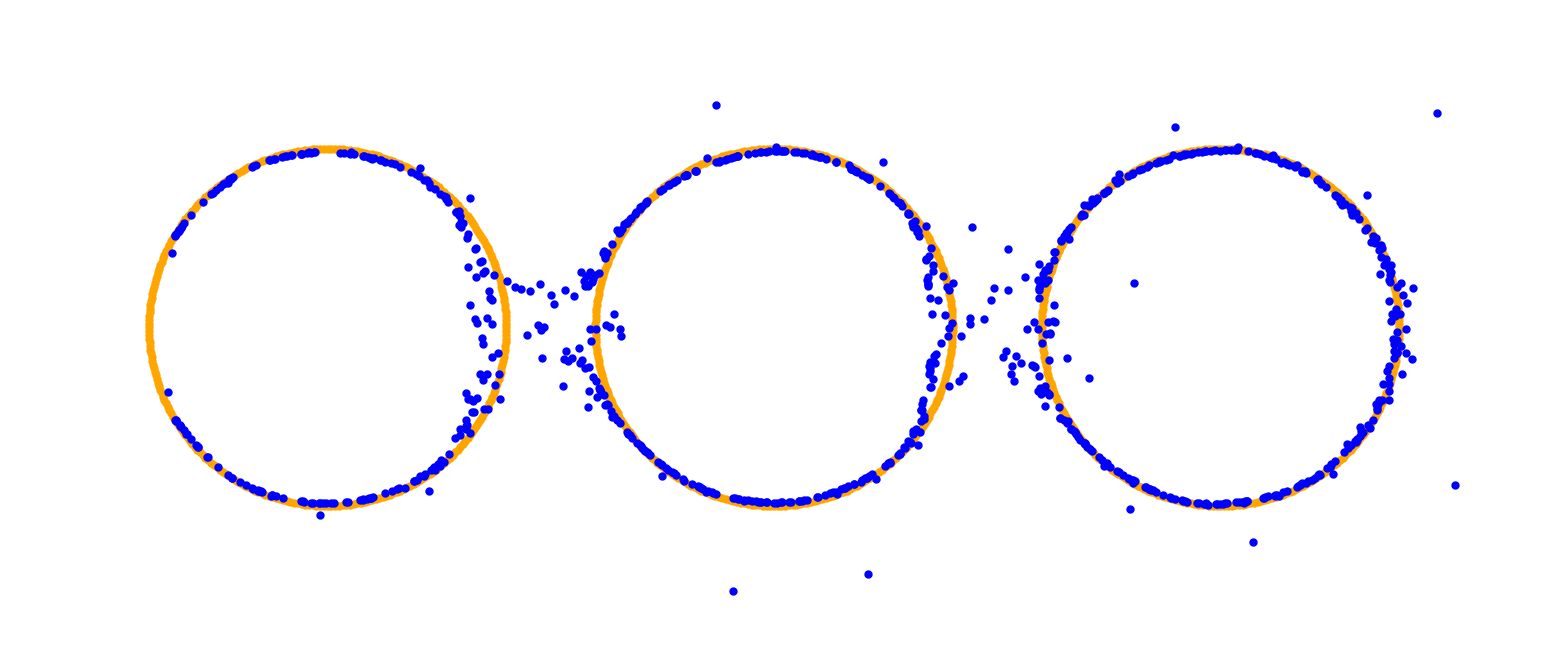}
    \caption*{t=0.2}
    \end{subfigure}%
    \begin{subfigure}[t]{.16\textwidth}
        \includegraphics[width=\linewidth]{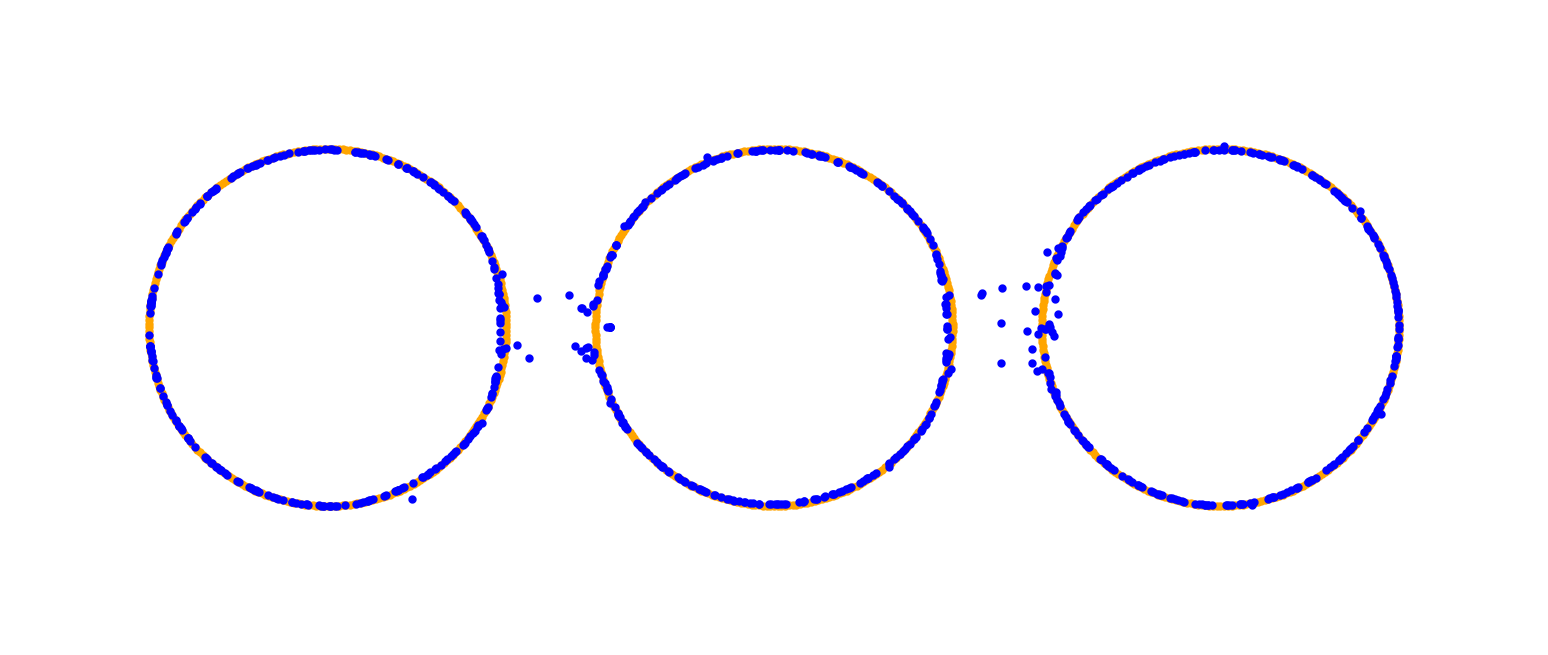}
    \caption*{t=1}
    \end{subfigure}%
    \begin{subfigure}[t]{.16\textwidth}
        \includegraphics[width=\linewidth]{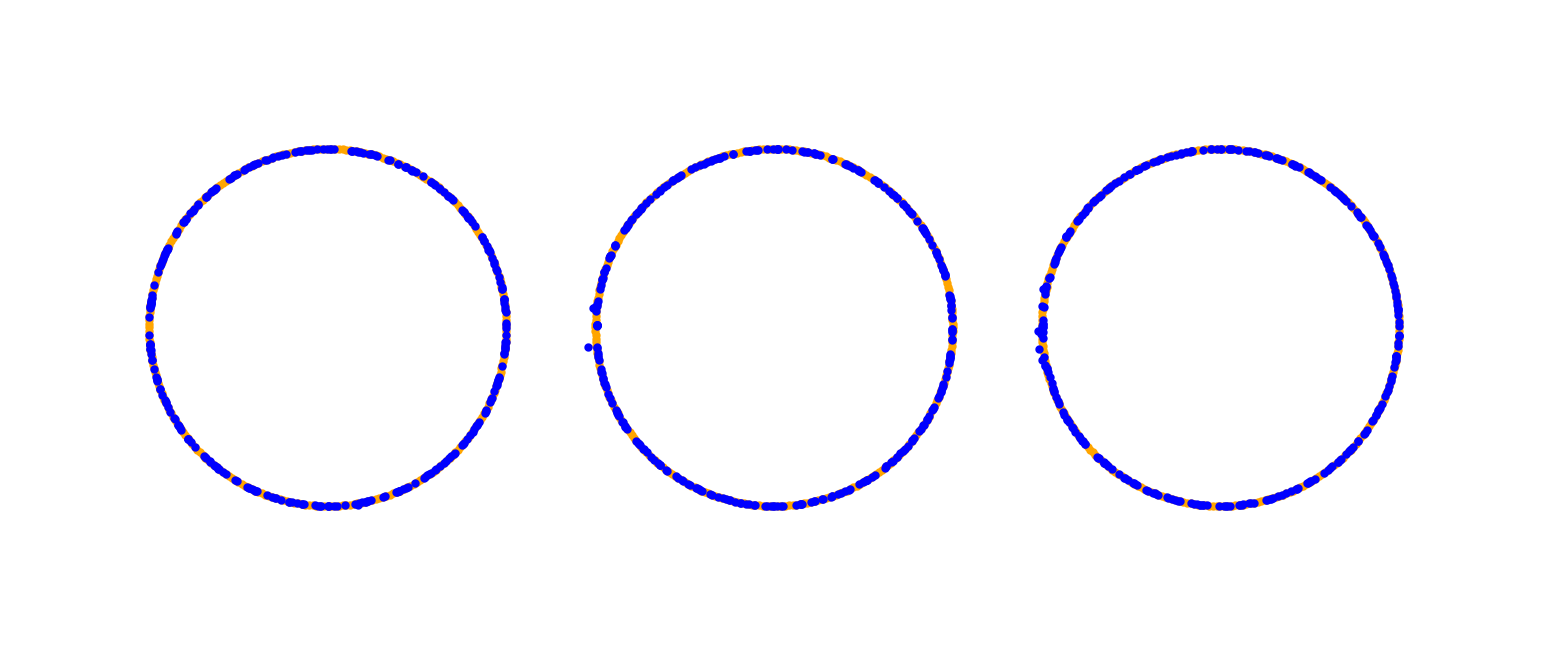}
    \caption*{t=10}
    \end{subfigure}%
    \begin{subfigure}[t]{.16\textwidth}
        \includegraphics[width=\linewidth]{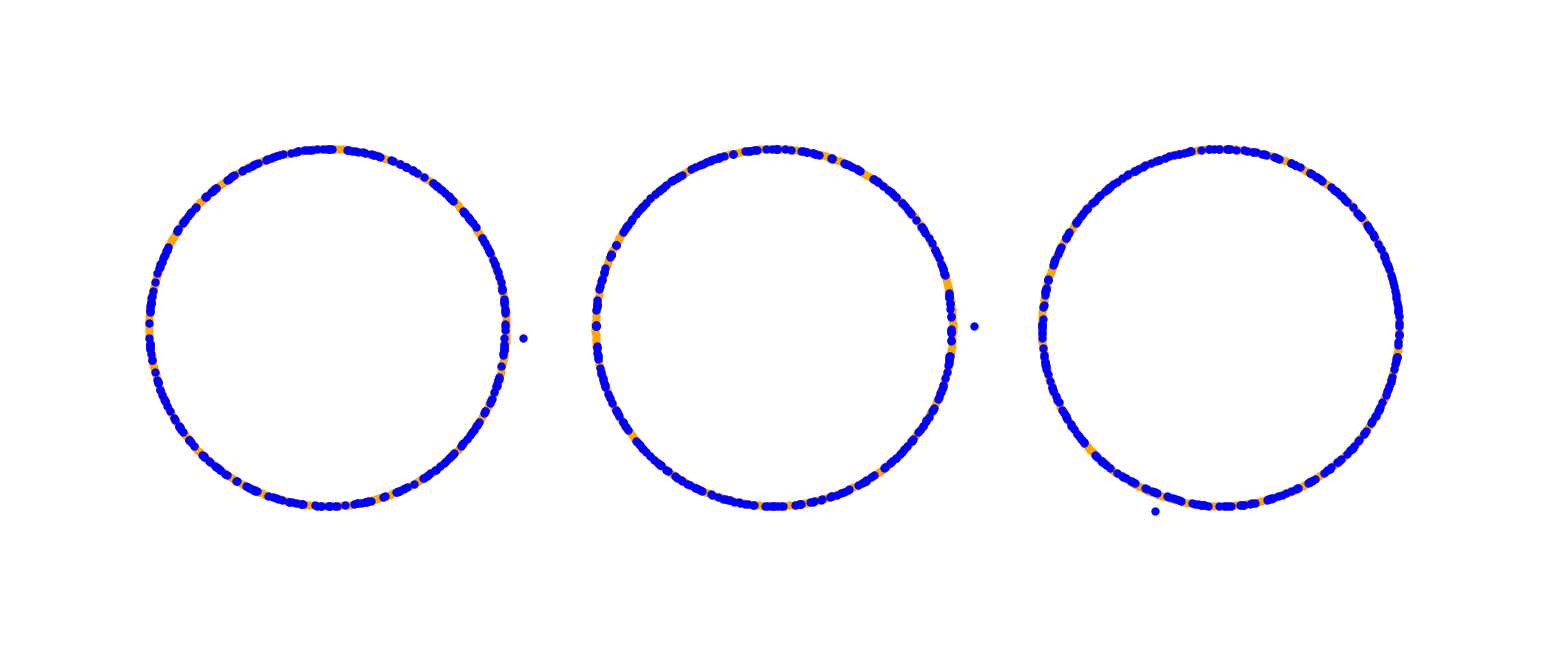}
    \caption*{t=50}
    \end{subfigure}%
    \caption{WGF of the regularized Tsallis-$\alpha$ divergence
    $D_{f_{\alpha}, \nu}^{\lambda}$ for $\alpha \in \{ 1, 3, 7.5 \}$.}
    \label{fig:circlesAlpha}
    \vspace{.2cm}

    \nextfloat
    \centering
    \includegraphics[width=.49\textwidth]{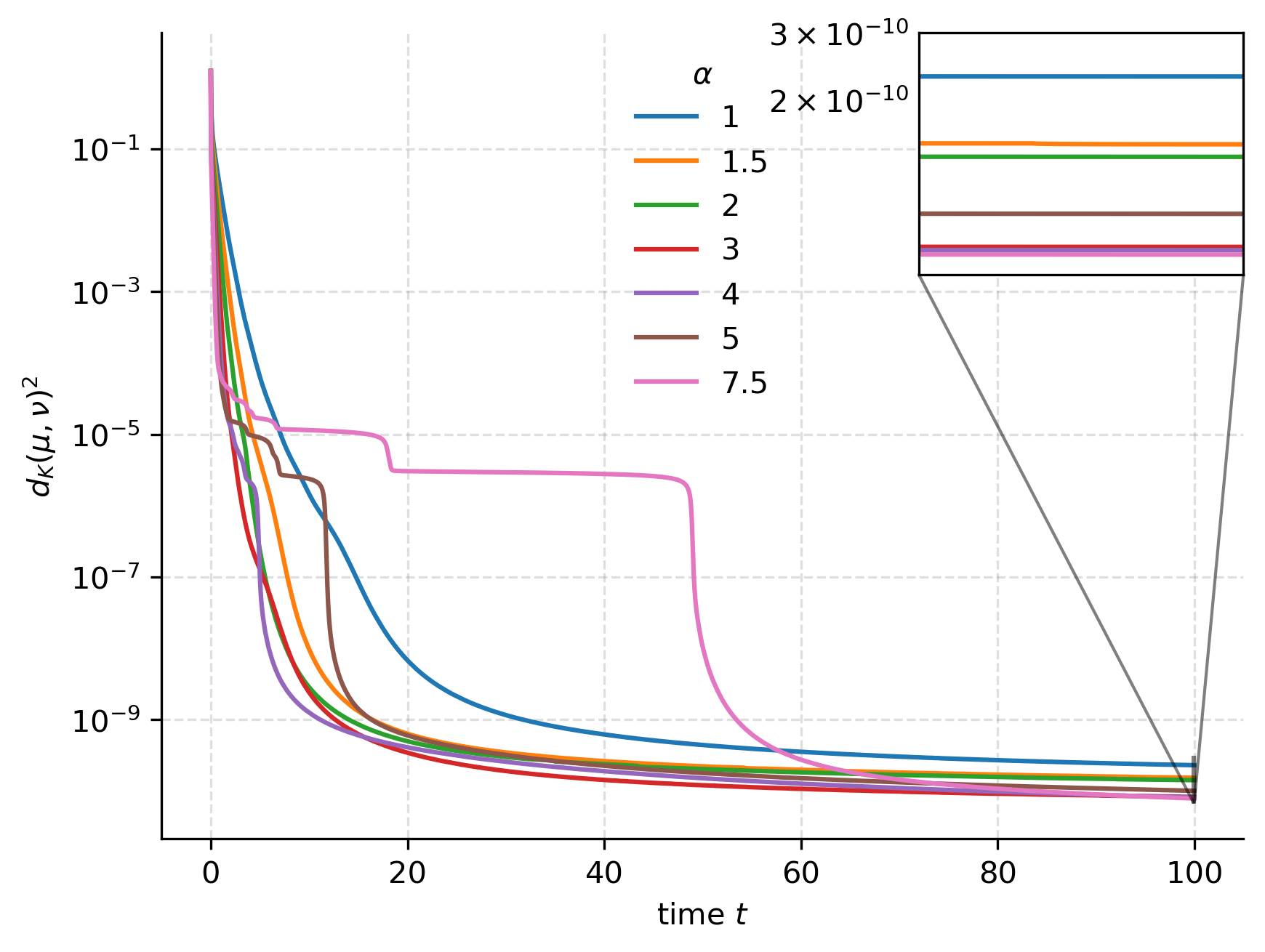}
    \includegraphics[width=.49\textwidth]{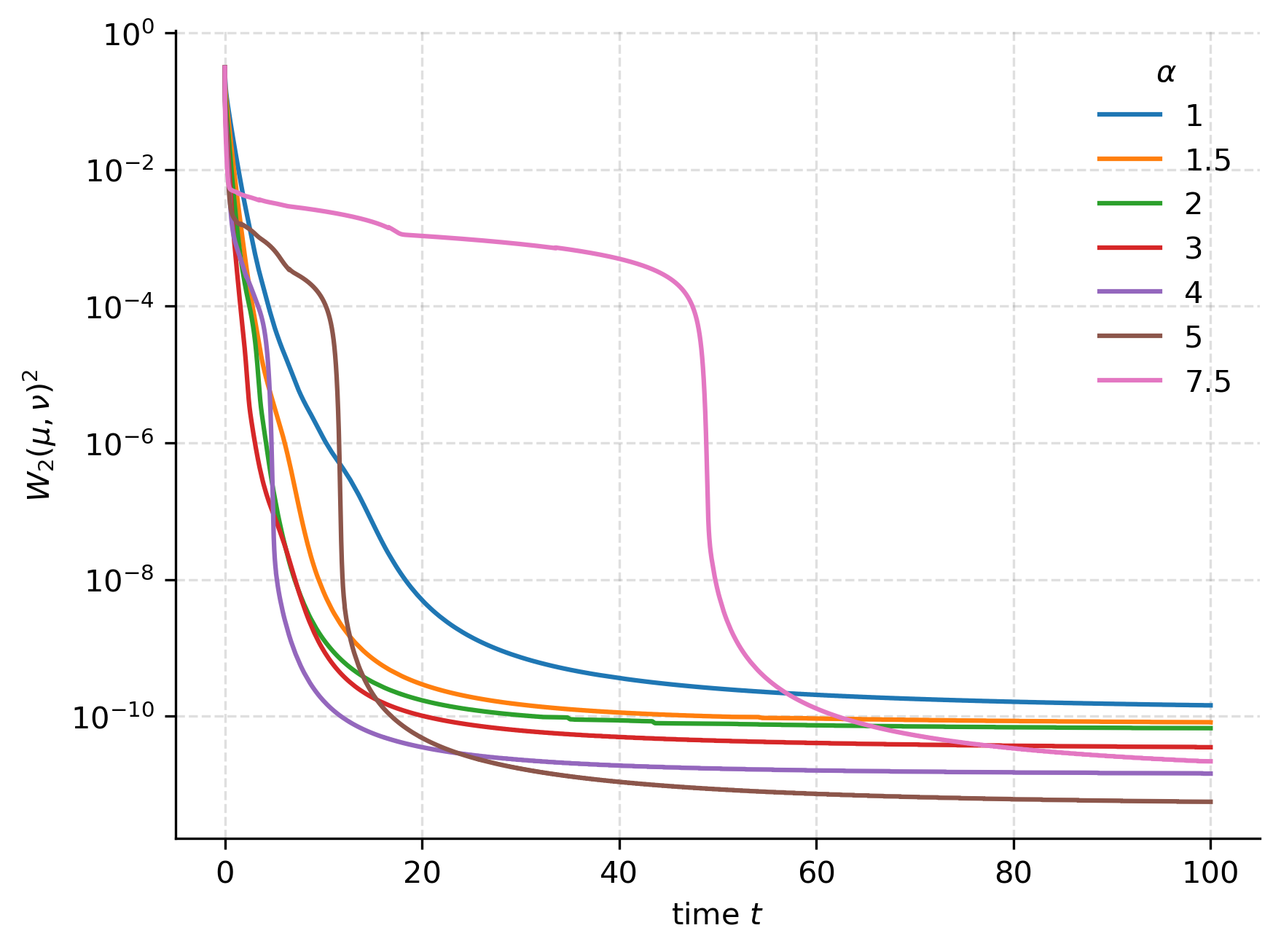}
    \caption{Comparison of squared MMD  (\textbf{left}) and $W_2$ distance (\textbf{right}) along the flow for different $\alpha$, where $\nu$ is the three circles target.
    Note the logarithmic axis.}
    \label{fig:circles_comparison}
\end{figure}

\begin{figure}
    \centering
    \begin{subfigure}[t]{.2\textwidth}
        \includegraphics[width=\linewidth, valign=c]{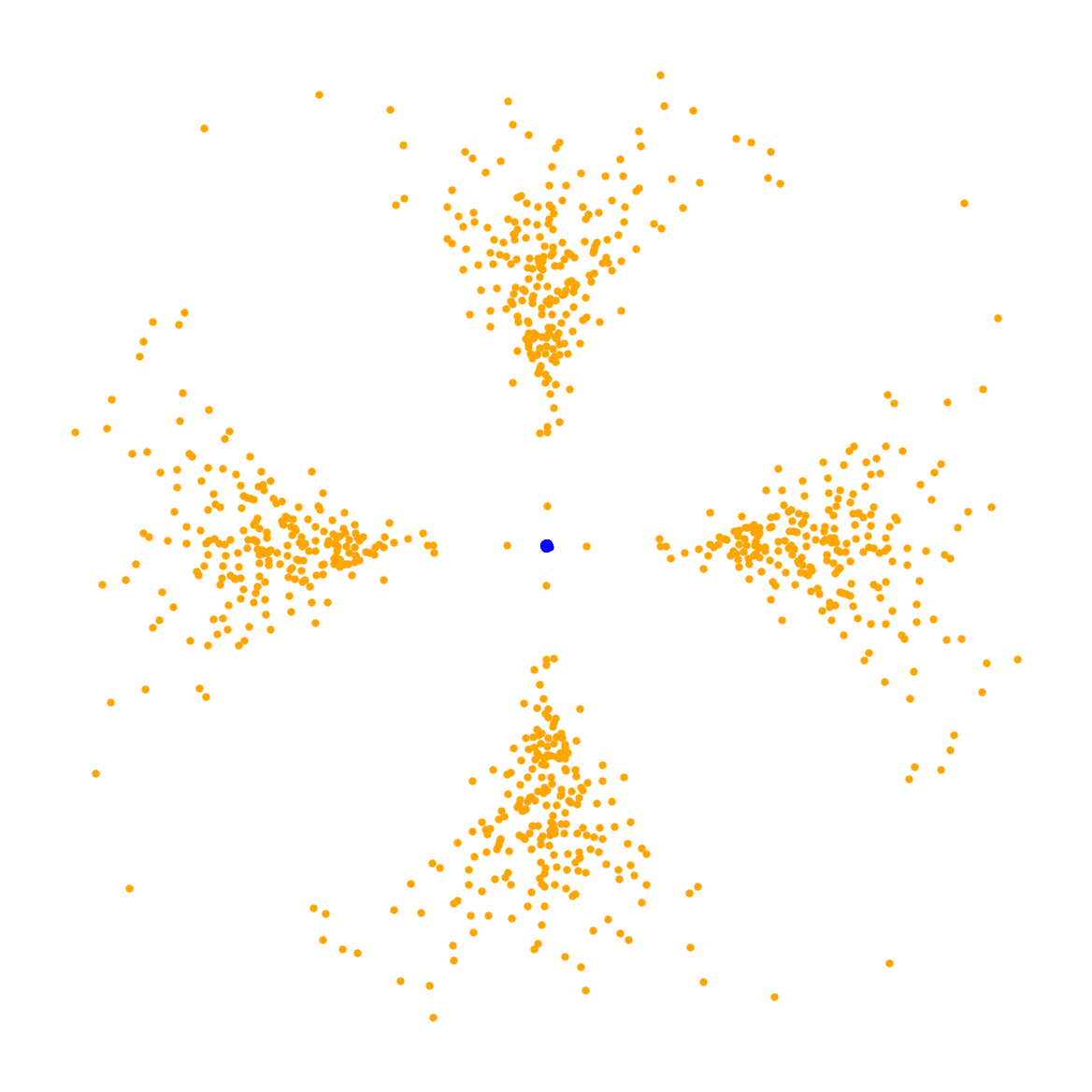}
    \caption*{t=0}
    \end{subfigure}%
    \begin{subfigure}[t]{.2\textwidth}
        \includegraphics[width=\linewidth, valign=c]{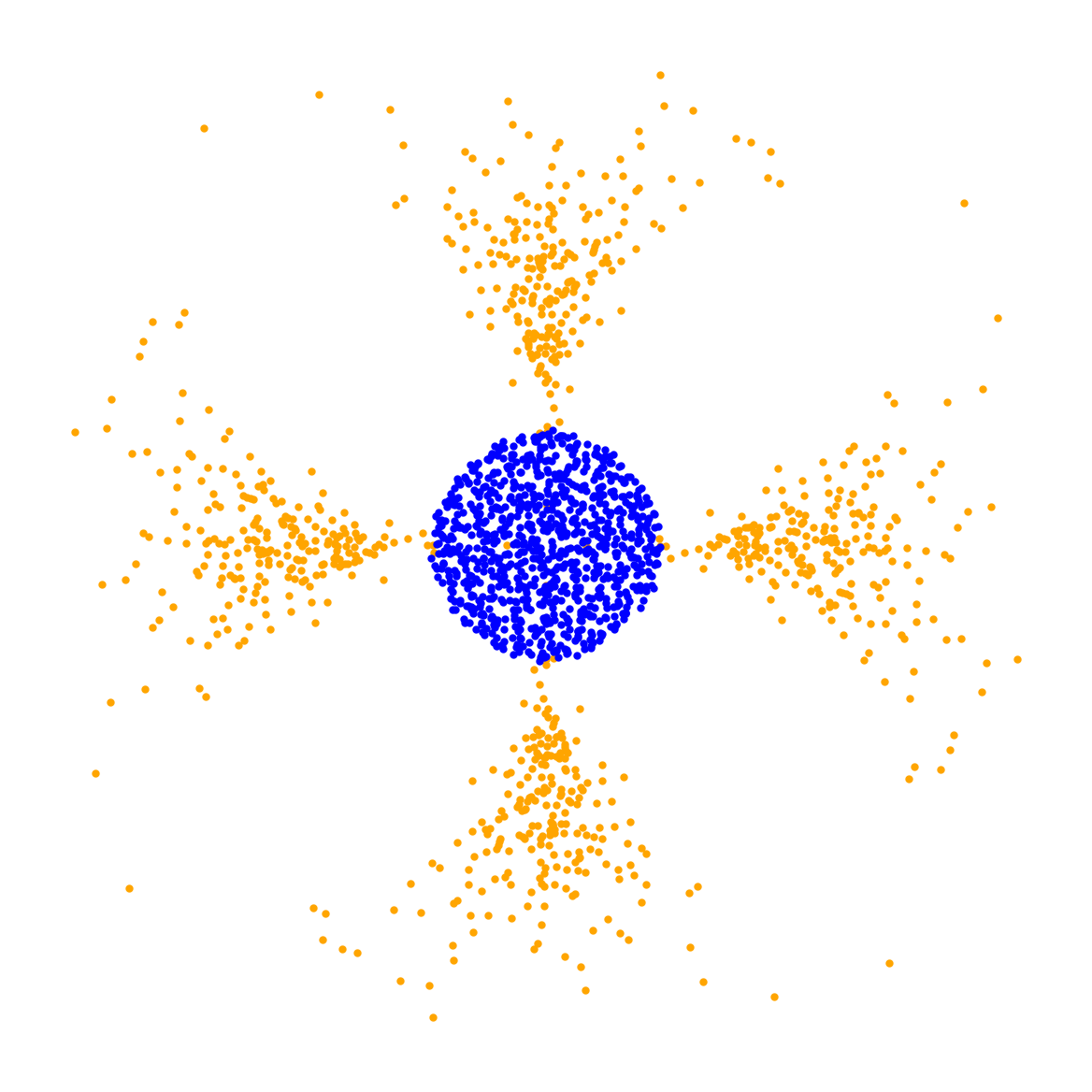}
    \caption*{t=0.1}
    \end{subfigure}%
    \begin{subfigure}[t]{.2\textwidth}
        \includegraphics[width=\linewidth, valign=c]{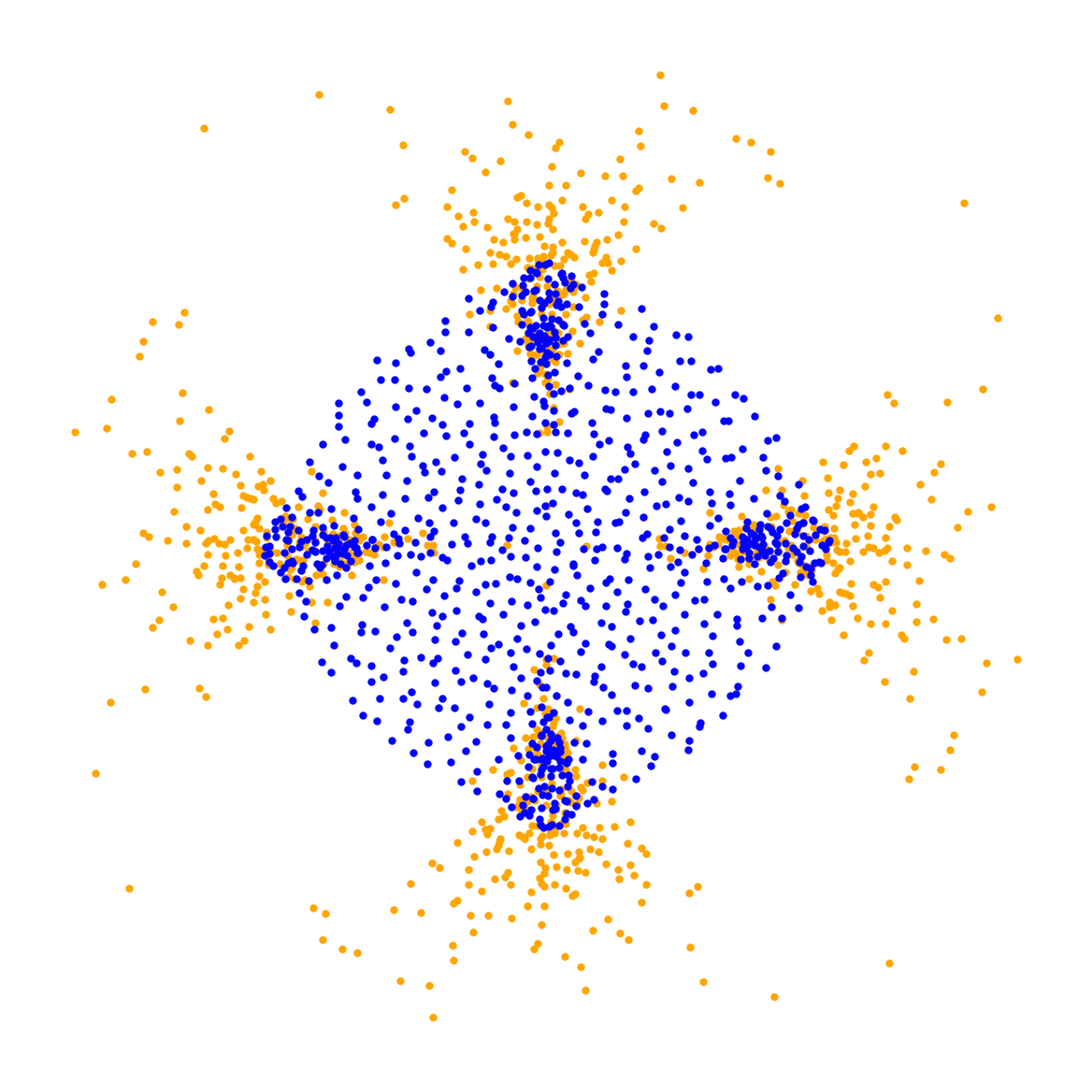}
    \caption*{t=1}
    \end{subfigure}%
    \begin{subfigure}[t]{.2\textwidth}
        \includegraphics[width=\linewidth, valign=c]{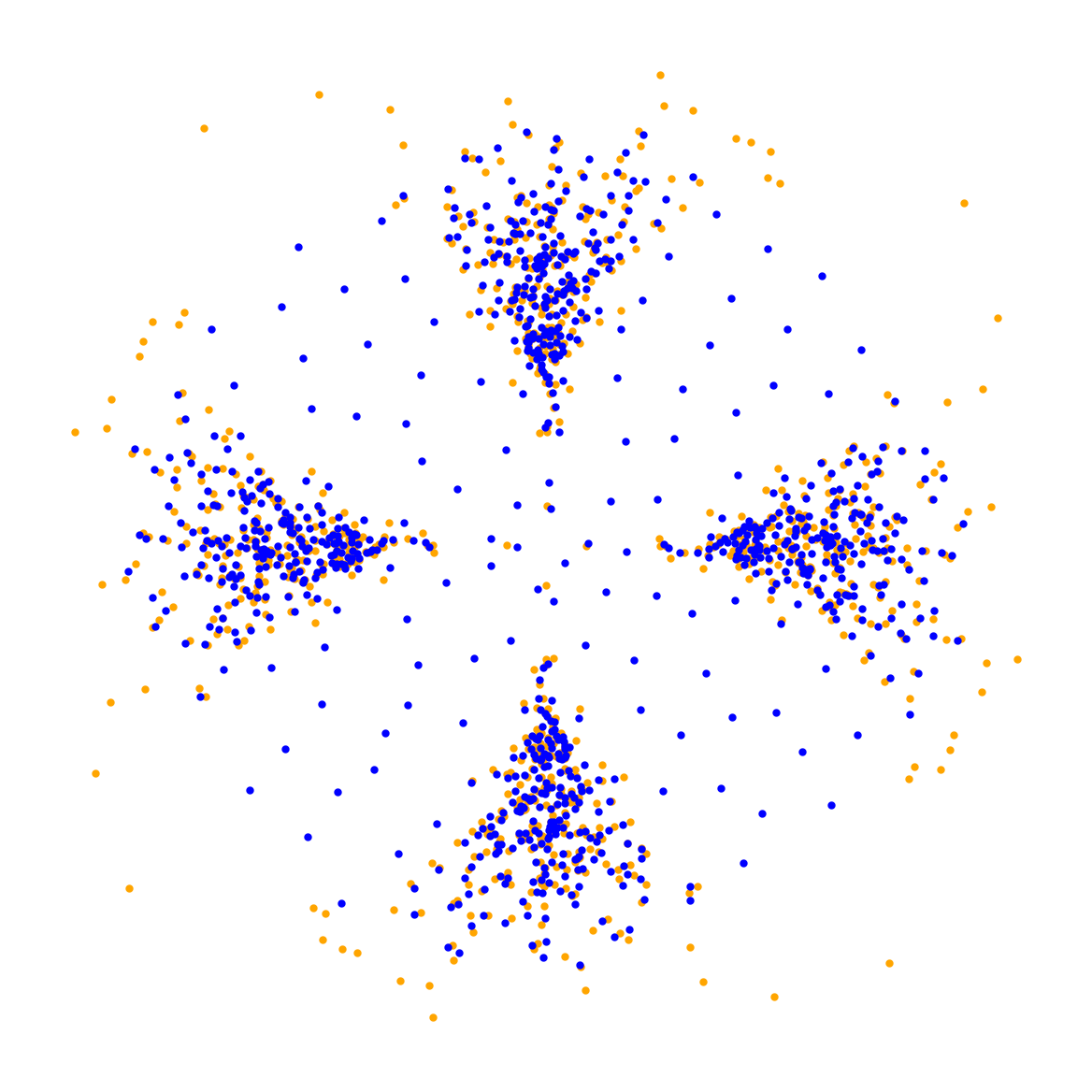}
    \caption*{t=10}
    \end{subfigure}%
    \begin{subfigure}[t]{.2\textwidth}
        \includegraphics[width=\linewidth, valign=c]{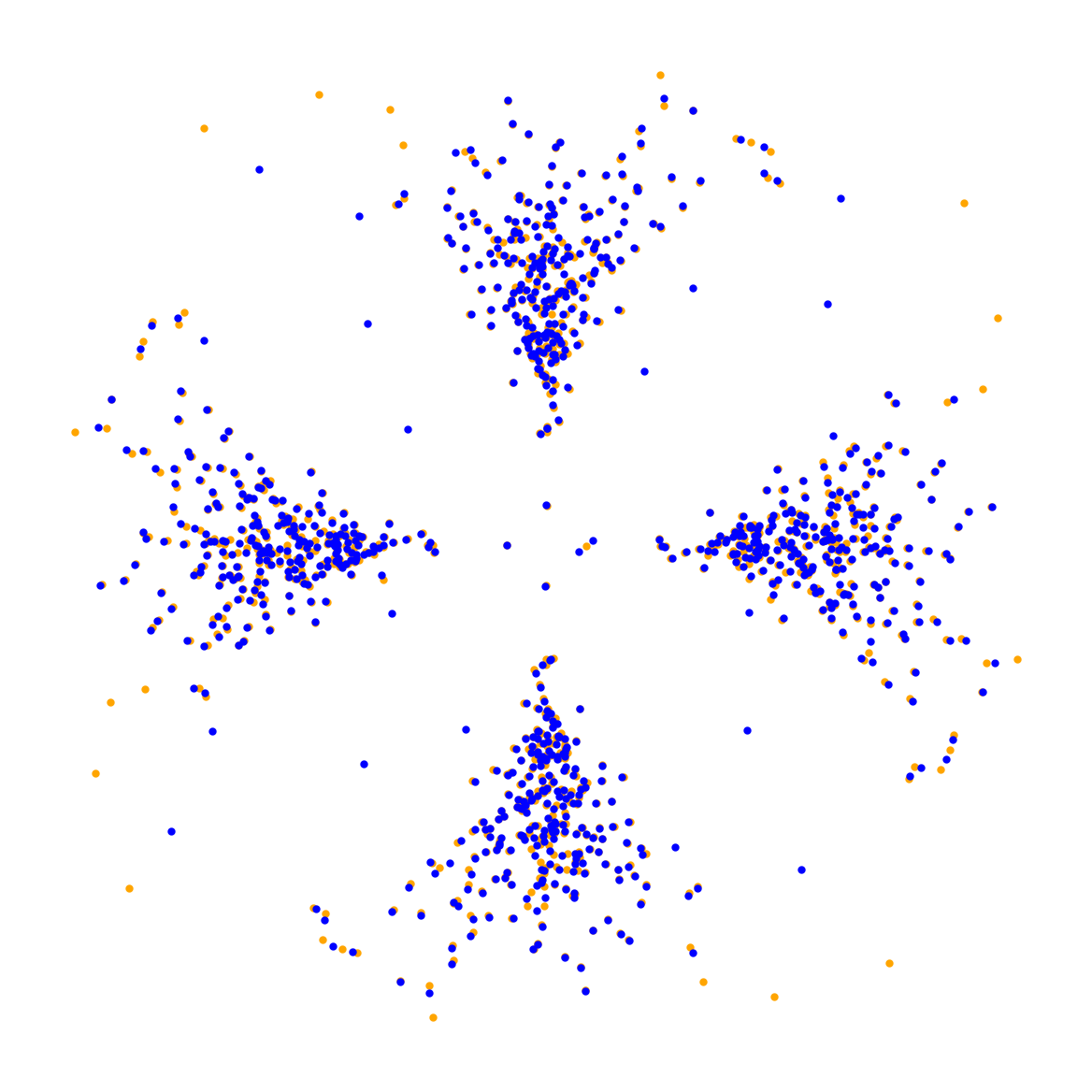}
    \caption*{t=50}
    \end{subfigure}%
    \caption{WGF of the regularized Tsallis-$\alpha$ divergence  $D_{f_{7.5}, \nu}^{\lambda}$ for Neals cross.}
    \label{fig:Cross}
    \vspace{.2cm}
    
    \nextfloat
    \centering
    \includegraphics[width=.49\textwidth]{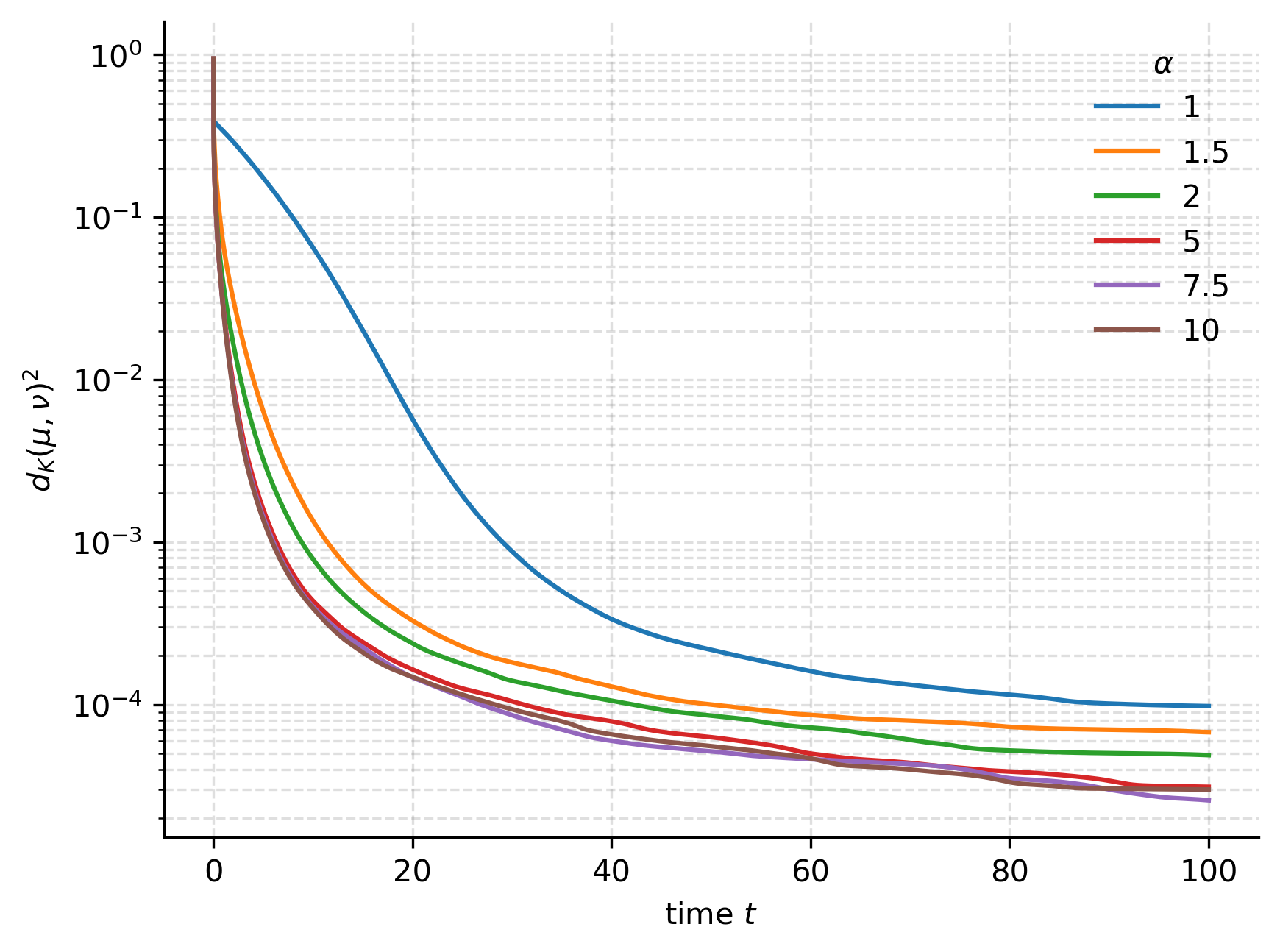}
    \includegraphics[width=.49\textwidth]{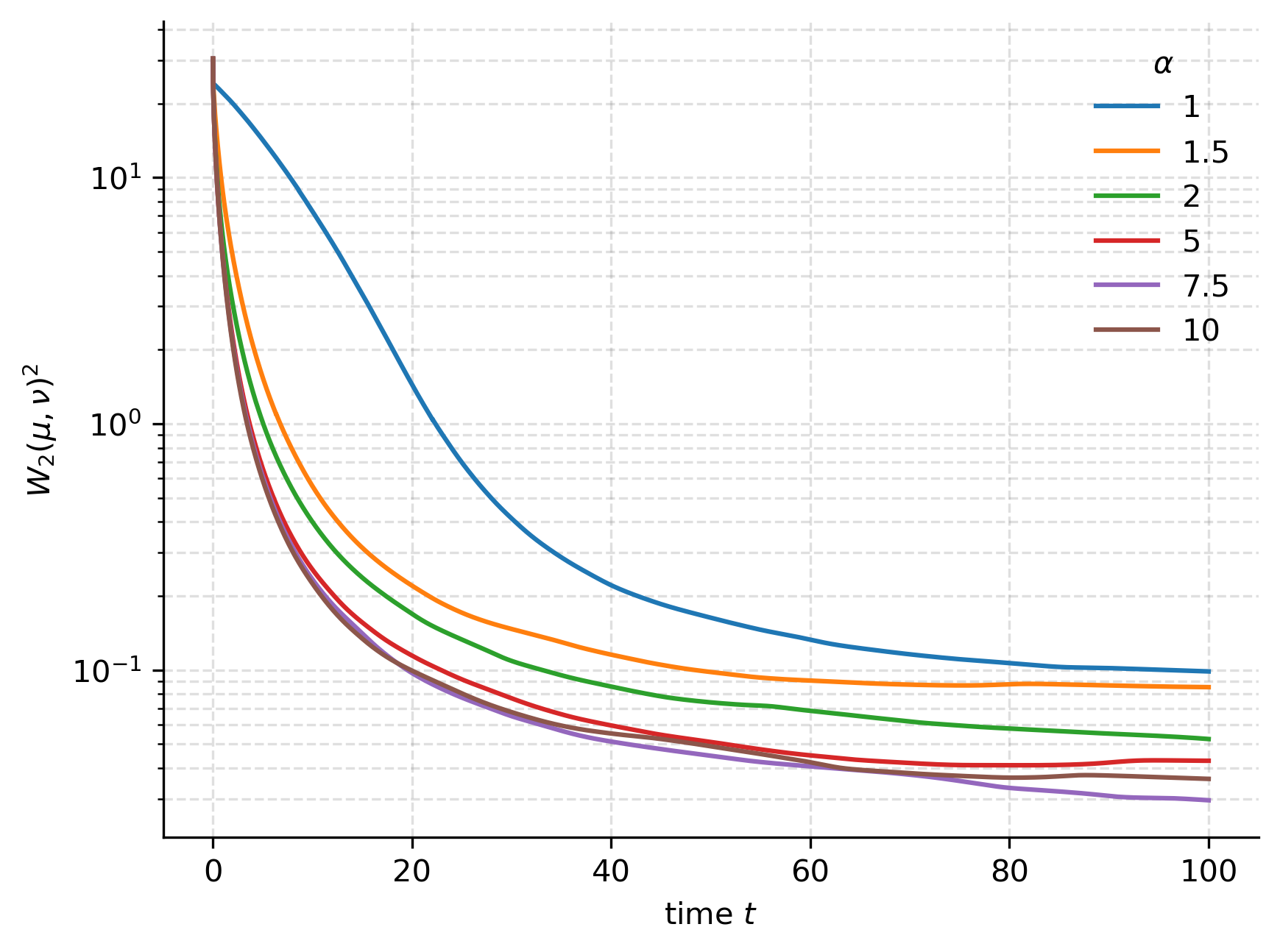}
    \caption{Comparison of the flow for $D_{f_{\alpha}, \nu}^{\lambda}$, where $\nu$ is Neal's cross.
    We depict the squared MMD \textbf{(left)} and Wasserstein distance \textbf{(right)} to $\nu$ for different $\alpha$.}
    \label{fig:CrossAlphaCompare}
\end{figure}

\textbf{Three rings target}
First, we consider the three rings target from \cite[Fig.~1]{GAG2021}.
Our simulations are provided in Figure \ref{fig:circlesAlpha}.
The starting point $\mu_0$ of the flow are samples from a normal distribution with variance $s^2 = \num{2d-3}$ around the leftmost point on the rightmost circle.
Further, we choose the kernel width $\sigma^2 = \num{5e-2}$, regularization parameter $\lambda = \num{e-2}$, and step size $\tau = \num{e-3}$.

For larger $\alpha$, the particles advance faster towards the leftmost circle in the beginning, and accordingly, the MMD decreases the fastest; see also Figure~\ref{fig:circles_comparison}.
On the other hand, for too large $\alpha$, there are more outliers (points that are far away from the rings) at the beginning of the flow.
Even at $t = 50$, some of them remain between the rings, which contrasts our observations for smaller $\alpha$.
This is also reflected by the fact that the MMD and $W_2$ values plateau for these values of $\alpha$ before they finally converge to zero.
The \enquote{sweet spot} for $\alpha$ seems to be $\alpha \in [3,4]$ since then the MMD and $W_2$ loss drop below $\num{e-9}$ the fastest.
For the first few iterations, the MMD values are monotone with respect to $\alpha$: the lower curve is the one belonging to $\alpha = 7.5$, the one above belongs to $\alpha = 5$, and the top one belongs to $\alpha = 1$.
For the last time steps, this order is nearly identical.
If $\alpha \in \{ 10, 50, 100, 500 \}$, then the flow behaves even worse in the sense that the plateau phases becomes longer, i.e., both the $W_2$ and the squared MMD loss converge to 0 even slower.

\textbf{Neal's cross target}
Inspired by \cite[Fig.~1f]{Xu2023}, the target $\nu$ comprises four identical versions of Neal's funnel, each rotated by 90 degrees about the origin, see Figure~\ref{fig:Cross}.
We generate the samples $\{ (x_{1, k}, x_{2, k}) \}_{k = 1}^{N}$ by drawing normally distributed samples $x_{2, k} \sim \NN(7.5, 2)$ and $x_{1, k} \sim \NN(0, e^{\frac{1}{3} x_{2, k}})$, where $\NN(m, s^2)$ is a univariate normal distribution with mean $m$ and variance $s^2$.
For our simulations, we choose $\alpha = 7.5$, $\lambda = \num{e-2}$, $\tau = \num{e-3}$ and $\sigma^2 = 0.25$.

We observe that the particles (blue) are pushed towards the regions with a high density of target particles (orange) and that the low-density regions at the ends of the funnel are not matched exactly.
Since the target $\nu$ is usually obtained by drawing samples from a distribution, this behavior is acceptable.
Figure~\ref{fig:CrossAlphaCompare} shows that for $\alpha = 1$ the gradient flow with respect to $D_{f_{\alpha}, \nu}^{\lambda}$ recovers the target slower, both in terms of the MMD or the $W_2$ metric and that $\alpha = 7.5$ performs best.

\textbf{Bananas target}
This experiment is inspired by a talk from Aude Genevay\footnote{MIFODS Workshop on Learning with Complex Structure 2020, see \url{https://youtu.be/TFdIJib_zEA?si=B3fsQkfmjea2HCA5}.}, see Figure \ref{fig:disconnectedTarget}.
The target $\nu$ is multimodal and the \enquote{connected components} of its support are far apart.
Additionally, the chosen initial measure does not take the properties of $\supp(\nu)$ into account.
Still, we observe the convergence of the particles to the target $\nu$.
We also observe a mode-seeking behavior: the particles concentrate near the mean of the right \enquote{banana} first, which is even more pronounced for the left \enquote{banana} at later time points.

Since the parameter $\lambda$ penalizes the disjoint support condition, we choose $\lambda$ to be higher than for the other targets, namely $\lambda = 1$, so that the particles are encouraged to \enquote{jump} from one mode to another.
Further, we choose $\tau = \num{e-1}$, $\sigma = \num{2e-2}$ and $\alpha = 3$ for the simulations.
Empirically, we observed that the value of $\alpha$ makes no difference, though. 

\begin{figure}[tbp]
    \centering
    \begin{subfigure}[t]{.245\textwidth}
      \includegraphics[width=\linewidth, valign=c]{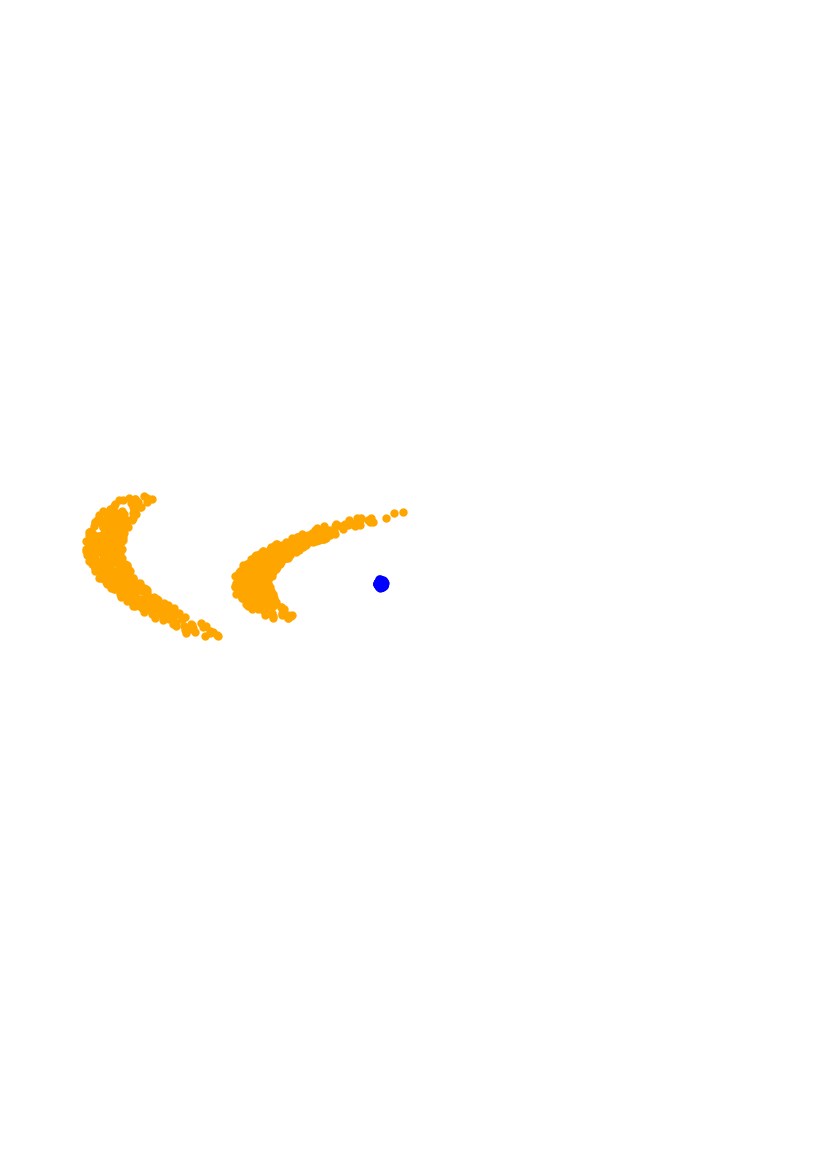}
    \caption*{t=0}
    \end{subfigure}%
    \begin{subfigure}[t]{.245\textwidth}
      \includegraphics[width=\linewidth, valign=c]{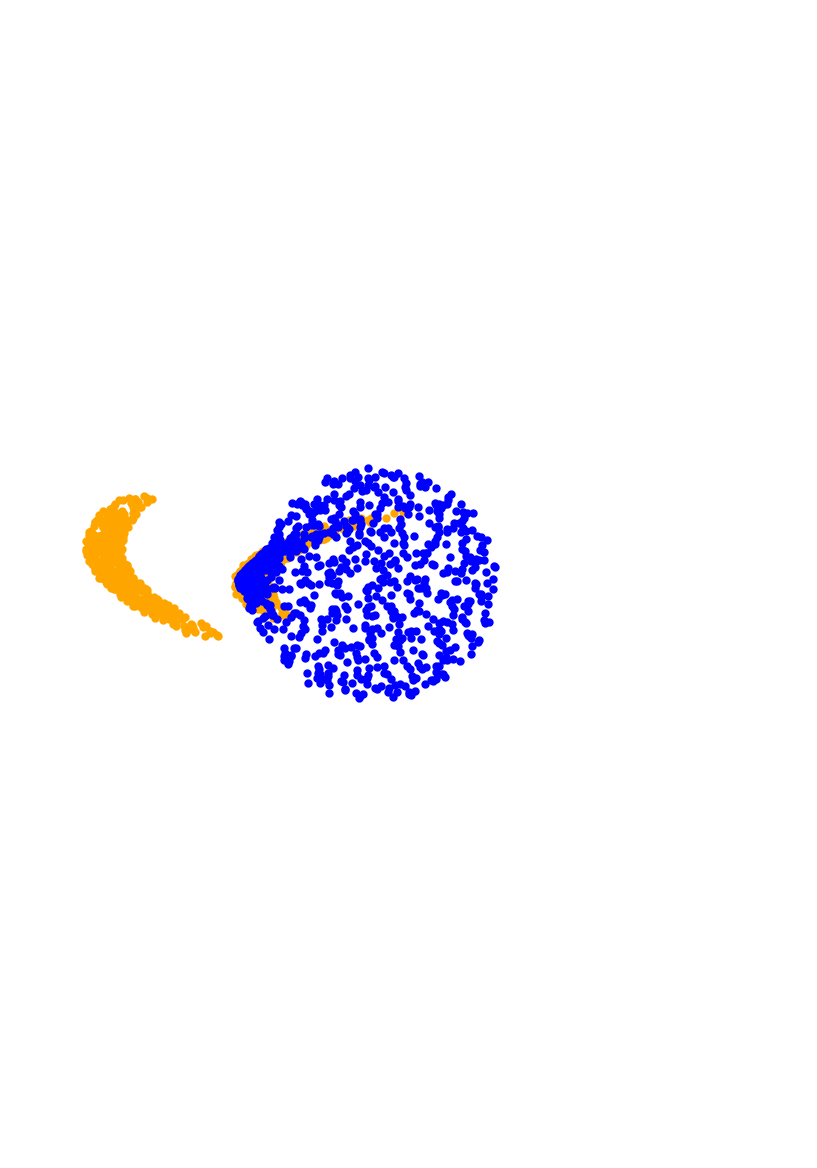}
    \caption*{t=10}
    \end{subfigure}%
    \begin{subfigure}[t]{.245\textwidth}
      \includegraphics[width=\linewidth, valign=c]{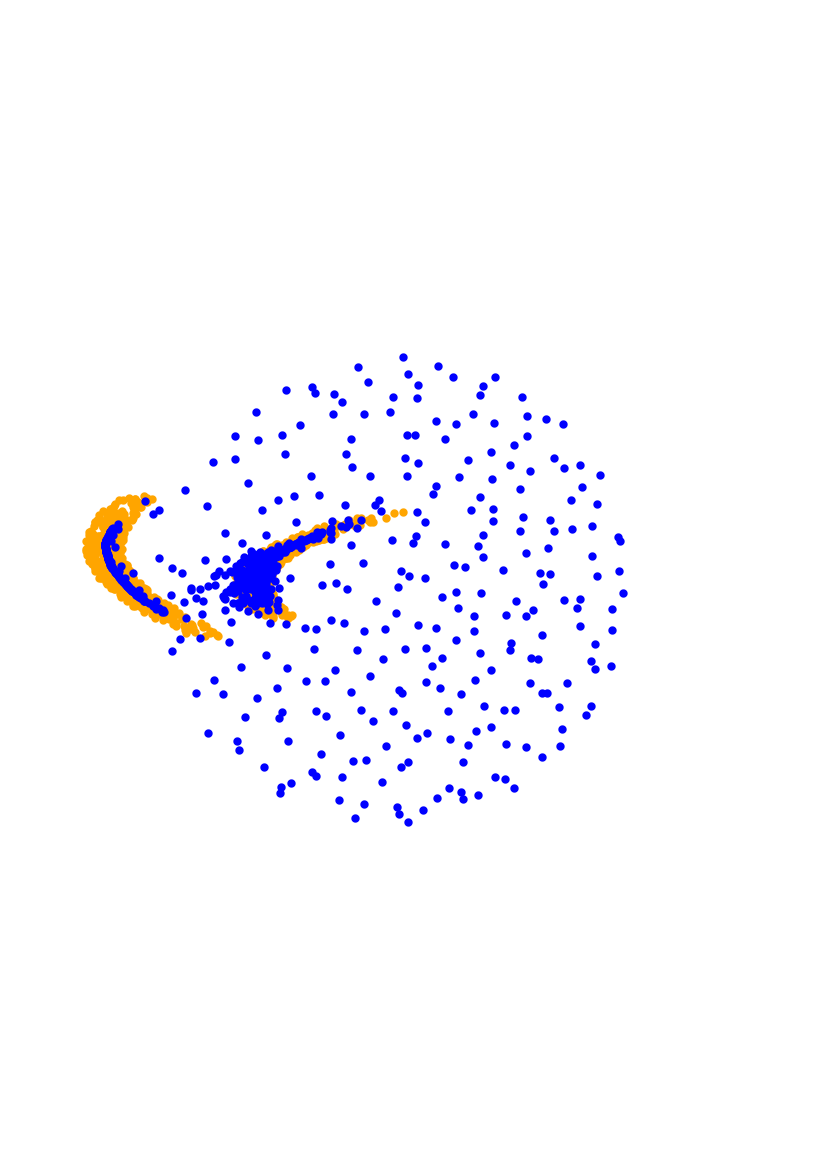}
    \caption*{t=100}
    \end{subfigure}
    
    \begin{subfigure}[t]{.245\textwidth}
        \includegraphics[width=\linewidth, valign=c]{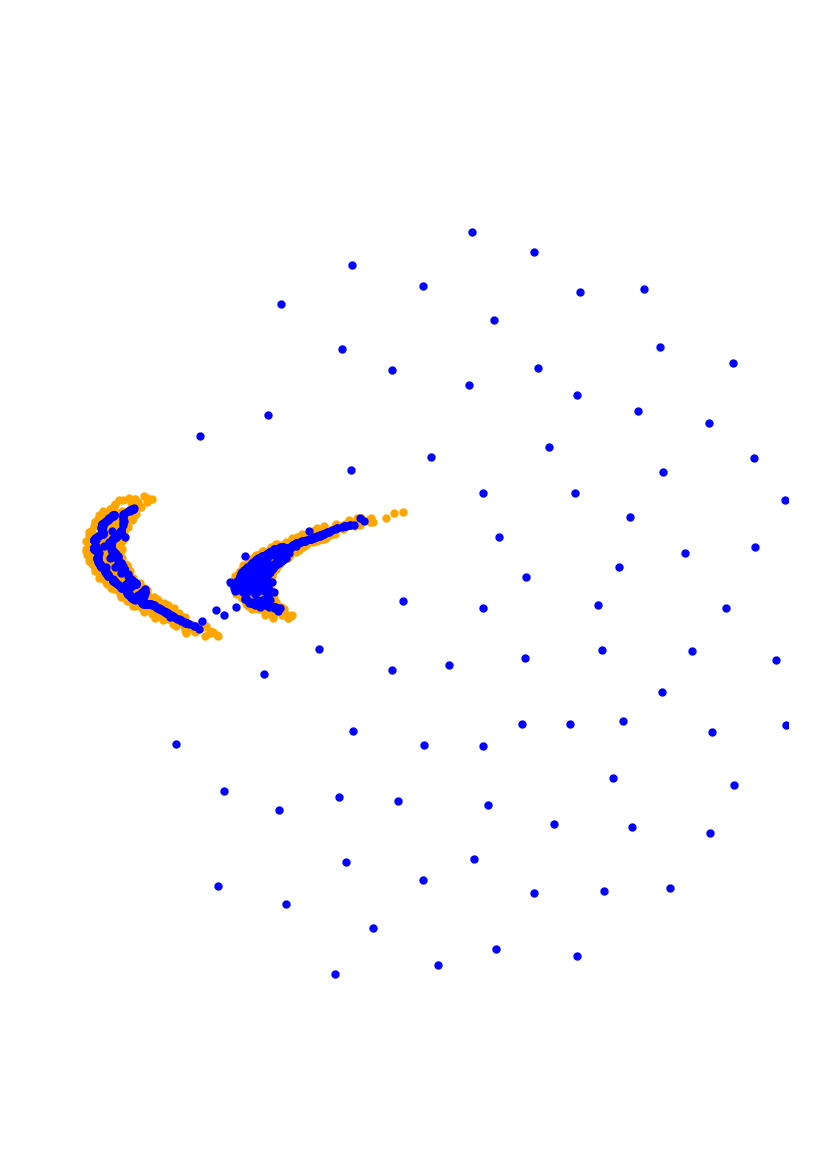}
        \caption*{t=1000}
    \end{subfigure}%
    \begin{subfigure}[t]{.245\textwidth}
      \includegraphics[width=\linewidth, valign=c]{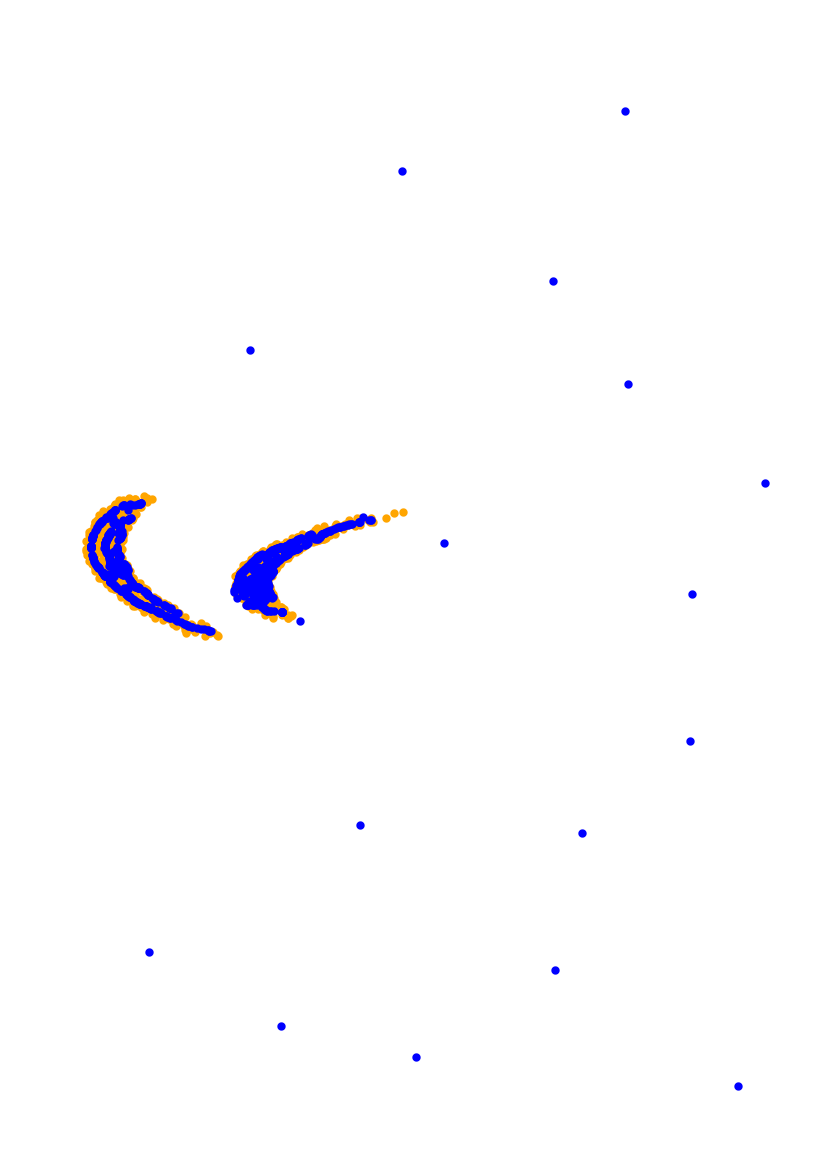}
    \caption*{t=10000}
    \end{subfigure}%
     \begin{subfigure}[t]{.245\textwidth}
      \includegraphics[width=\linewidth, valign=c]{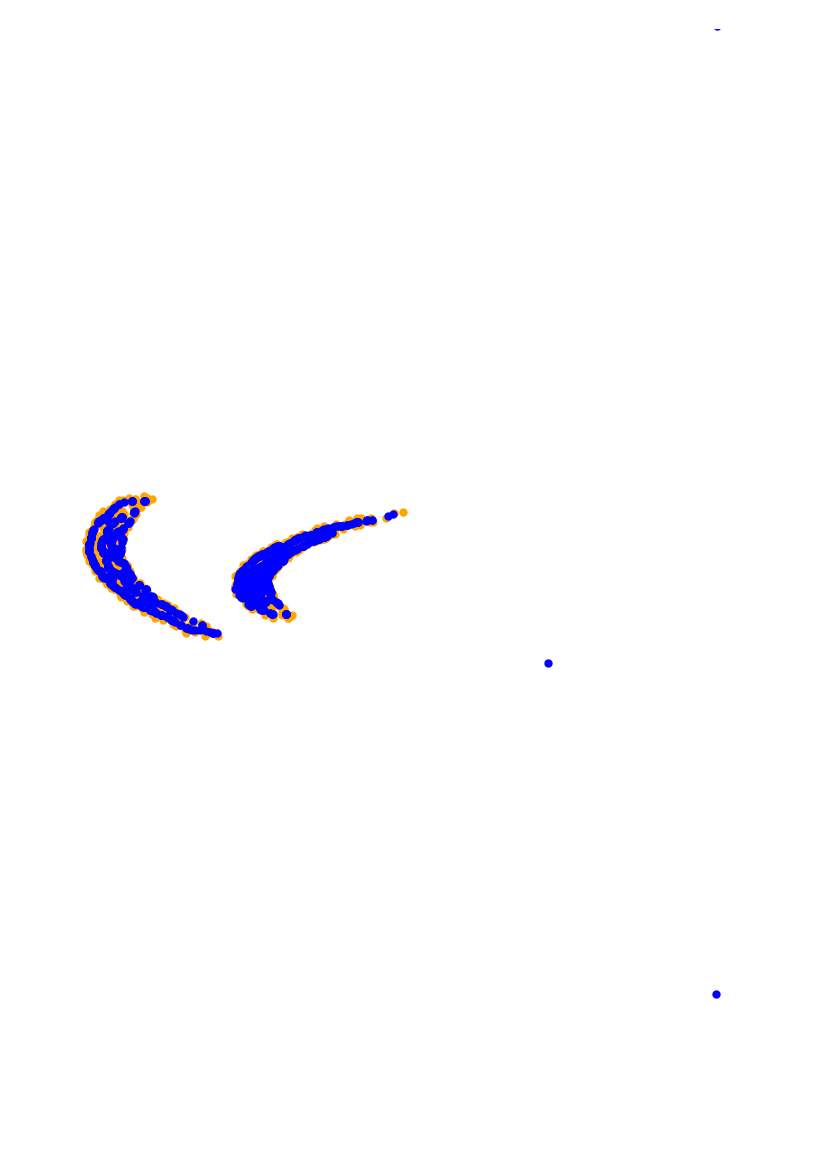}
    \caption*{t=1000000}
    \end{subfigure}%
    \caption{WGF of the regularized Tsallis-3 divergence
    $D_{f_{3}, \nu}^{\lambda}$ with the bananas target.
   }
    \label{fig:disconnectedTarget}
\end{figure}

\textbf{Finite recession constant and annealing}
Lastly, in Figures~\ref{fig:tvflow} and \ref{fig:tsallis12flow}, we show that the three rings target is also well recovered for entropy functions with finite recession constant.
To illustrate the convergence of our procedure, we first choose the total variation entropy function, $\tau = \num{e-3}$ and the kernel width $\sigma = \num{5e-2}$.
According to Lemma \ref{lemma:reprThmForFiniteRecConst}, we choose $\lambda = \num{2.25}$.
As a second example, we choose the $\frac{1}{2}$-Tsallis entropy function, $\tau = \num{e-3}$ and the kernel width $\sigma = \num{5e-2}$ as well as $\lambda = \num{2.25}$ (so that Lemma \ref{lemma:reprThmForFiniteRecConst} applies).

Since smaller $\lambda$ leads to better target recovery, we use an annealing heuristic (similar to \cite{C20}): Ultimately, we want to sample for the target distribution.
Hence, we reduce the value of $\lambda$ along the flow.
This is justified as follows.
For the Wasserstein gradient flow $(\gamma_t)_t$ with respect to $\smash{D_{f, \nu}^{\lambda}}$ for a fixed $\lambda$, we expect that $\smash{D_{f}^{\lambda}}(\gamma_t, \nu) \to 0$ as $t \to \infty$.
Then, by Corollary \ref{lemma:topology}(3), the same holds for $d_K(\gamma_t, \nu)$, see also Figures \ref{fig:circles_comparison} and \ref{fig:CrossAlphaCompare} for empirical evidence.
Based on Lemma \ref{lemma:reprThmForFiniteRecConst}, we can thus reduce the value of $\lambda$ along the flow while maintaining the finite-dimensional minimization problem.
In this last example, we divide the value of $\lambda$ by five three times, namely at $t \in \{5, 10, 20\}$.
This indeed improved the convergence.
\begin{figure}[tbp]
    \centering
    \begin{subfigure}[t]{.2\textwidth}
        \includegraphics[width=\linewidth, valign=c]{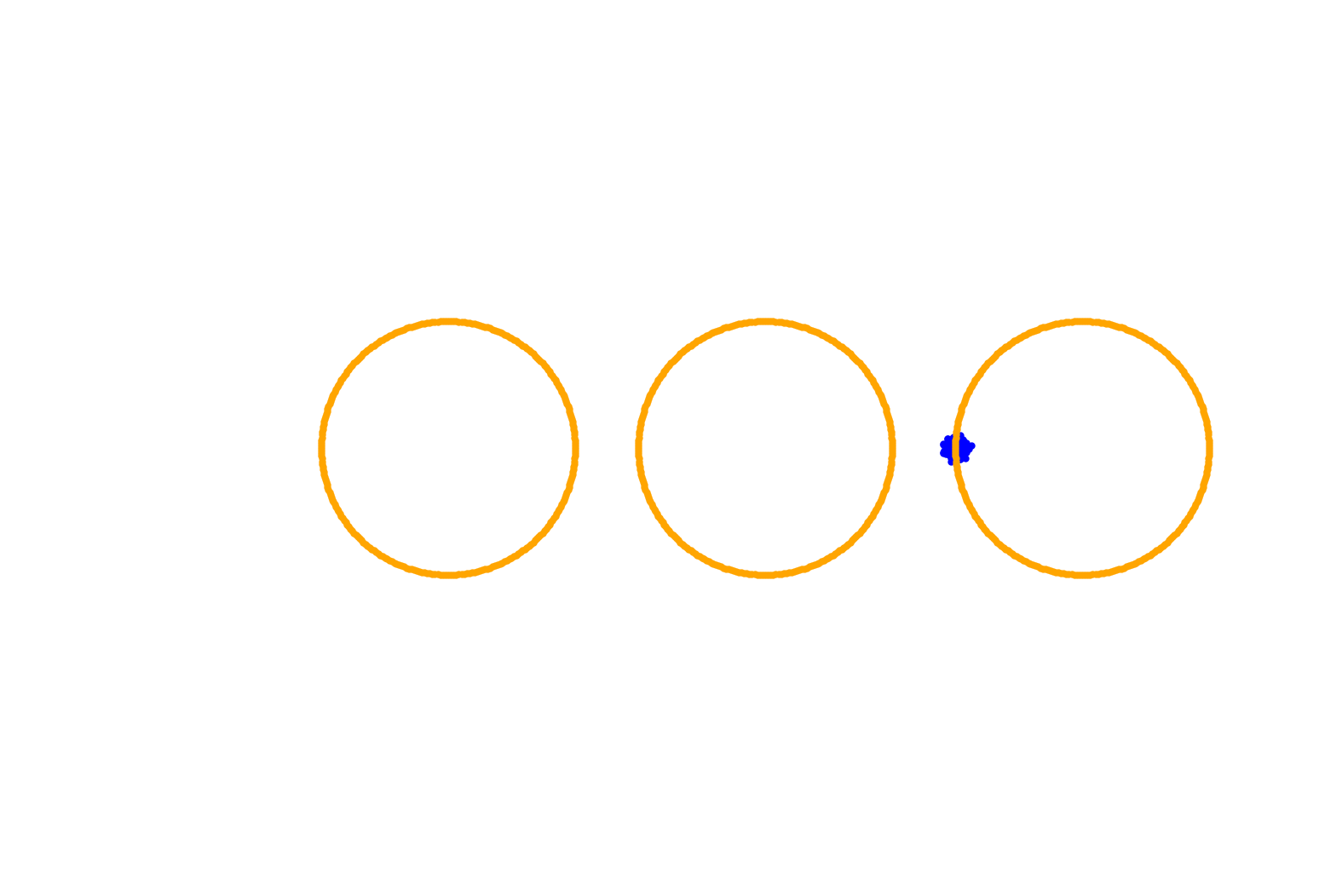}
    \caption*{t=0}
    \end{subfigure}%
    \begin{subfigure}[t]{.2\textwidth}
        \includegraphics[width=\linewidth, valign=c]{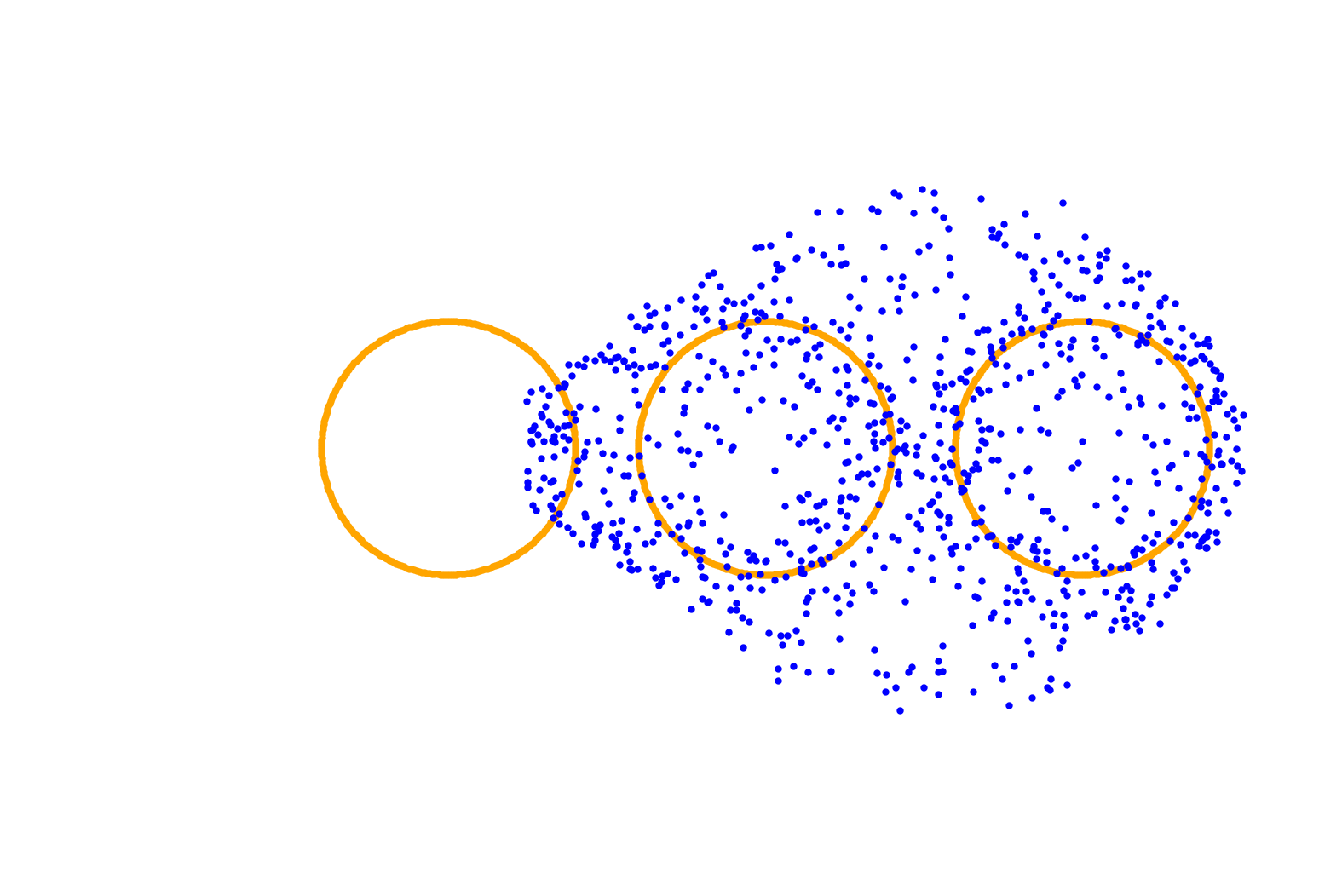}
    \caption*{t=1}
    \end{subfigure}%
    \begin{subfigure}[t]{.2\textwidth}
        \includegraphics[width=\linewidth, valign=c]{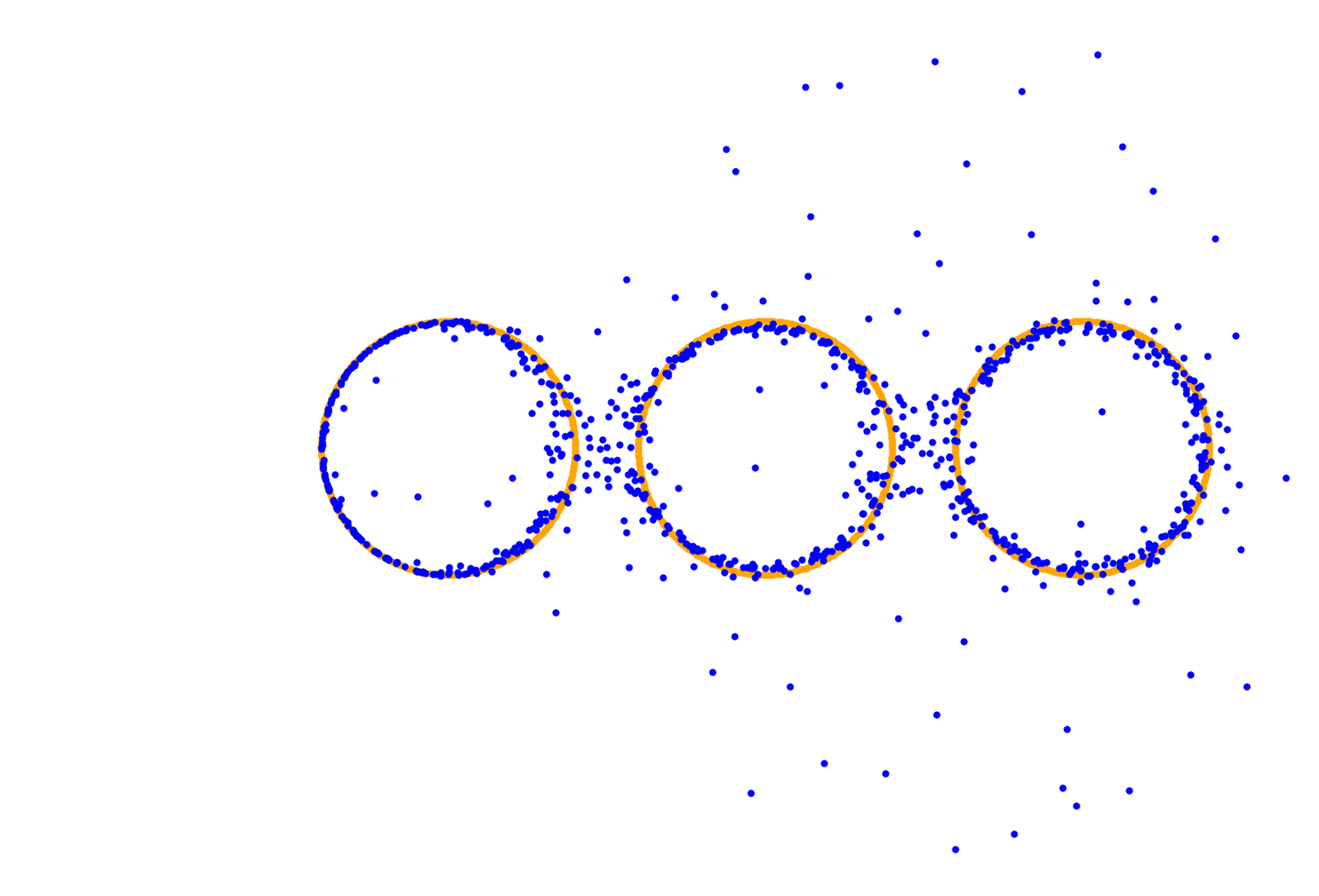}
    \caption*{t=10}
    \end{subfigure}%
    \begin{subfigure}[t]{.2\textwidth}
        \includegraphics[width=\linewidth, valign=c]{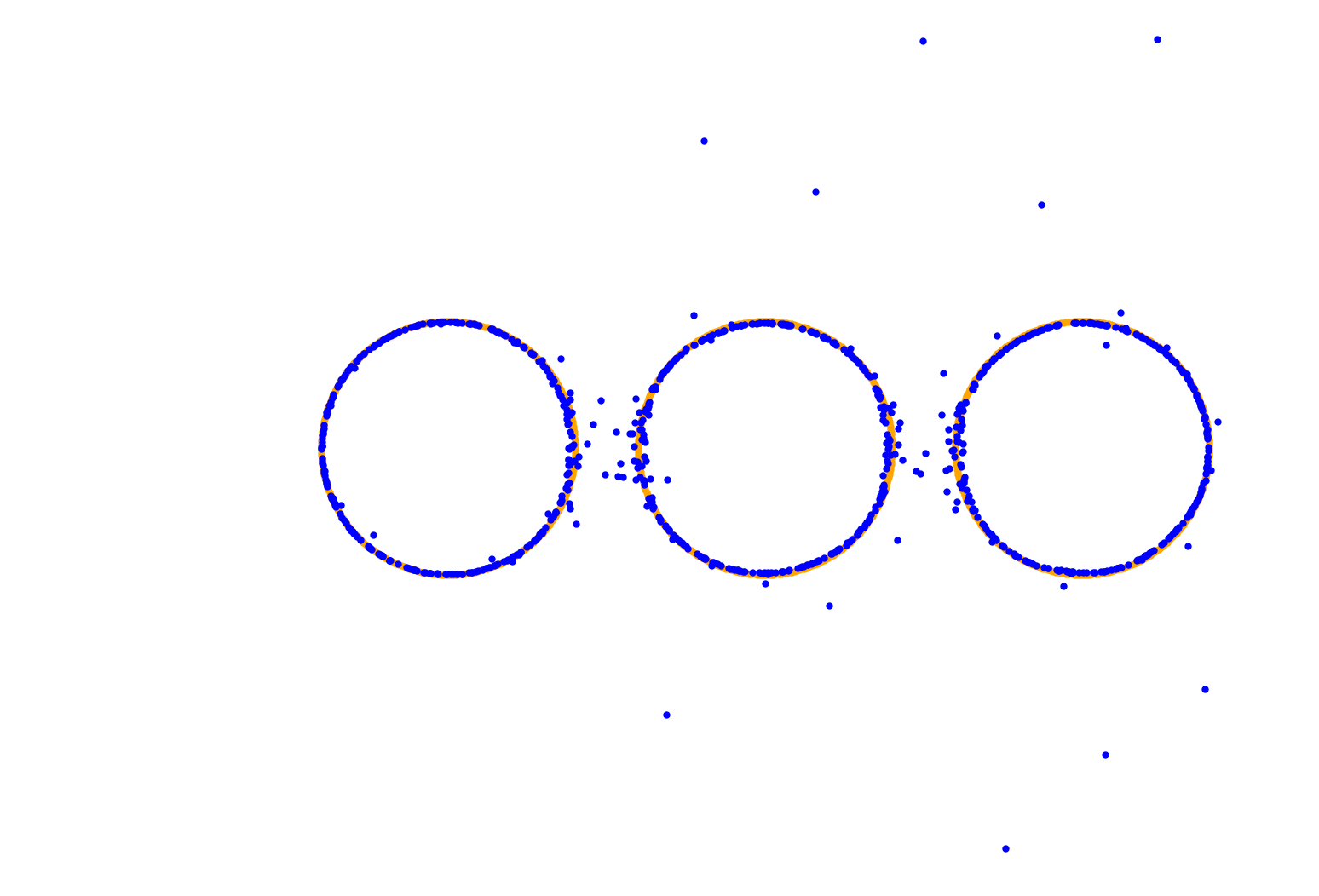}
    \caption*{t=50}
    \end{subfigure}%
    \begin{subfigure}[t]{.2\textwidth}
        \includegraphics[width=\linewidth, valign=c]{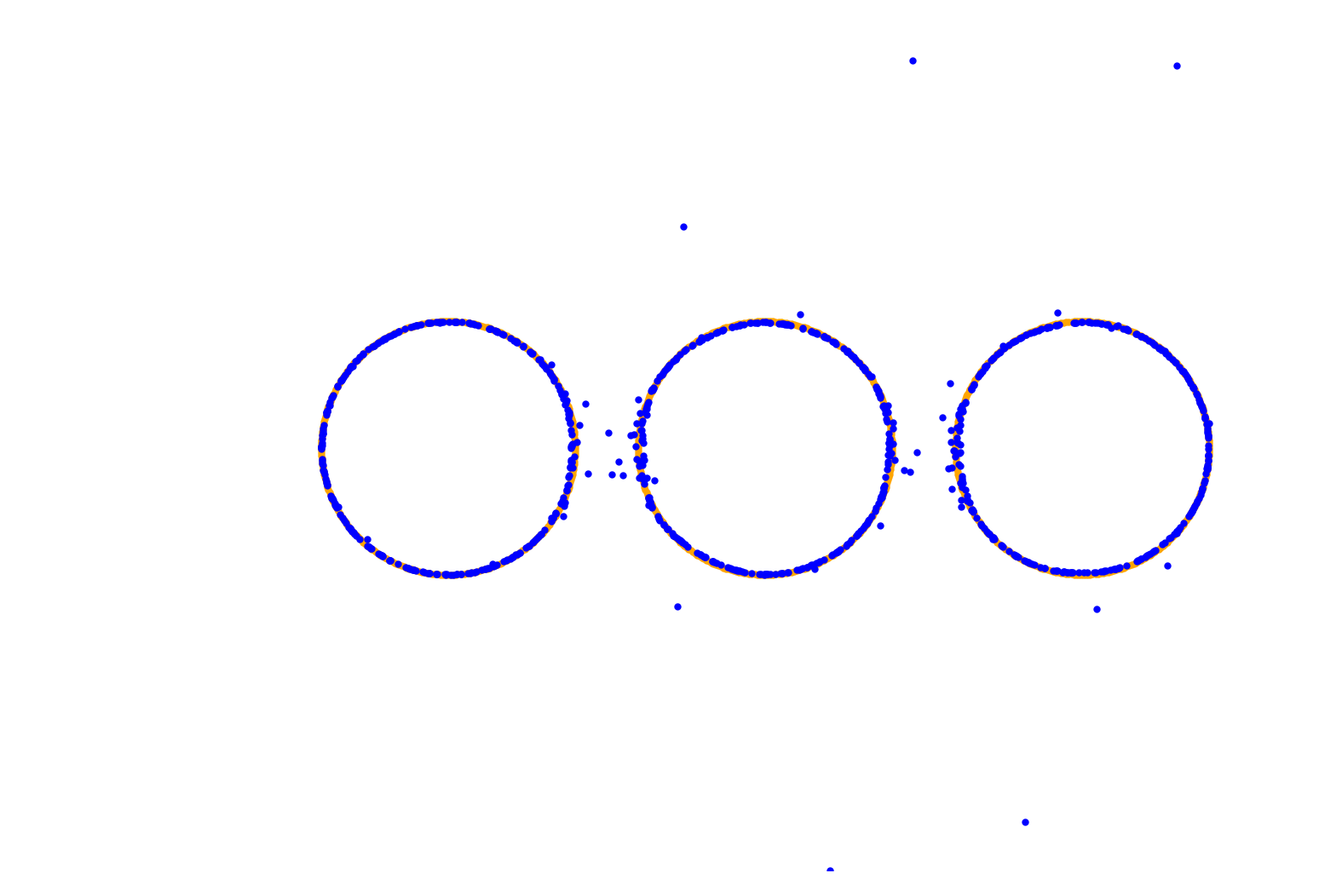}
    \caption*{t=100}
    \end{subfigure}%
    \caption{WGF of the regularized TV divergence $D_{f_{\TV}, \nu}^{\lambda}$ with the three rings target.}
    \label{fig:tvflow}
\end{figure}
\begin{figure}[tbp]
    \centering
    \begin{subfigure}[t]{.2\textwidth}
        \includegraphics[width=\linewidth, valign=c]{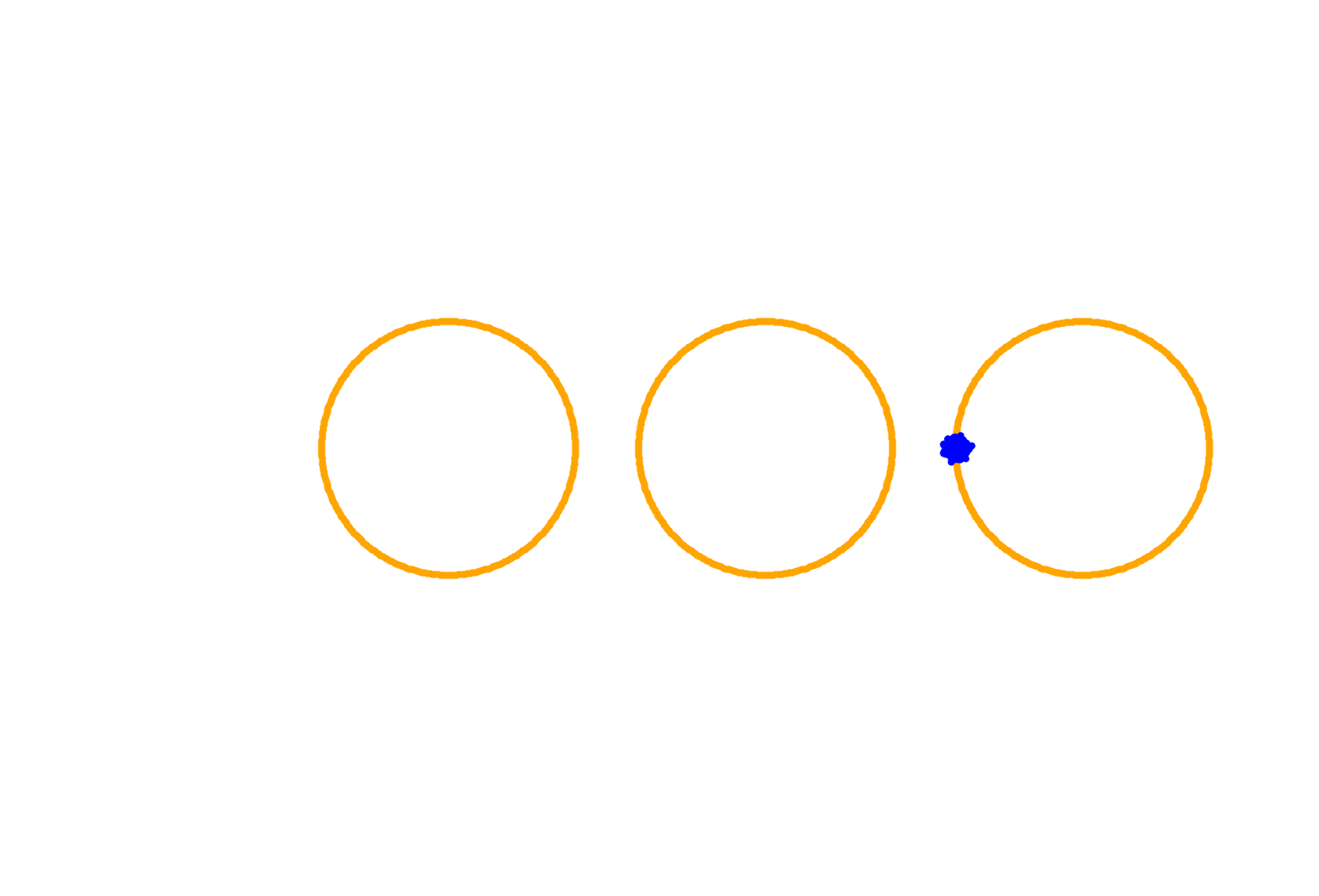}
        \includegraphics[width=\linewidth, valign=c]{img/finite_rec_annealing/annealing0.png}
    \caption*{t=0}
    \end{subfigure}%
    \begin{subfigure}[t]{.2\textwidth}
        \includegraphics[width=\linewidth, valign=c]{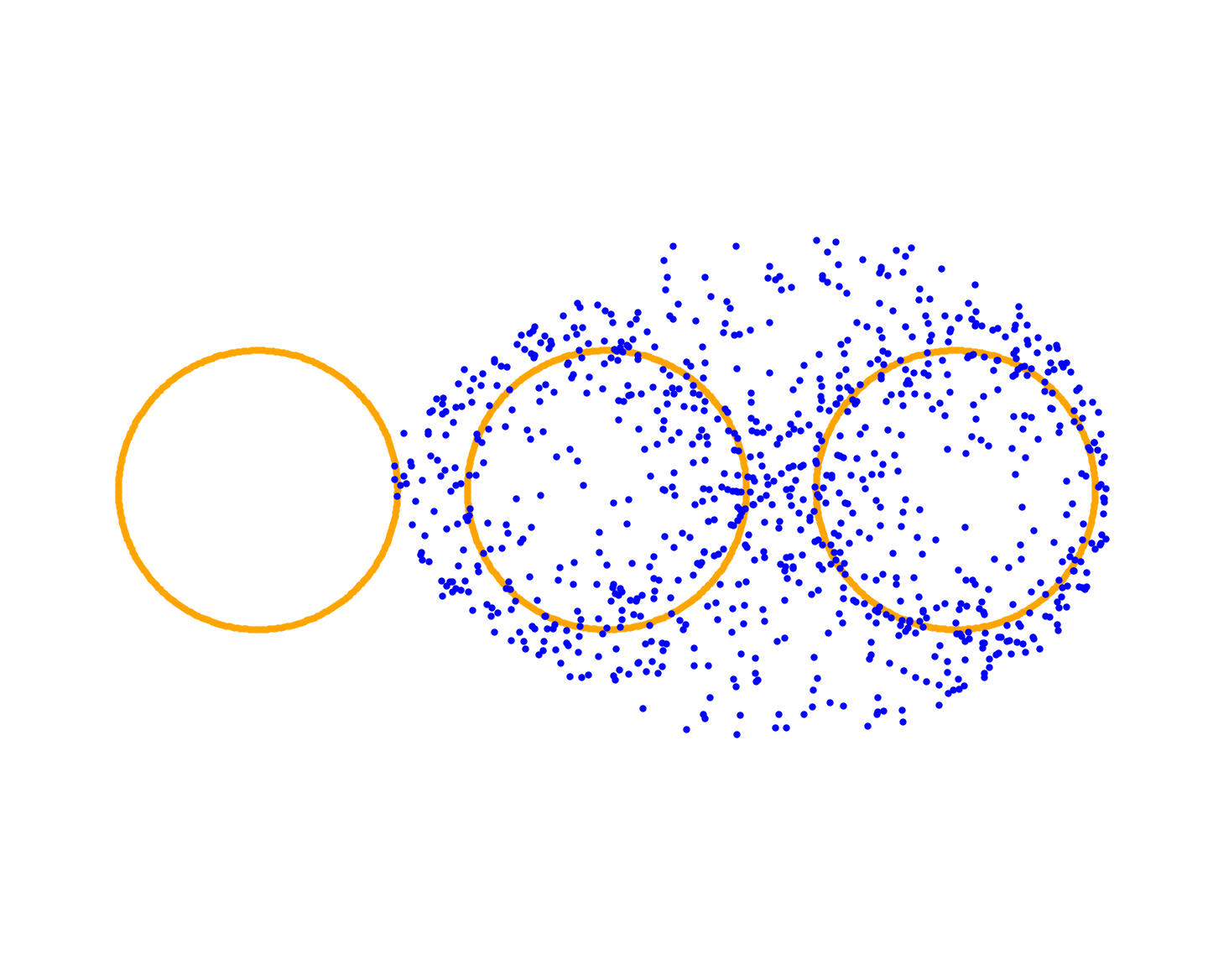}
        \includegraphics[width=\linewidth, valign=c]{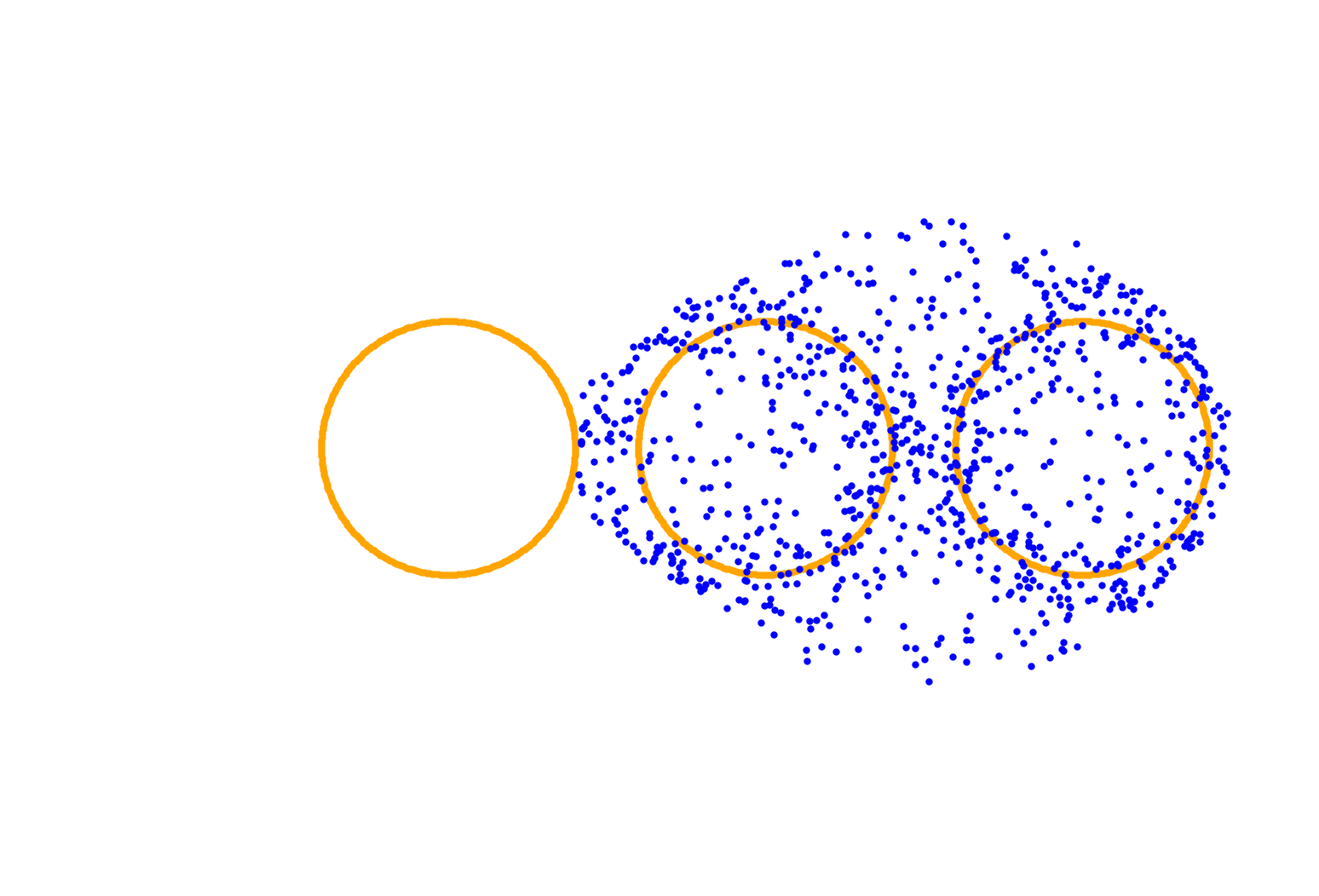}
    \caption*{t=1}
    \end{subfigure}%
    \begin{subfigure}[t]{.2\textwidth}
        \includegraphics[width=\linewidth, valign=c]{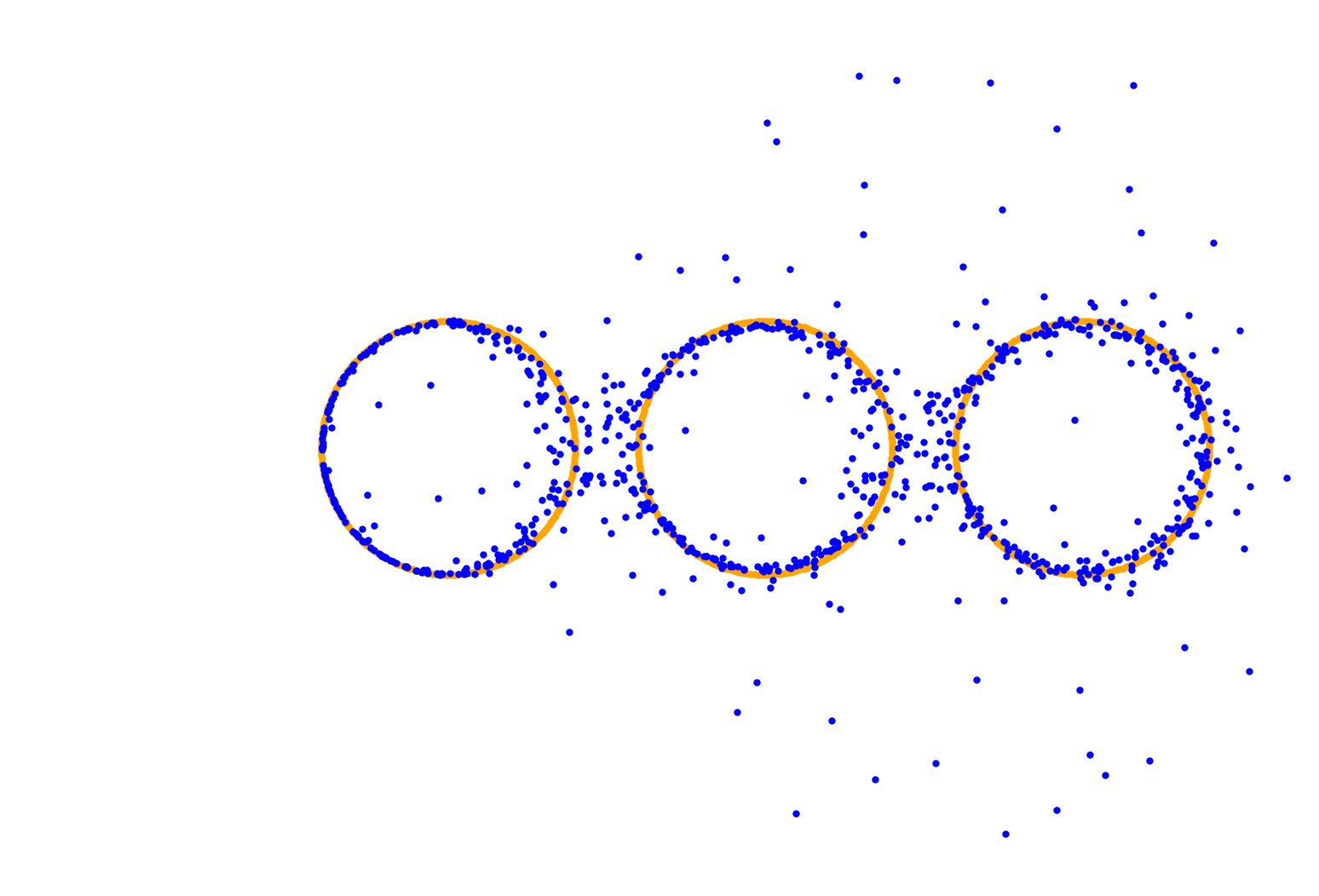}
        \includegraphics[width=\linewidth, valign=c]{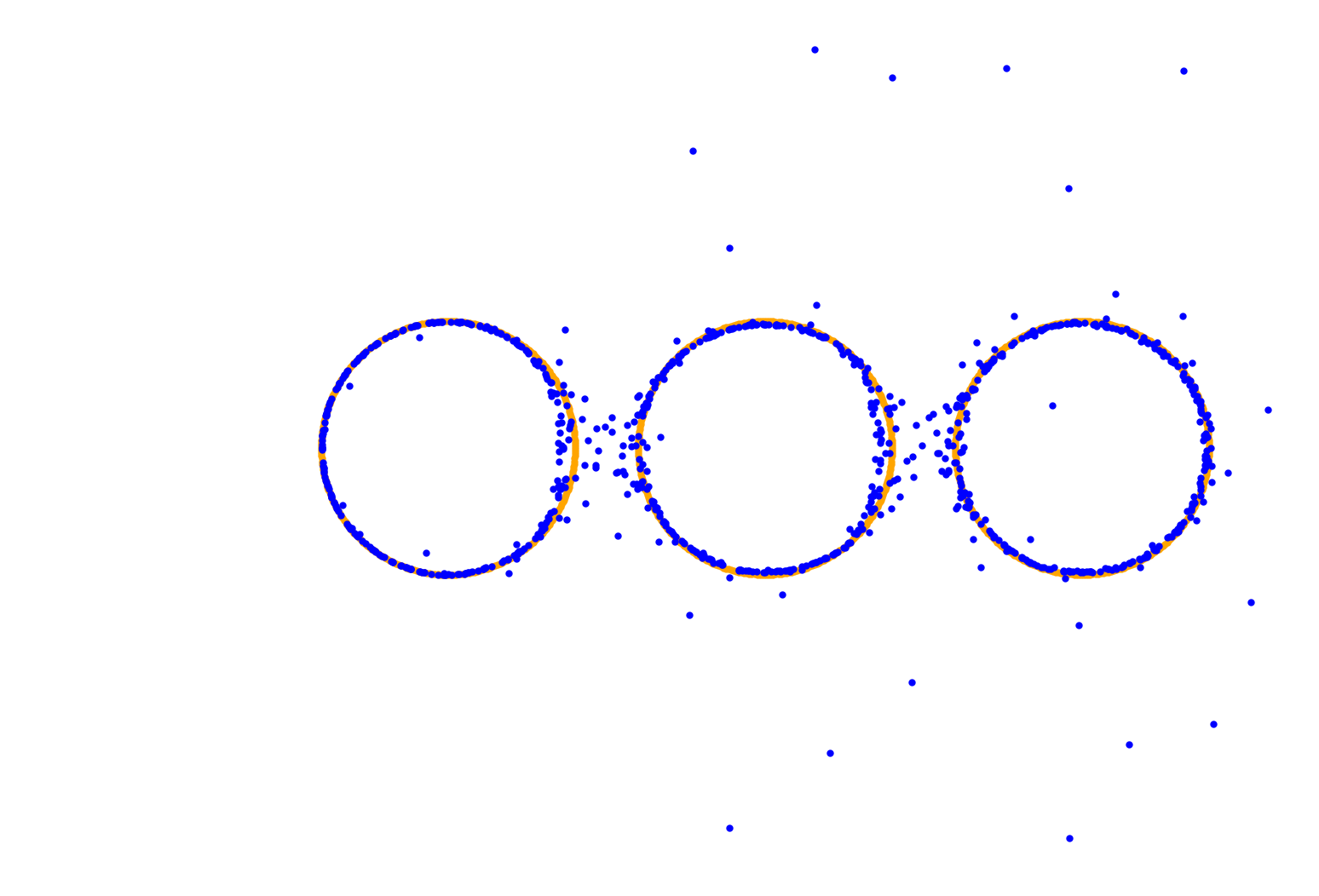}
    \caption*{t=10}
    \end{subfigure}%
    \begin{subfigure}[t]{.2\textwidth}
        \includegraphics[width=\linewidth, valign=c]{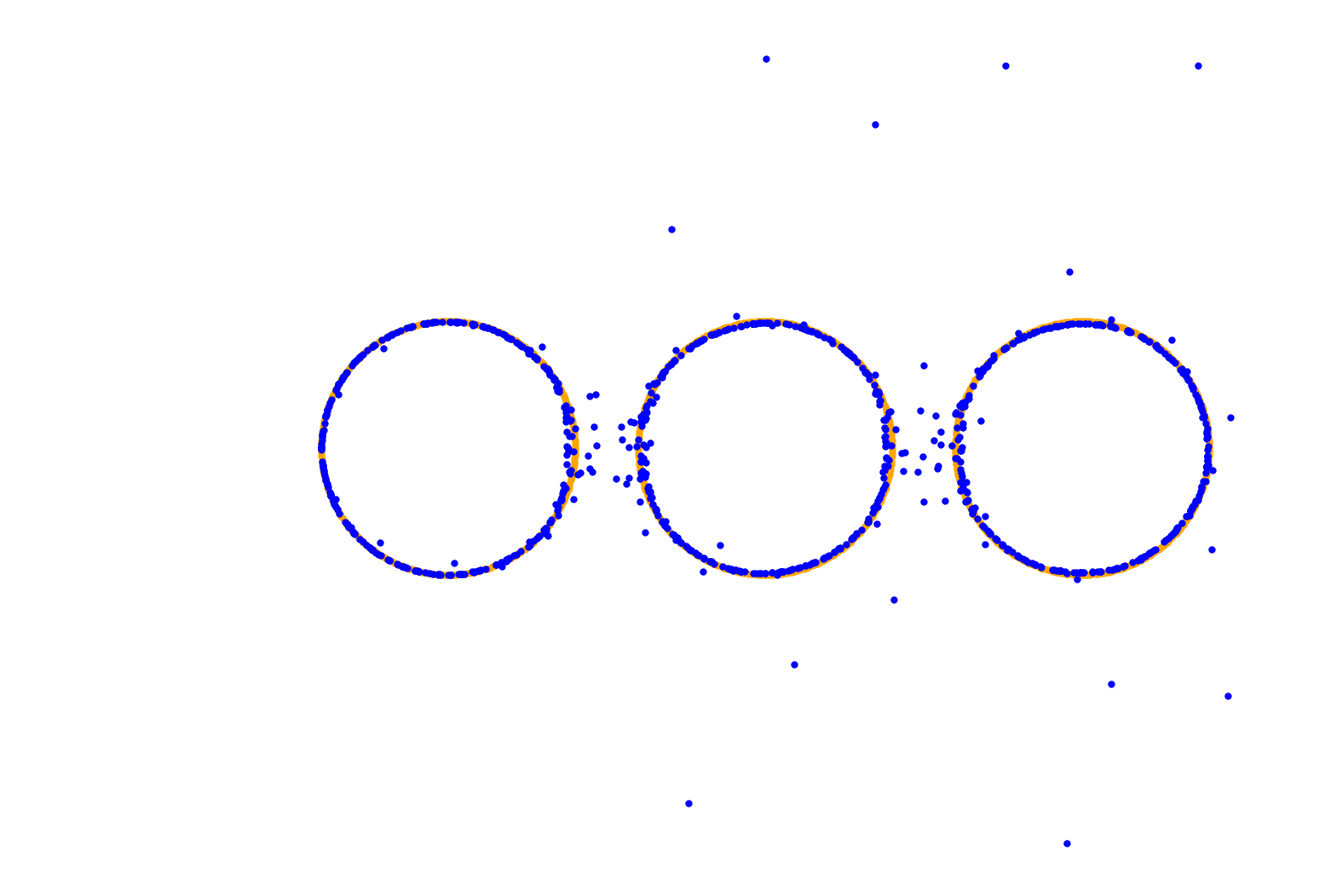}
        \includegraphics[width=\linewidth, valign=c]{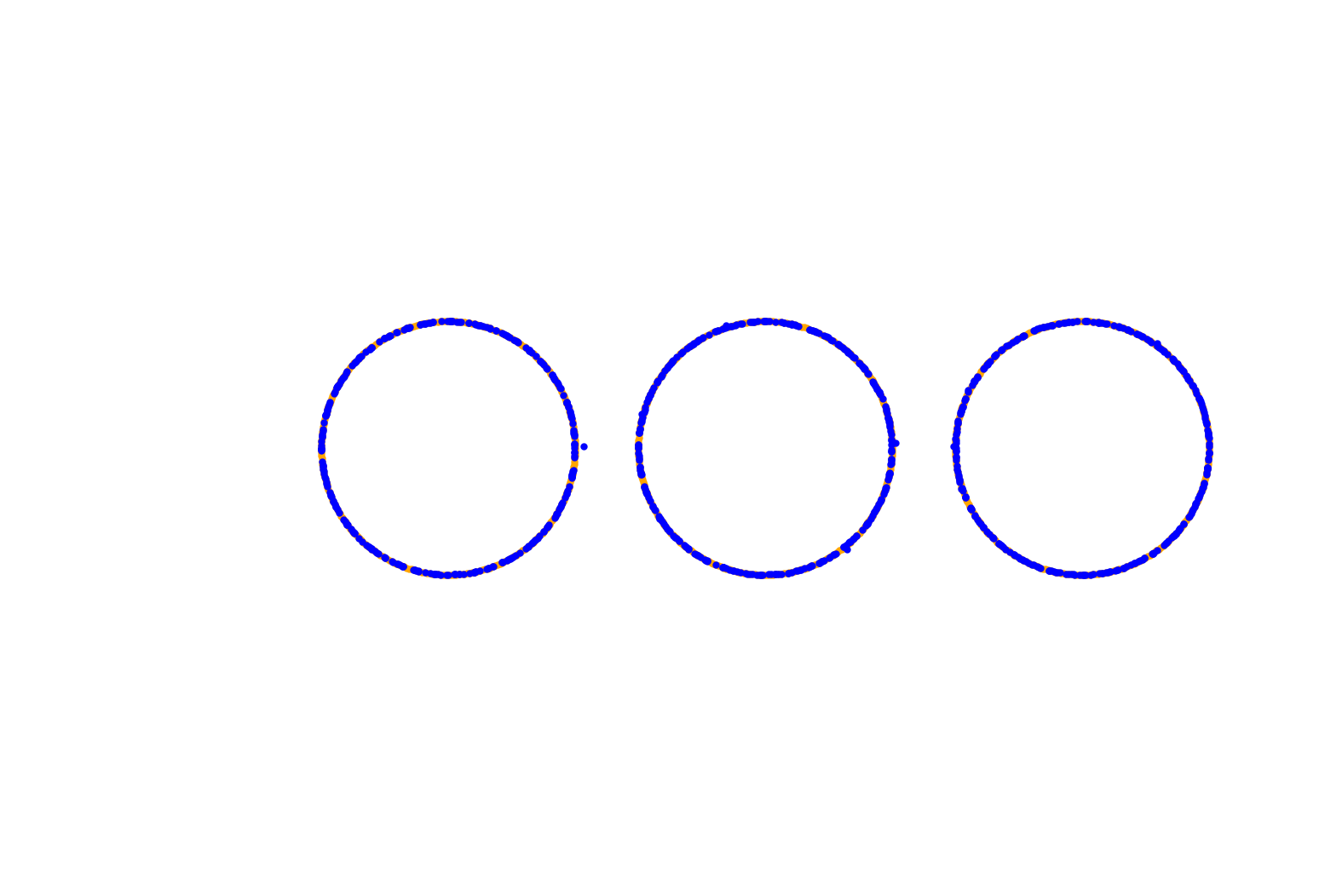}
    \caption*{t=50}
    \end{subfigure}%
    \begin{subfigure}[t]{.2\textwidth}
        \includegraphics[width=\linewidth, valign=c]{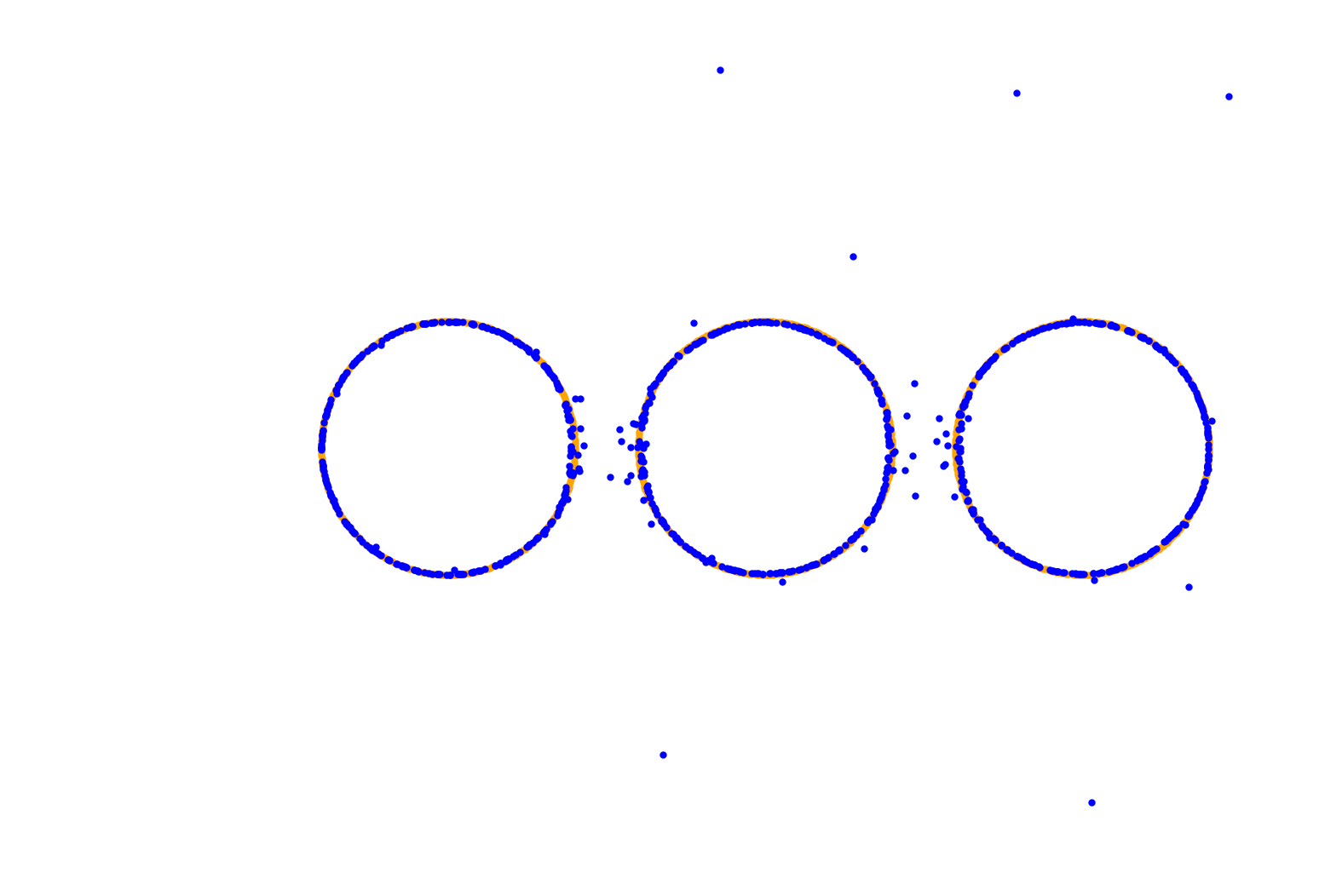}
        \includegraphics[width=\linewidth, valign=c]{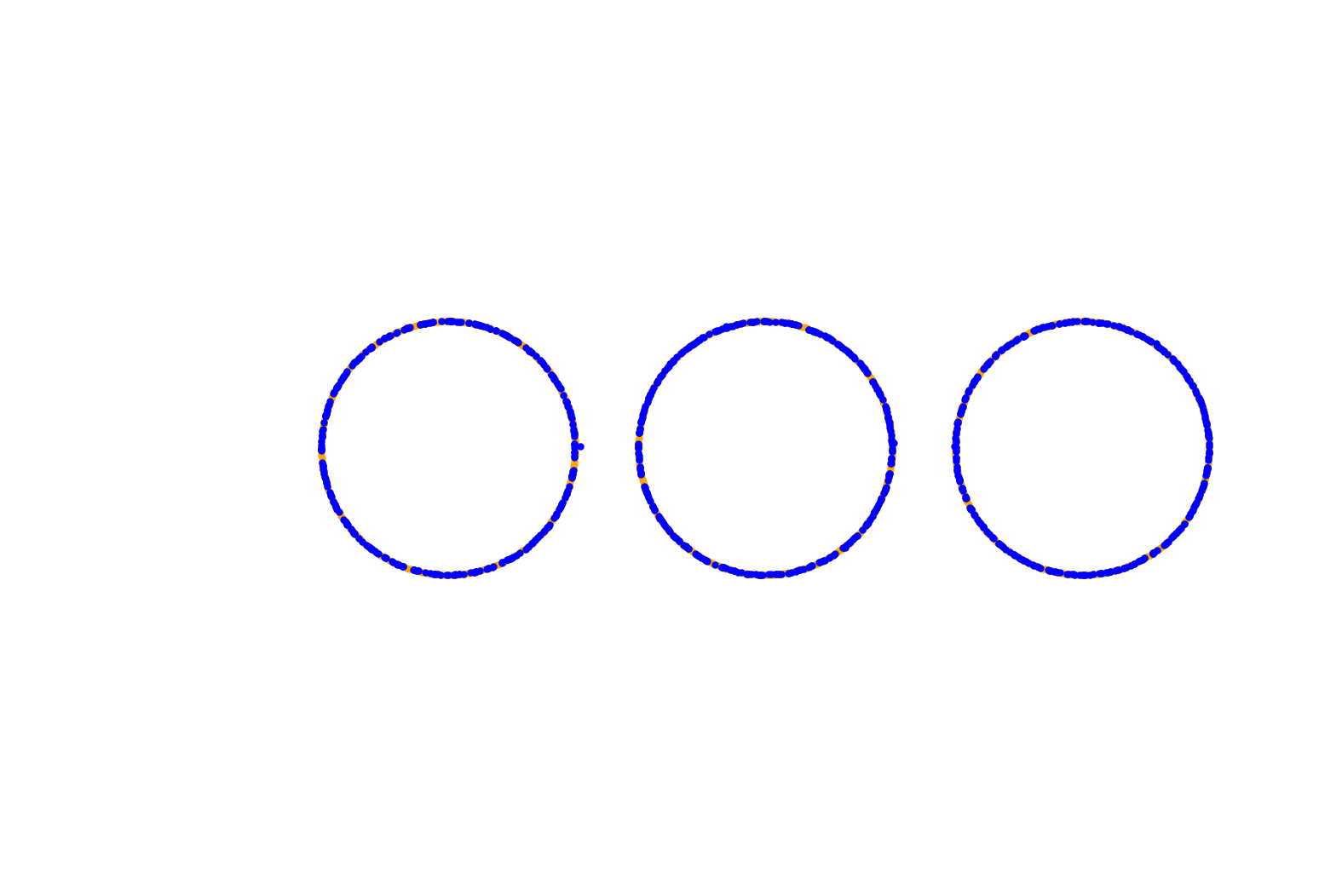}
    \caption*{t=100}
    \end{subfigure}%
    \caption{WGF of the regularized $\frac{1}{2}$-Tsallis divergence \smash{$D_{f_{\frac{1}{2}}, \nu}^{\lambda}$} without (top) and with annealing (bottom), where $\nu$ is the three rings target.}
    \label{fig:tsallis12flow}
\end{figure}
\section{Conclusions and Limitations} \label{sec:conclusions}
We have proven that the proposed MMD regularized $f$-divergences are $M$-convex along generalized geodesics, and calculated their gradients.
This ensures the existence and uniqueness of the associated Wasserstein gradient flows.
When considering particle flows for $f$-divergences with $f'_\infty = \infty$ or if the regularization parameter $\lambda$ is chosen in a certain way, each forward Euler step reduces to solving a finite-dimensional, strongly convex optimization problem.
Proving the observed sublinear convergence rate of the flows is subject to future work.

Further, we would like to extend this paper's theory to non-differen\-tiable kernels such as the Laplace kernel, i.e., the Matérn-$\frac{1}{2}$ kernel, and to non-bounded kernels like Coulomb kernels or Riesz kernels, which are of interest, e.g., in generative modeling \cite{HGBS2024,HWAH2024}
and image dithering \cite{EhlGraNeuSte21}.

Recently, we became aware of \cite{CMF20}, where they 
consider infimal convolution functionals as smoother loss functions in generative adversarial networks.
In particular, the infimal convolution of the MMD arising from the Gaussian kernel and convex functions appears promising.
Potentially, our results can contribute in this direction, which is also in the spirit of \cite{BDKPR23,FT2018,SE2020}.

\section*{Acknowledgments}
We thank the anonymous reviewer for pointing us to the tight variational formulation.
V.S.\ and G.S.\ gratefully acknowledge funding from the BMBF project \enquote{VI-Screen} with number 13N15754.
V.S.\ extends his gratitude to J.-F. Bercher for making the paper \cite{V1973} available.

\bibliographystyle{abbrv}
\bibliography{Bibliography}

\clearpage
\appendix

\section{Entropy Functions and Their \texorpdfstring{$f$}{f}-Divergences} \label{app:ent}

In Table \ref{tab:entropyFunctions} and \ref{tab:fDivergences}, we give an extensive overview on entropy functions $f$ together with their recession constants, convex conjugates, and associated $f$-divergences.
The $(\star)$ marks that we were not able to find a closed form for the convex conjugate.

Let us briefly discuss some cases of the $f$-divergences below:
\begin{itemize}
    \item
    The Tsallis-2-divergence is also called $\chi^2$ or Pearson divergence \cite{P1900}.
    The Tsallis-$\frac{1}{2}$-divergence is equal to the Matusita-$\frac{1}{2}$-divergence and equal to half of the squared Hellinger distance \cite{H1909}.
    In the limit $\alpha \nearrow 1$, the Matusita entropy function converges to the TV entropy function.

    \item 
    For $\alpha \to 1$, both the power and the Tsallis divergence recover the Kullback-Leibler divergence.
    However, only the power divergence recovers the reverse Kullback-Leibler divergence for $\alpha \to 0$, while the Tsallis entropy function converges to 0.
    
    \item
    The $\frac{1}{2}$-Lindsay divergence is called triangular discrimination or Vincze-Le Cam \cite[Eq.~(2)]{V1981} divergence.
    The Lindsay divergence interpolates between the $\chi^{2}$ divergence and the reverse $\chi^{2}$ divergence, which are recovered in the limits $\alpha \searrow 1$ and ${\alpha \nearrow 0}$, respectively.
    In the limit ${\alpha \to 1}$, the perimeter-type divergence recovers the Jensen-Shannon divergence, and in the limit $\alpha \to 0$, it recovers $\frac{1}{2}$ times the TV-divergence.
    The special case $\alpha = \frac{1}{2}$ already appears in \cite[p.~342]{Oe1982}.
    
    \item
    For $\alpha \searrow 1$, Vadja's $\chi^{\alpha}$ divergence becomes the TV-divergence - the only (up to multiplicative factors) $f$-divergence that is a metric.
    
    \item
    The Marton divergence plays an essential role in the theory of concentration of measures; see \cite[Rem.~7.15]{PW2022} and the references therein.

    \item 
    If $f = \iota_{1}$, then $D_{f, \nu}^{\lambda} = \frac{1}{2 \lambda} d_K(\cdot, \nu)^2$ recovers the squared MMD.

    \item 
    Not all functions $f$ listed below fulfill the assumptions of this paper. 
    For example, we require that $f$ has a unique minimizer at $1$, which is not true for the last three entropy functions.
\end{itemize}\newpage

\begin{table}[H]
    \caption{Entropy functions with recession constants and conjugates.
    \label{tab:entropyFunctions}}
    {\scriptsize \rotatebox{90}{
       \begin{tabular}{@{}|c || l | c| c | c@{}} \toprule
       name & entropy $f(x)$ & $f_{\infty}'$ & $f^*(y)$ 
        \\ \midrule
        \begin{tabular}[x]{@{}c@{}} Tsallis \cite{T1998}, \\ $\alpha \in (0, 1) \cup (1, \infty)$ \end{tabular} & $\frac{1}{\alpha - 1} \left( x^{\alpha} - \alpha x + \alpha - 1 \right)$ & $\iota_{(0, 1)}(\alpha) - \frac{\alpha}{\alpha - 1}$ & $\left( \frac{\alpha - 1}{\alpha} y + 1\right)_{+}^{\frac{\alpha}{\alpha - 1}} - 1 + \1_{(0, 1)}(\alpha) \cdot \iota_{\left(- \infty, \frac{\alpha}{1 - \alpha}\right)}(y)$
        \\ \hline
        \begin{tabular}[x]{@{}c@{}} power divergence, \\ $\alpha \in \R \setminus \{ 0, 1 \}$ \cite{LMS2017} \end{tabular} & $\frac{1}{\alpha (\alpha - 1)} \left( x^{\alpha} - \alpha x + \alpha - 1 \right)$ & $
        \begin{cases}
            \infty, & \alpha \ge 1, \\
            \frac{1}{1 - \alpha}, & \alpha < 1
        \end{cases}$ & \begin{tabular}[x]{@{}c@{}} $\begin{cases} \frac{1}{\alpha} \left( (\alpha - 1) y + 1\right)_+^{\frac{\alpha}{\alpha - 1}} - \frac{1}{\alpha}, & \text{if } \alpha > 1, \\
            g_{\alpha}(y) + \iota_{\left(- \infty, \frac{1}{1 - \alpha}\right)}(y), & \text{if } 0 < \alpha < 1, \\ g_{\alpha}(y) + \iota_{\left(- \infty, \frac{1}{1 - \alpha}\right]}(y), & \text{if } \alpha < 0, \end{cases}$, \\ where $g_{\alpha}(y) \coloneqq \frac{1}{\alpha} \left( (\alpha - 1) y + 1\right)^{\frac{\alpha}{\alpha - 1}}
            - \frac{1}{\alpha}$ \end{tabular}
        \\
        \hline 
        \begin{tabular}[x]{@{}c@{}} Kullback-Leibler \\ \cite{KL1951} \end{tabular} & $x \ln(x) - x + 1$ & $\infty$ & $e^{y} - 1$
        \\
        \hline 
        Jeffreys \cite[Eq.~1]{J1946} & $(x - 1) \ln(x)$ & $\infty$ & $y + 2 + W_0(e^{1 - y}) + \frac{1}{W_0(e^{1 - y})} $\\ \hline
        \begin{tabular}[x]{@{}c@{}} Vajda's $\chi^{\alpha}$, \\ $\alpha > 1$ \cite{V1972,V1973} \end{tabular} & $| x - 1 |^{\alpha}$ & $\infty$ & $\begin{cases}
            y + (\alpha - 1) \left( \frac{| y |}{\alpha} \right)^{\frac{\alpha}{\alpha - 1}}, & \text{if } y \ge - \alpha, \\
            - 1, & \text{else.}
        \end{cases}$ \\ \hline
        \begin{tabular}[x]{@{}c@{}} equality indicator \\ \cite{BLNS2023,SFVTP2019} \end{tabular}
        & $\iota_{\{1\}}$ & $\infty$ & $y$
        \\ 
        \hline
        \begin{tabular}[x]{@{}c@{}} Jensen-Shannon \\ \cite{L1991} \end{tabular} & $x \ln(x) - (x + 1) \ln\left(\frac{x + 1}{2}\right)$  & $\ln(2)$ & $\iota_{(- \infty, \ln(2))}(y) - \ln(2 - e^{y})$.
        \\ \hline
        \begin{tabular}[x]{@{}c@{}} $\alpha$-Lindsay, \\ $\alpha \in [0, 1)$ \cite{L94} \end{tabular} & $\frac{(x - 1)^2}{\alpha + (1 - \alpha) x}$ & $\frac{1}{1 - \alpha}$ & $\begin{cases}
           \infty & \text{if } y > \frac{1}{1 - \alpha}, \\
           \frac{\alpha (\alpha - 1) y - 2 \sqrt{(\alpha - 1) y + 1} + 2}{(\alpha - 1)^2} & \text{else.}
        \end{cases}$
        \\ \hline
        \begin{tabular}[x]{@{}c@{}} Perimeter-type, \\ $\alpha \in \R \setminus \{ 0, 1 \}$ \\ \cite{Oe1996,OeV2003} \end{tabular}
        &  $\frac{\sgn(\alpha)}{1 - \alpha}\left( (x^{\frac{1}{\alpha}} + 1)^{\alpha} - 2^{\alpha - 1}(x + 1)\right)$ &
        $ \begin{cases}
            \ln(2),                                    & \alpha = 1, \\
            \frac{1}{1 - \alpha} (1 - 2^{\alpha - 1}), & \alpha > 0, \\
            \frac{1}{2},                               & \alpha = 0,  \\
            \frac{1}{1 - \alpha} 2^{\alpha - 1},       & \alpha < 0.
        \end{cases}$
        &
        \begin{tabular}[x]{@{}c@{}} $\begin{cases}\frac{y - \frac{\sgn(\alpha)}{1 - \alpha}\left(  h_{\alpha}(y)^{\frac{\alpha}{\alpha - 1}} - 2^{\alpha - 1} \right)}{\left( h_{\alpha}(y)^{\frac{1}{\alpha-1}} -1\right)^{\alpha}}
        + \frac{\sgn(\alpha)}{1 - \alpha} 2^{\alpha - 1}, & y < f_{\infty}', \\
        \frac{1}{1 - \alpha} (1 - 2^{\alpha - 1}) + \iota_{(- \infty, 0)}(\alpha), & y = f_{\infty}', \\
        \infty, & \text{else} \end{cases}$, \\ where $h_{\alpha}(y) \coloneqq \frac{1 - \alpha}{\sgn(\alpha)} y + 2^{\alpha - 1}$ \end{tabular}
        \\ \hline
        Burg \cite{KL1951}
        & $- \ln(x) + x - 1$ & $1$ & $-\ln(1 - y) + \iota_{(- \infty, 1)}(y)$ 
        \\ \hline
        \begin{tabular}[x]{@{}c@{}} Symmetrized Tsallis, \\ $s \in (0, 1)$ \cite{CF1962} \end{tabular} & $1 + x - (x^s + x^{1 - s})$ & 1 & $(\star)$ \\ \hline
        \begin{tabular}[x]{@{}c@{}} Matusita $\alpha \in (0, 1)$ 
        \\ \cite[Eq.~(4.1.30)]{B1977} \\ \cite[p.~158]{J1948} \end{tabular} & $|1 - x^{\alpha} |^{\frac{1}{\alpha}}$ & 1 & $\left( y - | y |\right)^{\frac{1}{1 - \alpha}}\left(1 - \sgn(y) | y |^{\frac{\alpha}{1 - \alpha}} \right)^{- \frac{1}{\alpha}} + \iota_{(- \infty, 1)}(y)$ \\ \hline 
        \begin{tabular}[x]{@{}c@{}} Kafka $\alpha \in (0, 1]$ \\\cite{PV1990}, \cite{K1991} \end{tabular} & $|1 - x |^{\frac{1}{\alpha}} (1 + x)^{\frac{\alpha - 1}{\alpha}}$ & 1 & $(\star)$ \\ \hline 
        \begin{tabular}[x]{@{}c@{}} total variation \\ \cite{BLNS2023,SFVTP2019} \end{tabular} & $|x-1|$ &1&  
        $\begin{cases} \max(-1,y) & \text{if } y \le 1, \\ 
        \infty & \text{otherwise } 
        \end{cases}$\\ \hline
        \begin{tabular}[x]{@{}c@{}} Marton \\ \cite[Rem.~7.15]{PW2022} \end{tabular} & $\max(0, 1 - x)^2$ & 0 & $\begin{cases} \infty, & \text{if } y > 0, \\ \frac{1}{4} y^2 + y, & \text{if } -2 \le y \le 0, \\ - 1 &\text{else.} \end{cases}$ 
        \\ \hline
        \begin{tabular}[x]{@{}c@{}} Hockey stick \\ \cite[Eq.~(67)]{SV2016} \end{tabular}
        & $\max(0, 1 - x)$ & 0 & $\begin{cases} \max(-1,y) & \text{if } y \le 0, \\ 
        \infty & \text{otherwise } 
        \end{cases}$
        \\ \hline
        zero \cite{BLNS2023,SFVTP2019}& $\iota_{[0,\infty)}$ & 0 & $\iota_{(-\infty,0)}(y)$\\\bottomrule
        \end{tabular}
        }}
\end{table} 

\begin{table}[th]
    \caption{The $f$-divergences belonging to the entropy functions from Table~\ref{tab:entropyFunctions}.
        \label{tab:fDivergences}}
    {\rotatebox{90}{\small
       \begin{tabular}{l  l} \toprule
        divergence $D_f$ & $D_f(\mu \mid \nu)$, where $\mu = \rho \nu + \mu^s$ 
        \\ \midrule
        Tsallis-$\alpha$, $\alpha \in (0, 1)$ & $\frac{1}{\alpha - 1}\left[ \int_{\R^d} p(x)^{\alpha} - 1 \d{\nu}(x) - \alpha \big(\mu(\R^d) - \nu(\R^d)\big) \right]$ 
        \\ 
        Tsallis-$\alpha$, $\alpha > 1$ & $\begin{cases}
            \frac{1}{\alpha - 1}\left[ \int_{\R^d} \rho(x)^{\alpha} - 1 \d{\nu}(x) - \alpha \big(\mu(\R^d) - \nu(\R^d)\big) \right], & \text{if } \mu^s = 0, \\
            \infty, & \text{else.}
        \end{cases}$ 
        \\ \hline
        power divergence, $\alpha < 1, \alpha \ne 0$ & 
        $\int_{\R^d} \frac{\rho(x)^{\alpha} - 1}{\alpha (\alpha - 1)} \d{\nu}(x) + \frac{1}{\alpha - 1} \big( \nu(\R^d) - \mu(\R^d) \big)$
        
        \\
        power divergence, $\alpha > 1 $ &
        $\begin{cases}
            \int_{\R^d} \frac{\rho(x)^{\alpha} - 1}{\alpha (\alpha - 1)} \d{\nu}(x) + \frac{1}{\alpha - 1} \big( \nu(\R^d) - \mu(\R^d) \big), & \text{if } \mu^s = 0, \\
            \infty, & \text{else.}
        \end{cases}$ \\ 
        \hline 
        Kullback-Leibler & $\begin{cases}
        \int_{\R^d} \rho(x) \ln\big(\rho(x)\big) \d{\nu}(x) - \mu(\R^d) + \nu(\R^d) & \text{if } \mu^s = 0, \\
        \infty & \text{else.}
        \end{cases}$ \\
        \hline 
        Jeffreys & $\begin{cases}
            \int_{\R^d} \big( \rho(x) - 1 \big) \ln(\rho(x)) \d{\nu}(x), & \text{if } \mu^s = 0 \text{ and } \nu(\{ \rho = 0 \}) = 0, \\
            \infty, & \text{else}
        \end{cases}$ \\ \hline
        Vajda's $\chi^{\alpha}$, $\alpha > 1$ & $\begin{cases} \int_{\R^d} | \rho(x) - 1 |^{\alpha} \d{\nu}(x), & \text{if } \mu^s = 0, \\
        \infty, & \text{else.}
        \end{cases}$ \\ \hline
        equality indicator & $\begin{cases} 0 & \text{if } \mu = \nu, \\ 
        \infty & \text{otherwise } 
        \end{cases}$
        \\ \hline
        Jensen-Shannon & $\int_{\R^d} \rho(x) \ln\big(\rho(x)\big) - \big( \rho(x) + 1 \big) \ln\left( \frac{1}{2} \big( \rho(x) + 1 \big) \right) \d{\nu}(x) + \ln(2) \mu^s(\R^d)$ 
        \\ \hline
        $\alpha$-Lindsay, $\alpha \in [0, 1)$ & $\int_{\R^d} \frac{\big( \rho(x) - 1 \big)^2}{\alpha + (1 - \alpha) \rho(x)} \d{\nu}(x) + \frac{1}{1 - \alpha} \mu^s(\R^d)$ \\ \hline
        Perimeter-type, $\alpha \in \R \setminus \{ 0, 1 \}$ & $\frac{\sgn(\alpha)}{1 - \alpha} \left[ \int_{\R^d} \left( \rho(x)^{\frac{1}{\alpha}} + 1 \right)^{\alpha} \d{\nu}(x) - 2^{\alpha - 1}\big( \mu(\R^d) + \nu(\R^d) \big) + \1_{(0, \infty)}(\alpha) \mu^s(\R^d) \right]$
        \\ \hline
        Burg & $\begin{cases} \int_{\R^d} \ln\big(\rho(x)\big) \d{\nu}(x) + \mu(\R^d) - \nu(\R^d), & \text{if } \nu(\{ \rho = 0 \}) = 0, \\ \infty, & \text{else.} \end{cases}$.
         \\ \hline
        Symmetrized Tsallis & $\mu(\R^d) + \nu(\R^d) - \int_{\R^d} \rho(x)^s + \rho(x)^{1 - s} \d{\nu}(x)$ \\ \hline
        Matusita & $\int_{\R^d} \left| 1 - \rho(x)^{\alpha} \right|^{\frac{1}{\alpha}} \d{\nu}(x) + \mu^s(\R^d)$ \\ \hline
        Kafka & $\int_{\R^d} \big| 1 - \rho(x) \big|^{\alpha} \big(1 + \rho(x)\big)^{\frac{\alpha - 1}{\alpha}} \d{\nu}(x) + \mu^s(\R^d)$ \\ \hline
        total variation & $\int_{\R^d} | \rho(x) - 1 | \d{\nu}(x) + \mu^s(\R^d)$ \\ \hline
        Marton & $\frac{1}{2} \int_{\R^d} | \rho(x) - 1 | \d{\nu}(x) + \frac{1}{2}\big( \nu(\R^d) - \mu(\R^d) \big)$
        \\ \hline
        Hockey stick & $\int_{\R^d} (1 - \rho(x))_+ \d{\nu}(x) $\\ \hline
        zero & $0$ \\ \bottomrule
        \end{tabular}
        }
        }
\end{table}

\begin{table}[th]
    \centering
    \begin{tabular}{@{}cl} \toprule
        $f$                 & $\prox_{\lambda f}(x)$ \\ \midrule
        $f_{\KL}$           & $\lambda W\left(\frac{1}{\lambda} \exp\left(\frac{x}{\lambda}\right)\right)$ \\ \
        Tsallis-2           & $\frac{1}{2 \lambda + 1}(2 \lambda + x)_+$ \\ \
        Burg                & $\frac{1}{2}\left(x - \lambda + \sqrt{(x - \lambda)^2 + 4 \lambda}\right)$ \\ \
        TV                  & $\begin{cases} S_{\lambda}(x - \lambda), & \text{if } x \ge - \lambda, \\ 0, & \text{else.} \end{cases}$ \\ \
        Marton              & $\begin{cases} \frac{1}{1 + 2 \lambda} (x + 2 \lambda)_+, & \text{if } x \le 1, \\ x, & \text{else.}  \end{cases}$ \\ \
        equality indicator  & 1 \\ \
        zero                & $(x)_+$ \\ \bottomrule
    \end{tabular}
    \caption{Closed forms for proximal operators of some entropy functions.
    Here, $\lambda > 0$, $S_{\lambda}$ denotes the soft-shrinkage operator, $W$ is the (principal branch of the) Lambert W function \cite{CGHJK1996}, and $(\cdot)_+$ the non-negative part of a real number.}
    \label{table:proxies}
\end{table}

\newpage 
\section{Regularization of the Tight Variational Formulation} \label{sec:tight_formulation}
The tight variational formulation of $f$-divergences \cite{RRGDP12,NWJ2007,T2021} is derived as the biconjugate of the restricted objective
\begin{equation}\label{eq:TightFDiv}
    \tilde{D}_{f, \nu} \colon \M(\R^d) \to [0, \infty], \qquad
    \mu \mapsto
    D_{f, \nu}(\mu) + \iota_{\P(\R^d)}(\mu).
\end{equation}
This indeed leads to another dual formulation of $D_{f, \nu}$ for $\mu,\nu \in \mathcal P(\R^d)$, namely
\begin{equation}\label{eq:TightDual}
    D_{f, \nu}(\mu)
    = \sup_{g \in Y} \{ \E_{\mu}[g] - (\tilde{D}_{f, \nu})^*(g) \},
\end{equation}
where $Y$ denotes the space of bounded measurable functions.
By \cite[Prop.~28]{AH2021}, the conjugate $(\tilde{D}_{f, \nu})^*$ is determined by the one-dimensional convex problem
\begin{equation}\label{eq:TightConj}
    (\tilde{D}_{f, \nu})^*(g)
    = \inf\left\{ \int_{\R^d} f^*(g(x) + \lambda) \d{\nu}(x) - \lambda: \lambda + \sup(g) \le f_{\infty}' \right\}.
\end{equation}
This is different than the closed form \eqref{eq:gnuf*} in the non-tight case.
An algorithm to solve \eqref{eq:TightConj} is given in \cite[Algo.~1]{T2021}.
Sometimes, as for $f = f_{\KL}$, the functional $(\tilde{D}_{f, \nu})^*(g)$ is available in closed form: $(\tilde{D}_{f_{\KL}, \nu})^*(g) = \log\left(\int_{\R^d} e^{g} \d{\nu}\right)$.
The dual \eqref{eq:TightDual} has proven advantageous towards estimating $D_{f, \nu}$ for $\mu,\nu \in \mathcal P(\R^d)$ from samples since the objective in \eqref{eq:TightDual} is always larger or equal than the one in the original dual \eqref{eq:fDiv_1}.

Given the dual form \eqref{eq:RegFDivDual} of regularized $f$-divergences, it appears interesting to restrict the dual \eqref{eq:TightDual} to $\H_K$ and adding the regularizer ${- \frac{\lambda}{2} \| \cdot \|_{\H_K}^2}$, leading to
\begin{equation}\label{eq:TightDualReg}
    \tilde{D}_{f, \nu}^{\lambda}(\mu)
    \coloneqq \max_{h \in \H_k}\Bigl\{ \E_{\mu}[h] - (\tilde{D}_{f, \nu})^*(h) - \frac{\lambda}{2} \| h \|_{\H_K}^2\Bigr\}, \qquad \mu \in \P(\R^d).
\end{equation}
In contrast to the dual \eqref{eq:RegFDivDual}, there is no constraint on $h$.
However, for every $h \in \H_K$, we have to solve \eqref{eq:TightConj} to obtain $(\tilde{D}_{f, \nu}^{\lambda})^*(h)$.
For discrete  $\mu, \nu \in \mathcal P(\R^d)$, the objective \eqref{eq:TightDualReg} already appears in \cite[Eq.~(7)]{RRGDP12}.
In the following subsection we further analyze this \emph{tight} formulation of the MMD-regularized $f$-divergence.

\subsection{Theory}
First, to derive a primal formulation, we prove $\tilde G_{f,\nu}\in \Gamma_0(\mathcal H_K)$ for the extended functional $\tilde G_{f,\nu}$.

\begin{lemma}\label{lem:tildeG}
    Let $f'_\infty=\infty$.
    Then, the function
     \begin{equation*}
        \tilde{G}_{f, \nu} \colon \H_K \to [0, \infty], \qquad
        h \mapsto \begin{cases}
            {D}_{f, \nu}(\mu), & \text{if } \exists \mu \in \P(\R^d) \text{ such that } h = m_{\mu}, \\ 
            \infty, & \text{else,}
        \end{cases}
    \end{equation*}
    is lower semicontinuous.
    In particular, we have $\tilde G_{f,\nu}\in \Gamma_0(\mathcal H_K)$.
\end{lemma}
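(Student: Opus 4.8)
The plan is to follow the template of the lower semicontinuity proof for $G_{f,\nu}$ in Lemma~\ref{lemma:Gfnu_lsc}, exploiting that the restriction to $\P(\R^d)$ together with the hypothesis $f'_\infty = \infty$ makes the argument simpler in one respect and more delicate in another. First I would fix $h \in \H_K$ and a sequence $h_n \to h$ in $\H_K$, and reduce to the case $\liminf_{n\to\infty} \tilde G_{f,\nu}(h_n) < \infty$ by passing to a subsequence realizing the limes inferior with $\tilde G_{f,\nu}(h_n) < \infty$ for all $n$. Then there exist $\mu_n \in \P(\R^d)$ with $m_{\mu_n} = h_n$. In contrast to Lemma~\ref{lemma:Gfnu_lsc}, the analogue of Case~1 (blow-up of the total variation) cannot occur, since $\|\mu_n\|_{\TV} = \mu_n(\R^d) = 1$ for probability measures. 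By the Banach--Alaoglu theorem, a subsequence $(\mu_{n_k})_k$ converges weak* to some $\mu \in \M(\R^d)$, and since $K(x,\cdot) \in \C_0(\R^d)$ the same computation as in \eqref{eq:weak_conv} gives $m_{\mu_{n_k}}(x) \to m_\mu(x)$, whence $h = m_\mu$ by continuity of the point evaluations.

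The main obstacle is to show that the weak* limit $\mu$ is again a \emph{probability} measure, i.e.\ $\mu(\R^d) = 1$, rather than merely a sub-probability measure: the total mass is not weak* continuous because $\1 \notin \C_0(\R^d)$, so a priori mass could escape to infinity. This is exactly where $f'_\infty = \infty$ enters. Since $f'_\infty = \infty$, finiteness of $D_{f,\nu}(\mu_n)$ forces $\mu_n^s = 0$, so $\mu_n = \rho_n \nu$ with $\rho_n \ge 0$ and $\int_{\R^d} f \circ \rho_n \d\nu \le C$ for all $n$. As $f'_\infty = \infty$ means $f$ is superlinear, the de la Vallée--Poussin criterion yields uniform integrability of $(\rho_n)_n$ with respect to $\nu$. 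Combining this with the tightness of the finite measure $\nu$ --- for every $\epsilon > 0$ pick $\delta > 0$ from uniform integrability and a compact $\mathcal K$ with $\nu(\R^d \setminus \mathcal K) < \delta$, so that $\mu_n(\R^d \setminus \mathcal K) = \int_{\R^d \setminus \mathcal K} \rho_n \d\nu < \epsilon$ --- shows that $(\mu_n)_n$ is tight. By Prokhorov's theorem the weak* convergence then upgrades to narrow convergence along the subsequence (both limits agree on $\C_0(\R^d)$, which separates finite measures), so no mass is lost and $\mu(\R^d) = 1$, i.e.\ $\mu \in \P(\R^d)$.

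With $\mu \in \P(\R^d)$ and $h = m_\mu$ identified, I would conclude lower semicontinuity exactly as in Lemma~\ref{lemma:Gfnu_lsc}: the joint weak* lower semicontinuity of $D_{f,\nu}$ from Lemma~\ref{lem:ApproxC0} gives
\begin{equation*}
    \tilde G_{f,\nu}(h) = D_{f,\nu}(\mu) \le \liminf_{k\to\infty} D_{f,\nu}(\mu_{n_k}) = \liminf_{k\to\infty} \tilde G_{f,\nu}(h_{n_k}) = \lim_{n\to\infty}\tilde G_{f,\nu}(h_n).
\end{equation*}
Finally, for the claim $\tilde G_{f,\nu} \in \Gamma_0(\H_K)$ it remains to note that $\tilde G_{f,\nu}$ is proper, since $\tilde G_{f,\nu}(m_\nu) = D_{f,\nu}(\nu) = 0$, and convex, since $\P(\R^d)$ is convex, $m$ is linear and injective, and $D_f$ is jointly convex, so that $\tilde G_{f,\nu} = D_{f,\nu}\circ m^{-1}$ is convex on the convex set $\ran(m|_{\P(\R^d)})$ and $+\infty$ elsewhere.
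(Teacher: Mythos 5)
Your proof is correct, and it reaches the crucial tightness of $(\mu_n)_n$ by a genuinely different route than the paper. The paper never invokes Banach--Alaoglu or uniform integrability here: it chooses the radius-dependent dual variable $g(r) = (f^*)^{-1}\bigl(\nu(\R^d\setminus B_r)^{-1}\bigr)$ and applies the Fenchel--Young inequality $f(\rho_n) \ge g(r)\rho_n - f^*(g(r))$ on $\R^d\setminus B_r$, which yields the explicit bound $\mu_n(\R^d\setminus B_r) \le (\tilde G_{f,\nu}(h_n)+1)/g(r)$ with $g(r)\to\infty$; tightness then follows directly and Prokhorov gives a narrowly convergent subsequence in $\P(\R^d)$ in one step. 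You instead first extract a weak* limit via Banach--Alaoglu, then use that $f'_\infty=\infty$ forces $\mu_n=\rho_n\nu$ with $\sup_n\int f\circ\rho_n\,\d\nu<\infty$, apply the de la Vall\'ee--Poussin criterion to get uniform integrability of $(\rho_n)_n$ in $L^1(\nu)$, and combine this with the tightness of the finite measure $\nu$ to rule out escape of mass, identifying the weak* limit as a probability measure. Both arguments exploit exactly the superlinearity of $f$; the paper's is more self-contained and quantitative (an explicit tail bound from one application of Fenchel--Young), while yours leans on a classical criterion and makes the mechanism --- uniform absolute continuity of the densities plus inner regularity of $\nu$ --- conceptually transparent. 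The remaining steps (identification $h=m_\mu$ via pointwise convergence of the kernel mean embeddings, lower semicontinuity of $D_{f,\nu}$ from Lemma~\ref{lem:ApproxC0}, and properness and convexity for the $\Gamma_0$ claim) coincide with the paper's.
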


\begin{proof}
We follow the proof of Lemma~\ref{lemma:Gfnu_lsc}.
For $(h_n)_{n} \subset \H_{K}$ with $h_n \to h \in \H_{K}$, we need to show ${\tilde G_{f,\nu}}(h)\le \liminf_{n\to \infty} {\tilde G_{f,\nu}}(h_n)$.
If $\liminf_{n\to \infty} {\tilde G_{f,\nu}}(h_n) = +\infty$, we are done.
For $\liminf_{n\to \infty} {\tilde G_{f,\nu}}(h_n) < +\infty$, we pass to a subsequence without renaming such that ${\tilde G_{f,\nu}}(h_n)<\infty$ for all $n\in \mathbb{N}$ that realizes the $\liminf$.
Then, there exist $\mu_n\in \P(\R^d)$ with $m_{\mu_n}=h_n$.

Since $f'_\infty = \infty$ and $D_f(\mu_n \,| \,\nu)=\tilde G_{f,\nu}(h_n)<\infty$, we have $\mu_n\ll \nu$ for all $n\in \N$.
Since $f$ is an entropy function with $f_{\infty}' = \infty$, its conjugate $f^*$ is continuous and strictly monotone on $(0,\infty) \subset \dom(f^*)$ with $f^*(0)=0$ and $f^*(t) \to \infty$ for $t\to \infty$ by \cite[Sec.~2.3]{LMS2017}.
For the closed Euclidean ball $B_r \subset \R^d$ around $0$ with radius $r$, we define
\begin{align}
    g \colon (0, \infty) \to [0, \infty], \qquad
    r \mapsto \begin{cases} (f^*)^{-1}(\nu(\R^d\setminus B_r)^{-1})& \text{ if } \nu(B_r)\neq 1,\\
    +\infty & \text{ else.}\end{cases}
\end{align}
Due to \cite[Eq.~(2.20)]{LMS2017} we have $\lim_{r\to \infty} g(r)=\infty$.
The estimate becomes trivial when $\nu(B_r)=1$, hence we assume $\nu(B_r)<1$.
For any $r>0$, we estimate
\begin{align*}
\tilde G_{f,\nu}(h_n)&
= \int_{\R^d} f\circ \frac{\d\mu_n}{\d\nu}\d\nu 
\ge \int_{\R^d\setminus B_r} f\circ \frac{\d\mu_n}{\d\nu}-g(r)\frac{\d\mu_n}{\d\nu}\d\nu +g(r)\mu_n(\R^d \setminus B_r) \\&
\ge \int_{\R^d\setminus B_r} -f^*(g(r))\d\nu +g(r)\mu_n(\R^d \setminus B_r) 
\ge -1 + g(r)\mu_n(\R^d \setminus B_r).
\end{align*}
Since $\liminf_{n\to \infty} {\tilde G_{f,\nu}}(h_n) < +\infty$, the sequence $(\tilde G_{f,\nu}(h_n))_n$ is bounded by some $B>0$. 
To show that $(\mu_n)_n$ is tight, let $\varepsilon>0$ and choose $r$ large enough to satisfy $\frac{B+1}{\varepsilon}<g(r)$. 
Then, we can estimate with the convention $\frac{B+1}{\infty}=0$ that
\begin{align*}
    \varepsilon
    > \frac{B+1}{g(r)}
    \ge \frac{\tilde G_{f,\nu}(h_n)+1}{g(r)}
    \ge 
    \mu_n(\R^d\setminus B_r).
\end{align*}
Hence, $(\mu_n)_n$ is tight and by \cite[Thm.~5.1]{B2013} it has a weakly convergent subsequence $\mu_{n_k} \weakly \mu\in \mathcal P(\R^d)$.
Similarly to \eqref{eq:weak_conv}, we obtain $h=m_\mu$.
Using Lemma \ref{lem:ApproxC0}, we get that $\tilde G_{f,\nu}$ is lower semicontinuous.
Since $\P(\R^d)$ and $D_{f,\nu}$ are convex, we finally get $\tilde G_{f,\nu}\in \Gamma_0(\mathcal H_K)$.
\end{proof}
Now, we can show that the primal formulation of $\tilde{D}_{f, \nu}^{\lambda}$ is like \eqref{eq:RegFDiv}, but with feasible set $\P(\R^d)$ instead of $\M_+(\R^d)$.

\begin{lemma}[Primal formulation of $\tilde{D}_{f, \nu}^{\lambda}$]
Assume that $f'_\infty = \infty$.
Let $\lambda > 0$ and $\mu,\nu \in \P(\R^d)$.
Then we have
\begin{equation}
    \tilde{D}_{f, \nu}^{\lambda}(\mu)
    = \min_{\sigma \in \P(\R^d)}\Bigl\{ D_{f, \nu}(\sigma) + \frac{1}{2 \lambda} d_K(\sigma, \nu)^2 \Bigr\}
    \label{eq:RegfuncTight}
\end{equation}
and the relationship with the maximizer in \eqref{eq:TightDualReg} is $m_{\hat{\sigma}} = m_{\mu} - \lambda \hat{h}$.
Moreover, we have
\begin{equation}\label{eq:dualtight}
    \frac{\lambda}{2} \| \hat{h} \|_{\H_K}^2
    \le \tilde{D}_{f, \nu}(\mu)
    \le \| \hat{h} \|_{\H_K} d_K(\mu, \nu).
\end{equation}
\end{lemma}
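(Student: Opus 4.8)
The plan is to recognize the dual object \eqref{eq:TightDualReg} as the dual Moreau-envelope formula \eqref{eq:MoreauEnvDual} applied to $\tilde{G}_{f,\nu}$, exactly mirroring the passage from the dual Corollary~\ref{lemma:RegFDivDual} back to the primal Corollary~\ref{cor:coincidence} in the non-tight case. The first step is to identify the two conjugates on $\H_K$. Writing the conjugate of $\tilde{D}_{f,\nu}$ over $\M(\R^d)$ and using $\tilde{D}_{f,\nu} = D_{f,\nu} + \iota_{\P(\R^d)}$ together with the reproducing identity \eqref{eq:repr}, namely $\E_\mu[h] = \langle m_\mu, h\rangle_{\H_K}$ for $h \in \H_K$, I obtain
\[
    (\tilde{D}_{f,\nu})^*(h)
    = \sup_{\mu \in \P(\R^d)}\bigl\{ \E_\mu[h] - D_{f,\nu}(\mu) \bigr\}
    = \sup_{\mu \in \P(\R^d)}\bigl\{ \langle m_\mu, h\rangle_{\H_K} - \tilde{G}_{f,\nu}(m_\mu) \bigr\}
    = \tilde{G}_{f,\nu}^*(h),
\]
where the last equality holds because $\tilde{G}_{f,\nu}$ equals $+\infty$ off $m(\P(\R^d))$. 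Substituting this together with $\E_\mu[h] = \langle m_\mu, h\rangle_{\H_K}$ into \eqref{eq:TightDualReg} turns it verbatim into the dual Moreau-envelope formula \eqref{eq:MoreauEnvDual} with $G = \tilde{G}_{f,\nu}$ evaluated at $m_\mu$.

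Since Lemma~\ref{lem:tildeG} guarantees $\tilde{G}_{f,\nu} \in \Gamma_0(\H_K)$, Theorem~\ref{prop:moreau}(\ref{item:MoreauProp1}) applies and yields $\tilde{D}_{f,\nu}^\lambda(\mu) = \tilde{G}_{f,\nu}^\lambda(m_\mu)$ in its primal form, together with the primal--dual link $\hat{g} = m_\mu - \lambda\hat{h}$ between the unique primal minimizer $\hat{g} = \prox_{\lambda\tilde{G}_{f,\nu}}(m_\mu)$ and the dual maximizer $\hat{h}$ of \eqref{eq:TightDualReg}. Because $\hat{g} \in \dom(\tilde{G}_{f,\nu})$, it is represented by some $\hat\sigma \in \P(\R^d)$, i.e.\ $\hat{g} = m_{\hat\sigma}$; this simultaneously gives the asserted relation $m_{\hat\sigma} = m_\mu - \lambda\hat{h}$ and, after substituting $g = m_\sigma$ and recognizing $\tfrac{1}{2\lambda}\|m_\sigma - m_\mu\|_{\H_K}^2$ as the MMD penalty, the minimum-over-$\P(\R^d)$ primal formulation \eqref{eq:RegfuncTight}. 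This step is the exact analogue of Corollary~\ref{cor:coincidence}, the only change being that the feasible measures are now probability measures rather than arbitrary elements of $\M_+(\R^d)$.

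For the two-sided estimate \eqref{eq:dualtight} I would argue as in Corollary~\ref{lemma:RegFDivDual}. The lower bound is immediate from the primal representation: since $\tilde{G}_{f,\nu} \ge 0$ and $\hat{g} - m_\mu = -\lambda\hat{h}$,
\[
    \tilde{D}_{f,\nu}^\lambda(\mu)
    = \tilde{G}_{f,\nu}(\hat{g}) + \frac{1}{2\lambda}\|\hat{g} - m_\mu\|_{\H_K}^2
    \ge \frac{1}{2\lambda}\|\lambda\hat{h}\|_{\H_K}^2
    = \frac{\lambda}{2}\|\hat{h}\|_{\H_K}^2 .
\]
For the upper bound, the inequality $f^* \ge \id$ used in the non-tight case is replaced by the observation that $\nu$ itself is an admissible competitor in the conjugate and $D_{f,\nu}(\nu) = 0$ (since $f(1) = 0$), whence $(\tilde{D}_{f,\nu})^*(\hat{h}) \ge \E_\nu[\hat{h}] = \langle m_\nu, \hat{h}\rangle_{\H_K}$. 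Inserting this into \eqref{eq:TightDualReg}, discarding the non-positive quadratic term, and applying the Cauchy--Schwarz inequality gives $\tilde{D}_{f,\nu}^\lambda(\mu) \le \langle m_\mu - m_\nu, \hat{h}\rangle_{\H_K} \le \|\hat{h}\|_{\H_K}\, d_K(\mu,\nu)$.

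I expect the only genuine subtlety to be the conjugate identification of the first step: one must verify that $(\tilde{D}_{f,\nu})^*$, which is a priori prescribed by the one-dimensional problem \eqref{eq:TightConj} as a conjugate over the bounded measurable functions $Y$, really coincides with the Hilbert-space conjugate $\tilde{G}_{f,\nu}^*$ once restricted to $h \in \H_K$. This hinges on the kernel mean embedding identity and on $\tilde{G}_{f,\nu}$ being concentrated on $m(\P(\R^d))$; both expressions then reduce to the common intermediate form $\sup_{\mu\in\P(\R^d)}\{\E_\mu[h] - D_{f,\nu}(\mu)\}$. Once this bridge is in place, everything else is a direct transcription of the non-tight arguments, so the essential analytic work, the lower semicontinuity of $\tilde{G}_{f,\nu}$, has already been discharged in Lemma~\ref{lem:tildeG}.
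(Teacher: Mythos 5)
Your proposal is correct and follows essentially the same route as the paper: identify $(\tilde{D}_{f,\nu})^*$ with $(\tilde{G}_{f,\nu})^*$ on $\H_K$ via the kernel mean embedding, invoke Theorem~\ref{prop:moreau} with Lemma~\ref{lem:tildeG} to pass between the dual \eqref{eq:TightDualReg} and the primal Moreau envelope, and derive the bounds exactly as you do (the upper bound via $\nu$ as a competitor in the conjugate, giving $(\tilde{G}_{f,\nu})^*(\hat h)\ge\E_\nu[\hat h]$, then Cauchy--Schwarz). The subtlety you flag about the conjugate identification is handled in the paper by precisely the computation you write down, so nothing is missing.
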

\begin{proof}
We follow the proof of Corollary~\ref{lemma:RegFDivDual}.
The Moreau envelope $\tilde{G}_{f, \nu}^{\lambda}$ of $\tilde{G}_{f, \nu}$ is
    \begin{align*}
        \tilde{G}_{f, \nu}^{\lambda}(m_{\mu})
        & = \min_{\sigma \in \P(\R^d)} \Bigl\{ \tilde{D}_{f, \nu}(\sigma) + \frac{1}{2 \lambda} d_K(\sigma,\mu)^2 \Bigr\}.
    \end{align*}
    Moreover, for $p \in \H_K$, we have
    \begin{equation*}
        (\tilde{G}_{f, \nu})^*(p)
        = \sup_{h \in \H_K} \langle p, h \rangle_{\H_K} - \tilde{G}_{f, \nu}(h)
        = \sup_{\mu \in \P(\R^d)} \langle p, \mu \rangle_{\C_0 \times \M} - \tilde{D}_{f, \nu}(\mu)
        = (\tilde{D}_{f, \nu})^*(p).
    \end{equation*}
    Hence, incorporating Theorem~\ref{prop:moreau}, $\tilde{G}_{f, \nu}$ can be written as
    \begin{align*}
        \tilde{G}_{f, \nu}^{\lambda}(m_{\mu})
        & = \max_{p \in \H_K} \Bigl\{ \E_{\mu}[p] - (\tilde{G}_{f, \nu})^*(p) - \frac{\lambda}{2} \| p \|_{\H_K}^2 \Bigr\} \\
        & = \max_{p \in \H_K} \Bigl\{ \E_{\mu}[p] - (\tilde{D}_{f, \nu})^*(p) - \frac{\lambda}{2} \| p \|_{\H_K}^2 \Bigr\} = \tilde{D}_{f, \nu}^{\lambda}(\mu).
    \end{align*}

    Both the relationship between the optimizers and the lower bound in \eqref{eq:dualtight} follow as in Corollary~\ref{lemma:RegFDivDual}.
    For the upper bound, note that for $g \in \H_K$, we have
    \begin{equation*}
        \tilde{G}_{f, \nu}^*(g)
        = \sup_{h \in \H_K} \langle g, h \rangle - \tilde{G}_{f, \nu}(h)
        \ge \langle g, m_{\nu} \rangle -  \tilde{G}_{f, \nu}(m_{\nu})
        = \E_{\nu}[g]
    \end{equation*}
    and thus
    \begin{equation*}
        \tilde{D}_{f, \nu}^{\lambda}(\mu)
        = \E_{\mu}[\hat{h}] - (\tilde{G}_{f, \nu})^*(\hat{h}) - \frac{\lambda}{2} \| \hat{h} \|_{\H_K}^2
        \le \E_{\mu}[\hat{h}] - \E_{\nu}[\hat{h}] \leq \Vert \hat h\Vert_{\H_{K}} d_K(\mu, \nu).
    \end{equation*}
\end{proof}

The primal form \eqref{eq:RegfuncTight} appears in \cite[Thm.~6]{BDKPR23} for integral probability metrics instead of just MMDs.
However, their restrictions on $f$ are stronger than ours.
\begin{remark}[Difference between tight and non-tight formulation]
    The difference between $D_{f, \nu}^{\lambda}$ and $\tilde{D}_{f, \nu}^{\lambda}$ becomes obvious for $\nu = \delta_y$ with $y \in \R^d$ and $\mu = \delta_x$ with $x \in \R^d \setminus \{ y \}$.
    If $f_{\infty}' = \infty$, then we have
    \begin{equation*}
        \tilde{D}_{f, \delta_y}^{\lambda}(\delta_x)
        = D_{f, \delta_y}(\delta_y) + \frac{1}{2 \lambda} d_K(\delta_x, \delta_y)
        = \frac{1}{2 \lambda} \bigl( K(x, x) + K(y, y) - 2 K(x, y)\bigr),
    \end{equation*}
    whereas
    \begin{equation*}
       {D}_{f, \delta_y}^{\lambda}(\delta_x) = \min_{a \ge 0} f(a) + \frac{1}{2 \lambda} \big( a^2 K(y, y) + K(x, x) - 2 a K(x, y) \big).
    \end{equation*}
    For the Tsallis-$2$ entropy function $f = f_2 = (\cdot - 1)^2$ we get
    \begin{equation*}
        \argmin_{a \ge 0} f(a) + \frac{1}{2 \lambda} d_K(a \delta_x, \delta_y)^2
        = \frac{2 \lambda + K(x, y)}{2 \lambda + K(y, y)}.
    \end{equation*}
    Since the radial kernel $K$ is a decreasing function of the squared norm, the optimal $a^* = K(x, y)$ is in general smaller than $1$.
    If, additionally we require that $K(x, x) = 1$ for all $x \in \R$, then we have the optimal value
    \begin{equation*}
        D_{f, \delta_y}^{\lambda}(\delta_x)
        = \frac{1 - K(x, y)}{\lambda} \frac{K(x, y) + 4 \lambda + 1}{2 (2 \lambda + 1)}.
    \end{equation*}
\end{remark}

The theoretical justification for the existence of Wasserstein gradient flows based on Theorem~\ref{ambrosio} uses Theorem~\ref{thm:M} and Lemma~\ref{lemma:RegFDivDerivative}, which only require the dual estimates \eqref{some_est} and not the dual \eqref{eq:RegFDivDual} itself.
Hence, the argumentation remains valid for the tight formulation \eqref{eq:TightDualReg}.
In Lemma~\ref{lemma:reprThmForFiniteRecConst}, we showed that the dual representation \eqref{eq:TightDualReg} has a finite representation even for $f'_\infty<\infty$.
Unfortunately, this result does not generalize straightforwardly.

\subsection{Numerical Implementation Details}
Following Section \ref{sec:discr}, we propose an Euler forward scheme for the Wasserstein-2 gradient flow of $\tilde{D}_{f, \nu}^{\lambda}$ for $\nu = \frac{1}{M} \sum_{j = 1}^{M} \delta_{y_j}$ and $f_{\infty}' = \infty$.
The primal problem \eqref{eq:RegfuncTight} is finite-dimensional since $\sigma \in \P(\R^d)$ must be absolutely continuous with respect to $\nu$.
In particular, only $M$ weights have to be optimized in \eqref{eq:gn} and $\hat{\sigma}_n(\R^d) = \sum_{k = 1}^{M + N} \beta_k^{(n)} = 1$.
Thus, we have $\beta_k^{(n)} = 0$, $k \in \{ M + 1, \ldots, M + N \}$, i.e.,
\begin{equation}\label{eq:ConstrDual}
    b_k^{(n)} = \frac{1}{\lambda N}, \quad k = M + 1, \ldots, M + N,
    \quad \text{and} \quad
    \sum_{k = 1}^{M} b_k^{(n)} = - \frac{1}{\lambda}.
\end{equation}
This can be reformulated as
\begin{equation}\label{eq:SimpConstr}
    q_{M + j}^{(n)} = - \frac{M}{N},\, j = 1, \ldots, N, \quad q_k^{(n)} \ge 0,\, k = 1, \ldots, M, \quad
    \text{and }\sum_{k = 1}^{M} q_k^{(n)} = M.
\end{equation}
This means that $\frac{1}{M} (q_k^{(n)})_{k = 1}^{M}$ is an element of the unit simplex in $\R^M$.
We minimize the objective $J$ from \eqref{eq:primalObjective2} subject to \eqref{eq:SimpConstr} with exponentiated gradient descent, that is, mirror descent where the mirror map is the negative Shannon entropy.
We use an Armijo line search or Polyak rule \cite[Algorithm~3]{YL22} for determining the step size in each iteration, which leads to convergence if $J$ is differentiable \cite{LC2019}.
Both approaches only require one gradient per iteration. 
The advantage of the Polyak step size is that it also only requires one function evaluation.
Mirror descent works relatively well in high dimension \cite{BT2003} and is parallelizable on the GPU.
In our simulations, the gradient flow for $\tilde{D}_{f, \nu}^{\lambda}$ tends to stay closer to the target and has a lower number of outliers; see Figure~\ref{fig:tight_vs_nontight}.
Although the differences are most pronounced for the bananas target, a similar effect occurs for the others.

\begin{figure}[tbp]
    \centering
    \begin{subfigure}[t]{.245\textwidth}
      \includegraphics[width=\linewidth, valign=c]{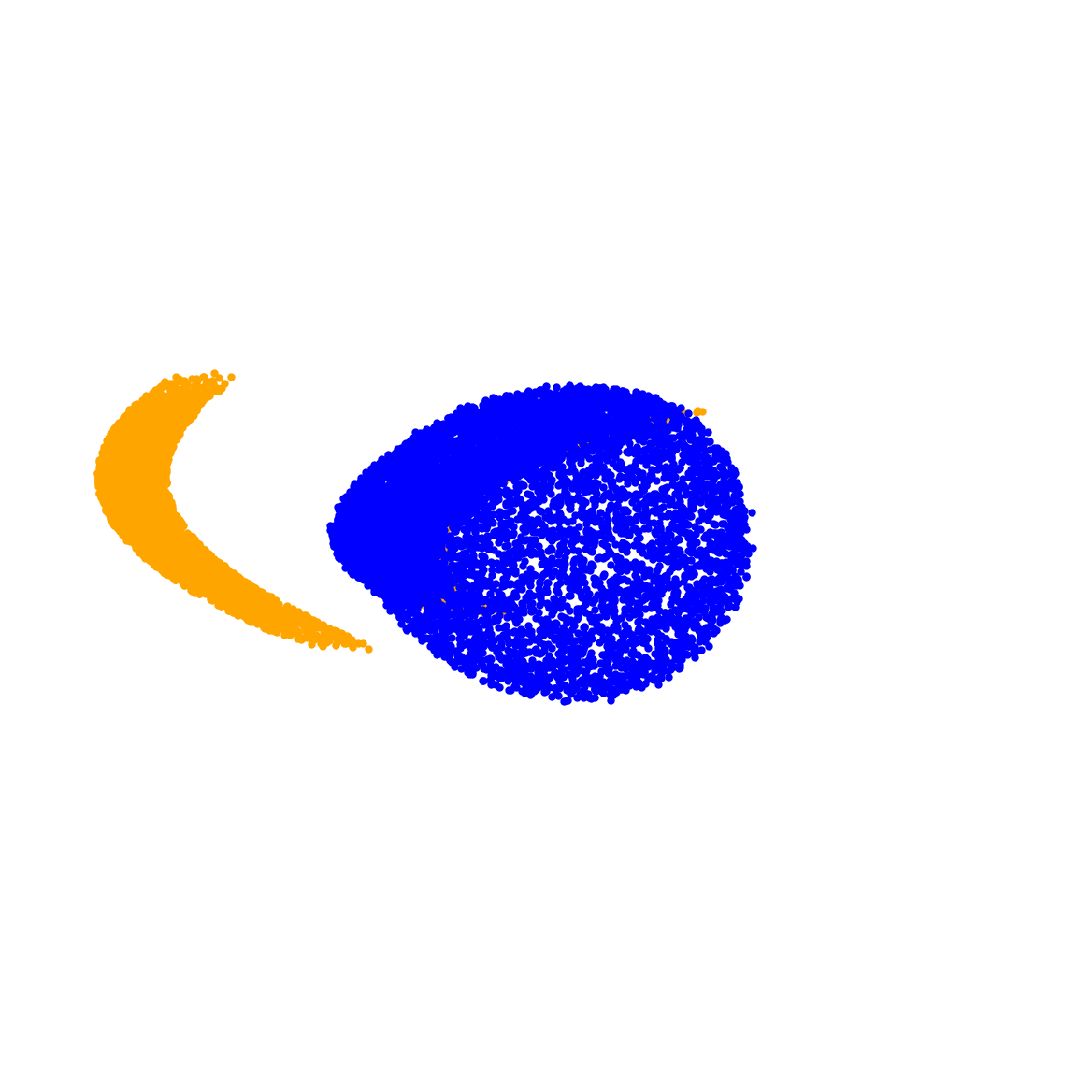}
    \caption*{t=0.1}
    \end{subfigure}%
    \begin{subfigure}[t]{.245\textwidth}
      \includegraphics[width=\linewidth, valign=c]{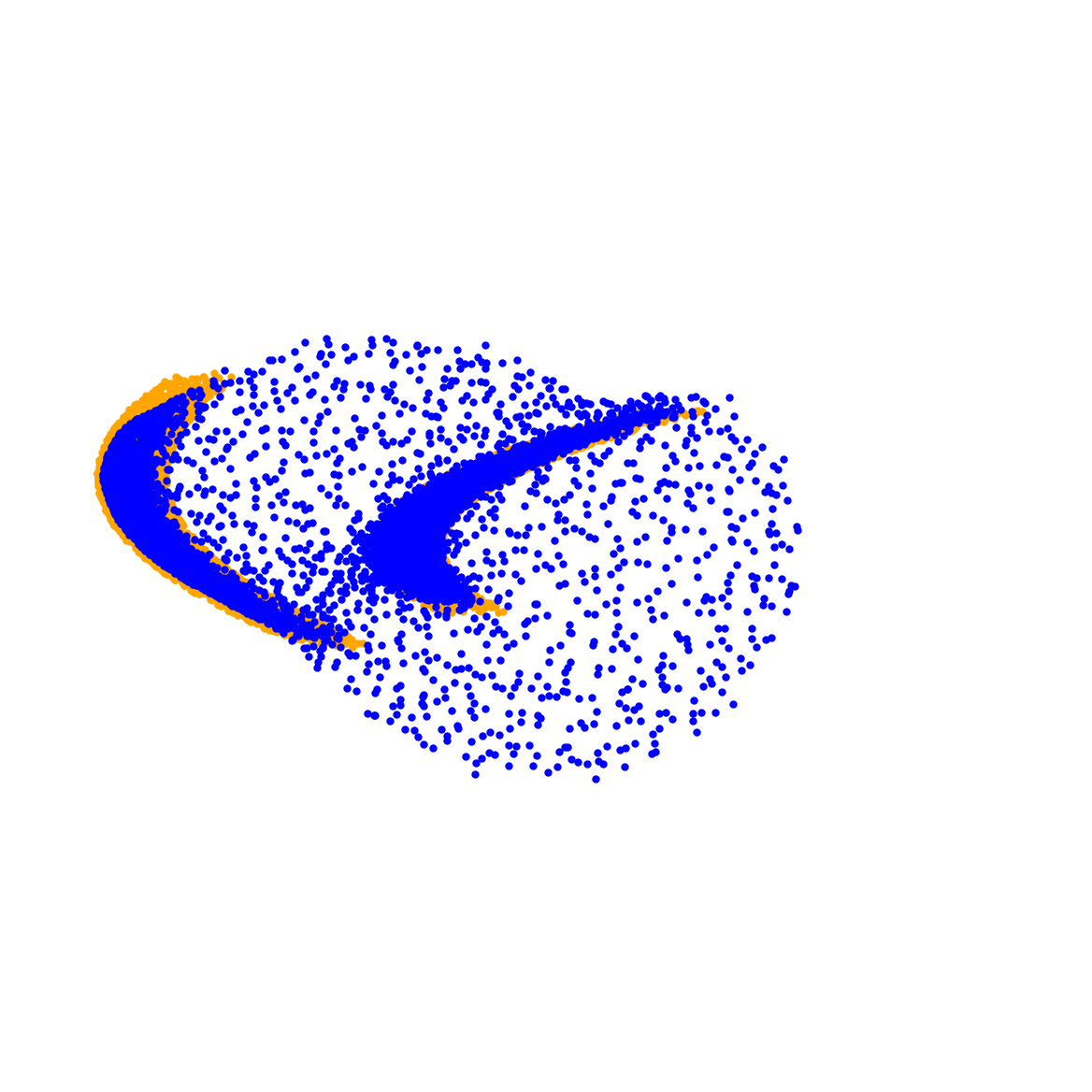}
    \caption*{t=1}
    \end{subfigure}%
    \begin{subfigure}[t]{.245\textwidth}
      \includegraphics[width=\linewidth, valign=c]{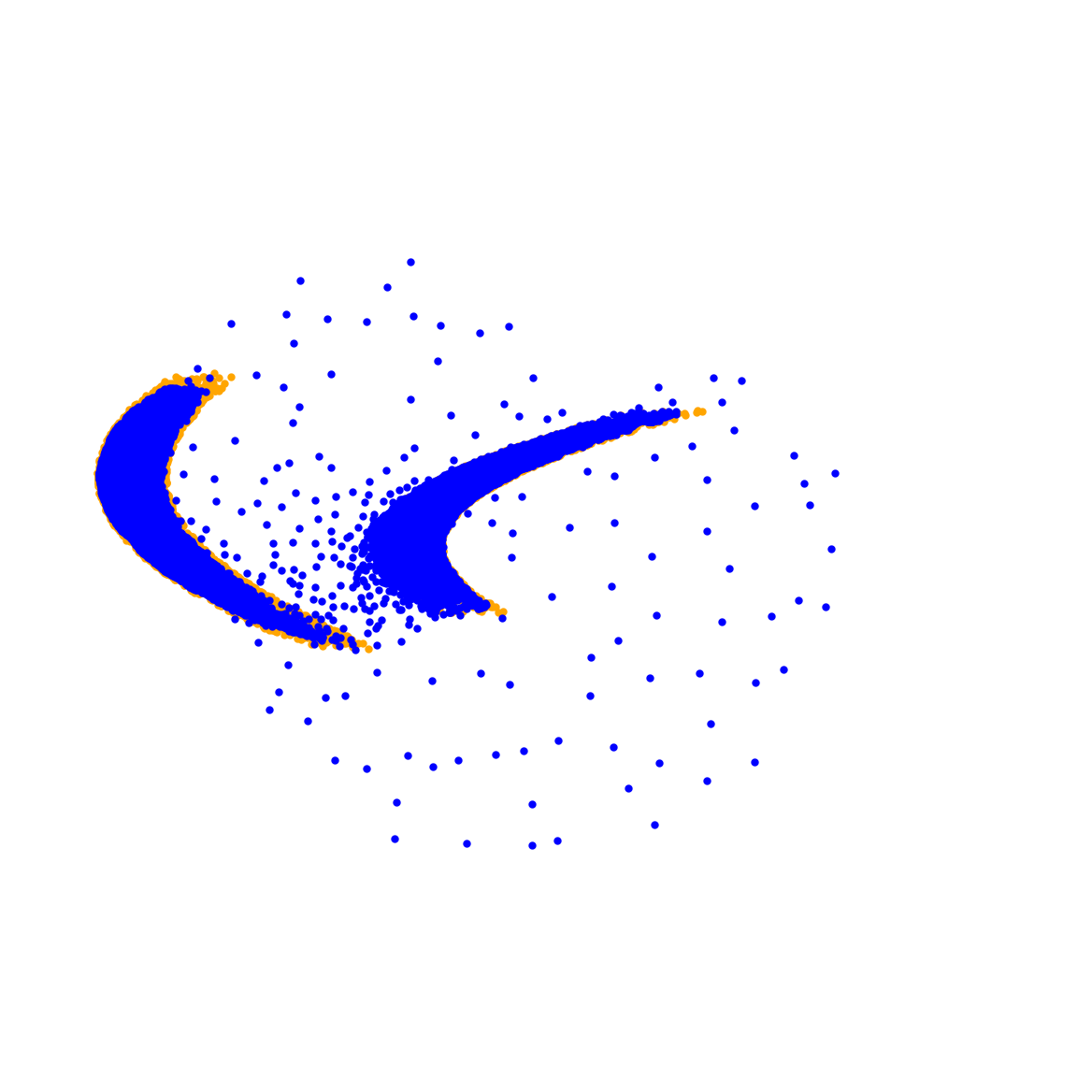}
    \caption*{t=10}
    \end{subfigure}
    \begin{subfigure}[t]{.245\textwidth}
        \includegraphics[width=\linewidth, valign=c]{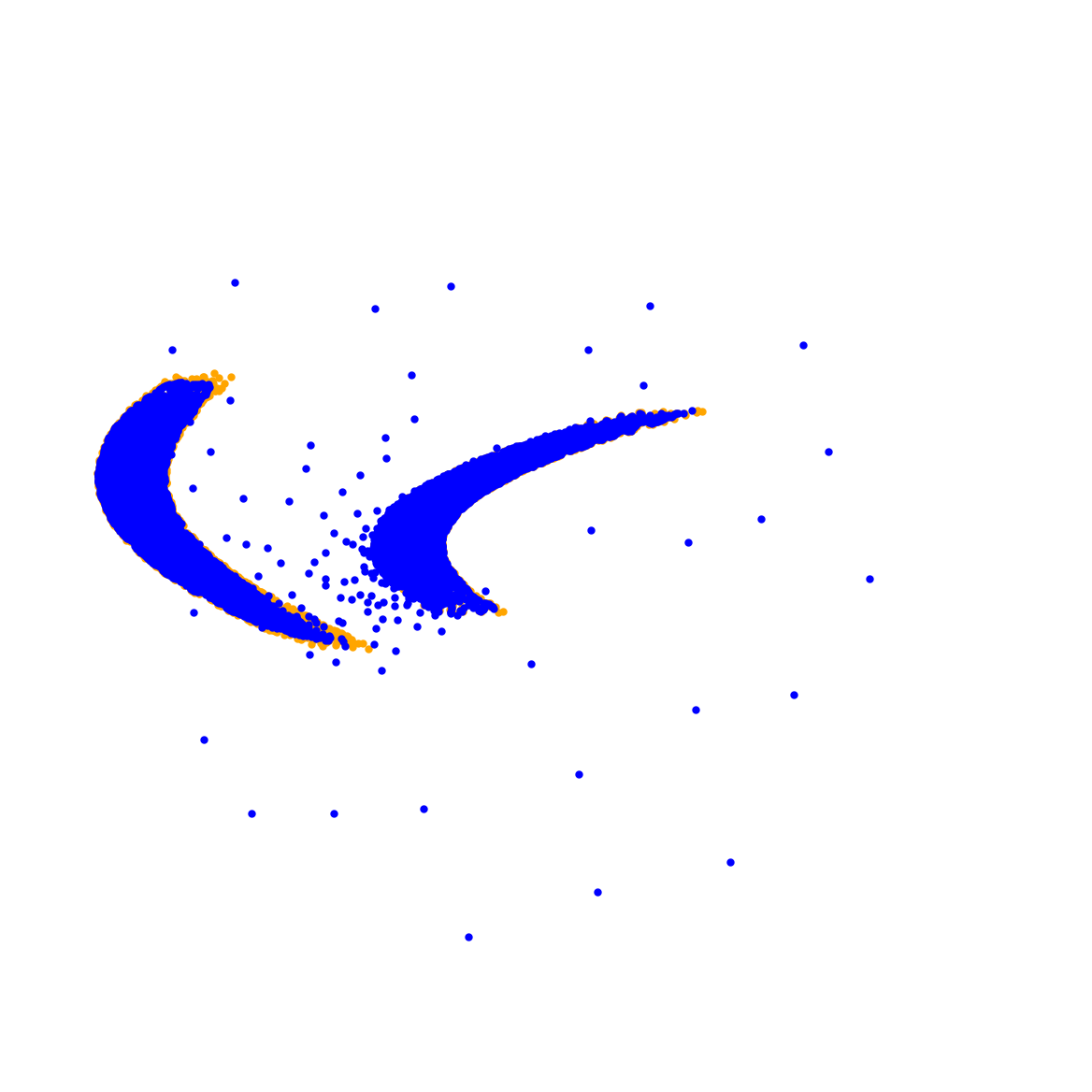}
        \caption*{t=50}
    \end{subfigure}%

    \begin{subfigure}[t]{.245\textwidth}
      \includegraphics[width=\linewidth, valign=c]{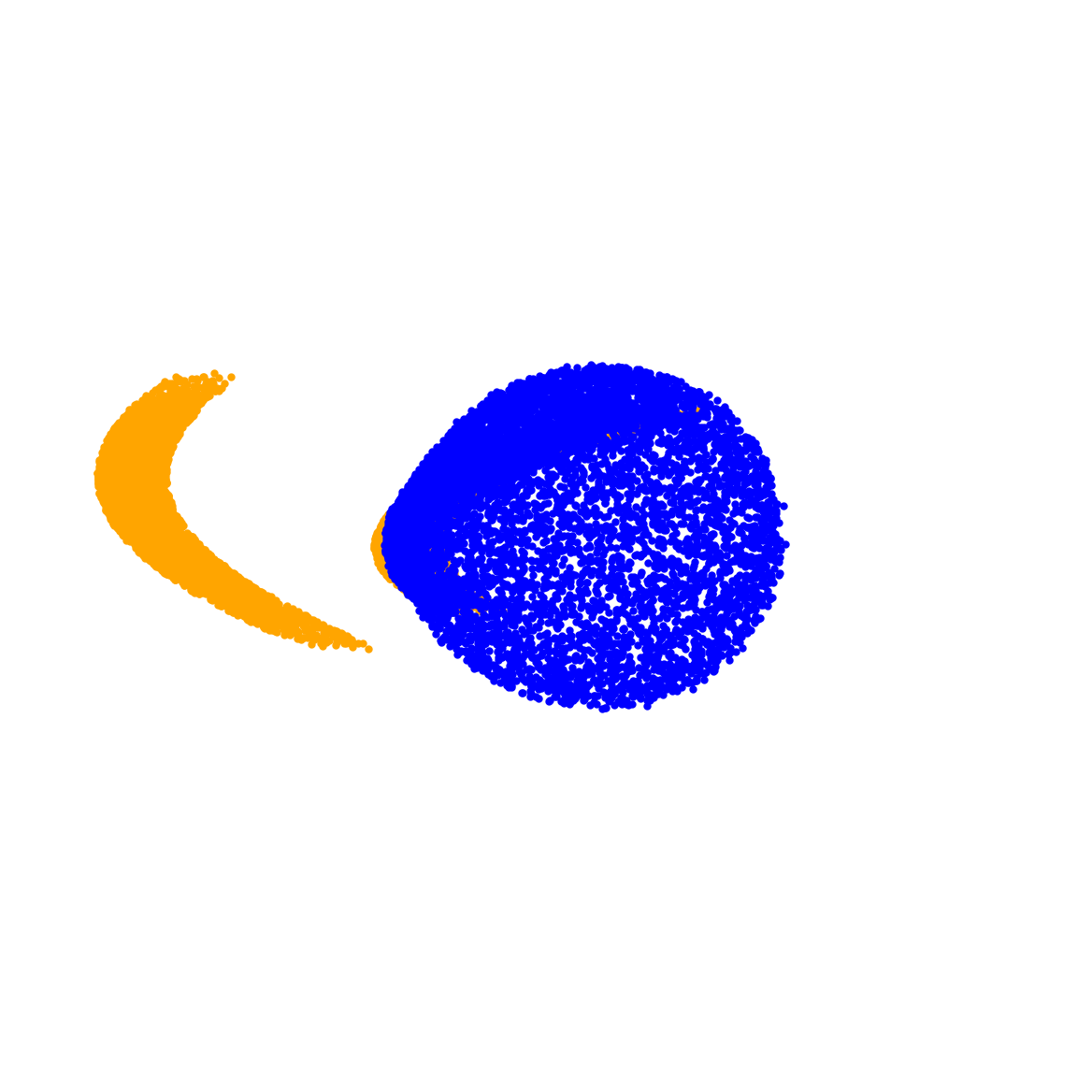}
    \caption*{t=0.1}
    \end{subfigure}%
    \begin{subfigure}[t]{.245\textwidth}
      \includegraphics[width=\linewidth, valign=c]{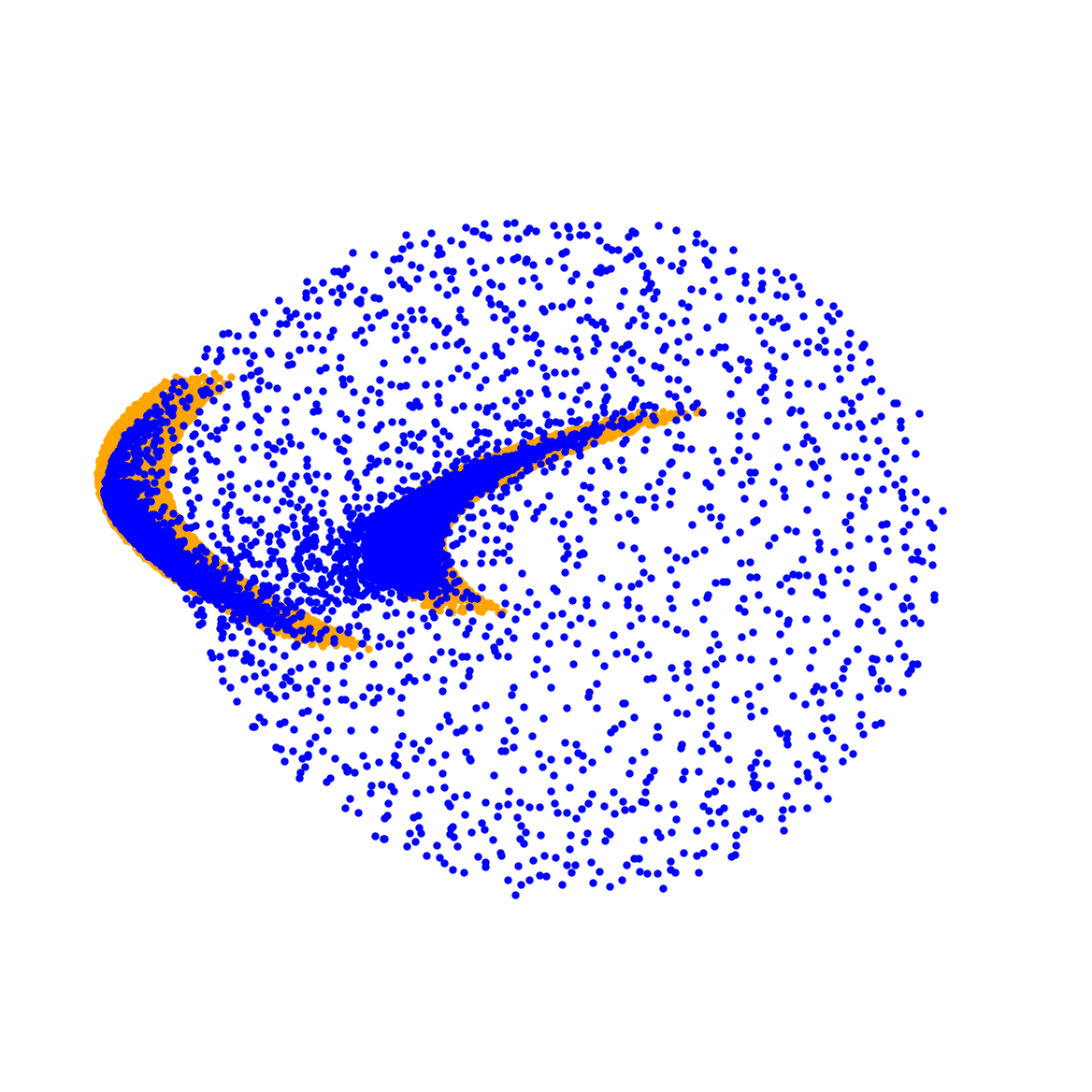}
    \caption*{t=1}
    \end{subfigure}%
    \begin{subfigure}[t]{.245\textwidth}
      \includegraphics[width=\linewidth, valign=c]{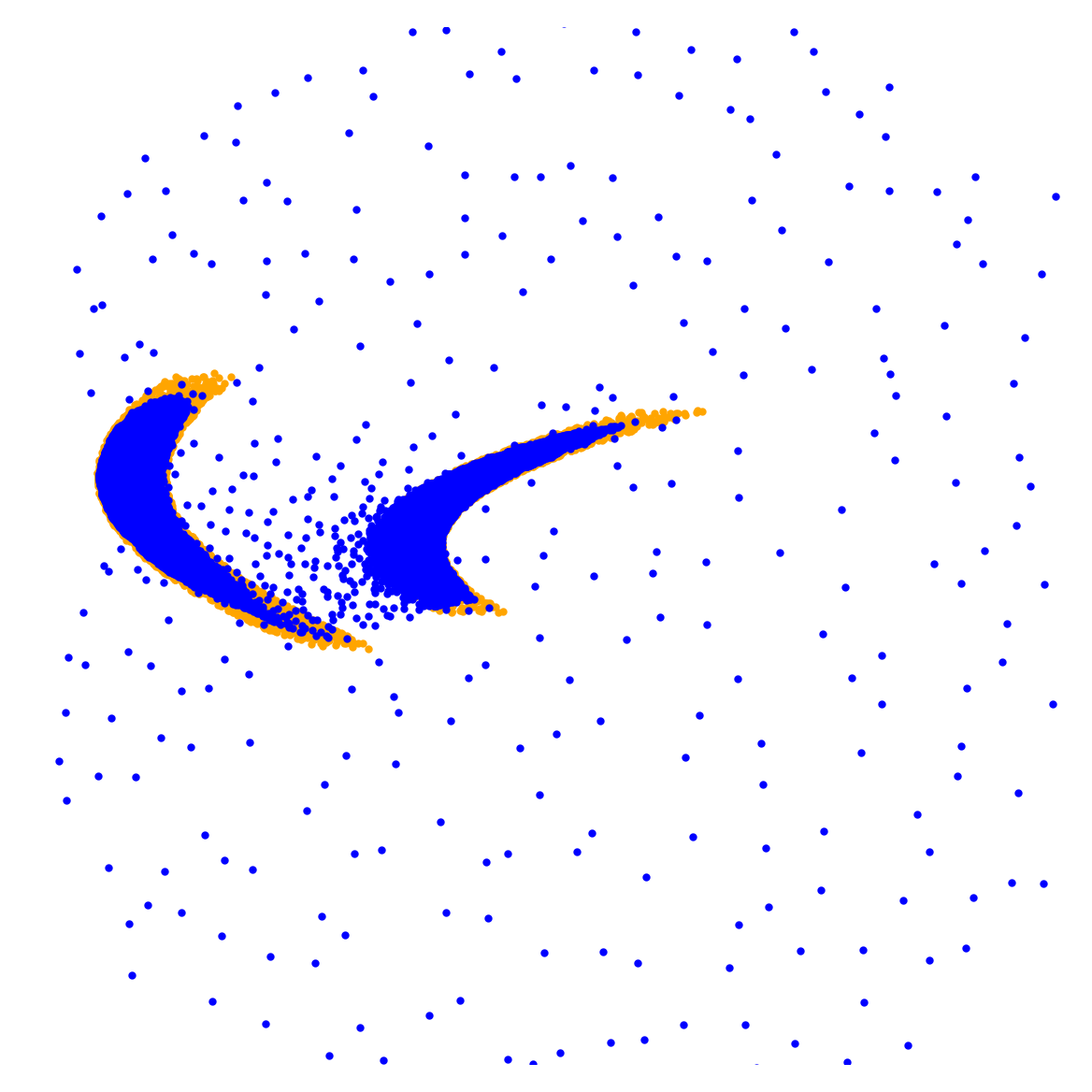}
    \caption*{t=10}
    \end{subfigure}
    \begin{subfigure}[t]{.245\textwidth}
        \includegraphics[width=\linewidth, valign=c]{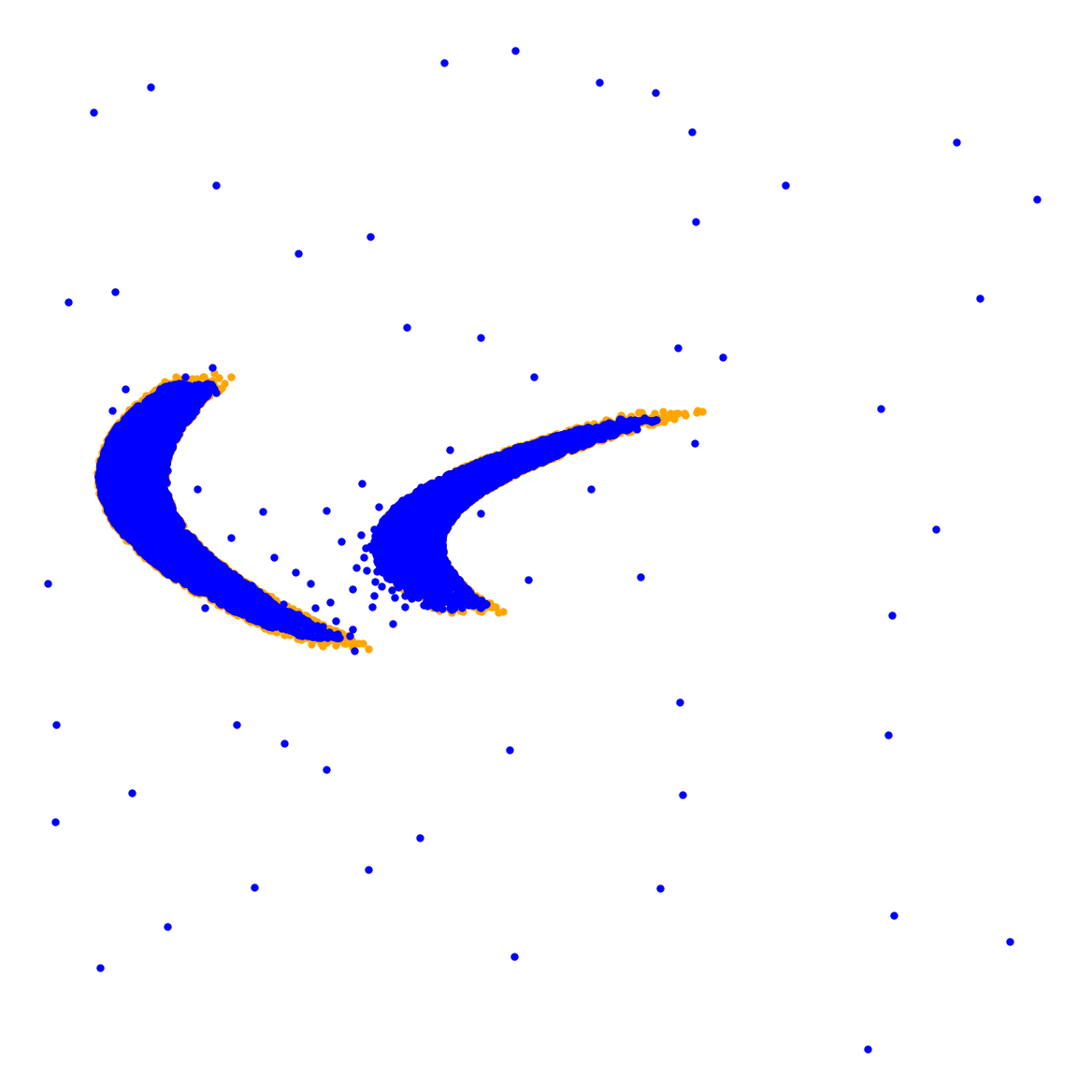}
        \caption*{t=50}
    \end{subfigure} \\
    \centering
    \caption{WGF of the tight version $\tilde{D}_{f, \nu}^{\lambda}$ (top) and non-tight one $D_{f, \nu}^{\lambda}$ (bottom) of the regularized Tsallis-2 divergence for the bananas target and $N = M = 9000$.
   }
    \label{fig:tight_vs_nontight}
\end{figure}

\begin{remark}[Tractability of FISTA for the primal problem]
    The separability of $\prox_{\tilde h_1}$ is lost when we restrict the minimization of $J$ to the simplex.
    Hence, it is unclear if FISTA would be effective when applied to \eqref{eq:TightFDiv}.
    Due to \eqref{eq:ConstrDual}, the same problem would also occur for the dual objective.
\end{remark}

\section{Duality gaps and optimality criteria} \label{app:C}

Here, we collect some details about tracking the accuracy of the numerical solvers for the dual problem and when to terminate the iterations. 
The duality gap is the difference between the dual optimal value and the primal optimal value, namely
\begin{align*}
    \text{dg}^{(n)}
    & \coloneqq \bigl|- D(b^{n}) - J(q^{(n)})\bigr|.
\end{align*}
Since we only minimize $J$, we instead calculate the primal pseudo-duality gap
\begin{equation*}
    \text{dg}_p^{(n)}
    \coloneqq \Bigl| - D\Bigl(- \frac{1}{\lambda M} q^{(n)}\Bigr) - J(q^{(n)}) \Bigr|.
\end{equation*}
To account for small functional values towards the end of the iteration, we can replace this with its relative counterpart, where we divide by the minimum of the two summands.
When computing the particle flows, we terminate the time stepping if the \enquote{kinetic energy} $- \partial_t D_{f, \nu}^{\lambda}(\gamma_n) = \frac{1}{N} \sum_{i = 1}^{N} | \nabla p_n\big(x_i^{(n)}\big) |^2$ is small.

\section{Supplementary Material} \label{sec:supplementary}

Here, we provide more numerical experiments and give some implementation details.

\subsection{Implementation Details}
To leverage parallel computing on the GPU, we implemented our model in \texttt{PyTorch} \cite{pytorch}.
Furthermore, we use the \texttt{Matplotlib} package \cite{matplotlib} for plotting the figures and the \texttt{POT} package \cite{POT21} for calculating $W_2(\mu, \nu)^2$ along the flow.
Solving the dual \eqref{dual_d} turned out to be much more time-consuming than solving the primal problem \eqref{eq:MoreauPrimalgnuf2}, so we exclusively outlined the implementation for the latter.
As a sanity check, we calculate the \enquote{pseudo-duality gap}, which is the difference between the value of the primal objective at the solution $q$ and the value of the dual objective at the corresponding dual certificate $\frac{1}{\lambda N} [- q, \1_N]$.
Then, the relative pseudo-duality gap is computed as the quotient of the pseudo-duality gap and the minimum of the absolute value of the involved objective values.
Since the particles get close to the target towards the end of the flow, we use double precision throughout (although this deteriorates the benefit of GPUs).
With this, all quantities can still be accurately computed and evaluated.

\subsection{The Kernel Width}
For the first ablation study, we use the parameters from Section~\ref{sec:numerics}, namely $\alpha = 5$, $\lambda = \num{e-2}$, $\tau = \num{e-3}$ and $N = 900$.
In Figure \ref{fig:ablationCirclesIMQtau},
we see that the kernel width has to be calibrated carefully to get a sensible result.
We observe that the width $\sigma^2 = \num{e-3}$ is too small.
In particular, the repulsion of the particles is too powerful, and they immediately spread out before moving towards the rings, but only very slowly.
If $\sigma^2 = \num{e-2}$, everything works out reasonably, and there are no outliers.
For $\sigma^2 = 1$, the particles only recover the support of the target very loosely and for $\sigma^2 = 10$, there is no recovery.
\begin{figure}[pt]
    \centering
    \rotatebox{90}{
        \begin{minipage}{.05\textwidth}
        \centering 
        \small $\num{e-3}$
    \end{minipage}} \hfill
    \begin{subfigure}[t]{.14\textwidth}
        \includegraphics[width=\linewidth, valign=c]{img/Width/sigma2=0.01/Reg_tsa5flow,lambd=.01,tau=.001,IMQ,.01,900,25,circles-0}
    \end{subfigure}%
    \begin{subfigure}[t]{.14\textwidth}
        \includegraphics[width=\linewidth, valign=c]{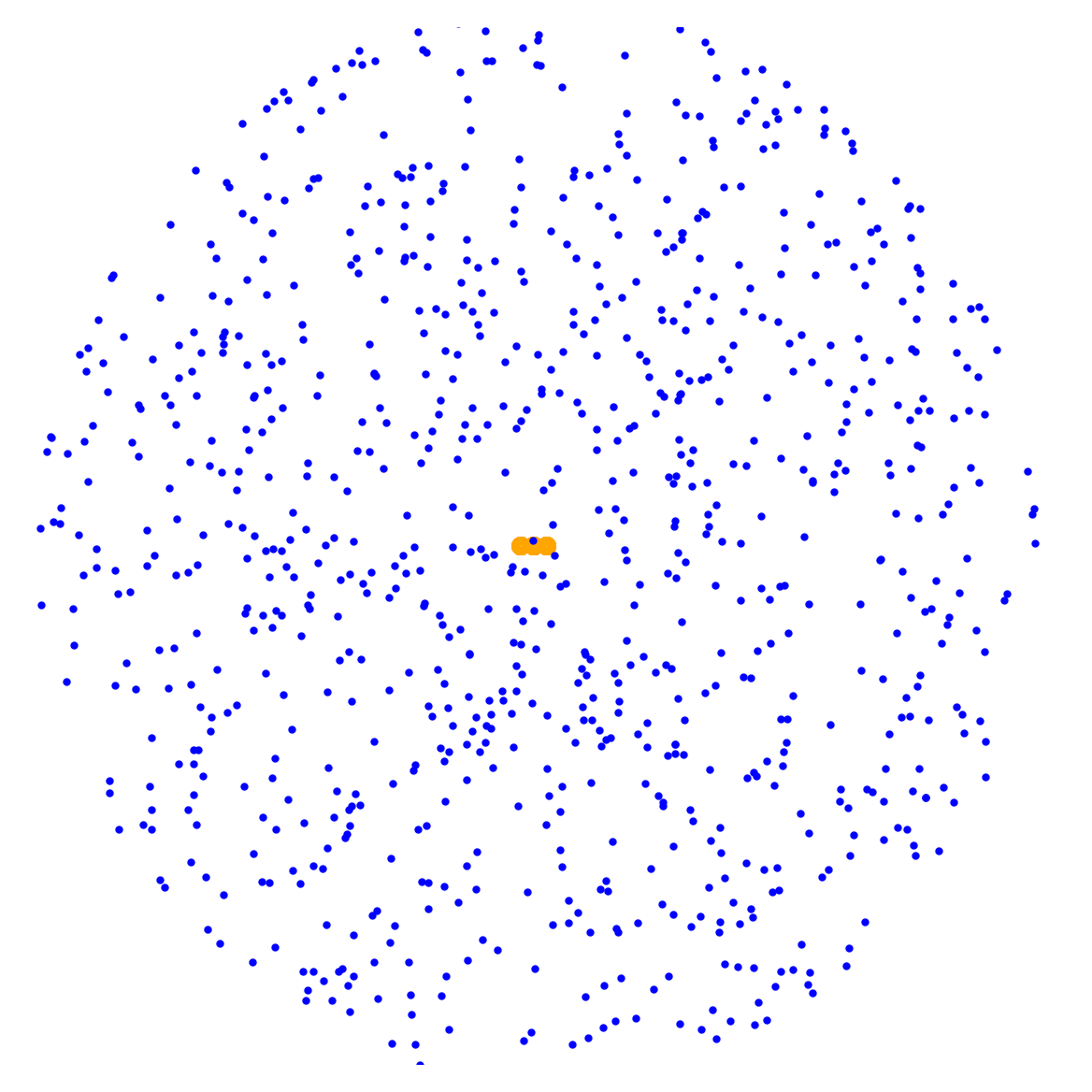}
    \end{subfigure}%
    \begin{subfigure}[t]{.14\textwidth}
        \includegraphics[width=\linewidth, valign=c]{{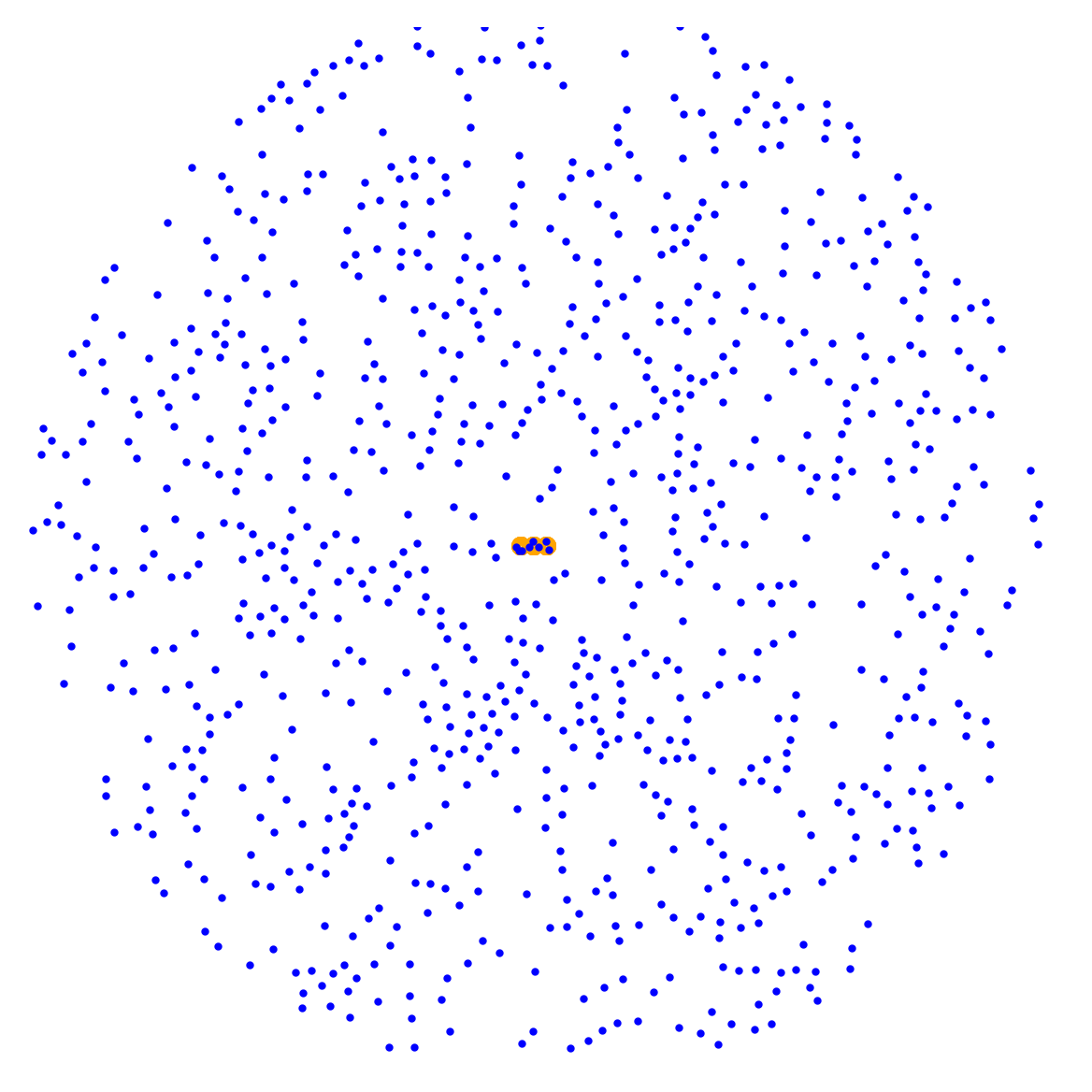}}
    \end{subfigure}%
    \begin{subfigure}[t]{.14\textwidth}
        \includegraphics[width=\linewidth, valign=c]{{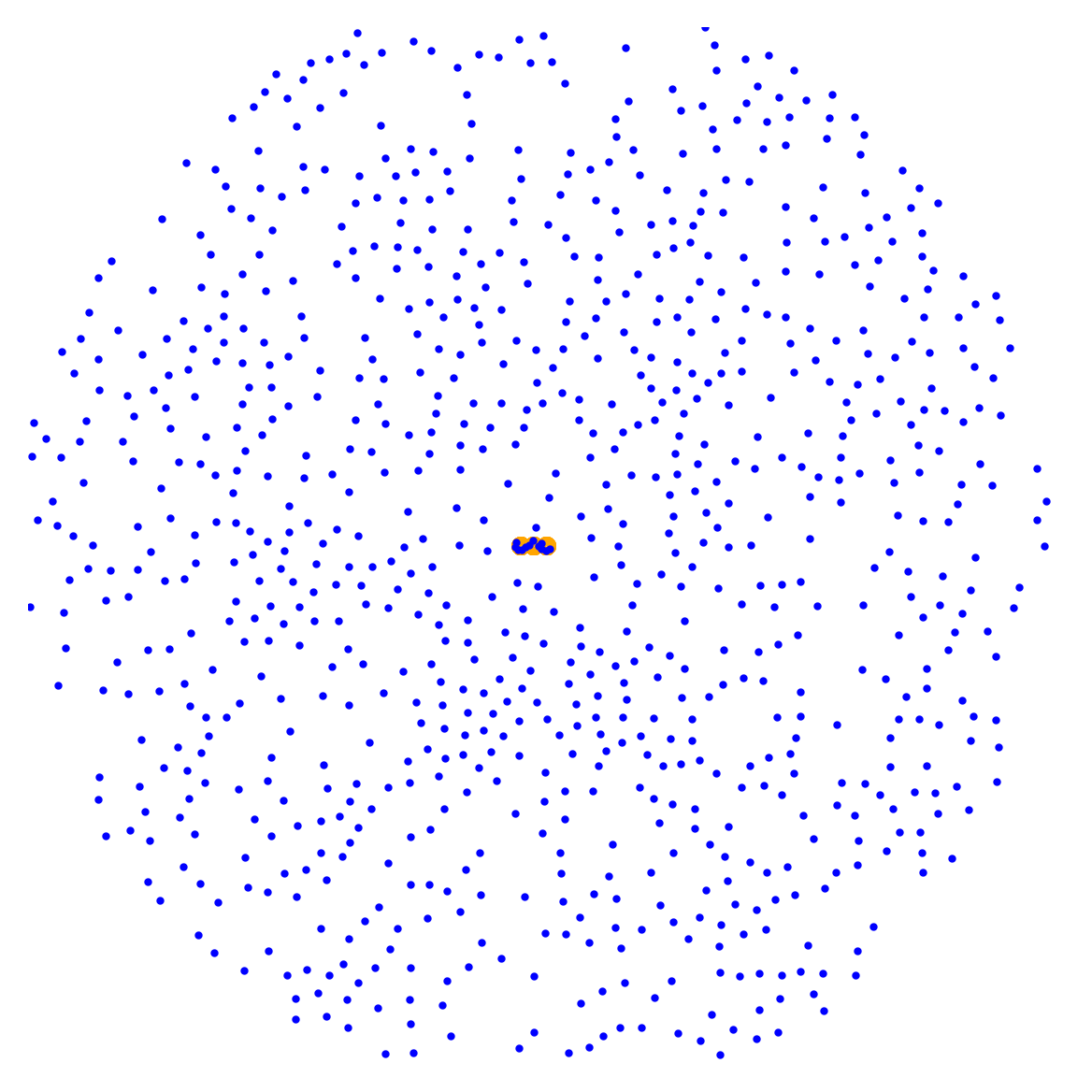}}
    \end{subfigure}%
    \begin{subfigure}[t]{.14\textwidth}
        \includegraphics[width=\linewidth, valign=c]{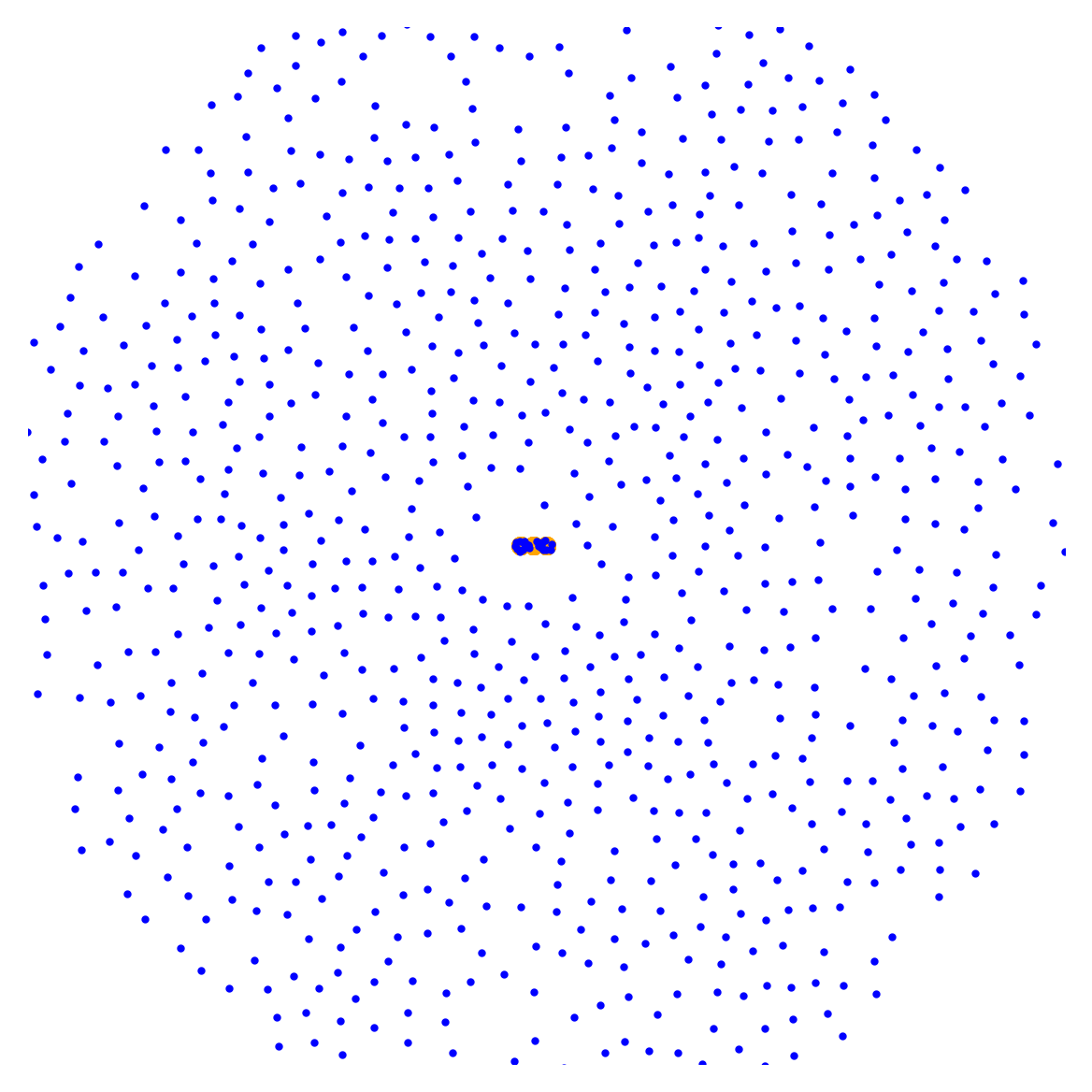}
    \end{subfigure}%
    \begin{subfigure}[t]{.14\textwidth}
        \includegraphics[width=\linewidth, valign=c]{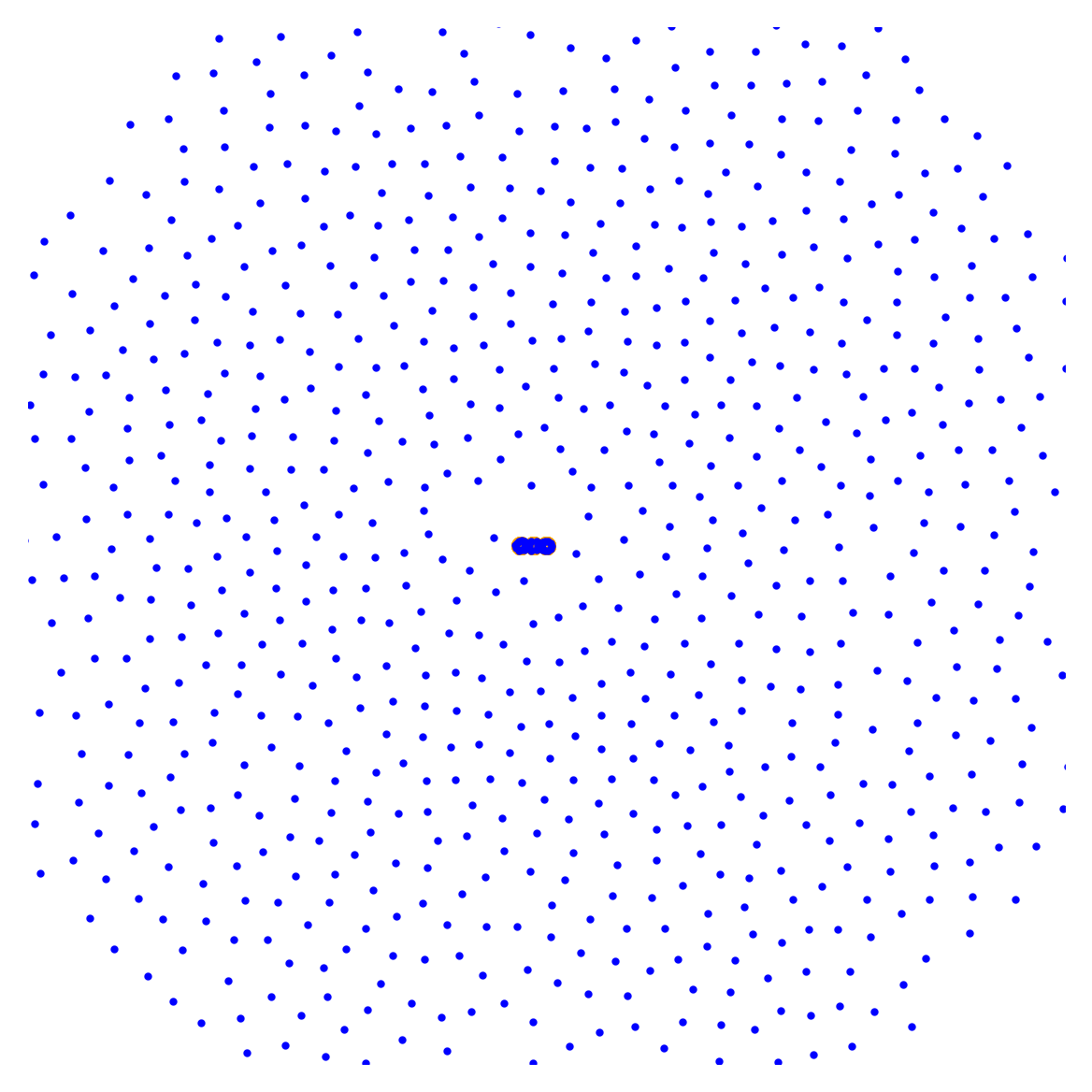}
    \end{subfigure} \\ %
    \rotatebox{90}{
        \begin{minipage}{.08\textwidth}
        \centering 
        \small $\num{e-2}$
    \end{minipage}} \hfill
    \begin{subfigure}[t]{.14\textwidth}
        \includegraphics[width=\linewidth]{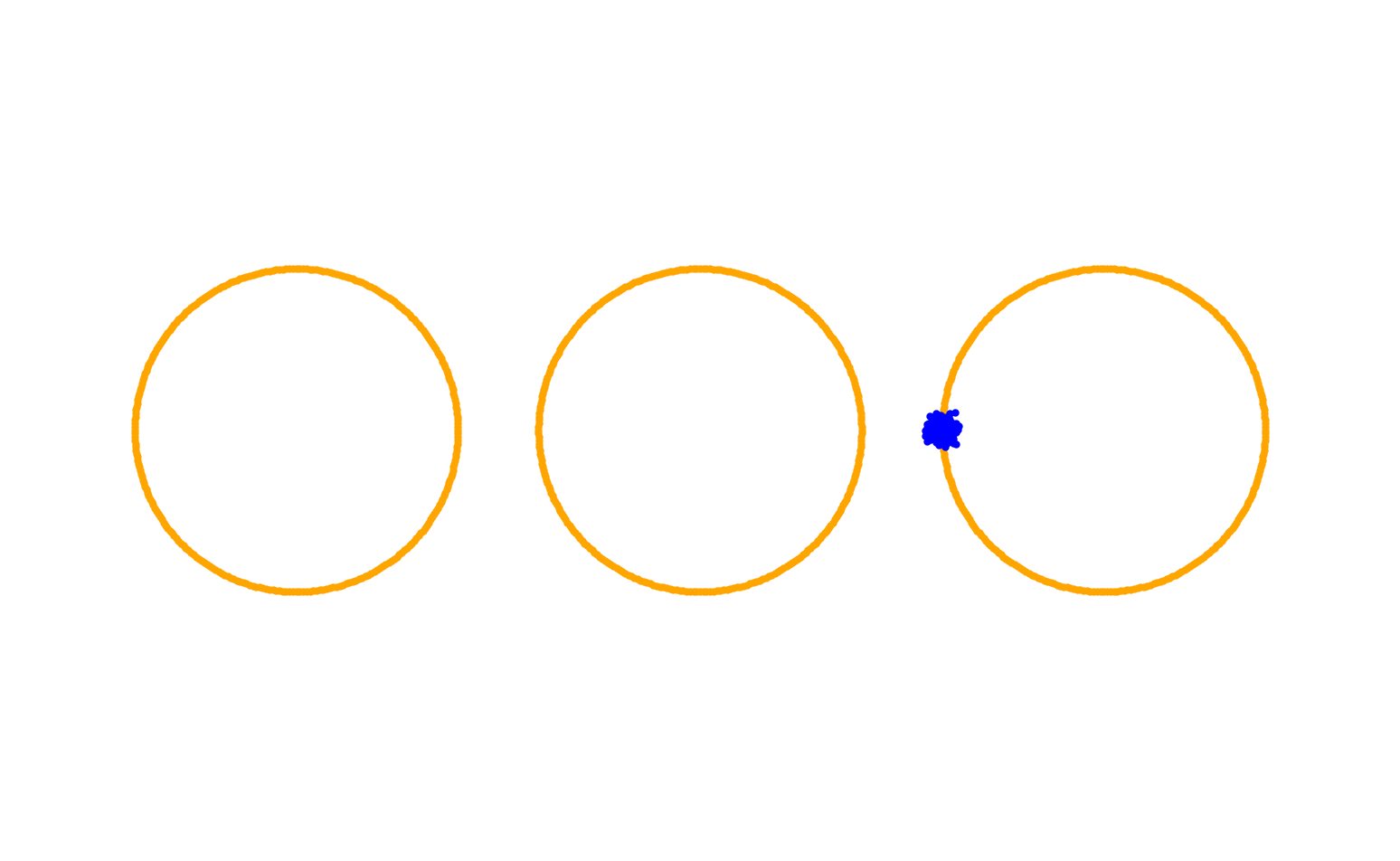}
    \end{subfigure}%
    \begin{subfigure}[t]{.14\textwidth}
        \includegraphics[width=\linewidth]{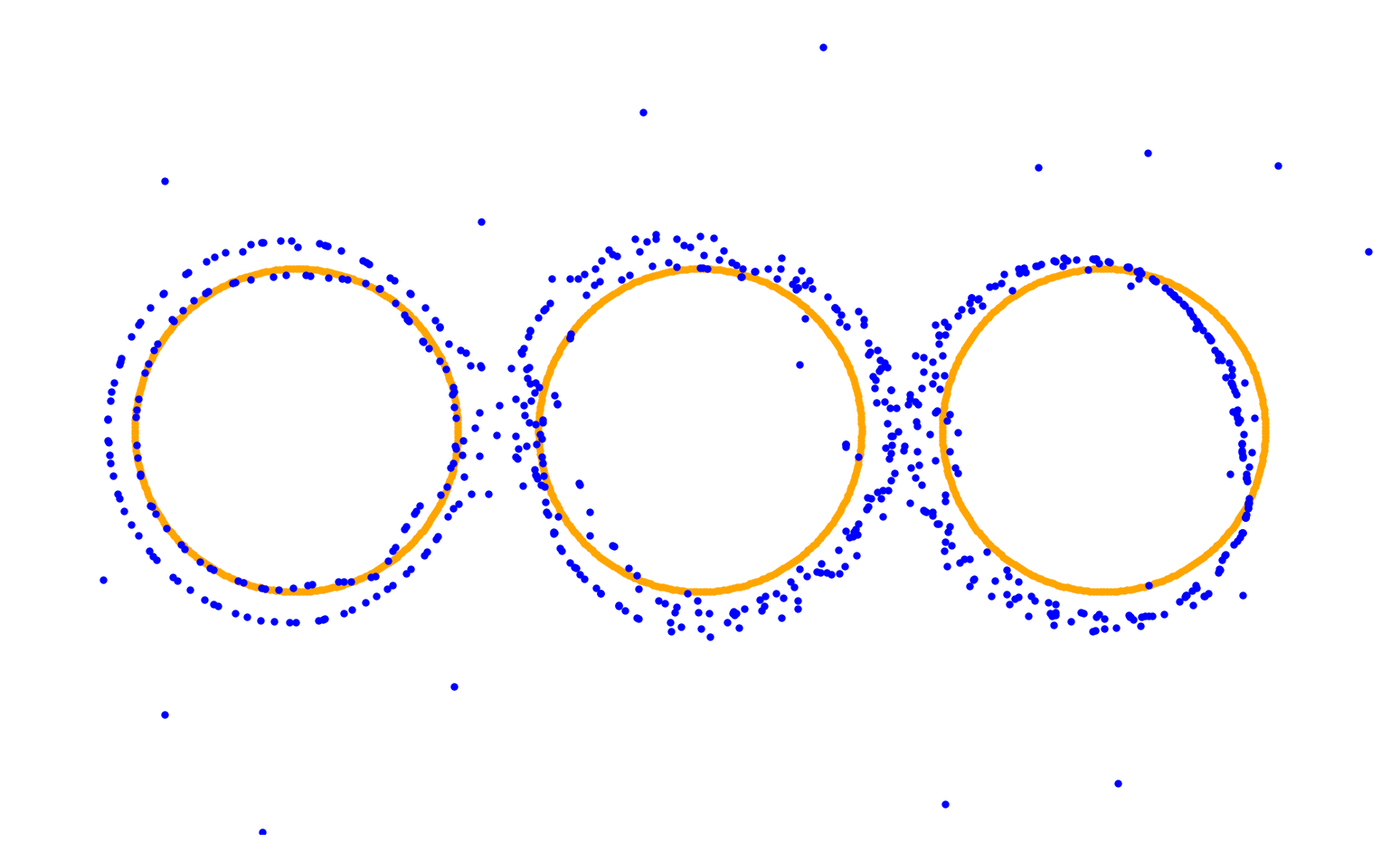}
    \end{subfigure}%
    \begin{subfigure}[t]{.14\textwidth}
        \includegraphics[width=\linewidth]{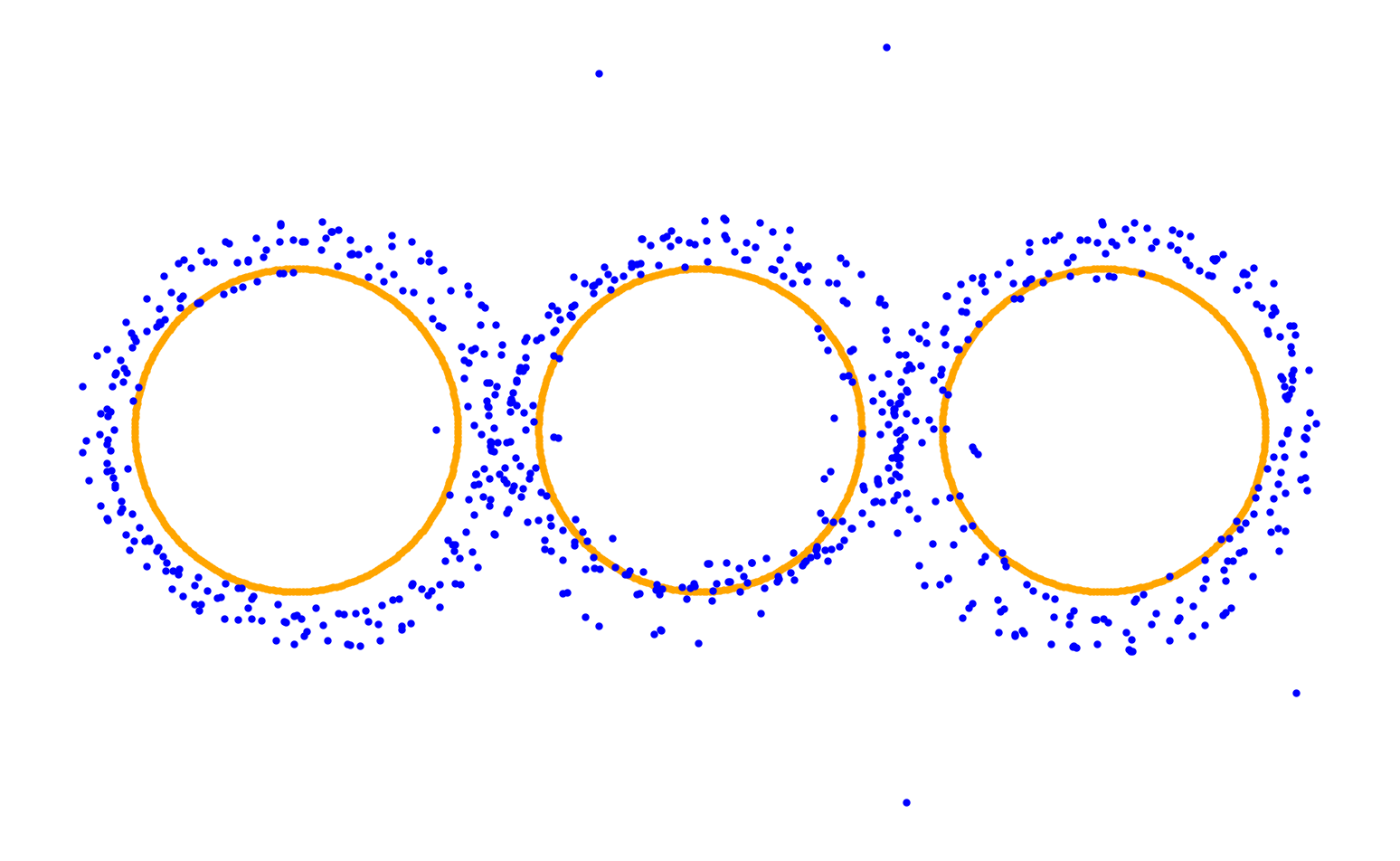}
    \end{subfigure}%
    \begin{subfigure}[t]{.14\textwidth}
        \includegraphics[width=\linewidth]{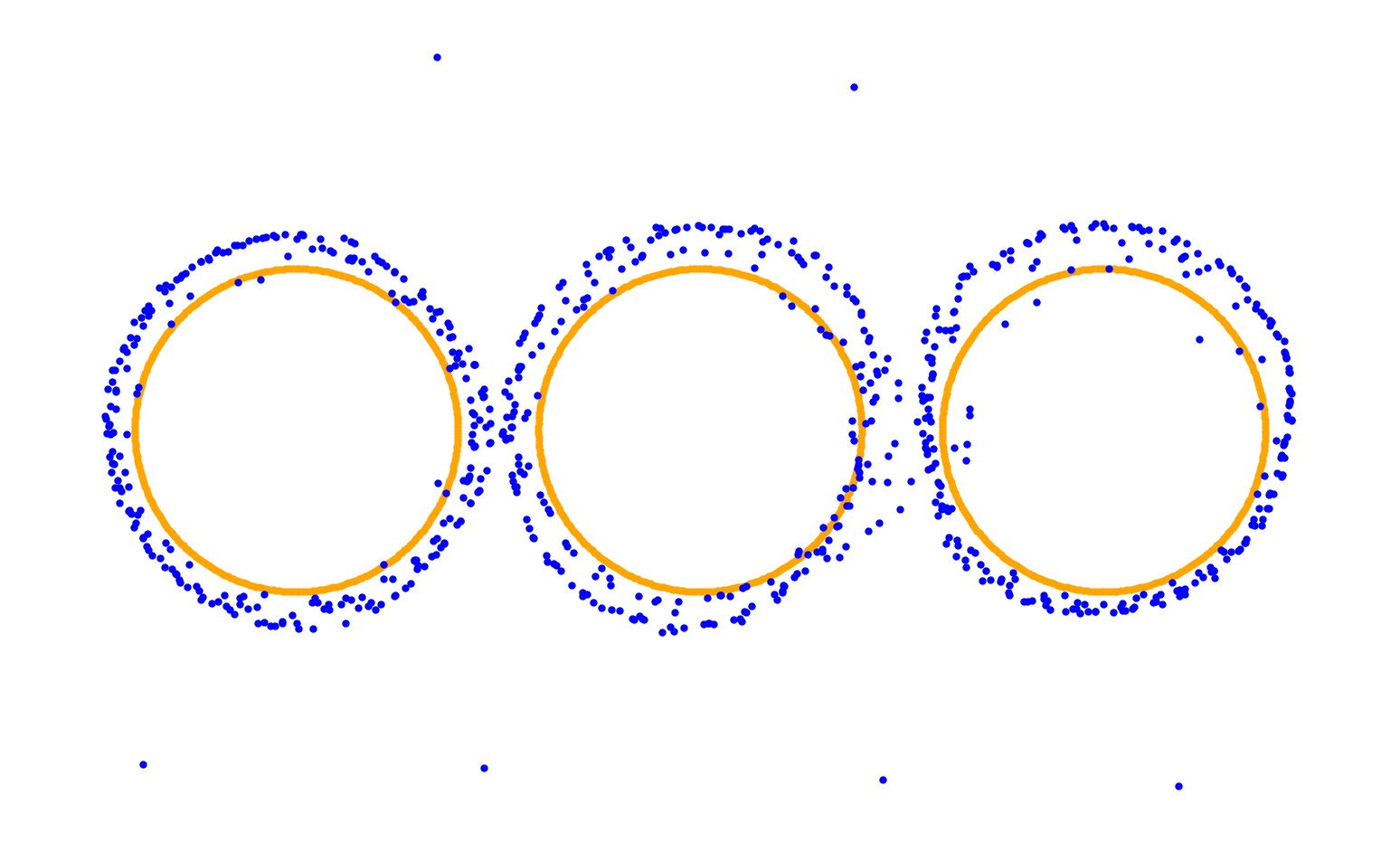}
    \end{subfigure}%
    \begin{subfigure}[t]{.14\textwidth}
        \includegraphics[width=\linewidth]{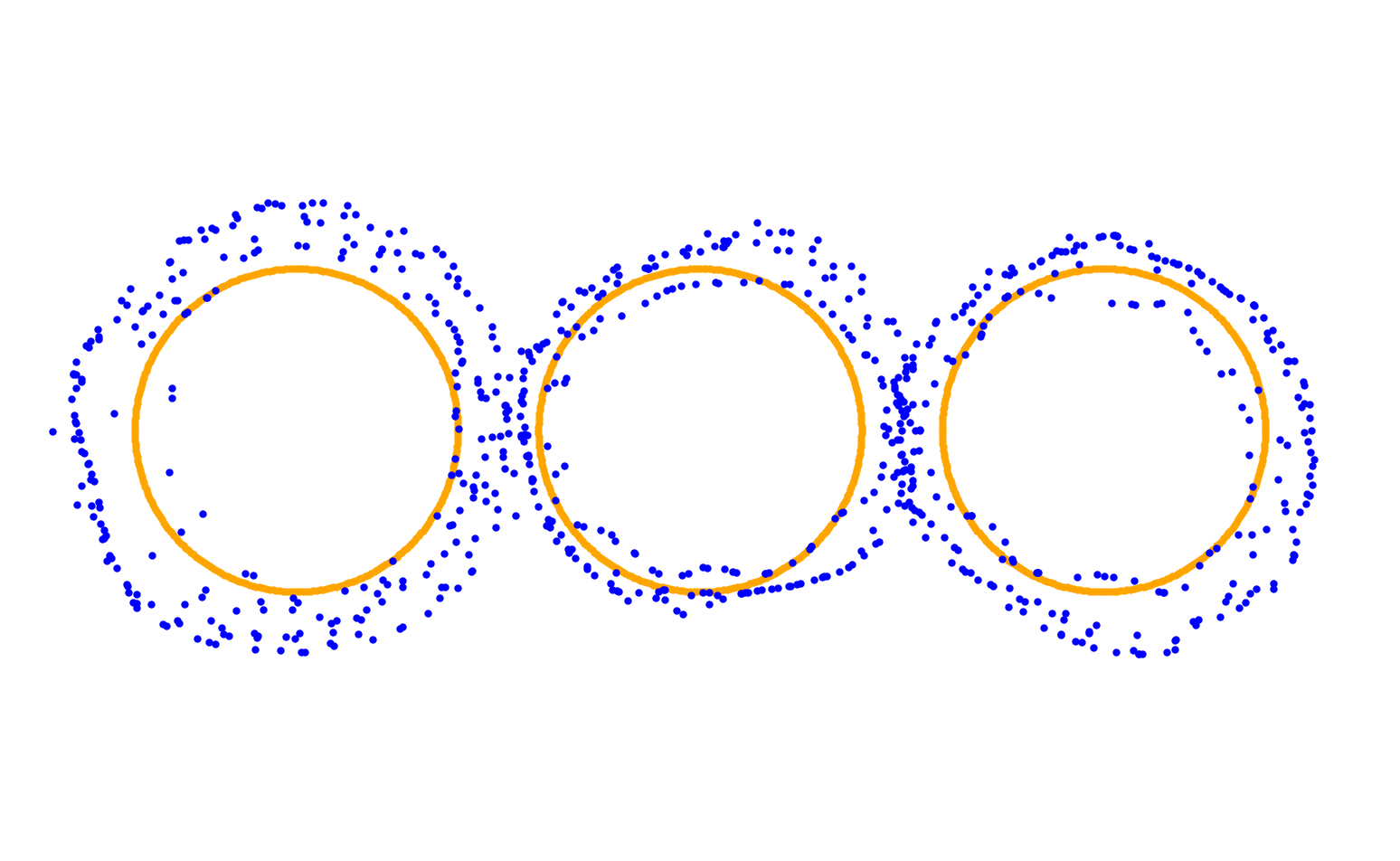}
    \end{subfigure}%
    \begin{subfigure}[t]{.14\textwidth}
        \includegraphics[width=\linewidth]{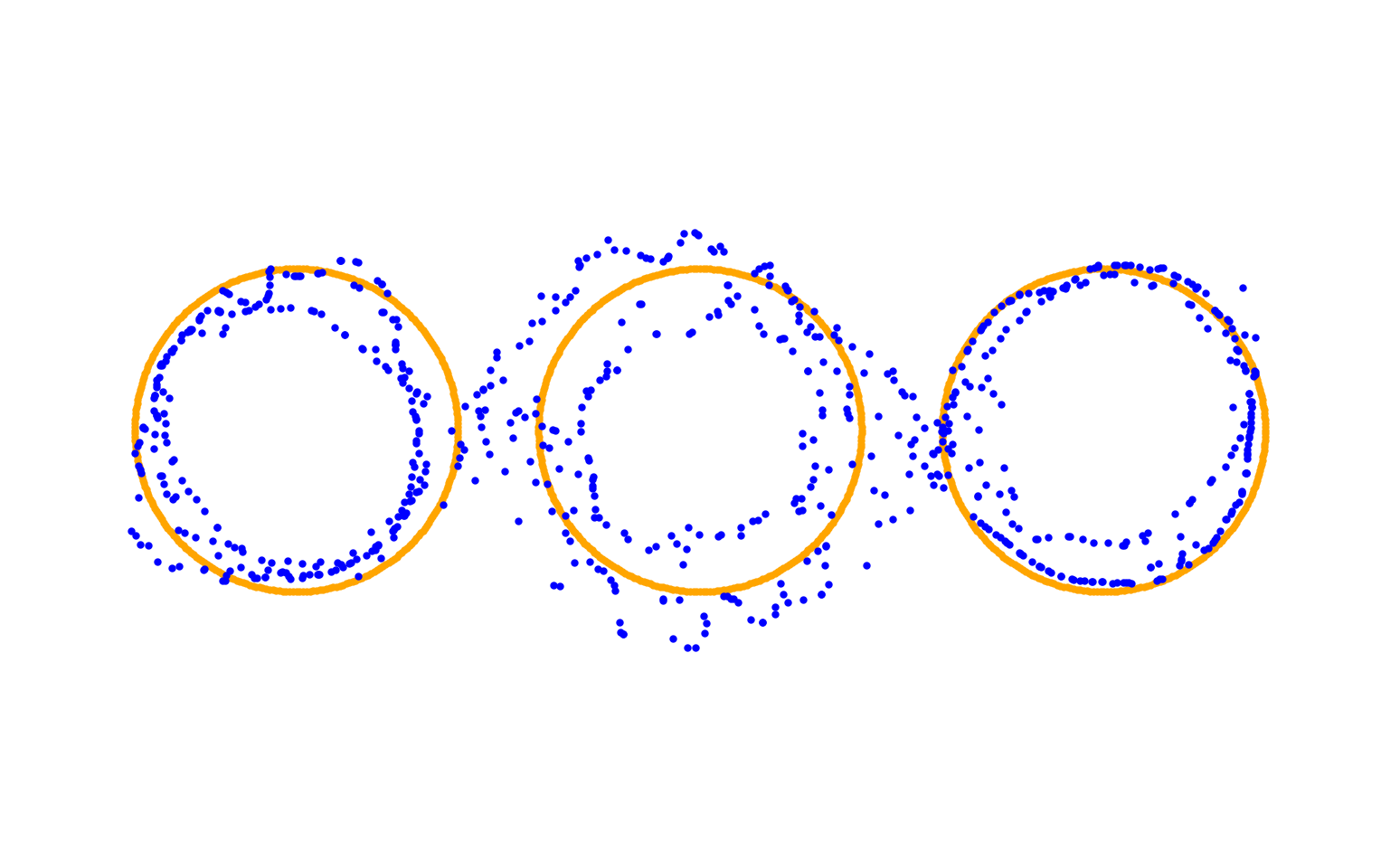}
    \end{subfigure} \\ %
    \rotatebox{90}{
        \begin{minipage}{.08\textwidth}
        \centering 
        \small $\num{e-1}$
    \end{minipage}} \hfill
    \begin{subfigure}[t]{.14\textwidth}
        \includegraphics[width=\linewidth]{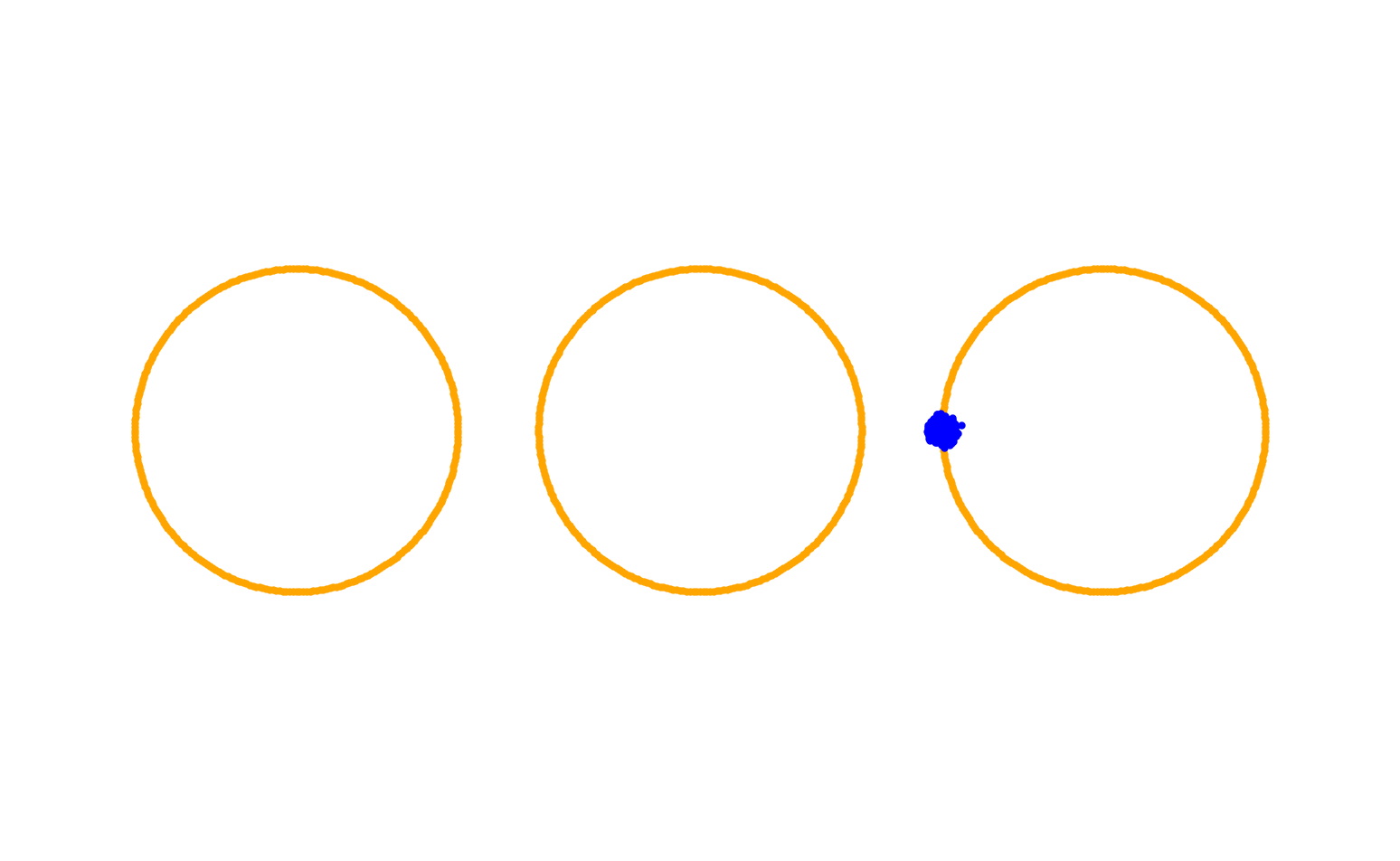}
    \end{subfigure}%
    \begin{subfigure}[t]{.14\textwidth}
        \includegraphics[width=\linewidth]{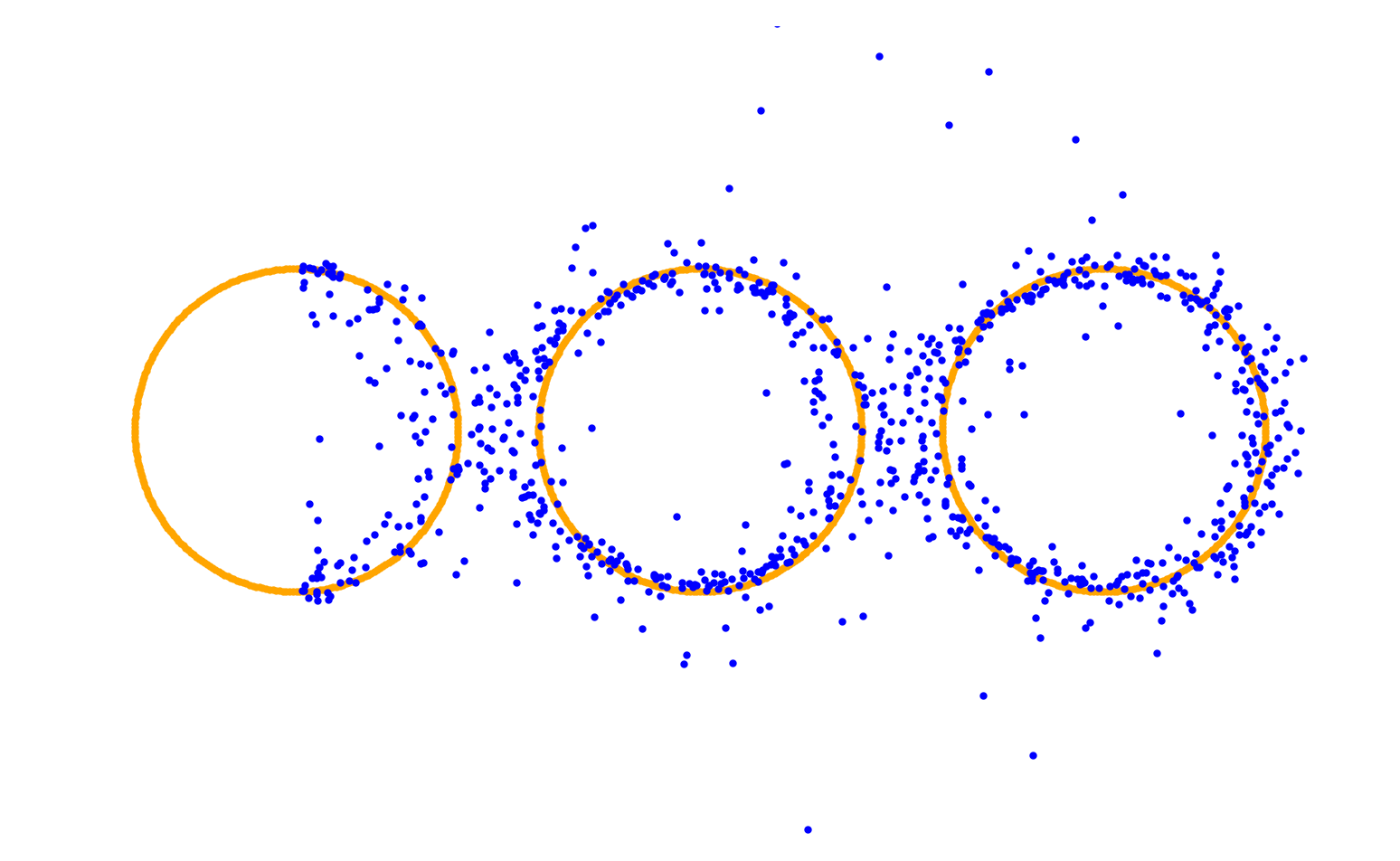}
    \end{subfigure}%
    \begin{subfigure}[t]{.14\textwidth}
        \includegraphics[width=\linewidth]{{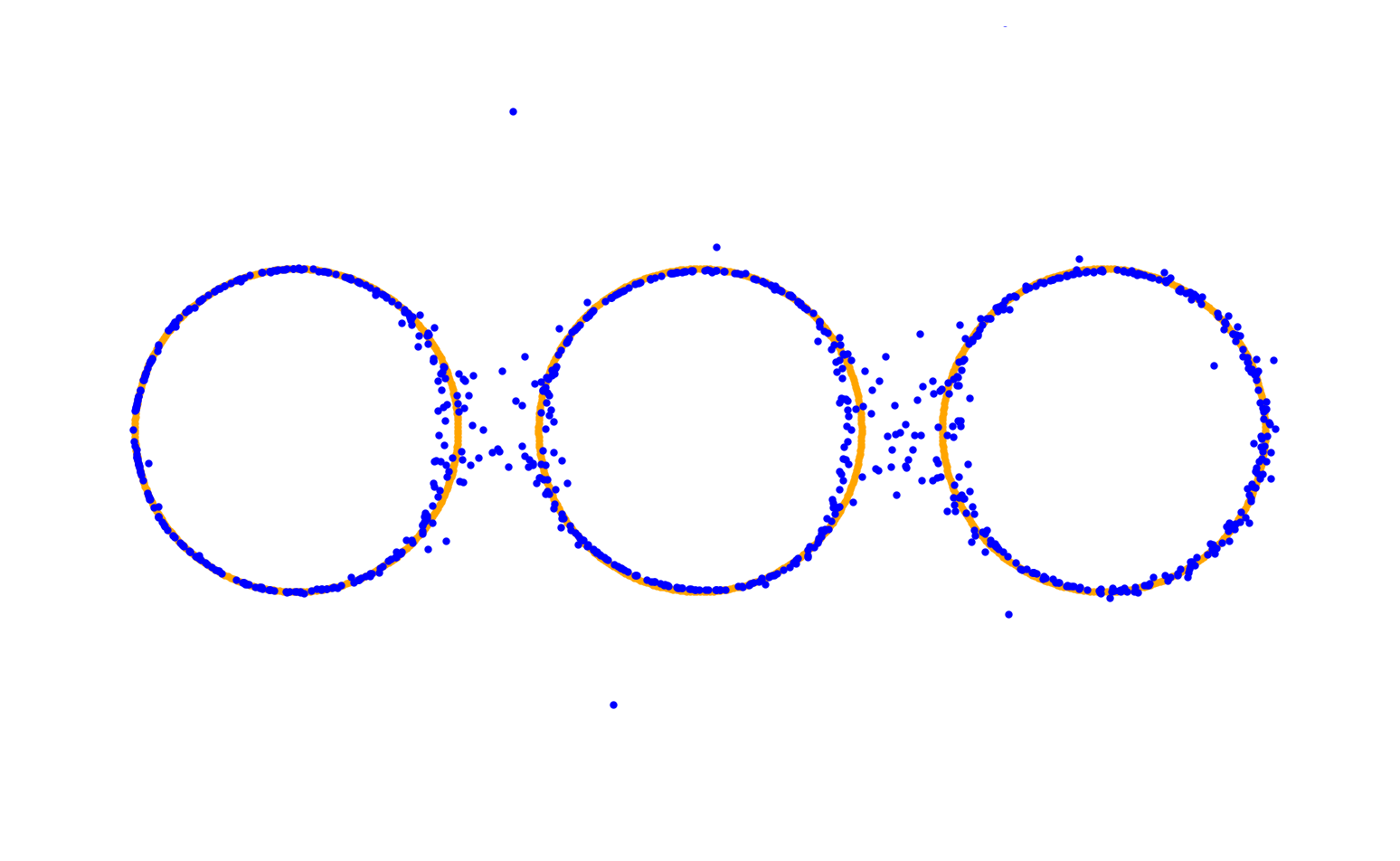}}
    \end{subfigure}%
    \begin{subfigure}[t]{.14\textwidth}
        \includegraphics[width=\linewidth]{{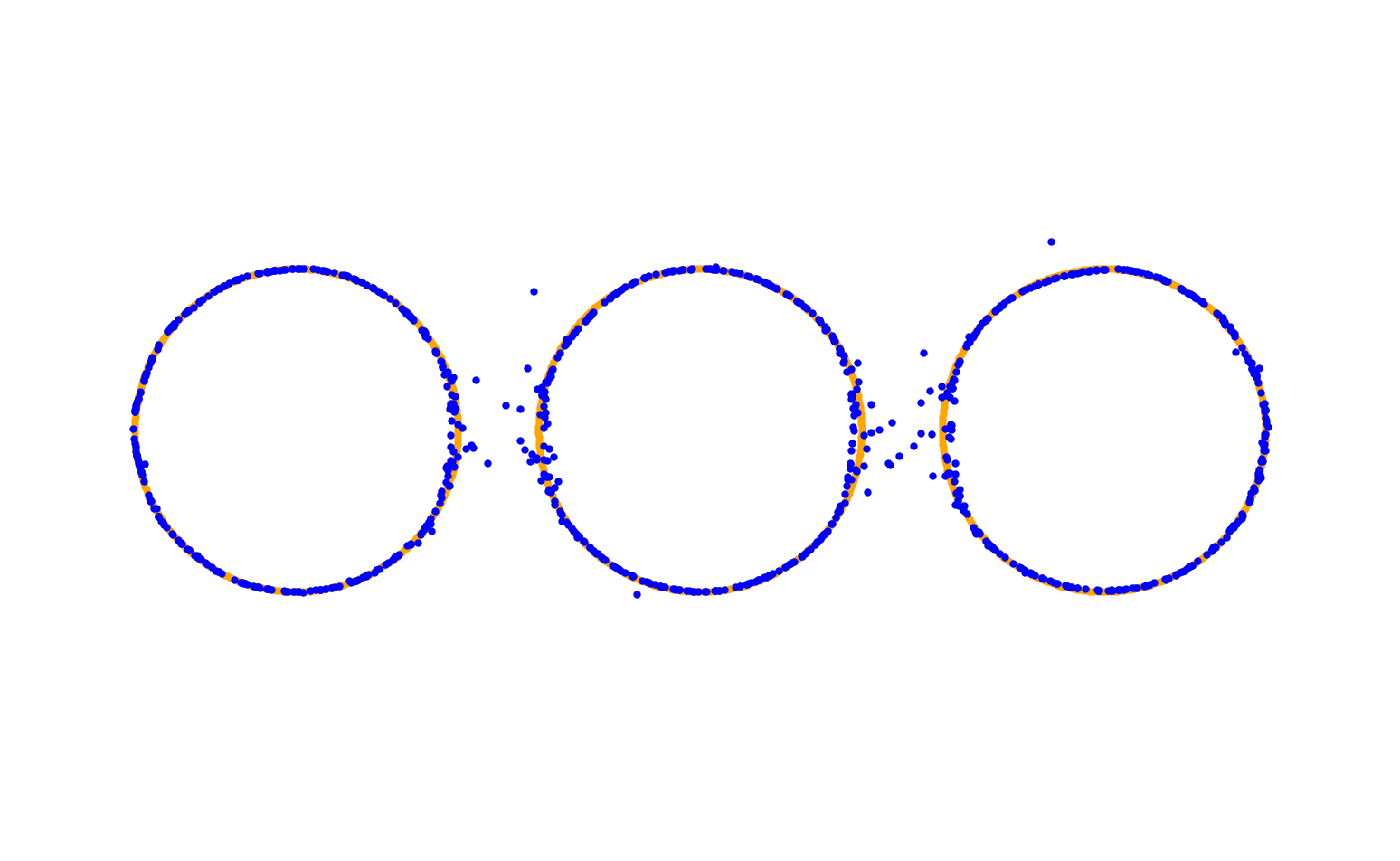}}
    \end{subfigure}%
    \begin{subfigure}[t]{.14\textwidth}
        \includegraphics[width=\linewidth]{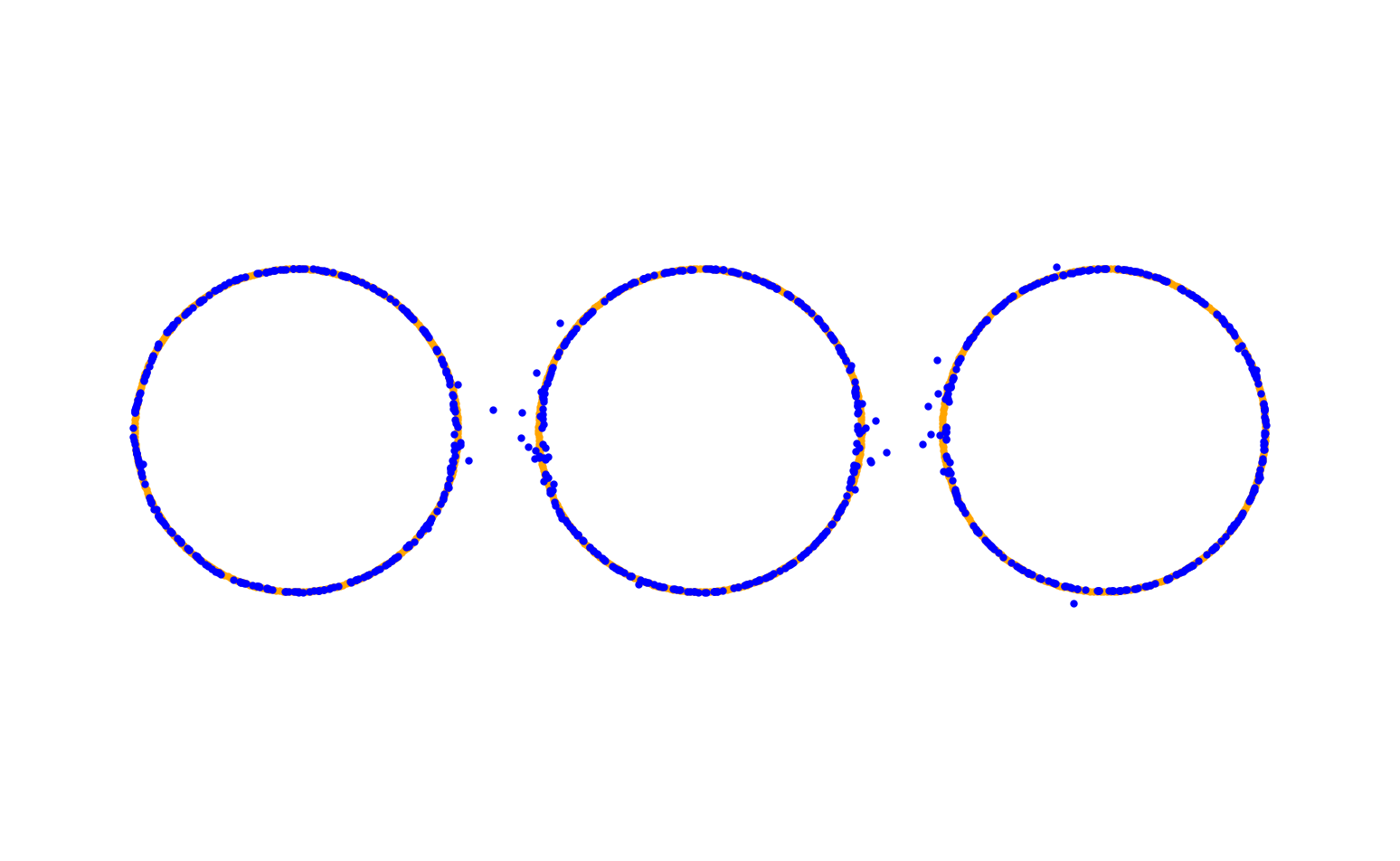}
    \end{subfigure}%
    \begin{subfigure}[t]{.14\textwidth}
        \includegraphics[width=\linewidth]{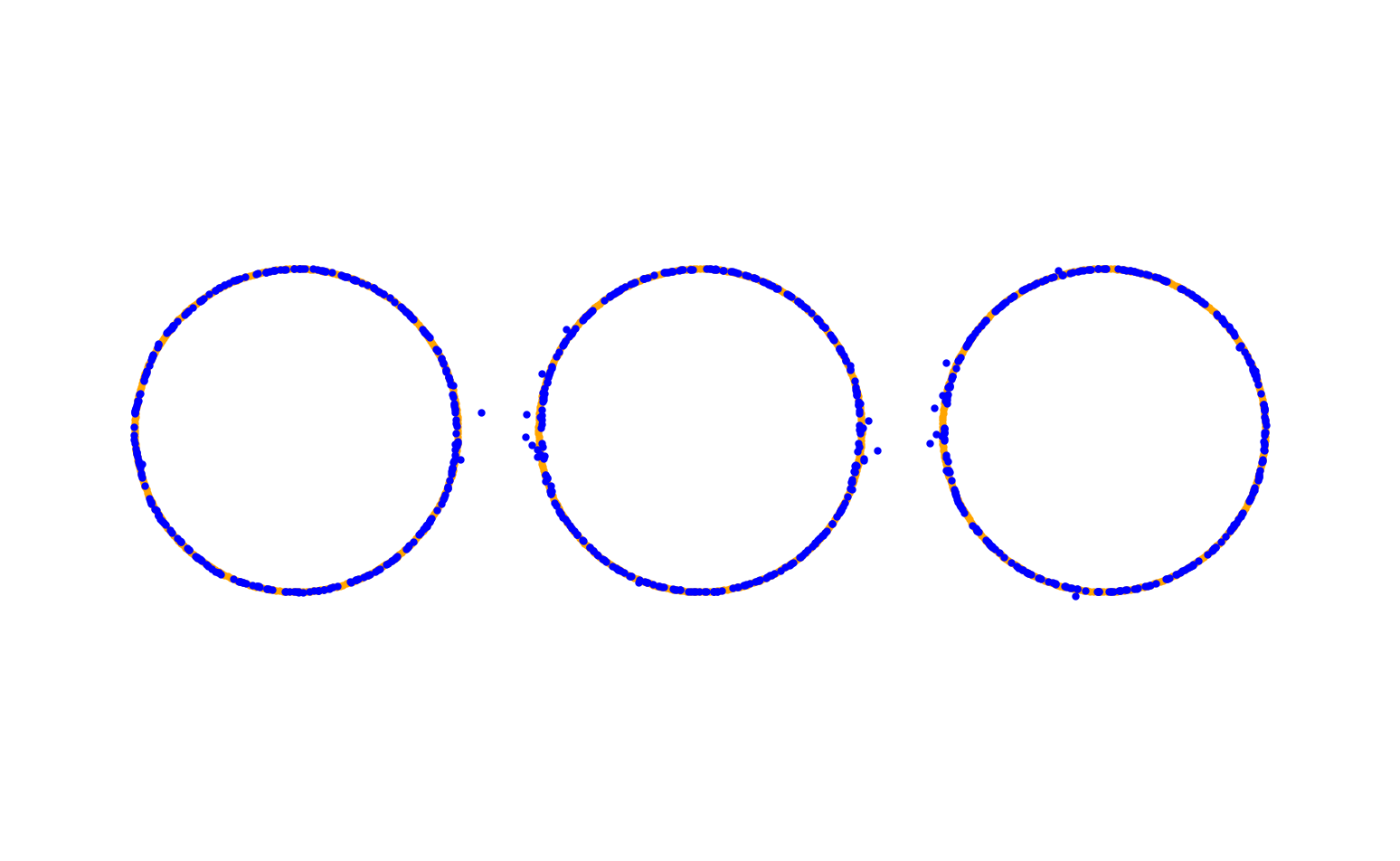}
    \end{subfigure} \\ %
     \rotatebox{90}{
        \begin{minipage}{.08\textwidth}
        \centering 
        \small $1$
    \end{minipage}} \hfill
    \begin{subfigure}[t]{.14\textwidth}
        \includegraphics[width=\linewidth]{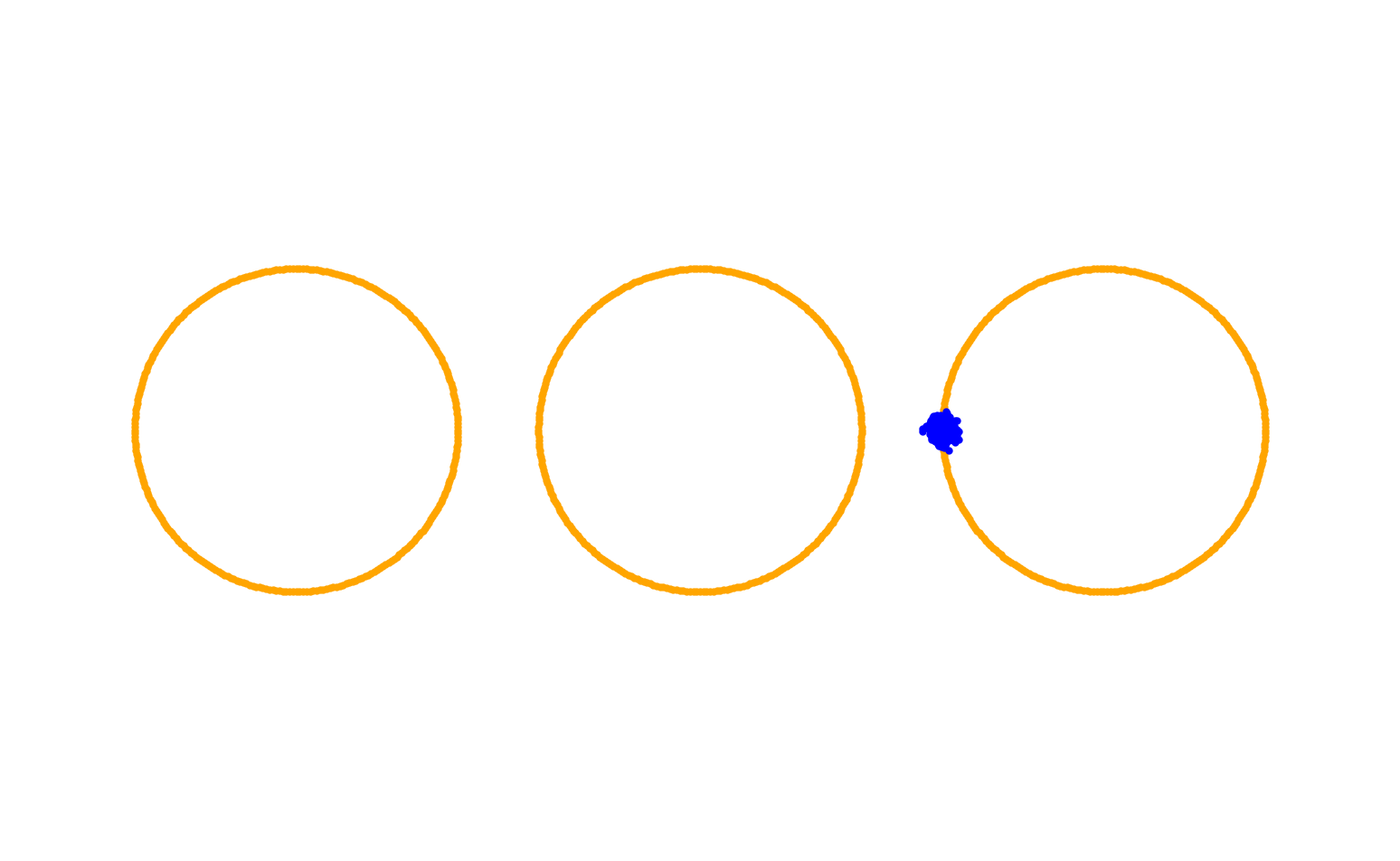}
    \end{subfigure}%
    \begin{subfigure}[t]{.14\textwidth}
        \includegraphics[width=\linewidth]{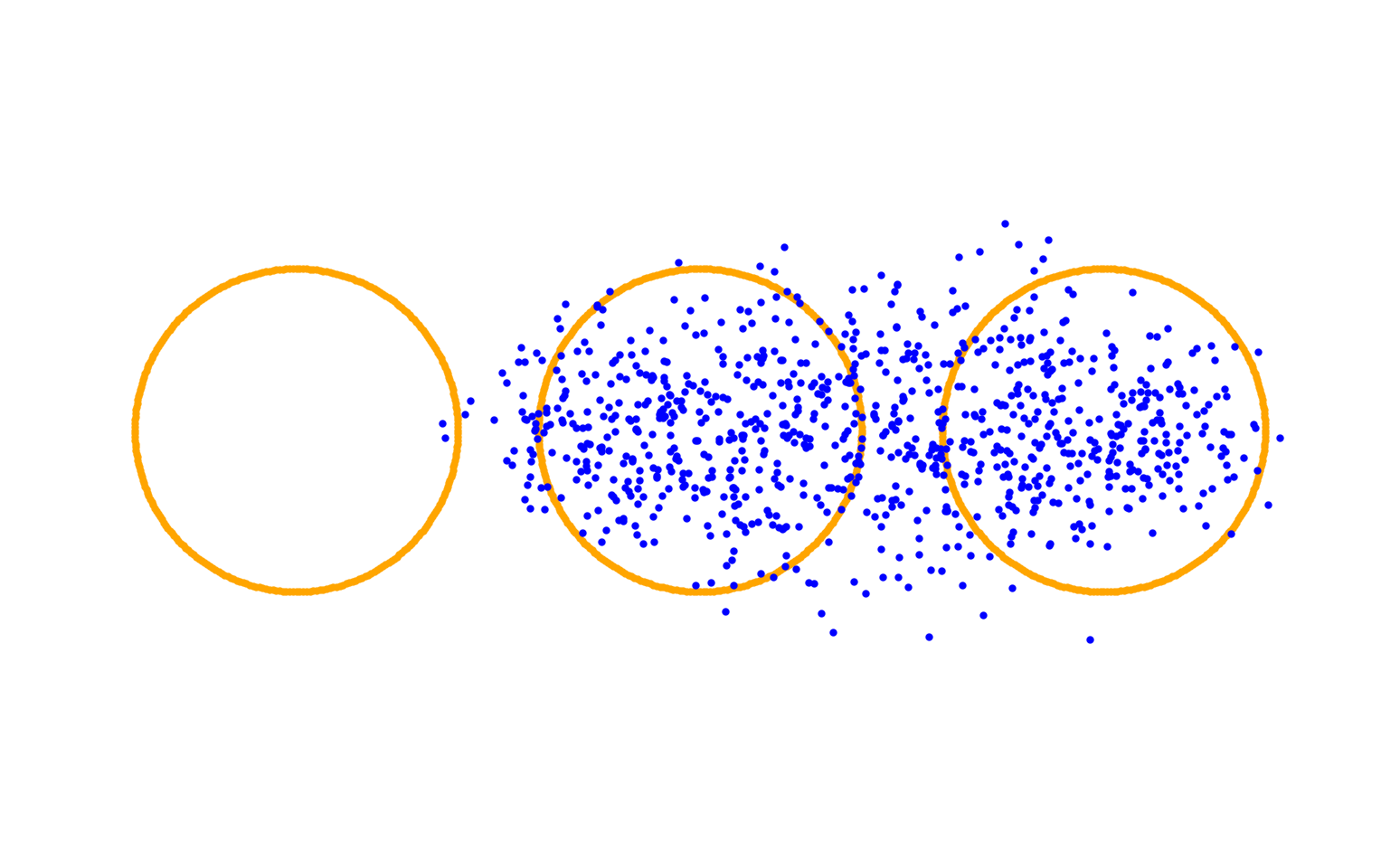}
    \end{subfigure}%
    \begin{subfigure}[t]{.14\textwidth}
        \includegraphics[width=\linewidth]{{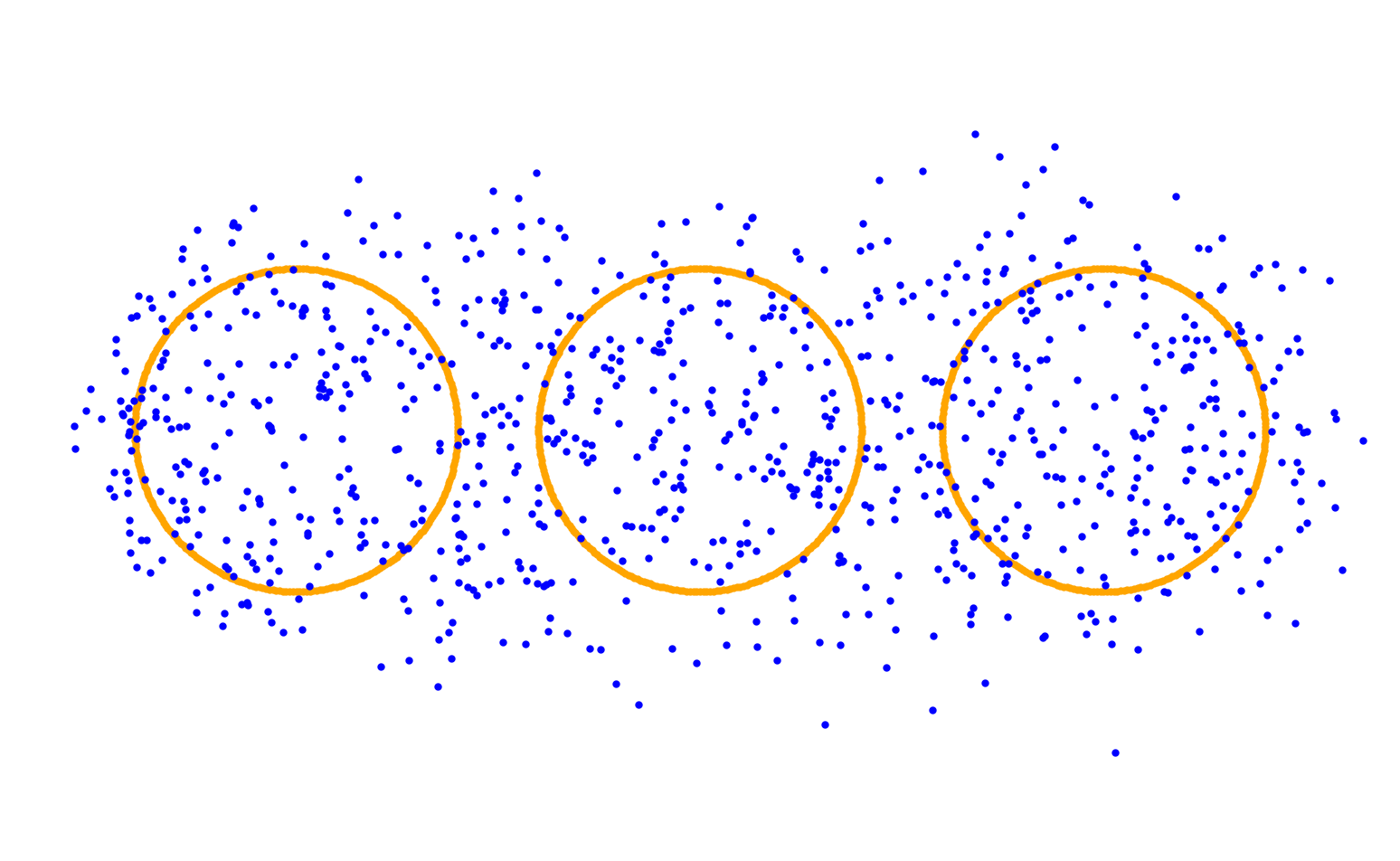}}
    \end{subfigure}%
    \begin{subfigure}[t]{.14\textwidth}
        \includegraphics[width=\linewidth]{{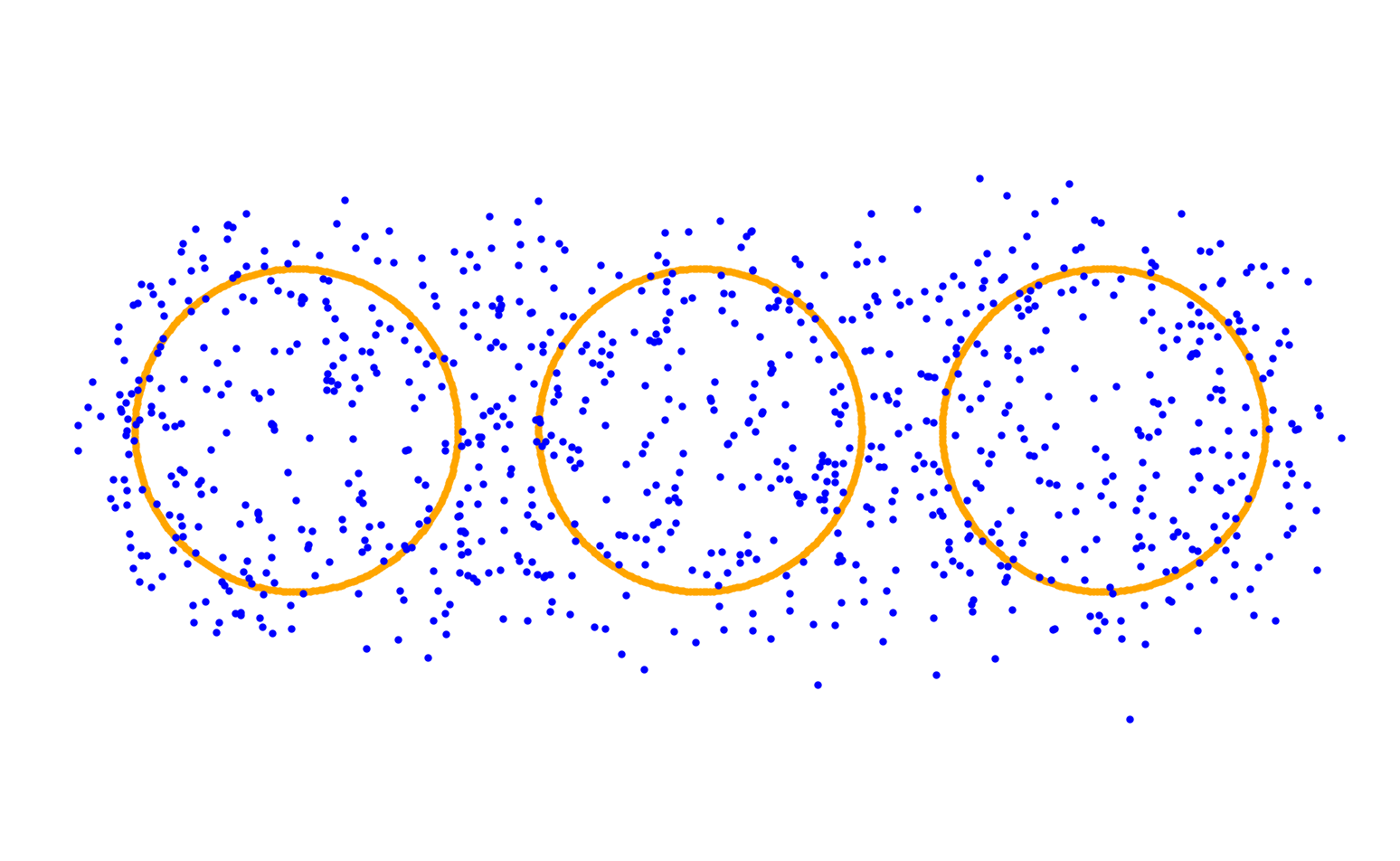}}
    \end{subfigure}%
    \begin{subfigure}[t]{.14\textwidth}
        \includegraphics[width=\linewidth]{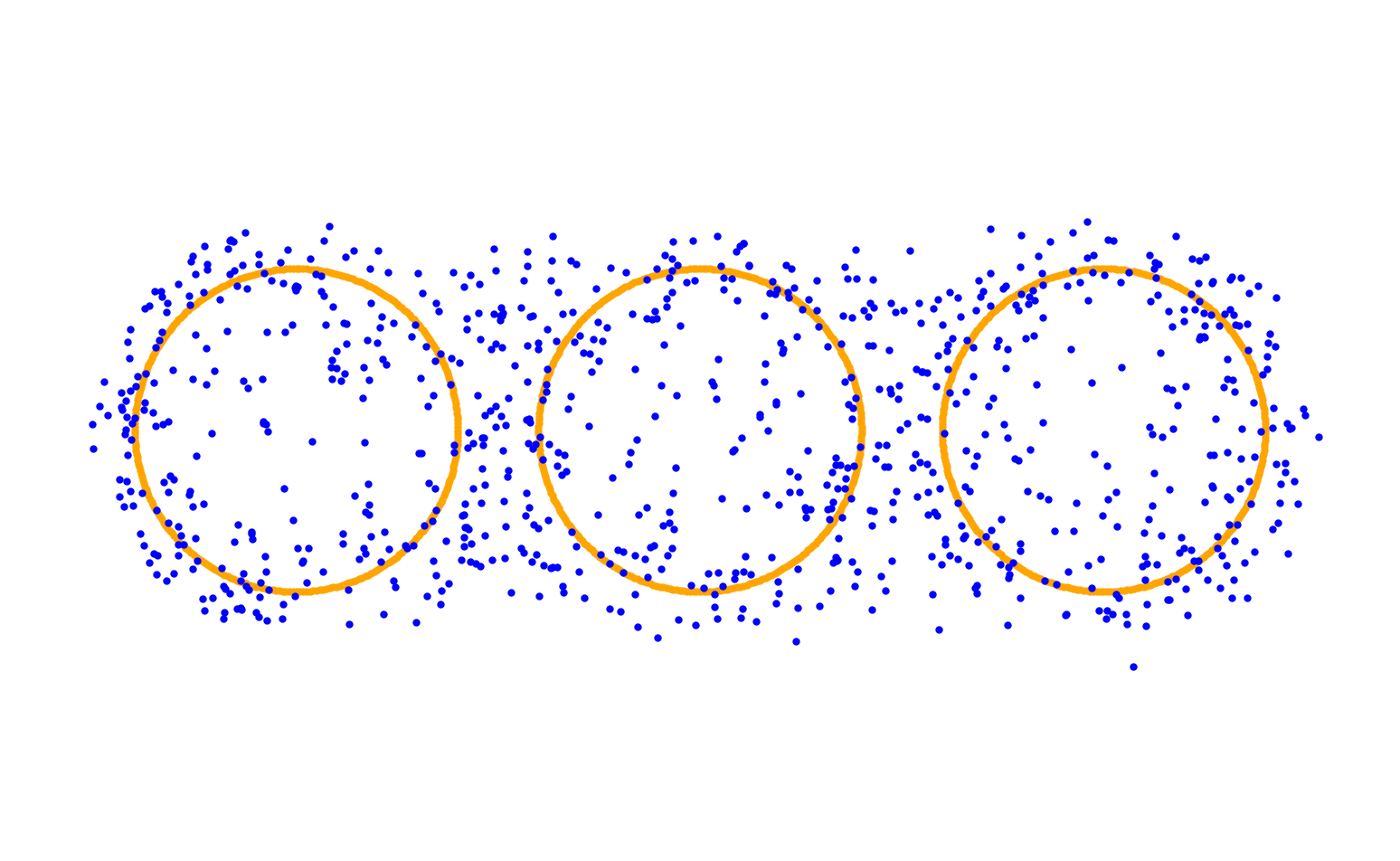}
    \end{subfigure}%
    \begin{subfigure}[t]{.14\textwidth}
        \includegraphics[width=\linewidth]{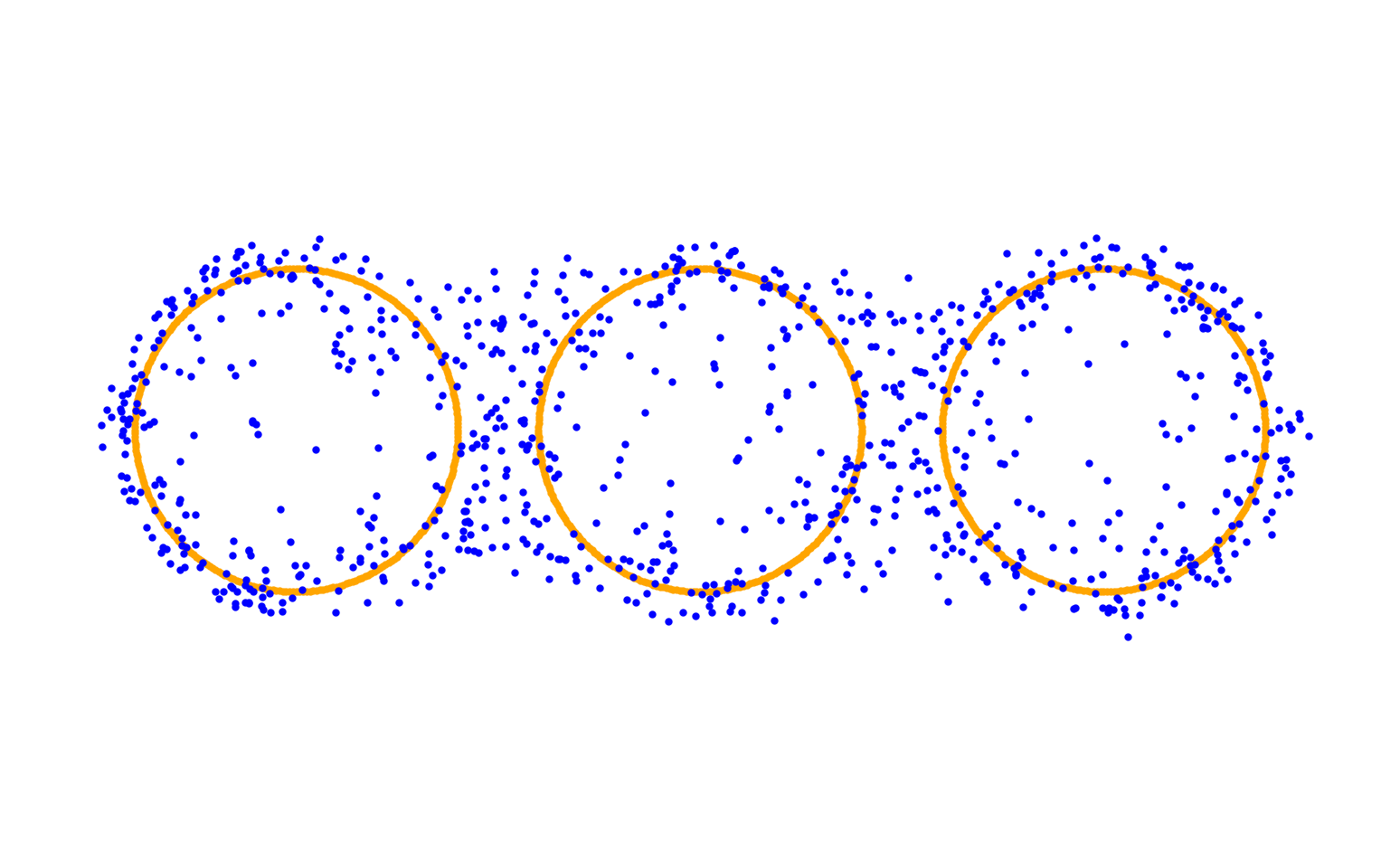}
    \end{subfigure} \\ %
    \rotatebox{90}{
        \begin{minipage}{.08\textwidth}
        \centering 
        \small $10$
    \end{minipage}} \hfill
    \begin{subfigure}[t]{.14\textwidth}
        \includegraphics[width=\linewidth]{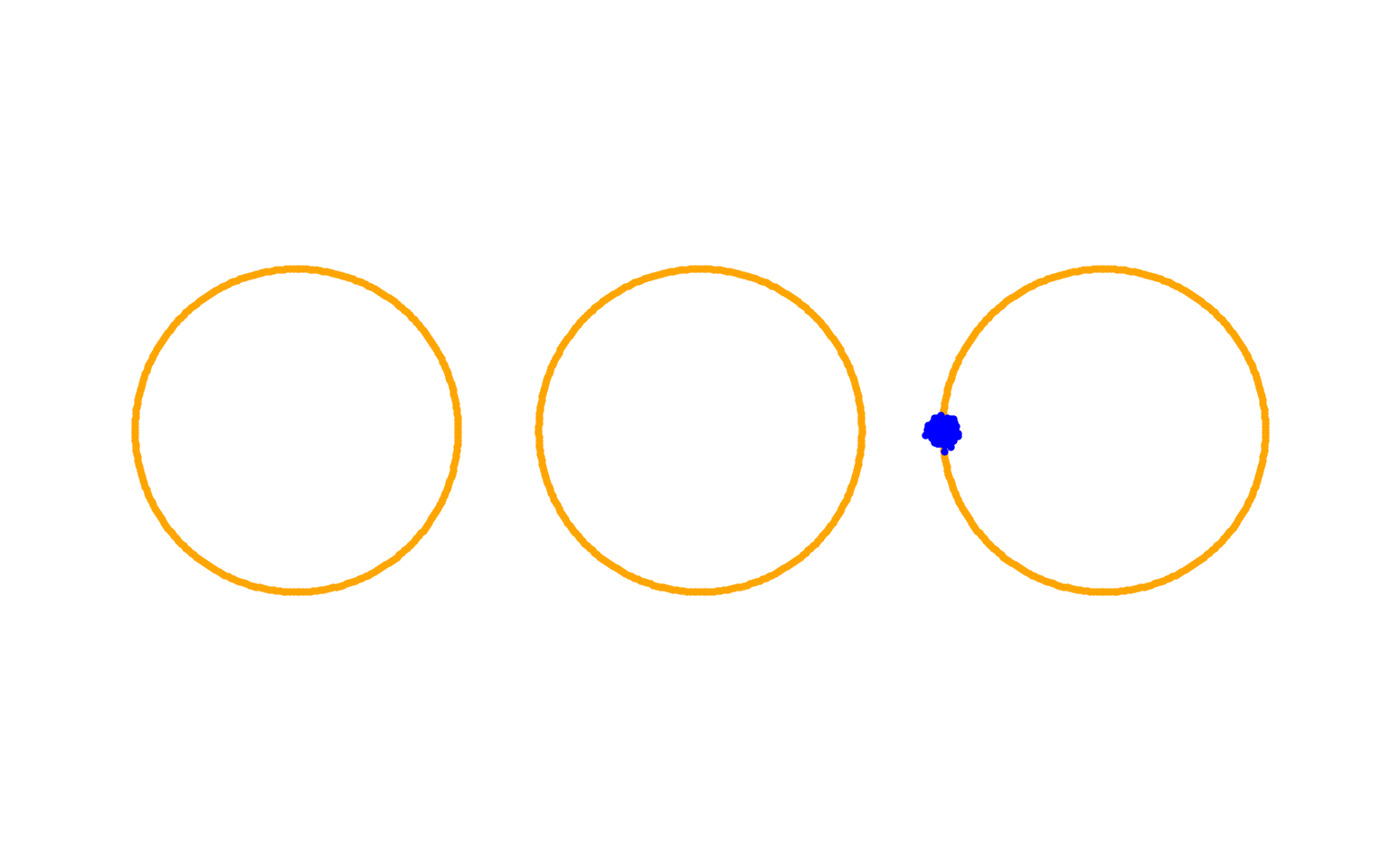}
    \caption*{t=0}
    \end{subfigure}%
    \begin{subfigure}[t]{.14\textwidth}
        \includegraphics[width=\linewidth]{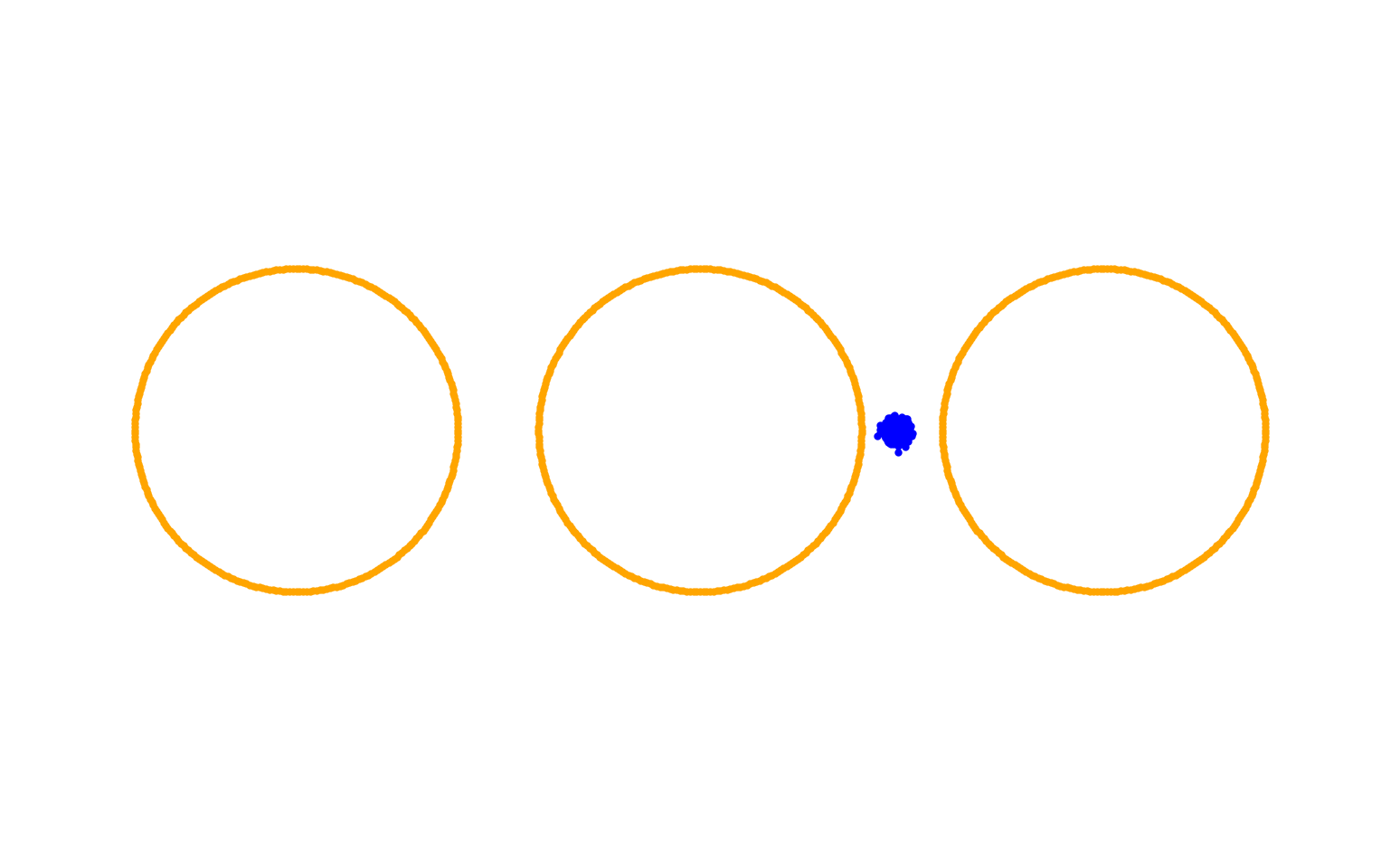}
    \caption*{t=0.1}
    \end{subfigure}%
    \begin{subfigure}[t]{.14\textwidth}
        \includegraphics[width=\linewidth]{{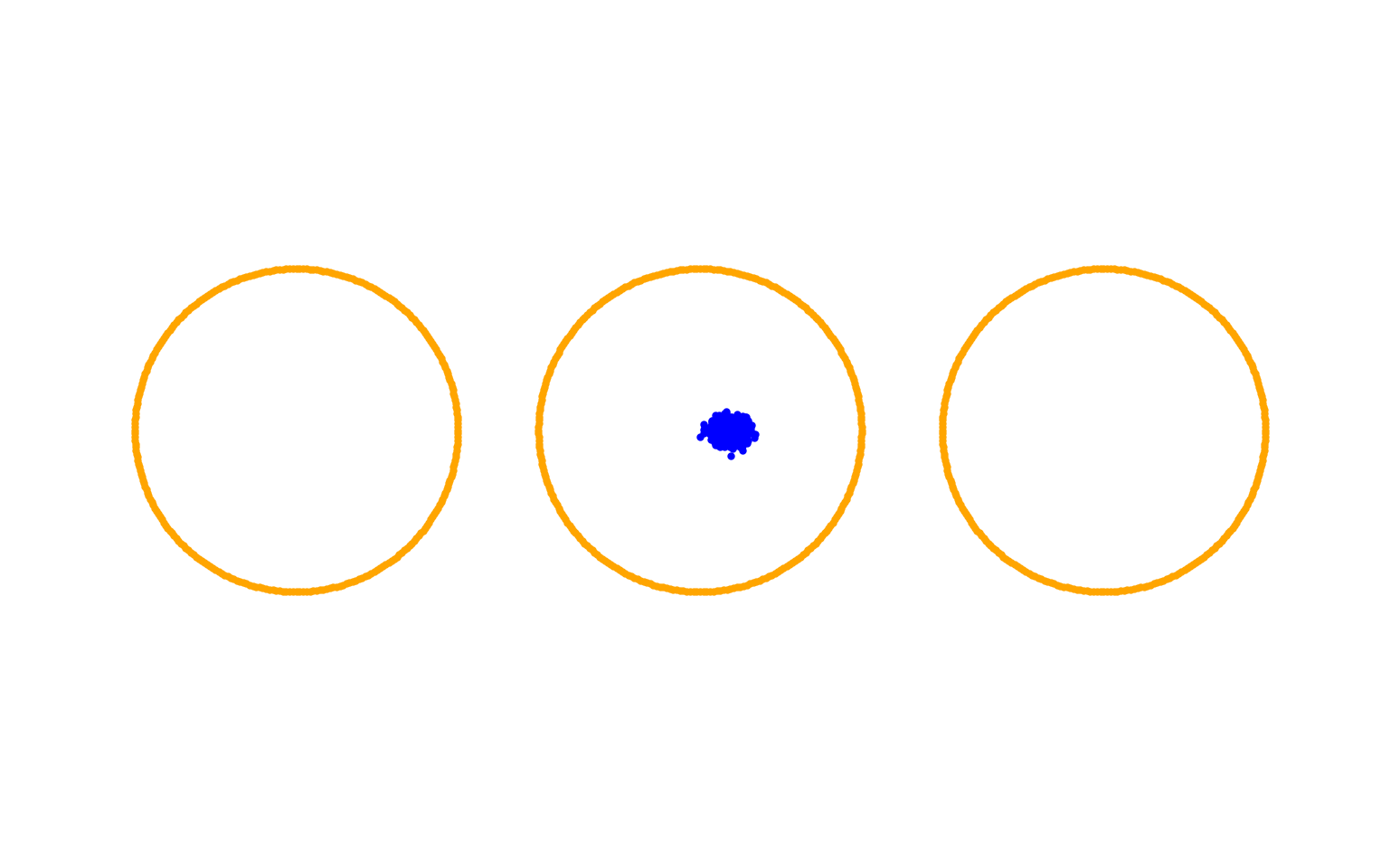}}
    \caption*{t=1}
    \end{subfigure}%
    \begin{subfigure}[t]{.14\textwidth}
        \includegraphics[width=\linewidth]{{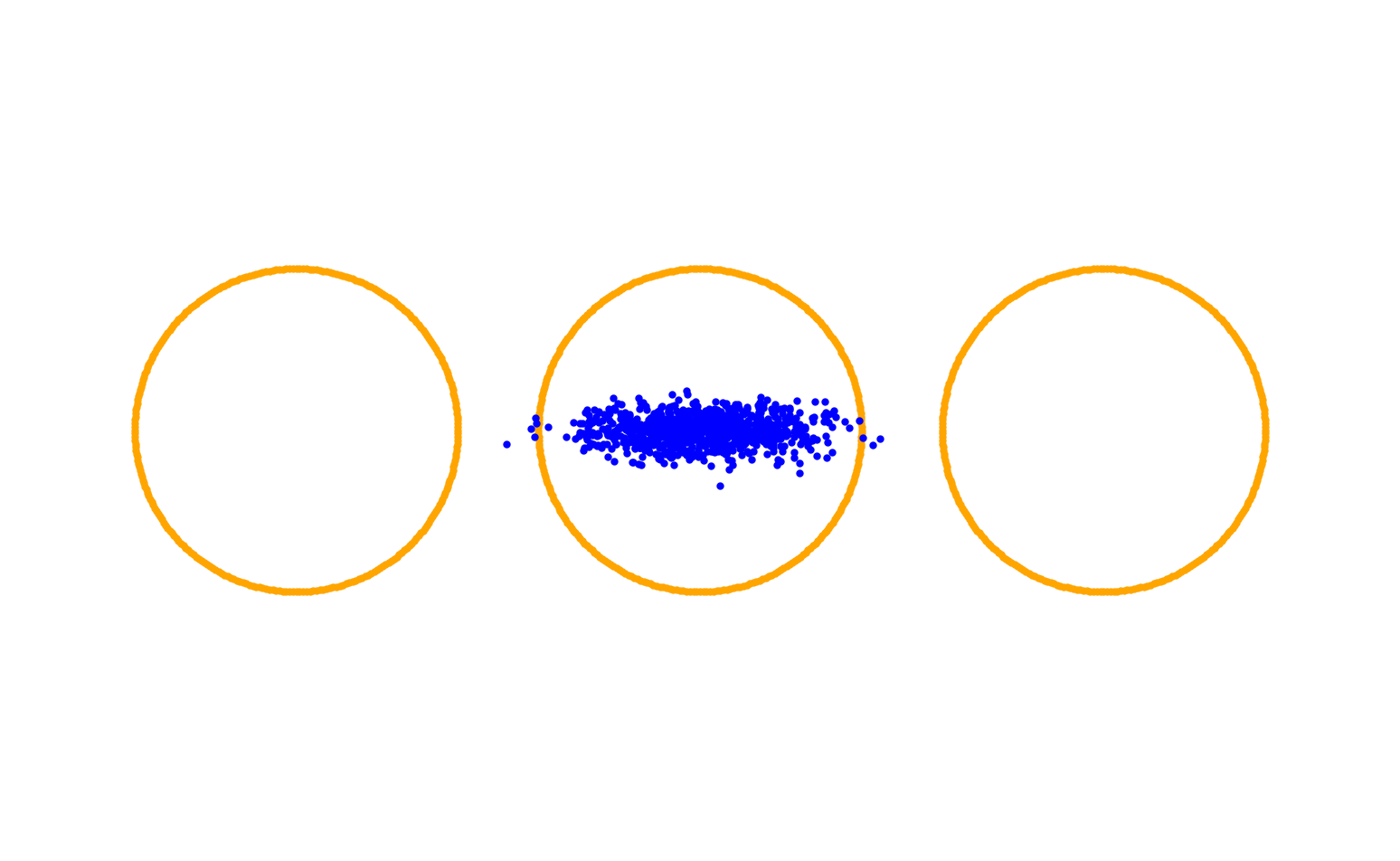}}
    \caption*{t=3}
    \end{subfigure}%
    \begin{subfigure}[t]{.14\textwidth}
        \includegraphics[width=\linewidth]{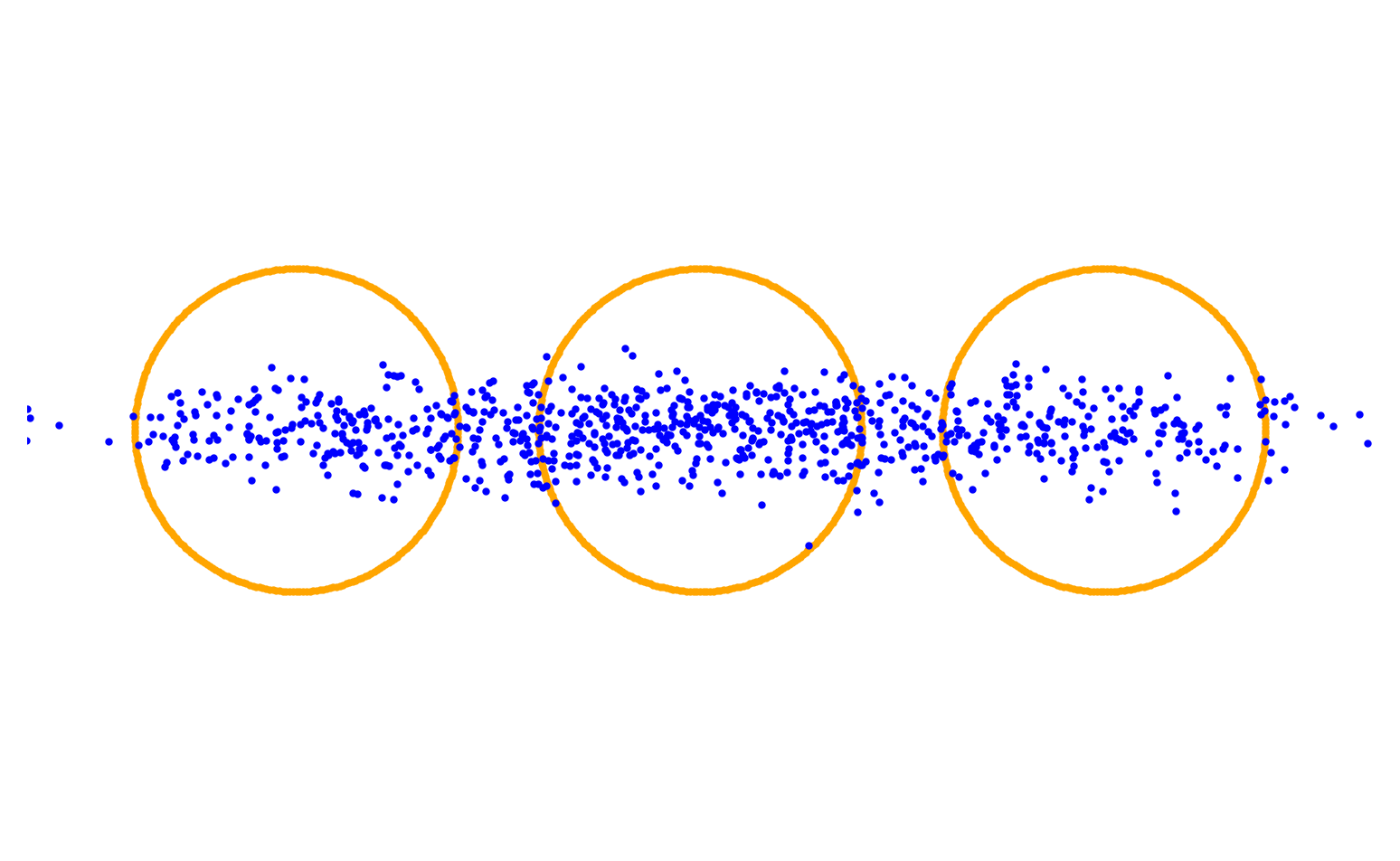}
    \caption*{t=10}
    \end{subfigure}%
    \begin{subfigure}[t]{.14\textwidth}
        \includegraphics[width=\linewidth]{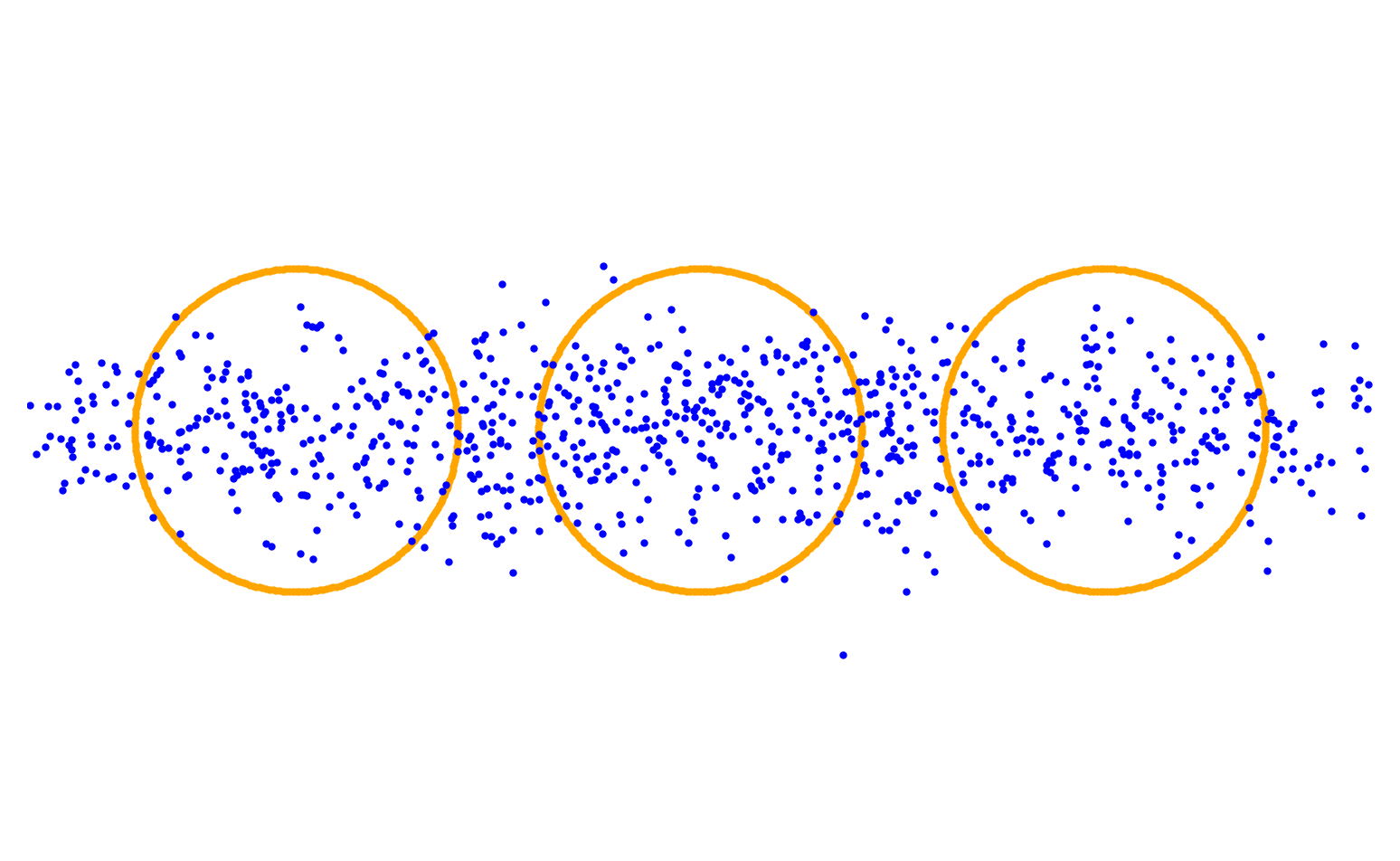}
    \caption*{t=25}
    \end{subfigure}%
    \caption{Ablation study for the parameter $\sigma^2$ of the inverse multiquadric kernel.}
    \label{fig:ablationCirclesIMQtau}
\end{figure}

The Matérn kernel with smoothness parameter $\frac{3}{2}$, see \cite[Subsec.~4.2.1]{RW2006}, reads
\begin{equation*}
    k_{\sigma^2, \frac{3}{2}}(x, y)
    = \left( 1 + \frac{\sqrt{3}}{\sigma^2} \| x - y \|_2\right) \exp\left( - \frac{\sqrt{3}}{\sigma^2} \| x - y \|_2\right),
\end{equation*}
The shape of the flow for different kernel widths $\sigma^2$ is quite different.
In Figure~\ref{fig:ablationCirclesMaterntau}, we choose $\alpha = 3$, $\lambda = \num{e-2}$, $\tau = \num{e-3}$ and $N = 900$.
We can see that when the width $\sigma^2 = \num{e-3}$ is too small, then the particles barely move.
The particles only spread on the two rightmost rings for $\sigma^2 = \num{e-2}$.
For $\sigma^2 = \num{e-1}$, the particles initially spread only onto two rings but also match the third one after some time.
The width $\sigma^2 = 1$ performs best, and there are no outliers.
For $\sigma^2 = 10$, the behavior is very similar to the case $\sigma^2 = 10$ in Figure~\ref{fig:ablationCirclesIMQtau}.
For $\sigma^2 = \num{e2}$, we observe an extreme case of mode-seeking behavior: the particles do not spread and only move towards the mean of the target distribution.
\begin{figure}[pt]
    \centering
    \rotatebox{90}{
        \begin{minipage}{.05\textwidth}
        \centering 
        \small $\num{e-3}$
    \end{minipage}} \hfill
    \begin{subfigure}[t]{.14\textwidth}
        \includegraphics[width=\linewidth, valign=c]{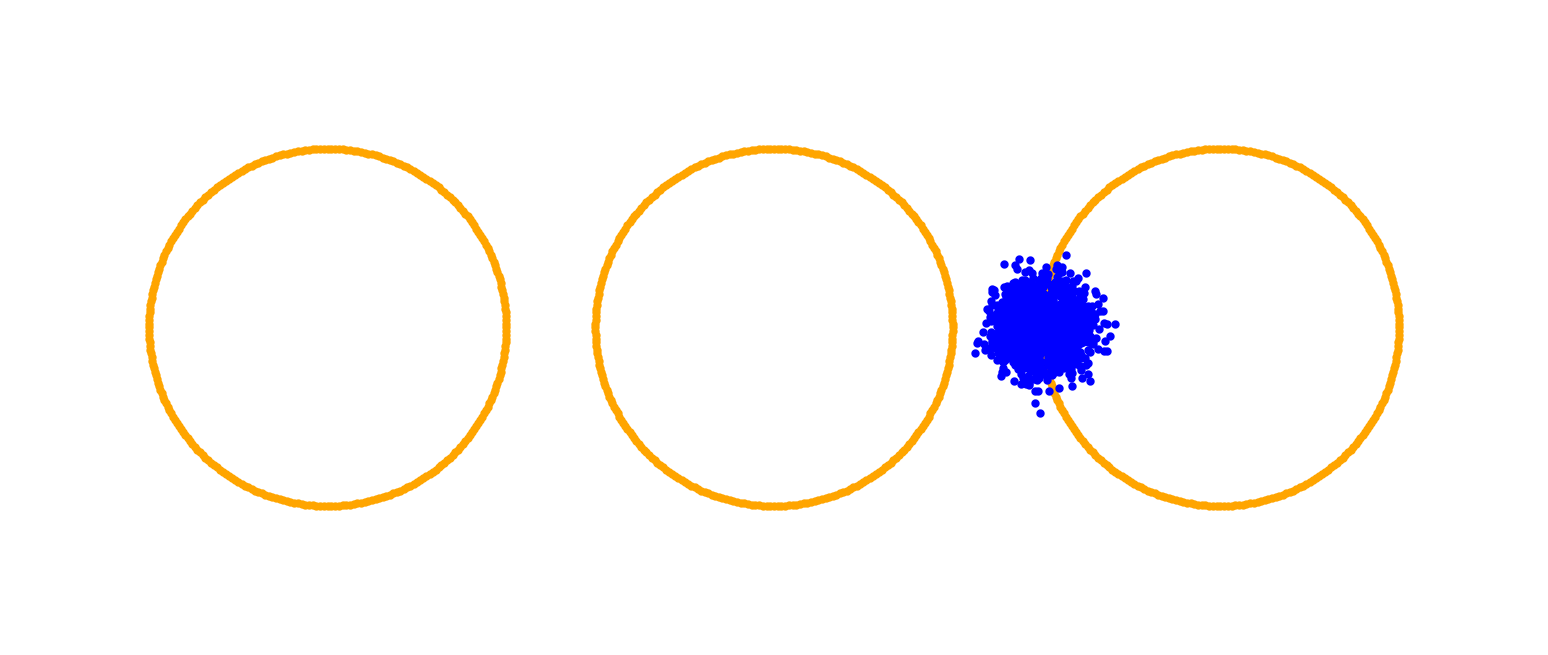}
    \end{subfigure}%
    \begin{subfigure}[t]{.14\textwidth}
        \includegraphics[width=\linewidth, valign=c]{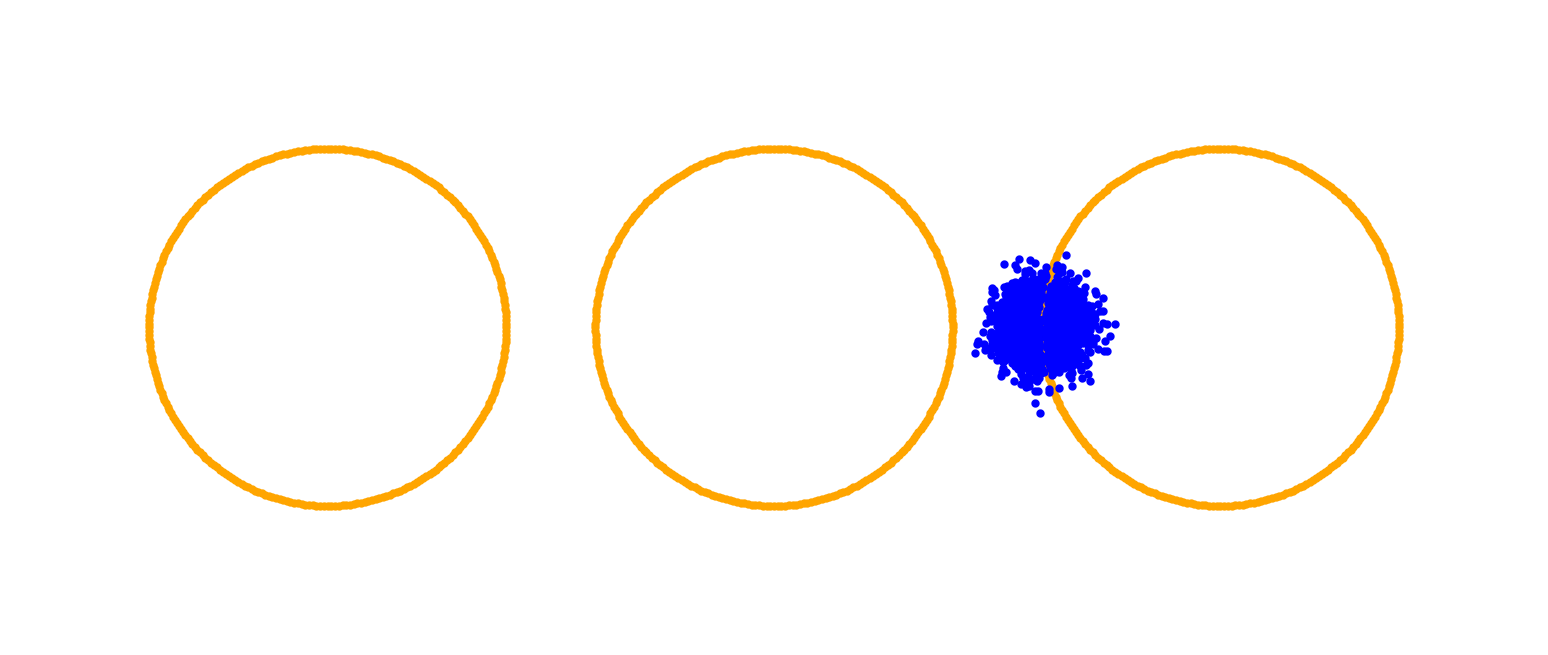}
    \end{subfigure}%
    \begin{subfigure}[t]{.14\textwidth}
        \includegraphics[width=\linewidth, valign=c]{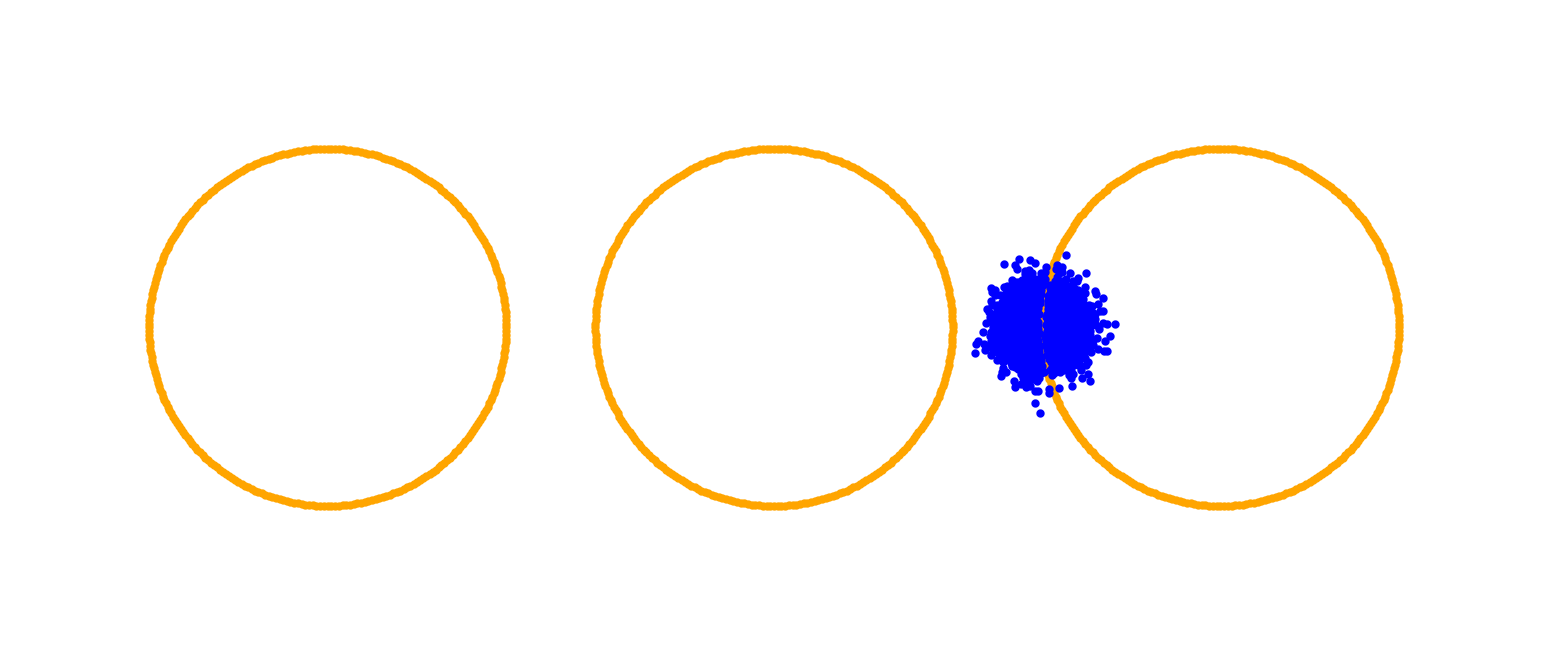}
    \end{subfigure}%
    \begin{subfigure}[t]{.14\textwidth}
        \includegraphics[width=\linewidth, valign=c]{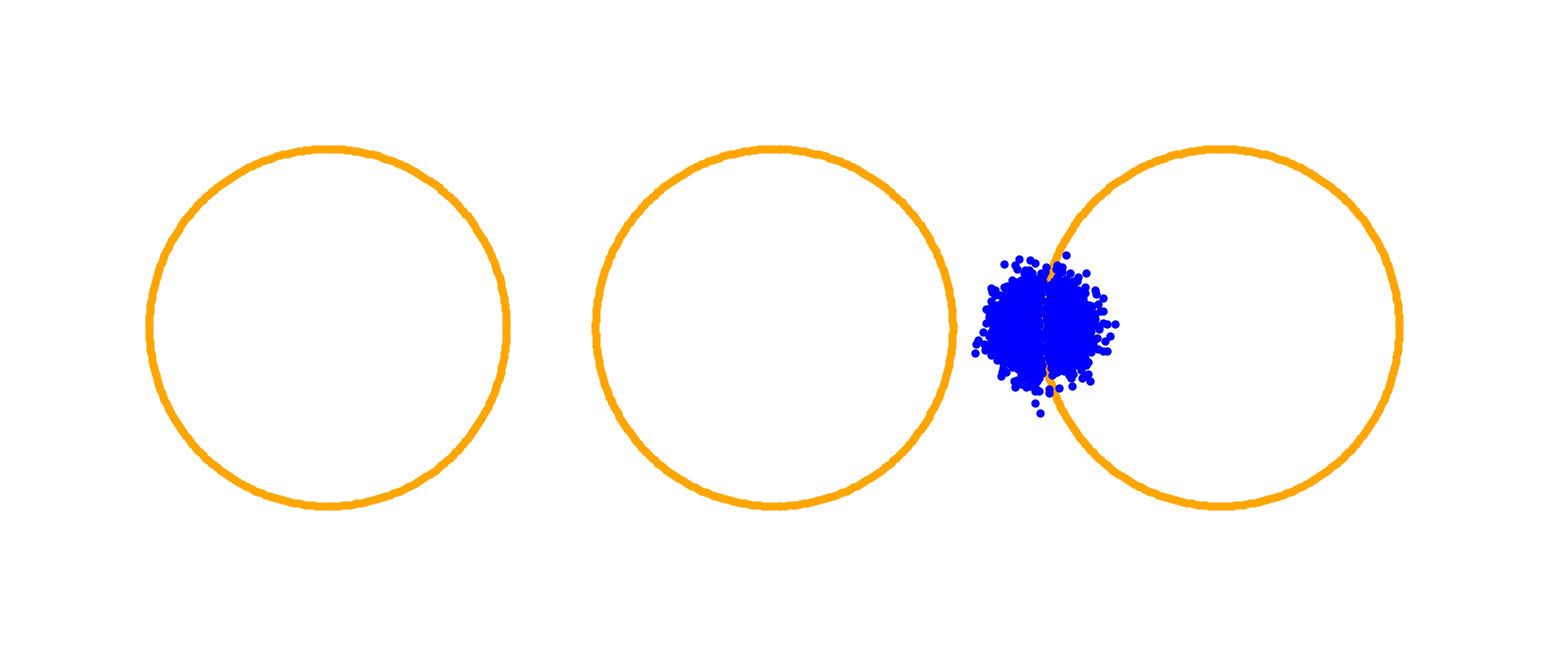}
    \end{subfigure}%
    \begin{subfigure}[t]{.14\textwidth}
        \includegraphics[width=\linewidth, valign=c]{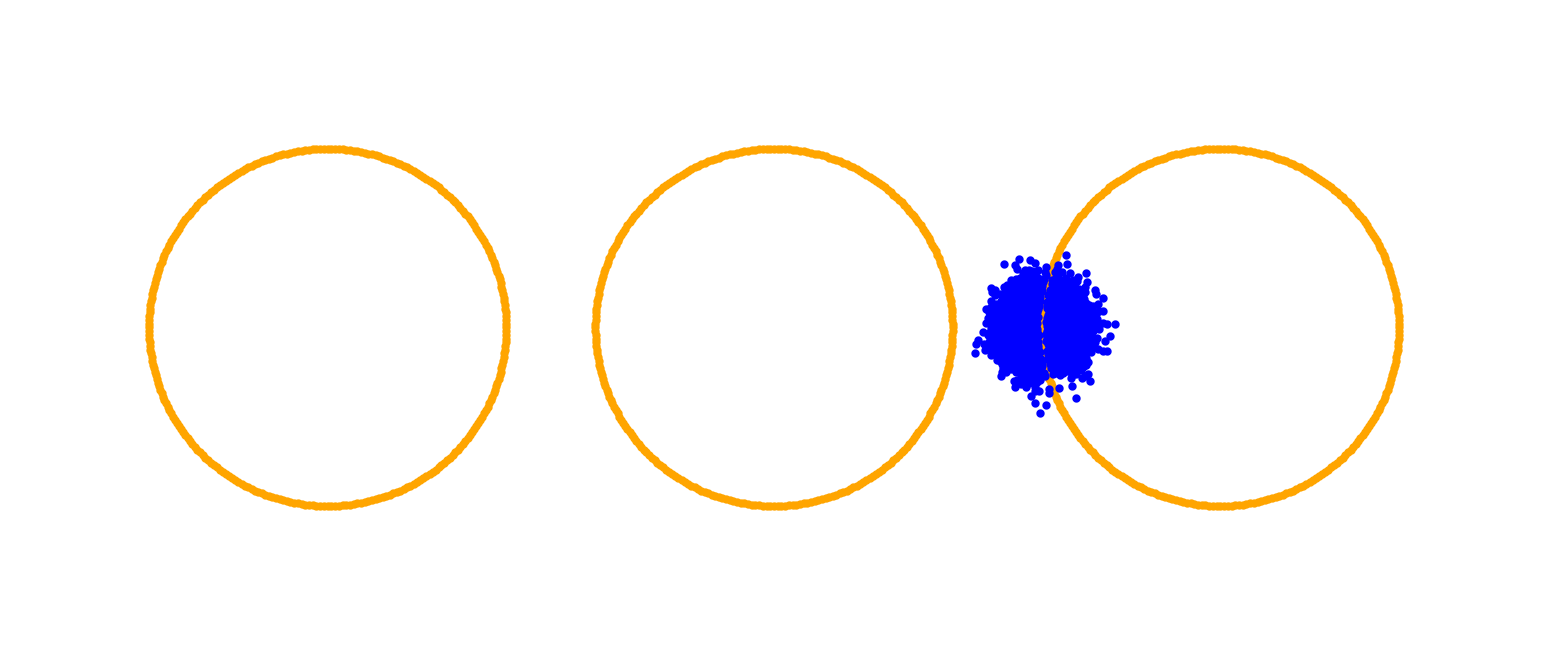}
    \end{subfigure}%
    \begin{subfigure}[t]{.14\textwidth}
        \includegraphics[width=\linewidth, valign=c]{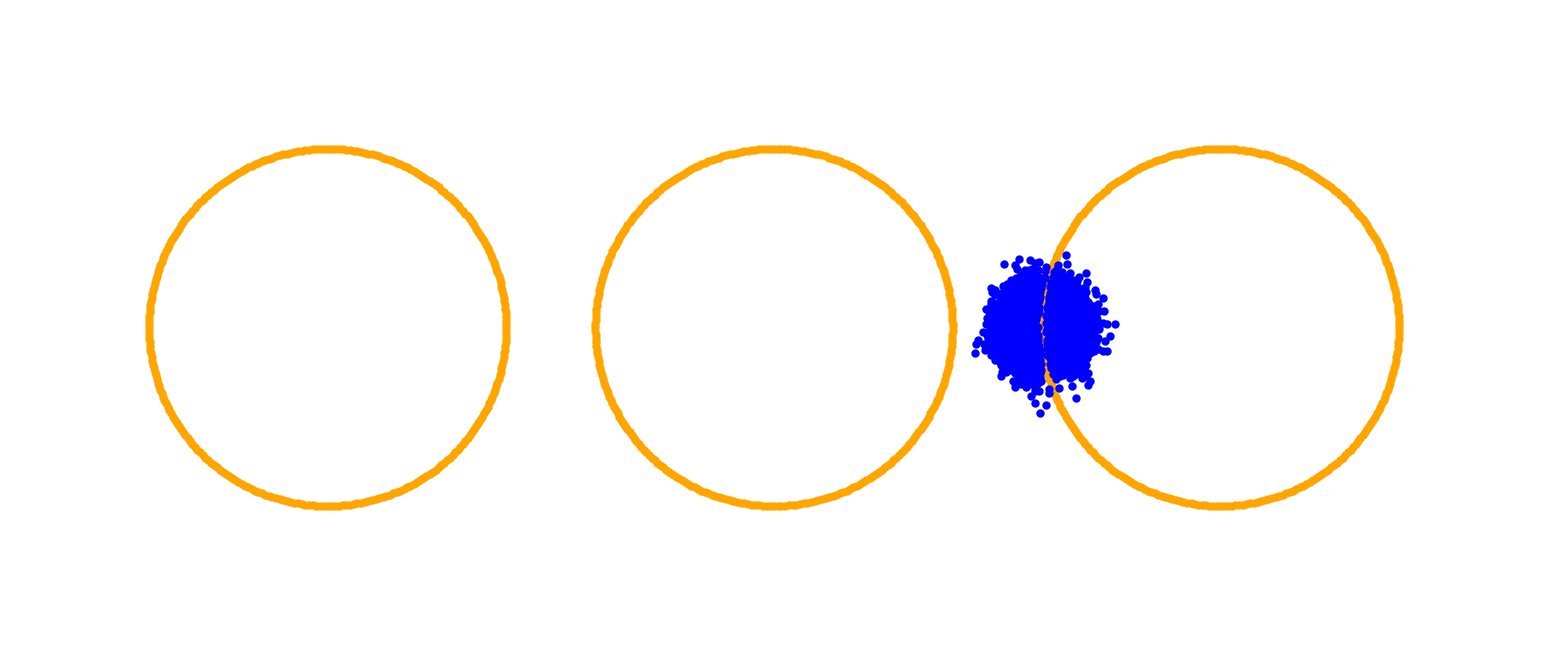}
    \end{subfigure} \\ %
    \rotatebox{90}{
        \begin{minipage}{.07\textwidth}
        \centering 
        \small $\num{e-2}$
    \end{minipage}} \hfill
    \begin{subfigure}[t]{.14\textwidth}
        \includegraphics[width=\linewidth]{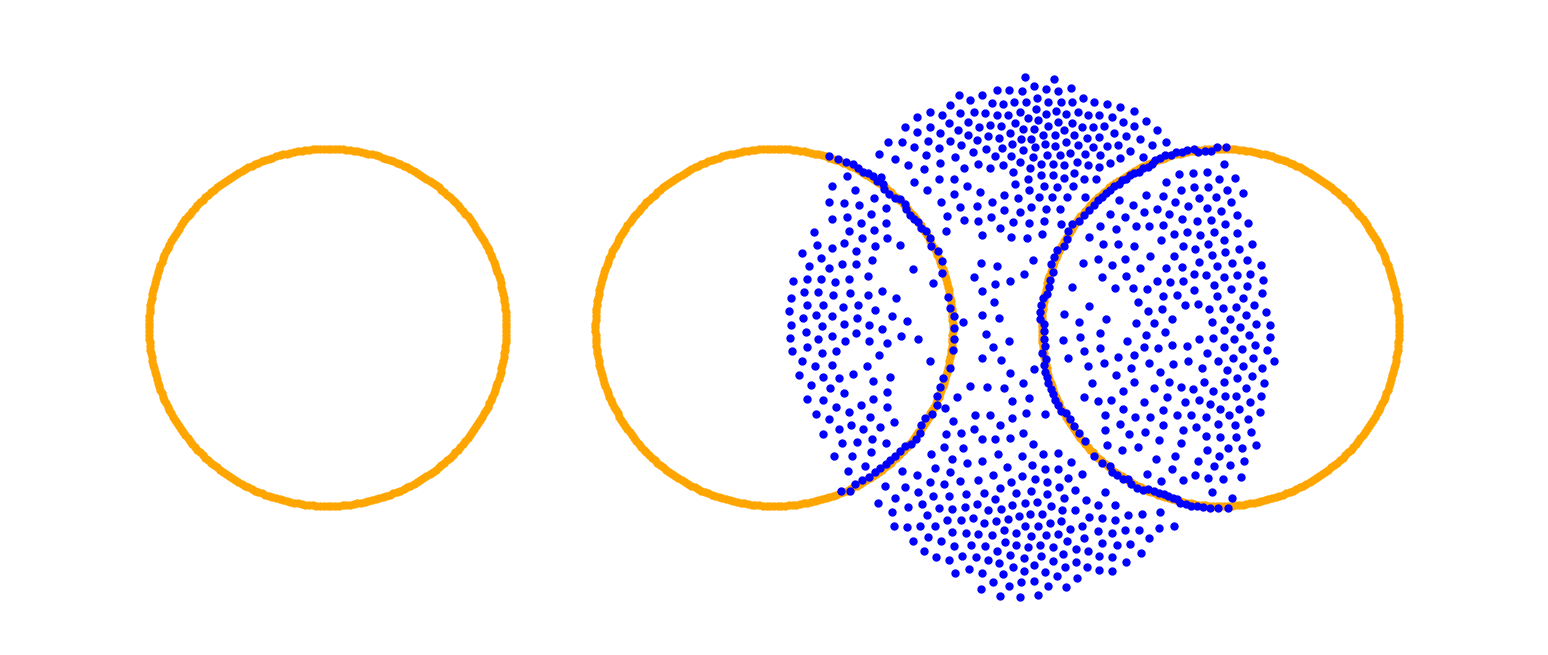}
    \end{subfigure}%
    \begin{subfigure}[t]{.14\textwidth}
        \includegraphics[width=\linewidth]{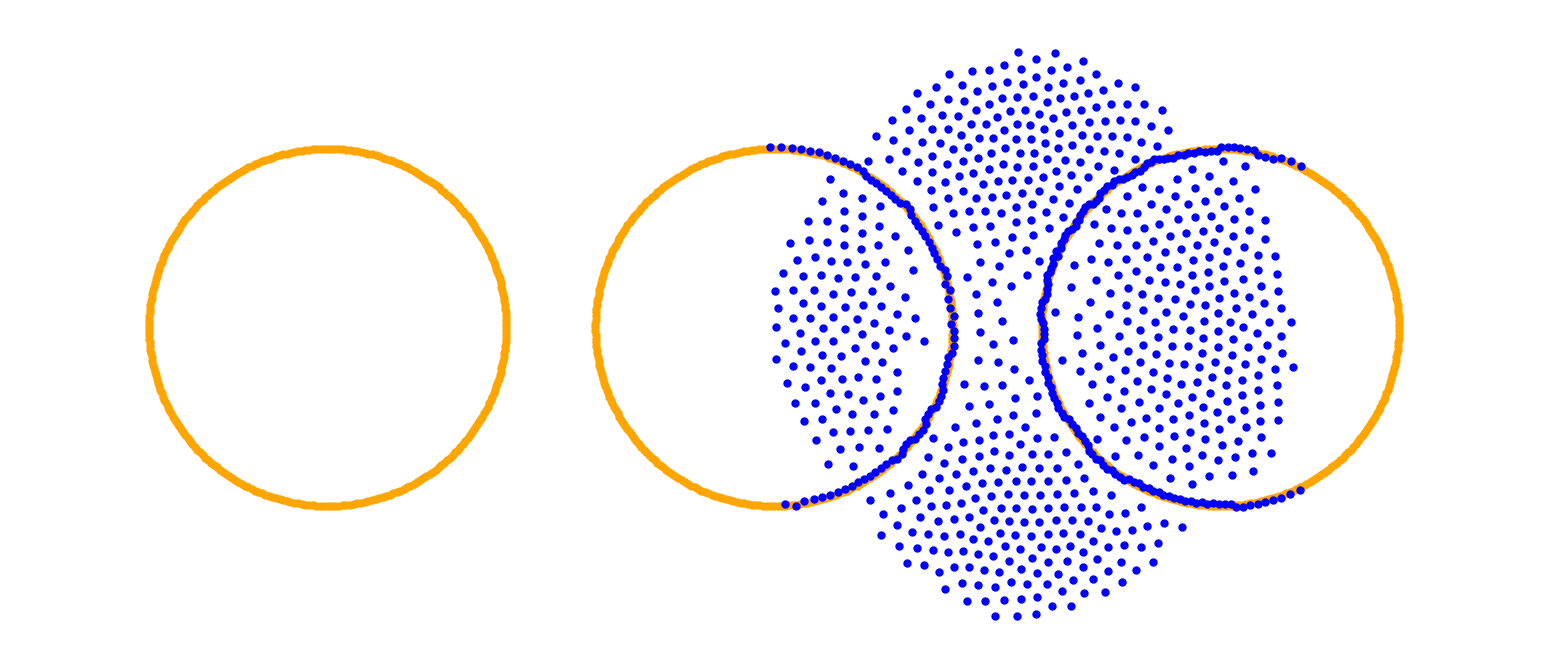}
    \end{subfigure}%
    \begin{subfigure}[t]{.14\textwidth}
        \includegraphics[width=\linewidth]{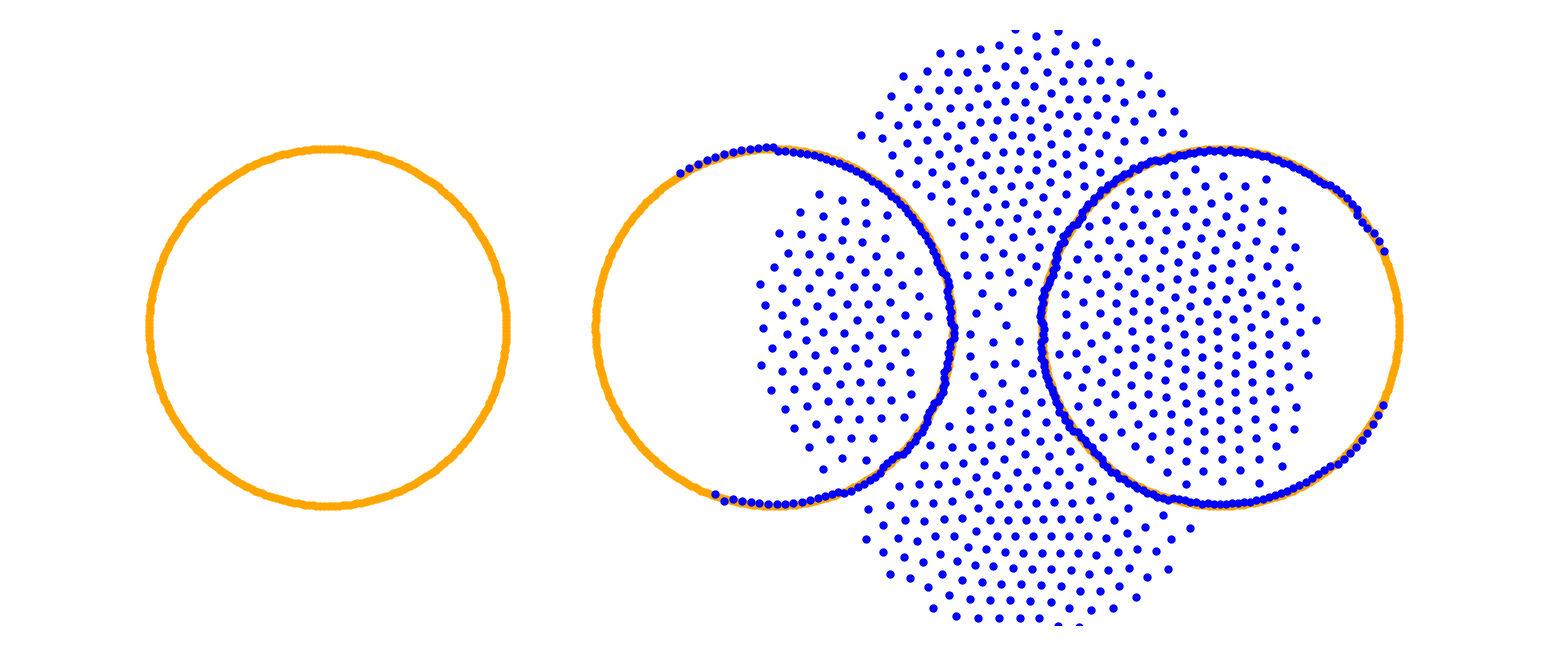}
    \end{subfigure}%
    \begin{subfigure}[t]{.14\textwidth}
        \includegraphics[width=\linewidth]{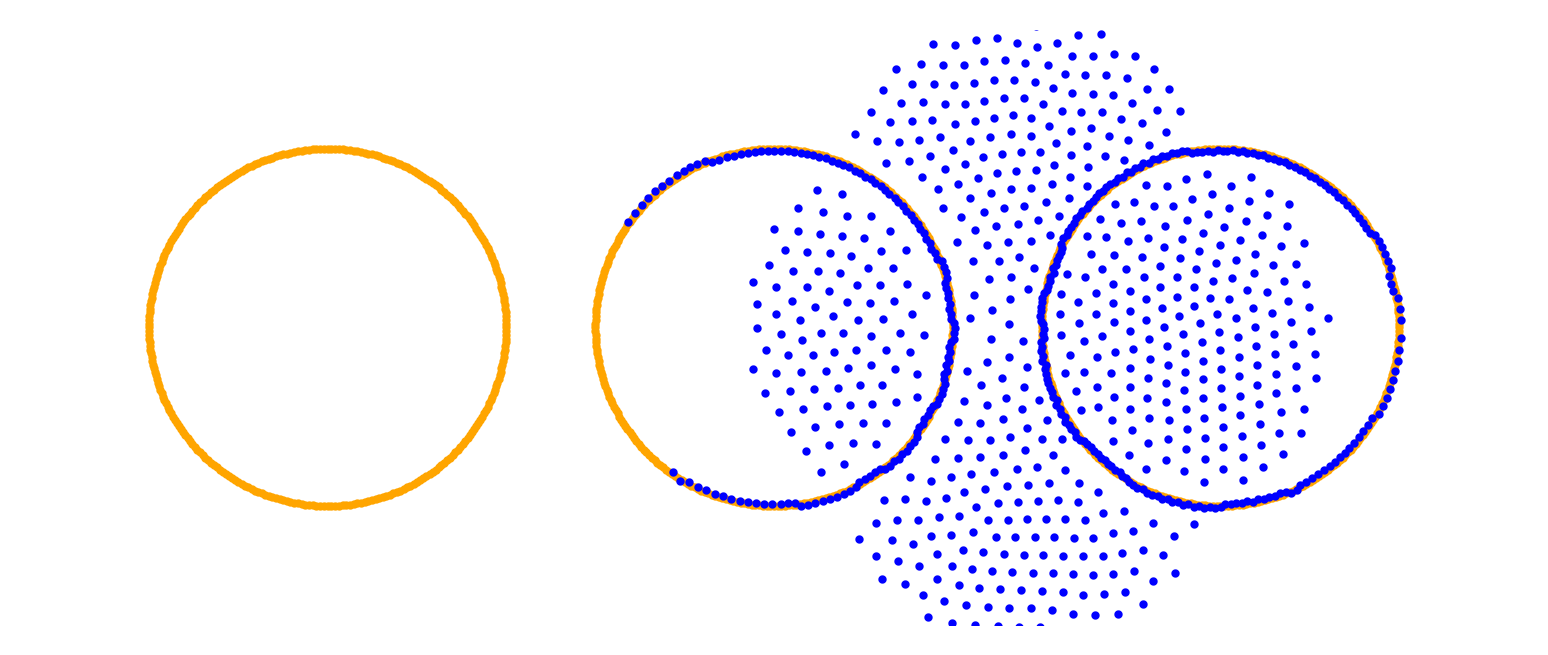}
    \end{subfigure}%
    \begin{subfigure}[t]{.14\textwidth}
        \includegraphics[width=\linewidth]{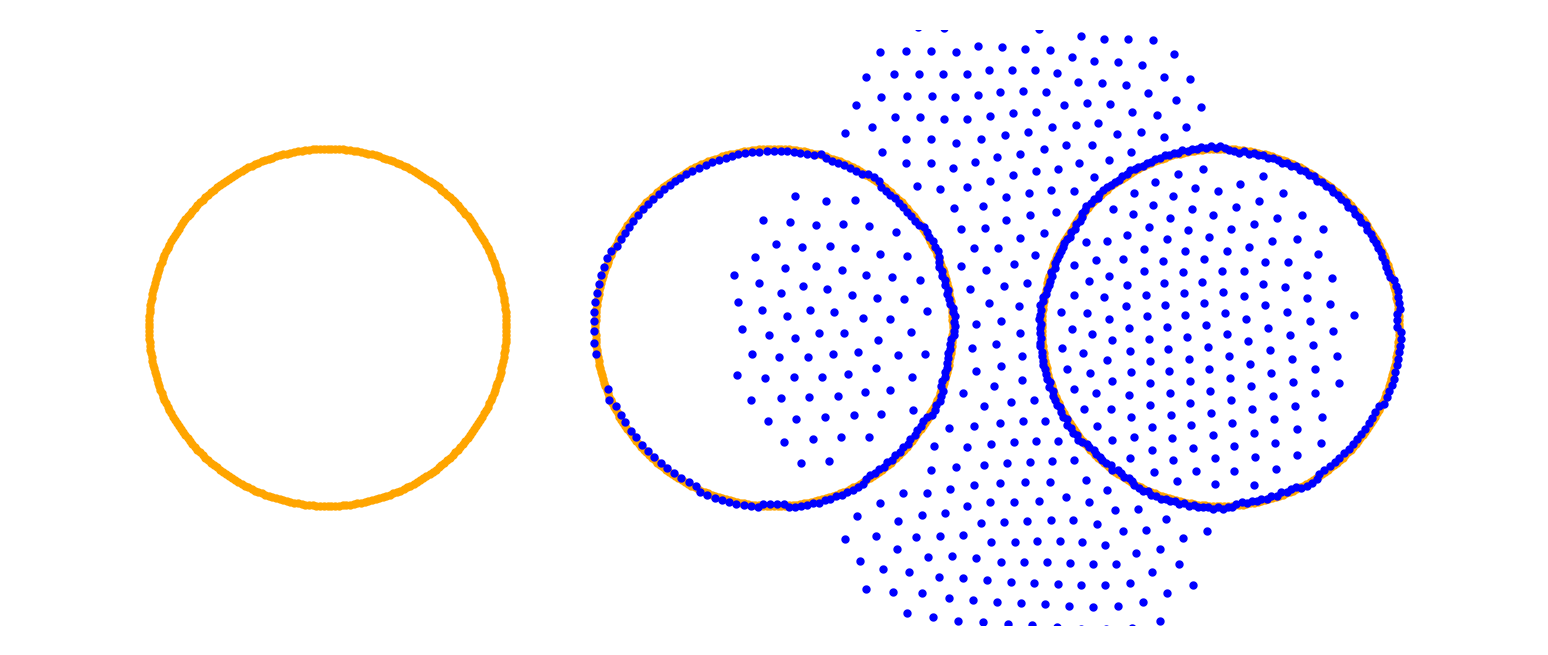}
    \end{subfigure}%
    \begin{subfigure}[t]{.14\textwidth}
        \includegraphics[width=\linewidth]{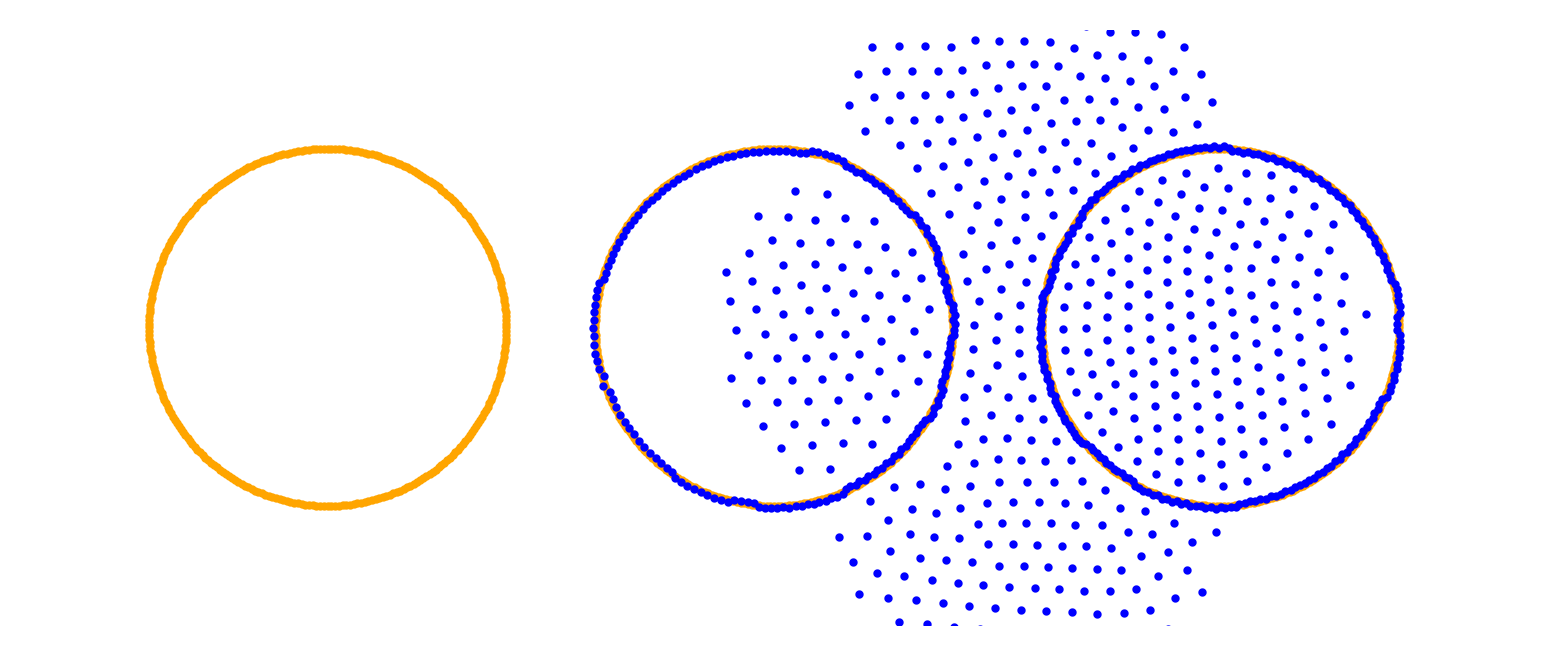}
    \end{subfigure} \\ %
    \rotatebox{90}{
        \begin{minipage}{.07\textwidth}
        \centering 
        \small $\num{e-1}$
    \end{minipage}} \hfill
    \begin{subfigure}[t]{.14\textwidth}
        \includegraphics[width=\linewidth]{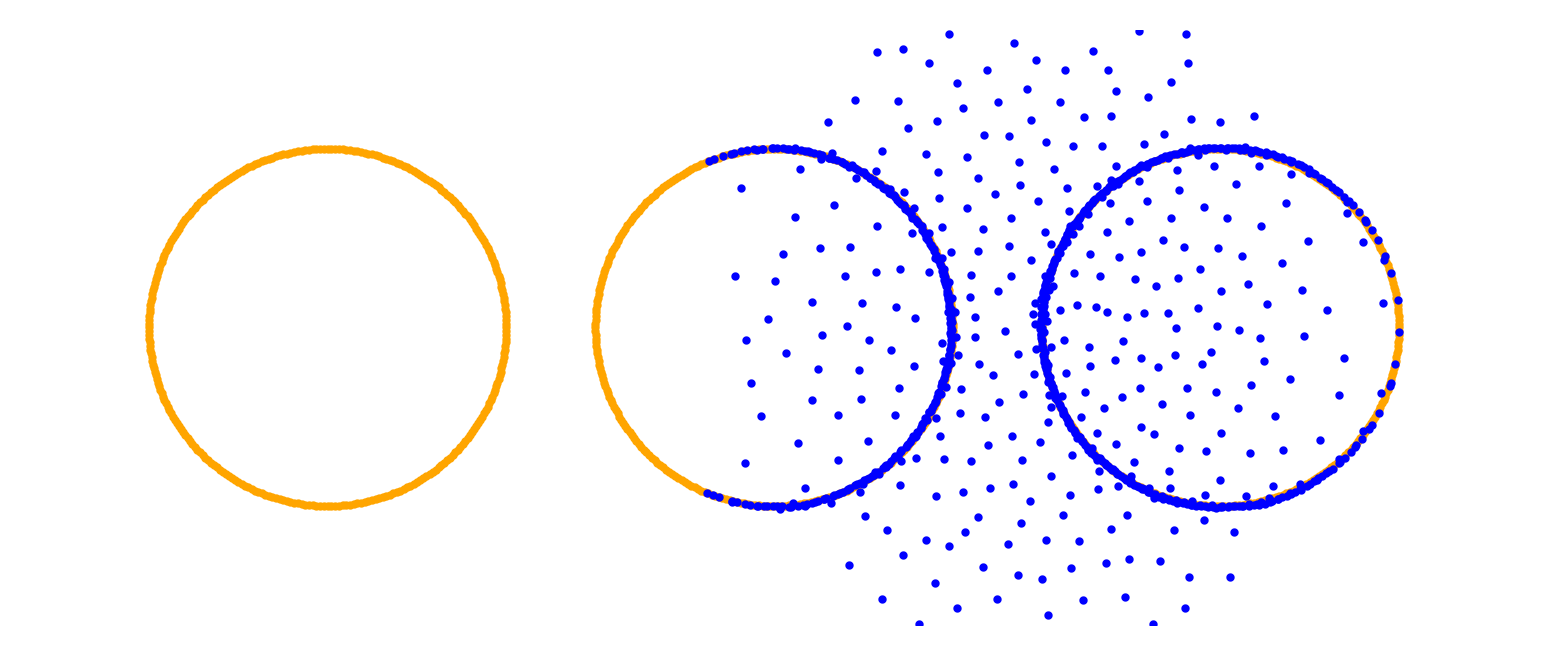}
    \end{subfigure}%
    \begin{subfigure}[t]{.14\textwidth}
        \includegraphics[width=\linewidth]{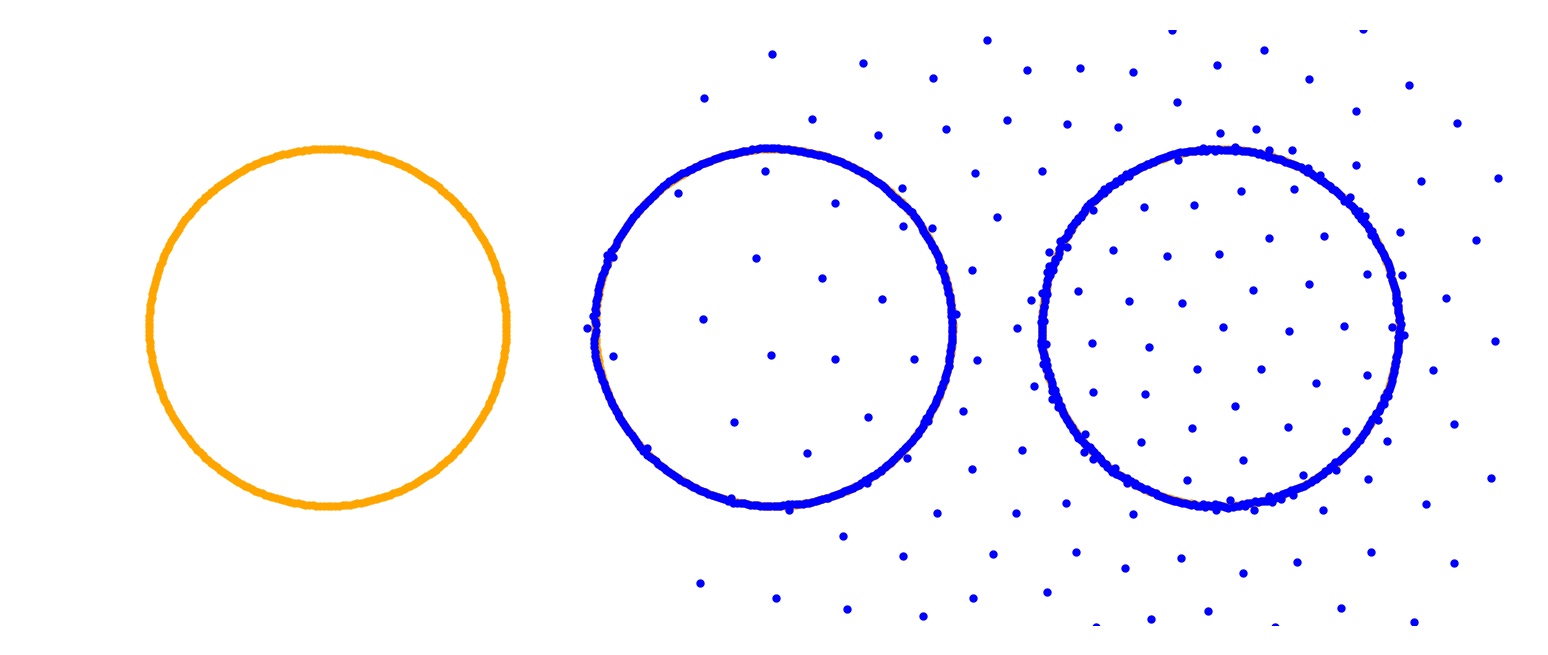}
    \end{subfigure}%
    \begin{subfigure}[t]{.14\textwidth}
        \includegraphics[width=\linewidth]{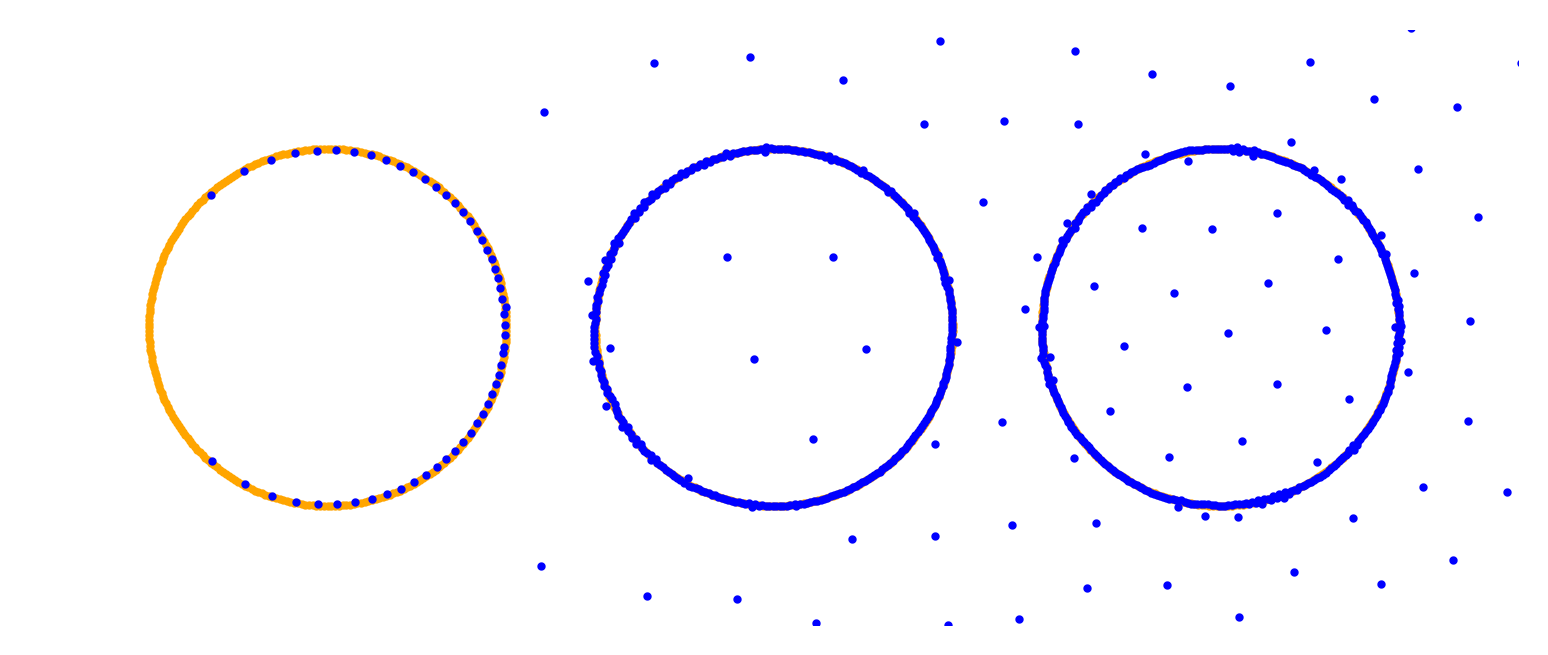}
    \end{subfigure}%
    \begin{subfigure}[t]{.14\textwidth}
        \includegraphics[width=\linewidth]{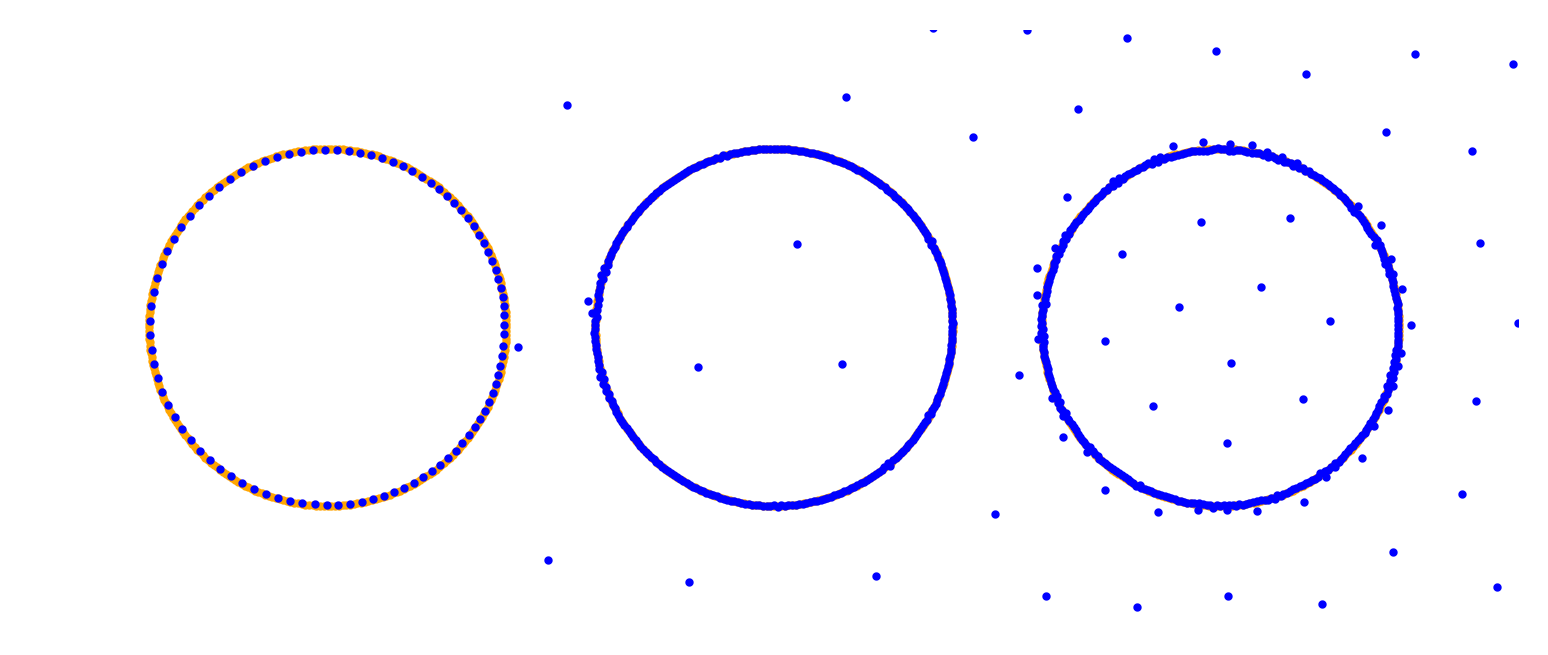}
    \end{subfigure}%
    \begin{subfigure}[t]{.14\textwidth}
        \includegraphics[width=\linewidth]{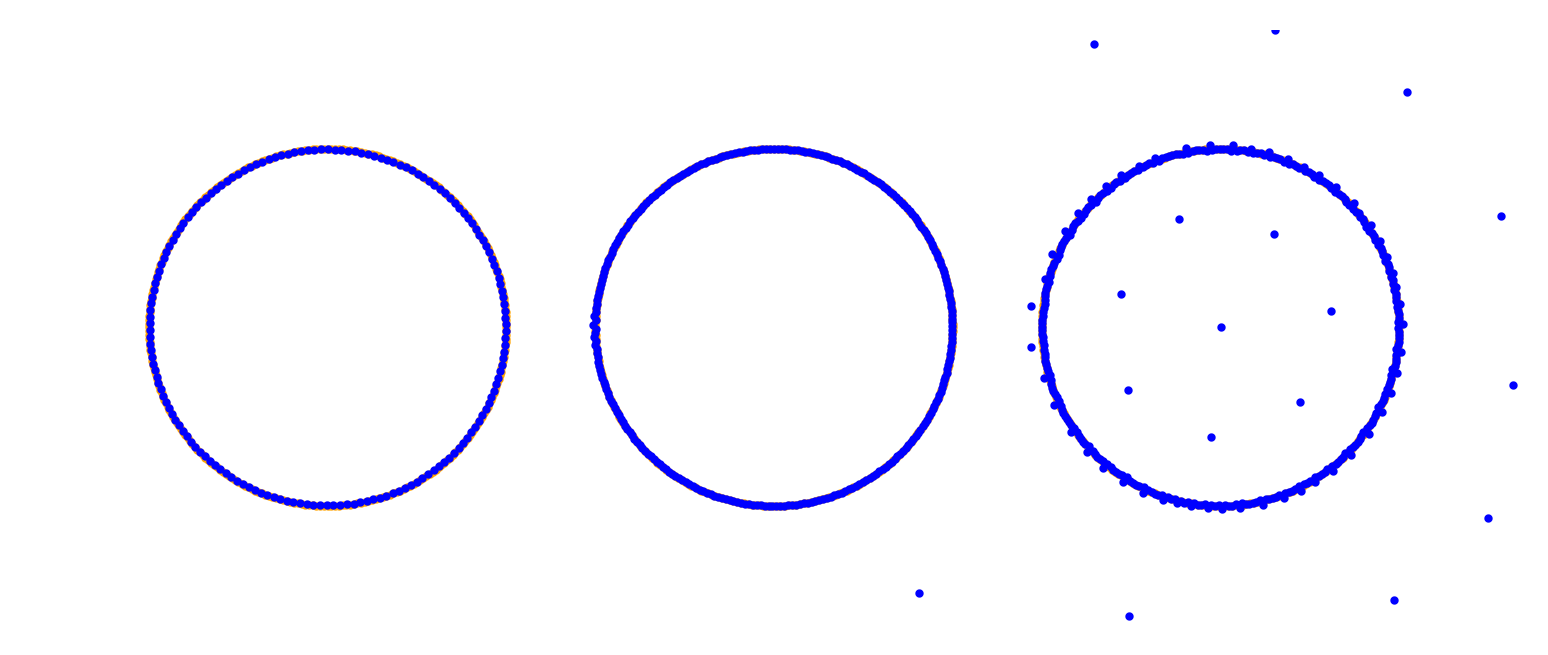}
    \end{subfigure}%
    \begin{subfigure}[t]{.14\textwidth}
        \includegraphics[width=\linewidth]{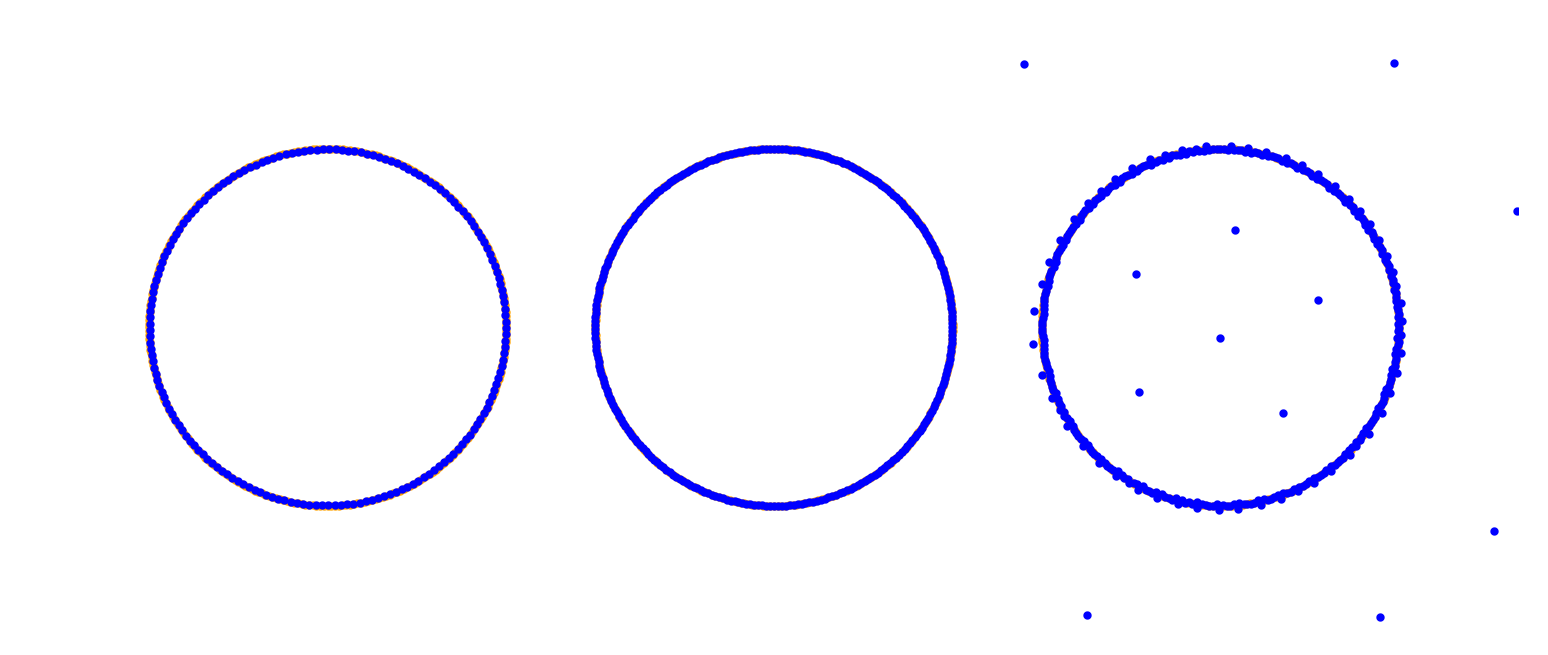}
    \end{subfigure} \\ %
     \rotatebox{90}{
        \begin{minipage}{.07\textwidth}
        \centering 
        \small $1$
    \end{minipage}} \hfill
    \begin{subfigure}[t]{.14\textwidth}
        \includegraphics[width=\linewidth]{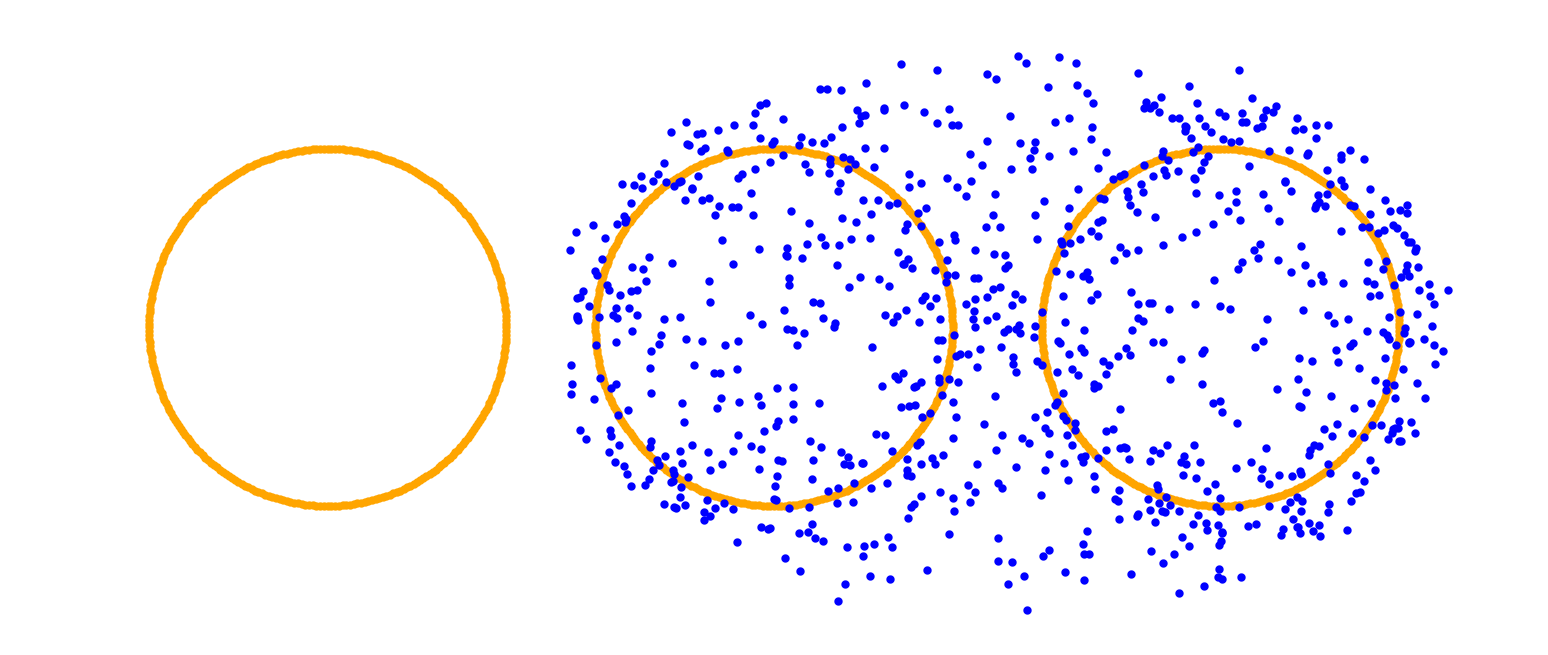}
    \end{subfigure}%
    \begin{subfigure}[t]{.14\textwidth}
        \includegraphics[width=\linewidth]{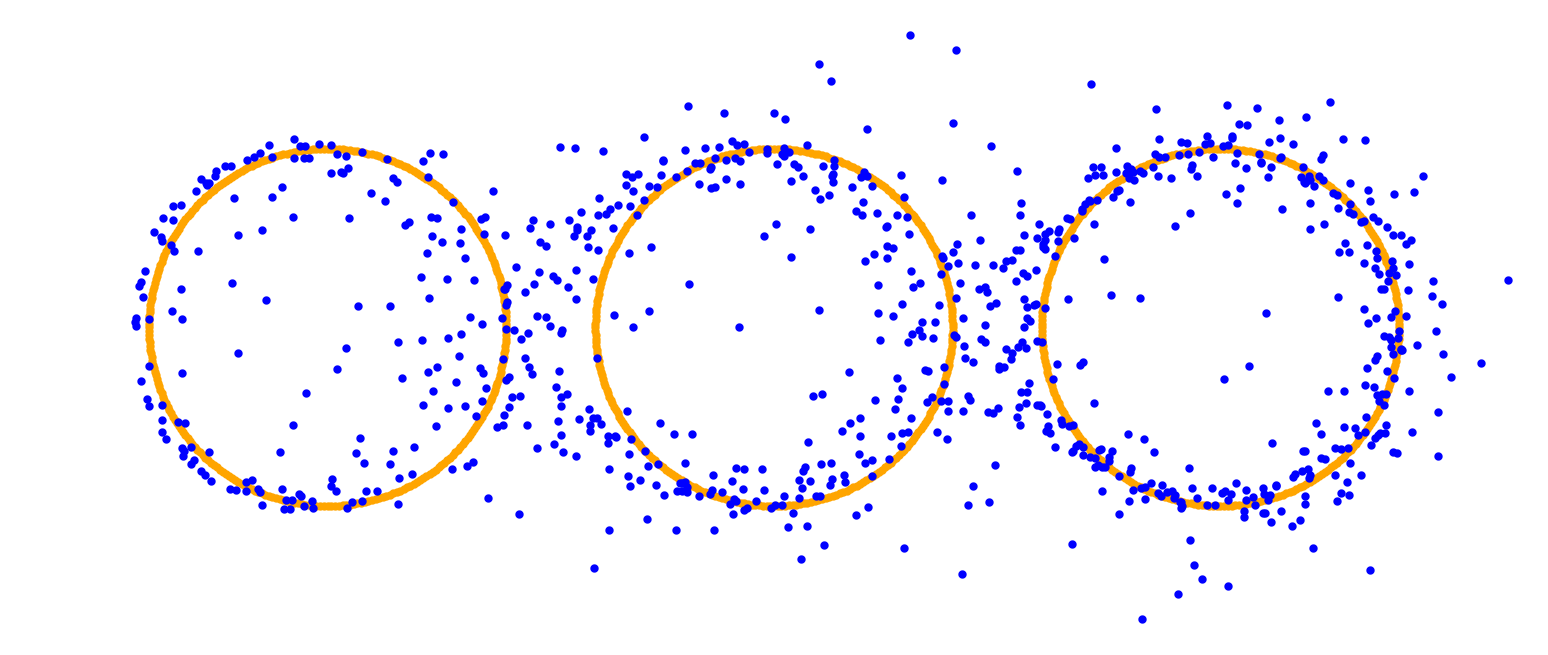}
    \end{subfigure}%
    \begin{subfigure}[t]{.14\textwidth}
        \includegraphics[width=\linewidth]{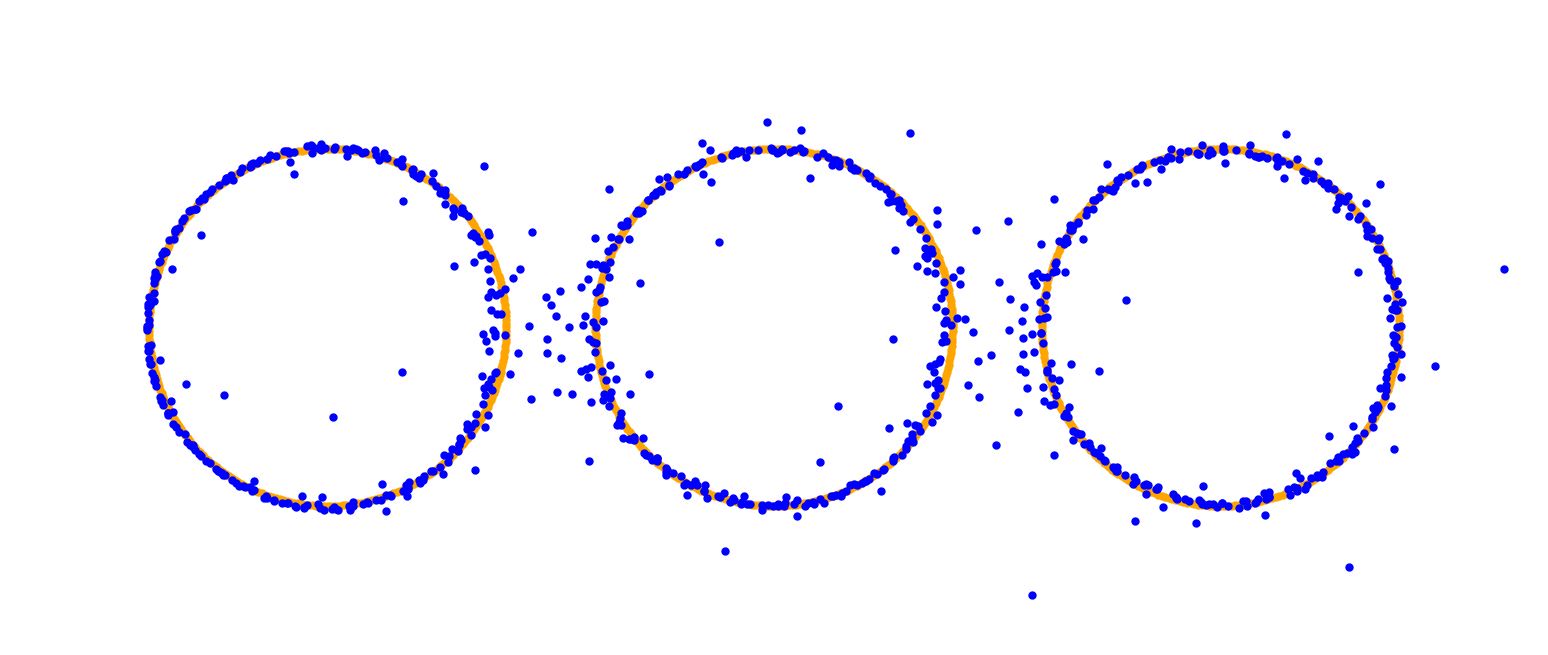}
    \end{subfigure}%
    \begin{subfigure}[t]{.14\textwidth}
        \includegraphics[width=\linewidth]{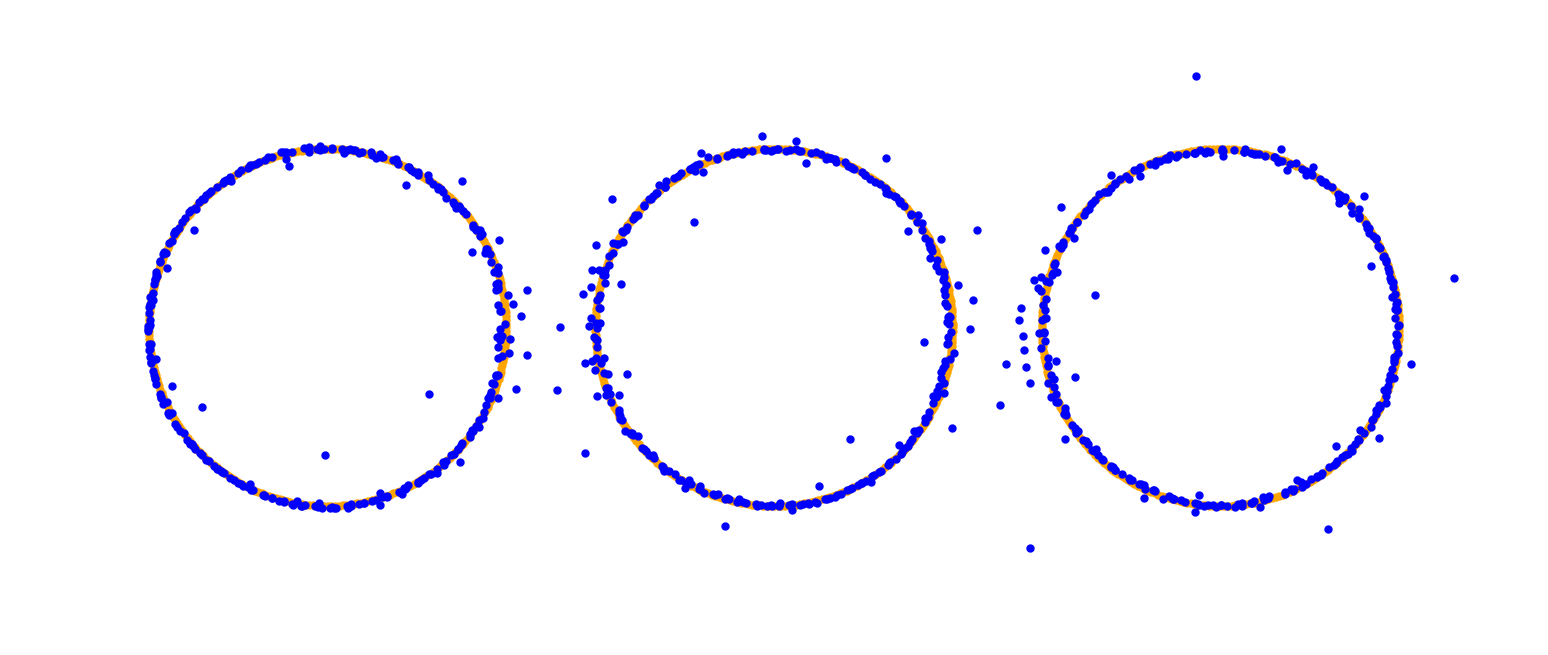}
    \end{subfigure}%
    \begin{subfigure}[t]{.14\textwidth}
        \includegraphics[width=\linewidth]{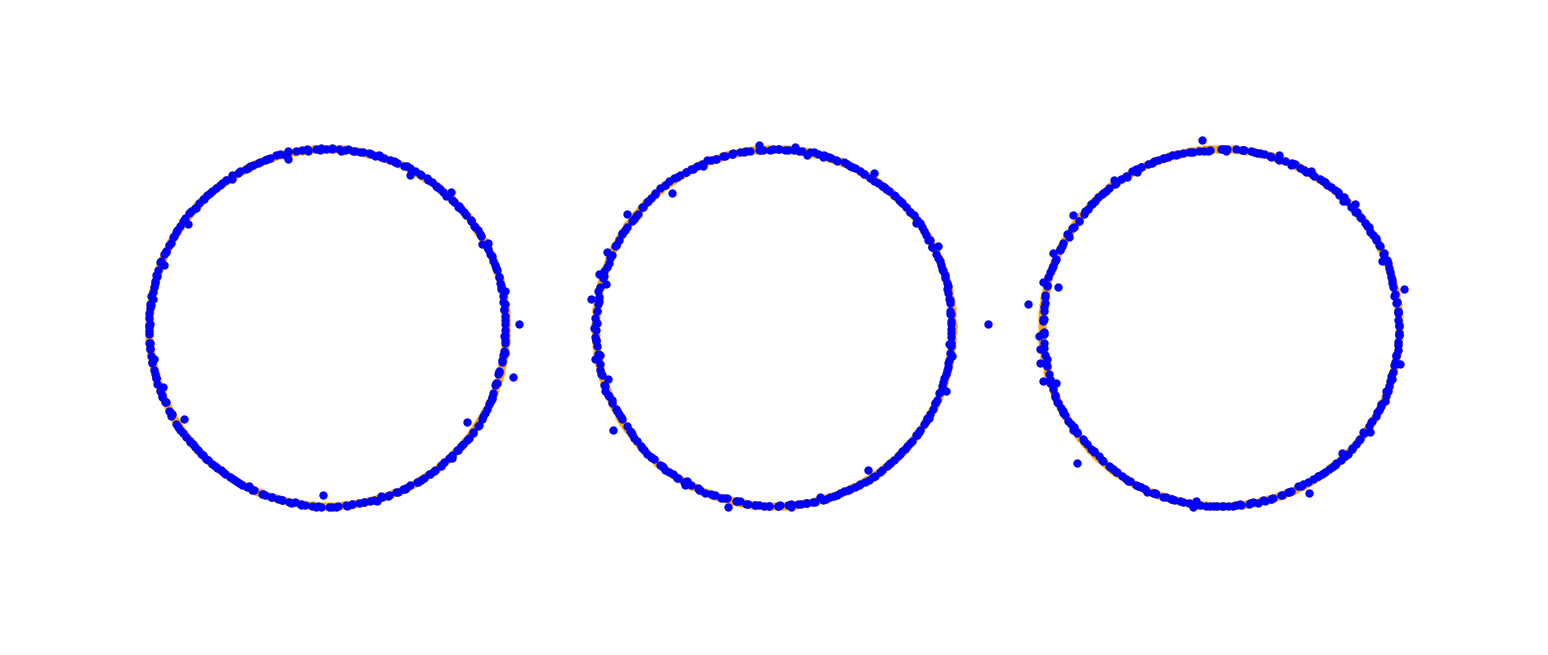}
    \end{subfigure}%
    \begin{subfigure}[t]{.14\textwidth}
        \includegraphics[width=\linewidth]{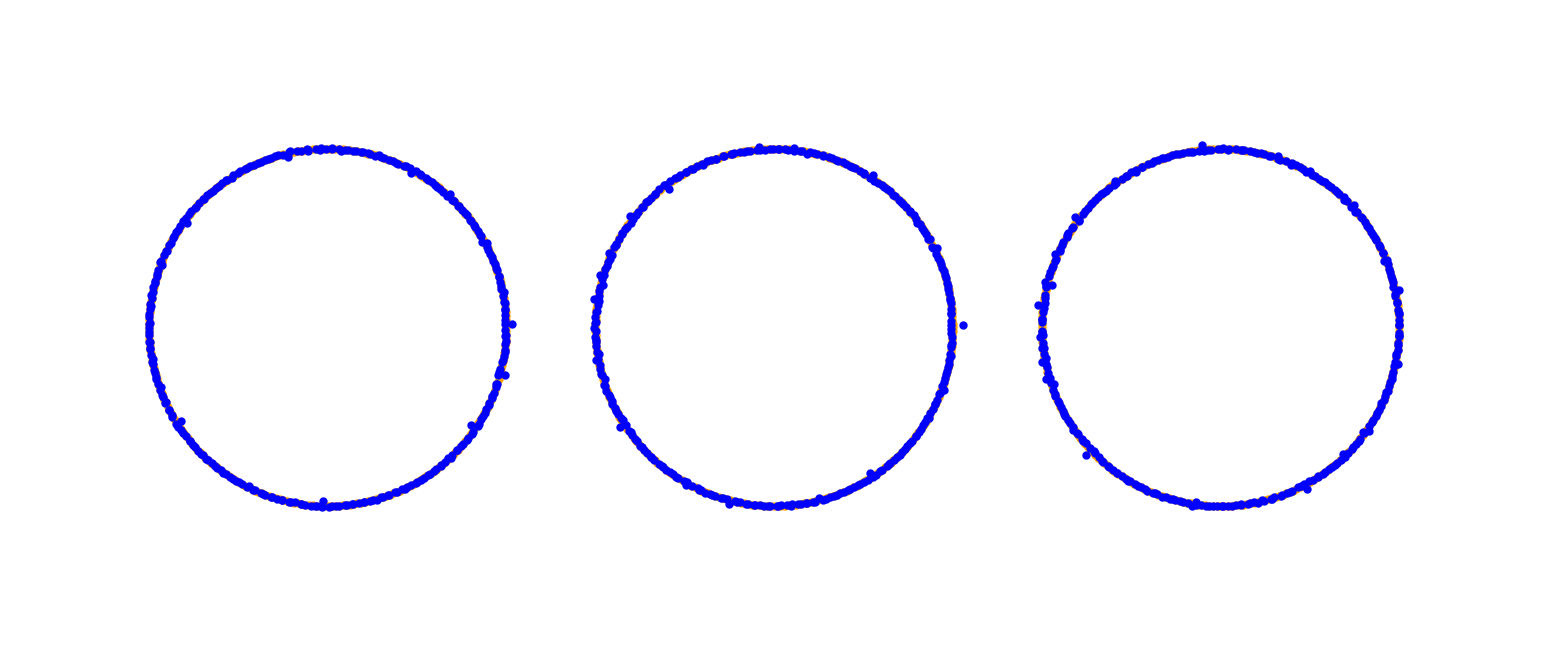}
    \end{subfigure} \\ %
    \rotatebox{90}{
        \begin{minipage}{.07\textwidth}
        \centering 
        \small $10$
    \end{minipage}} \hfill
    \begin{subfigure}[t]{.14\textwidth}
        \includegraphics[width=\linewidth]{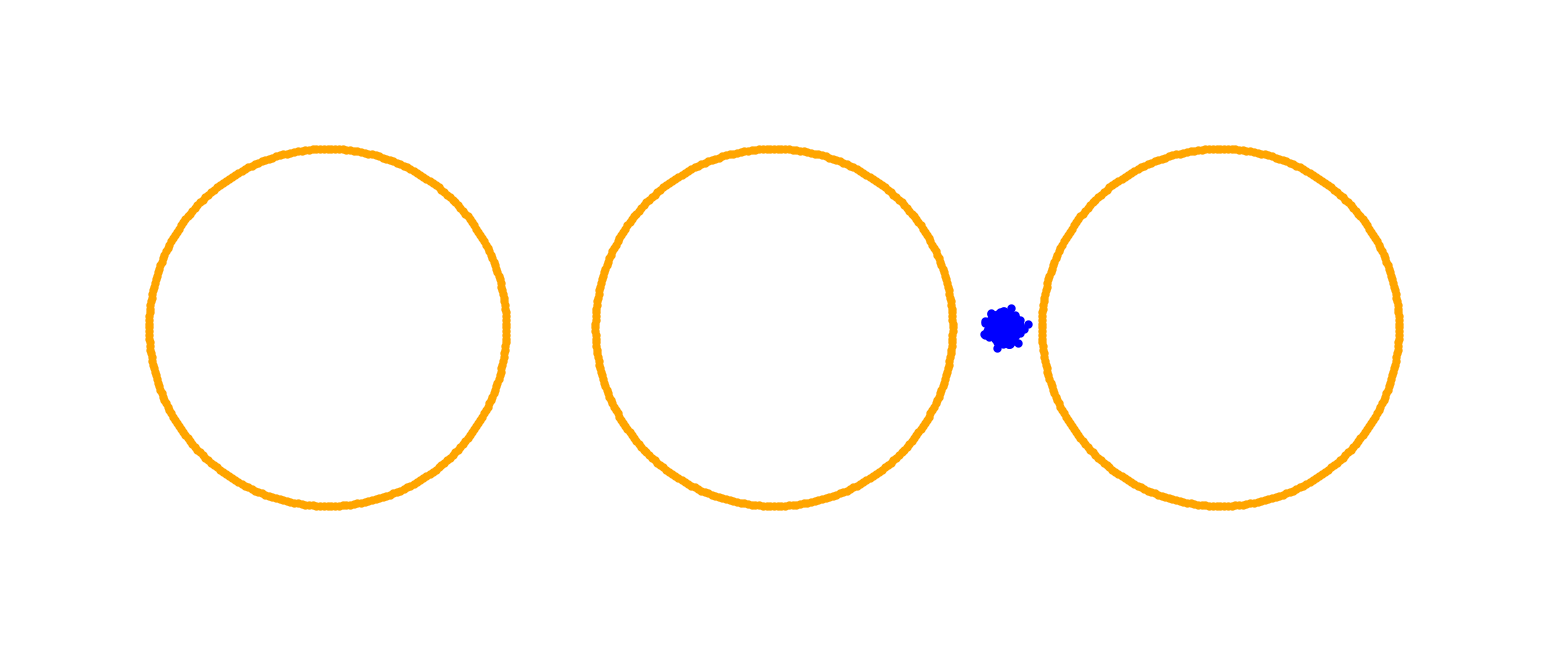}
    \end{subfigure}%
    \begin{subfigure}[t]{.14\textwidth}
        \includegraphics[width=\linewidth]{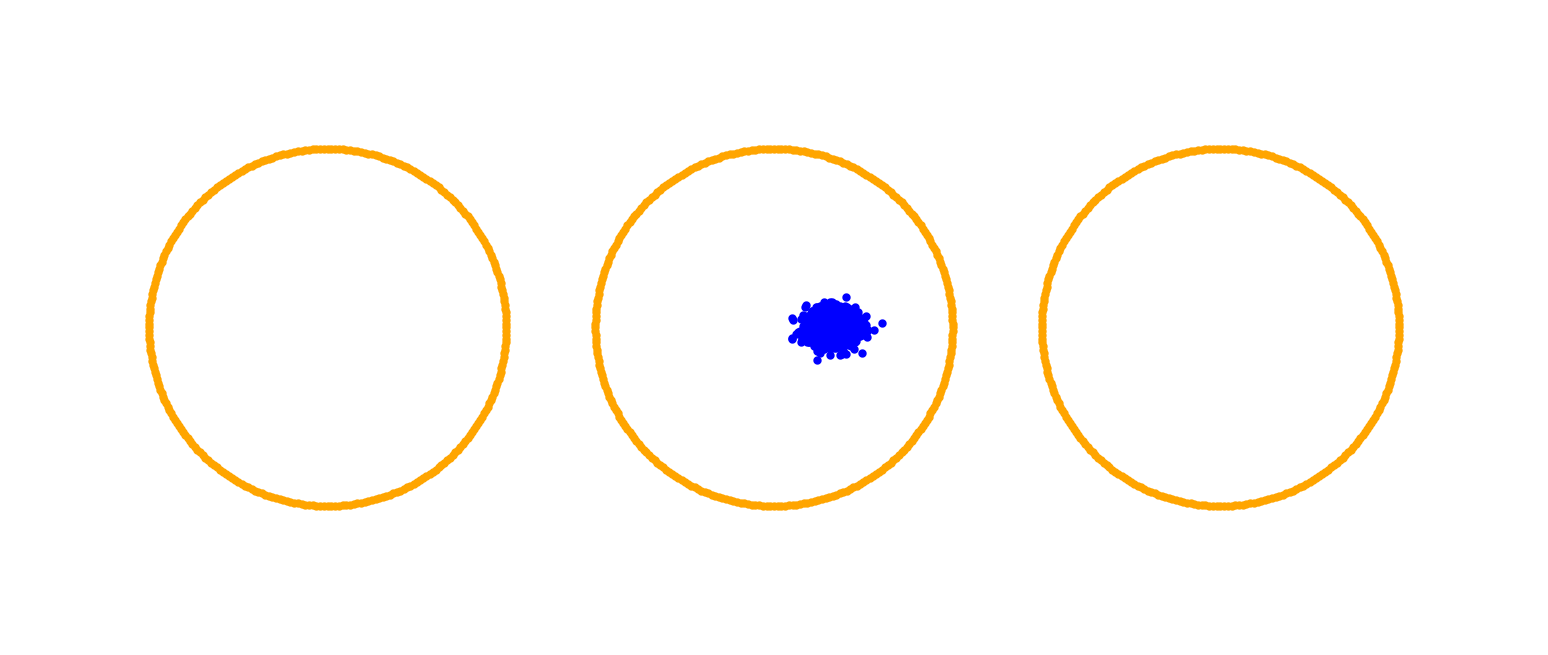}
    \end{subfigure}%
    \begin{subfigure}[t]{.14\textwidth}
        \includegraphics[width=\linewidth]{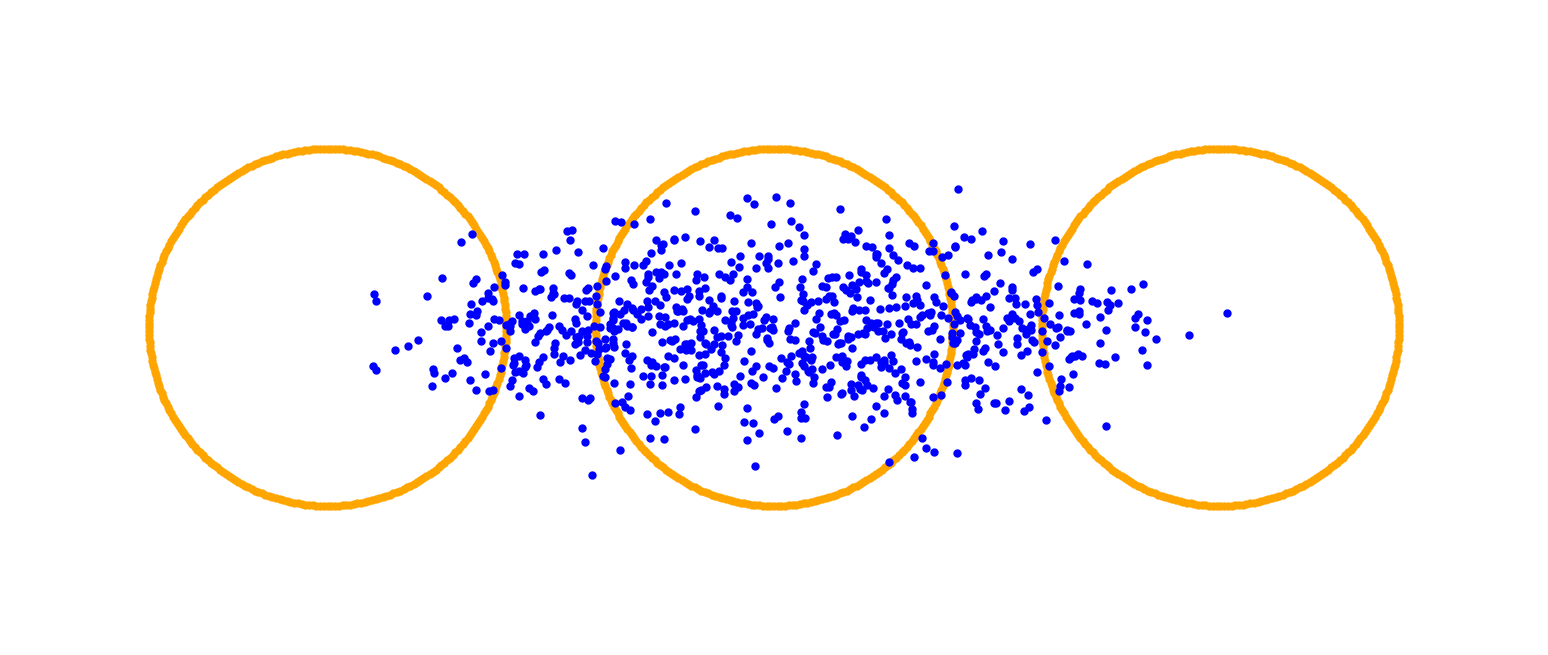}
    \end{subfigure}%
    \begin{subfigure}[t]{.14\textwidth}
        \includegraphics[width=\linewidth]{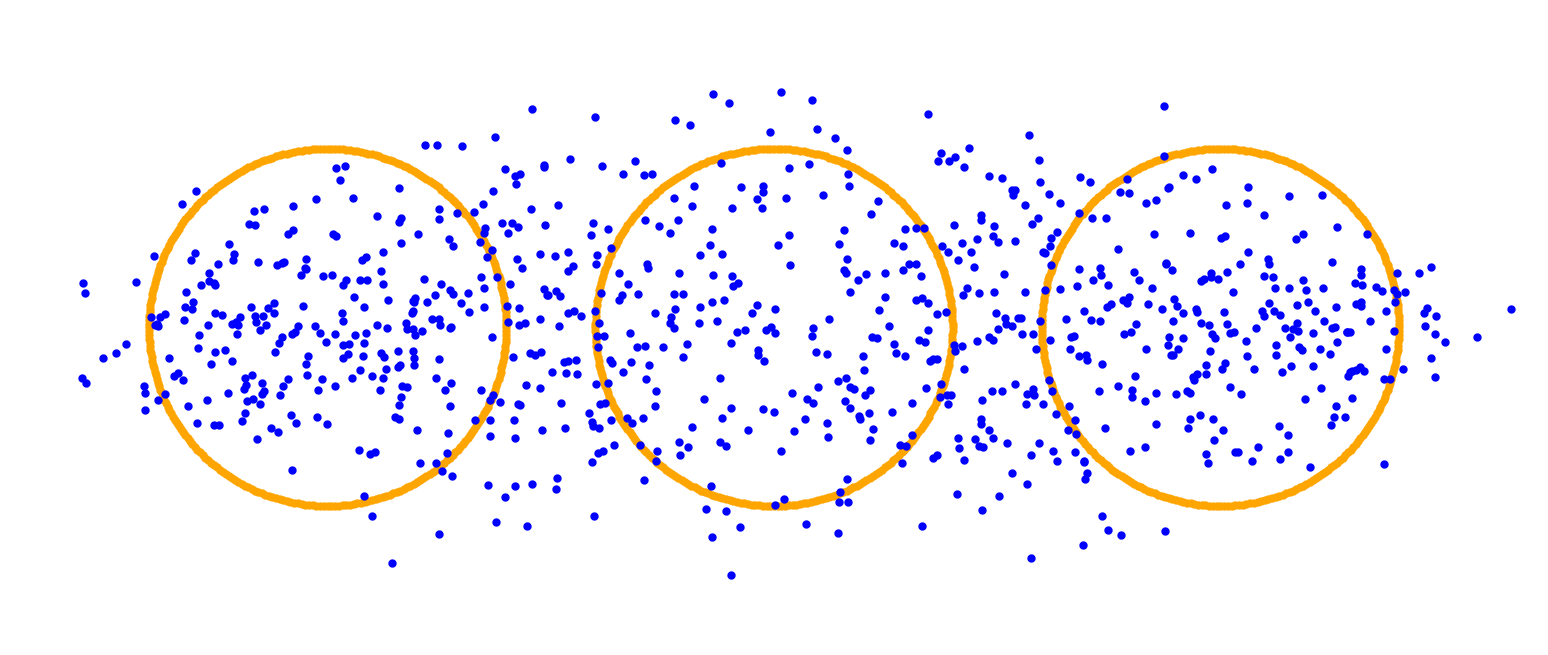}
    \end{subfigure}%
    \begin{subfigure}[t]{.14\textwidth}
        \includegraphics[width=\linewidth]{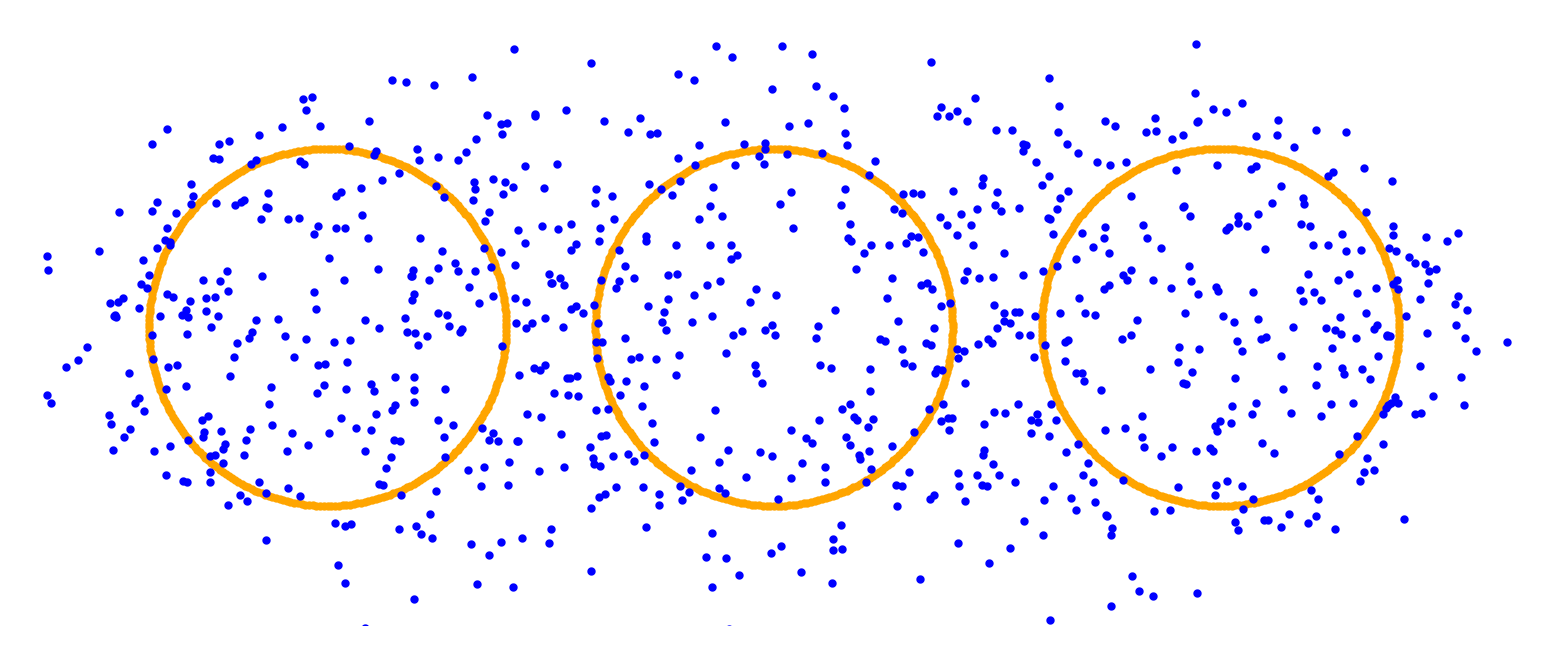}
    \end{subfigure}%
    \begin{subfigure}[t]{.14\textwidth}
        \includegraphics[width=\linewidth]{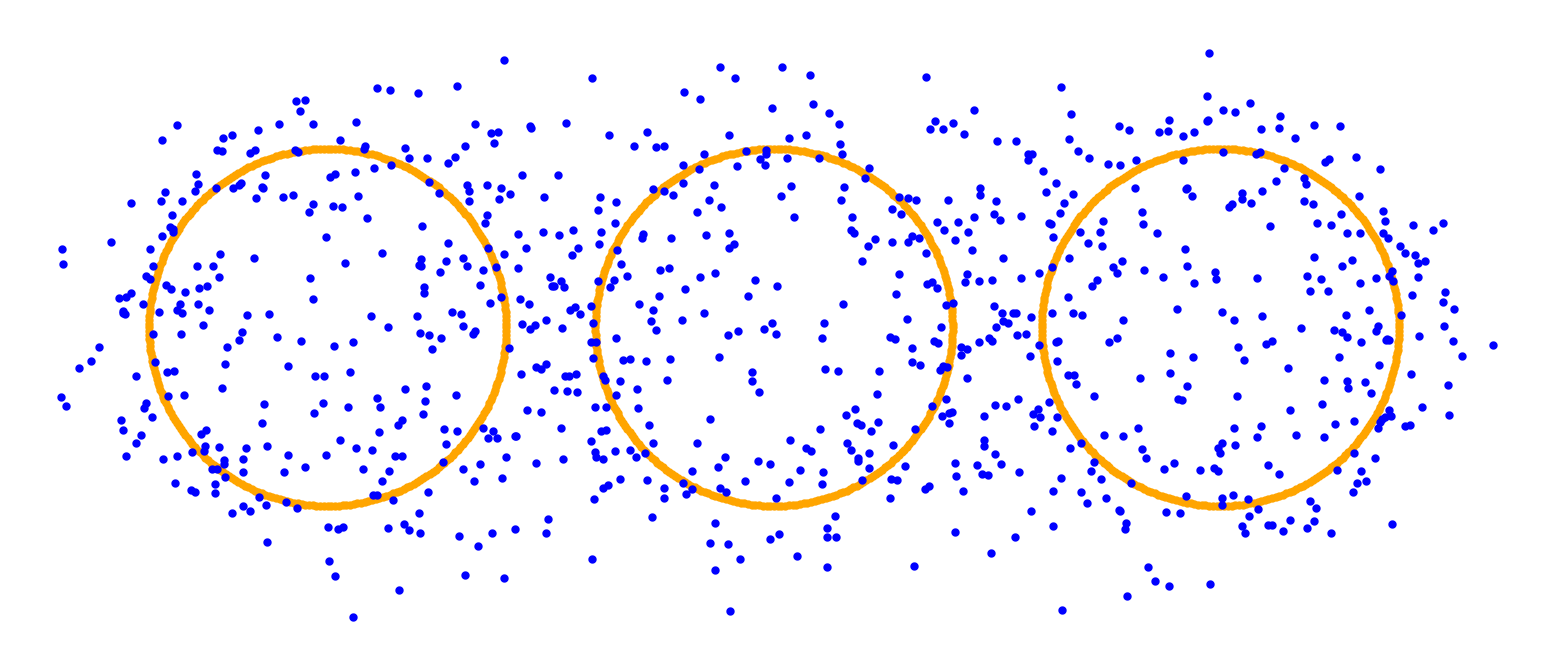}
    \end{subfigure} \\ %
    \rotatebox{90}{
        \begin{minipage}{.07\textwidth}
        \centering 
        \small $100$
    \end{minipage}} \hfill
    \begin{subfigure}[t]{.14\textwidth}
        \includegraphics[width=\linewidth]{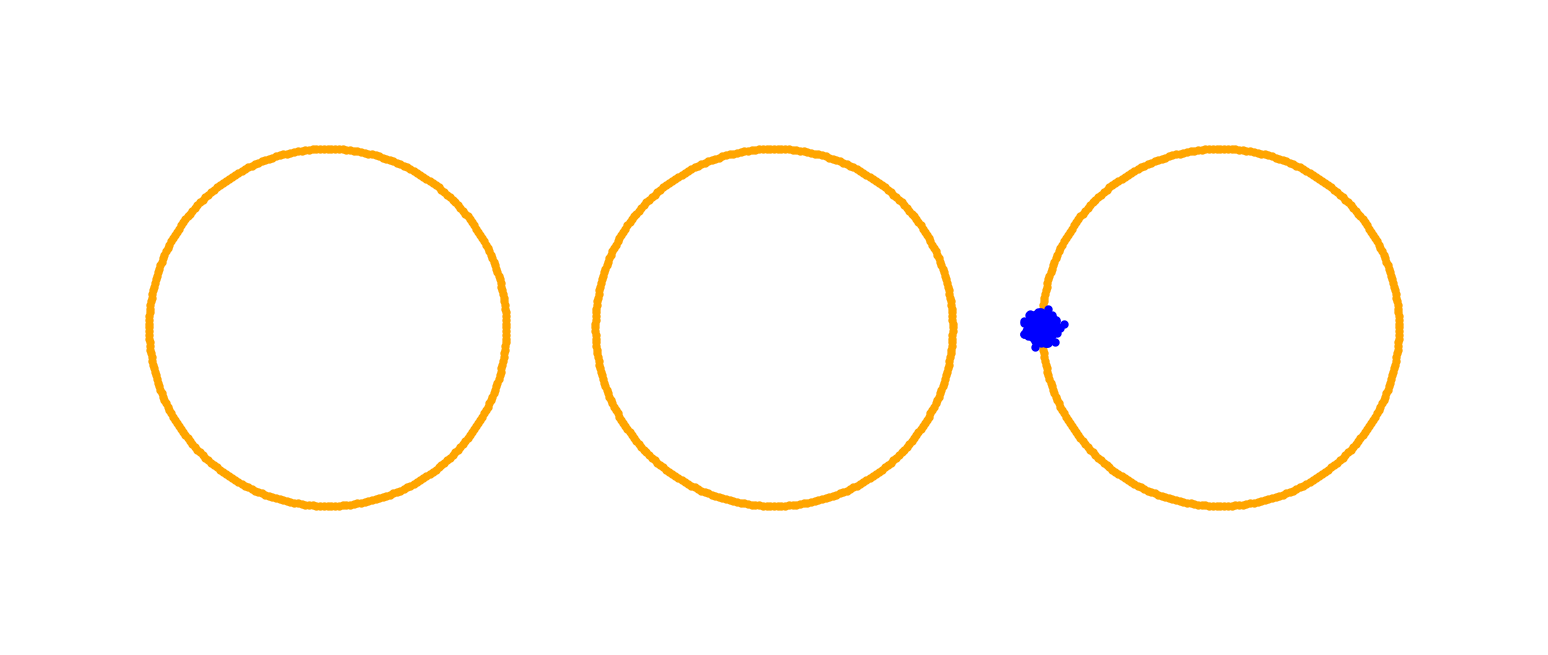}
    \caption*{t=0.1}
    \end{subfigure}%
    \begin{subfigure}[t]{.14\textwidth}
        \includegraphics[width=\linewidth]{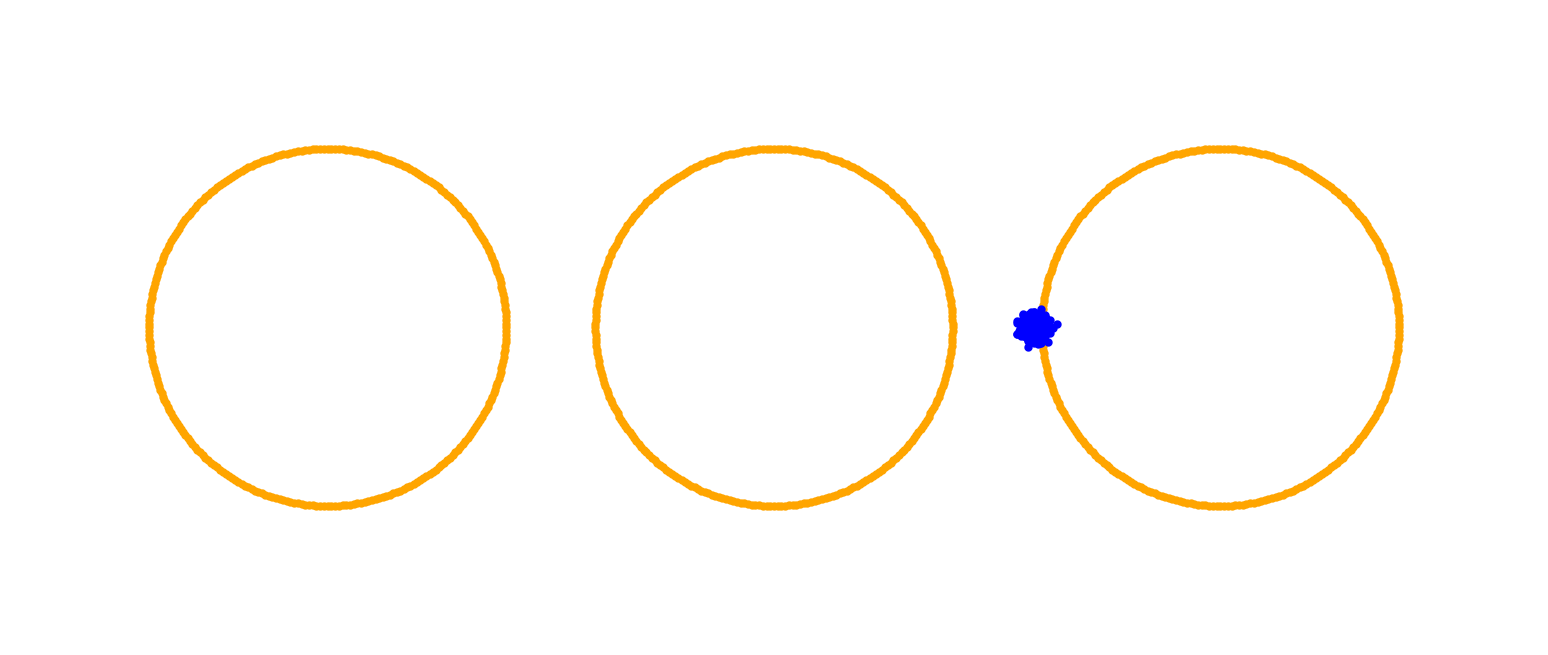}
    \caption*{t=1}
    \end{subfigure}%
    \begin{subfigure}[t]{.14\textwidth}
        \includegraphics[width=\linewidth]{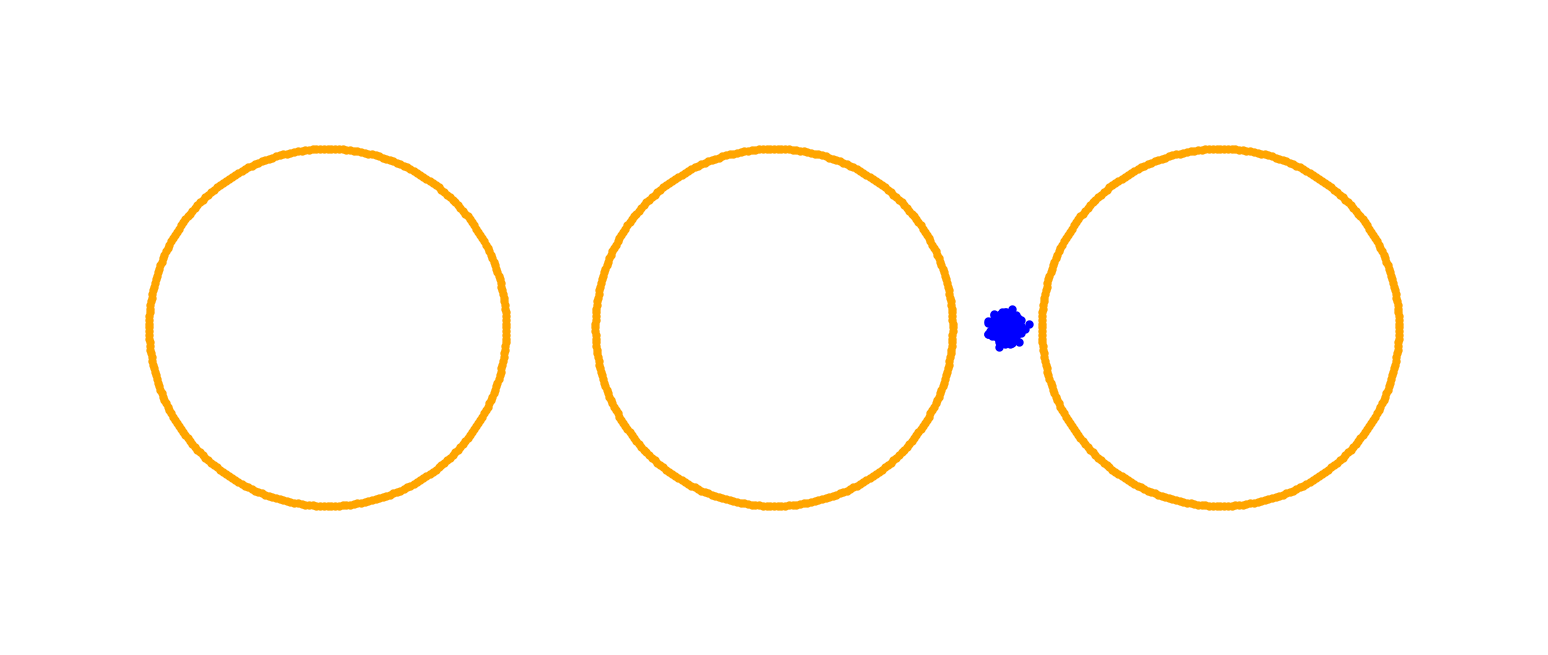}
    \caption*{t=5}
    \end{subfigure}%
    \begin{subfigure}[t]{.14\textwidth}
        \includegraphics[width=\linewidth]{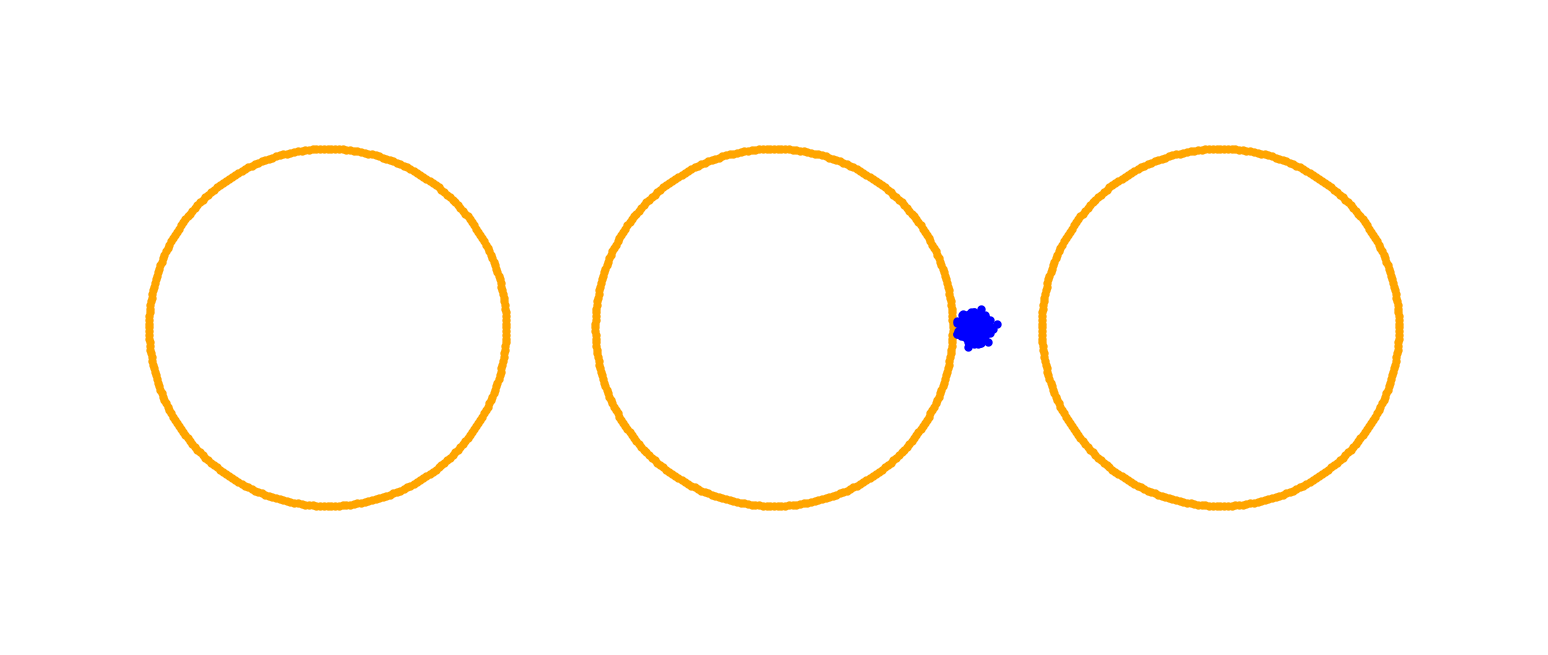}
    \caption*{t=10}
    \end{subfigure}%
    \begin{subfigure}[t]{.14\textwidth}
        \includegraphics[width=\linewidth]{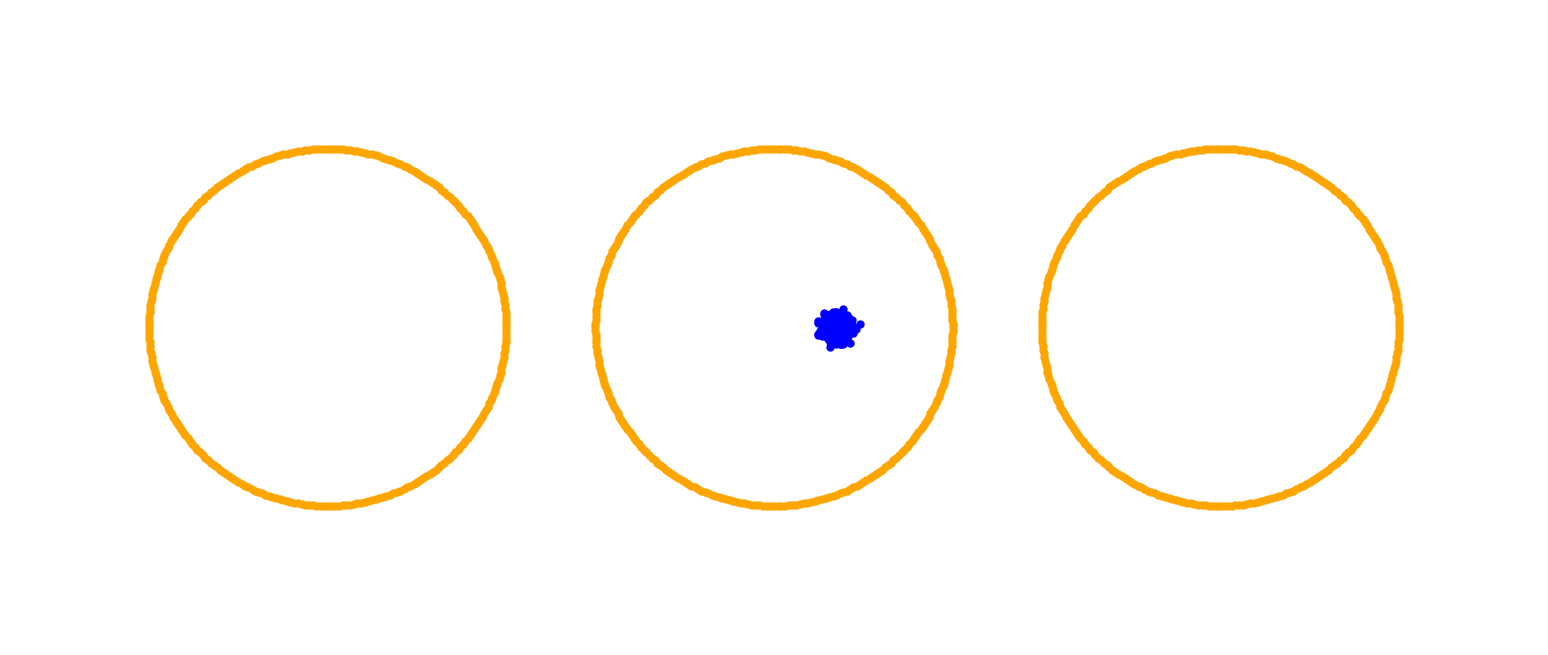}
    \caption*{t=50}
    \end{subfigure}%
    \begin{subfigure}[t]{.14\textwidth}
        \includegraphics[width=\linewidth]{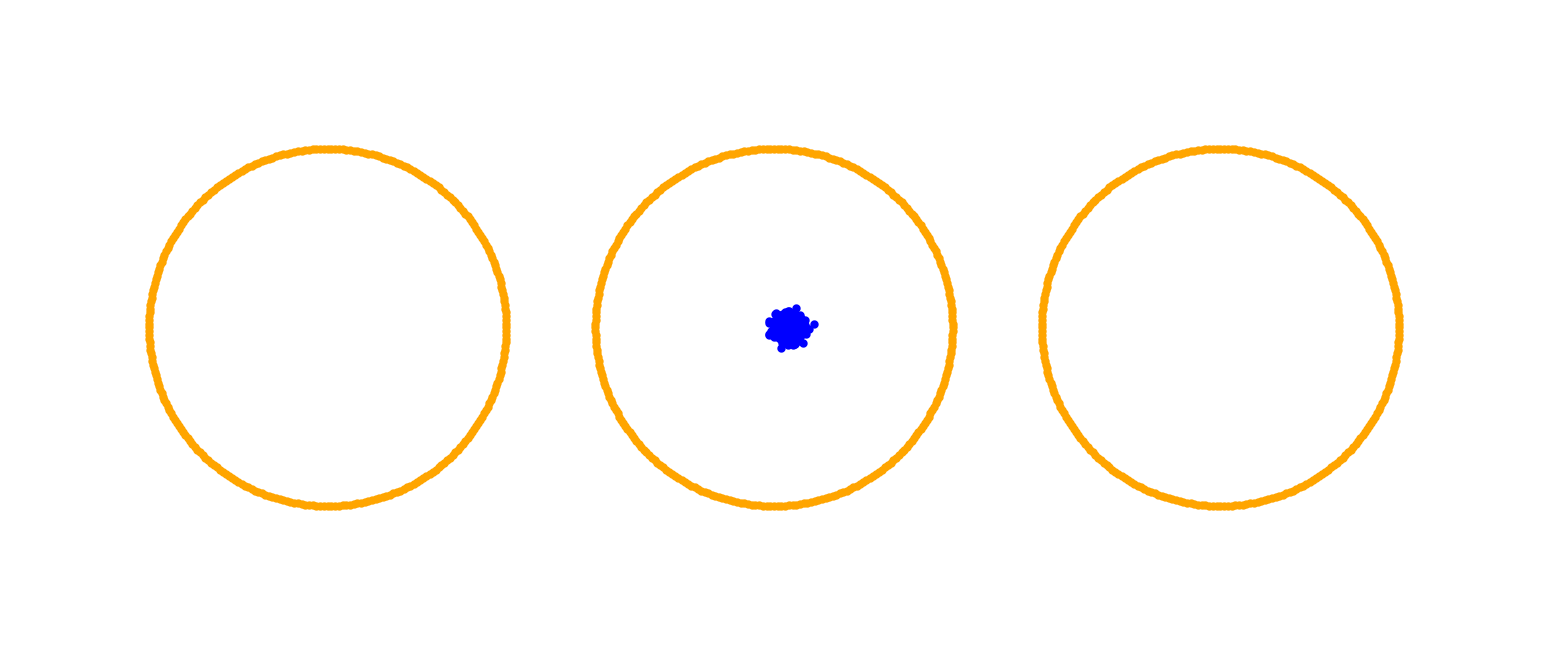}
    \caption*{t=100}
    \end{subfigure}%
    \caption{Ablation study for the parameter $\sigma^2$ of the Matern kernel with $\nu = \frac{3}{2}$.}
    \label{fig:ablationCirclesMaterntau}
\end{figure}

\subsection{The Regularization Parameter \texorpdfstring{$\lambda$}{lambda}}
First, we investigate the behavior of the flow if only the regularization parameter $\lambda$ varies.
For this purpose, we choose the Jeffreys divergence.
In our scheme, we choose $\tau = \num{e-3}$ and $N = 900$ and the inverse multiquadric kernel with width $\sigma^2 = \num{5e-1}$.
Even though the Jeffreys entropy function is infinite at $x = 0$, the flow still behaves reasonably well, see Figure \ref{fig:ablationCirclesCompact2Lambda}.
If $\lambda$ is very small, many particles get stuck in the middle and do not spread towards the funnels.
If, however, $\lambda$ is much larger than the step size $\tau$, then the particles only spread out slowly and in a spherical shape.
This behavior is also reflected in the corresponding distances to the target measure $\nu$, see Figure \ref{fig:ablationCirclesJeffreyLambdaGraphs}.
In our second example, we use the compactly supported \enquote{spline} kernel $k(x, y) \coloneqq (1 - \| x - y \|_2)_+^3 (3 \| x - y \| + 1)$, the 3-Tsallis divergence, $\tau = \num{e-3}$, $N = 900$ and the three circles target from before, see Figure~\ref{fig:ablationCirclesCompact2LambdaGraphs}.
Overall, the behavior is similar to before.
\begin{figure}[t]
    \centering
    \rotatebox{90}{
        \begin{minipage}{.05\textwidth}
        \centering 
        \small $\num{e-3}$
    \end{minipage}} \hfill
    \begin{subfigure}[t]{.14\textwidth}
        \includegraphics[width=\linewidth, valign=c]{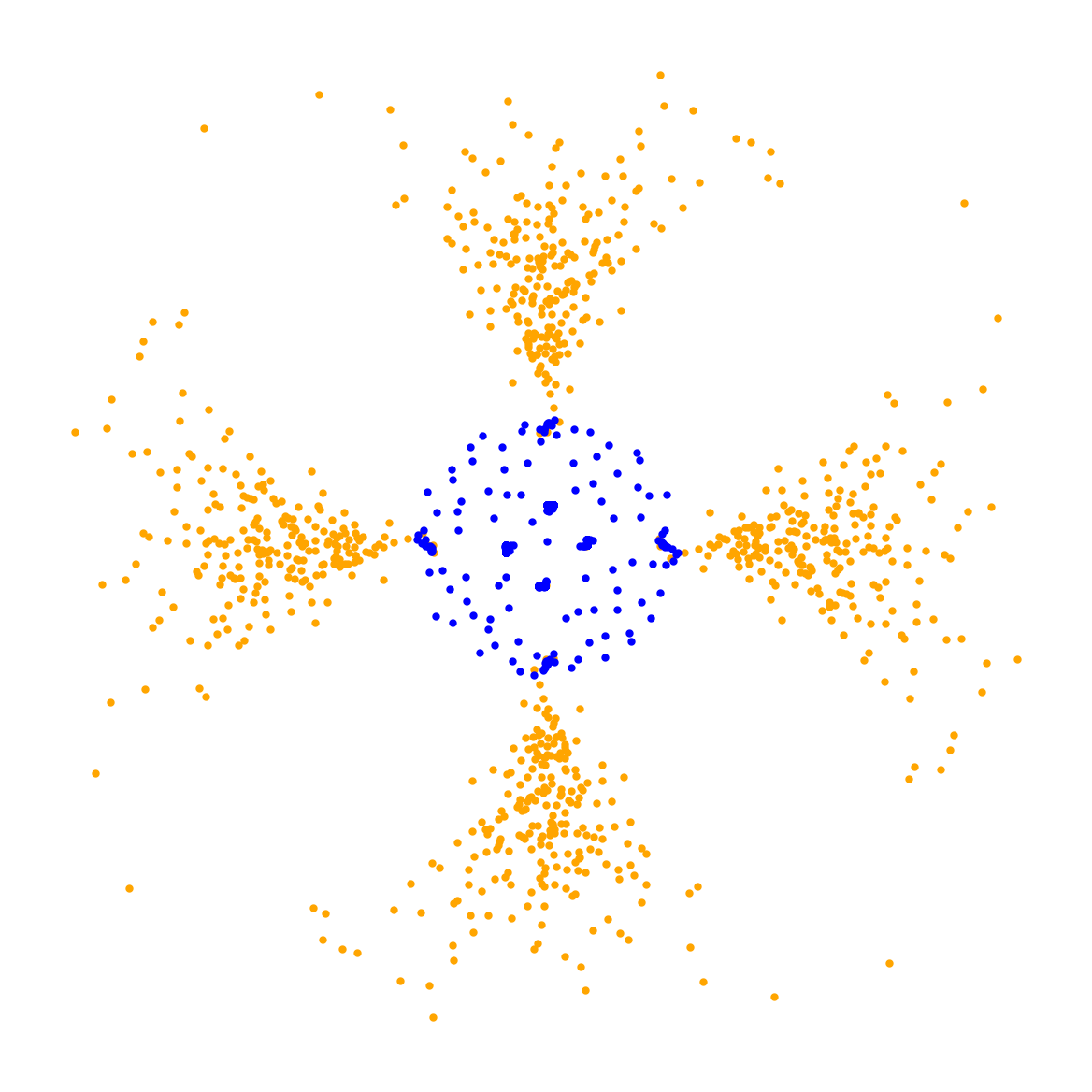}
    \end{subfigure}%
    \begin{subfigure}[t]{.14\textwidth}
        \includegraphics[width=\linewidth, valign=c]{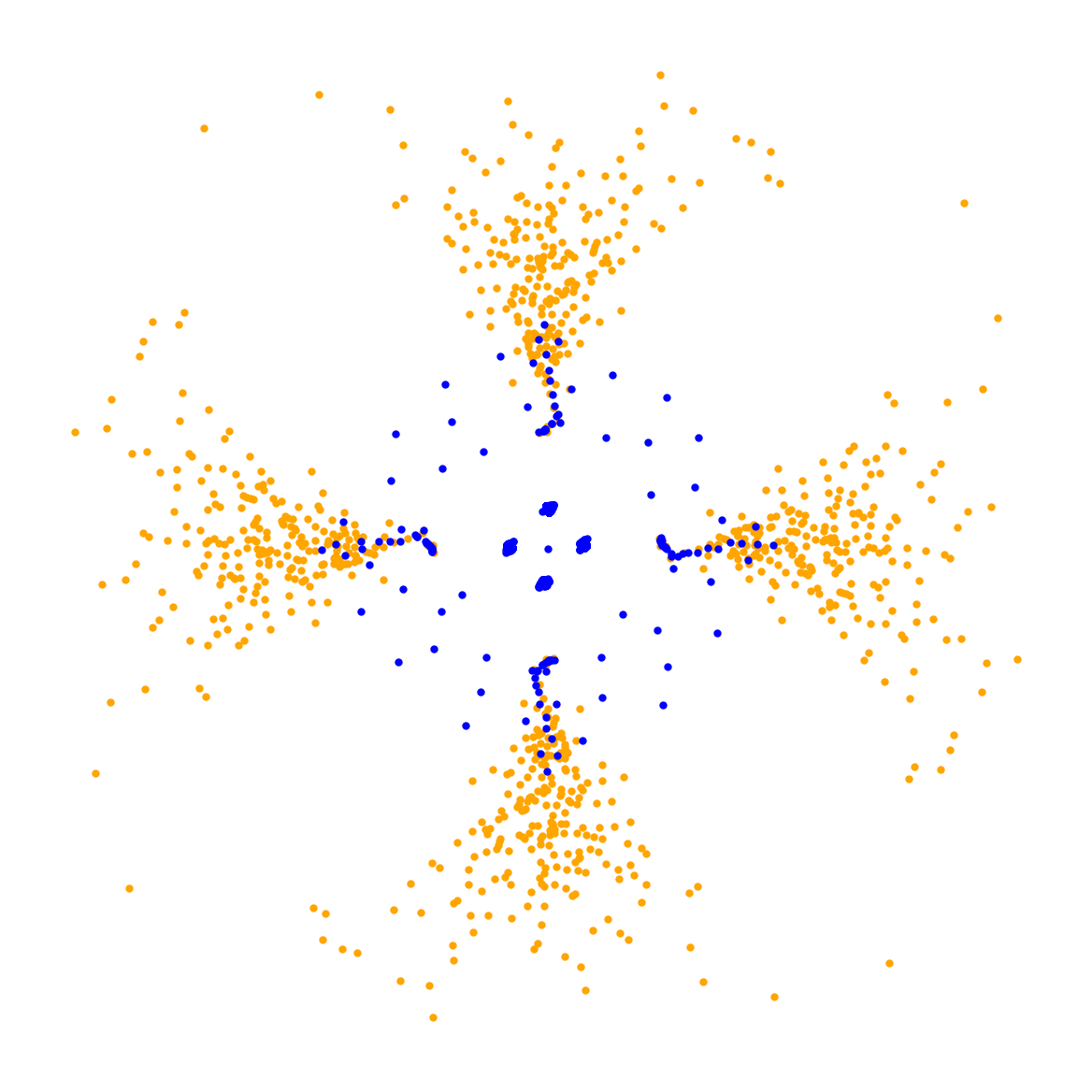}
    \end{subfigure}%
    \begin{subfigure}[t]{.14\textwidth}
        \includegraphics[width=\linewidth, valign=c]{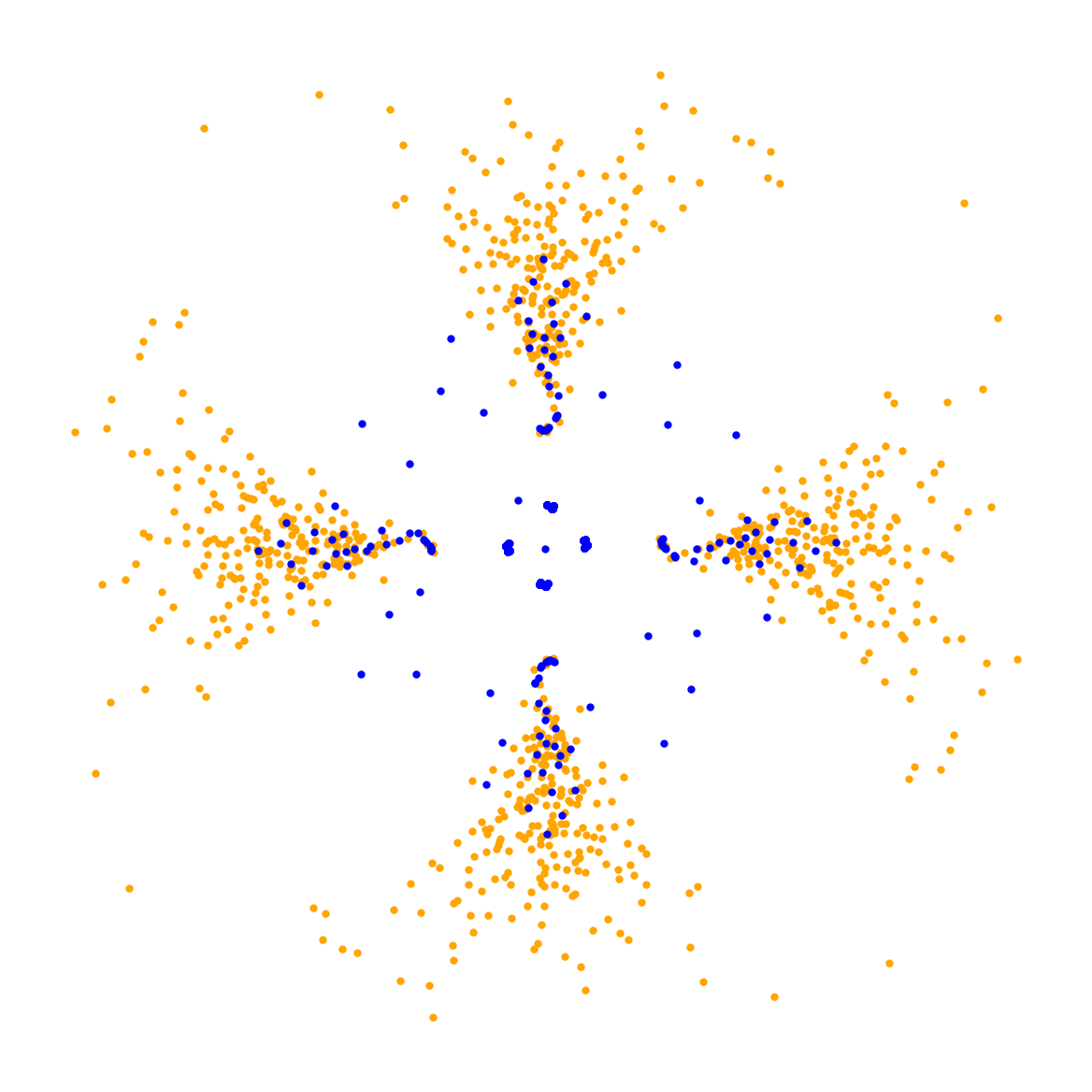}
    \end{subfigure}%
    \begin{subfigure}[t]{.14\textwidth}
        \includegraphics[width=\linewidth, valign=c]{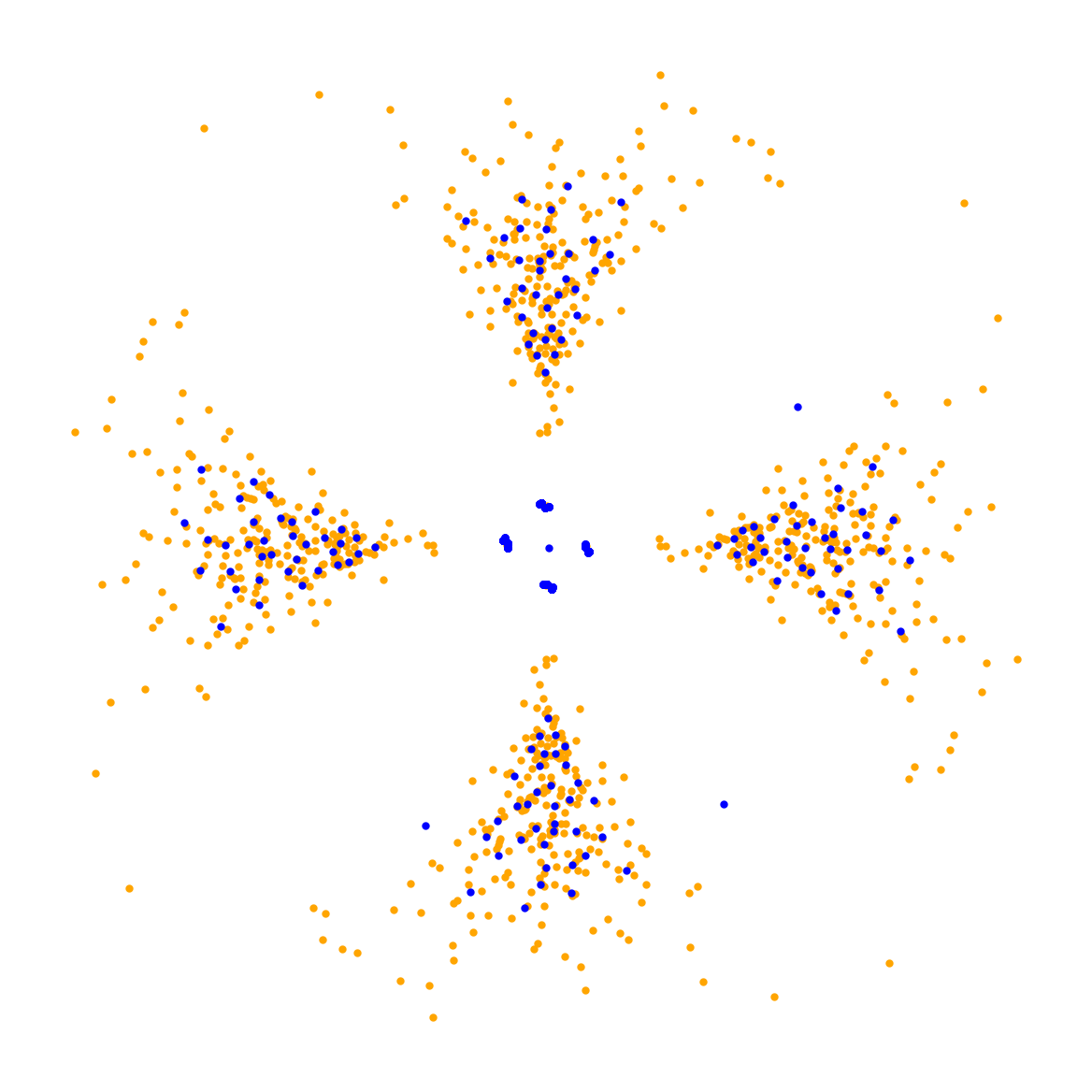}
    \end{subfigure}%
    \begin{subfigure}[t]{.14\textwidth}
        \includegraphics[width=\linewidth, valign=c]{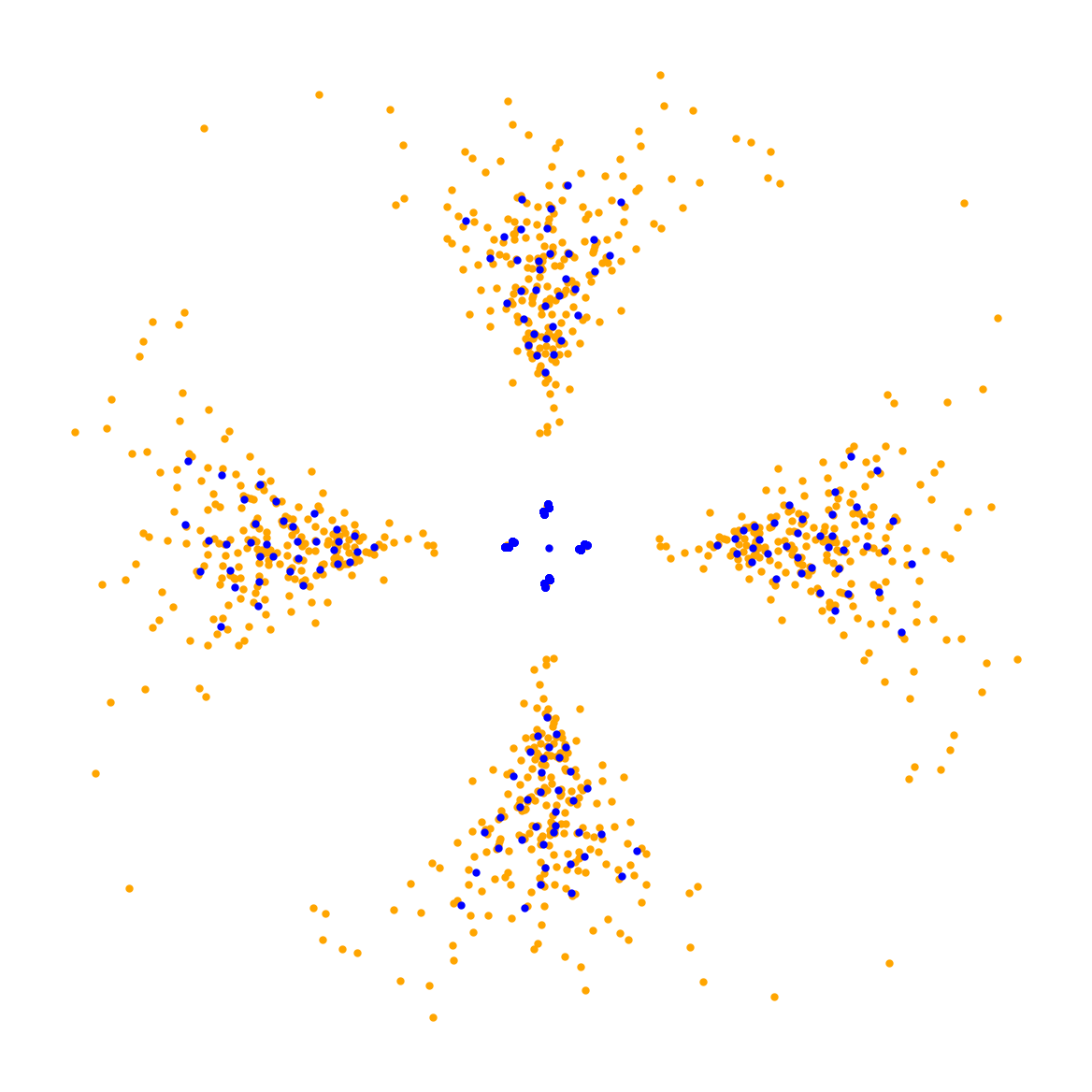}
    \end{subfigure}%
    \begin{subfigure}[t]{.14\textwidth}
        \includegraphics[width=\linewidth, valign=c]{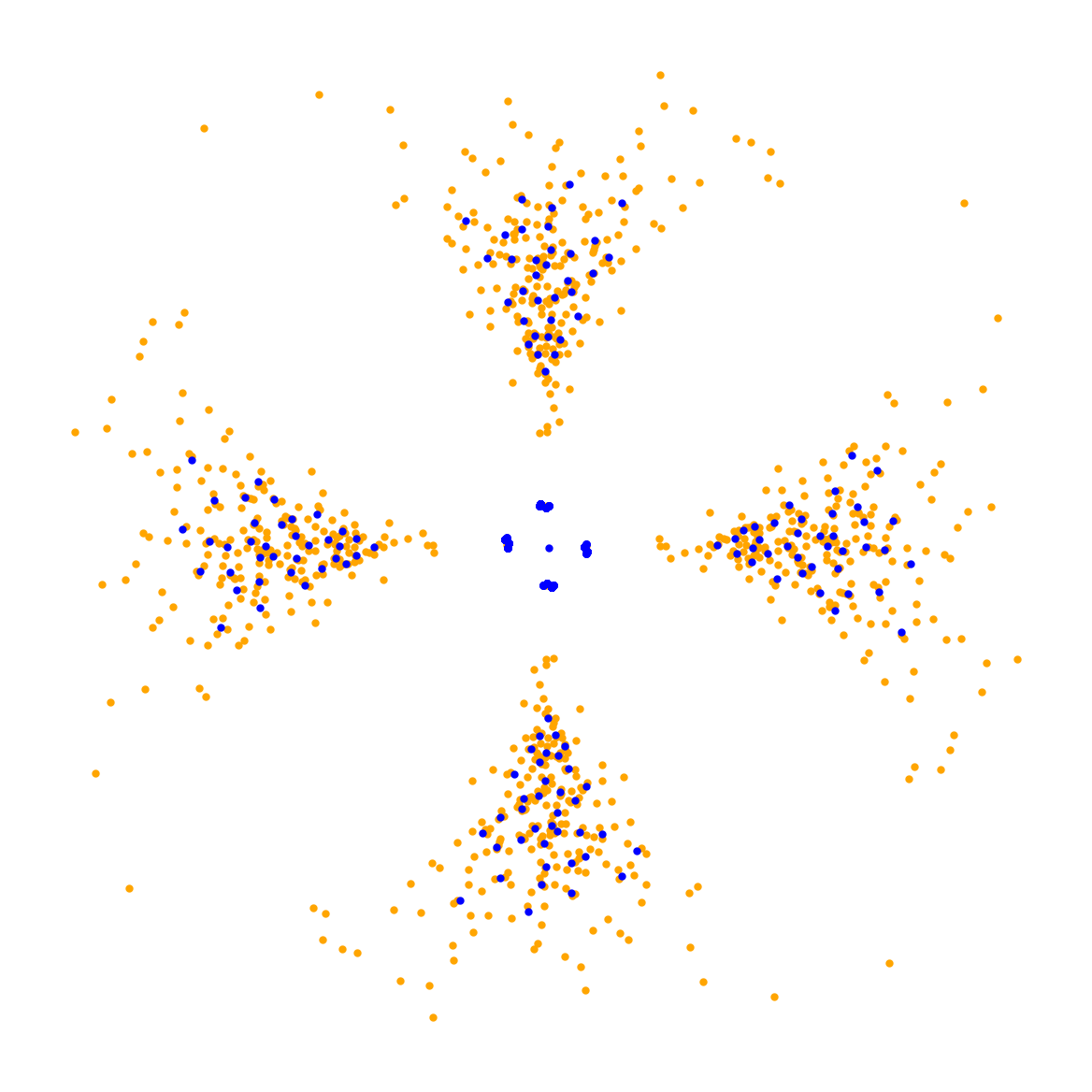}
    \end{subfigure} \\ %
    \rotatebox{90}{
        \begin{minipage}{.05\textwidth}
        \centering 
        \small $\num{e-2}$
    \end{minipage}} \hfill
    \begin{subfigure}[t]{.14\textwidth}
        \includegraphics[width=\linewidth, valign=c]{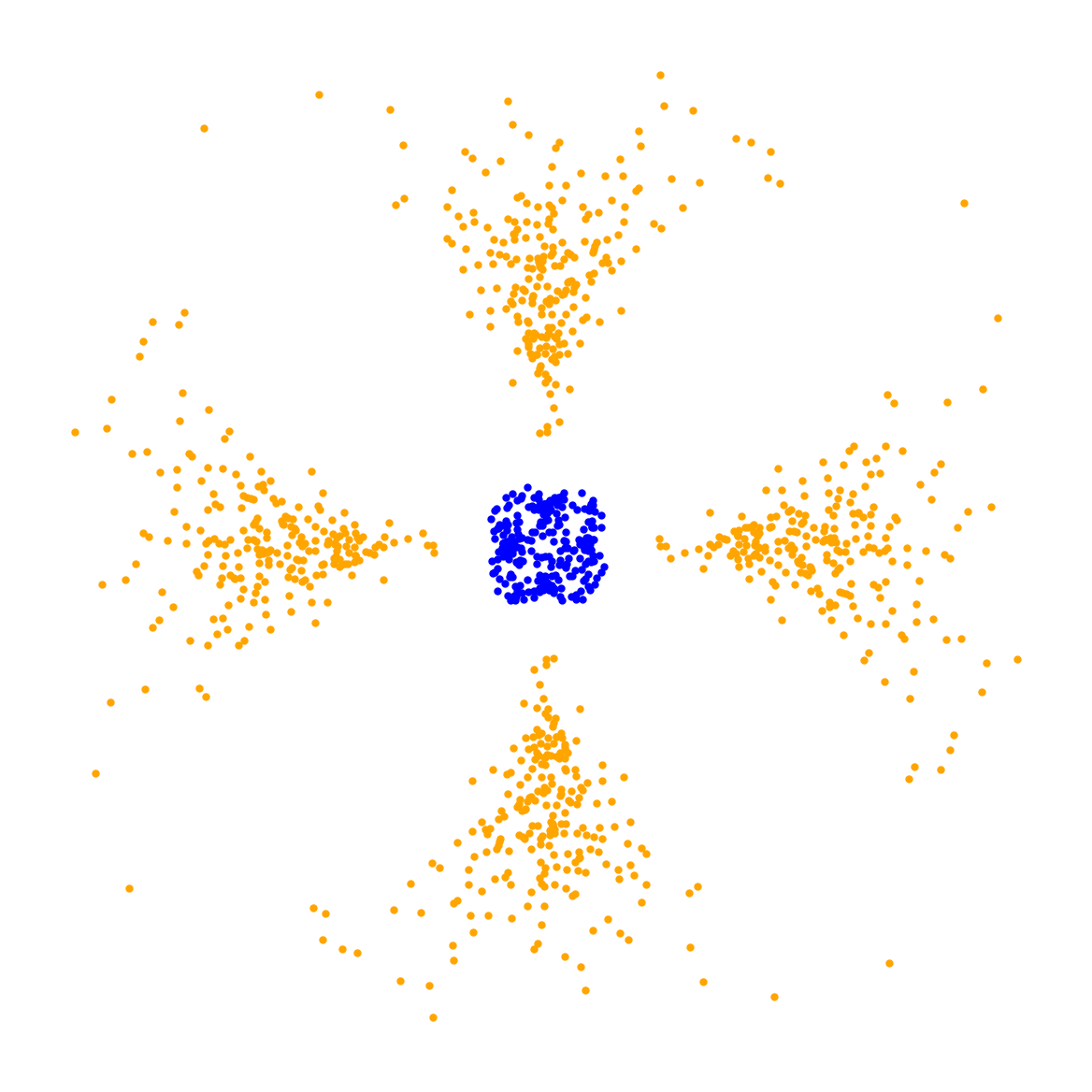}
    \end{subfigure}%
    \begin{subfigure}[t]{.14\textwidth}
        \includegraphics[width=\linewidth, valign=c]{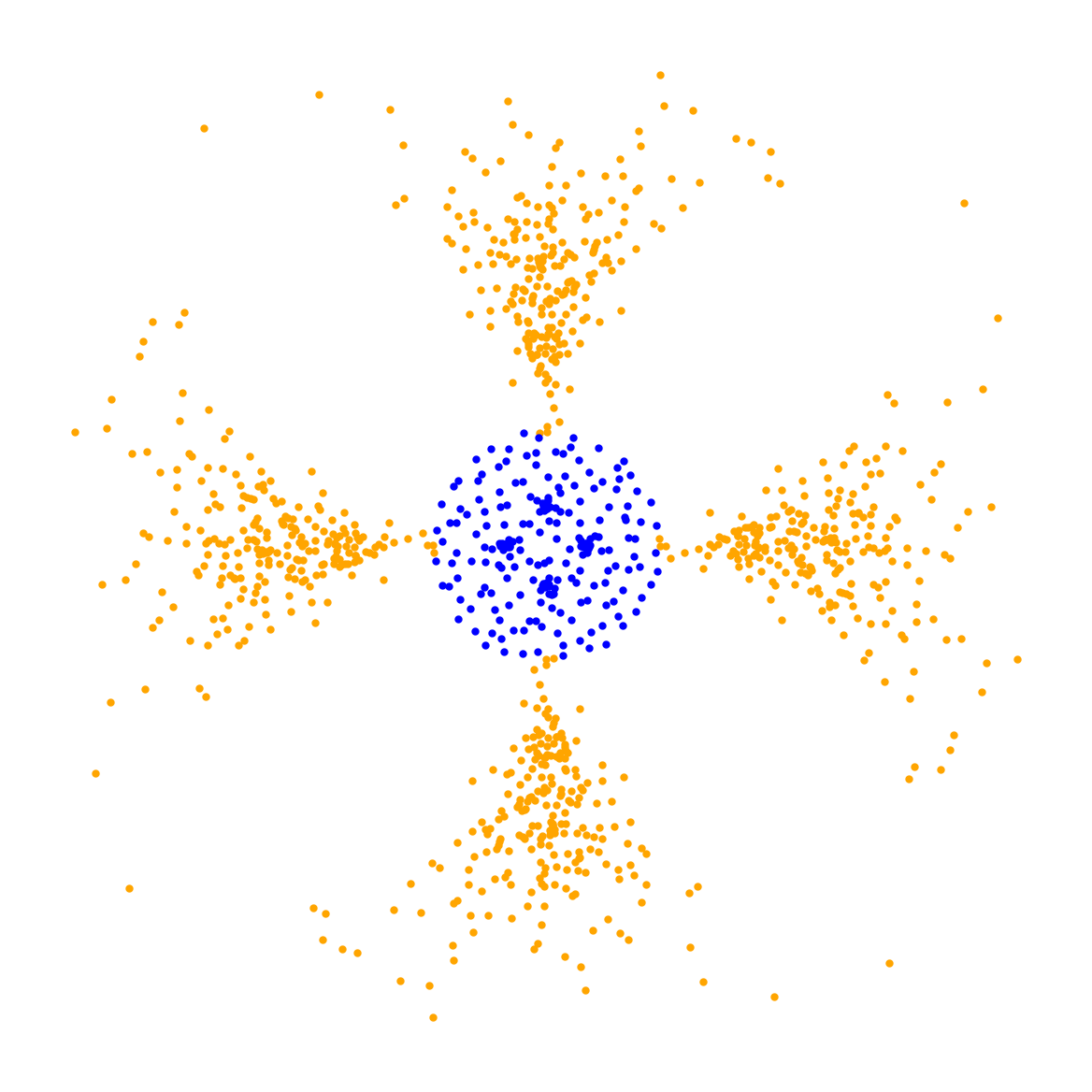}
    \end{subfigure}%
    \begin{subfigure}[t]{.14\textwidth}
        \includegraphics[width=\linewidth, valign=c]{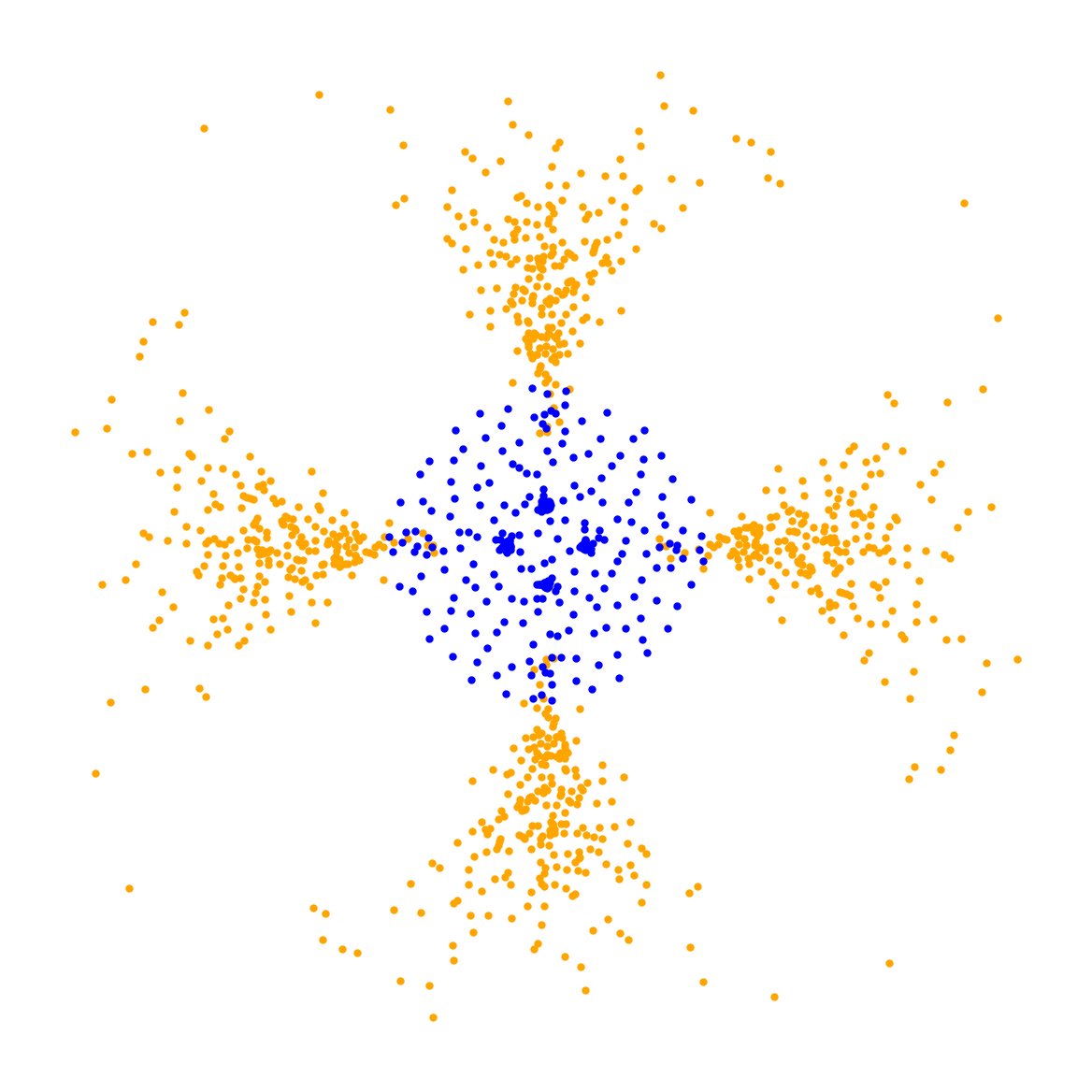}
    \end{subfigure}%
    \begin{subfigure}[t]{.14\textwidth}
        \includegraphics[width=\linewidth, valign=c]{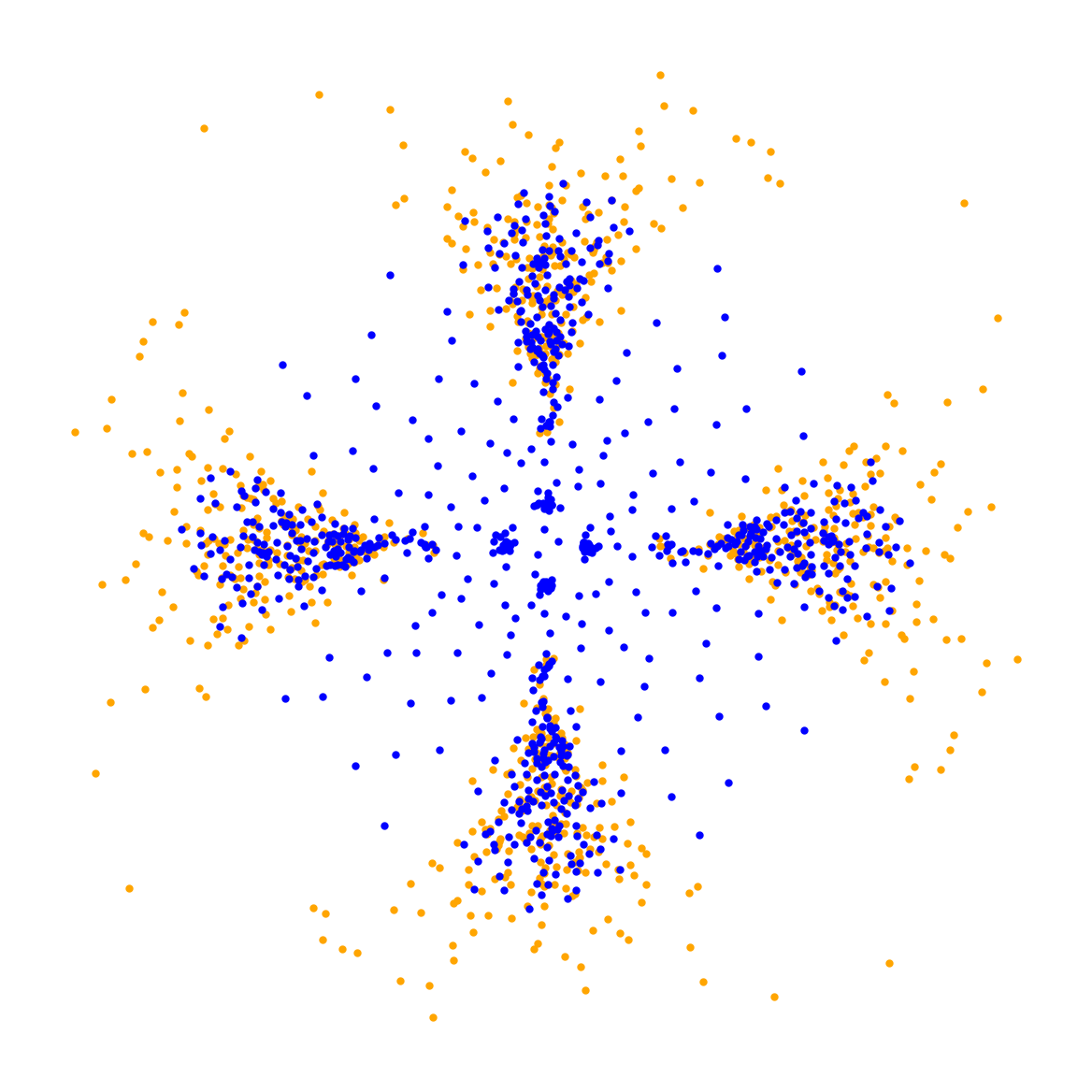}
    \end{subfigure}%
    \begin{subfigure}[t]{.14\textwidth}
        \includegraphics[width=\linewidth, valign=c]{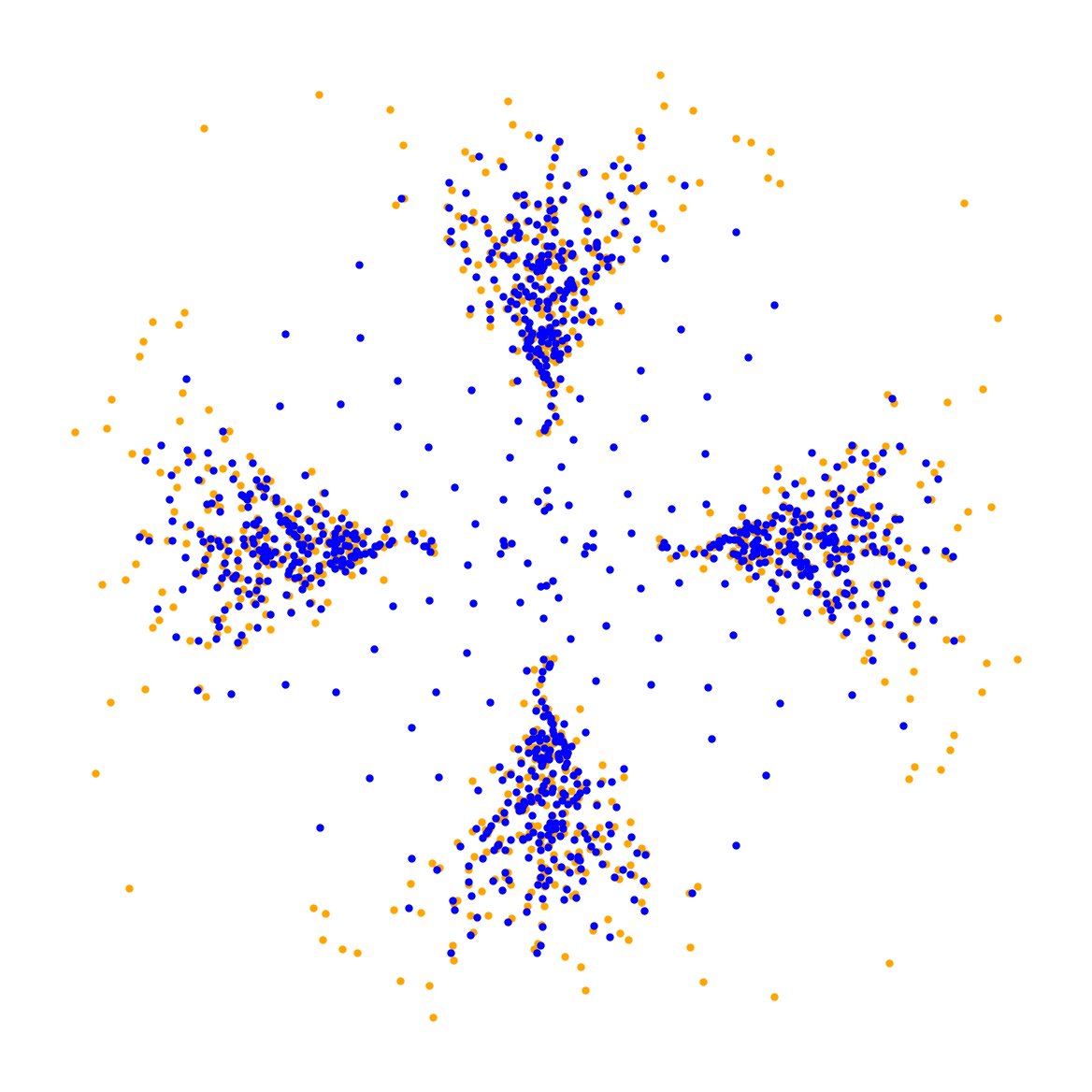}
    \end{subfigure}%
    \begin{subfigure}[t]{.14\textwidth}
        \includegraphics[width=\linewidth, valign=c]{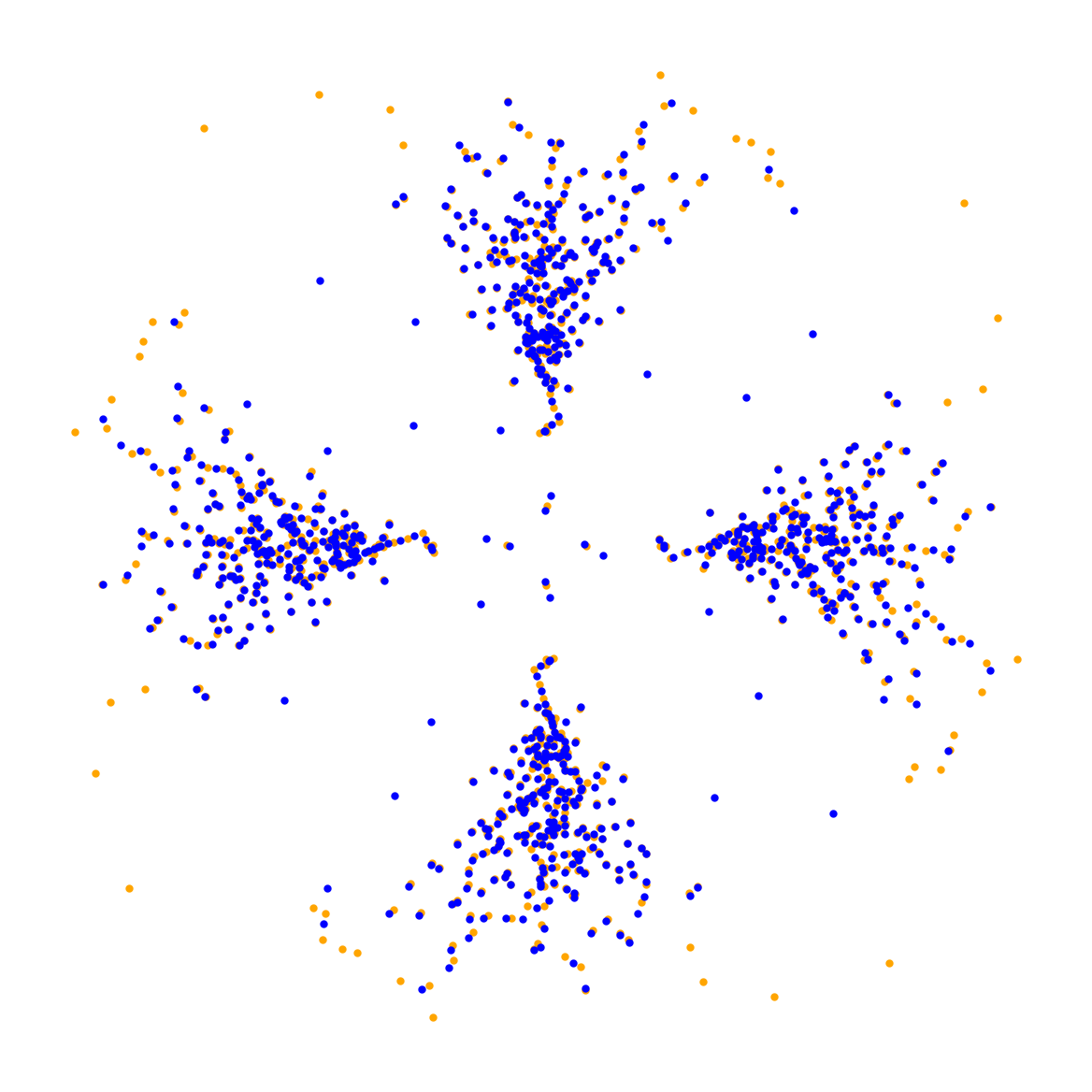}
    \end{subfigure} \\ %
    \rotatebox{90}{
        \begin{minipage}{.05\textwidth}
        \centering 
        \small $\num{e-1}$
    \end{minipage}} \hfill
    \begin{subfigure}[t]{.14\textwidth}
        \includegraphics[width=\linewidth, valign=c]{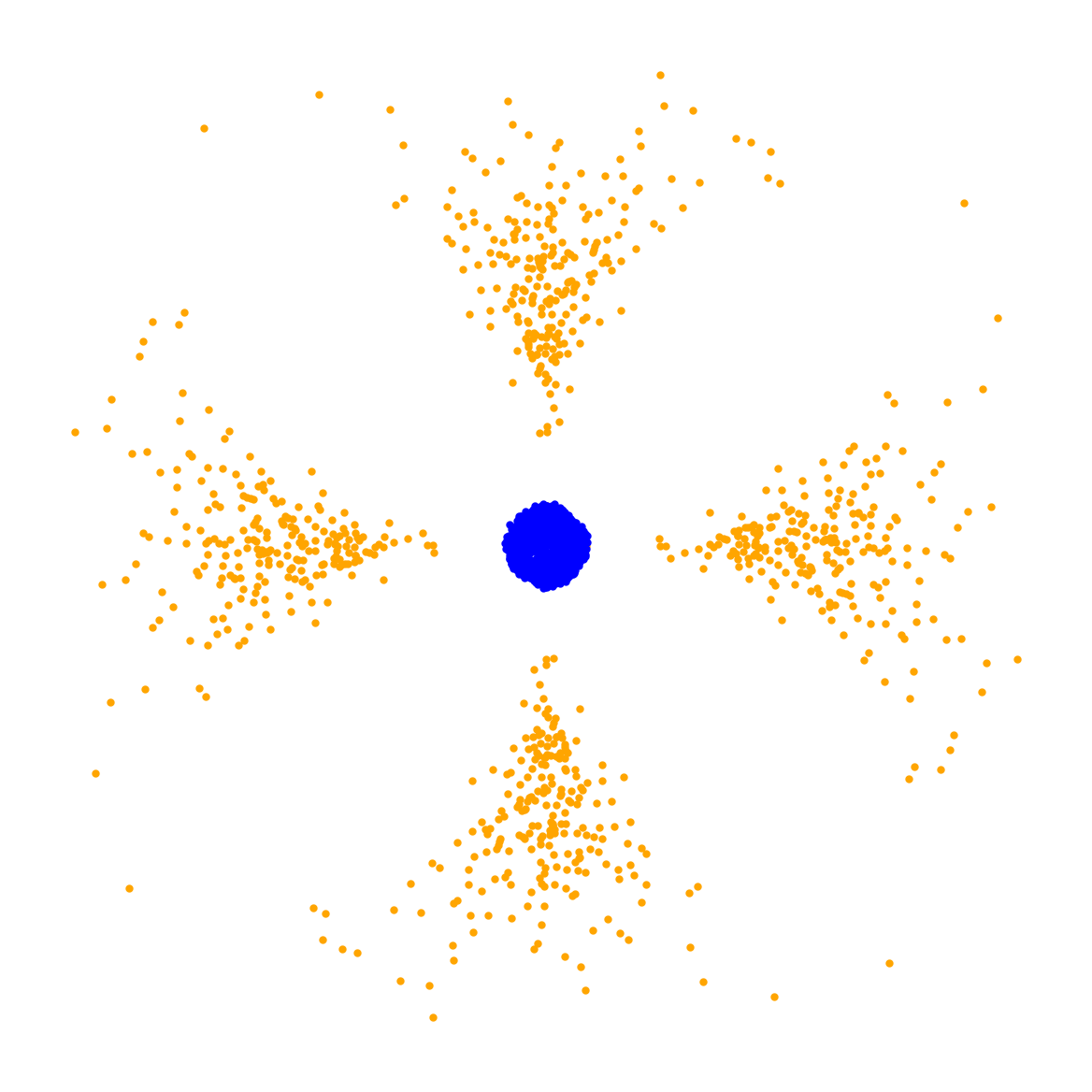}
    \end{subfigure}%
    \begin{subfigure}[t]{.14\textwidth}
        \includegraphics[width=\linewidth, valign=c]{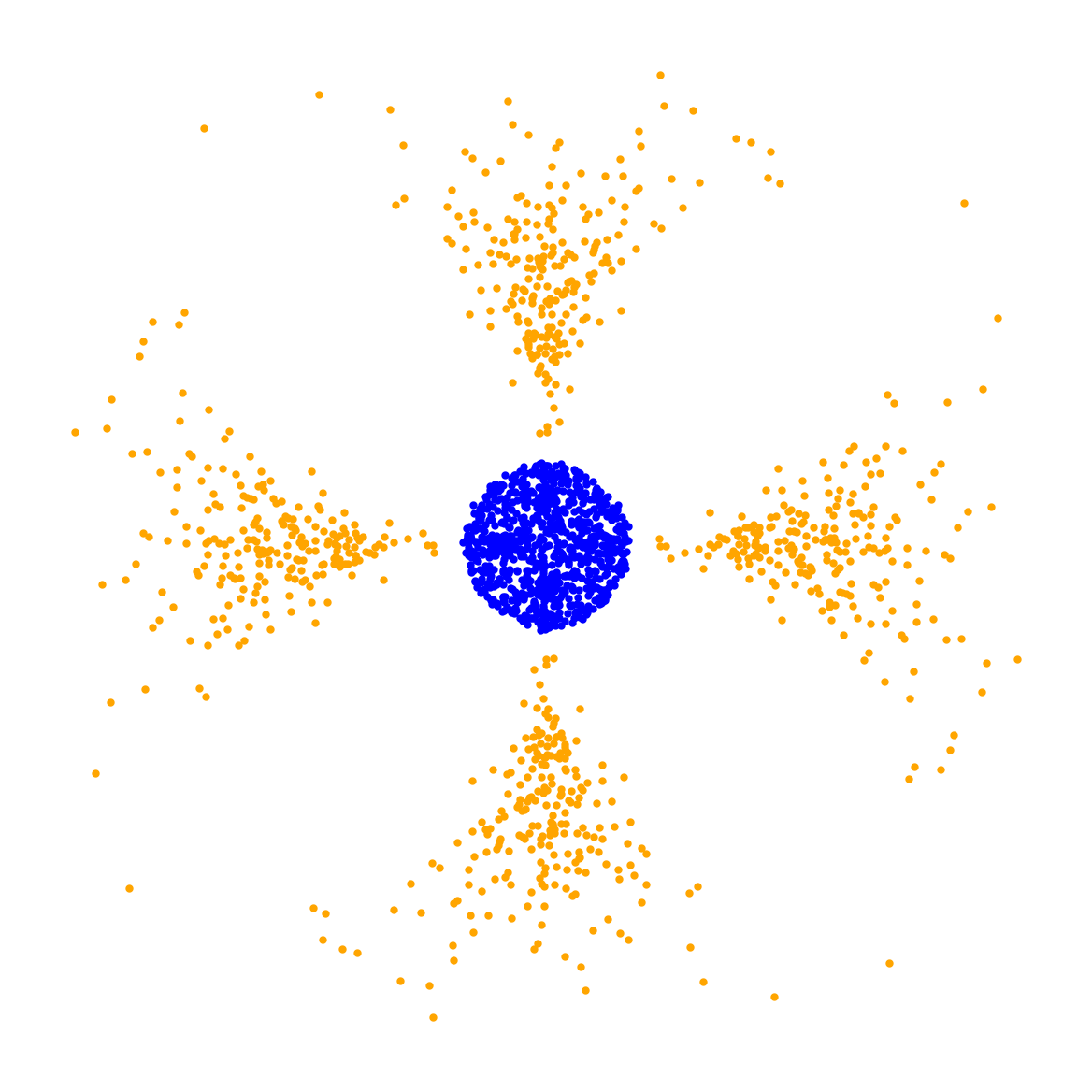}
    \end{subfigure}%
    \begin{subfigure}[t]{.14\textwidth}
        \includegraphics[width=\linewidth, valign=c]{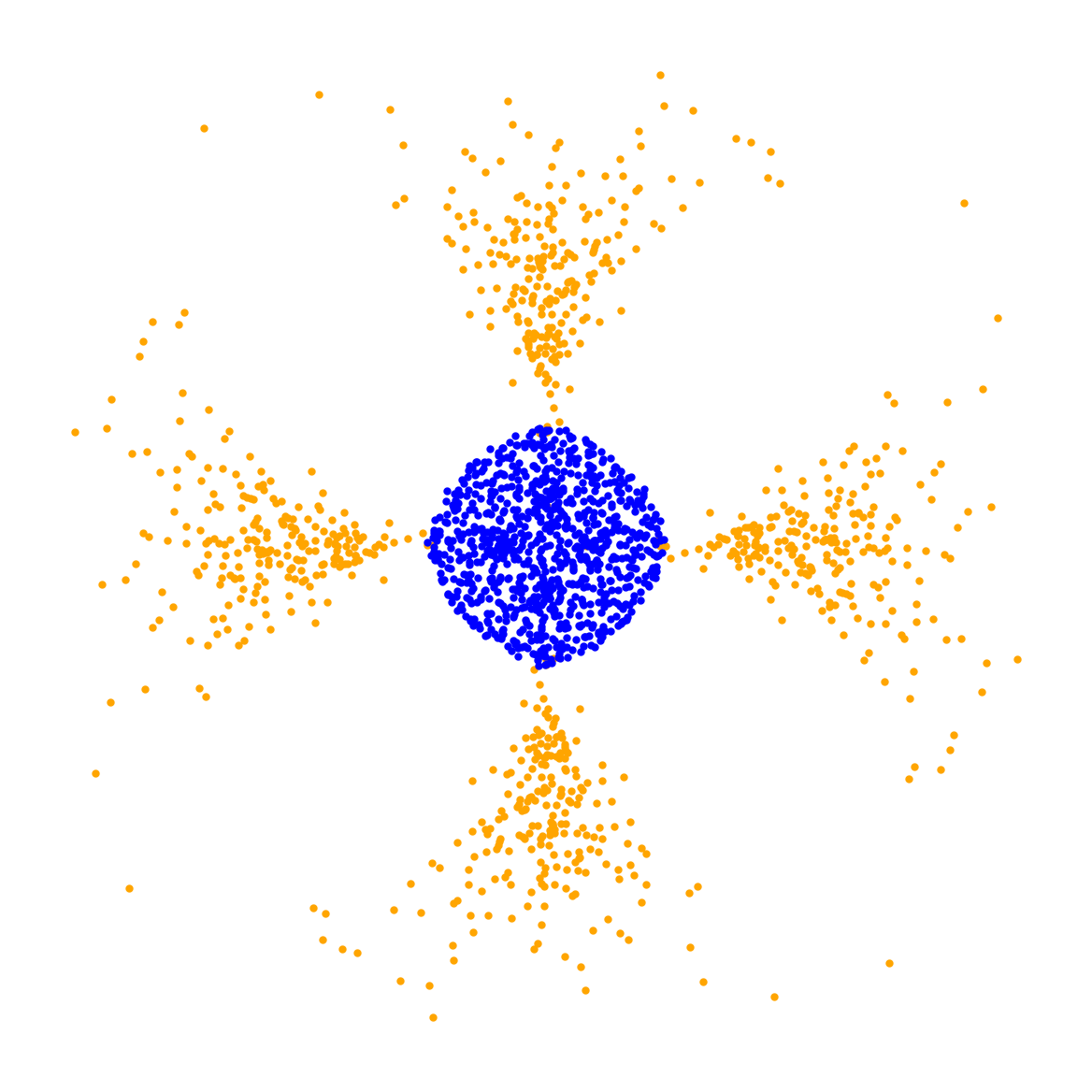}
    \end{subfigure}%
    \begin{subfigure}[t]{.14\textwidth}
        \includegraphics[width=\linewidth, valign=c]{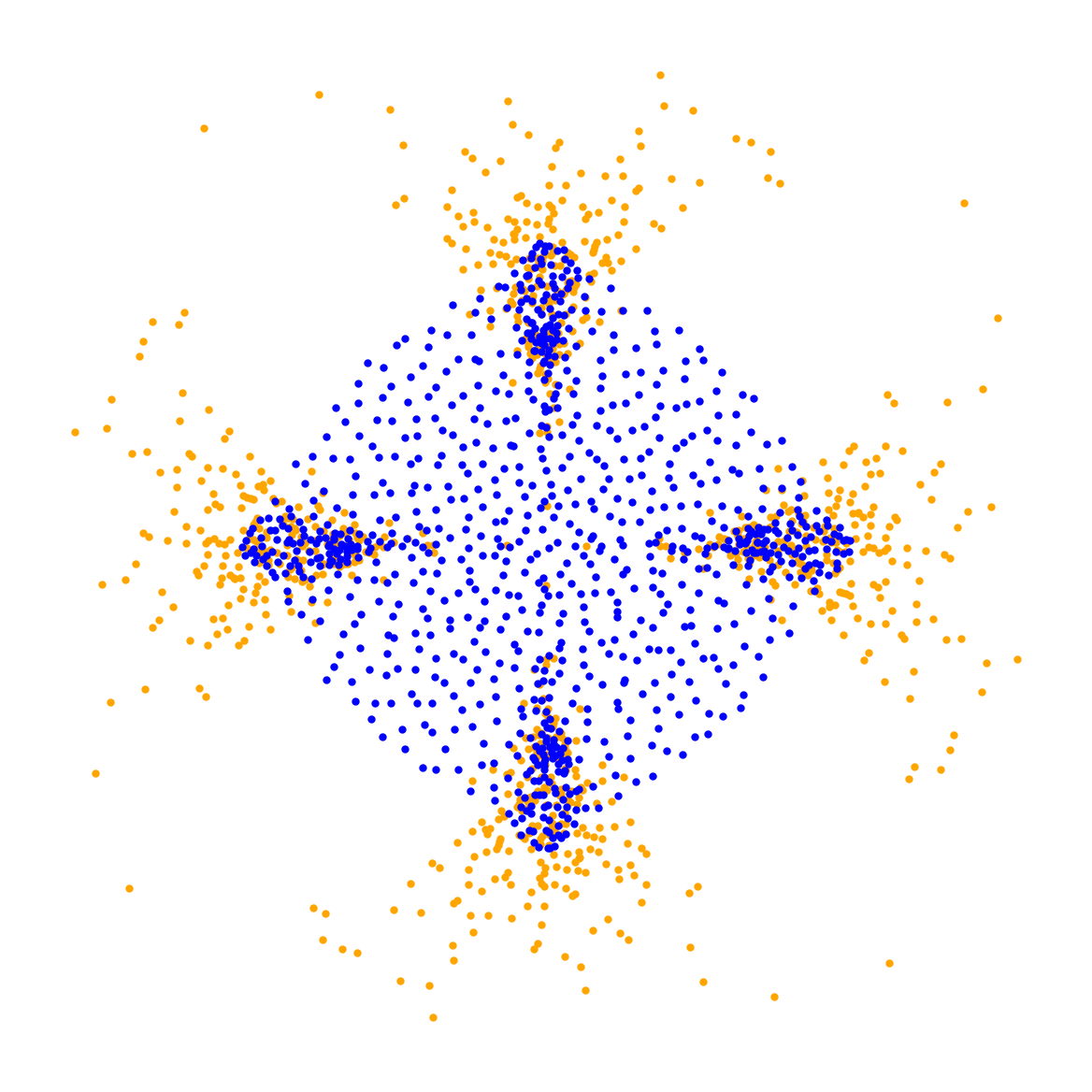}
    \end{subfigure}%
    \begin{subfigure}[t]{.14\textwidth}
        \includegraphics[width=\linewidth, valign=c]{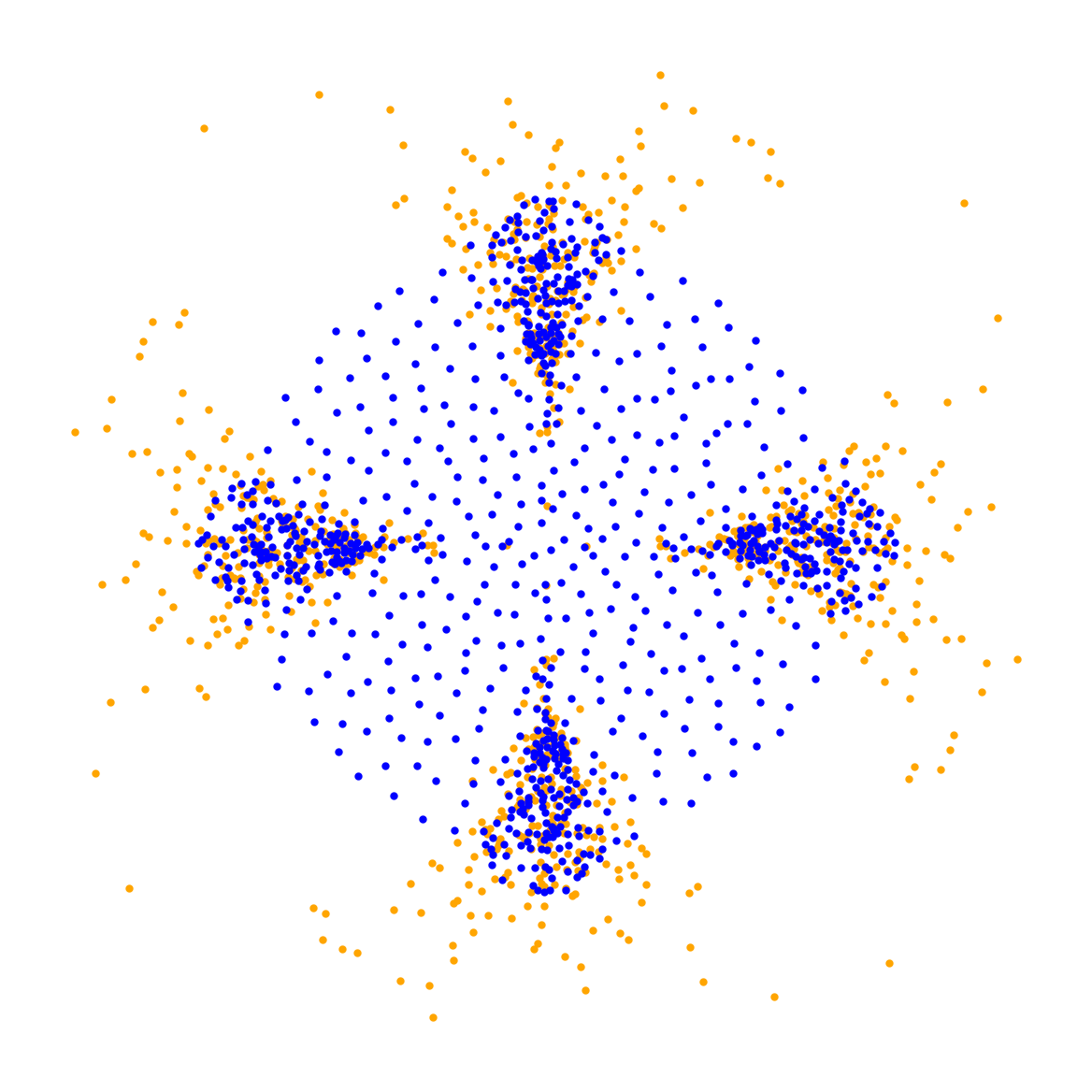}
    \end{subfigure}%
    \begin{subfigure}[t]{.14\textwidth}
        \includegraphics[width=\linewidth, valign=c]{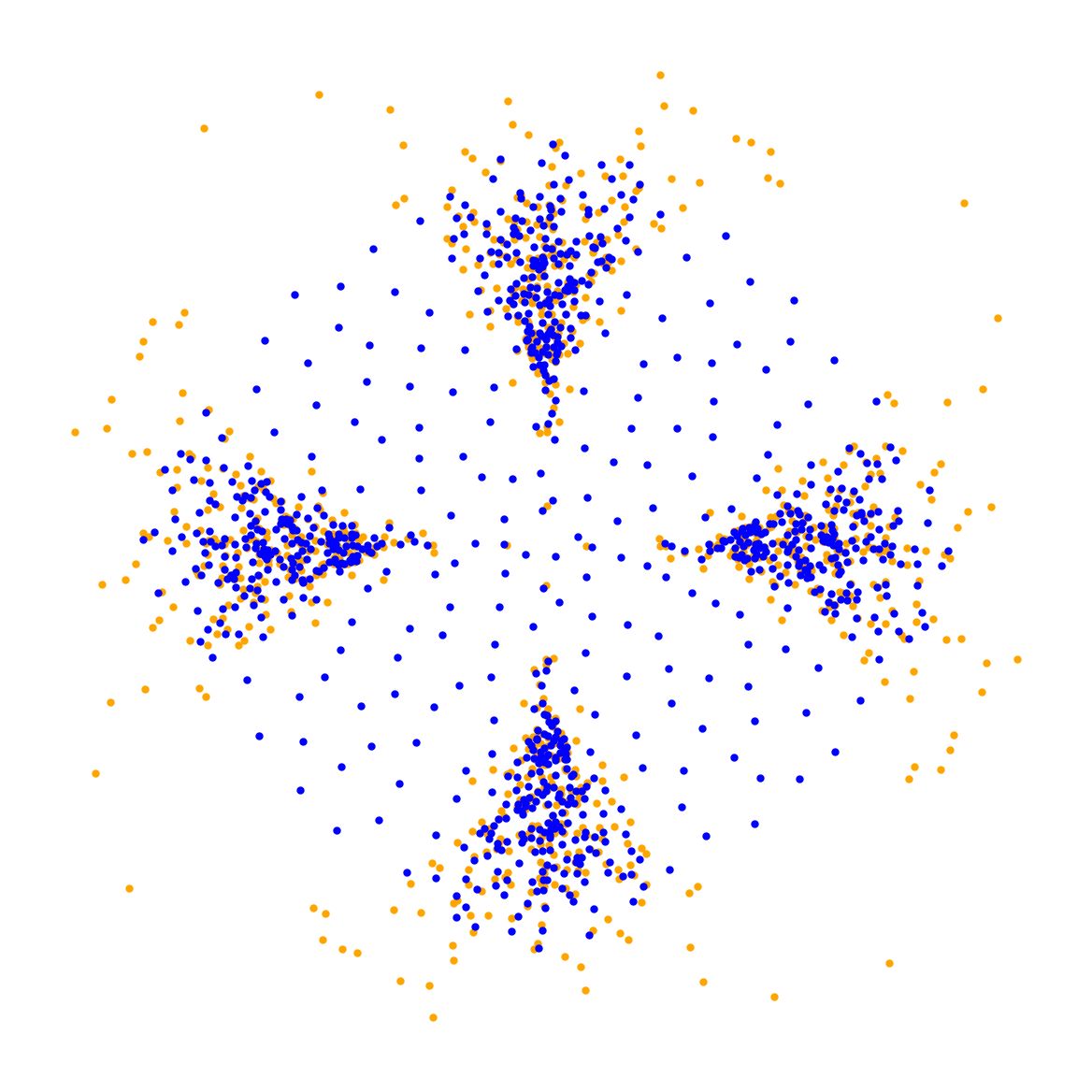}
    \end{subfigure} \\ %
    \rotatebox{90}{
        \begin{minipage}{.05\textwidth}
        \centering 
        \small $1$
    \end{minipage}} \hfill
    \begin{subfigure}[t]{.14\textwidth}
        \includegraphics[width=\linewidth, valign=c]{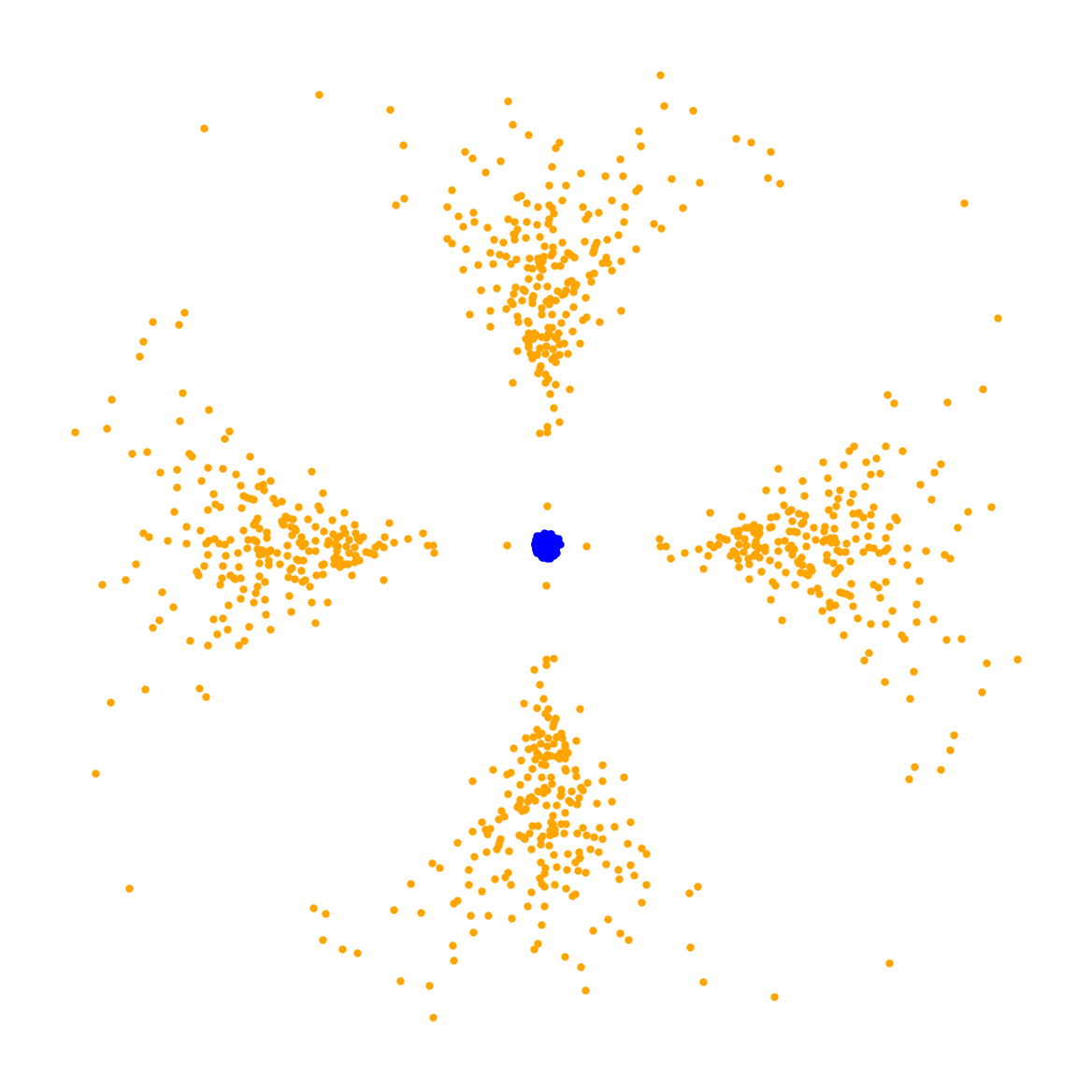}
        \caption*{$t = 0.1$}
    \end{subfigure}%
    \begin{subfigure}[t]{.14\textwidth}
        \includegraphics[width=\linewidth, valign=c]{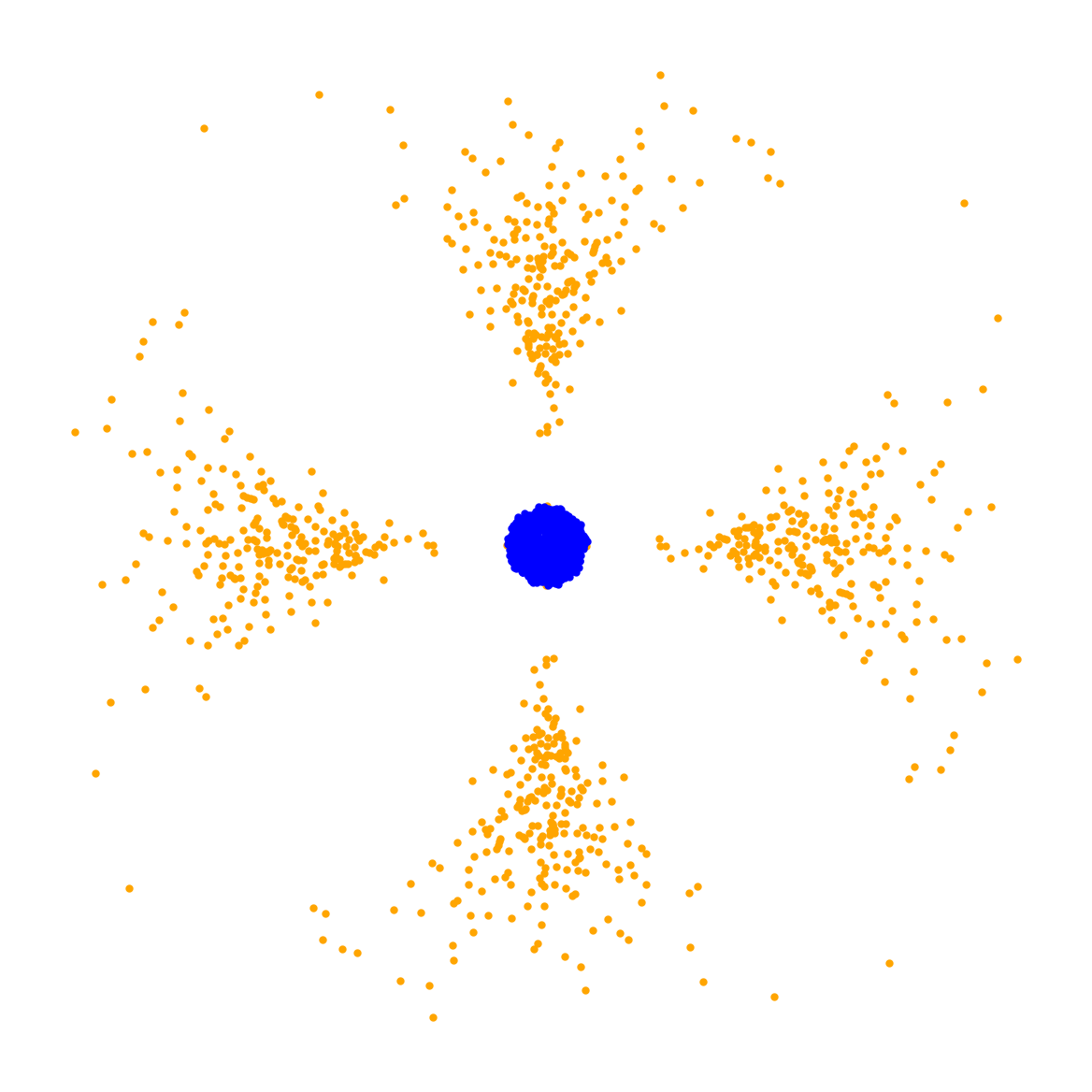}
        \caption*{$t = 0.5$}
    \end{subfigure}%
    \begin{subfigure}[t]{.14\textwidth}
        \includegraphics[width=\linewidth, valign=c]{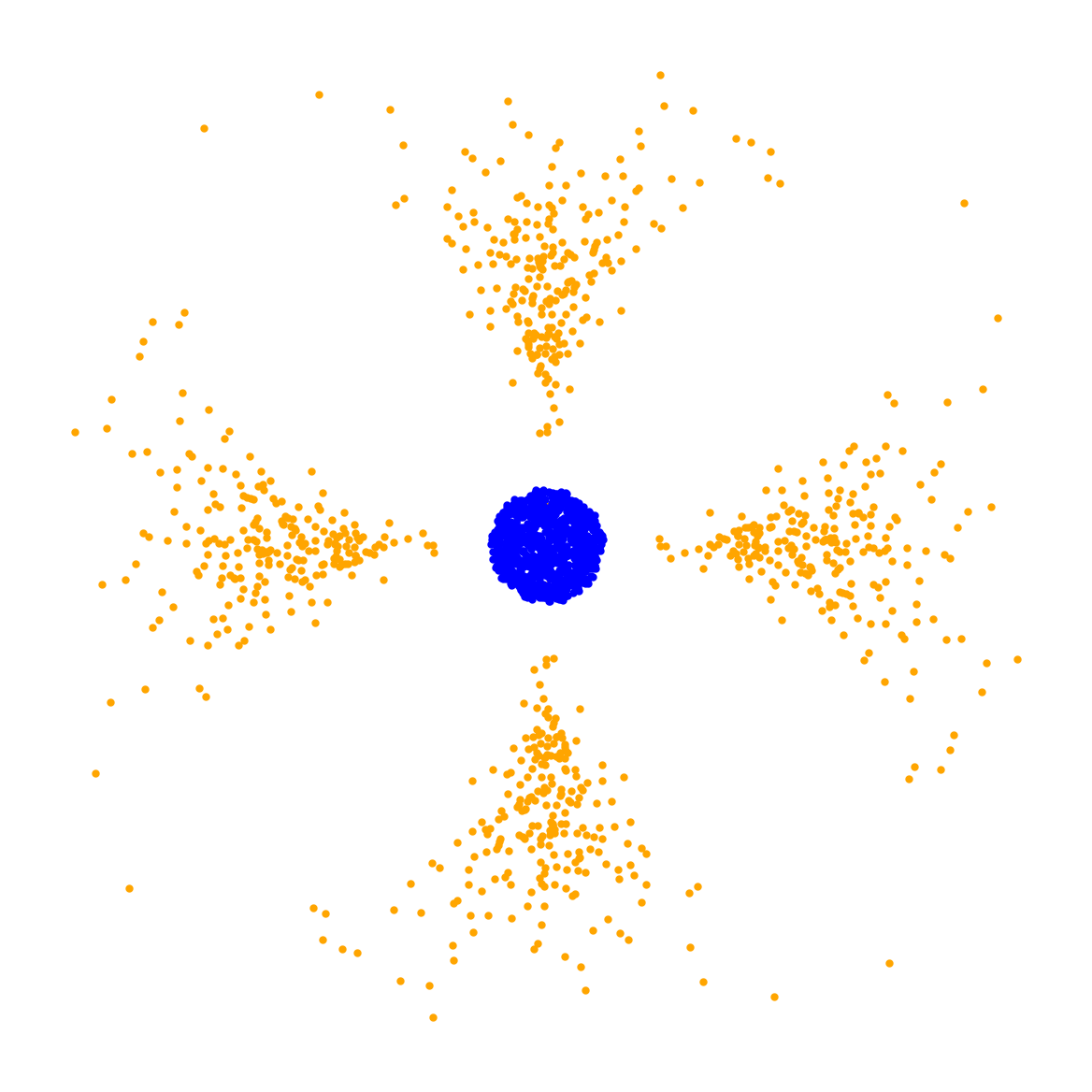}
        \caption*{$t = 1$}
    \end{subfigure}%
    \begin{subfigure}[t]{.14\textwidth}
        \includegraphics[width=\linewidth, valign=c]{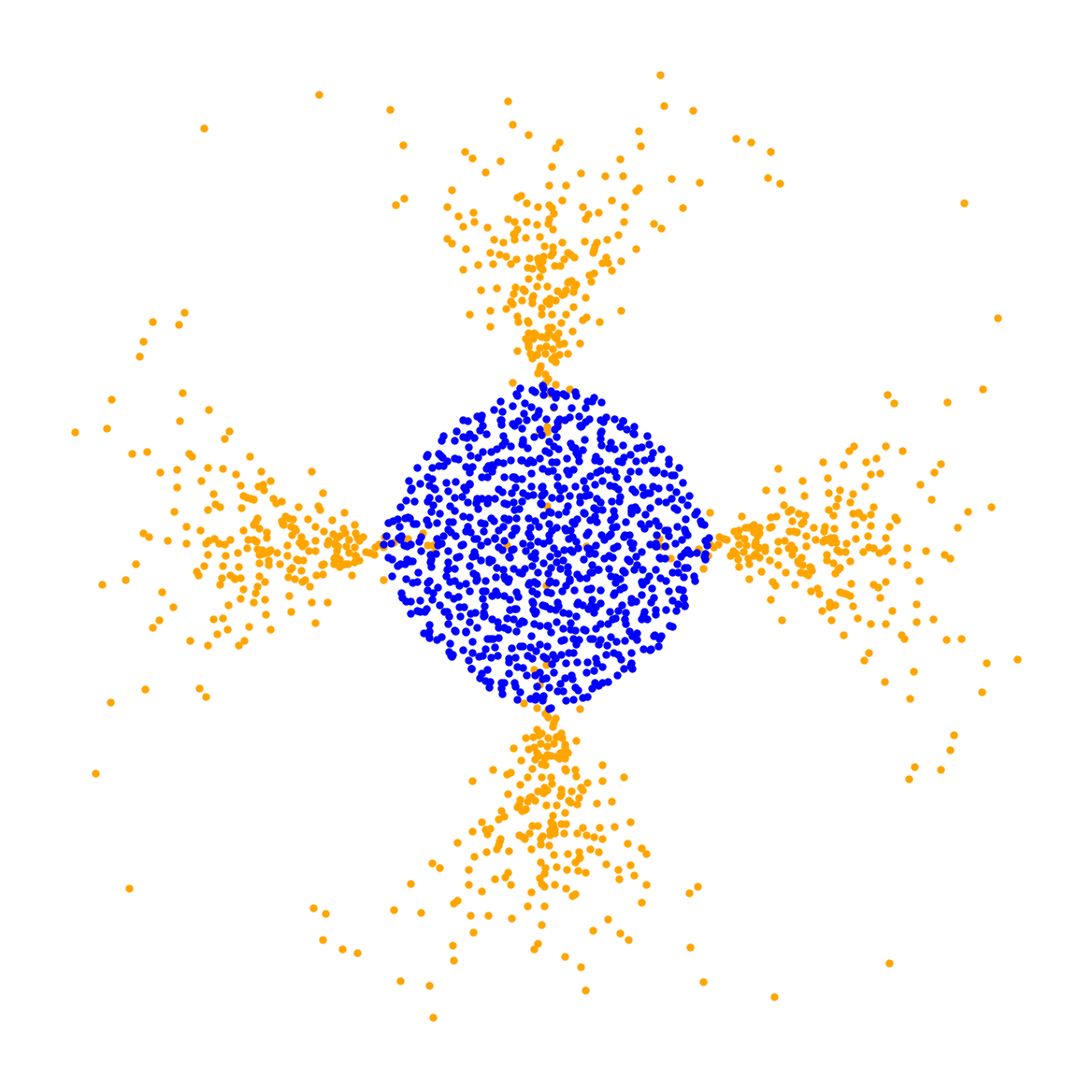}
        \caption*{$t = 10$}
    \end{subfigure}%
    \begin{subfigure}[t]{.14\textwidth}
        \includegraphics[width=\linewidth, valign=c]{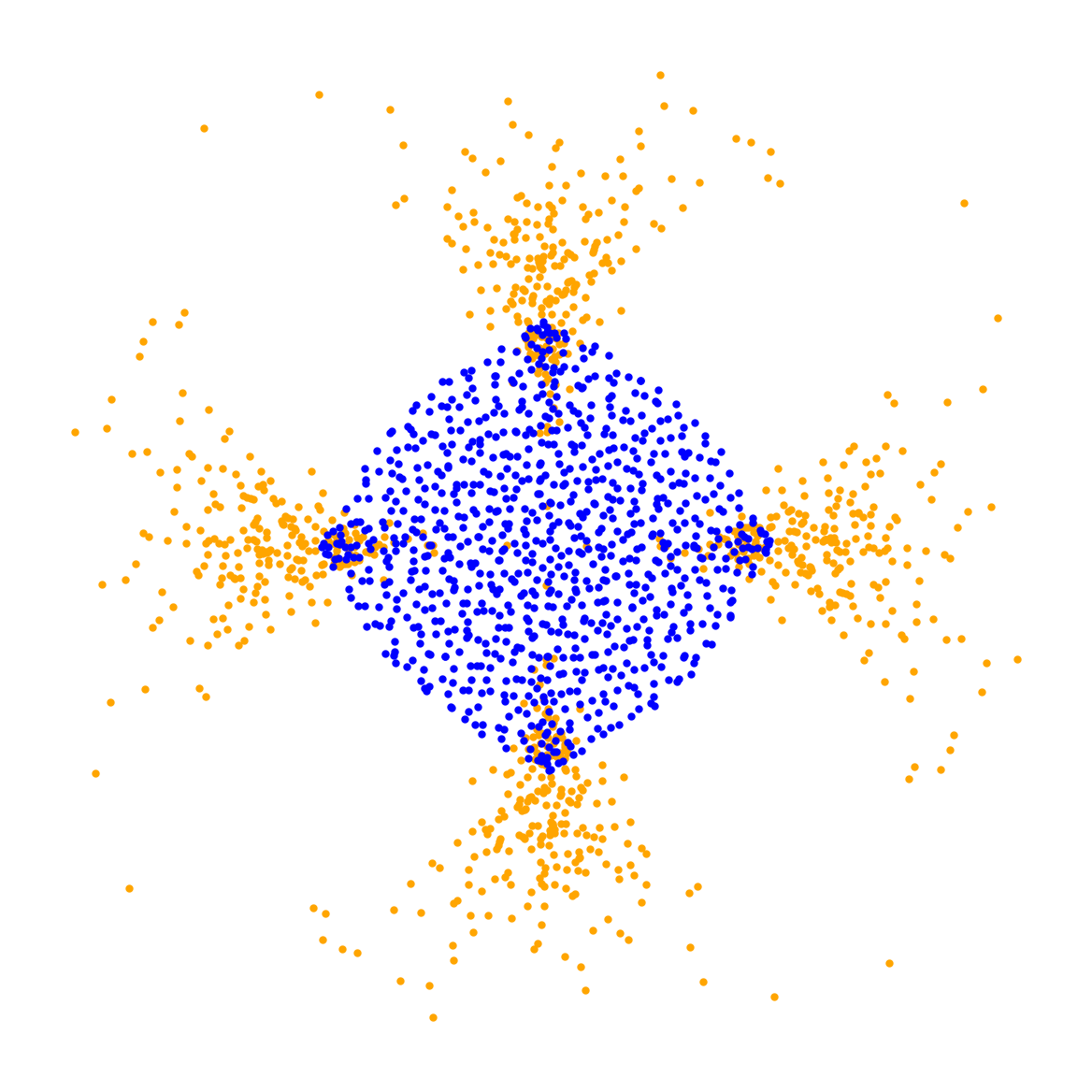}
        \caption*{$t = 20$}
    \end{subfigure}%
    \begin{subfigure}[t]{.14\textwidth}
        \includegraphics[width=\linewidth, valign=c]{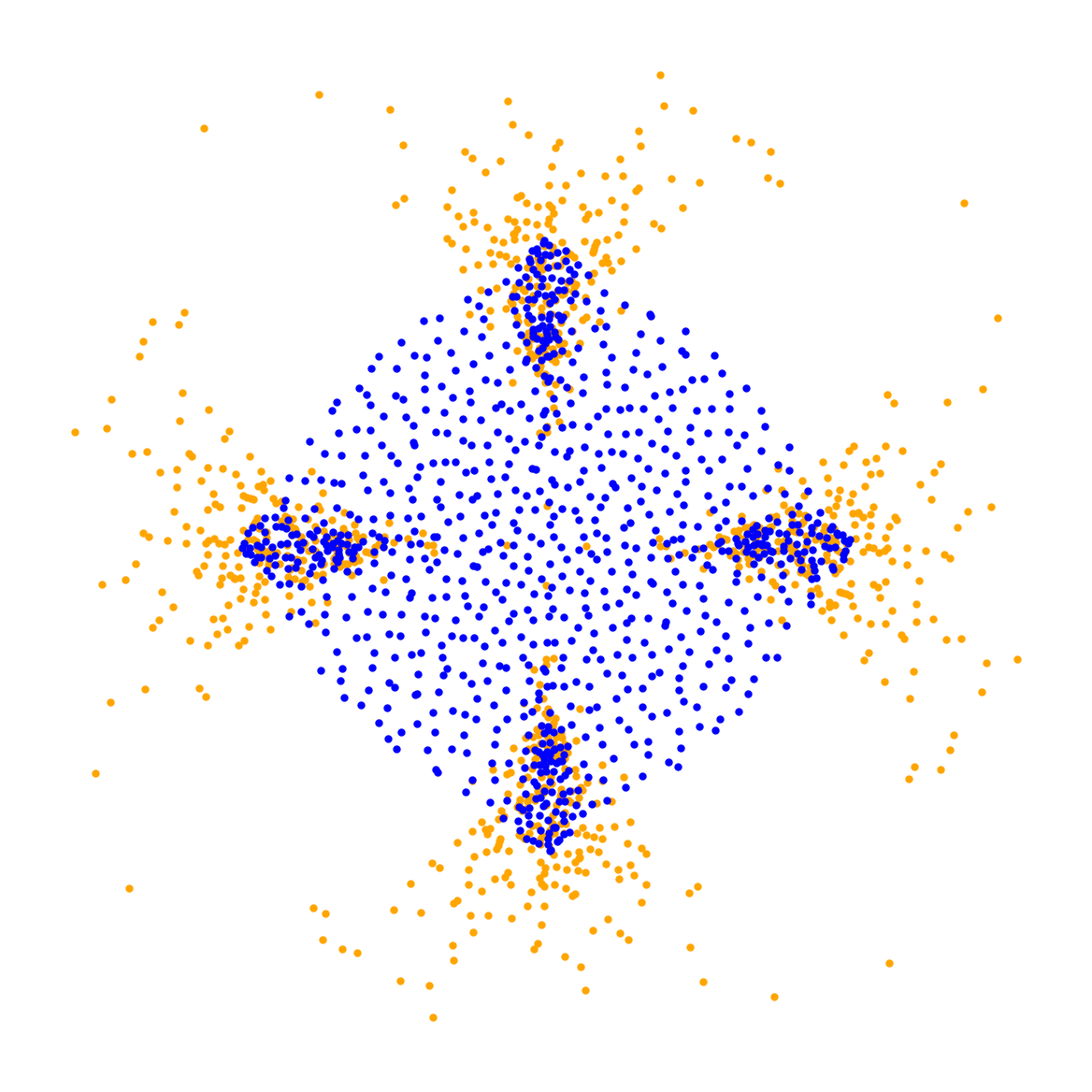}
        \caption*{$t = 50$}
    \end{subfigure} \\ %
    \caption{Ablation study for the parameter $\lambda$, illustrated for the Jeffreys divergence.}
    \label{fig:ablationCirclesCompact2Lambda}
\end{figure}
\begin{figure}[t]
    \centering
    \includegraphics[width=.45\textwidth]{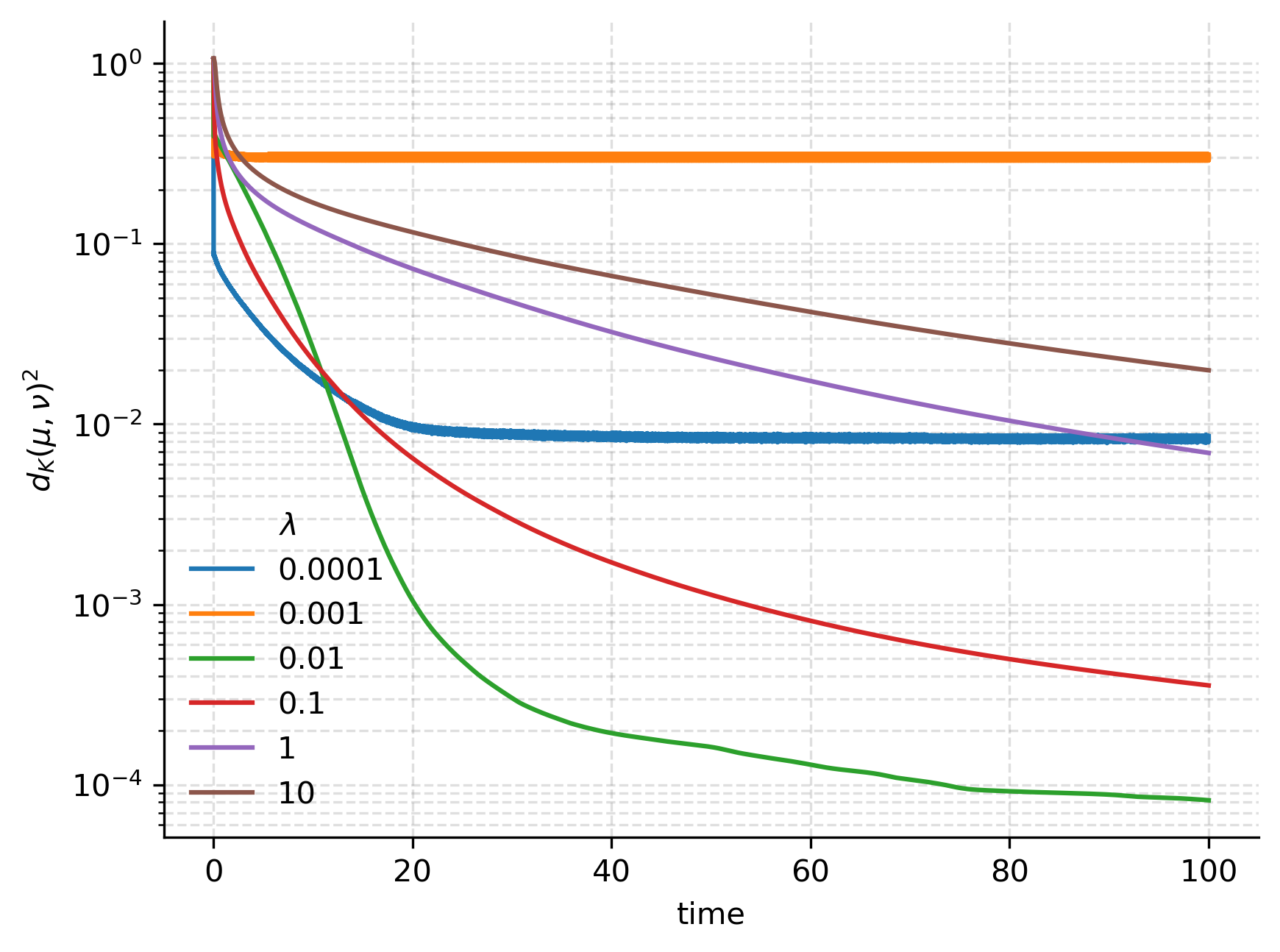}
    \includegraphics[width=.45\textwidth]{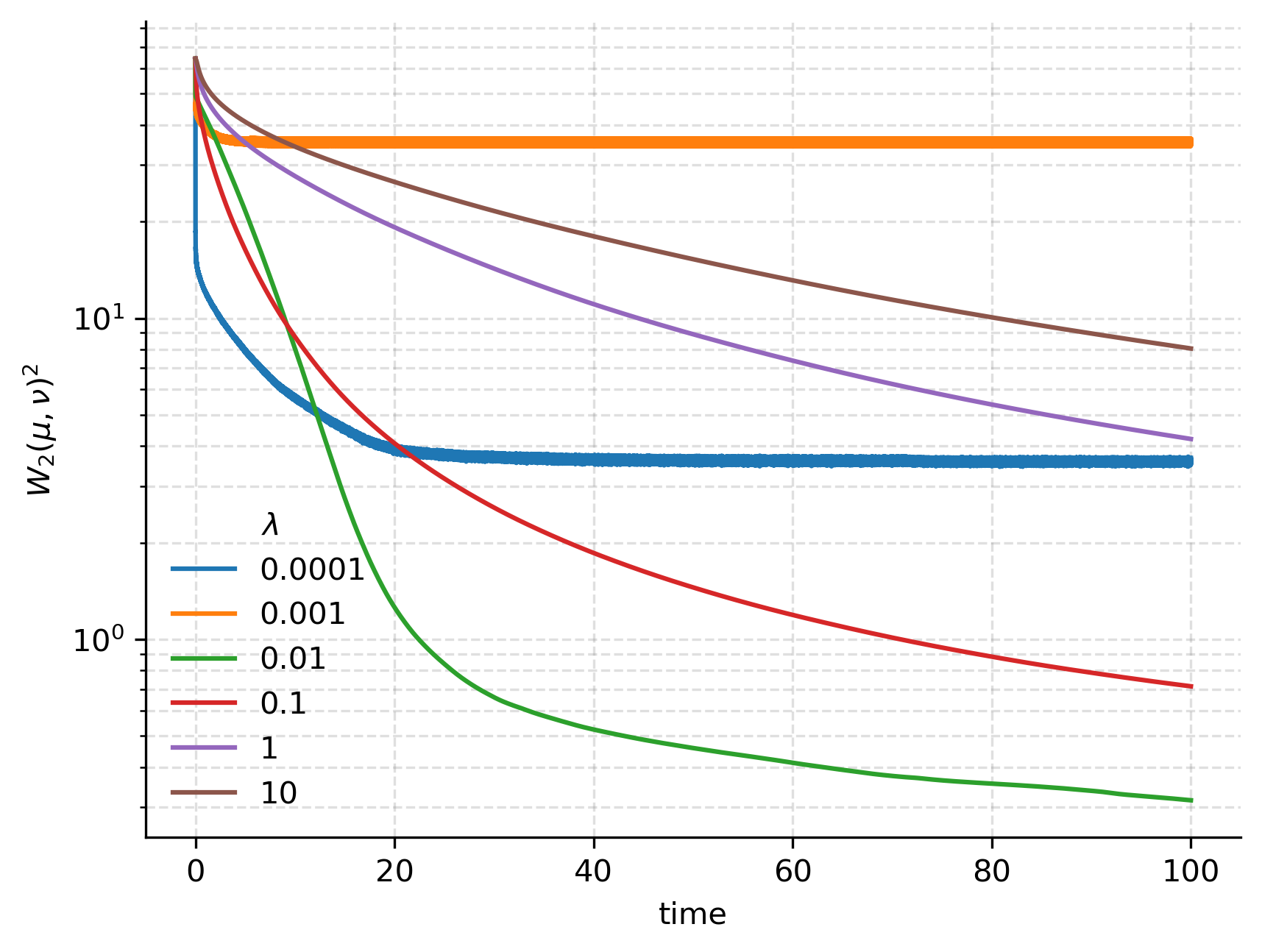}
    \caption{Comparison of the squared MMD (\textbf{left}) and $W_2$ distance (\textbf{right}) for Figure~\ref{fig:ablationCirclesCompact2Lambda}.}
    \label{fig:ablationCirclesJeffreyLambdaGraphs}
\end{figure}
\begin{figure}[t]
    \centering
    \includegraphics[width=.45\textwidth]{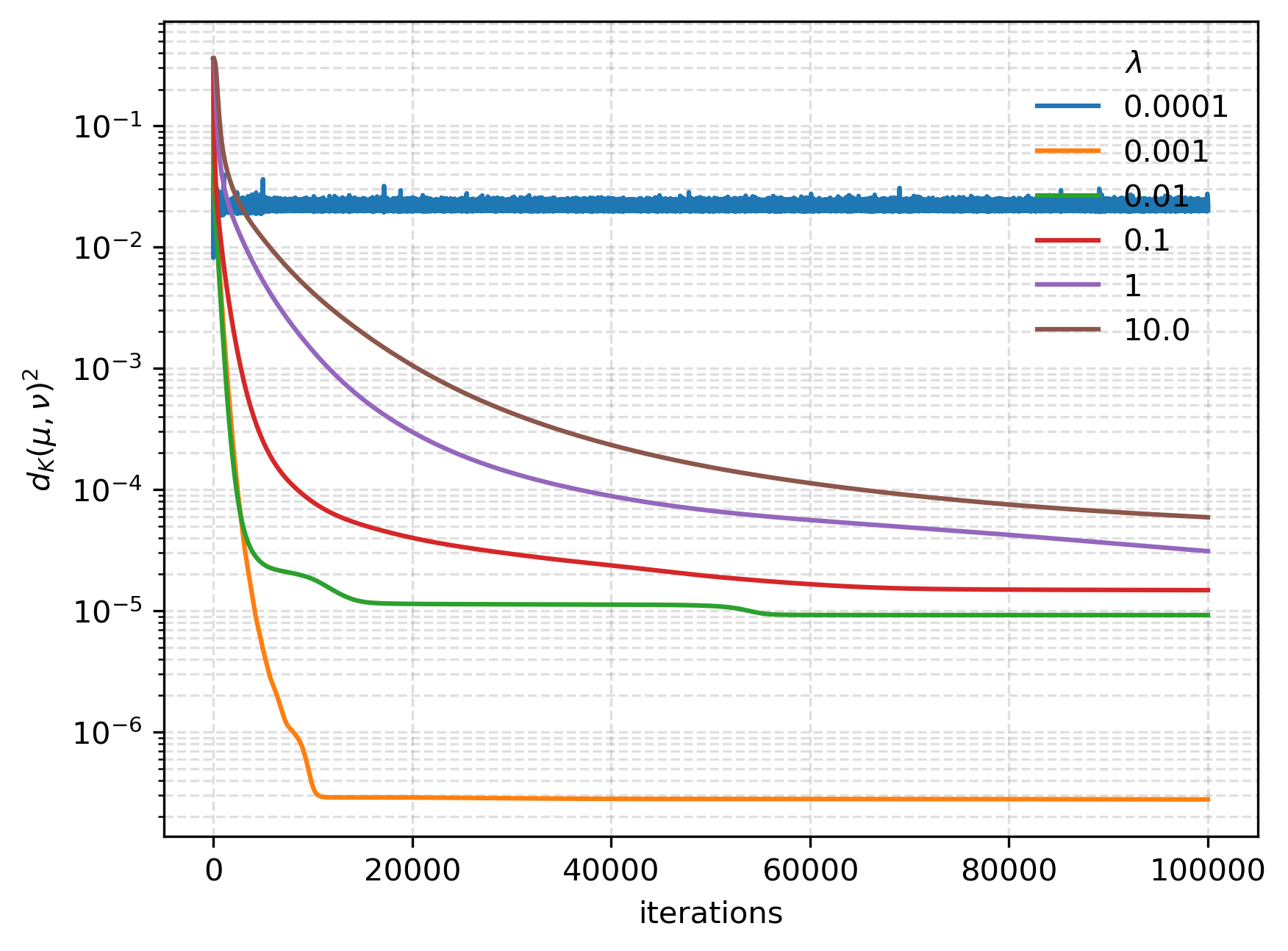}
    \includegraphics[width=.45\textwidth]{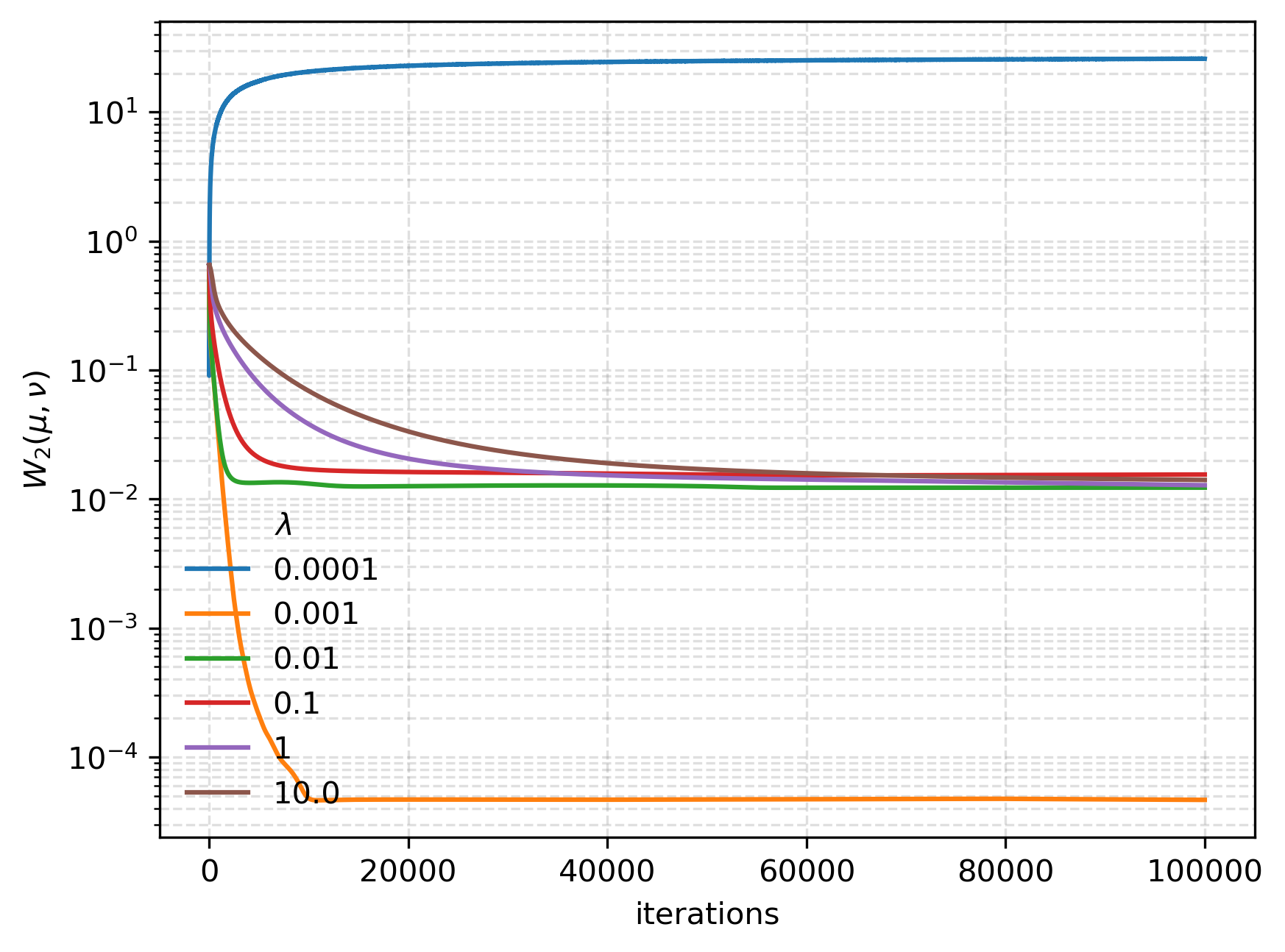}
    \caption{Comparing the squared MMD  (\textbf{left}) and $W_2$ distance (\textbf{right}) for different $\lambda$.}
    \label{fig:ablationCirclesCompact2LambdaGraphs}
    \vspace{.3cm}


\end{figure}

To find the combinations of $\lambda$ and $\tau$ that give the best results, we use the two \enquote{two bananas} target and evaluate each flow after 100 iterations of the forward Euler scheme.
Note that $\frac{\tau}{\lambda}$ is the prefactor in front of the gradient term in the update step \eqref{eq:compute2}.
Hence, it influences how much gradient information is taken into consideration when updating the particles.
\begin{figure}[t]
    \centering
    \rotatebox{90}{
        \begin{minipage}{.1\textwidth}
        \centering 
        \small $\lambda = \num{e-2}$
    \end{minipage}} \hfill
    \begin{subfigure}[t]{.24\textwidth}
        \includegraphics[width=\linewidth, valign=c]{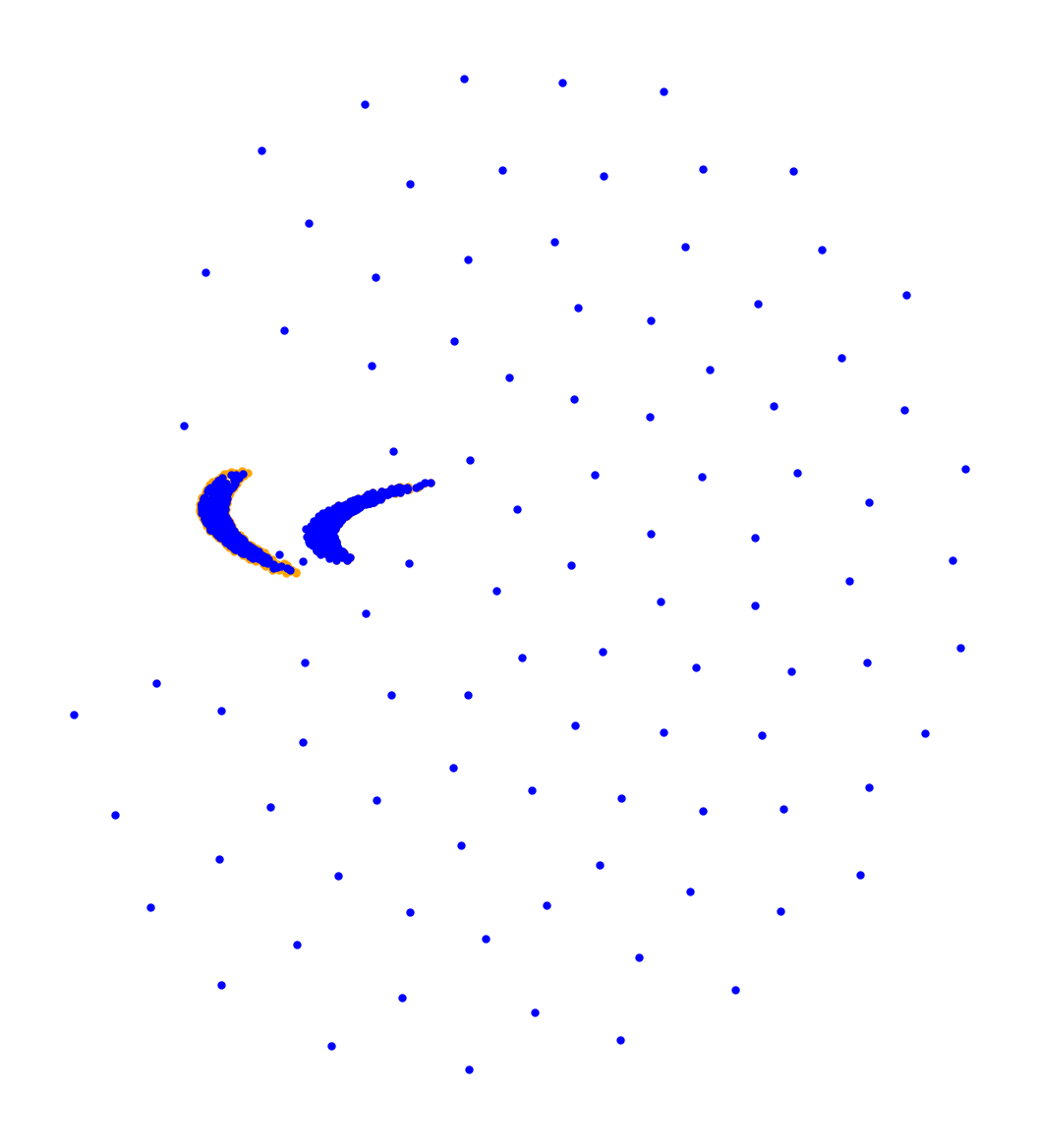}
    \end{subfigure}%
    \begin{subfigure}[t]{.24\textwidth}
        \includegraphics[width=\linewidth, valign=c]{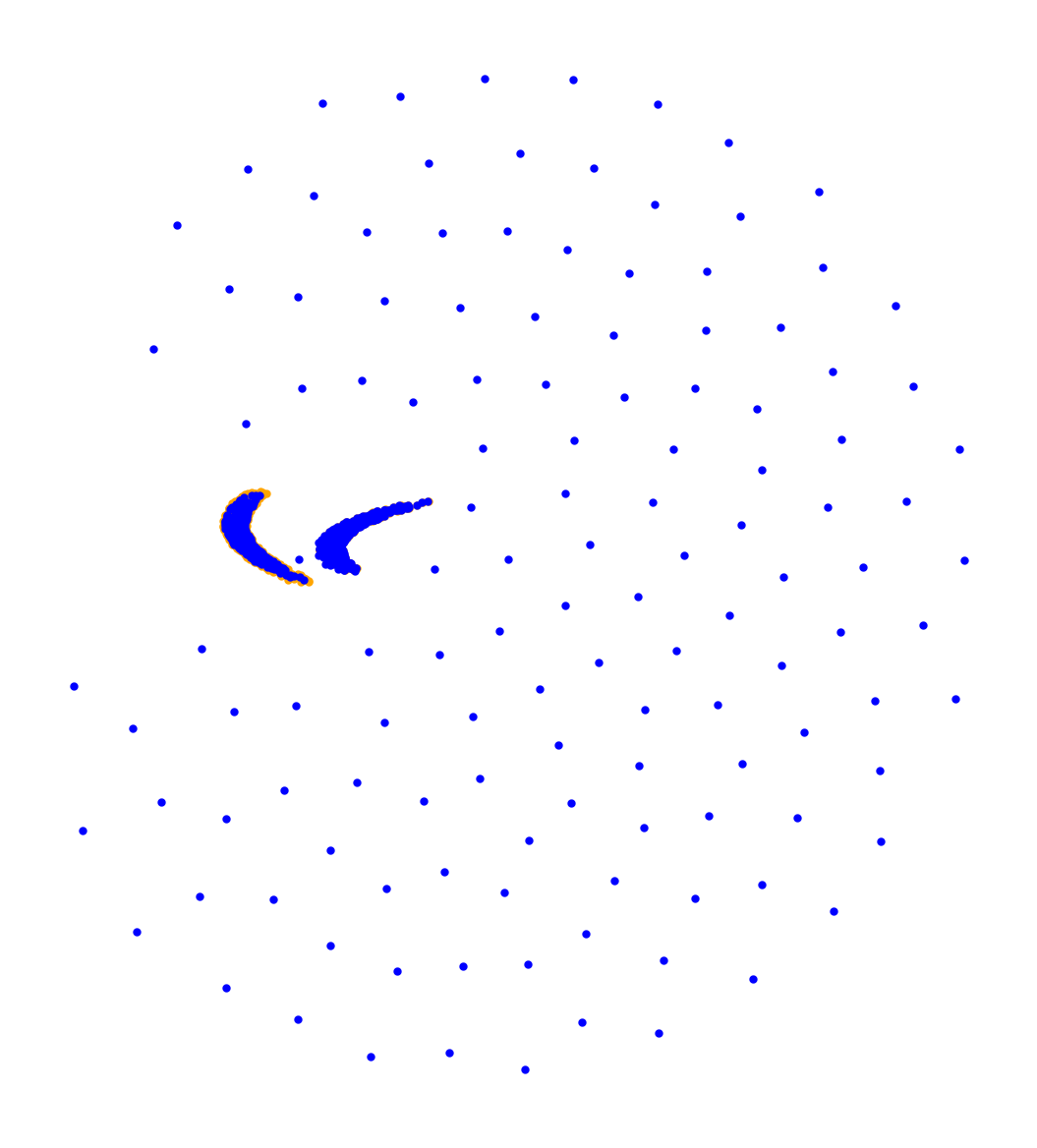}
    \end{subfigure}%
    \begin{subfigure}[t]{.24\textwidth}
        \includegraphics[width=\linewidth, valign=c]{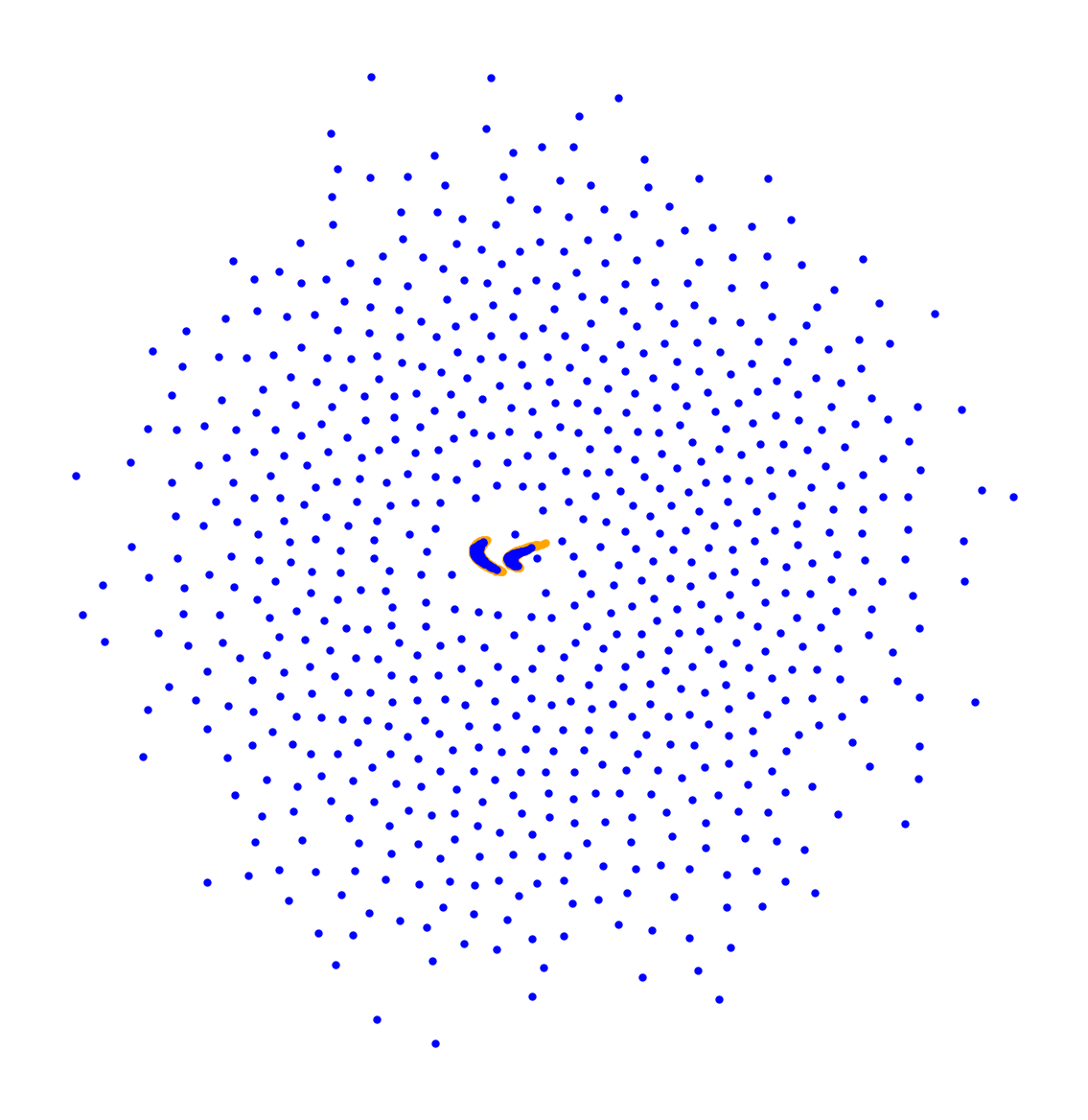}
    \end{subfigure}%
    \begin{subfigure}[t]{.24\textwidth}
        \includegraphics[width=\linewidth, valign=c]{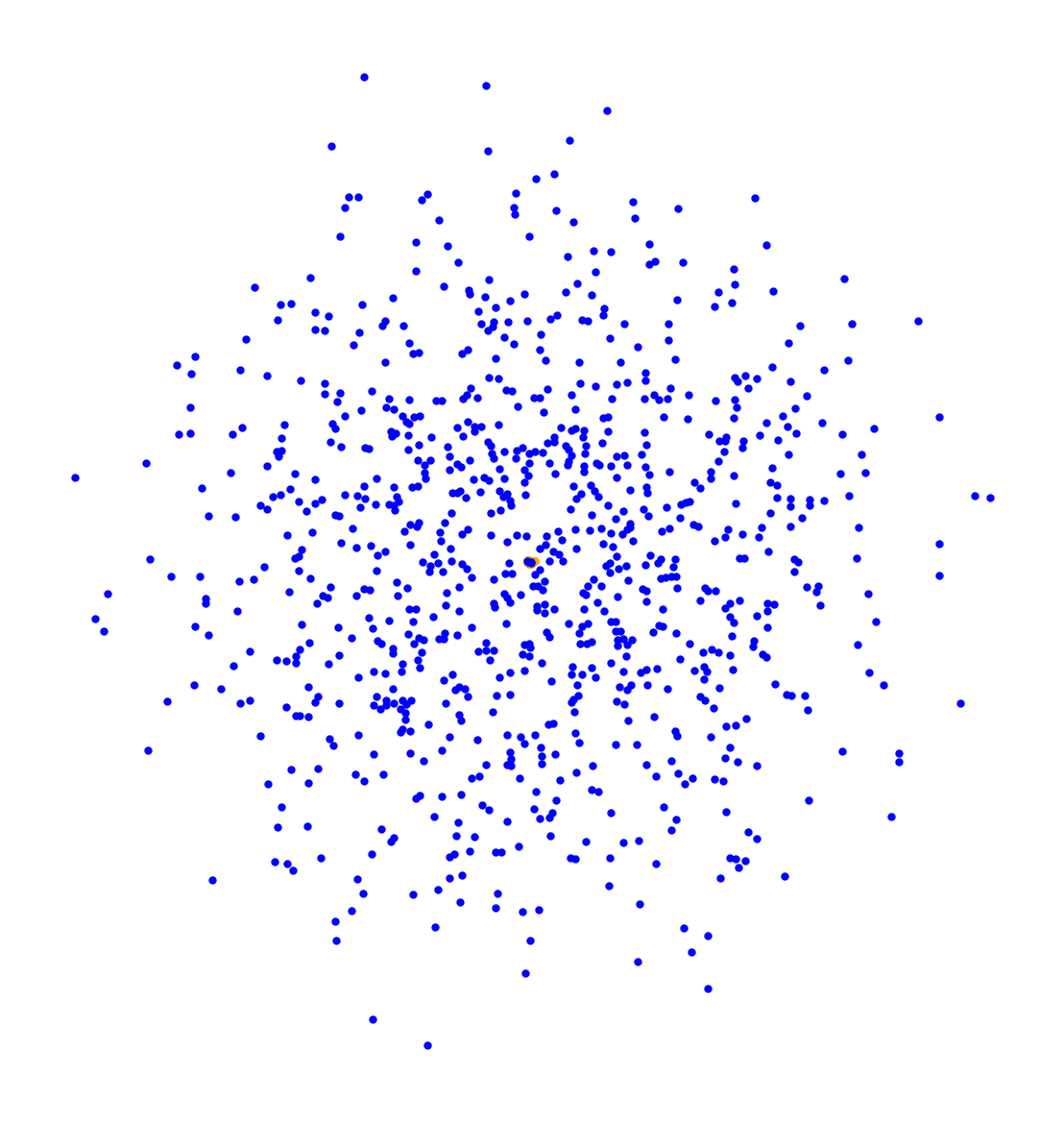}
    \end{subfigure} \\ %
    \rotatebox{90}{
        \begin{minipage}{.1\textwidth}
        \centering 
        \small $\lambda = \num{e-1}$
    \end{minipage}} \hfill
    \begin{subfigure}[t]{.24\textwidth}
        \includegraphics[width=\linewidth, valign=c]{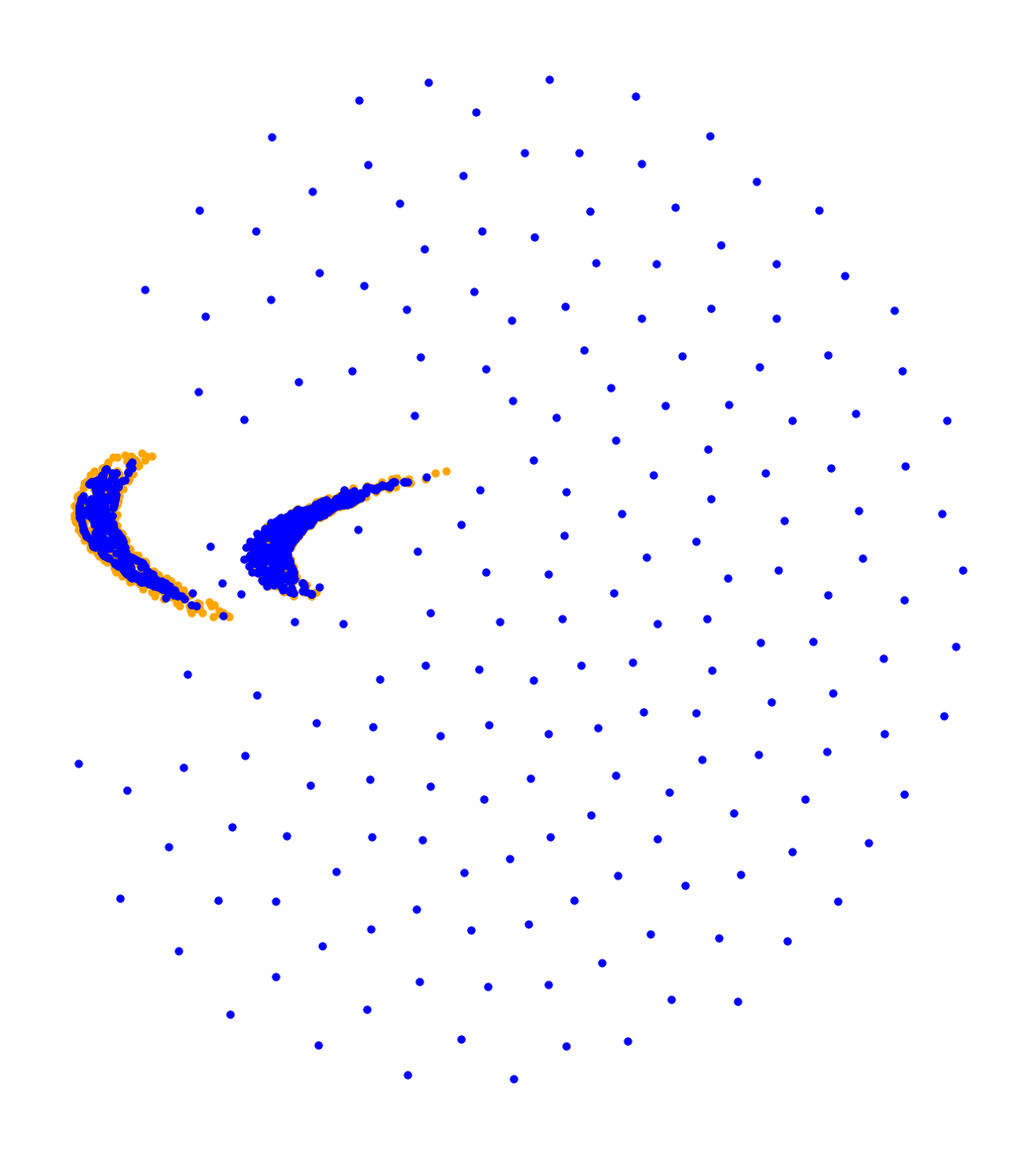}
    \end{subfigure}%
    \begin{subfigure}[t]{.24\textwidth}
        \includegraphics[width=\linewidth, valign=c]{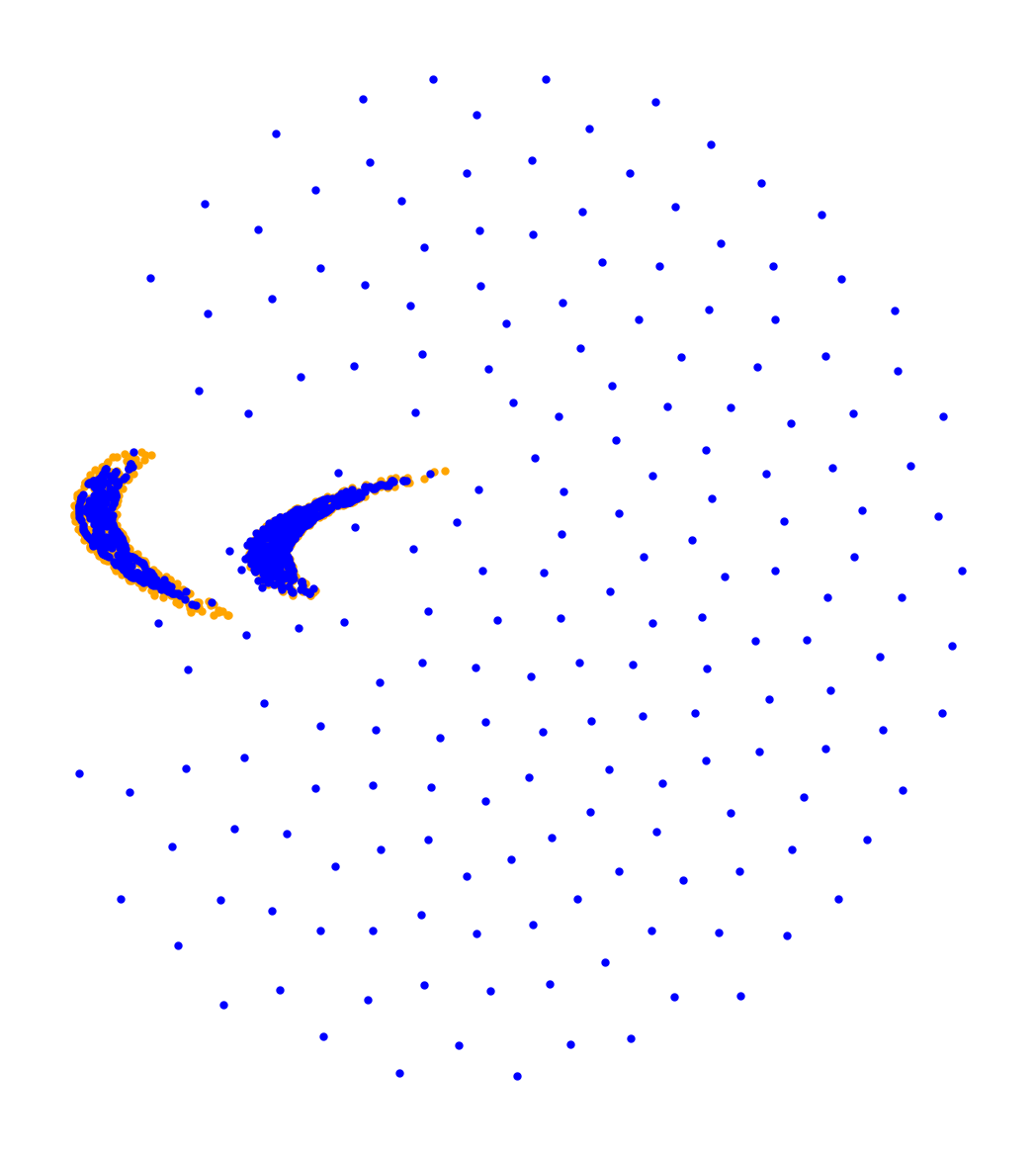}
    \end{subfigure}%
    \begin{subfigure}[t]{.24\textwidth}
        \includegraphics[width=\linewidth, valign=c]{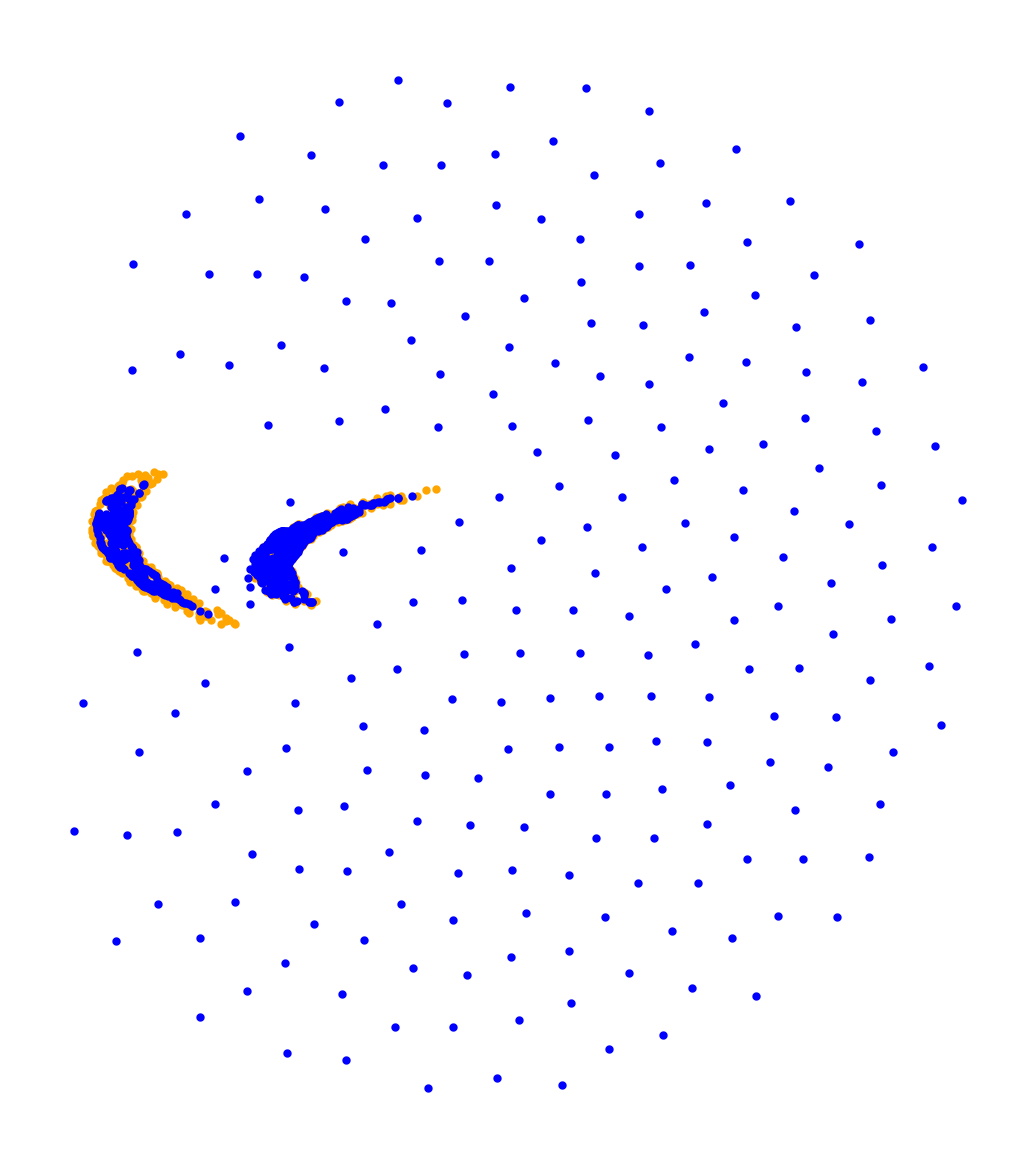}
    \end{subfigure}%
    \begin{subfigure}[t]{.24\textwidth}
        \includegraphics[width=\linewidth, valign=c]{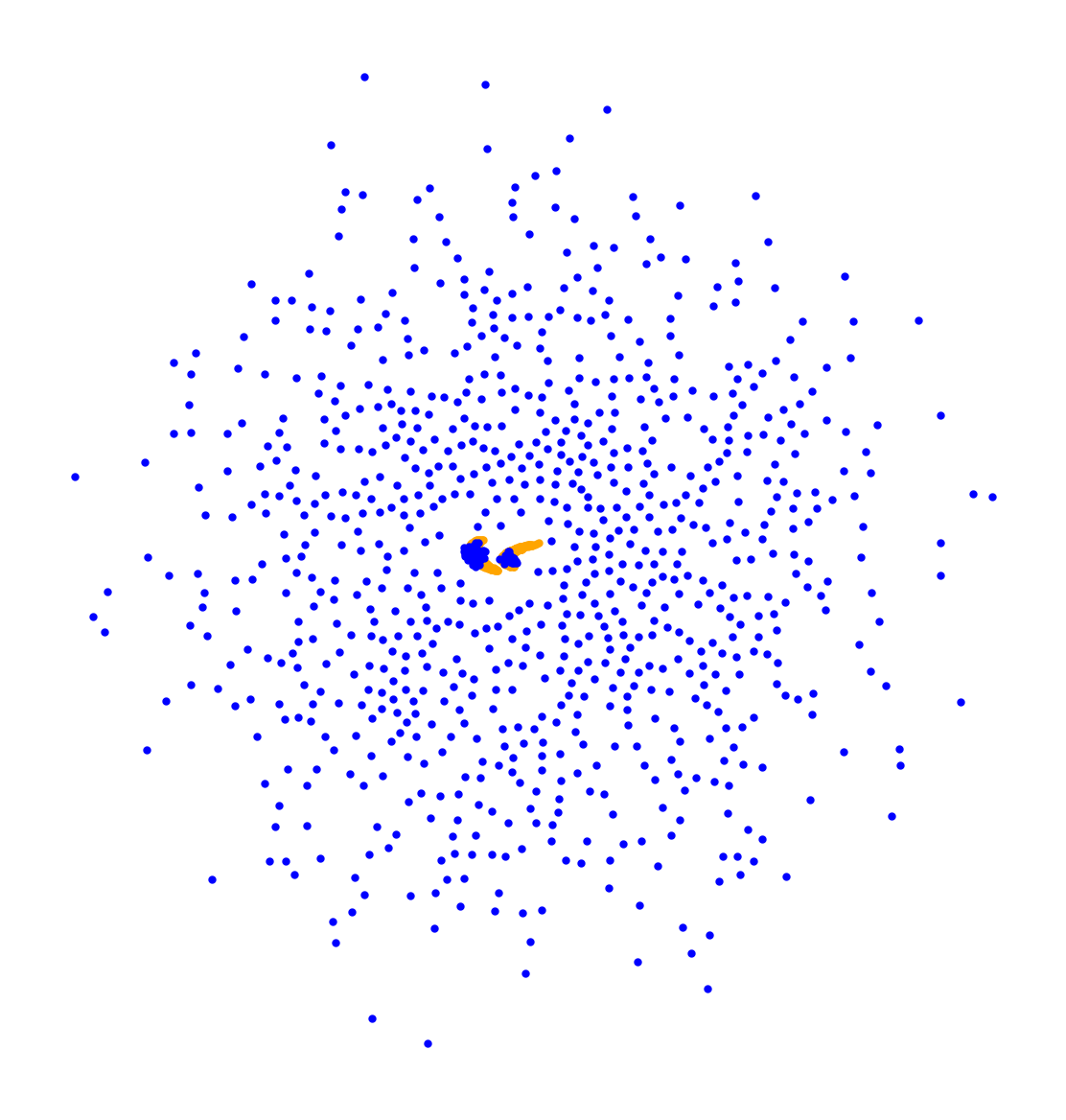}
    \end{subfigure} \\ %
    \rotatebox{90}{
        \begin{minipage}{.1\textwidth}
        \centering 
        \small $\lambda = 1$
    \end{minipage}} \hfill
    \begin{subfigure}[t]{.24\textwidth}
        \includegraphics[width=\linewidth, valign=c]{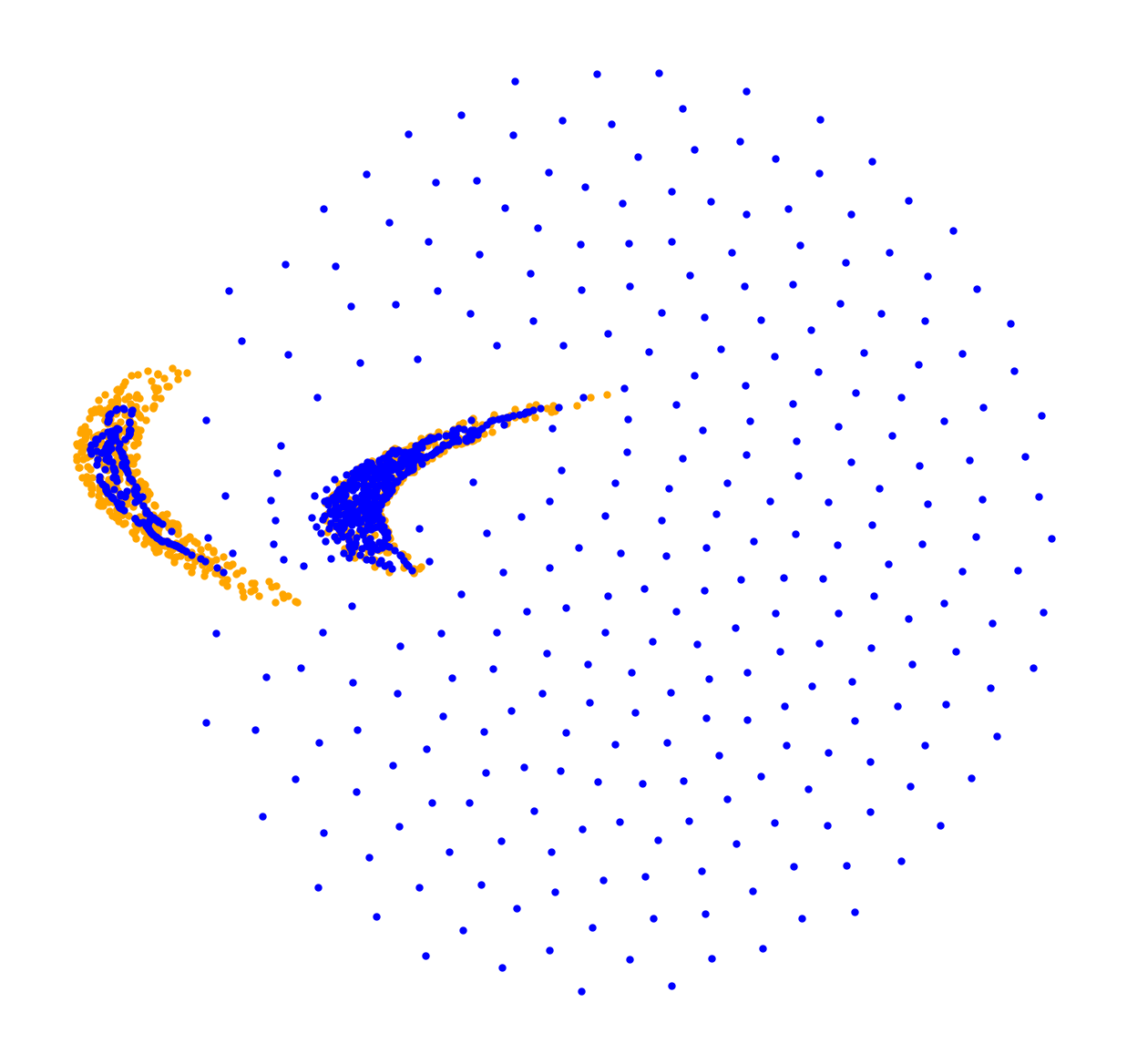}
    \end{subfigure}%
    \begin{subfigure}[t]{.24\textwidth}
        \includegraphics[width=\linewidth, valign=c]{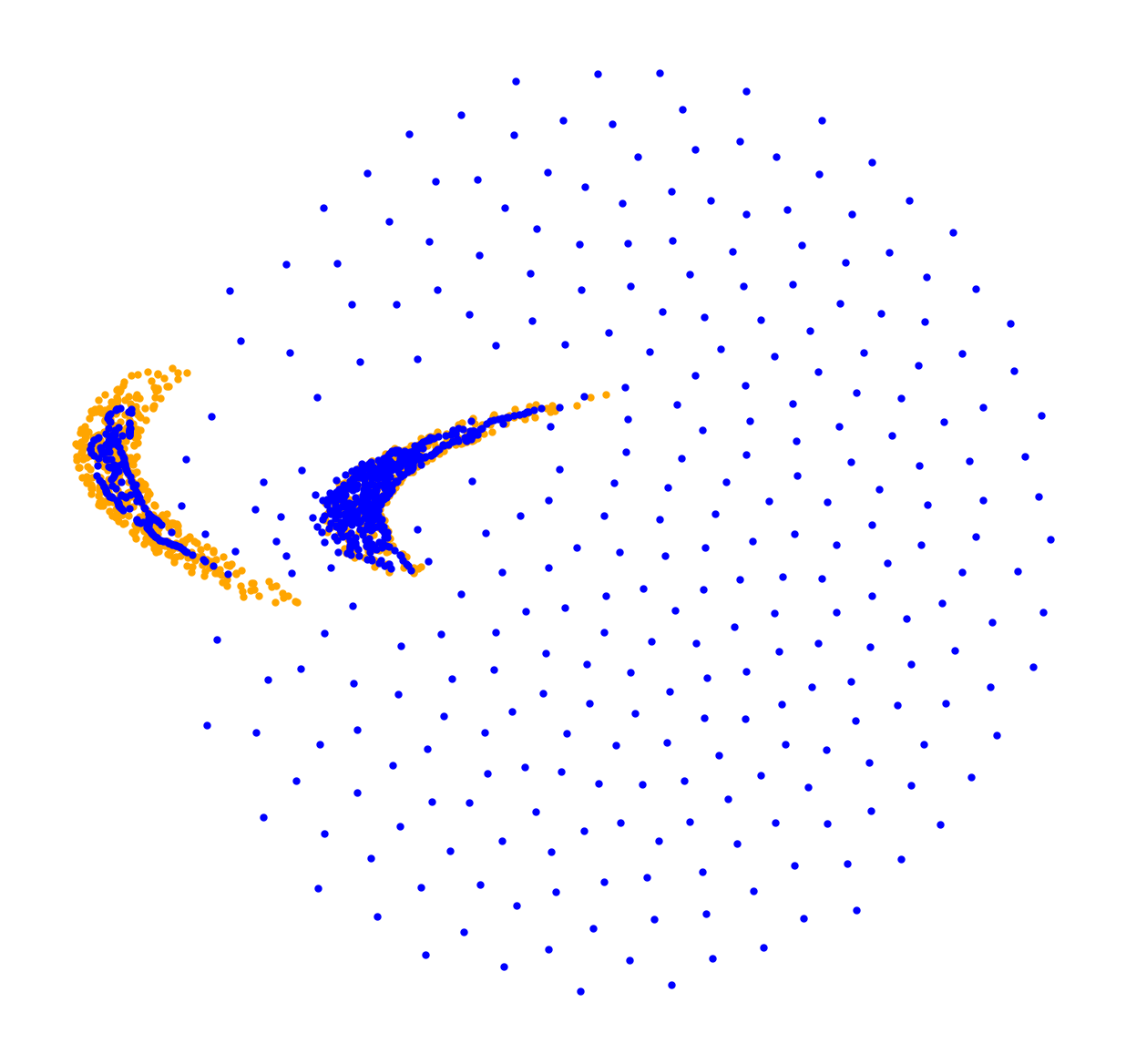}
    \end{subfigure}%
    \begin{subfigure}[t]{.24\textwidth}
        \includegraphics[width=\linewidth, valign=c]{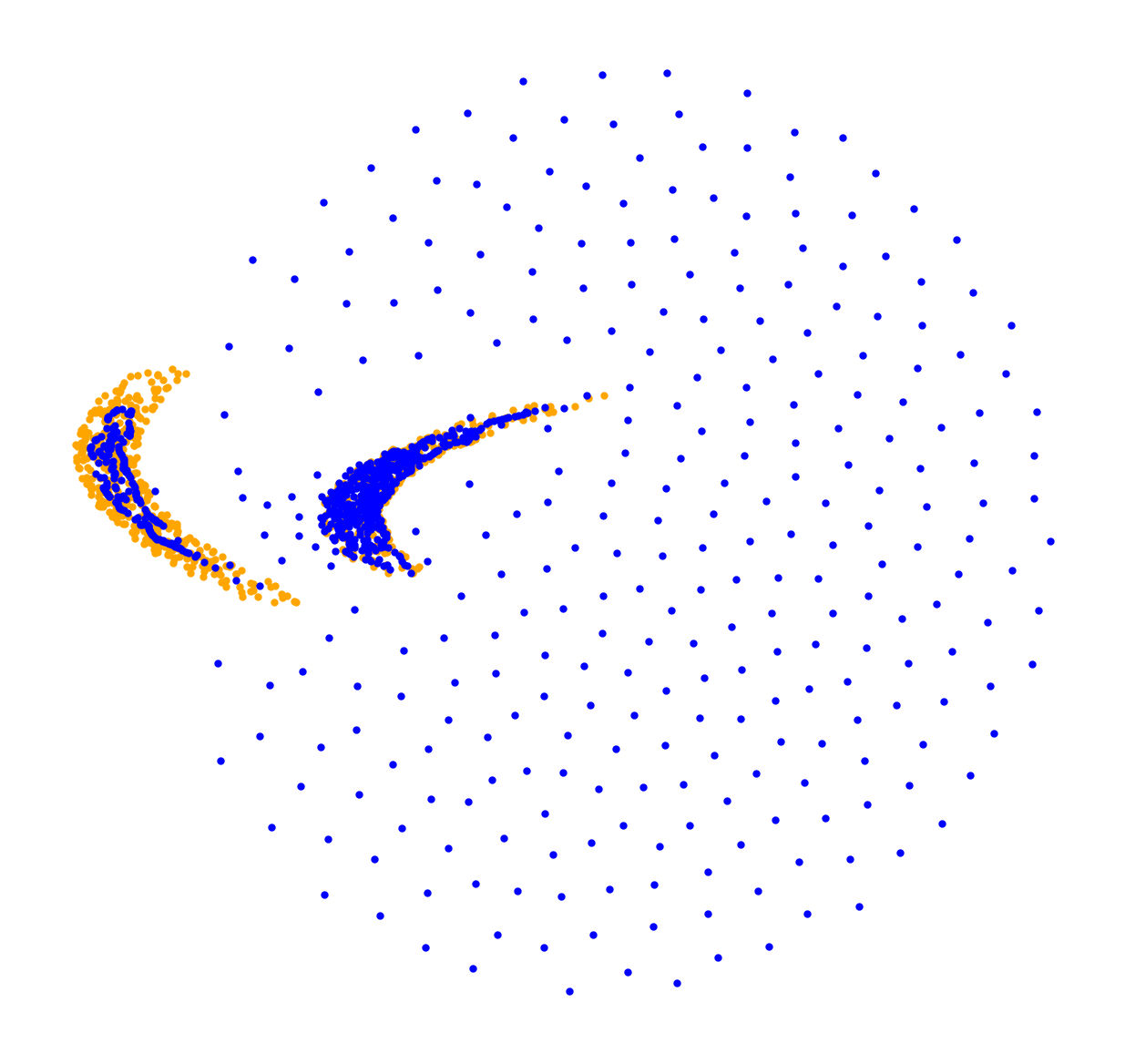}
    \end{subfigure}%
    \begin{subfigure}[t]{.24\textwidth}
        \includegraphics[width=\linewidth, valign=c]{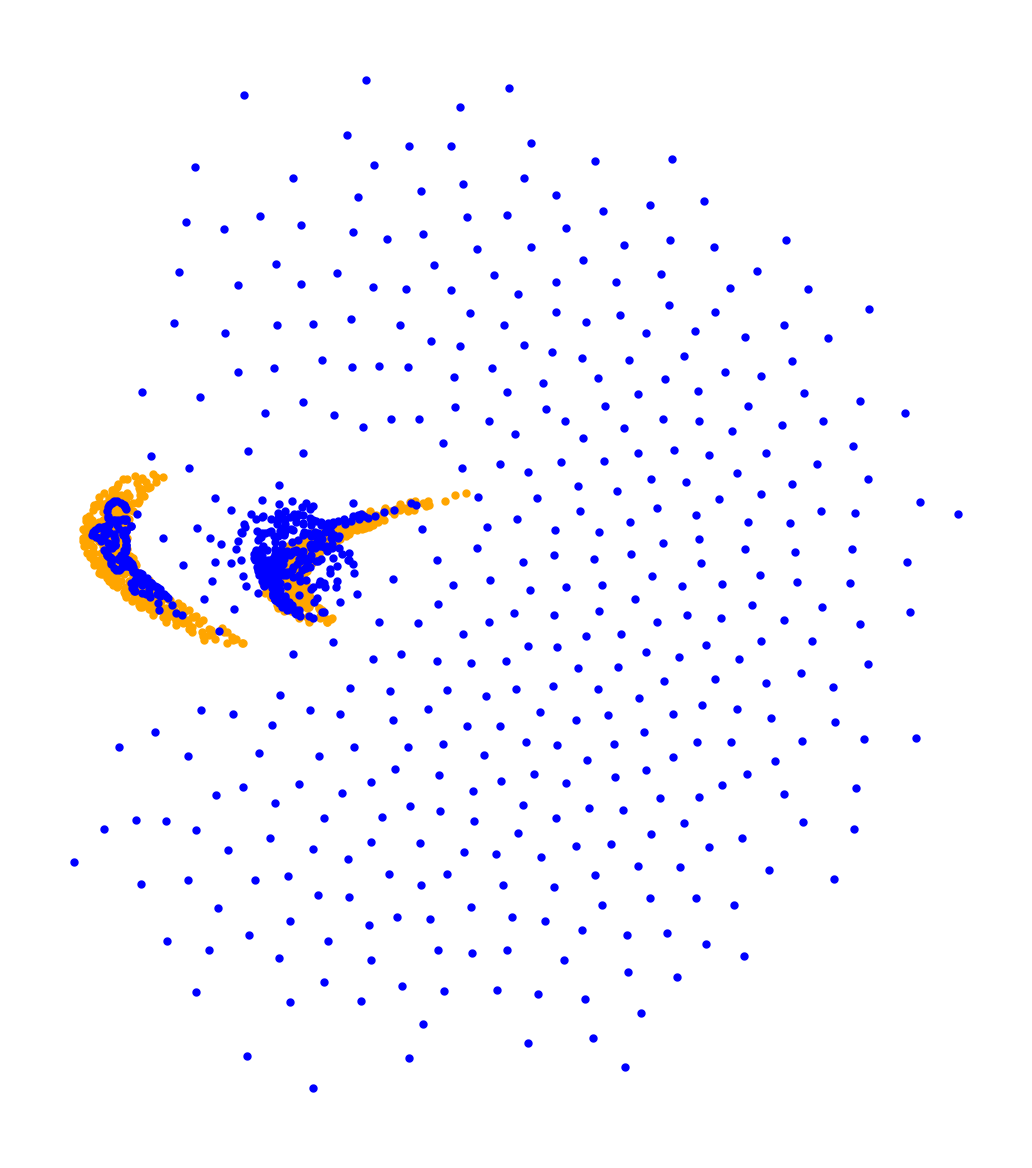}
    \end{subfigure} \\ %
    \rotatebox{90}{
        \begin{minipage}{.1\textwidth}
        \centering 
        \small $\lambda = 10$
    \end{minipage}} \hfill
    \begin{subfigure}[t]{.24\textwidth}
        \includegraphics[width=\linewidth, valign=c]{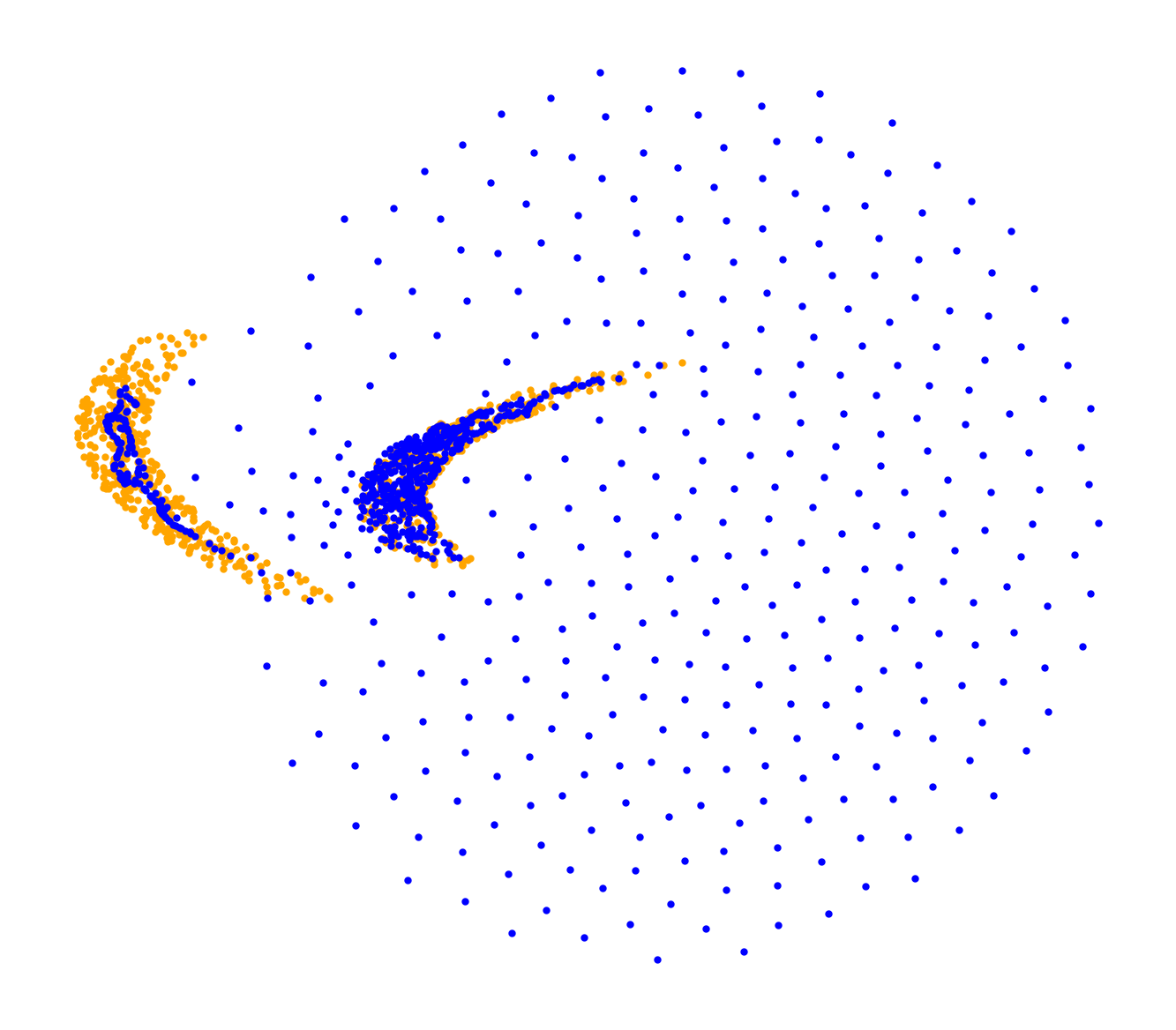}
        \caption*{$\tau = \num{e-3}$}
    \end{subfigure}%
    \begin{subfigure}[t]{.24\textwidth}
        \includegraphics[width=\linewidth, valign=c]{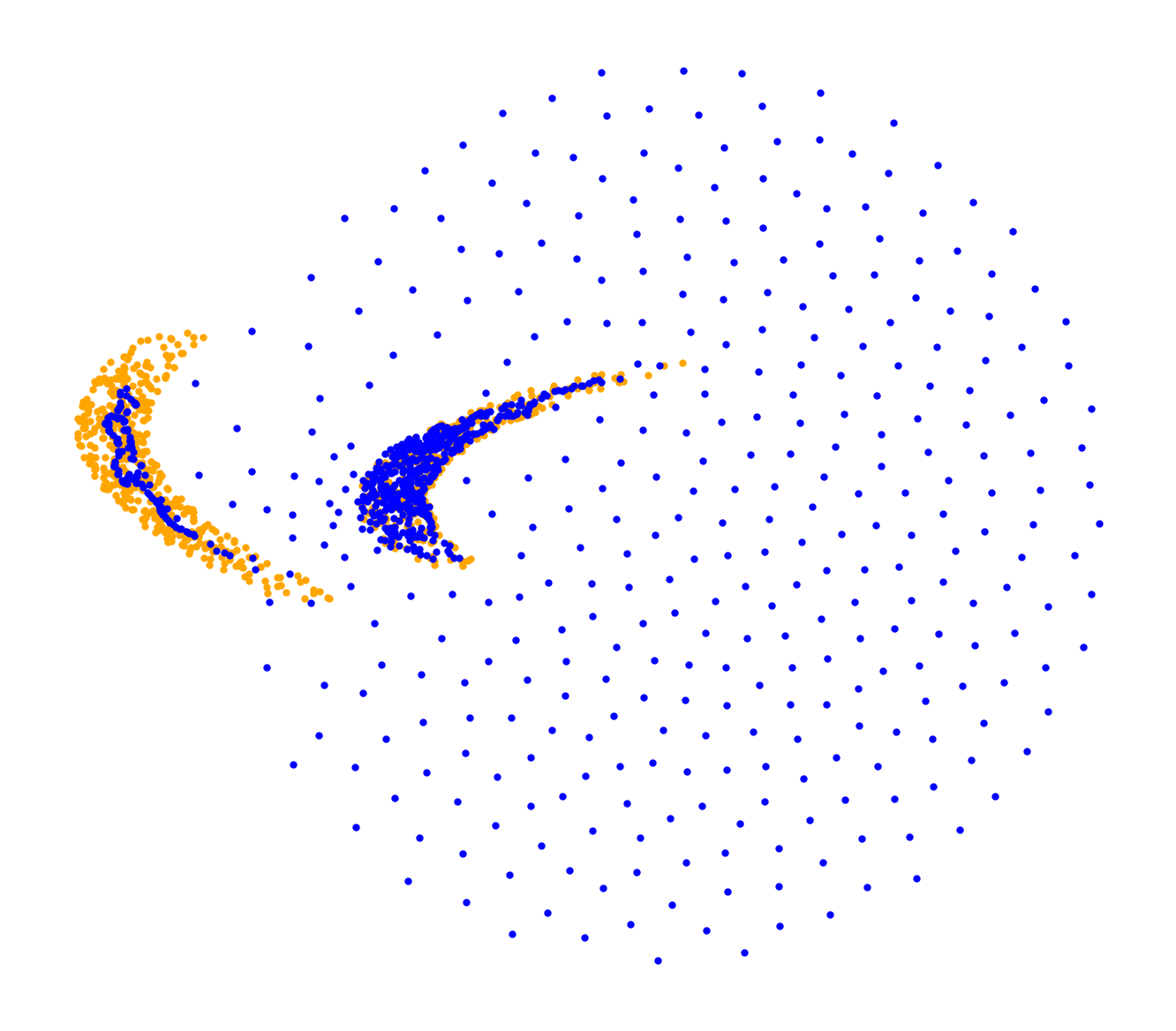}
        \caption*{$\tau = \num{e-2}$}
    \end{subfigure}%
    \begin{subfigure}[t]{.24\textwidth}
        \includegraphics[width=\linewidth, valign=c]{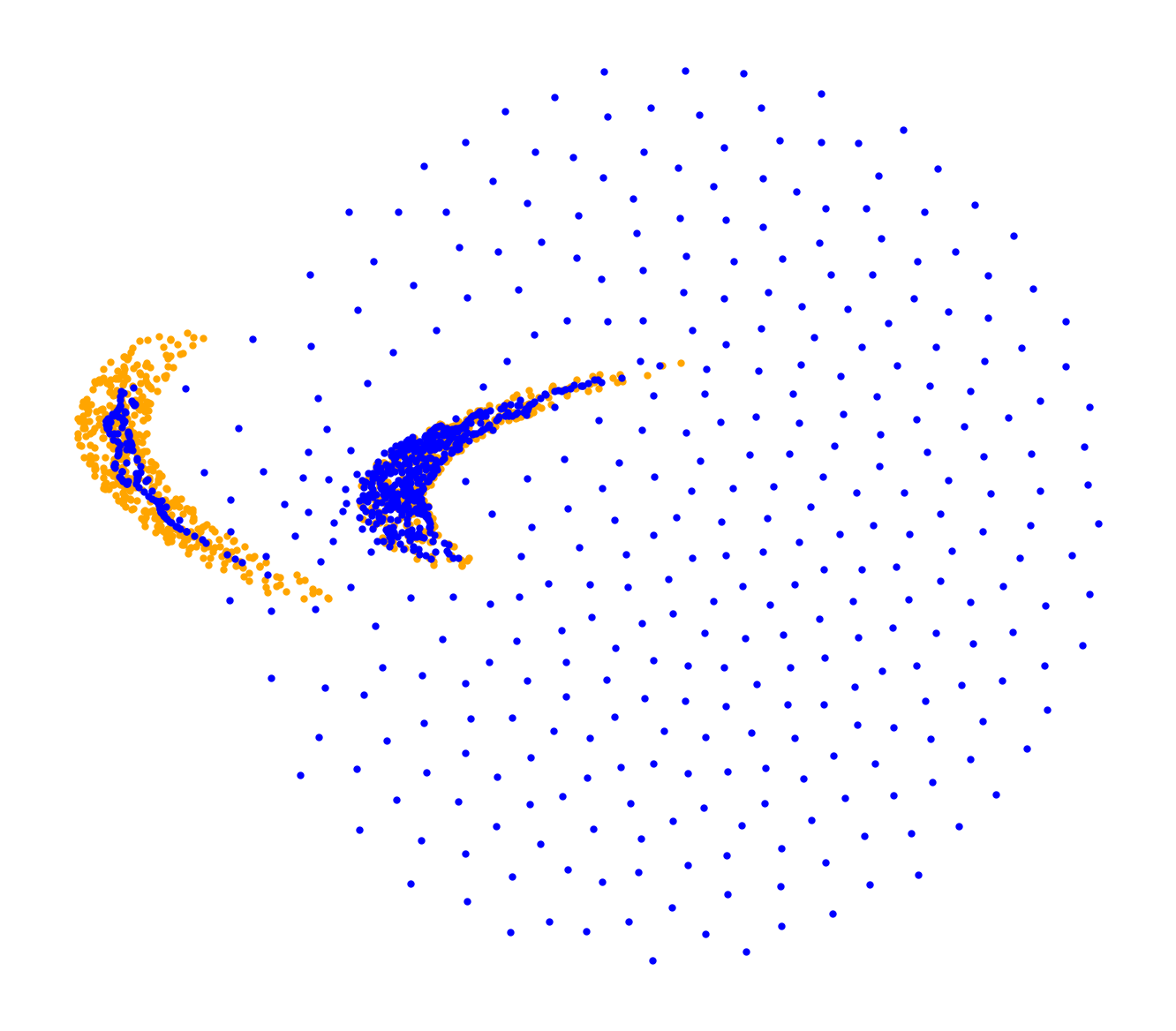}
        \caption*{$\tau = \num{e-1}$}
    \end{subfigure}%
    \begin{subfigure}[t]{.24\textwidth}
        \includegraphics[width=\linewidth, valign=c]{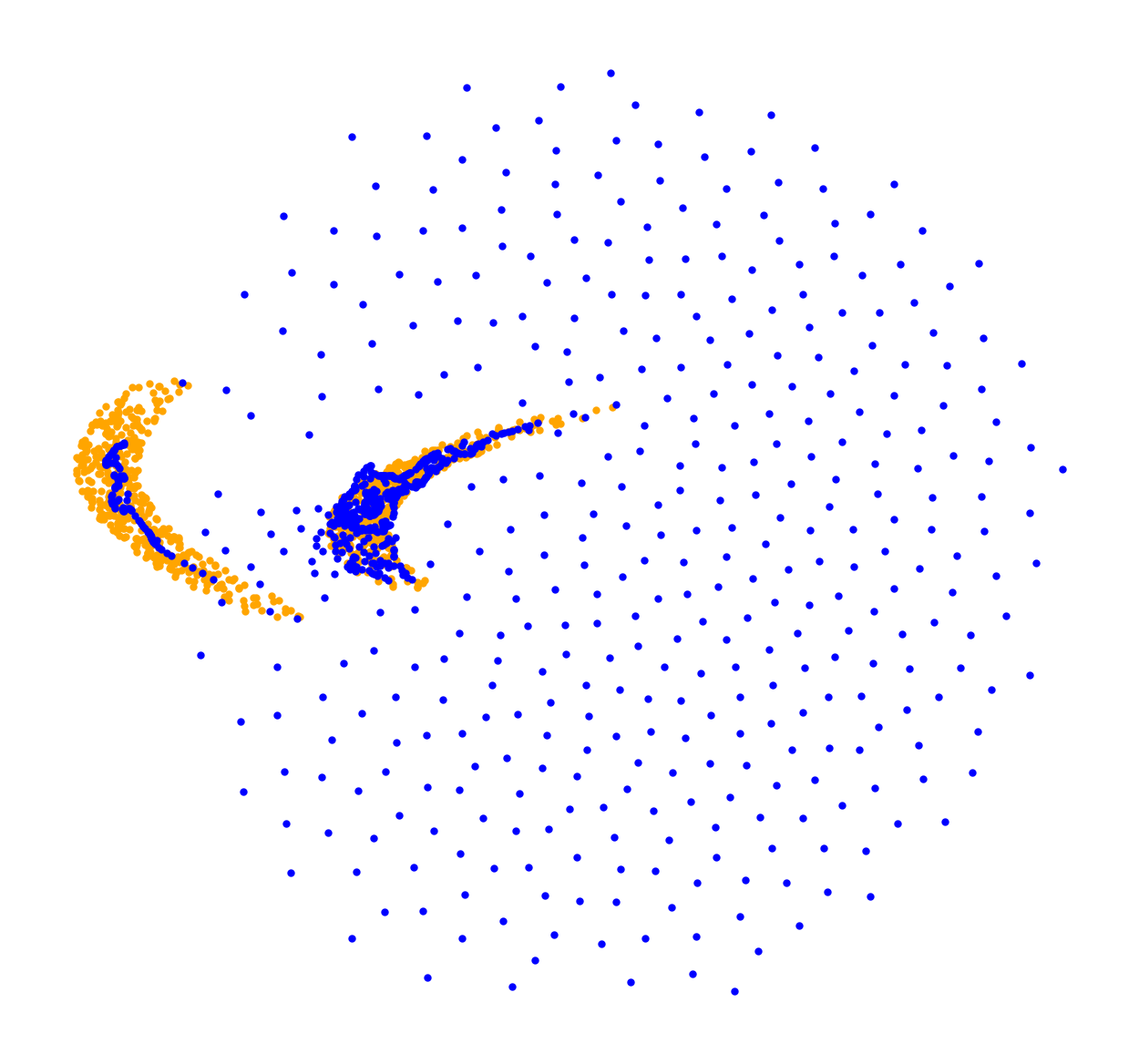}
        \caption*{$\tau = 1$}
    \end{subfigure}%
    \caption{
    Here, we use $\alpha = 2$, the inverse multiquadric kernel with width $\sigma^2 = \num{5e-2}$, and $N = 900$.
    We plot the configuration of the particles at $t = 100$.
    One should not choose $\tau$ to be larger than $\lambda$ by orders of magnitude.}
    \label{fig:lambdaVStau}
\end{figure}
The results are provided in Figure \ref{fig:lambdaVStau}.
We observed that the behavior for $\lambda \in \{ \num{e2}, \num{e3} \}$ is nearly identical to the behavior for $\lambda = 10$.

\end{document}